\newcommand*{\belowrulesepcolor}[1]{%
	\noalign{%
		\kern-\belowrulesep
		\begingroup
		\color{#1}%
		\hrule height\belowrulesep
		\endgroup
	}%
}
\newcommand*{\aboverulesepcolor}[1]{%
	\noalign{%
		\begingroup
		\color{#1}%
		\hrule height\aboverulesep
		\endgroup
		\kern-\aboverulesep
	}%
}
\DeclareMathAlphabet{\mathsfit}{\encodingdefault}{\sfdefault}{m}{sl}
\SetMathAlphabet{\mathsfit}{bold}{\encodingdefault}{\sfdefault}{bx}{n}
\def\gB{{\mathcal{B}}}
\def\gD{{\mathcal{D}}}
\def\gL{{\mathcal{L}}}
\def\gR{{\mathcal{R}}}
\def\gS{{\mathcal{S}}}
\def\gT{{\mathcal{T}}}
\DeclareMathOperator*{\argmin}{arg\,min}
\newtheorem{theorem}{Theorem}
\newtheorem{proposition}{Proposition} 
\newtheorem{lemma}{Lemma} 
\newtheorem{assum}{Assumption} 
\newtheorem{corollary}{Corollary} 
\newtheorem{definition}{Definition} 
\newtheorem{example}{Example}
\definecolor{light-gray}{gray}{0.89}
\begin{document}

%
%

\author{Kaiyi Ji}
\title{Bilevel Optimization for Machine Learning: Algorithm Design and Convergence Analysis}
\authordegrees{B.S.}  
\unit{Department of Electrical and Computer Engineering}

\advisorname{Prof. Yingbin Liang}
\member{Prof. Ness B. Shroff}
\member{Prof. Philip Schniter}    
\member{Prof. Cathy Xia}

%
%

\maketitle

\disscopyright

\begin{abstract}

Bilevel optimization has become a powerful framework in a variety  of  machine learning applications including signal processing, meta-learning, hyperparameter optimization, reinforcement learning and network architecture search. There are generally two classes of bilevel optimization formulations for modern machine learning: 1) problem-based bilevel optimization, whose inner-level problem is formulated as finding a minimizer of a given loss function; and 2) algorithm-based bilevel optimization, whose inner-level solution is an output of a fixed  algorithm. For the first problem class, two popular types of gradient-based algorithms have been proposed to estimate the gradient of the outer-level objective (hypergradient) via approximate implicit differentiation (AID) and iterative differentiation (ITD). Algorithms for the second problem class include the popular model-agnostic meta-learning (MAML) and almost no inner loop (ANIL). Although bilevel optimization algorithms have been widely used, their convergence rate and fundamental limitations have not been well explored.

In this thesis,  we provide a comprehensive theory for bilevel algorithms in the aforementioned two classes. We further propose enhanced and principled algorithm designs for bilevel optimization with higher efficiency and scalability in practice. For problem-based bilevel optimization, we first provide a comprehensive convergence theory for AID- and ITD-based algorithms for the nonconvex-strongly-convex setting. For the AID-based methods, we orderwisely improve the previous computational complexities, and for the ITD-based methods we establish the first theoretical convergence rate. Our analysis also provides a quantitative comparison between ITD- and AID-based methods. 
We further provide the theoretical guarantee for ITD- and AID-based methods in meta-learning. 

Second, we propose a new accelerated bilevel optimizer named AccBiO, 
for which we provide the first-known complexity bounds without the gradient boundedness assumption (which was made in existing analyses) respectively for strongly-convex-strongly-convex and convex-strongly-convex bilevel optimizations. Our analysis controls the finiteness of all iterates as the algorithm runs via an induction proof to ensure that the hypergradient estimation error will not explode after the acceleration steps. We also provide significantly tighter upper bounds than the existing complexity when the bounded gradient assumption does hold. 

We then provide the first-known lower bounds for strongly-convex-strongly-convex and convex-strongly-convex bilevel optimizations. We demonstrate the optimality of our results by showing that AccBiO achieves the optimal results (i.e., the upper and lower bounds match) up to logarithmic factors when the inner-level problem takes a quadratic form with a constant-level condition number. Interestingly, our lower bounds under both geometries are larger than the corresponding optimal complexities of minimax optimization, establishing that bilevel optimization is provably more challenging than minimax optimization.

We finally propose a novel stochastic bilevel optimization algorithm named stocBiO, which features a sample-efficient hypergradient estimator using efficient Jacobian- and Hessian-vector product computations. We provide the convergence rate guarantee for stocBiO, and show that stocBiO outperforms the best known computational complexities orderwisely with respect to the condition number $\kappa$ and the target accuracy $\epsilon$. We further validate our theoretical results and demonstrate the efficiency of stocBiO by the experiments on hyperparameter optimization.

For algorithm-based bilevel optimization, we first develop a new theoretical framework for analyzing MAML  for two types of objective functions that are of interest in practice: (a) resampling case (e.g., reinforcement learning), where loss functions take the form in expectation; and (b) finite-sum case (e.g., supervised learning), where loss functions take the finite-sum form with given samples. For both cases, we characterize the convergence rate and complexity to attain an $\epsilon$-accurate solution for multi-step MAML in the general nonconvex setting. In particular, our results suggest choosing the inner-stage stepsize to be inversely proportional to the number $N$ of inner-stage steps in order for $N$-step MAML to have guaranteed convergence. Technically, we develop novel techniques to deal with the nested structure of the meta gradient for multi-step MAML, which can be of independent interest.

 We then characterize the convergence rate and the computational complexity for ANIL under two representative inner-loop loss geometries, i.e., strongly-convexity and nonconvexity. Our results show that such a geometric property can significantly affect the overall convergence performance of ANIL. For example, ANIL achieves a faster convergence rate for a strongly-convex inner-loop loss as the number $N$ of inner-loop gradient descent steps increases, but a slower convergence rate for a nonconvex inner-loop loss as $N$ increases. Moreover, our complexity analysis provides a theoretical quantification on the improved efficiency of ANIL over MAML. The experiments on standard few-shot meta-learning benchmarks validate our theoretical findings. 
\end{abstract}

\dedication{Dedicated to my parents, girlfriend and beloved.}

\begin{acknowledgements}
First of all, I want to express my deepest gratitude to my advisors Prof.~Tan and Prof.~Liang for their great support on my Ph.D. study. I would not have finished this dissertation and grow up from a fresh Ph.D. to a mature researcher without their suggestions, supervision and supports. In my first two years, I worked closely with Prof.~Tan and learned a lot from him about how to come up with new research ideas, do critical thinking, and write a paper. His enthusiasm to research encouraged me to dive into all of my research projects in this two-year long journey. 
Since then, I have been in collaboration with Prof. Liang in a number of research projects, and these experiences have greatly broadened my research scope and taught me how to conduct independent research. At each time when I came up with new ideas or struggled with technical questions, I was always able to receive valuable suggestions and technical supports from the discussion with her. I am really grateful to my two advisors for their advice and huge help on my Ph.D. study.

I would like to thank Prof.~H.~Vincent Poor and Prof.~Jason D.~Lee for their valuable suggestions and instructions 
on my project during my visit in Princeton University. This period of study greatly enhance my ability to communicate and gave me the opportunity to explore interesting problems in meta-learning and bilevel optimization, which have become two important topics  along my research direction.

I also would like to thank my collaborators, Guocong Quan, Tengyu Xu, Junjie Yang, Ziwei Guan, Yi Zhou, Zhe Wang, Bowen Weng, Prof.~Yuejie Chi, Prof.~Jingfen Xu and Prof.~Ness B. Shroff for their  valuable suggestions and instructions on the writing, analysis and experiments of my papers. Their professional attitude and broad knowledge have impressed me so deeply and greatly broaden my research view for my academic career.

I would like to thank my defense committee: Prof.~Cathy Xia, Prof.~Philip Schniter and Prof. Ness B. Shroff   for their precious 
time and valuable comments on my dissertation. I am also grateful to my lab mates: Shaofeng Zou, Yi Zhou, Zhe Wang, Huaqing Xiong, Haoyu Fu, Tengyu Xu, Ziwei Guan, Junjie Yang, and Davis Sow for their great help in my research and life.

I would like to thank my parents for their selfless love and supports on my Ph.D. study in another country. I am grateful for the accompany of my girlfriend Jingyi during my Ph.D. study and especially the struggling time during COVID-19.

I would like to thank University Fellowship and Presidential Fellowship awarded by The Ohio State University to support my research and life. I am also very grateful for the strong software and hardware supports from Ohio Supercomputer Center.

I acknowledge the great support from the ECE department and grants NSF No.~1717060, NSF CCF-1801855, 
NSF CCF-1761506, NSF CCF-1801846, NSF ECCS-1818904, NSF CCF-1900145 and NSF CCF-1909291.
\end{acknowledgements} 

\begin{vita}

\dateitem{Sep 22nd, 1993}{Born - Zigong, China}

\dateitem{2016}{B.S., Electronic Engineering and Information Science, \\University of Science and Technology of China} 
\dateitem{2020}{Visiting student,\\ Princeton University}

\dateitem{2016-present}{Graduate Research Associate,\\
			 The Ohio State University.}

\begin{publist}


\noindent\textbf{Journal Publications}\newline

\pubitem{Y. Zhang, Y. Zhou, {\bf K. Ji}, M. Zavlanos.
	\newblock ``Improving the Convergence Rate of One-Point Zeroth-Order Optimization using Residual Feedback'.
	\newblock Accepted provisionally by {\em Automatica, 2021}.}

\pubitem{T. Xu, Y. Zhou, {\bf K. Ji}, Y. Liang.
	\newblock ``When Will Gradient Methods Converge to Max-margin Classifier under ReLU Models?''
	\newblock Accepted by {\em Stat, 2021}.}

\pubitem{{\bf K. Ji}, J. Yang, Y. Liang.
	\newblock ``Theoretical Convergence of Multi-Step Model-Agnostic Meta-Learning''.
	\newblock accepted by {\em Journal of Machine Learning Research (JMLR), 2021}.}

\pubitem{{\bf K. Ji}, Y. Zhou, Y. Liang.
	\newblock ``Understanding Estimation and Generalization Error of Generative Adversarial Networks''.
	\newblock Accepted by {\em IEEE Transactions on Information Theory (TIT), 2021}.}
	
\pubitem{{\bf K. Ji}, J. Tan, Y. Chi, J. Xu
	\newblock ``Learning Latent Features with Pairwise Penalties in Matrix Completion''.
	\newblock Accepted by {\em IEEE Transactions on Signal Processing (TSP), 2020}.}
	
\pubitem{J. Tan, G. Quan, {\bf K. Ji}, N. Shroff.
	\newblock ``On Resource Pooling and Separation for LRU Caching''.
	\newblock  In {\em PACM on Measurement and Analysis of Computing Systems, 2018}.}

\noindent\textbf{Conference Publications}\newline

\pubitem{{\bf K. Ji}, J. Yang, Y. Liang.
	\newblock ``Bilevel Optimization: Nonasymptotic Analysis and Enhanced Design''.
	\newblock {\em In Proc. International Conference on Machine Learning (ICML), 2021.}}

\pubitem{{\bf K. Ji}, J. Lee, Y. Liang, H. Poor.
	\newblock ``Convergence of Meta-Learning with Task-Specific Adaptation over Partial Parameters''.
	\newblock {\em In Proc. Neural Information Processing Systems (NeurIPS), 2020.}}

\pubitem{ {\bf K. Ji}, Z. Wang,  Y. Zhou, Y. Liang.
	\newblock ``History-Gradient Aided Batch Size Adaptation for Variance Reduced Algorithms''.
	\newblock {\em In Proc. International Conference on Machine Learning (ICML), 2020.}}

\pubitem{Y. Zhou, Z. Wang,  {\bf K. Ji}, Y. Liang.
	\newblock ``Proximal Gradient Algorithm with Momentum and Flexible Parameter Restart for Nonconvex Optimization''.
	\newblock {\em International Joint Conference on Artificial Intelligence (IJCAI), 2020.}}

\pubitem{{\bf K. Ji}, J. Tan, Y. Chi, J. Xu.
	\newblock ``Learning Latent Features with Pairwise Penalties in Matrix Completion''.
	\newblock {\em IEEE Sensor Array and Multichannel Signal Processing Workshop (SAM), 2020.}}

\pubitem{Z. Guan, {\bf K. Ji},  D. Bucci Jr, T. Hu, J. Palombo, M. Liston, Y. Liang.
	\newblock ``Robust Stochastic Bandit Algorithms under Probabilistic Unbounded Adversarial Attack''.
	\newblock {\em In AAAI Conference on Artificial Intelligence (AAAI),  2020.}}

\pubitem{{\bf K. Ji}, Z. Wang,  Y. Zhou, Y. Liang.
	\newblock ``Improved Zeroth-Order Variance Reduced Algorithms and Analysis for Nonconvex Optimization''.
	\newblock {\em  In Proc. International Conference on Machine Learning (ICML), 2019.}}

\pubitem{Z. Wang,  {\bf K. Ji},  Y. Zhou, Y. Liang, V. Tarokh.
	\newblock ``SpiderBoost and Momentum: Faster Stochastic Variance Reduction Algorithms''.
	\newblock {\em In Proc. Neural Information Processing Systems (NeurIPS), 2019.}}

\pubitem{\textbf{K. Ji}, Y. Liang.
	\newblock ``Minimax Estimation of Neural Net Distance''.
	\newblock {\em In Proc. Neural Information Processing Systems (NeurIPS), 2018.}}

\pubitem{{\bf K. Ji}, G. Quan, J. Tan.
	\newblock ``Miss Ratio for LRU Caching with Consistent Hashing''.
	\newblock {\em In IEEE International Conference on Computer Communications (INFOCOM), 2018}}

\pubitem{G. Quan, {\bf K. Ji}, J. Tan.
	\newblock ``LRU Caching with Dependent Competing Requests''.
	\newblock {\em In IEEE International Conference on Computer Communications (INFOCOM), 2018}}

\pubitem{J. Tan, G. Quan, {\bf K. Ji}, N. Shroff.
	\newblock ``On Resource Pooling and Separation for LRU Caching''.
	\newblock {\em In ACM Special Interest Group on Measurement and Evaluation (SIGMETRICS), 2018.}}

\end{publist}

\newpage

\begin{fieldsstudy}

\noindent Major fields: Electrical and Computer Engineering. \newline
\noindent Concentrations: machine learning, optimization, networking.



\end{fieldsstudy}

\end{vita}

\tableofcontents
\listoftables
\listoffigures

%
%

\chapter{Introduction}\label{intro.ch}
Bilevel optimization has  received significant attention recently and become an influential framework in signal processing~\cite{kunapuli2008classification,flamary2014learning}, meta-learning~\cite{franceschi2018bilevel,bertinetto2018meta,rajeswaran2019meta,ji2020convergence}, hyperparameter optimization~\cite{franceschi2018bilevel,shaban2019truncated,feurer2019hyperparameter}, reinforcement learning~\cite{konda2000actor,hong2020two} and network architecture search~\cite{liu2018darts,he2020milenas}. Bilevel optimization for modern machine learning takes two major formulations: 1) problem-based bilevel optimization, whose inner-level problem is formulated as finding a minimizer of a given objetive function; and 2) algorithm-based bilevel optimization, whose inner-level problem is find the $N$-step output of a fixed iterative algorithm such as gradient descent. 
 The first class of problems occur in various applications including meta-learning with shared embedding model~\cite{bertinetto2018meta}, hyperparameter optimization via implicit differentiation~\cite{domke2012generic,pedregosa2016hyperparameter}, reinforcement learning~\cite{hong2020two} and network architecture search~\cite{he2020milenas}. 
Two popular types of gradient-based algorithms have been proposed to estimate the gradient of the outer-level objective (hypergradient) via approximate implicit differentiation (AID) and iterative differentiation (ITD). The second class of problems are often involved in application such as meta-initialization learning~\cite{finn2017model,grant2018recasting,mi2019meta,collins2020distribution} and hyperparameter optimization via dynamic system~\cite{franceschi2018bilevel}. 
Algorithms for this problem class include the popular model-agnostic meta-learning (MAML)~\cite{finn2017model}and meta-learning with task-specific adaptation on partial parameters such as almost no inner loop (ANIL)~\cite{raghu2020rapid}. Although bilevel optimization algorithms have been widely used in practice, their convergence rate analysis and fundamental limitations have not been well explored.
In addition, with the advent of large-scale neural networks and datasets, it is increasingly important to design more efficient bilevel optimization methods.

This thesis provides a comprehensive nonasymptotic analysis for bilevel algorithms in the aforementioned two classes. We further propose enhanced and principled algorithm designs for bilevel optimization with higher efficiency and scalability in applications such as meta-learning and hyperparameter optimization. In specific, for the problem-based bilevel optimization, we first provide a comprehensive convergence rate analysis for AID- and ITD-based bilevel optimization algorithms. We then develop acceleration algorithms for bilevel optimization, for which we provide novel convergence analysis with relaxed assumptions and significantly lower complexity. We also provide the first lower bounds for bilevel optimization, and establish the optimality by providing matching upper bounds under certain conditions. We finally propose new stochastic bilevel optimization algorithms with lower computational complexity and higher efficiency in practice.  For the algorithm-based formulation, we develop a theoretical convergence for general multi-step MAML for the resampling and finite-sum cases. We then analyze the convergence for meta-learning with task-specific adaptation on partial parameters, and characterize the impact of parameter selections and loss geometries on the complexity.  
   
In the following, we summarize our specific motivations and main contributions of the above studies sequentially. 
\section{Convergence for Problem-Based Bilevel Optimization}
 A general problem-based bilevel optimization takes the following formulation. 
\begin{align}\label{objective_deter}
&\min_{x\in\mathbb{R}^{p}} \Phi(x):=f(x, y^*(x)) \nonumber
\\&\;\;\mbox{s.t.} \quad y^*(x)= \argmin_{y\in\mathbb{R}^{q}} g(x,y),
\end{align}
where the upper- and inner-level functions $f$ and $g$ are both jointly continuously differentiable. The goal of~\cref{objective_deter} is to minimize the objective function $\Phi(x)$ with respect to (w.r.t.)~$x$, where $y^*(x)$ is obtained by solving the lower-level minimization problem. In this thesis, we focus on the setting where the lower-level function $g$ is strongly convex w.r.t.~$y$, and the upper-level objective function $\Phi(x)$ is nonconvex. Such geometrics commonly exist in many applications such as meta-learning and hyperparameter optimization, where $g$ corresponds to an empirical loss  with a strongly-convex regularizer and $x$ are parameters of neural networks.

A broad collection of algorithms have been proposed to solve bilevel optimization problems. For example, \cite{hansen1992new,shi2005extended,moore2010bilevel} reformulated the bilevel problem in~\cref{objective_deter} into a single-level constrained problem based on the optimality conditions of the lower-level problem. However, such type of methods often involve  a large number of constraints, and  are hard to implement in machine learning applications. Recently, more efficient gradient-based bilevel optimization algorithms have been proposed, which can be generally categorized into the approximate implicit differentiation (AID) based approach~\cite{domke2012generic,pedregosa2016hyperparameter,gould2016differentiating,liao2018reviving,ghadimi2018approximation,grazzi2020iteration,lorraine2020optimizing} and the iterative differentiation (ITD) based approach~\cite{domke2012generic,maclaurin2015gradient,franceschi2017forward,franceschi2018bilevel,shaban2019truncated,grazzi2020iteration}. However, most of these studies have focused on the asymptotic convergence analysis, and the nonasymptotic convergence rate analysis (that characterizes how fast an algorithm converges) has not been well explored except a few attempts recently. \cite{ghadimi2018approximation} provided the convergence rate analysis for the AID-based approach. \cite{grazzi2020iteration} provided the iteration complexity for the hypergradient computation via ITD and AID, but did not characterize the  convergence rate for the entire execution of algorithms. Thus, the first focus of this thesis is to develop a {\em comprehensive and sharper} theory, which covers a broader class of bilevel optimizers via ITD and AID techniques, and more importantly, improves existing  analysis with a more practical parameter selection and orderwisely lower computational complexity.

\begin{table*}[!t]
 \centering
 \caption{Comparison of bilevel deterministic optimization algorithms.}
 \vspace{0.1cm}
 \begin{threeparttable}
  \begin{tabular}{|c|c|c|c|c|c|}
   \hline
Algorithm & Gc($f,\epsilon$) & Gc($g,\epsilon$) & JV($g,\epsilon$) &  HV($g,\epsilon$)    \\\hline\hline
AID-BiO \cite{ghadimi2018approximation}  & $\mathcal{O}(\kappa^4\epsilon^{-1})$ & $\mathcal{O}(\kappa^5\epsilon^{-5/4})$ &  $\mathcal{O}\left(\kappa^4\epsilon^{-1}\right)$& $\mathcal{\widetilde O}\left(\kappa^{4.5}\epsilon^{-1}\right)$ \\ \hline
   \cellcolor{blue!15}{AID-BiO (this thesis)} & \cellcolor{blue!15}{$\mathcal{O}(\kappa^3\epsilon^{-1})$} & \cellcolor{blue!15}{$\mathcal{ O}(\kappa^4\epsilon^{-1})$ } & \cellcolor{blue!15}{$\mathcal{O}\left(\kappa^{3}\epsilon^{-1}\right)$} & \cellcolor{blue!15}{ $\mathcal{ O}\left(\kappa^{3.5}\epsilon^{-1}\right)$} \\ \hline
      \cellcolor{blue!15}{ITD-BiO (this thesis)} & \cellcolor{blue!15}{$\mathcal{O}(\kappa^3\epsilon^{-1})$} & \cellcolor{blue!15}{$\mathcal{\widetilde O}(\kappa^4\epsilon^{-1})$ } & \cellcolor{blue!15}{$\mathcal{\widetilde O}\left(\kappa^4\epsilon^{-1}\right)$} & \cellcolor{blue!15}{ $\mathcal{\widetilde O}\left(\kappa^4\epsilon^{-1}\right)$} \\ \hline
  \end{tabular}\label{tab:determinstic}
  \vspace{0.1cm}
        {\small
   \begin{tablenotes}
  \item $\mbox{\normalfont Gc}(f,\epsilon)$ and $\mbox{\normalfont Gc}(g,\epsilon)$: 
 number of gradient evaluations w.r.t.~$f$ and $g$. 
 \item $\mbox{\normalfont JV}(g,\epsilon)$: number of Jacobian-vector products $\nabla_x\nabla_y g(x,y)v$. 
  \item  $\mbox{\normalfont HV}(g,\epsilon)$: number of Hessian-vector products $\nabla_y^2g(x,y) v$.
   \item $\kappa:$ condition number. \; Notation $\mathcal{\widetilde O}$: omit $\log\frac{1}{\epsilon}$ terms.
   \end{tablenotes}}
 \end{threeparttable}
  \vspace{-0.6cm}
\end{table*}

\vspace{0.2cm}
\noindent {\bf Main Contributions.} Our main contributions lie in developing a shaper theory for the nonconvex-strongly-convex bilevel optimization problem. 

We first provide a unified convergence rate and complexity analysis for both ITD and  AID based bilevel optimizers, which we call as ITD-BiO and AID-BiO. Compared to existing analysis in~\cite{ghadimi2018approximation} for AID-BiO that requires a continuously increasing number of inner-loop steps to achieve the guarantee, our analysis allows a constant number of inner-loop steps as often used in practice. In addition, we introduce a warm start initialization for the inner-loop updates and the outer-loop hypergradient estimation,  which allows us to backpropagate the tracking errors to previous loops, and yields an improved computational complexity. \Cref{tab:determinstic} shows that the gradient complexities Gc($f,\epsilon$), Gc($g,\epsilon$), and Jacobian- and Hessian-vector product complexities JV($g,\epsilon$) and HV($g,\epsilon$) of AID-BiO to attain an $\epsilon$-accurate stationary point improve those of~\cite{ghadimi2018approximation} by the order of $\kappa$, $\kappa\epsilon^{-1/4}$, $\kappa$, and $\kappa$, respectively, where $\kappa$ is the condition number. Our analysis also shows that AID-BiO requires less computations of Jacobian- and Hessian-vector products than ITD-BiO by an order of $\kappa$ and $\kappa^{1/2}$. Our results further provide the theoretical guarantee for AID-BiO and ITD-BiO in meta-learning. 

\section{Acceleration for Problem-Based Bilevel Optimization}\label{intro:sec:acc}
The {\em finite-time} (convergence) analysis of problem-based bilevel optimization algorithms has been studied recently. \cite{grazzi2020iteration} provided the iteration complexity for hypergradient approximation with ITD and AID. \cite{ghadimi2018approximation} proposed an AID-based bilevel approximation (BA) algorithm as well as an accelerated variant ABA, and analyzed their finite-time complexities under different loss geometries. In particular, the complexity upper bounds of BA  and ABA are given by $\mathcal{\widetilde O}(\frac{1}{\mu_y^{6}\mu_x^{2}})$ and $\mathcal{ \widetilde O}(\frac{1}{\mu_y^{3}\mu_x })$ for the strongly-convex-strongly-convex setting where $\Phi(\cdot)$ is $\mu_x$-strongly-convex and $g(x,\cdot)$ is $\mu_y$-strongly-convex,  $\mathcal{O}\big(\frac{1}{\mu_y^{11.25}\epsilon^{1.25}}\big)$ and $\mathcal{ O}\big(\frac{1}{\mu_y^{6.75}\epsilon^{0.75}} \big)$ for the convex-strongly-convex setting, and $\mathcal{ O}\big(\frac{1}{\mu_y^{6.25}\epsilon^{1.25}} \big)$ for the nonconvex-strongly-convex setting.  \cite{ji2020bilevel} further improved the bound for the nonconvex-strongly-convex setting to $\mathcal{O}\big(\frac{1}{\mu_y^{4}\epsilon }\big)$. However, these analyses reply on a strong assumption on the boundedness of the outer-level gradient $\nabla_y f(x,\cdot)$\footnote{\cite{grazzi2020iteration} assume the inner-problem solution $y^*(x)$ is uniformly bounded for all $x$ so that $\nabla_y f(x,y^*(x))$ is bounded.} 
to guarantee that the smoothness parameter of $\Phi(\cdot)$ and the hyperparameter estimation error are bounded as the algorithm runs. Then the following question needs to be adressed. 

\begin{list}{$\bullet$}{\topsep=0.1in \leftmargin=0.2in \rightmargin=0.1in \itemsep =0.01in}
 \item[1.] \textit{Can we design a new acceleration bilevel optimization algorithm, which provably converges without the gradient boundedness?}
 \end{list} 
 In addition, even when the boundedness assumption holds, existing complexity bounds show pessimistic dependences on the condition numbers, e.g., $\mathcal{O}(\frac{1}{\mu_y^{6.75}})$ for the convex-strongly-convex case. Then, the following question arises. 
 \begin{list}{$\bullet$}{\topsep=0.1in \leftmargin=0.2in \rightmargin=0.1in \itemsep =0.01in}
 \item[2.] \textit{Under the bounded gradient assumption, can we provide new upper bounds with tighter dependences on the condition numbers for strongly-convex-strongly-convex and convex-strongly-convex
 bilevel optimizations?}
 \end{list} 
 In this thesis, we provide affirmative answers to the above questions.

 \begin{table*}[!t]
\renewcommand{\arraystretch}{1.1}
\centering
\small
\caption{Comparison of complexities for finding an $\epsilon$-approximate point without the gradient boundedness assumption. All listed results are from this thesis. 
}
\label{tab:results}

\vspace{0.3cm}
\begin{tabular}{|c|c|c|} \hline
 \textbf{Type} & \textbf{References} & \textbf{Computational Complexity} \\ \hline 
\multirow{2}{*}{\shortstack{SCSC}} 
&  \textbf{AccBiO} (\Cref{upper_srsr_withnoB}) &{\scriptsize $\mathcal{\widetilde O}\Big(\sqrt{\frac{\widetilde L_y}{\mu_x\mu_y^{3}}}+\Big(\sqrt{\frac{ \rho_{yy}\widetilde L_y}{\mu_x\mu_y^{4}} }+ \sqrt{ \frac{\rho_{xy}\widetilde L_y}{\mu_x\mu_y^{3}}}\Big)\sqrt{\Delta^*_{\text{\normalfont \tiny SCSC}}}\Big)$} \\ \cline{2-3}
& \textbf{AccBiO} (quadratic $g$, \Cref{coro:quadaticSr}) &  {\scriptsize $ \mathcal{\widetilde O}\Big(\sqrt{\frac{\widetilde L_y}{\mu_x\mu_y^{3}}}\Big)$} \\ \cline{2-3} \hline
\multirow{2}{*}{\shortstack{CSC} }  
& \textbf{AccBiO} ( \Cref{th:upper_csc1sc})& {\scriptsize
$\mathcal{\widetilde O}\Big( \sqrt{\frac{\widetilde L_y}{\epsilon\mu_y^3}}+\Big(\sqrt{\frac{\rho_{yy}\widetilde L_y}{\epsilon\mu_y^{4}}} +  \sqrt{\frac{\rho_{xy}\widetilde L_y}{\epsilon\mu_y^3}}\Big)\sqrt{\Delta^*_{\text{\normalfont\tiny CSC}}}\Big)$
}\\ \cline{2-3}
&\textbf{AccBiO} (quadratic $g$, \Cref{coro:quadaticConv})& {\scriptsize$\mathcal{\widetilde O}\Big(\sqrt{\frac{\widetilde L_y}{\epsilon\mu_y^3}}\Big)$}\\ \cline{2-3} \hline
\end{tabular}
\vspace{0.2cm}
 {\small
\begin{list}{$\bullet$}{\topsep=0.1in \leftmargin=0.2in \rightmargin=0.1in \itemsep =0.01in}
\item[*] The complexity is measured by $\tau (n_J+n_H) + n_G$ (\Cref{complexity_measyre}), where $n_G, n_J,n_H$ are the numbers of gradients, Jacobian- and Hessian-vector products, and $\tau$ is a universal constant. 
In the references column, quadratic $g(x,y)$ means that $g$ takes a quadratic form as {\scriptsize $g(x,y)=y^T H y +  x^T J y + b^Ty+h(x)$} for the constant matrices $H,J$ and a constant vector $b$.
In the computational complexity column, $\widetilde L_y$ denotes the smoothness parameter of $g(x,\cdot)$, $\rho_{xy}$ and $\rho_{yy}$ are the Lipschitz parameters of {\scriptsize $\nabla_y^2g(\cdot,\cdot)$} and {\small$\nabla_x\nabla_yg(\cdot,\cdot)$} (see \cref{def:three}), {\scriptsize $\Delta^*_{\text{\normalfont\tiny SCSC}}=\|\nabla_y f( x^*,y^*(x^*))\|+\frac{\|x^*\|}{\mu_y}+\frac{\sqrt{\Phi(0)-\Phi(x^*)}}{\sqrt{\mu_x}\mu_y}$} ($\Delta^*_{\text{\normalfont\tiny CSC}}$ takes the same form as $\Delta^*_{\text{\normalfont\tiny SCSC}}$ but with $\mu_x$ replaced by {\scriptsize $\frac{\epsilon}{(\|x^*\|+1)^2}$}). 
\end{list}
}
\vspace{-0.5cm}
\end{table*}

\begin{table*}[h]
\renewcommand{\arraystretch}{1.1}
\centering
\caption{Comparison of computational complexities for finding an $\epsilon$-approximate point with the gradient boundedness assumption. }
\label{tab:results_withB}

\vspace{0.3cm}
\begin{tabular}{|c|c|c|} \hline
 \textbf{Type} & \textbf{References} & \textbf{Computational Complexity} \\ \hline 
\multirow{3}{*}{\shortstack{SCSC}} 
&BA~\cite{ghadimi2018approximation} & {\small $\mathcal{\widetilde O}\Big(\max\Big\{\frac{1}{\mu_x^2\mu_y^6},\frac{\widetilde L^2_y}{\mu^2_y}\Big\}\Big)$}  \\ \cline{2-3}
&ABA~\cite{ghadimi2018approximation} & {\small $\mathcal{\widetilde O}\Big(\max\Big\{\frac{1}{\mu_x\mu_y^3},\frac{\widetilde L^2_y}{\mu^2_y}\Big\}\Big)$}  \\ \cline{2-3}& \cellcolor{light-gray} \textbf{AccBiO-BG} (this thesis, \Cref{upper_srsr}) & \cellcolor{light-gray} {\small $ \mathcal{\widetilde O}\Big(\sqrt{\frac{\widetilde L_y}{\mu_x\mu_y^4}}\Big)$} \\ \hline \hline
\multirow{3}{*}{\shortstack{CSC} }  
& BA~\cite{ghadimi2018approximation} & {\small $\mathcal{\widetilde O}\Big(\frac{1}{\epsilon^{1.25}}\max\Big\{\frac{1}{\mu_y^{3.75}},\frac{\widetilde L^{10}_y}{\mu^{11.25}_y}\Big\}\Big)$} \\ \cline{2-3}
&ABA~\cite{ghadimi2018approximation} &{\small $\mathcal{\widetilde O}\Big(\frac{1}{\epsilon^{0.75}}\max\Big\{\frac{1}{\mu_y^{2.25}},\frac{\widetilde L^{6}_y}{\mu^{6.75}_y}\Big\}\Big)$}  \\ \cline{2-3}
& \cellcolor{light-gray} \textbf{AccBiO-BG} (this thesis, \Cref{convex_upper_BG})& \cellcolor{light-gray} {\small 
$\mathcal{\widetilde O}\Big( \sqrt{\frac{\widetilde L_y}{\epsilon\mu_y^4}}\Big)$
}\\ 
\hline 
\end{tabular}
\end{table*}

 \vspace{0.2cm}
 \noindent{\bf Main Contributions.}
 We first propose a new accelerated bilevel optimizer named AccBiO. 
 In contrast to existing bilevel optimizers, we show that AccBiO converges to the $\epsilon$-accurate solution  without the requirement on the boundedness of the gradient {\small $\nabla_y f(x,\cdot)$} for any $x$. For the  strongly-convex-strongly-convex bilevel optimization, \Cref{tab:results} shows that AccBiO achieves an upper complexity bound of {\footnotesize $\mathcal{\widetilde O}\Big(\sqrt{\frac{\widetilde L_y}{\mu_x\mu_y^{3}}}+\Big(\sqrt{\frac{ \rho_{yy}\widetilde L_y}{\mu_x\mu_y^{4}} }+ \sqrt{ \frac{\rho_{xy}\widetilde L_y}{\mu_x\mu_y^{3}}}\Big)\sqrt{\Delta^*_{\text{\normalfont\tiny SCSC}}}\Big)$}.  When the inner-level function $g(x,y)$ takes the quadratic form as {\footnotesize $g(x,y)=y^T H y +  x^T J y + b^Ty+h(x)$}, we further improve the upper bounds to {\footnotesize$ \mathcal{\widetilde O}\Big(\sqrt{\frac{\widetilde L_y}{\mu_x\mu_y^{3}}}\Big)$}. 
For the convex-strongly-convex bilevel optimization, AccBiO achieves an upper bound of {\footnotesize 
$\mathcal{\widetilde O}\Big( \sqrt{\frac{\widetilde L_y}{\epsilon\mu_y^3}}+\Big(\sqrt{\frac{\rho_{yy}\widetilde L_y}{\epsilon\mu_y^{4}}} +  \sqrt{\frac{\rho_{xy}\widetilde L_y}{\epsilon\mu_y^3}}\Big)\sqrt{\Delta^*_{\text{\normalfont\tiny CSC}}}\Big)$}, which is further improved to  {\footnotesize$\mathcal{\widetilde O}\Big(\sqrt{\frac{\widetilde L_y}{\epsilon\mu_y^3}}\Big)$} for the quadratic $g(x,y)$.
Technically, our analysis controls the finiteness of all iterates $x_k,k=0,....$ as the algorithm runs via an induction proof to ensure that the hypergradient estimation error will not explode after the acceleration steps. 
 
Furthermore, when the gradient $\nabla_y f(x,\cdot)$ is bounded, as assumed by existing studies, we provide new upper bounds with significantly tighter dependence on the condition numbers. In specific, as shown in \Cref{tab:results_withB}, our upper bounds outperform the best known results by a factor of $\frac{1}{\mu_x^{0.5}\mu_y}$ and $\frac{1}{\epsilon^{0.25}\mu_y^{4.75}}$ for the strongly-convex-strongly-convex and  convex-strongly-convex 
cases, respectively. 

\section{Lower Bounds for Problem-Based Bilevel Optimization}
Although recent studies have characterized the convergence rate for several problem-based bilevel optimization algorithms, as shown in \Cref{intro:sec:acc}, it is still unclear how much further these convergence rates can be improved. Furthermore, existing complexity results on  {\bf bilevel} optimization are much worse than those on {\bf minimax} optimization, which is a special case of bilevel optimization with $f(x,y)=g(x,y)$. For example, for the convex-strongly-convex case, it was shown in~\cite{lin2020near}  that the optimal complexity for {\bf minimax} optimization is given by $\mathcal{\widetilde O}\big(\frac{1}{\epsilon^{0.5}\mu_y^{0.5}}\big)$, which is much smaller than the best known $\mathcal{ \widetilde O}\big(\frac{1}{\mu_y^{6.75}\epsilon^{0.75} }\big)$ for {\bf bilevel} optimization. Similar observations hold for the strongly-convex-strongly-convex setting. Therefore, the following fundamental questions arise and need to be addressed.
\begin{list}{$\bullet$}{\topsep=0.1in \leftmargin=0.2in \rightmargin=0.1in \itemsep =0.01in}
 \item[1.] \textit{What is the performance limit of bilevel optimization in terms of computational complexity? Whether bilevel optimization is provably more challenging (i.e., requires more computations) than minimax optimization?}
\item[2.] \textit{Can we establish near-optimal bilevel algorithms under certain conditions? }
 \end{list}
In this thesis, we provide confirmative answers to the above questions. 

 \begin{table*}[!t]
\renewcommand{\arraystretch}{1.1}
\centering
\small
\caption{Comparison of upper and lower bounds for finding an $\epsilon$-approximate point. All listed results come form this thesis.}
\label{tab:results_up_low}

\vspace{0.3cm}
\begin{tabular}{|c|c|c|} \hline
 \textbf{Type} & \textbf{References} & \textbf{Computational Complexity} \\ \hline 
\multirow{3}{*}{\shortstack{SCSC}} 
&  \textbf{AccBiO} (\Cref{upper_srsr_withnoB}) &{\scriptsize $\mathcal{\widetilde O}\Big(\sqrt{\frac{\widetilde L_y}{\mu_x\mu_y^{3}}}+\Big(\sqrt{\frac{ \rho_{yy}\widetilde L_y}{\mu_x\mu_y^{4}} }+ \sqrt{ \frac{\rho_{xy}\widetilde L_y}{\mu_x\mu_y^{3}}}\Big)\sqrt{\Delta^*_{\text{\normalfont \tiny SCSC}}}\Big)$} \\ \cline{2-3}
& \textbf{AccBiO} (quadratic $g$, \Cref{coro:quadaticSr}) &  {\scriptsize $ \mathcal{\widetilde O}\Big(\sqrt{\frac{\widetilde L_y}{\mu_x\mu_y^{3}}}\Big)$} \\ \cline{2-3} 
&  \cellcolor{blue!15} \textbf{Lower bound} (\Cref{thm:low1}) & \cellcolor{blue!15} {$\widetilde{\Omega}\big(\sqrt{\frac{1}{\mu_x\mu_y^2}}\big)$} \\ \hline \hline
\multirow{4}{*}{\shortstack{CSC} }  
& \textbf{AccBiO} ( \Cref{th:upper_csc1sc})& {\scriptsize
$\mathcal{\widetilde O}\Big( \sqrt{\frac{\widetilde L_y}{\epsilon\mu_y^3}}+\Big(\sqrt{\frac{\rho_{yy}\widetilde L_y}{\epsilon\mu_y^{4}}} +  \sqrt{\frac{\rho_{xy}\widetilde L_y}{\epsilon\mu_y^3}}\Big)\sqrt{\Delta^*_{\text{\normalfont\tiny CSC}}}\Big)$
}\\ \cline{2-3}
&\textbf{AccBiO} (quadratic $g$, \Cref{coro:quadaticConv})& {\scriptsize$\mathcal{\widetilde O}\Big(\sqrt{\frac{\widetilde L_y}{\epsilon\mu_y^3}}\Big)$}\\ \cline{2-3} 
& \cellcolor{blue!15} \textbf{Lower bound} $($\Cref{co:co1}, {\scriptsize$\widetilde L_y\leq \mathcal{O}(\mu_y)$}$)$& \cellcolor{blue!15} {$\widetilde \Omega \Big(\sqrt{\frac{1}{\epsilon\mu_y^2}}\Big)$}\\ \cline{2-3}
& \cellcolor{blue!15} \textbf{Lower bound} $($\Cref{co:co2}, {\scriptsize $\widetilde L_y\leq \mathcal{O}(1)$}$)$& \cellcolor{blue!15} {\small $\widetilde \Omega\big(\epsilon^{-0.5}\min\{\mu^{-1}_y,\epsilon^{-1.5}\}\big)$}\\ \hline 
\end{tabular}
\vspace{-0.5cm}
\end{table*}

\vspace{0.2cm}
\noindent{\bf Main Contributions.}  We provide the first-known lower bound of {\small $\widetilde{\Omega}(\frac{1}{\sqrt{\mu_x}\mu_y})$} for solving the strongly-convex-strongly-convex bilevel optimization.  
When the inner-level function $g(x,y)$ takes the quadratic form as {\small $g(x,y)=y^T H y +  x^T J y + b^Ty+h(x)$} with {\small $\widetilde L_y\leq \mathcal{O}(\mu_y)$}, the upper bound achieved by our AccBiO in \Cref{intro:sec:acc}
matches the lower bound up to logarithmic factors, suggesting that AccBiO is near-optimal.   Technically, our analysis of the lower bound involves careful construction of quadratic $f$ and $g$ functions with a properly structured bilinear term, as well as novel characterization of subspaces of iterates for updating $x$ and $y$. 

 We next provide a lower bound for solving convex-strongly-convex bilevel optimization. 
For the quadratic $g(x,y)$ with $\widetilde L_y\leq \mathcal{O}(\mu_y)$, the upper bound achieved by AccBiO matches the lower bound up to logarithmic factors, suggesting the optimality of AccBiO. Technically, the analysis of the lower bound is different from that for the strongly-convex $\Phi(\cdot)$, and exploits the structures of different powers of an unnormalized graph Laplacian matrix $Z$.

To compare between bilevel optimization and minimax optimization, for the strongly-convex-strongly-convex case, our lower bound is larger than the optimal complexity of {\small$\widetilde{\Omega}(\frac{1}{\sqrt{\mu_x\mu_y}})$} for the same type of minimax optimization by a factor of $\frac{1}{\sqrt{\mu_y}}$. Similar observation holds for the convex-strongly-convex case. This establishes that bilevel optimization is fundamentally more challenging than minimax optimization.   

\section{Stochastic Bilevel Optimization}
The {\em stochastic} problem-based bilevel optimization often occurs    
in  applications where fresh data are sampled for algorithm iterations (e.g., in reinforcement learning~\cite{hong2020two}) or the sample size of training data is large (e.g., hyperparameter optimization~\cite{franceschi2018bilevel}, Stackelberg game~\cite{roth2016watch}). Typically, the  objective function is given by 
\begin{align}\label{objective}
&\min_{x\in\mathbb{R}^{p}} \Phi(x)=f(x, y^*(x))= 
\begin{cases}
\frac{1}{n}{\sum_{i=1}^nF(x,y^*(x);\xi_i) } \\
\mathbb{E}_{\xi} \left[F(x,y^*(x);\xi)\right] 
\end{cases}\nonumber\\
& \;\mbox{s.t.} \;y^*(x)= \argmin_{y\in\mathbb{R}^q} g(x,y)=
\begin{cases}
\frac{1}{m}{\sum_{i=1}^{m} G(x,y;\zeta_i)} \\
\mathbb{E}_{\zeta} \left[G(x,y;\zeta)\right]
\end{cases}
\end{align}
where $f(x,y)$ and $g(x,y)$ take either the expectation form w.r.t. the random variables $\xi$ and $\zeta$ or the finite-sum form over given 
data $\gD_{n,m}=\{\xi_i,\zeta_j, i=1,...,n;j=1,...,m\}$ often with large sizes $n$ and $m$. During the optimization process, data batch is sampled via the distributions of $\xi$ and $\zeta$ or from the set $\gD_{n,m}$. For such a stochastic setting, \cite{ghadimi2018approximation} proposed a bilevel stochastic approximation (BSA) method via single-sample gradient and Hessian estimates.  Based on such a method, \cite{hong2020two} further proposed a two-timescale stochastic approximation (TTSA) algorithm, and showed that TTSA achieves a better trade-off between the complexities of inner- and outer-loop optimization stages than BSA. 
{\em Then, the second focus of this thesis is to design a more sample-efficient algorithm for bilevel  stochastic optimization, which is easy to implement, uses efficient Jacobian- and Hessian-vector product computations,  and 
achieves lower computational complexity by orders of magnitude than BSA and TTSA.}

\begin{table*}[!t]
\vspace{-0.25cm}
 \centering
 \caption{Comparison of bilevel stochastic optimization algorithms.}
\small
 \vspace{0.1cm}
 \begin{threeparttable}
  \begin{tabular}{|c|c|c|c|c|c|}
   \hline
Algorithm & Gc($F,\epsilon$) & Gc($G,\epsilon$) & JV($G,\epsilon$) &  HV($G,\epsilon$)  
\\\hline\hline
TTSA \cite{hong2020two}  &$ \mathcal{O}(\text{\scriptsize poly}(\kappa)\epsilon^{-\frac{5}{2}})$\tnote{*} & $\mathcal{O}(\text{\scriptsize poly}(\kappa)\epsilon^{-\frac{5}{2}})$& $\mathcal{O}(\text{\scriptsize poly}(\kappa)\epsilon^{-\frac{5}{2}})$&$\mathcal{O}(\text{\scriptsize poly}(\kappa)\epsilon^{-\frac{5}{2}})$
\\ \hline
BSA \cite{ghadimi2018approximation}  & $\mathcal{O}(\kappa^6\epsilon^{-2})$ & $\mathcal{O}(\kappa^9\epsilon^{-3})$ &  $\mathcal{O}\left(\kappa^6\epsilon^{-2}\right)$& $\mathcal{\widetilde O}\left(\kappa^6\epsilon^{-2}\right)$
\\ \hline
   \cellcolor{blue!15}{stocBiO (this thesis)} & \cellcolor{blue!15}{$\mathcal{O}(\kappa^5\epsilon^{-2})$} & \cellcolor{blue!15}{$\mathcal{O}(\kappa^9\epsilon^{-2})$ } & \cellcolor{blue!15}{$\mathcal{ O}\left(\kappa^5\epsilon^{-2}\right)$} & \cellcolor{blue!15}{$\mathcal{\widetilde O}\left(\kappa^6\epsilon^{-2}\right)$} \\ \hline 
  \end{tabular}\label{tab:stochastic}
   \begin{tablenotes}
  \item[*] We use $\text{poly}(\kappa)$ because \cite{hong2020two} does not provide the explicit dependence on $\kappa$.
 \end{tablenotes}
 \end{threeparttable}
 \vspace{-0.5cm}
\end{table*}

\vspace{0.2cm}
\noindent{\bf Main Contributions.} In this thesis, we propose a stochastic bilevel optimizer (stocBiO) to solve the stochastic bilevel optimization problem in~\cref{objective}. Our algorithm features a {\em mini-batch} hypergradient estimation via implicit differentiation,  where the core design involves 
 a sample-efficient hypergradient estimator via the Neumann series. 
As shown in \Cref{tab:stochastic}, the gradient complexities of our proposed algorithm w.r.t.~$F$  and $G$  improve upon those of BSA~\cite{ghadimi2018approximation} by an order of $\kappa$ and $\epsilon^{-1}$, respectively. In addition, the Jacobian-vector product complexity JV($G,\epsilon$) of our algorithm improves that of BSA by an order of $\kappa$. In terms of the target accuracy $\epsilon$, our computational complexities  improve those of TTSA~\cite{hong2020two} by an order of $\epsilon^{-1/2}$. Our results further provide the theoretical complexity guarantee for stocBiO in hyperparameter optimization. The experiments demonstrate the superior efficiency of stocBiO for stochastic bilevel optimization.

\section{Convergence theory for Model-Agnostic Meta-Learning}
Meta-learning or learning to learn~\cite{thrun2012learning,naik1992meta,bengio1991learning,schmidhuber1987evolutionary} is a powerful tool  for  quickly learning new tasks by using the prior experience from  related tasks. Recent works have empowered this idea with neural networks, and their proposed meta-learning algorithms have been shown to enable fast learning over unseen tasks using only a few samples 
by efficiently extracting the knowledge from a range of observed tasks~\cite{santoro2016meta,vinyals2016matching,finn2017model}.  Current meta-learning algorithms can be generally categorized into metric-learning based~\cite{koch2015siamese,snell2017prototypical},  model-based~\cite{vinyals2016matching,munkhdalai2017meta}, and optimization-based~\cite{finn2017model,nichol2018reptile,rajeswaran2019meta} approaches. Among them, optimization-based meta-learning is  a  simple and effective approach used in a wide range of domains including classification/regression~\cite{rajeswaran2019meta}, reinforcement learning~\cite{finn2017model}, robotics~\cite{al2018continuous}, federated learning~\cite{chen2018federated}, and imitation learning~\cite{finn2017one}.

Model-agnostic meta-learning (MAML)~\cite{finn2017model} is a popular optimization-based approach,  which is simple and compatible generally  with  models  trained with gradient descents. MAML takes a bilevel optimization procedure, where the inner stage runs a few steps of (stochastic) gradient descent for each individual task, and the outer stage updates the meta parameter based on the inner-stage outputs over all the sampled tasks. The goal of MAML is to find a good meta initialization $w^*$ based on the observed tasks such that for a new task, starting from this $w^*$, a few (stochastic) gradient steps suffice to find a good model parameter.  Such an algorithm has been demonstrated to have superior empirical performance~\cite{antoniou2019train,grant2018recasting,zintgraf2018caml,nichol2018first}. Recently, the theoretical convergence of MAML has also been studied. Specifically, \cite{finn2019online} extended MAML to the online setting, and analyzed the regret for the strongly convex objective function. 
\cite{fallah2020convergence} provided an analysis for one-step MAML for nonconvex functions, where each inner stage takes  a single stochastic gradient descent (SGD) step. 

In practice, the MAML training often takes {\em multiple} SGD steps at the inner stage, for example in \cite{finn2017model,antoniou2019train} for supervised learning and in~\cite{finn2017model,fallah2020provably} for reinforcement learning, in order to attain a higher test accuracy (i.e., better generalization performance) even at a price of higher computational cost. 
Compared to the single-step MAML, the multi-step MAML has been shown to achieve better test performance. For example, as shown in Fig. 5 of \cite{finn2017model} and Table 2 of \cite{antoniou2019train},  the test accuracy is improved as the number of inner-loop steps increases. In particular, in the original MAML work \cite{finn2017model}, $5$ inner-loop steps are taken in the training of a 20-way convolutional MAML model. In addition, some important variants of MAML also take multiple inner-loop steps, which include but not limited to ANIL (Almost No Inner Loop)~\cite{raghu2020rapid} and BOIL (Body Only
update in Inner Loop) \cite{oh2021boil}. For these reasons, it is important and meaningful to analyze the convergence of multi-step MAML, and the resulting analysis can be helpful for studying other
MAML-type of variants.  

However, the theoretical convergence of such {\em multi-step} MAML algorithms has not been established yet. In fact, several mathematical challenges will arise in the theoretical analysis if the inner stage of MAML takes multiple steps. First, the meta gradient of multi-step MAML has a nested and recursive structure, which requires the performance analysis of an optimization path over a nested structure. In addition, multi-step update also yields a complicated bias error in the Hessian estimation as well as the statistical correlation between the Hessian and gradient estimators, both of which cause further difficulty in the analysis of the meta gradient. {\em The contribution of this thesis lies in the development of a new theoretical framework for analyzing the general {\em multi-step} MAML with techniques for handling the above challenges. }

\vspace{0.15cm}
\noindent{\bf Main Contributions.} We develop a new theoretical framework, under which we characterize the convergence rate and the computational complexity to attain an $\epsilon$-accurate solution for {\em multi-step} MAML in the general nonconvex setting. Specifically, for the resampling case where each iteration needs sampling of fresh data (e.g., in reinforcement learning), our analysis enables to decouple the Hessian approximation error from the gradient approximation error based on a novel bound on the distance between two different inner optimization paths, which facilitates the analysis of the overall convergence of MAML. For the finite-sum case where the objective is based on pre-assigned samples (e.g., supervised learning), we develop novel techniques to handle the difference between two losses over the training and  test sets in the  analysis. 

Our analysis provides a guideline for choosing the inner-stage stepsize at the order of $\mathcal{O}(1/N)$ and shows that  $N$-step MAML is guaranteed to converge with the gradient and Hessian computation complexites  growing only linearly with $N$, which is consistent with the empirical observations in~\cite{antoniou2019train}.
In addition, for problems where Hessians are small, e.g., most classification/regression meta-learning problems~\cite{finn2017model}, we show that the inner stepsize $\alpha$ can be set  larger while still maintaining the convergence,
which explains the empirical findings for MAML training in~\cite{finn2017model,rajeswaran2019meta}.

\section{Meta-Learning with  Adaptation on Partial Parameters} 


As a powerful meta-learning paradigm, model-agnostic meta-learning (MAML)~\cite{finn2017model} has been successfully applied to a variety of application domains including classification~\cite{rajeswaran2019meta}, reinforcement learning~\cite{finn2017model}, imitation learning~\cite{finn2017one}, etc.  
At a high level, the MAML algorithm takes a bilevel optimization procedure: the inner loop of task-specific adaptation and the outer (meta) loop of initialization training. Since the outer loop often adopts a gradient-based algorithm, which takes the gradient over the inner-loop algorithm (i.e., the inner-loop optimization path), even the simple inner loop of gradient descent updating can result in the Hessian update in the outer loop, which causes significant computational and memory cost. Particularly in deep learning, if all neural network parameters are updated in the inner loop, then the cost for the outer loop is extremely high. Thus, designing simplified MAML, especially the inner loop, is highly motivated. ANIL (which stands for {\em almost no inner loop}) proposed in~\cite{raghu2019rapid} has recently arisen as such an appealing approach. In particular,
\cite{raghu2019rapid} proposed to update only a small subset (often only the last layer) of parameters in the inner loop. Extensive experiments  in \cite{raghu2019rapid} demonstrate that ANIL achieves a significant speedup over MAML without sacrificing the performance.  

Despite extensive empirical results, there has been no theoretical study of ANIL yet, which motivates this work. In particular, we would like to answer several new questions arising in ANIL (but not in the original MAML). While the outer-loop loss function of ANIL is still nonconvex as MAML, the inner-loop loss can be either  {\em strongly convex} or {\em nonconvex} in practice. The strong convexity  occurs naturally if only the last layer of neural networks is updated in the inner loop, whereas the nonconvexity often occurs if more than one layer of neural networks are updated in the inner loop. Thus, our theory will explore how such different geometries affect the convergence rate, computational complexity, as well as the hyper-parameter selections.  We will also theoretically quantify how much computational advantage ANIL achieves over MAML by training only partial parameters in the inner loop. 

\vspace{0.15cm}
\noindent{\bf Main Contributions.} We characterize the convergence rate and the computational complexity for ANIL with $N$-step inner-loop gradient descent, under nonconvex outer-loop loss geometry, and under two representative inner-loop loss geometries, i.e., strongly-convexity and nonconvexity. Our analysis also provides theoretical guidelines for choosing the hyper-parameters such as the stepsize and the number $N$ of inner-loop steps under each geometry. We summarize our specific results as follows.

\begin{list}{$\bullet$}{\topsep=0.2ex \leftmargin=0.26in \rightmargin=0.1in \itemsep =0.05in}

\item {\bf Convergence rate}: ANIL converges sublinearly with the convergence error decaying sublinearly with the number of sampled tasks due to nonconvexity of the meta objective function. The convergence rate is further significantly affected by the geometry of the inner loop. Specifically, ANIL converges exponentially fast with $N$ initially and then saturates under the strongly-convex inner loop, and constantly converges slower as $N$ increases under the nonconvex inner loop.

\item {\bf Computational complexity}: ANIL attains an $\epsilon$-accurate stationary point with the gradient and second-order evaluations at the order of $\mathcal{O}(\epsilon^{-2})$ due to nonconvexity of the meta objective function. The computational cost is also significantly affected by the geometry of the inner loop. Specifically, under the strongly-convex inner loop, its complexity first decreases and then increases with $N$, which suggests a moderate value of $N$ and a constant stepsize in practice for a fast training. But under the nonconvex inner loop, ANIL has higher computational cost as $N$ increases, which suggests a small $N$ and a stepsize at the level of $1/N$ for desirable training.

\item Our experiments validate that ANIL exhibits aforementioned {\em very different} convergence behaviors under the two inner-loop geometries.

\end{list}
From the technical standpoint, we develop new techniques to capture the properties for ANIL, which does not follow from the existing theory for MAML~\cite{fallah2019convergence,ji2020multi}. First, our analysis explores how different geometries of the inner-loop loss (i.e., strongly-convexity and nonconvexity) affect the convergence of ANIL. Such comparison does not exist in MAML. Second, ANIL contains parameters that are updated only in the outer loop, which exhibit {\em special} meta-gradient properties not captured in MAML.

\section{Related Works}
\noindent{\bf Problem-based bilevel optimization approaches}: Bilevel optimization was first introduced by~\cite{bracken1973mathematical}. Since then, a number of bilevel optimization algorithms have been proposed, which include but not limited to constraint-based methods~\cite{shi2005extended,moore2010bilevel} and gradient-based methods~\cite{domke2012generic,pedregosa2016hyperparameter,gould2016differentiating,maclaurin2015gradient,franceschi2018bilevel,ghadimi2018approximation,liao2018reviving,shaban2019truncated,hong2020two,liu2020generic,li2020improved,grazzi2020iteration,lorraine2020optimizing,ji2021lower,liu2021value}. Among them, \cite{ghadimi2018approximation,hong2020two} provided the complexity analysis for their proposed methods for the nonconvex-strongly-convex bilevel optimization problem.
For such a problem, this thesis develops a general and enhanced convergence rate analysis for ITD- and AID-based bilevel optimizers for the deterministic setting, and proposes a novel algorithm named stocBiO for the stochastic setting  with order-level lower computational complexity than the existing results. We also provide the first-known lower  bounds on complexity as well as tighter upper bounds under various loss geometries.  

Other types of loss geometries have also been studied. \cite{liu2020generic,li2020improved} assumed that the lower- and upper-level functions $g(x,\cdot)$ and $f(x,\cdot)$ are convex and strongly convex, and provided an asymptotic analysis for their methods. \cite{ghadimi2018approximation,hong2020two}  studied the setting where $\Phi(\cdot)$ is strongly convex or convex, and  $g(x,\cdot)$ is strongly convex.

After our stocBiO work was posted on arXiv, there were a few subsequent studies on using momentum-based approximation for accelerating SGD-type bilevel optimization algorithms~\cite{chen2021single,guo2021stochastic,khanduri2021near,guo2021randomized,yang2021provably}. In particular, \cite{guo2021stochastic} proposed a single-loop algorithm SEMA by incorporating momentum-based technique~\cite{cutkosky2019momentum} to the updates. \cite{chen2021single} proposed a single-loop method named STABLE by using the similar momentum scheme for the Hessian updates.
SEMA, MSTSA and STABLE achieve the same complexity as our stocBiO w.r.t.~$\epsilon$. \cite{khanduri2021near,guo2021randomized,yang2021provably} improved the dependence on $\epsilon$ of our stocBiO from $\mathcal{O}(\epsilon^{2})$ to $\mathcal{O}(\epsilon^{1.5})$ via recursive momentum and variance reduction. In particular, our proposed hypergradient estimator has been successfully used in MRBO and VRBO proposed by~\cite{yang2021provably}.  We want to emphasize our stocBiO is the first mini-batch SGD-type bilevel optimization algorithm along this direction.

\vspace{0.2cm}
\noindent {\bf Problem-based bilevel optimization in meta-learning}: Problem-based bilevel optimization framework has been successfully applied to meta-learning recently~\cite{snell2017prototypical,franceschi2018bilevel,rajeswaran2019meta,zugner2019adversarial,ji2020convergence,ji2020multi}. For example, \cite{rajeswaran2019meta} reformulated the model-agnostic meta-learning (MAML)~\cite{finn2017model} as problem-based bilevel optimization, and proposed iMAML via implicit gradient. 
Another well-established framework in few-shot meta learning~\cite{bertinetto2018meta,lee2019meta,ravi2016optimization,snell2017prototypical,zhou2018deep} aims to learn good parameters as a common embedding model for all tasks. 
Building on the embedded features, task-specific parameters are then searched as a minimizer of the inner-loop loss function~\cite{bertinetto2018meta,lee2019meta}.
For example,
 \cite{snell2017prototypical}  proposed a bilevel optimization procedure for  meta-learning to learn a common embedding model  for all tasks. Our work provides a theoretical complexity guarantee for two popular types of bilevel optimizer, i.e., AID-BiO and ITD-BiO, for meta-learning. 

\vspace{0.2cm}
\noindent {\bf Problem-based bilevel optimization in hyperparameter optimization}: 
Hyperparameter optimization has become increasingly important as a powerful tool in the automatic machine learning (autoML)~\cite{okuno2018hyperparameter,yu2020hyper}. Recently, various bilevel optimization algorithms have been proposed for hyperparameter optimization, which include AID-based methods~\cite{pedregosa2016hyperparameter,franceschi2018bilevel}, ITD-based  methods~\cite{franceschi2018bilevel,shaban2019truncated,grazzi2020iteration}, self-tuning networks~\cite{mackay2018self,bae2020delta}, penalty-based methods~\cite{mehra2019penalty,sinha2020gradient,liu2021value}, proximal approximation based  method~\cite{jenni2018deep}, etc.  
Our work demonstrates superior efficiency of the proposed principled stocBiO algorithm in hyperparameter optimization.

\vspace{0.2cm}

\noindent{\bf Algorithm-based bilevel optimization in meta-learning.} Algorithm-based bilevel optimization  approaches have been widely used in meta-learning due to its simplicity and efficiency~\cite{li2017meta,ravi2016optimization,finn2017model}. 
 As a pioneer along this line, MAML~\cite{finn2017model} aims to find an initialization such that gradient descent from it achieves fast adaptation. Many follow-up studies~\cite{grant2018recasting,finn2019online,jerfel2018online,finn2018meta,finn2018probabilistic,mi2019meta,liu2019taming,rothfuss2019promp,foerster2018dice,fallah2020convergence,raghu2020rapid, collins2020distribution} have extended MAML from different perspectives. 
 For example, \cite{finn2019online} provided a follow-the-meta-leader extension of MAML for online learning. \cite{raghu2020rapid} proposed an efficient variant of MAML named ANIL (Almost No Inner Loop) by adapting only a small subset (e.g., head) of neural network parameters in the inner loop.  Various Hessian-free MAML algorithms have been proposed to avoid  the costly computation of second-order derivatives, which include but not limited to FOMAML~\cite{finn2017model}, Reptile~\cite{nichol2018reptile}, ES-MAML~\cite{song2020simple}, and  HF-MAML~\cite{fallah2020convergence}. In particular, FOMAML~\cite{finn2017model} omits all second-order derivatives in its meta-gradient computation, HF-MAML~\cite{fallah2020convergence} estimates the meta gradient in one-step MAML using Hessian-vector product approximation.  This thesis focuses on the first MAML algorithms, but the techniques here can be extended to analyze the Hessian-free multi-step MAML. 
 Alternatively to meta-initialization algorithms such as MAML, meta-regularization approaches aim to learn a good bias for a regularized empirical risk  
minimization problem for intra-task learning~\cite{alquier2017regret, denevi2018learning,denevi2018incremental,denevi2019learning,rajeswaran2019meta,balcan2019provable,zhou2019efficient}.  
\cite{balcan2019provable} formalized a connection between meta-initialization and meta-regularization from an online learning perspective.~\cite{zhou2019efficient} proposed an efficient meta-learning approach based on a  minibatch proximal updating procedure. 

%
Theoretical property of MAML was initially established in \cite{finn2018meta}, which showed that MAML is a universal learning algorithm approximator under certain conditions. Then {\em MAML-type algorithms} have been studied recently from the optimization perspective, where the convergence rate and computation complexity is typically characterized. \cite{finn2019online} analyzed online MAML for a strongly convex objective function under a bounded-gradient assumption. 
 \cite{fallah2020convergence} developed a convergence analysis for one-step MAML for a general nonconvex objective in the  resampling case. Our study here provides a new convergence analysis for {\em multi-step} MAML in the {\em nonconvex} setting for both the resampling  and  finite-sum cases.  Since the initial version of our work was posted in arXiv, there have been a few studies on multi-step MAML more recently. \cite{wang2020global2,wang2020global} studied the global optimality of MAML under the over-parameterized neural networks, while our analysis focus on general nonconvex functions.  
\cite{kim2020multi} 
proposed an efficient extension of multi-step MAML by gradient reuse in the inner loop, while our analysis focuses on the most basic MAML algorithm.  
\cite{ji2020convergence} analyzed the convergence and complexity performance of multi-step ANIL algorithm, which is an efficient simplification of MAML by adapting only partial parameters in the inner loop. 
We emphasize that the study here is the first along the line of studies on multi-step MAML. We note that a concurrent work \cite{fallah2020provably} also studies multi-step MAML for reinforcement learning setting, where they design an unbiased multi-step estimator. As a comparison, our estimator is biased due to the data sampling in the inner loop, and hence we need extra developments to control this bias, e.g.,  by bounding the difference between batch-gradient and the stochastic-gradient parameter updates in the inner loop.  

%
%

\vspace{0.2cm}

\noindent{\bf Statistical theory for meta-learning.}  
\cite{zhou2019efficient} statistically demonstrated the importance of prior hypothesis in reducing the excess risk via a regularization approach.  
\cite{du2020few} studied few-shot learning from a representation learning perspective, and showed that representation learning can provide a sufficient rate improvement in both linear regression and learning neural networks. \cite{tripuraneni2020provable} studied a multi-task linear regression problem with shared low-dimensional representation, and proposed a sample-efficient algorithm with performance guarantee. \cite{arora2020provable} proposed a representation learning approach for imitation learning via bilevel optimization, and demonstrated the improved sample complexity brought by representation learning.

\section{Organization of the Dissertation}
The rest of the dissertation is organized as follows. In \Cref{chp_deter_bilevel}, we provide a convergence theory for problem-based bilevel optimization. In \Cref{chp_acc_bilevel}, we provide accelerated bilevel optimization algorithms with lower computational complexity.  In \Cref{chp_lower_bilevel}, we develop lower bounds for two types of problem-based bilevel optimization problems. In \Cref{chp:stoc_bilevel}, we propose an efficient stochastic optimization algorithm with provable performance improvements.  In \Cref{chp: maml}, we provide a comprehensive convergence theory for multi-step MAML under various settings. In \Cref{chp:anil}, we analyze the convergence behaviors of ANIL under different loss landscapes. Lastly, we discuss several future research directions and briefly talk about some other works the author has done during this Ph.D. study in \Cref{end.ch}. 
  
\chapter{Convergence Theory for Problem-Based Bilevel Optimization}\label{chp_deter_bilevel}
In this chapter, we first provide a comprehensive convergence and complexity theory for widely-used problem-based bilevel optimization algorithms. All technical proofs for the results in this chapter are provided in \Cref{sec: append_deter_bilevel}. 

\section{Algorithms for Problem-Based Bilevel Optimization}\label{sec:alg}

As shown in~\Cref{alg:main_deter}, we describe two popular types of problem-based bilevel optimizers respectively based on AID and ITD (referred to as AID-BiO and ITD-BiO) for solving the problem~\cref{objective_deter}.  

Both AID-BiO and ITD-BiO update in a nested-loop manner. In the inner loop, both of them run $D$ steps of gradient decent (GD) to find an approximation point $y_k^D$ close to $y^*(x_k)$. 
Note that we choose the initialization $y_{k}^0$ of each inner loop as the output $y_{k-1}^D$ of the preceding inner loop rather than a random start.  Such a {\em warm start} allows us to backpropagate the tracking error $\|y_k^D-y^*(x_k)\|$ to previous loops, and yields an improved computational complexity.

At the outer loop,  AID-BiO first solves $v_k^N$ from a linear system $\nabla_y^2 g(x_k,y_k^D) v = 
\nabla_y f(x_k,y^D_k)$\footnote{Equivalent to solving a quadratic programming $\min_v\frac{1}{2} v^T\nabla_y^2g(x_k,y_k^D) v-v^T\nabla_y f(x_k,y^D_k).$ } using $N$ steps of conjugate-gradient (CG) starting from $v_k^0$ (where we also adopt a warm start with $v_k^0=v_{k-1}^N$), and then constructs \begin{align}\label{hyper-aid}
\widehat\nabla \Phi(x_k)= \nabla_x f(x_k,y_k^T) -\nabla_x \nabla_y g(x_k,y_k^T)v_k^N
\end{align}
as an estimate of the true hypergradient $\nabla \Phi(x_k)$, whose form is given as follows.  
\begin{proposition}\label{prop:grad}
Hypergradient $\nabla \Phi(x_k)$ takes the forms of 
\begin{align}\label{trueG}
\nabla \Phi(x_k) =&  \nabla_x f(x_k,y^*(x_k)) -\nabla_x \nabla_y g(x_k,y^*(x_k)) v_k^*, 
\end{align}
where $v_k^*$ is the solution of the following linear system $$ \nabla_y^2 g(x_k,y^*(x_k))v=
\nabla_y f(x_k,y^*(x_k)).$$
\end{proposition}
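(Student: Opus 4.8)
The plan is to establish the formula by implicit differentiation, exploiting that the inner problem is strongly convex so that the minimizer map $y^*(x)$ is well defined and continuously differentiable. First I would write $\Phi(x)=f(x,y^*(x))$ and apply the chain rule, obtaining
$$\nabla \Phi(x) = \nabla_x f(x, y^*(x)) + \Big(\frac{\partial y^*(x)}{\partial x}\Big)^{\top} \nabla_y f(x, y^*(x)),$$
where $\frac{\partial y^*(x)}{\partial x}\in\mathbb{R}^{q\times p}$ is the Jacobian of the minimizer map. The whole task then reduces to computing this Jacobian without ever forming $y^*(x)$ explicitly, which is precisely what makes the final hypergradient estimator in \cref{hyper-aid} practical.

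To obtain the Jacobian I would use the first-order optimality condition. Since $g(x,\cdot)$ is strongly convex, its unique minimizer $y^*(x)$ is characterized by the stationarity identity $\nabla_y g(x,y^*(x))=0$, which holds for every $x$. Differentiating this vector identity with respect to $x$ (the implicit function theorem applies because the Hessian $\nabla_y^2 g$ is positive definite, hence invertible, by strong convexity, and $g$ is assumed smooth enough for the second derivatives to exist) yields
$$\big(\nabla_x \nabla_y g(x, y^*(x))\big)^{\top} + \nabla_y^2 g(x, y^*(x))\, \frac{\partial y^*(x)}{\partial x} = 0,$$
so that $\frac{\partial y^*(x)}{\partial x} = -[\nabla_y^2 g(x, y^*(x))]^{-1}\big(\nabla_x \nabla_y g(x, y^*(x))\big)^{\top}$.

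Substituting this expression back into the chain rule and using the symmetry of $\nabla_y^2 g$, I would get
$$\nabla \Phi(x) = \nabla_x f(x, y^*(x)) - \nabla_x \nabla_y g(x, y^*(x))\, [\nabla_y^2 g(x, y^*(x))]^{-1} \nabla_y f(x, y^*(x)).$$
Finally, setting $v^*=[\nabla_y^2 g(x, y^*(x))]^{-1}\nabla_y f(x, y^*(x))$, equivalently defining $v^*$ as the unique solution of the linear system $\nabla_y^2 g(x, y^*(x))v=\nabla_y f(x, y^*(x))$ (well-posed precisely because the Hessian is invertible), recovers the claimed form, and specializing to $x=x_k$ completes the proof.

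The main obstacle is not the algebra but the analytic justification that $y^*(x)$ is single-valued and differentiable with the stated Jacobian; this is exactly where strong convexity of $g(x,\cdot)$ is essential, as it guarantees both uniqueness of the minimizer and invertibility of $\nabla_y^2 g$, so that the implicit function theorem delivers a continuously differentiable $y^*$. I would also be careful with the transpose conventions for the mixed partial $\nabla_x \nabla_y g$ so that the dimensions in the final expression are consistent with the estimator in \cref{hyper-aid}.
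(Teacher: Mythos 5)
Your proof is correct and follows essentially the same route as the paper: apply the chain rule to $\Phi(x)=f(x,y^*(x))$, implicitly differentiate the optimality condition $\nabla_y g(x,y^*(x))=0$ to identify the Jacobian of $y^*$, and substitute to obtain the stated form with $v^*$ solving the linear system. The only cosmetic difference is that the paper multiplies the implicit-differentiation identity by $v_k^*$ directly instead of writing out the Hessian inverse, which avoids transpose bookkeeping but is the same argument.
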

As shown in~\cite{domke2012generic,grazzi2020iteration}, the construction of~\cref{hyper-aid} involves only Hessian-vector products in solving $v_N$ via CG and Jacobian-vector product $\nabla_x \nabla_y g(x_k,y_k^D)v_k^N$, which can be efficiently computed and stored via existing automatic differentiation packages.  

\begin{algorithm}[t]
\small 
	\caption{Bilevel algorithms  via AID or ITD}    
	\label{alg:main_deter}
	\begin{algorithmic}[1]
		\STATE {\bfseries Input:}  $K,D,N$, stepsizes $\alpha, \beta $, initializations $x_0, y_0,v_0$.
		\FOR{$k=0,1,2,...,K$}
		\STATE{Set $y_k^0 = y_{k-1}^{D} \mbox{ if }\; k> 0$ and $y_0$ otherwise  }
		\FOR{$t=1,....,D$}
		\vspace{0.05cm}
		\STATE{Update $y_k^t = y_k^{t-1}-\alpha \nabla_y g(x_k,y_k^{t-1}) $}
		\vspace{0.05cm}
		\ENDFOR
                  \STATE{Hypergradient estimation via 
\vspace{0.1cm}
                  \\\hspace{-0.3cm} {\bf AID}: 1) set $v_k^0 = v_{k-1}^{N} \mbox{ if }\; k> 0$ and $v_0$ otherwise
              \vspace{0.1cm}    \\\hspace{0.9cm}2) solve $v_k^N$ from $\nabla_y^2 g(x_k,y_k^D) v = 
\nabla_y f(x_k,y^D_k)$  via $N$ steps of CG starting at $v_k^0$
 \vspace{0.1cm} \\\hspace{0.9cm}3) get Jacobian-vector product {\small$\nabla_x \nabla_y g(x_k,y_k^D)v_k^N$}  via automatic differentiation
 \vspace{0.1cm} \\\hspace{0.9cm}4) {\small$\widehat\nabla \Phi(x_k)= \nabla_x f(x_k,y_k^D) -\nabla_x \nabla_y g(x_k,y_k^D)v_k^N$} 
             \vspace{0.0cm}      \\ \hspace{-0.3cm} {\bf  ITD}: compute $\widehat\nabla \Phi(x_k)=\frac{\partial f(x_k,y^D_k)}{\partial x_k}$ via backpropagation
                    }
                 \STATE{Update $x_{k+1}=x_k- \beta \widehat\nabla \Phi(x_k)$}
		\ENDFOR
	\end{algorithmic}
	\end{algorithm}

As a comparison, the outer loop of ITD-BiO computes the gradient $\frac{\partial f(x_k,y^D_k(x_k))}{\partial x_k}$  as an approximation of the hyper-gradient $\nabla \Phi(x_k)=\frac{\partial f(x_k,y^*(x_k))}{\partial x_k}$ via backpropagation, where we write  $y^D_k(x_k)$ because the output $y_k^D$ of the inner loop has a dependence on $x_k$ through the inner-loop iterative GD updates. The explicit form of the estimate $\frac{\partial f(x_k,y^D_k(x_k))}{\partial x_k}$ is given by the following proposition via the chain rule. For notation simplification, let $\prod_{j=D}^{D-1} (\cdot)= I$.
   \begin{proposition}\label{deter:gdform}
 $\frac{\partial f(x_k,y^D_k(x_k))}{\partial x_k}$ takes the analytical form of:
\begin{small}
\begin{align*}
\frac{\partial f(x_k,y^D_k)}{\partial x_k}= &\nabla_x f(x_k,y_k^D) -\alpha\sum_{t=0}^{D-1}\nabla_x\nabla_y g(x_k,y_k^{t})\prod_{j=t+1}^{D-1}(I-\alpha  \nabla^2_y g(x_k,y_k^{j}))\nabla_y f(x_k,y_k^D).
\end{align*}
\end{small}
\end{proposition}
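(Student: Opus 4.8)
The plan is to prove the formula by a direct application of the chain rule to the unrolled inner-loop map $x_k \mapsto y_k^D(x_k)$, combined with a short induction that unrolls the resulting Jacobian recursion. The starting observation is that, under the warm-start convention, the inner-loop initialization $y_k^0 = y_{k-1}^D$ is produced by the \emph{previous} outer iteration, hence is held fixed when differentiating with respect to the current $x_k$; this gives $\partial y_k^0/\partial x_k = 0$. Writing $Z_t := \partial y_k^t/\partial x_k \in \R^{q\times p}$ for the Jacobian of the $t$-th inner iterate, the total derivative of the composite scalar map yields
\[
\frac{\partial f(x_k,y_k^D)}{\partial x_k} = \nabla_x f(x_k,y_k^D) + Z_D^\top \nabla_y f(x_k,y_k^D),
\]
so the whole task reduces to computing $Z_D$.

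First I would differentiate the GD recursion $y_k^t = y_k^{t-1} - \alpha\nabla_y g(x_k,y_k^{t-1})$ with respect to $x_k$. Treating $\nabla_y g(x_k, y_k^{t-1}(x_k))$ as a composite $\R^q$-valued function and applying the chain rule through both arguments produces the linear recursion
\[
Z_t = \big(I - \alpha\nabla_y^2 g(x_k,y_k^{t-1})\big) Z_{t-1} - \alpha\,\big(\nabla_x\nabla_y g(x_k,y_k^{t-1})\big)^\top ,
\]
with $Z_0 = 0$. This is an affine recursion $Z_t = A_{t-1}Z_{t-1} + b_{t-1}$ with $A_{t-1} = I - \alpha\nabla_y^2 g$ and $b_{t-1} = -\alpha(\nabla_x\nabla_y g)^\top$; unrolling it from $Z_0 = 0$ (a one-line induction on $t$) gives
\[
Z_D = -\alpha\sum_{t=0}^{D-1}\Big(\prod_{j=t+1}^{D-1}\big(I - \alpha\nabla_y^2 g(x_k,y_k^{j})\big)\Big)\big(\nabla_x\nabla_y g(x_k,y_k^{t})\big)^\top ,
\]
with the stated convention that the empty product equals $I$.

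Finally I would substitute $Z_D$ into the chain-rule expression. Transposing $Z_D$ and contracting against $\nabla_y f(x_k,y_k^D)$ produces exactly the claimed sum, once I invoke the structural fact that each Hessian factor $\nabla_y^2 g(x_k,y_k^{j})$ is symmetric: this makes every $A_j = I - \alpha\nabla_y^2 g$ symmetric, so the transpose of the ordered product $A_{D-1}\cdots A_{t+1}$ equals the same factors written in the reversed order $A_{t+1}\cdots A_{D-1}$, which is precisely the order $\prod_{j=t+1}^{D-1}$ appearing in the statement. I expect this bookkeeping—tracking the order of the noncommuting matrix factors through the transpose, and confirming that the symmetry of $\nabla_y^2 g$ reconciles it with the displayed formula while remembering that $\nabla_x\nabla_y g$ is \emph{not} symmetric—to be the only genuinely delicate point; the differentiation and the unrolling are routine.
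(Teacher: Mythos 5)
Your proof is correct and follows essentially the same route as the paper's: apply the chain rule to split off $\nabla_x f$, differentiate the inner GD recursion to obtain an affine recursion for the Jacobian of $y_k^t$ in $x_k$, and unroll it from $\partial y_k^0/\partial x_k = 0$ (justified, as you note, by the warm start being fixed data from the previous outer iteration). The only difference is notational: the paper works directly with the $p\times q$ ``transposed'' Jacobian $\frac{\partial y_k^t}{\partial x_k}$, so its recursion multiplies $(I-\alpha\nabla_y^2 g)$ on the right and no transpose or symmetry argument is needed, whereas your standard $q\times p$ layout requires the (correct) final step of transposing the ordered product and invoking symmetry of $\nabla_y^2 g$ to recover the stated factor order.
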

 \Cref{deter:gdform} shows that the differentiation involves the computations of second-order derivatives such as Hessian $ \nabla^2_y g(\cdot,\cdot)$. Since efficient Hessian-free methods  have been successfully deployed in the existing automatic differentiation tools, computing these second-order derivatives reduces to more efficient computations of Jacobian- and Hessian-vector products.

\section{Definitions and Assumptions}
Let $z=(x,y)$ denote all parameters. In this thesis, we focus on the following types of loss functions.
\begin{assum}\label{assum:geo}
The lower-level function $g(x,y)$ is $\mu$-strongly-convex w.r.t.~$y$ and the total objective function $\Phi(x)=f(x,y^*(x))$ is nonconvex w.r.t.~$x$.  
\vspace{-0.2cm} 
\end{assum}
Since $\Phi(x)$ is nonconvex, algorithms are expected to find an $\epsilon$-accurate stationary point defined as follows. 
\begin{definition}
We say $\bar x$ is an $\epsilon$-accurate stationary point for the objective function $\Phi(x)$ in~\cref{objective_deter} if $\|\nabla \Phi(\bar x)\|^2\leq \epsilon$, where $\bar x$ is the output of an algorithm.
\end{definition}
In order to compare the performance of different bilevel algorithms, we adopt the following metrics of  complexity. 

\begin{definition}\label{com_measure}
For a function $f(x,y)$ and a vector $v$, let $\mbox{\normalfont Gc}(f,\epsilon)$ be the number of the partial gradient $\nabla_x f$ or $\nabla_y f$, and let $\mbox{\normalfont JV}(g,\epsilon)$ and  $\mbox{\normalfont HV}(g,\epsilon)$ be the number of Jacobian-vector products $\nabla_x\nabla_y g (x,y)v$. 
  and  Hessian-vector products $\nabla_y^2g(x,y) v$. 
 \end{definition} 
We  take the following standard assumptions on the loss functions in~\cref{objective_deter}, which have been widely adopted in bilevel optimization~\cite{ghadimi2018approximation,ji2020convergence}.
\begin{assum}\label{ass:lip}
The loss function $f(z)$ and $g(z)$ satisfy
\begin{list}{$\bullet$}{\topsep=0.2ex \leftmargin=0.2in \rightmargin=0.in \itemsep =0.01in}
\item The function $f(z)$ is $M$-Lipschitz, i.e.,  for any $z,z^\prime$, $|f(z)-f(z^\prime)|\leq M\|z-z^\prime\|.$
\item $\nabla f(z)$ and $\nabla g(z)$ are $L$-Lipschitz, i.e., for any $z,z^\prime$, 
\vspace{-0.1cm}
\begin{align*}
\|\nabla f(z)-\nabla f(z^\prime)\|\leq& L\|z-z^\prime\|,\;\|\nabla g(z)-\nabla g(z^\prime)\|\leq L\|z-z^\prime\|.
\end{align*}
\end{list}
\vspace{-0.2cm}
\end{assum}
As shown in~\Cref{prop:grad}, the gradient of the objective function $\Phi(x)$ involves the second-order derivatives $\nabla_x\nabla_y g(z)$ and $\nabla_y^2 g(z)$. The following assumption imposes the Lipschitz conditions on such high-order derivatives, as also made in~\cite{ghadimi2018approximation}.
\begin{assum}\label{high_lip}
Suppose the derivatives $\nabla_x\nabla_y g(z)$ and $\nabla_y^2 g(z)$ are $\tau$- and $\rho$- Lipschitz, i.e.,
\begin{list}{$\bullet$}{\topsep=0.2ex \leftmargin=0.2in \rightmargin=0.in \itemsep =0.02in}
\item For any $z,z^\prime$, $\|\nabla_x\nabla_y g(z)-\nabla_x\nabla_y g(z^\prime)\| \leq \tau \|z-z^\prime\|$.
\item For any $z,z^\prime$, $\|\nabla_y^2 g(z)-\nabla_y^2 g(z^\prime)\|\leq \rho \|z-z^\prime\|$.
\end{list} 
\end{assum}

\section{Convergence for Bilevel Optimization}\label{main:result_deter}
We first characterize the convergence and complexity of AID-BiO.  Let $\kappa=\frac{L}{\mu}$ denote the condition number. 
\begin{theorem}[AID-BiO]\label{th:aidthem}
Suppose Assumptions~\ref{assum:geo},~\ref{ass:lip}, \ref{high_lip} hold. Define  a smoothness parameter $L_\Phi = L + \frac{2L^2+\tau M^2}{\mu} + \frac{\rho L M+L^3+\tau M L}{\mu^2} + \frac{\rho L^2 M}{\mu^3}=\Theta(\kappa^3)$, choose the stepsizes $\alpha\leq \frac{1}{L}$, $\beta=\frac{1}{8L_\Phi}$, and set the inner-loop iteration number $D\geq\Theta(\kappa)$ and the CG iteration number  $N\geq \Theta(\sqrt{\kappa})$, where the detailed forms of  $D,N$ can be found in \Cref{appen:aid-bio}. 
 Then, the outputs of AID-BiO satisfy
\begin{align*}
\frac{1}{K}\sum_{k=0}^{K-1}\| \nabla \Phi(x_k)\|^2 \leq \frac{64L_\Phi (\Phi(x_0) - \inf_x\Phi(x))+5\Delta_0}{K}, 
\end{align*}
where $\Delta_0=\|y_0-y^*(x_{0})\|^2 + \|v_{0}^*-v_0\|^2>0$.

In order to achieve an $\epsilon$-accurate stationary point, the complexities satisfy 
\begin{list}{$\bullet$}{\topsep=0.ex \leftmargin=0.2in \rightmargin=0.in \itemsep =0.01in}
\item Gradient: {\small$\mbox{\normalfont Gc}(f,\epsilon)=\mathcal{O}(\kappa^3\epsilon^{-1}), \mbox{\normalfont Gc}(g,\epsilon)=\mathcal{O}(\kappa^4\epsilon^{-1}).$}
\item Jacobian- and Hessian-vector: {\small$ \mbox{\normalfont JV}(g,\epsilon)=\mathcal{O}\left(\kappa^3\epsilon^{-1}\right), \mbox{\normalfont HV}(g,\epsilon)=\mathcal{O}\left(\kappa^{3.5}\epsilon^{-1}\right).$}
\end{list}
\end{theorem}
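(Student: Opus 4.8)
The plan is to run the standard descent analysis for an $L_\Phi$-smooth nonconvex objective driven by an \emph{inexact} gradient, and to control the inexactness (the hypergradient estimation error) by combining the contraction of the inner GD loop with the linear convergence of CG, while the warm-start initializations let me propagate the tracking errors backward through the outer loops. First I would establish that $\Phi$ is $L_\Phi$-smooth with $L_\Phi=\Theta(\kappa^3)$. Using~\Cref{prop:grad}, $\nabla\Phi$ is a composition of $\nabla_x f$, $\nabla_y f$, $\nabla_x\nabla_y g$ and the inverse Hessian $(\nabla_y^2 g)^{-1}$, all evaluated at $(x,y^*(x))$. Strong convexity gives $\|(\nabla_y^2 g)^{-1}\|\leq 1/\mu$ and the Lipschitz bound $\|y^*(x)-y^*(x')\|\leq\kappa\|x-x'\|$; combining these with~\Cref{ass:lip,high_lip} and the product/chain rules yields the Lipschitz constant of $\nabla\Phi$, whose dominant term scales as $\rho L^2 M/\mu^3=\Theta(\kappa^3)$, matching the stated $L_\Phi$.

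Next, with $x_{k+1}=x_k-\beta\widehat\nabla\Phi(x_k)$ and $\beta=1/(8L_\Phi)$, the descent lemma together with $\langle a,b\rangle=\tfrac12(\|a\|^2+\|b\|^2-\|a-b\|^2)$ gives, after collecting terms and using $L_\Phi\beta=1/8$,
\begin{align*}
\Phi(x_{k+1})\leq \Phi(x_k)-\frac{\beta}{2}\|\nabla\Phi(x_k)\|^2+\frac{\beta}{2}\|\widehat\nabla\Phi(x_k)-\nabla\Phi(x_k)\|^2-\frac{7\beta}{16}\|\widehat\nabla\Phi(x_k)\|^2,
\end{align*}
so everything reduces to summing the estimation error. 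I would split it through the intermediate quantity $\bar\nabla\Phi(x_k)$ built from the \emph{exact} solution $\tilde v_k$ of the linear system at $(x_k,y_k^D)$. The CG part $\|\widehat\nabla\Phi(x_k)-\bar\nabla\Phi(x_k)\|\leq L\|v_k^N-\tilde v_k\|$ is controlled by the linear convergence of CG on a $\kappa$-conditioned system, contributing a factor $(\tfrac{\sqrt\kappa-1}{\sqrt\kappa+1})^N$, so $N=\Theta(\sqrt\kappa)$ makes it small; the remaining part $\|\bar\nabla\Phi(x_k)-\nabla\Phi(x_k)\|\leq\Theta(\kappa)\|y_k^D-y^*(x_k)\|$ reduces to the inner-loop tracking error, which GD contracts as $(1-\alpha\mu)^D\|y_k^0-y^*(x_k)\|$, so $D=\Theta(\kappa)$ beats the $\Theta(\kappa)$ prefactor. (The bound $\|v_k^N-v_k^*\|$ picks up an extra $\|\tilde v_k-v_k^*\|$ term, itself controlled by the $y$-tracking error since the system coefficients are Lipschitz in $y$.)

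The warm starts $y_k^0=y_{k-1}^D$ and $v_k^0=v_{k-1}^N$ convert the tracking errors into recursions: using $\|y^*(x_k)-y^*(x_{k-1})\|\leq\kappa\beta\|\widehat\nabla\Phi(x_{k-1})\|$, the error $\|y_k^D-y^*(x_k)\|$ contracts but acquires a term proportional to $\|\widehat\nabla\Phi(x_{k-1})\|$, and similarly for the $v$-error. I would therefore define a Lyapunov function $V_k=\Phi(x_k)+c_y\|y_k^D-y^*(x_k)\|^2+c_v\|v_k^N-v_k^*\|^2$ and choose $c_y,c_v,D,N$ so that $V_k-V_{k+1}\geq\frac{\beta}{2}\|\nabla\Phi(x_k)\|^2$; telescoping then gives $\frac1K\sum_k\|\nabla\Phi(x_k)\|^2\leq\frac{2(V_0-\inf\Phi)}{\beta K}$, which produces the stated bound with the $\Phi(x_0)-\inf_x\Phi(x)$ term and the initial-error term $\Delta_0=\|y_0-y^*(x_0)\|^2+\|v_0^*-v_0\|^2$. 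Converting $K=\Theta(L_\Phi/\epsilon)=\Theta(\kappa^3\epsilon^{-1})$ with per-iteration costs $D=\Theta(\kappa)$ GD steps, $N=\Theta(\sqrt\kappa)$ CG Hessian-vector products, and $\mathcal{O}(1)$ gradients and one Jacobian-vector product yields $\mbox{\normalfont Gc}(f,\epsilon)=\mathcal{O}(\kappa^3\epsilon^{-1})$, $\mbox{\normalfont Gc}(g,\epsilon)=\mathcal{O}(\kappa^4\epsilon^{-1})$, $\mbox{\normalfont JV}(g,\epsilon)=\mathcal{O}(\kappa^3\epsilon^{-1})$, and $\mbox{\normalfont HV}(g,\epsilon)=\mathcal{O}(\kappa^{3.5}\epsilon^{-1})$.

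The hard part will be closing the coupled recursion in the last step: the tracking-error recursions feed back $\|\widehat\nabla\Phi(x_{k-1})\|$, which must itself be bounded through the descent inequality, so I must weight the Lyapunov terms and choose the contraction depths $D,N$ carefully enough that the feedback cross-terms (of order $\Theta(\kappa^4\beta^3)\|\widehat\nabla\Phi\|^2=\Theta(\kappa^{-5})\|\widehat\nabla\Phi\|^2$) are absorbed by the retained negative term $\frac{7\beta}{16}\|\widehat\nabla\Phi\|^2=\Theta(\kappa^{-3})\|\widehat\nabla\Phi\|^2$, leaving no residual scaling with $\|\nabla\Phi\|^2$. Making this bookkeeping produce the clean constant $64L_\Phi$ without degrading the rate is precisely where the warm-start idea earns the improved $\kappa$-dependence over~\cite{ghadimi2018approximation}.
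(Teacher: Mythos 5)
Your proposal is correct and takes essentially the same route as the paper's proof in \Cref{appen:aid-bio}: the same $L_\Phi$-smoothness constant, the same splitting of the hypergradient error through the exact solution of the linear system at $(x_k,y_k^D)$ with the CG and inner-GD contraction factors (the paper's \Cref{le:aidhy}), and the same warm-start recursion that feeds $\|\widehat\nabla\Phi(x_{k-1})\|^2$ back into the tracking errors (the paper's \Cref{le:bibibiss}). The only difference is bookkeeping — you close the coupled recursion with a weighted Lyapunov function, while the paper unrolls it and absorbs the geometric double sum $\sum_{k}\sum_{j<k}(1/2)^{k-1-j}\|\nabla\Phi(x_j)\|^2\le 2\sum_k\|\nabla\Phi(x_k)\|^2$ under the condition $(\Omega+2\Omega\beta L_\Phi)\,\delta_{D,N}\le 1/4$ — and both reduce to requiring $\delta_{D,N}$ times the feedback coefficient to be a small constant; note only that your margin estimate is optimistic, since the $v^*$-drift Lipschitz constant is $\Theta(\kappa^3)$ rather than $\Theta(\kappa^2)$ (the paper's $\kappa^2+\tfrac{2ML}{\mu^2}+\tfrac{2LM\kappa}{\mu^2}$), making the cross-term the same order $\Theta(\kappa^{-3})\|\widehat\nabla\Phi\|^2$ as the retained negative term, so absorption rests on the contraction $\delta_{D,N}$ supplied by the stated $D,N$ (with their log factors) rather than on an order-of-magnitude gap.
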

As shown in \Cref{tab:determinstic}, 
the complexities  $\mbox{\normalfont Gc}(f,\epsilon)$, $\mbox{\normalfont Gc}(g,\epsilon)$, $\mbox{\normalfont JV}(g,\epsilon) $ and $\mbox{\normalfont HV}(g,\epsilon)$ of our analysis improves that of \cite{ghadimi2018approximation} (eq.~(2.30) therein) by the order of $\kappa$, $\kappa\epsilon^{-1/4}$, $\kappa$ and $\kappa$.  Such an improvement is achieved by a refined analysis with a constant number of inner-loop steps, and by a warm start strategy to backpropagate the tracking errors $\|y_k^D-y^*(x_k)\|$  and $\|v_k^N-v^*_k\|$ to previous loops, as also demonstrated by our meta-learning experiments.  We next characterize the convergence and complexity performance of the ITD-BiO algorithm. \begin{theorem}[ITD-BiO]\label{th:determin}
Suppose Assumptions~\ref{assum:geo},~\ref{ass:lip}, and \ref{high_lip} hold. Define $L_\Phi $ as in \Cref{th:aidthem}, and choose $\alpha\leq \frac{1}{L}$,
 $\beta=\frac{1}{4L_\Phi}$ and $D\geq \Theta(\kappa\log\frac{1}{\epsilon})$, where the detailed form of $D$ can be found in \Cref{append:itd-bio}. Then, we have 
\begin{align*}
\frac{1}{K}\sum_{k=0}^{K-1}\| \nabla \Phi(x_k)\|^2 \leq \frac{16 L_\Phi (\Phi(x_0)-\inf_x\Phi(x))}{K} + \frac{2\epsilon}{3}.
\end{align*}
In order to achieve an $\epsilon$-accurate stationary point, the complexities satisfy 
\begin{list}{$\bullet$}{\topsep=0.ex \leftmargin=0.1in \rightmargin=0.in \itemsep =0.01in}
\item Gradient: {\small$\mbox{\normalfont Gc}(f,\epsilon)=\mathcal{O}(\kappa^3\epsilon^{-1}), \mbox{\normalfont Gc}(g,\epsilon)=\mathcal{\widetilde O}(\kappa^4\epsilon^{-1}).$}
\item Jacobian- and Hessian-vector complexity: {\small$ \mbox{\normalfont JV}(g,\epsilon)=\mathcal{\widetilde O}\big(\kappa^4\epsilon^{-1}\big), \mbox{\normalfont HV}(g,\epsilon)=\mathcal{\widetilde O}\big(\kappa^4\epsilon^{-1}\big).$}
\end{list}
\end{theorem}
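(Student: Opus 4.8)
The plan is to treat ITD-BiO as \emph{inexact} gradient descent on the $L_\Phi$-smooth objective $\Phi$, where the inexact gradient is the backpropagated estimate $\widehat\nabla\Phi(x_k)=\frac{\partial f(x_k,y^D_k(x_k))}{\partial x_k}$ whose analytic form is given in \Cref{deter:gdform}. The gradient of $\Phi$ is already Lipschitz with constant $L_\Phi=\Theta(\kappa^3)$ (the same $L_\Phi$ used for AID-BiO in \Cref{th:aidthem}), so the only genuinely new object to control is the estimation error $e_k:=\widehat\nabla\Phi(x_k)-\nabla\Phi(x_k)$. The whole argument then reduces to two pieces: (i) a geometric-decay bound on $\|e_k\|$ in the inner-loop count $D$, which lets the chosen $D\ge\Theta(\kappa\log\frac1\epsilon)$ drive $\|e_k\|^2$ below a small multiple of $\epsilon$; and (ii) a standard inexact-descent telescoping argument.

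For the error bound I would first establish linear convergence of the inner GD loop: since $g(x_k,\cdot)$ is $\mu$-strongly convex (\Cref{assum:geo}) and $L$-smooth with $\alpha\le 1/L$, one has $\|y_k^t-y^*(x_k)\|\le(1-\alpha\mu)^t\|y_k^0-y^*(x_k)\|$, and the warm-started initial gap $\|y_k^0-y^*(x_k)\|$ stays uniformly bounded using the $M$-Lipschitzness of $f$ and strong convexity of $g$, which keep $\nabla_y f$ and hence $y^*(\cdot)$ well-controlled. Then, comparing the analytic ITD form in \Cref{deter:gdform} with the true hypergradient in \Cref{prop:grad}, I would split $e_k$ into: (a) the error from evaluating each factor ($\nabla_x f$, $\nabla_x\nabla_y g$, $\nabla_y^2 g$, $\nabla_y f$) at the iterates $y_k^t$ rather than at $y^*(x_k)$, bounded via \Cref{ass:lip} and \Cref{high_lip} together with the geometric bound on $\|y_k^t-y^*(x_k)\|$; and (b) the truncation error between the partial sum $\alpha\sum_{t=0}^{D-1}\prod_{j=t+1}^{D-1}(I-\alpha\nabla_y^2 g)$ and the inverse Hessian $(\nabla_y^2 g)^{-1}$ implicit in $v_k^*$. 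Because $\|I-\alpha\nabla_y^2 g\|\le 1-\alpha\mu$, the partial matrix product is a truncated Neumann series whose tail is $\mathcal{O}((1-\alpha\mu)^D)$. Combining these, $\|e_k\|\le \mathrm{poly}(\kappa)\,(1-\alpha\mu)^{\Theta(D)}$, and since $\alpha\mu=\Theta(1/\kappa)$, the choice $D\ge\Theta(\kappa\log\frac1\epsilon)$ forces $\|e_k\|^2\le\frac{2\epsilon}{3}$ uniformly in $k$.

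For the descent step, with $x_{k+1}=x_k-\beta\widehat\nabla\Phi(x_k)$ and $\beta=\frac{1}{4L_\Phi}$, I would apply the smoothness inequality $\Phi(x_{k+1})\le\Phi(x_k)-\beta\langle\nabla\Phi(x_k),\widehat\nabla\Phi(x_k)\rangle+\frac{L_\Phi\beta^2}{2}\|\widehat\nabla\Phi(x_k)\|^2$, substitute $\widehat\nabla\Phi(x_k)=\nabla\Phi(x_k)+e_k$, and apply Young's inequality to the cross term. Using $L_\Phi\beta^2=\beta/4$ this yields $\Phi(x_{k+1})\le\Phi(x_k)-\frac{\beta}{4}\|\nabla\Phi(x_k)\|^2+c\,\beta\|e_k\|^2$ for an absolute constant $c$. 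Summing over $k=0,\dots,K-1$, dividing by $K$, and using $\frac{4}{\beta}=16L_\Phi$ gives exactly $\frac1K\sum_k\|\nabla\Phi(x_k)\|^2\le\frac{16L_\Phi(\Phi(x_0)-\inf_x\Phi(x))}{K}+\frac{2\epsilon}{3}$. For the complexity count, each outer step uses $D$ gradient evaluations of $g$ in the inner loop, $\Theta(D)$ Jacobian- and Hessian-vector products in backpropagating through the $D$-step path, and $\mathcal{O}(1)$ gradients of $f$; driving the first term below $\epsilon$ requires $K=\mathcal{O}(L_\Phi/\epsilon)=\mathcal{O}(\kappa^3\epsilon^{-1})$, so multiplying by $D=\widetilde\Theta(\kappa)$ gives $\mbox{Gc}(f,\epsilon)=\mathcal{O}(\kappa^3\epsilon^{-1})$ and $\mbox{Gc}(g,\epsilon)=\mbox{JV}(g,\epsilon)=\mbox{HV}(g,\epsilon)=\widetilde{\mathcal{O}}(\kappa^4\epsilon^{-1})$, matching \Cref{tab:determinstic}.

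The hard part will be piece (i), the geometric-decay bound on the ITD estimation error. Unlike AID-BiO, where the linear system is solved directly and the error splits cleanly into an inner-loop part and a CG part, the ITD estimate is a single backpropagated quantity whose dependence on the iterates is entangled through the matrix product $\prod_{j=t+1}^{D-1}(I-\alpha\nabla_y^2 g(x_k,y_k^j))$, in which \emph{each factor is evaluated at a different iterate} $y_k^j$. I therefore cannot directly compare against a fixed-Hessian inverse and instead expect to need a telescoping/perturbation argument that simultaneously bounds the deviation of the varying-Hessian product from the constant-Hessian Neumann partial sum and the deviation of that partial sum from the true inverse, both controlled by $\|y_k^j-y^*(x_k)\|$ and the contraction factor $1-\alpha\mu$. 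This is precisely where the extra $\log\frac1\epsilon$ factor (hence the $\widetilde{\mathcal{O}}$ in the ITD-BiO row of \Cref{tab:determinstic}, in contrast to AID-BiO) enters.
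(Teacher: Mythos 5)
Your proposal is correct, and its descent step, choice of constants (with $\beta=\tfrac{1}{4L_\Phi}$ giving a $\tfrac{\beta}{4}$ descent coefficient against a $\mathcal{O}(\beta)\|e_k\|^2$ error term), and complexity accounting coincide with the paper's. Where you genuinely diverge is in the key lemma bounding $\|\widehat\nabla\Phi(x_k)-\nabla\Phi(x_k)\|$. You work from the unrolled product form in \Cref{deter:gdform} and compare it against the true hypergradient of \Cref{prop:grad} by splitting into (a) evaluation errors of each factor at $y_k^t$ versus $y^*(x_k)$ and (b) the tail of a constant-Hessian Neumann series, then managing the resulting double sum of perturbations over $t$ and $j$. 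The paper never expands the product: it writes both $\widehat\nabla\Phi(x_k)$ and $\nabla\Phi(x_k)$ in chain-rule form $\nabla_x f+\frac{\partial y}{\partial x_k}\nabla_y f$, invokes the implicit-differentiation identity $\nabla_x\nabla_y g(x_k,y^*(x_k))+\frac{\partial y^*(x_k)}{\partial x_k}\nabla_y^2 g(x_k,y^*(x_k))=0$ at the optimum, and subtracts it from the differentiated GD recursion to obtain a one-step contraction for $\big\|\frac{\partial y_k^t}{\partial x_k}-\frac{\partial y^*(x_k)}{\partial x_k}\big\|$ with perturbation $\alpha\big(\tau+\frac{L\rho}{\mu}\big)\|y_k^{t-1}-y^*(x_k)\|$; a single telescoping then yields \Cref{prop:partialG}, in which your truncation error (b) shows up as the initial-condition term $(1-\alpha\mu)^D\big\|\frac{\partial y^*(x_k)}{\partial x_k}\big\|\le\frac{L}{\mu}(1-\alpha\mu)^D$ and your evaluation errors (a) as the accumulated perturbations. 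Both routes deliver the same $\mathrm{poly}(\kappa)\,(1-\alpha\mu)^{\Theta(D)}$ bound: the paper's recursion is more compact (one induction, no double sums), while yours is more transparent about the two error sources and parallels the AID analysis, where the inner-loop and linear-system errors separate cleanly. One point where you are actually more careful than the paper: you sketch why the warm-start gap $\|y_k^0-y^*(x_k)\|$ stays uniformly bounded (bounded hypergradient estimates from the $M$-Lipschitzness of $f$, plus $\frac{L}{\mu}$-Lipschitzness of $y^*(\cdot)$ from strong convexity), whereas the paper simply postulates $\|y_k^0-y^*(x_k)\|^2\le\Delta$ for a finite constant $\Delta$, which then enters the constant hidden in $D\geq\Theta(\kappa\log\frac{1}{\epsilon})$.
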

By comparing \Cref{th:aidthem} and \Cref{th:determin}, it can be seen that the complexities $\mbox{\normalfont JV}(g,\epsilon)$ and $\mbox{\normalfont HV}(g,\epsilon)$  of AID-BiO are better than those of ITD-BiO by the order of  $\kappa$ and $\kappa^{0.5}$, which implies that AID-BiO is more computationally and memory efficient than ITD-BiO, as verified in \Cref{fig:strfc100bievlcsaa}.

\section{Applications to Meta-Learning}
Consider the few-shot meta-learning problem with $m$ tasks $\{\mathcal{T}_i,i=1,...,m\}$
 sampled from distribution $P_\gT$. Each task $\mathcal{T}_i$ has a loss function $\gL(\phi,w_i;\xi)$ over each data sample $\xi$, where $\phi$ are the parameters of an embedding model shared by all tasks, and $w_i$ are the task-specific parameters. The goal of this framework is to  find good parameters $\phi$ for all tasks, and building on the embedded features, each task then adapts its own parameters $w_i$ by minimizing its loss.

The model training  takes a bilevel procedure. In the lower-level stage, building on the embedded features, the base learner of task $\mathcal{T}_i$ searches $w_i^*$ as the minimizer of its 
loss 
  over a training set $\gS_i$. In the upper-level stage, the meta-learner evaluates the minimizers $w_i^*,i=1,...,m$ on held-out test sets, and optimizes $\phi$ of the embedding model over all tasks. Let $\widetilde w=(w_1,...,w_m)$ denote all task-specific parameters. Then, the objective function is given by 
 \begin{align}\label{obj:meta}
 &\min_{\phi} \gL_{\gD} (\phi,\widetilde w^{*})=\frac{1}{m}\sum_{i=1}^m\underbrace{\frac{1}{|\gD_i|}\sum_{\xi\in\gD_i}\mathcal{L}(\phi,w_i^*;\xi)}_{\gL_{\gD_i}(\phi,w_i^*)\text{: task-specific upper-level loss}} \nonumber
 \\& \;\mbox{s.t.} \; \widetilde w^* = \argmin_{\widetilde w} \gL_{\gS} (\phi,\widetilde w)=\frac{\sum_{i=1}^m\gL_{\gS_i}(\phi,w_i)
 }{m},
 \end{align} 
 where $\gL_{\gS_i}(\phi,w_i)= \frac{1}{|\gS_i|}\sum_{\xi\in\gS_i}\mathcal{L}(\phi,w_i;\xi) + \gR(w_i)$ with a strongly-convex regularizer $\gR(w_i)$, e.g., $L^2$, and $\gS_i,\gD_i$ are the training and test datasets of task $\mathcal{T}_i$. Note that the lower-level problem is equivalent to solving each $w^*_i$ as a minimizer of the task-specific loss $\gL_{\gS_i}(\phi,w_i)$ for $i=1,...,m$.  In practice, $w_i$ often corresponds to the parameters of the last {\em linear} layer of a neural network and $\phi$ are the parameters of the remaining layers (e.g., $4$ convolutional layers in~\cite{bertinetto2018meta,ji2020convergence}), and hence the lower-level function is {\em strongly-convex} w.r.t. $\widetilde w$ and the upper-level function $\gL_{\gD} (\phi,\widetilde w^{*}(\phi))$ is generally nonconvex w.r.t. $\phi$. In addition, due to the small sizes of datasets $\gD_i$ and $\gS_i$ in few-shot learning, all updates for each task $\gT_i$ use {\em full gradient descent} without data resampling. 
 As a result,  AID-BiO and ITD-BiO in~\Cref{alg:main_deter} can be applied here.  
In some applications where the number $m$ of tasks is large,  it is more efficient to sample a batch $\gB$ of i.i.d.\ tasks from $\{\mathcal{T}_i,i=1,...,m\}$ at each meta (outer) iteration, and optimizes the mini-batch versions $ \gL_{\gD} (\phi,\widetilde w;\gB) = \frac{1}{|\gB|}\sum_{i\in\gB}\gL_{\gD_i}(\phi,w_i)$ and $\gL_{\gS} (\phi,\widetilde w;\gB)=\frac{1}{|\gB|}\sum_{i\in\gB}\gL_{\gS_i}(\phi,w_i)$ instead. 

We next  provide the convergence result of ITD-BiO for this case, and that of AID-BiO can be similarly derived.
\begin{theorem}\label{th:meta_learning}
Suppose Assumptions~\ref{assum:geo},~\ref{ass:lip} and \ref{high_lip} hold and suppose each task loss $\gL_{\gS_i}(\phi,\cdot)$ is $\mu$-strongly-convex. Choose the same parameters $\beta,D$ as in~\Cref{th:determin}. Then, we have
\begin{align*}
\frac{1}{K}\sum_{k=0}^{K-1}\mathbb{E}\| \nabla \Phi(\phi_k)\|^2 \leq& \mathcal{O} \left(\frac{1}{K}+\frac{\kappa^2}{|\gB|}\right).
\end{align*}
\end{theorem}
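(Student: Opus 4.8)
The plan is to treat the mini-batch task sampling as the \emph{only} source of stochasticity and to run the analysis as nonconvex stochastic-gradient descent on the smooth objective $\Phi(\phi)=\gL_\gD(\phi,\widetilde w^*(\phi))$, reusing the deterministic machinery behind \Cref{th:determin}. Since few-shot learning uses full gradient descent in the inner loop without data resampling, the inner loop is deterministic given $\phi_k$ and the sampled task; hence the mini-batch estimate $\widehat\nabla\gL_\gD(\phi_k,\widetilde w;\gB)$ is exactly the average over $\gB$ of per-task ITD estimates, and conditioned on $\phi_k$ it is an unbiased estimate of the full-batch ($m$-task) ITD estimate $\widehat\nabla\gL_\gD(\phi_k,\widetilde w)$. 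I would therefore split the total estimation error into two channels,
\[
\widehat\nabla\gL_\gD(\phi_k,\widetilde w;\gB) - \nabla\Phi(\phi_k) = \underbrace{\big(\widehat\nabla\gL_\gD(\phi_k,\widetilde w;\gB) - \widehat\nabla\gL_\gD(\phi_k,\widetilde w)\big)}_{\text{sampling noise}} + \underbrace{\big(\widehat\nabla\gL_\gD(\phi_k,\widetilde w) - \nabla\Phi(\phi_k)\big)}_{\text{inner-loop bias}},
\]
and bound each separately.

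First I would control the inner-loop bias uniformly over tasks. Each task loss $\gL_{\gS_i}(\phi,\cdot)$ is $\mu$-strongly-convex and $L$-smooth, so the geometric-contraction argument underlying \Cref{th:determin}, applied to the explicit ITD form of \Cref{deter:gdform}, shows that $\|\widehat\nabla\gL_\gD(\phi_k,\widetilde w)-\nabla\Phi(\phi_k)\|^2$ decays like $\mathcal{O}(\kappa^2(1-\alpha\mu)^{2D})$. The choice $D\geq\Theta(\kappa\log\frac{1}{\epsilon})$ inherited from \Cref{th:determin} then drives this channel below any target level, so it is absorbed into the $\mathcal{O}(1/K)$ term.

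Next I would bound the sampling variance. By \Cref{prop:grad} the per-task hypergradient is $\nabla\Phi_i(\phi)=\nabla_\phi\gL_{\gD_i}-\nabla_\phi\nabla_{w_i}\gL_{\gS_i}(\nabla^2_{w_i}\gL_{\gS_i})^{-1}\nabla_{w_i}\gL_{\gD_i}$; under \Cref{ass:lip} the first term is $\mathcal{O}(M)$ and the second is bounded by $\frac{LM}{\mu}=\mathcal{O}(M\kappa)$, giving $\|\nabla\Phi_i(\phi_k)\|=\mathcal{O}(\kappa)$. Because the tasks in $\gB$ are i.i.d., the variance of the mini-batch average satisfies $\mathbb{E}\|\widehat\nabla\gL_\gD(\phi_k,\widetilde w;\gB)-\widehat\nabla\gL_\gD(\phi_k,\widetilde w)\|^2=\mathcal{O}(\kappa^2/|\gB|)$, up to the same exponentially small inner-loop correction. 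Plugging both bounds into the standard descent inequality for $L_\Phi$-smooth $\Phi$ (with $L_\Phi$ as in \Cref{th:aidthem}) and stepsize $\beta=\Theta(1/L_\Phi)$ gives $\mathbb{E}\Phi(\phi_{k+1})\leq\mathbb{E}\Phi(\phi_k)-\frac{\beta}{2}\mathbb{E}\|\nabla\Phi(\phi_k)\|^2+\beta\,\mathcal{O}(\kappa^2/|\gB|)+\beta\cdot(\text{bias})$; telescoping over $k=0,\dots,K-1$ and dividing by $K\beta$ yields the claimed $\mathcal{O}(1/K+\kappa^2/|\gB|)$, where the $1/K$ term carries the $L_\Phi$ and initial-gap constants and the bias is made negligible by $D$.

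The hardest part, I expect, is the clean decoupling of the two channels: because the inner-loop bias is itself a per-task quantity, one must verify that conditioning on $\phi_k$ makes the task-sampling noise genuinely zero-mean relative to the \emph{truncated} (rather than exact) per-task hypergradient, and that the cross term between bias and variance does not inflate the $\kappa$-dependence. Equally delicate is the accounting that pins the variance constant at exactly $\Theta(\kappa^2)$ rather than a larger power of $\kappa$ that a loose bound on $(\nabla^2_{w_i}\gL_{\gS_i})^{-1}$ or on the higher-order Lipschitz terms in \Cref{deter:gdform} would produce.
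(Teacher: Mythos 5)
Your proposal follows essentially the same route as the paper's own proof: the same decomposition of the error into task-sampling noise (which is zero-mean conditioned on $\phi_k$ relative to the full-batch ITD estimate) plus inner-loop ITD bias, the same uniform $\mathcal{O}(M\kappa)$ bound on each per-task estimate yielding variance $\mathcal{O}(\kappa^2/|\gB|)$ over i.i.d. tasks (Lemma~\ref{le:bvarinace}), the same bias control via Lemma~\ref{prop:partialG} with $D=\Theta(\kappa\log\frac{1}{\epsilon})$, and the same $L_\Phi$-smooth descent inequality with telescoping. The only cosmetic difference is that the paper bounds the \emph{truncated} per-task ITD gradient directly from \Cref{deter:gdform} (giving $M+\frac{LM}{\mu}$) rather than bounding the exact hypergradient via \Cref{prop:grad} and correcting, which automatically resolves the truncated-versus-exact and cross-term concerns you flag at the end.
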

 \Cref{th:meta_learning} shows  that compared to the full batch case (i.e., without task sampling) in~\cref{obj:meta}, task sampling introduces a variance term $\mathcal{O}(\frac{1}{|\gB|})$ due to the stochastic nature of the algorithm.  

 \begin{figure*}[ht]
  \vspace{-2mm}
	\centering    
	\subfigure[dataset: miniImageNet  ]{\label{fig1:abilevel}\includegraphics[width=60mm]{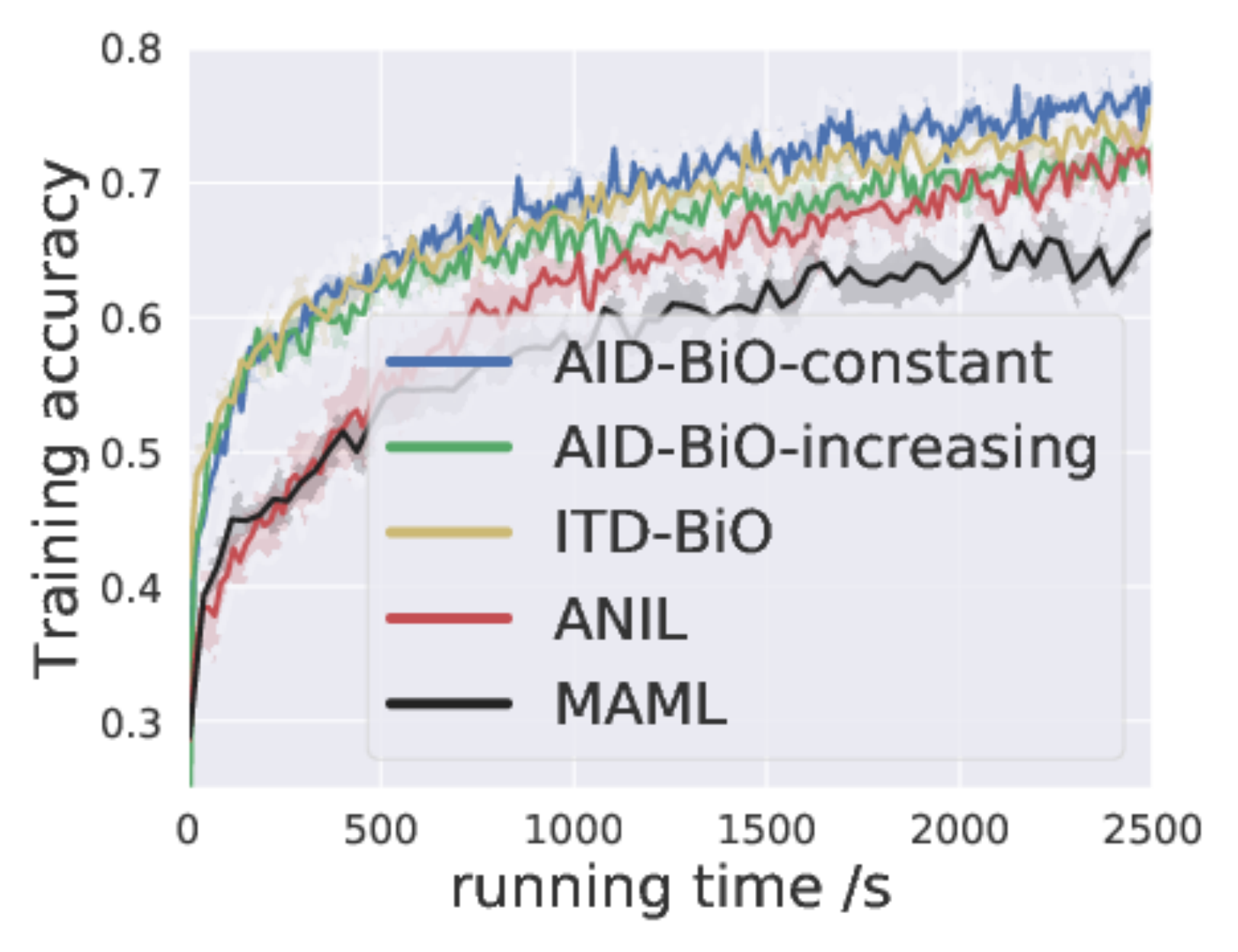}\includegraphics[width=60mm]{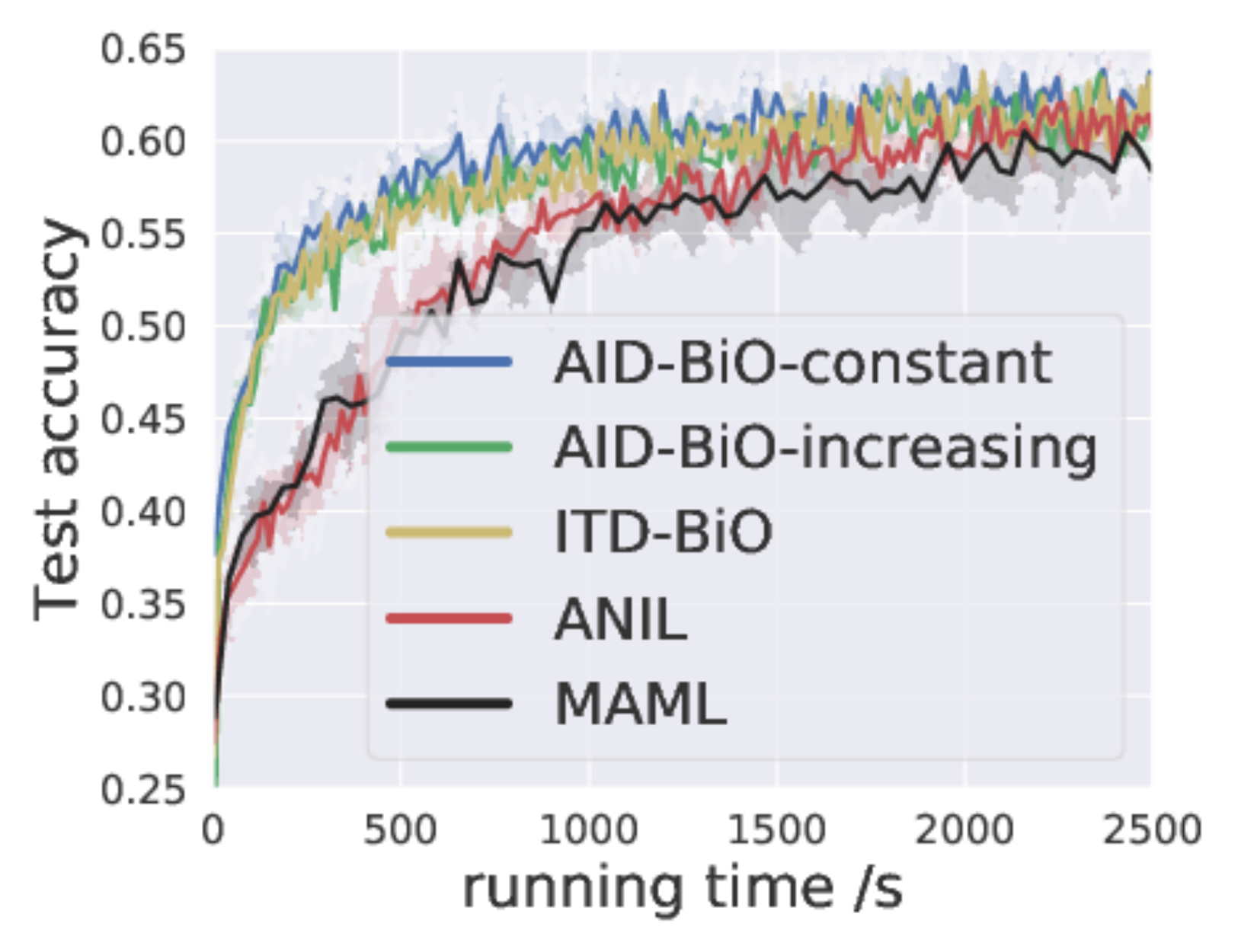}} 
	\subfigure[dataset: FC100]{\label{fig1:bbilevelssc}\includegraphics[width=60mm]{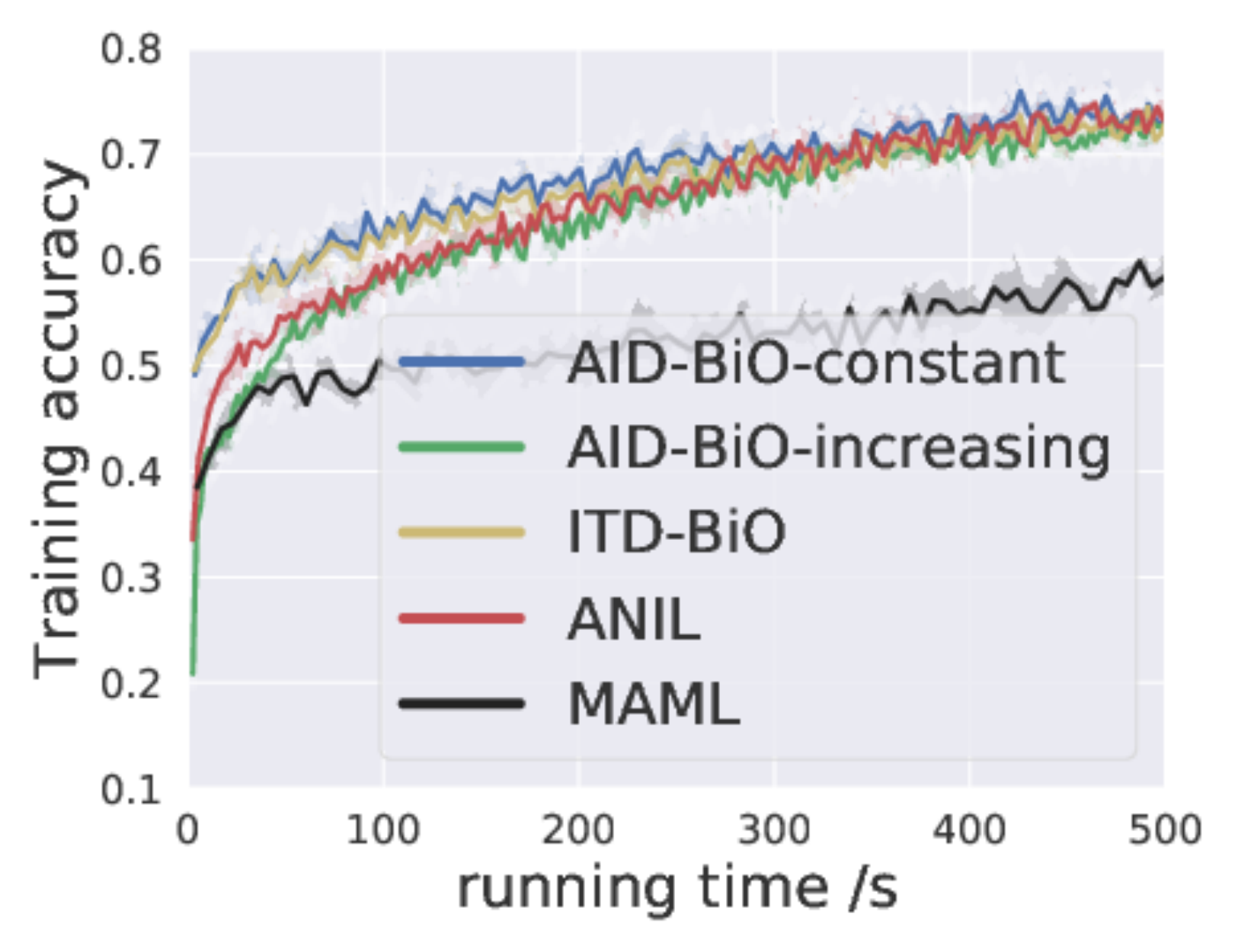}\includegraphics[width=60mm]{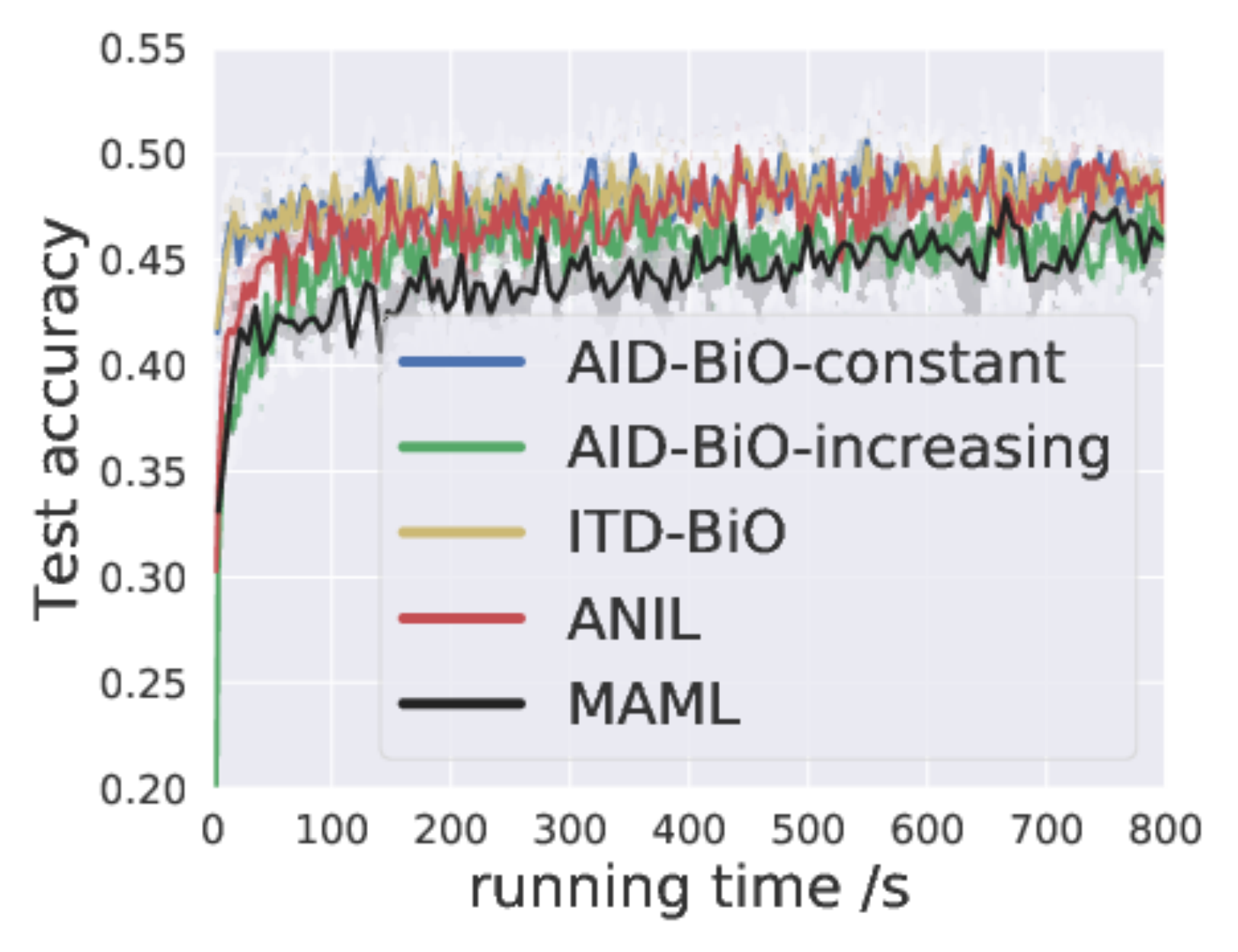}}  
	\vspace{-0.2cm}
	\caption{Comparison of various bilevel algorithms on meta-learning. For each dataset, left plot: training accuracy v.s. running time; right plot: test accuracy v.s. running time.}\label{fig:strfc100bievlcsaa}
	  \vspace{-0.2cm}
\end{figure*}

  \begin{figure*}[ht]
  \vspace{-2mm}
	\centering    
	\subfigure[$T=10$, miniImageNet dataset]{\label{fig1:ci}\includegraphics[width=61mm]{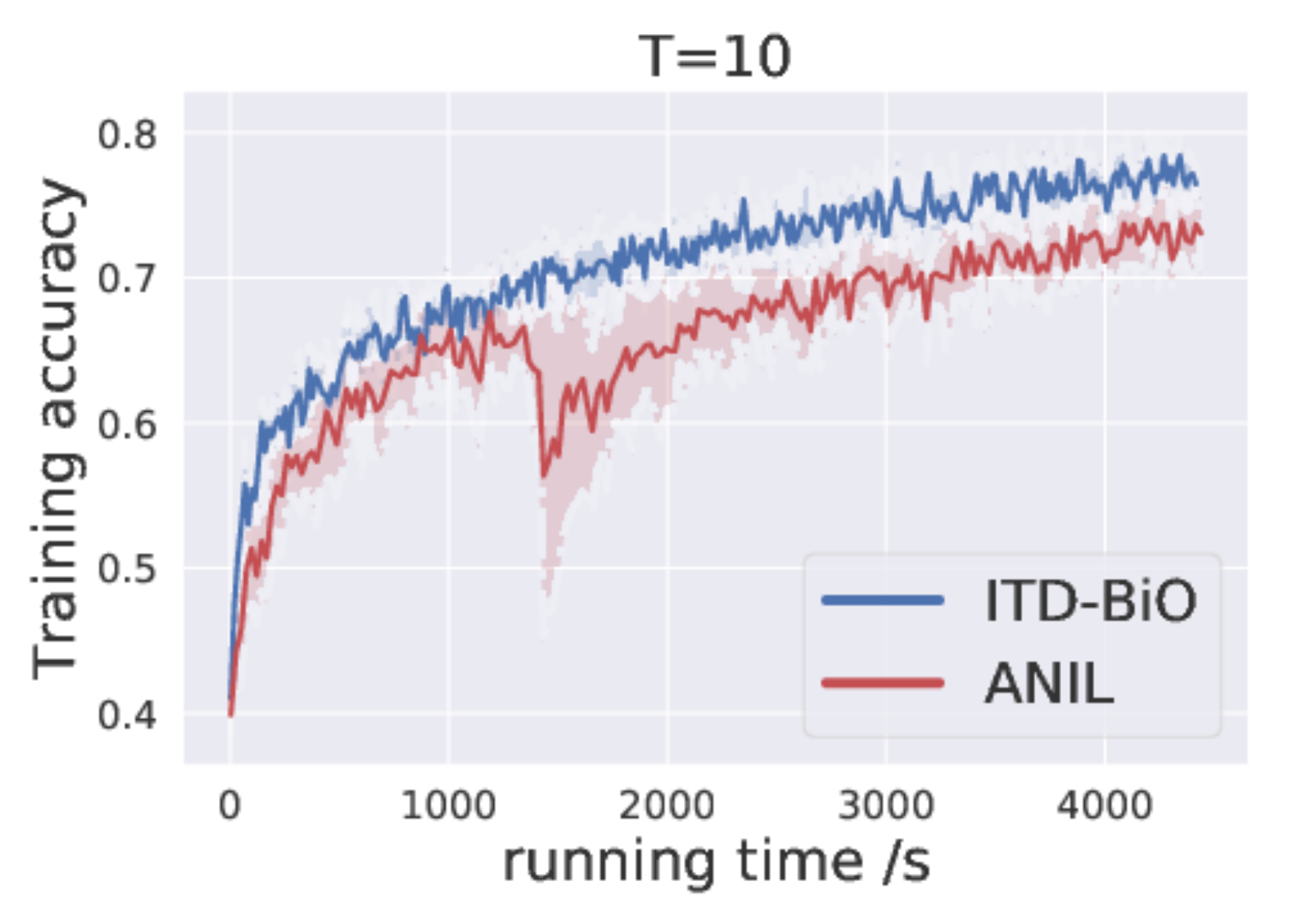}\includegraphics[width=61mm]{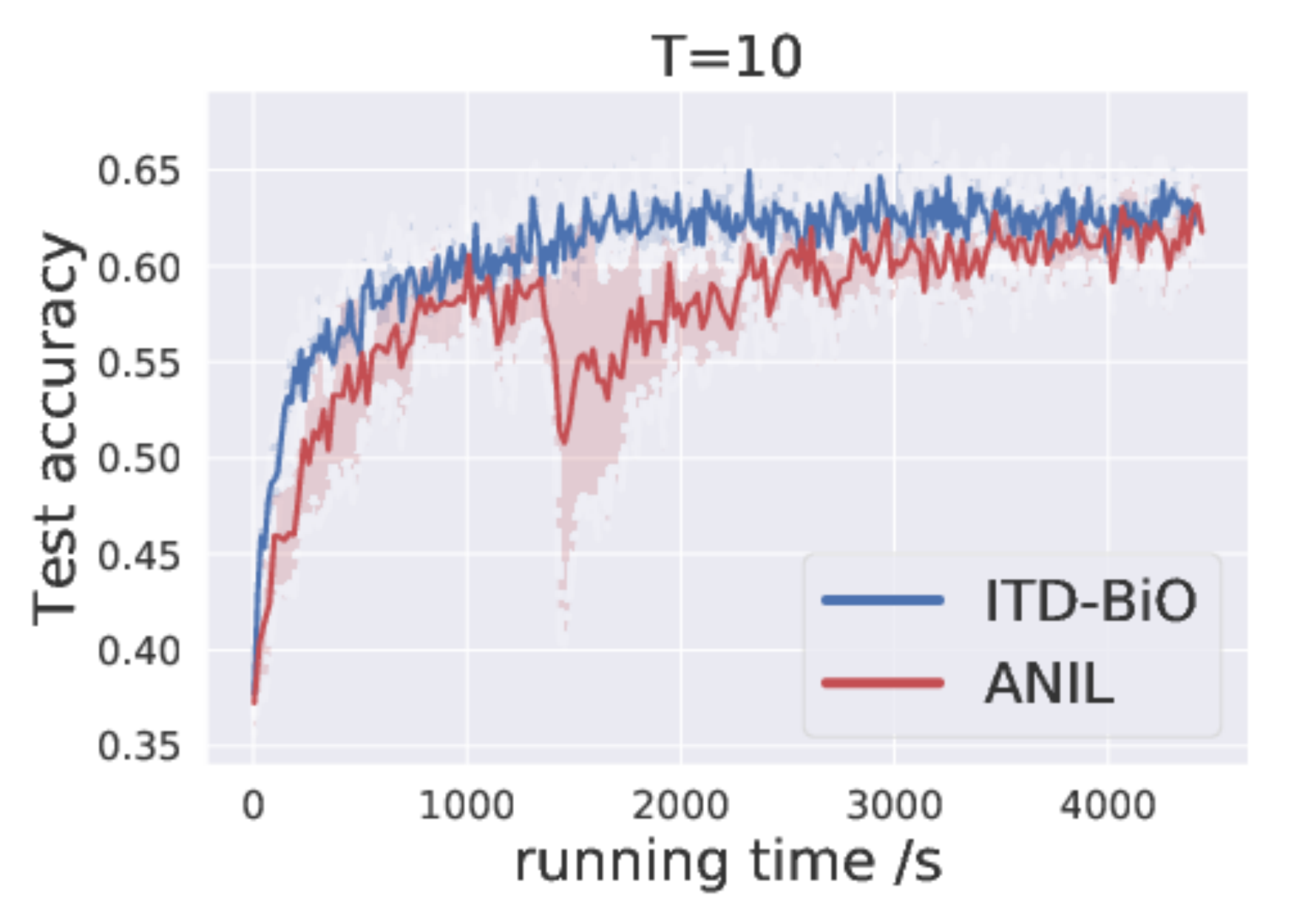}} 
	\subfigure[$T=20$, FC100 dataset]{\label{fig1:di}\includegraphics[width=60mm]{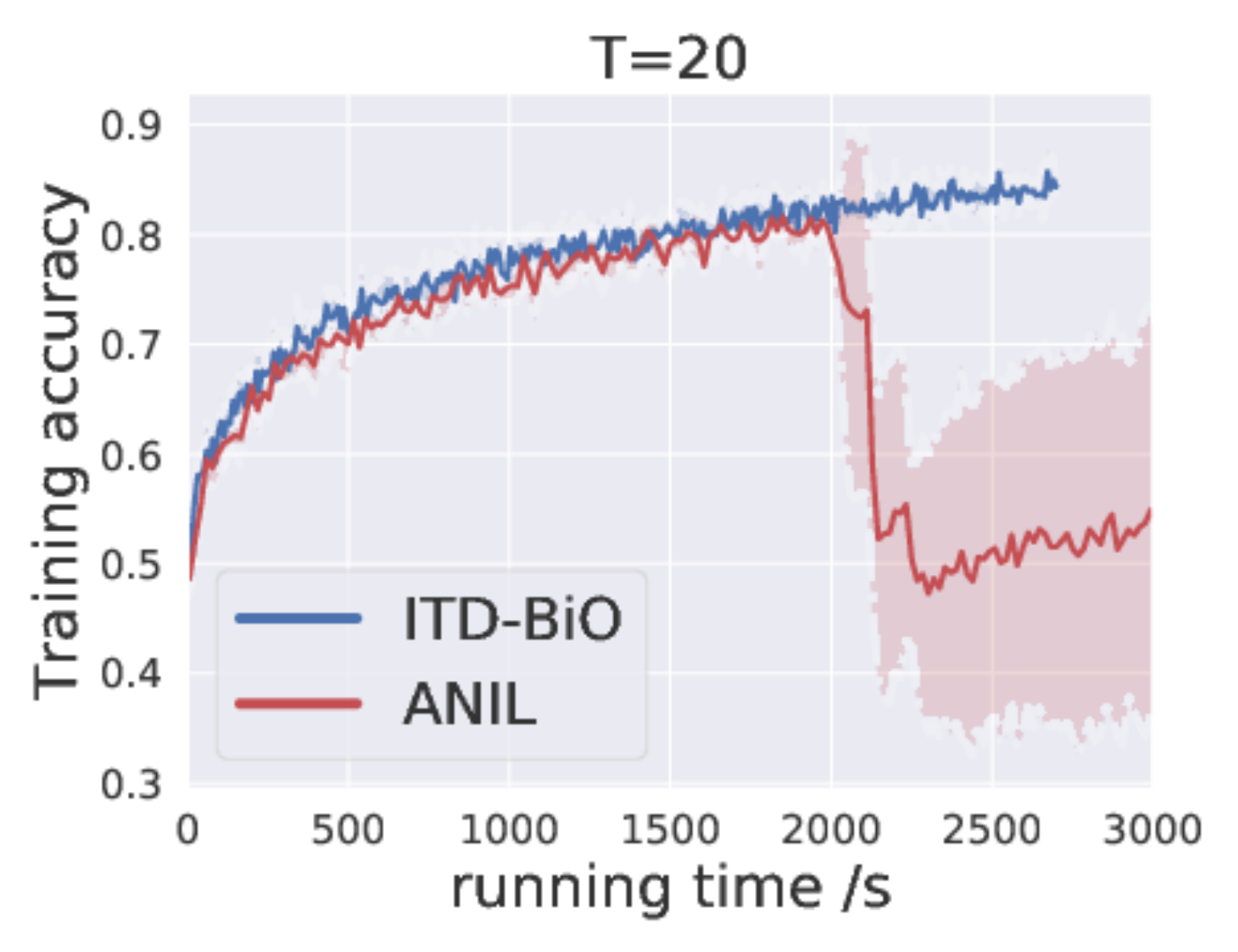}\includegraphics[width=60mm]{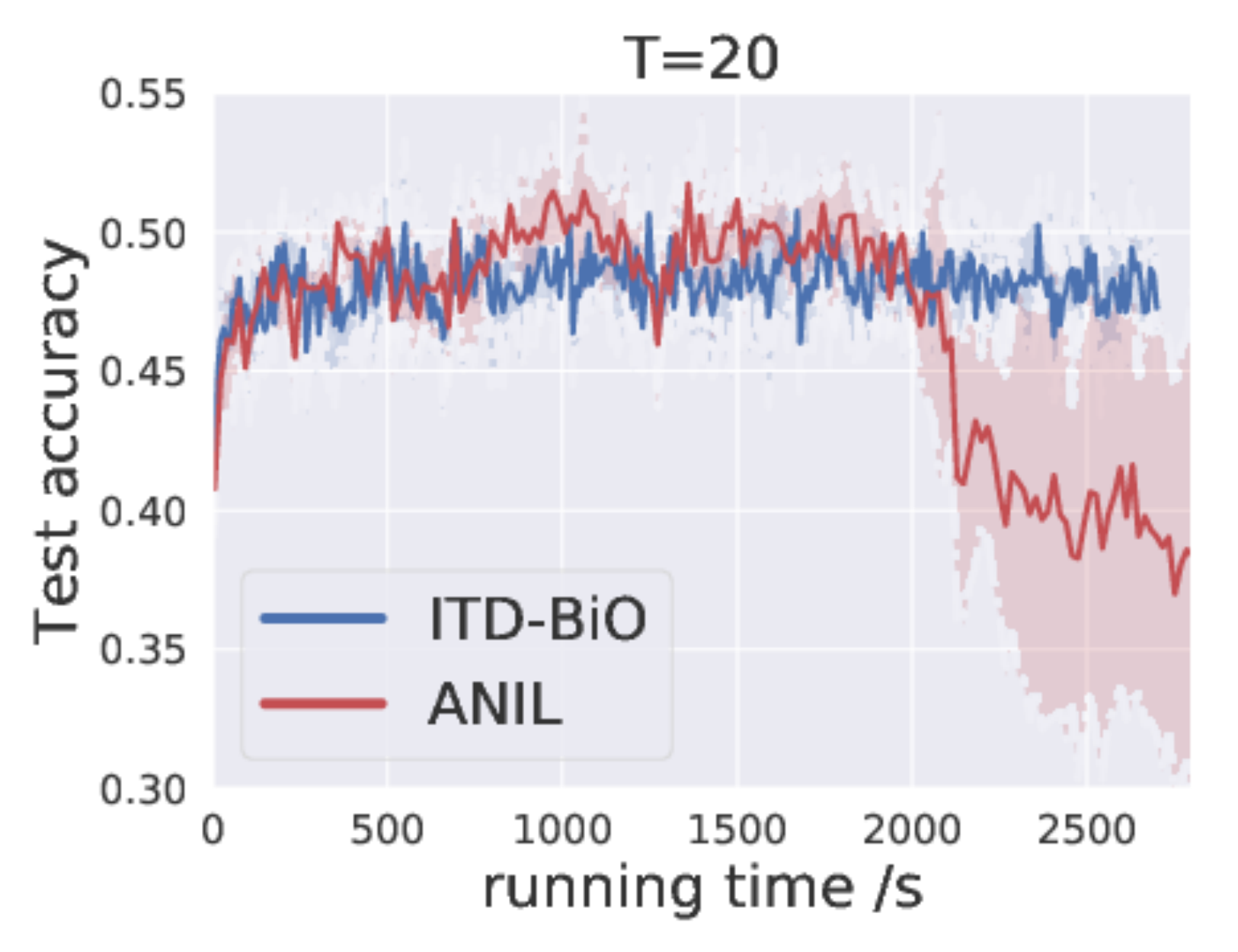}}  
	\vspace{-0.2cm}
	\caption{Comparison of ITD-BiO and ANIL with a relatively large inner-loop iteration number $T$.}\label{figure:resultlg}
	  \vspace{-0.2cm}
\end{figure*}

\subsection*{Experiments on Meta-Learning}
 To validate our theoretical results for deterministic bilevel optimization, we compare the performance among the following four algorithms: ITD-BiO, AID-BiO-constant (AID-BiO with a constant number of inner-loop steps as in our analysis), AID-BiO-increasing (AID-BiO with an increasing number of inner-loop steps under analysis in~\cite{ghadimi2018approximation}), and two popular meta-learning algorithms MAML\footnote{MAML consists of an inner loop for task  adaptation and an outer loop for meta initialization training.}~\cite{finn2017model} and ANIL\footnote{ANIL refers to almost no inner loop, which is an efficient MAML variant with task adaption on the last-layer of parameters.}~\cite{raghu2019rapid}. We conduct experiments over a 5-way 5-shot task on two datasets: FC100 and miniImageNet. The results are averaged over 10 trials with different random seeds. We provide the model architectures and hyperparameter settings in~\Cref{appen:meta_learning}.

It can be seen from \Cref{fig:strfc100bievlcsaa} that for both the miniImageNet and FC100 datasets, AID-BiO-constant converges  faster  than AID-BiO-increasing in terms of both the training accuracy and test accuracy, and achieves a better final test accuracy than ANIL and MAML. This demonstrates the superior improvement of our developed analysis over existing analysis in~\cite{ghadimi2018approximation} for AID-BiO algorithm.  
Moreover,  it can be observed that AID-BiO is slightly faster than ITD-BiO in terms of  the training accuracy and test accuracy. This is in consistence with our theoretical results. 

We also compare the robustness between the bilevel optimizer ITD-BiO (AID-BiO performs similarly to ITD-BiO in terms of the convergence rate) and ANIL when  the number $T$ (i.e., $D$ in \Cref{alg:main_deter})  of inner-loop steps is relatively large. 
It can be seen from~\Cref{figure:resultlg}  that when the number of inner-loop steps is large, i.e., $T=10$ for miniImageNet and $T=20$ for FC100, the bilevel optimizer ITD-BiO converges stably with a small variance, whereas ANIL suffers from a sudden descent at 1500s on miniImageNet  and even diverges after 2000s on FC100. 

\section{Summary of Contributions}
In this chapter, we develop a general and enhanced convergence rate analysis for AID- and ITD-based bilevel optimization algorithm for the nonconvex-strongly-convex bilevel problems. Our results also provide the theoretical guarantee for various bilevel optimizers in meta-learning. Our experiments validate our theoretical results. We anticipate that the convergence rate analysis that we develop will be useful for analyzing other  bilevel optimization problems with different loss geometries. 

\chapter{Acceleration Algorithms for Bilevel Optimization}\label{chp_acc_bilevel}

In this chapter,  we develop novel acceleration algorithms for bilevel optimization. All technical proofs for the results in this chapter are provided in \Cref{appendix:acc_bilevel}. 

\section{Bilevel Problem Class}


In this section, we introduce the problem class we are interested in. First, we suppose functions $f(x,y)$ and $g(x,y)$ in~\cref{objective_deter} satisfy the following standard smoothness property. 
\begin{assum}\label{fg:smooth}
The outer-level function $f$ satisfies, for $\forall x_1,x_2,x\in\mathbb{R}^p$ and $y_1,y_2,y\in\mathbb{R}^q$, there exist constants $L_x,L_{xy},L_y\geq 0$ such that
 \begin{align}\label{def:first}
\| \nabla_x  f(x_1,y)-\nabla_x f(x_2,y)\| \leq &L_x \|x_1-x_2\|, \nonumber
\\\| \nabla_x  f(x,y_1)-\nabla_x f(x,y_2)\| \leq& L_{xy} \|y_1-y_2\|, \nonumber
\\\| \nabla_y  f(x_1,y)-\nabla_y f(x_2,y)\| \leq &L_{xy} \|x_1-x_2\|, \nonumber
\\\| \nabla_y  f(x,y_1)-\nabla_y f(x,y_2)\| \leq &L_{y} \|y_1-y_2\|.
\end{align}
The inner-level function $g$ satisfies that, there exist $\widetilde L_{xy},\widetilde L_{y}\geq 0$ such that 
\begin{align}\label{df:sec} 
\| \nabla_y  g(x_1,y)-\nabla_y g(x_2,y)\| \leq &\widetilde L_{xy} \|x_1-x_2\|, \nonumber
\\\| \nabla_y  g(x,y_1)-\nabla_y g(x,y_2)\| \leq &\widetilde L_{y} \|y_1-y_2\|.
\end{align}
\end{assum}
The hypergradient $\nabla \Phi(x)$ plays an important role for designing bilevel optimization algorithms. The computation of $\nabla\Phi(x)$ involves Jacobians $\nabla_x\nabla_y g(x,y)$ and Hessians $\nabla_y^2 g(x,y)$. In this these, we are interested in the following inner-level problem with general Lipschitz continuous  Jacobians and Hessians, as adopted by~\cite{ghadimi2018approximation,ji2020bilevel,hong2020two}. For notational convenience, let $z:=(x,y)$ denote both variables.  
\begin{assum}\label{g:hessiansJaco}
There exist constants $\rho_{xy},\rho_{yy}\geq 0$ such that for $\forall\,(z_1,z_2)\in\mathbb{R}^p \times \mathbb{R}^q$, 
{\small
\begin{align}\label{def:three}
\|\nabla_x\nabla_y g(z_1)-\nabla_x\nabla_y g(z_2)\| \leq \rho_{xy}\|z_1-z_2\|,\;\; \|\nabla^2_y g(z_1)-\nabla_y^2 g(z_2)\| \leq \rho_{yy}\|z_1-z_2\|. 
\end{align} }
\end{assum} 
In this these, we study the following two classes of bilevel optimization problems. 
\begin{definition}[Bilevel Problem Classes]\label{de:pc}
Suppose $f$ and $g$ satisfy Assumptions~\ref{fg:smooth},~\ref{g:hessiansJaco} and there exists a constant $B>0$ such that  $\|x^*\|=B$, where $x^*\in\argmin_{x\in\mathbb{R}^p}\Phi(x)$.
We define the following two classes of bilevel problems under different geometries.     
\begin{list}{$\bullet$}{\topsep=0.1in \leftmargin=0.15in \rightmargin=0.1in \itemsep =0.01in}
\item {\bf Strongly-convex-strongly-convex class $\mathcal{F}_{scsc}:$}  $\Phi(\cdot)$ is $\mu_x$-strongly-convex and $g(x,\cdot)$ is $\mu_y$-strongly-convex. 
\item {\bf Convex-strongly-convex class $\mathcal{F}_{csc}:$}  $\Phi(\cdot)$ is convex and $g(x,\cdot)$ is $\mu_y$-strongly-convex. 
\end{list}
\end{definition}
A simple but important subclass of the bilevel problem class in \Cref{de:pc} includes the following quadratic inner-level functions $g(x,y)$.
%
\begin{align}\label{quadratic_case}
(\text{Quadratic $g$ subclass:}) \quad g(x,y) = \frac{1}{2} y^T H y +  x^T J y +b^Ty + h(x),   
\end{align}
where the Hessian $H$ and the Jacobian $J$ satisfy $ H \preceq \widetilde L_{y} I$ and $J \preceq \widetilde L_{xy} I$ for $\forall x\in\mathbb{R}^p$ and $\forall y\in\mathbb{R}^q$. 
Note that the above quadratic subclass also covers a large collection of applications such as few-shot meta-learning with shared embedding model~\cite{bertinetto2018meta} and biased regularization in hyperparameter optimization~\cite{grazzi2020iteration}. 

\section{Complexity Measures}
We introduce the criterion for measuring the computational complexity of bilevel optimization algorithms. Note that the updates of $x$ and $y$ of bilevel algorithms involve computing gradients, Jacobian- and Hessian-vector products. In practice, it has been shown in~\cite{griewank1993some,rajeswaran2019meta} that the time and memory cost for computing a  Hessian-vector product $\nabla^2f(\cdot) v $ (similarly for a Jacobian-vector product) via automatic differentiation (e.g., the widely-used reverse mode in PyTorch or TensorFlow)  is no more than a (universal) constant order (e.g., usually $2$-$5$ times) over the cost for computing gradient $\nabla f(\cdot)$. For this reason,  we take the following complexity measures.    
\begin{definition}[Complexity Measure]\label{complexity_measyre}
The total complexity $\mathcal{C}_{\text{\normalfont sub}}(\mathcal{A},\epsilon)$ of a bilevel optimization algorithm $\mathcal{A}$ to find a point $\bar x$ such that the suboptimality gap $f(\bar x)-\min_xf(x)\leq \epsilon$  is given by 
$\mathcal{C}_{\text{\normalfont sub}}(\mathcal{A},\epsilon) = \tau (n_J+n_H) + n_G$, 
where $n_J,n_ H$ and $n_G$ are the total numbers of Jacobian- and Hessian-vector product, and gradient evaluations, and $\tau>0$ is a universal constant.  Similarly, we define $\mathcal{C}_{\text{\normalfont norm}}(\mathcal{A},\epsilon)= \tau (n_J+n_H) + n_G$ as the complexity to find a point $\bar x$ such that the gradient norm $\|\nabla f(\bar x)\|\leq \epsilon$.
\end{definition}

\section{Accelerated Bilevel Optimization Algorithm: AccBiO} \label{upper_withoutB}


As shown in \Cref{alg:bioNoBG}, we propose a new accelerated algorithm named AccBiO for bilevel optimization.  
At the beginning of each outer iteration, we run $N$ steps of accelerated gradient descent (AGD) to get $y_k^N$ as an approximate of $y_k^*=\argmin_y g(x_k,y)$. Then, based on the inner-level output $y_k^N$, we construct a hypergradient estimate via $G_k:= \nabla_x f(x_k,y_k^N) -\nabla_x \nabla_y g(x_k,y_k^N)v_k^M$, where $v_k^M$ is 
  the output of an $M$-step heavy ball method with stepsizes $\eta$ and $\theta$ for solving a quadratic problem as shown in line $7$. Finally, as shown in lines $8$-$9$, we update the variables $z_{k}$ and $x_k$  using  Nesterov's momentum acceleration scheme~\cite{nesterov2018lectures} over the estimated hypergradient $G_k$.
  Next, we analyze  the convergence and complexity performance of AccBiO for the two bilevel optimization classes $\mathcal{F}_{scsc}$ and $\mathcal{F}_{csc}$ in \Cref{de:pc}.

\begin{algorithm}[t]
	\caption{Accelerated Bilevel Optimization (AccBiO) Algorithm} 
	\small
	\label{alg:bioNoBG}
	\begin{algorithmic}[1]
		\STATE {\bfseries Input:}  Initialization $ z_0=x_0=y_0=0$, parameters  $\lambda$ and $\theta$ 
		\FOR{$k=0,1,...,K$}
		\STATE{Set $y_k^0 = 0 $ as initialization}
		\FOR{$t=1,....,N$}
		\STATE{
		
		\vspace{-0.45cm} 
		\begin{align*}
		 \quad y_k^{t} &= s_k^{t-1} - \frac{1}{\widetilde L_y} \nabla_y g(x_k,s_k^{t-1}), \; s_k^{t} = \frac{2\sqrt{\kappa_y}}{\sqrt{\kappa_y}+1} y_k^{t} - \frac{\sqrt{\kappa_y}-1}{\sqrt{\kappa_y}+1} y_k^{t-1}.
		\end{align*}
		\vspace{-0.7cm} }
		\ENDFOR
                 \STATE{ \textit {Hypergradient computation}: \\
               \hspace{0.4cm}1) Get $v_k^M$ after running $M$ steps of heavy-ball method $$v_k^{t+1} = v_k^t-\lambda\nabla Q(v_k^t) +\theta(v_k^t-v_k^{t-1})$$ \\ \hspace{0.7cm} with initialization $v_k^{0}=v_k^{1}=0$ over \vspace{-0.2cm} 
               \begin{align*}
            \min_v Q(v):=\frac{1}{2}v^T\nabla_y^2 g(x_k,y_k^N) v - v^T
\nabla_y f( x_k,y^N_k)
\end{align*}\vspace{-0.4cm}\\
    \hspace{0.4cm}2) Compute $\nabla_x \nabla_y g(x_k,y_k^N)v_k^M $ via automatic differentiation; \\
    \vspace{0.1cm}
    \hspace{0.4cm}3) compute $G_k:= \nabla_x f(x_k,y_k^N) -\nabla_x \nabla_y g( x_k,y_k^N)v_k^M.$ 
              }
               \vspace{0.1cm}
               \STATE{Update $z_{k+1}=x_k -\frac{1}{L_\Phi} G_k$}
               \vspace{0.1cm}
               \STATE{Update $x_{k+1}=\Big(1+\frac{\sqrt{\kappa_x}-1}{\sqrt{\kappa_x}+1}\Big)z_{k+1} - \frac{\sqrt{\kappa_x}-1}{\sqrt{\kappa_x}+1} z_k$}                          
		\ENDFOR
	\end{algorithmic}
	\end{algorithm}

\section{Convergence Analysis for AccBiO}\label{sec:caoaccsas}	
We first consider the strongly-convex-strongly-convex setting, where $\Phi(x)$ is $\mu_x$-strongly-convex and $g(x,\cdot)$ is $\mu_y$-strongly-convex. 
 The following theorem provides a theoretical performance guarantee for AccBiO. Recall $x^*=\argmin_{x}\Phi(x)$.
\begin{theorem}\label{upper_srsr_withnoB}  
Suppose that $(f,g)$ belong to the strongly-convex-strongly-convex class $\mathcal{F}_{scsc}$ in \Cref{de:pc}. Choose stepsizes {\small $\lambda=\frac{4}{(\sqrt{\widetilde L_y}+\sqrt{\mu_y})^2}$} and {\small $\theta=\max\big\{\big(1-\sqrt{\lambda\mu_y}\big)^2,\big(1-\sqrt{\lambda\widetilde L_y}\big)^2\big\}$} for the heavy-ball method.  Let $\kappa_y=\frac{\widetilde L_y}{\mu_y}$ be the condition number for the inner-level function $g(x,\cdot)$ and $L_\Phi =\Theta\big(\frac{1}{\mu_y^2}+\big(\frac{ \rho_{yy}}{\mu_y^3} +  \frac{\rho_{xy}}{\mu_y^2}\big)\big(\Delta^*_{\text{\normalfont\tiny SCSC}}+\frac{\sqrt{\epsilon}}{\sqrt{\mu_x}\mu_y}\big) \big) $ be the smoothness parameter of the objective function $\Phi(\cdot)$, where $\Delta^*_{\text{\normalfont\tiny SCSC}}=\|\nabla_y f( x^*,y^*(x^*))\|+\frac{\|x^*\|}{\mu_y}+\frac{\sqrt{\Phi(0)-\Phi(x^*)}}{\sqrt{\mu_x}\mu_y}$. Then, we have 
\begin{align*}
\Phi(z_K)- \Phi(x^*)\leq  \Big(1 -\frac{1}{\sqrt{\kappa_x}} \Big)^{K}(\Phi(0) -\Phi(x^*)+\frac{\mu_x}{2} \|x^*\|^2) +\frac{\epsilon}{2},
\end{align*}
where $\kappa_x=\frac{L_\Phi}{\mu_x}$ is the condition number for $\Phi(\cdot)$. To achieve $\Phi(z_K)- \Phi(x^*)<\epsilon$, the complexity satisfies 
\begin{align}\label{mybabasohotssca1}
\mathcal{C}_{\text{\normalfont sub}}(\mathcal{A},\epsilon)\leq \mathcal{\widetilde O}\bigg(\sqrt{\frac{\widetilde L_y}{\mu_x\mu_y^{3}}}+\Big(\sqrt{\frac{ \rho_{yy}\widetilde L_y}{\mu_x\mu_y^{4}} }+ \sqrt{ \frac{\rho_{xy}\widetilde L_y}{\mu_x\mu_y^{3}}}\Big)\sqrt{\Delta^*_{\text{\normalfont\tiny SCSC}}}\bigg).
\end{align}
\end{theorem}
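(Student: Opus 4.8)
The plan is to treat AccBiO as an inexact accelerated gradient method applied to the $\mu_x$-strongly-convex objective $\Phi$, where the only source of inexactness is the replacement of the true hypergradient $\nabla\Phi(x_k)$ (whose closed form is given in \Cref{prop:grad}) by the computable estimate $G_k$. Four ingredients must be combined: (i) a bound on the per-iteration error $\|G_k-\nabla\Phi(x_k)\|$; (ii) the smoothness constant $L_\Phi$ of $\Phi$; (iii) an induction guaranteeing that all iterates stay bounded, so that (i) and (ii) remain valid along the trajectory; and (iv) the inexact-oracle convergence of Nesterov's scheme.

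First I would decompose the hypergradient error. Writing $\widetilde v_k$ for the exact minimizer of the quadratic $Q$ built with $y_k^N$ and setting $\widehat G_k := \nabla_x f(x_k,y_k^N) - \nabla_x\nabla_y g(x_k,y_k^N)\widetilde v_k$, split
$$\|G_k-\nabla\Phi(x_k)\|\le \underbrace{\|G_k - \widehat G_k\|}_{\text{heavy-ball error}} + \underbrace{\|\widehat G_k-\nabla\Phi(x_k)\|}_{\text{inner AGD error}}.$$
The second term is controlled by $\|y_k^N-y^*(x_k)\|$ through the Lipschitz constants $L_x,L_{xy},\rho_{xy},\rho_{yy}$ together with the a priori bound $\|v_k^*\|\le\|\nabla_y f\|/\mu_y$; since $g(x_k,\cdot)$ is $\mu_y$-strongly-convex and $\widetilde L_y$-smooth, the $N$-step AGD from $y_k^0=0$ gives $\|y_k^N-y^*(x_k)\|\le (1-1/\sqrt{\kappa_y})^{N/2}\,\mathcal{O}(\|y^*(x_k)\|)$. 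The first term is the heavy-ball error on a quadratic whose Hessian $\nabla_y^2 g(x_k,y_k^N)$ has spectrum in $[\mu_y,\widetilde L_y]$; with the stated $\lambda,\theta$ the heavy-ball iteration contracts at rate $1-1/\sqrt{\kappa_y}$, so $\|v_k^M-\widetilde v_k\|$ also decays geometrically in $M$. Hence choosing $N,M=\Theta(\sqrt{\kappa_y}\log(1/\epsilon))$ drives $\|G_k-\nabla\Phi(x_k)\|$ below any desired tolerance, \emph{provided} $\|x_k\|$ and $\|y^*(x_k)\|$ remain bounded.

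The crux, which I expect to be the main obstacle, is closing this boundedness requirement by induction. Both the error bound above and the smoothness constant $L_\Phi$ (whose form aggregates $\|\nabla_y f(x^*,y^*(x^*))\|$, $\|x^*\|/\mu_y$, and the optimality gap through $\Delta^*_{\text{\normalfont\tiny SCSC}}$) depend on quantities along the trajectory. I would adopt the induction hypothesis that $\Phi(z_k)-\Phi(x^*)$ is controlled by the target bound for every index up to $k$; $\mu_x$-strong convexity then converts this into a uniform bound on $\|x_k-x^*\|$ and hence on $\|x_k\|$, which bounds $\|y^*(x_k)\|$ (Lipschitzness of $y^*$), and thereby $\|v_k^*\|$ and the per-step error. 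The delicacy is that accelerated methods amplify gradient errors through the momentum term, so one must verify quantitatively that the inexact step does not push $\|x_{k+1}\|$ outside the region where $L_\Phi$ and the error estimate were assumed valid; this is precisely where one shows the hypergradient estimation error does not explode after the acceleration steps, thereby advancing the induction.

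Finally, with the per-iteration error suppressed below the contraction tolerance, I would invoke the standard inexact-oracle Lyapunov analysis for Nesterov's scheme on a $\mu_x$-strongly-convex, $L_\Phi$-smooth function. Using the potential $\Phi(z_k)-\Phi(x^*)+\tfrac{\mu_x}{2}\|\cdot\|^2$ initialized at $z_0=0$ yields the $(1-1/\sqrt{\kappa_x})^K$ contraction of $(\Phi(0)-\Phi(x^*)+\tfrac{\mu_x}{2}\|x^*\|^2)$ plus an accumulated error term that the geometric decay keeps at $\epsilon/2$, giving the stated bound. For the complexity \cref{mybabasohotssca1}, I would take $K=\Theta(\sqrt{\kappa_x}\log(1/\epsilon))$ outer steps, each costing $N+M=\Theta(\sqrt{\kappa_y}\log(1/\epsilon))$ inner evaluations, and substitute $\kappa_x=L_\Phi/\mu_x$ with $L_\Phi=\Theta\big(\mu_y^{-2}+(\rho_{yy}\mu_y^{-3}+\rho_{xy}\mu_y^{-2})\Delta^*_{\text{\normalfont\tiny SCSC}}\big)$ and $\kappa_y=\widetilde L_y/\mu_y$; collecting terms produces $\mathcal{\widetilde O}\big(\sqrt{\widetilde L_y/(\mu_x\mu_y^3)}+(\sqrt{\rho_{yy}\widetilde L_y/(\mu_x\mu_y^4)}+\sqrt{\rho_{xy}\widetilde L_y/(\mu_x\mu_y^3)})\sqrt{\Delta^*_{\text{\normalfont\tiny SCSC}}}\big)$.
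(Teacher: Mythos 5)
Your proposal is correct and follows essentially the same route as the paper's proof: the same decomposition of $\|G_k-\nabla\Phi(x_k)\|$ into heavy-ball and inner-AGD errors (the paper's Lemma~\ref{le:hgestr}), the same trajectory-dependent smoothness constant (Lemma~\ref{smoothness_Phis}), the same induction that bounds $\|z_k-x^*\|$ (hence $\|x_k-x^*\|$) via strong convexity so that the error and smoothness estimates remain valid under the momentum updates, and the same inexact Nesterov machinery — your Lyapunov potential is exactly the paper's estimate sequence $S_k(x)=S_k^*+\frac{\mu_x}{2}\|x-v_k\|^2$ with the per-step $\epsilon/4$ slack. The final choices $N,M=\Theta(\sqrt{\kappa_y}\log(1/\epsilon))$, $K=\Theta(\sqrt{\kappa_x}\log(1/\epsilon))$ and the resulting complexity accounting also coincide with the paper's.
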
	
To the best of our knowledge, our result in \Cref{upper_srsr_withnoB} is the first-known upper bound on the computational complexity 
for strongly-convex bilevel optimization under only mild assumptions on the Lipschitz continuity of the first- and second-order derivatives of the outer- and inner-level functions $f,g$. As a comparison, existing results in \cite{ghadimi2018approximation,ji2020bilevel} for bilevel optimization further make a strong assumption that the gradient norm $\|\nabla_y f(x,y)\|$ is bounded for all $(x,y)\in\mathbb{R}^p\times\mathbb{R}^q$ to upper-bound the smoothness parameter $L_{\Phi_k}$ of $\Phi(x_k)$ and the hypergradient estimation error $\|G_k-\nabla \Phi(x_k)\|$ at the $k^{th}$ iteration. This is because  $L_{\Phi_k}$ and $\|G_k-\nabla \Phi(x_k)\|$ turn out to be increasing with the gradient norm $\|\nabla_y f(x_k,y^*(x_k))\|$, for which it is challenging to prove the boundedness given the theoretical frameworks in \cite{ghadimi2018approximation,ji2020bilevel} where no results on bounded iterates are established.  Our analysis does not require such a restrictive assumption because we show by induction that the optimality gap $\|x_k-x^*\|$ is well bounded as the algorithm runs. As a result, we can guarantee the boundedness of the smoothness parameter $L_{\Phi_k}$ and the error $\|G_k-\nabla \Phi(x_k)\|$ during the entire optimization process.  
In \Cref{sec:upper_gBscss}, we further develop tighter upper bounds than existing results under this additional bounded gradient assumption. 


Based on \Cref{upper_srsr_withnoB}, we next study the quadratic $g$ subclass, where the inner-level function $g(x,y)$ takes a quadratic form as in \cref{quadratic_case}. 
The following corollary provides upper bounds on the computational complexity of AccBiO under this case.
\begin{corollary}[\bf Quadratic $g$ subclass]\label{coro:quadaticSr}
Under the same setting of \Cref{upper_srsr_withnoB}, consider the quadratic inner-level function $g(x,y)$ in \cref{quadratic_case}, where $\nabla_x\nabla_y g(\cdot,\cdot)$ and $\nabla_y^2 g(\cdot,\cdot)$ are constant. To achieve $\Phi(z_K)- \Phi(x^*)<\epsilon$, 
the complexity $\mathcal{C}_{\text{\normalfont sub}}(\mathcal{A},\epsilon)$ is at most  
$\mathcal{C}_{\text{\normalfont sub}}(\mathcal{A},\epsilon)\leq \mathcal{\widetilde O}\Big(\sqrt{\frac{\widetilde L_y}{\mu_x\mu_y^{3}}}\Big).$
\end{corollary}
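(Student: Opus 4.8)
The plan is to obtain \Cref{coro:quadaticSr} as a direct specialization of \Cref{upper_srsr_withnoB}, the key point being that a quadratic inner-level function has \emph{constant} second-order derivatives. First I would observe that for $g(x,y)$ of the form in~\cref{quadratic_case} we have $\nabla_y^2 g(x,y)=H$ and $\nabla_x\nabla_y g(x,y)=J$, both independent of $z=(x,y)$. Consequently the Lipschitz inequalities in \Cref{g:hessiansJaco} hold with $\rho_{yy}=0$ and $\rho_{xy}=0$, so the quadratic subclass genuinely lies inside the class $\mathcal{F}_{scsc}$ of \Cref{de:pc}, and \Cref{upper_srsr_withnoB} applies verbatim with these vanishing third-order constants.

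Next I would track how $\rho_{yy}=\rho_{xy}=0$ propagates through the quantities in \Cref{upper_srsr_withnoB}. The smoothness parameter collapses to $L_\Phi=\Theta(1/\mu_y^2)$, since the terms carrying $\rho_{yy}/\mu_y^3$ and $\rho_{xy}/\mu_y^2$ vanish together with the factor $\big(\Delta^*_{\text{\normalfont\tiny SCSC}}+\sqrt{\epsilon}/(\sqrt{\mu_x}\mu_y)\big)$ they multiply. Hence the outer condition number becomes $\kappa_x=L_\Phi/\mu_x=\Theta(1/(\mu_x\mu_y^2))$, and the linear rate $(1-1/\sqrt{\kappa_x})^K$ established in \Cref{upper_srsr_withnoB} forces an outer-iteration count $K=\widetilde{\mathcal{O}}(\sqrt{\kappa_x})=\widetilde{\mathcal{O}}(1/(\sqrt{\mu_x}\mu_y))$ to drive the first error term below $\epsilon/2$.

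The cleanest route to the final bound is to substitute $\rho_{yy}=\rho_{xy}=0$ directly into the complexity estimate~\eqref{mybabasohotssca1}: the leading term $\sqrt{\widetilde L_y/(\mu_x\mu_y^3)}$ is $\rho$-free and survives, while the remaining two terms are proportional to $\sqrt{\rho_{yy}}$ and $\sqrt{\rho_{xy}}$ and therefore drop out, leaving $\mathcal{C}_{\text{\normalfont sub}}(\mathcal{A},\epsilon)\leq\widetilde{\mathcal{O}}\big(\sqrt{\widetilde L_y/(\mu_x\mu_y^3)}\big)$. As an independent cross-check I would reconstruct this product from the algorithm: each outer iteration runs $N,M=\widetilde{\mathcal{O}}(\sqrt{\kappa_y})=\widetilde{\mathcal{O}}(\sqrt{\widetilde L_y/\mu_y})$ inner AGD and heavy-ball steps (costing gradients and Hessian-vector products, absorbed by the universal constant $\tau$), so the total is $K(N+M)=\widetilde{\mathcal{O}}\big(\tfrac{1}{\sqrt{\mu_x}\mu_y}\sqrt{\widetilde L_y/\mu_y}\big)=\widetilde{\mathcal{O}}\big(\sqrt{\widetilde L_y/(\mu_x\mu_y^3)}\big)$, matching the claim.

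Because the corollary is a specialization rather than a new result, there is no genuinely hard step; the only point requiring care is to confirm that setting the third-order constants to zero is legitimate at \emph{every} place they enter the proof of \Cref{upper_srsr_withnoB}, in particular that the induction controlling the boundedness of the iterates $x_k$ (and hence the hypergradient estimation error $\|G_k-\nabla\Phi(x_k)\|$) still closes when $\rho_{yy}=\rho_{xy}=0$. This is immediate: since the quadratic subclass is contained in $\mathcal{F}_{scsc}$, every estimate in the parent theorem remains valid, and the vanishing of the $\rho$ terms can only tighten them, so the induction goes through unchanged.
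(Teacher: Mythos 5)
Your proposal is correct, and it reaches the bound by a more economical route than the paper's own proof. You specialize \Cref{upper_srsr_withnoB} verbatim: since the quadratic $g$ in \cref{quadratic_case} has constant $\nabla_y^2 g\equiv H$ and $\nabla_x\nabla_y g\equiv J$, Assumption~\ref{g:hessiansJaco} holds with $\rho_{yy}=\rho_{xy}=0$, the quadratic subclass sits inside $\mathcal{F}_{scsc}$, and substituting these zeros into \cref{mybabasohotssca1} kills the two $\sqrt{\rho}$-terms and leaves $\mathcal{\widetilde O}\big(\sqrt{\widetilde L_y/(\mu_x\mu_y^{3})}\big)$; your closing remark that the iterate-boundedness induction only tightens when the $\rho$-terms vanish is exactly the point that makes this substitution legitimate. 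The paper instead partially re-derives the argument for the quadratic case: it shows directly in \cref{co:qiguqiaopwc} that $\Phi$ is \emph{globally} smooth with $L_\Phi = L_x + 2\widetilde L_{xy}L_{xy}/\mu_y + L_y\widetilde L_{xy}^2/\mu_y^2$ --- in contrast to the general case, where the smoothness parameter is unbounded over $\mathbb{R}^p$ and is controlled only along the iterates via the induction --- then re-derives the hypergradient estimation error bound in \cref{co:huxiuwancong} without the $\rho$-terms, and finally reruns the convergence recursion of \cref{heiyeibaizhoussc} with $\rho_{xy}=\rho_{yy}=0$ before counting $K+KM+KN$, which is essentially your cross-check computation. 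What the paper's re-derivation buys is the structural observation that the quadratic case is genuinely simpler (global smoothness makes the boundedness induction unnecessary) together with explicit constants; what your route buys is brevity and no duplicated analysis. Indeed, the paper itself adopts precisely your substitution argument for the convex-strongly-convex quadratic corollary (\Cref{coro:quadaticConv}), so the two approaches are equally rigorous here.
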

\Cref{coro:quadaticSr} shows that for the quadratic $g$ subclass, the complexity upper bound in \Cref{upper_srsr_withnoB} specializes to $\mathcal{\widetilde O}\Big(\sqrt{\frac{\widetilde L_y}{\mu_x\mu_y^{3}}}\Big)$. This improvement over the complexity for the general case in~\cref{mybabasohotssca1} comes from tighter upper bounds on the smoothness parameter $L_\Phi$ of the objective function $\Phi(x)$ and a smaller hypergradient estimation error $\|G_k-\nabla\Phi(x_k)\|$.  

We next provide an upper bound for the convex-strongly-convex setting, where the  function $\Phi(x)$ is convex. Recall from \Cref{de:pc} that $\|x^*\|=B$ for some constant $B>0$, where $x^*$ is one minimizer of $\Phi(\cdot)$.
For this case, we construct a strongly-convex-strongly-convex function $\widetilde \Phi(\cdot)=\widetilde f(x,y^*(x))$ by adding a small quadratic regularization to the outer-level function $f(x,y)$, i.e., 
\begin{align}\label{regularized_fxy}
\widetilde f(x,y) = f(x,y) +\frac{\epsilon}{2R} \|x\|^2. 
\end{align}
Then, we can apply the results in \Cref{upper_srsr_withnoB} to $\widetilde\Phi(x)$ and obtain the following theorem. 
\begin{theorem}\label{th:upper_csc1sc}
Suppose that $(f,g)$ belong to the convex-strongly-convex class $\mathcal{F}_{csc}$ in \Cref{de:pc}. Let $L_{\widetilde \Phi}$ be the smoothness parameter of function $\widetilde \Phi(\cdot)$, which takes the same form as $L_\Phi$ in \Cref{upper_srsr_withnoB} except that $L_x,f,x^*$ and $\Phi$ become $L_x+\frac{\epsilon}{R},\widetilde f,\widetilde x^*$ and $\widetilde \Phi$, respectively. Let  $\Delta^*_{\text{\normalfont\tiny CSC}} = \|\nabla_y f( x^*,y^*(x^*))\|+\frac{\|x^*\|}{\mu_y}+\frac{(\|x^*\|+1)\sqrt{(\Phi(0)-\Phi(x^*))}}{\sqrt{\epsilon}\mu_y}$.  We consider two widely-used convergence criterions as follows. 
\begin{itemize}
\item {\bf (Suboptimality gap)} Choose $R=B^2$ in \cref{regularized_fxy}, and choose the same parameters as in \Cref{upper_srsr_withnoB} with $\epsilon$ and $\mu_x$ being replaced by $\epsilon/2$ and $\frac{\epsilon}{R}$, respectively.
To achieve $\Phi(z_K) - \Phi(x^*)\leq \epsilon$, the required complexity is at most
 \begin{align*}
\mathcal{C}_{\text{\normalfont sub}}(\mathcal{A},\epsilon) \leq  \mathcal{\widetilde O}\Big( B\Big( \sqrt{\frac{\widetilde L_y}{\epsilon\mu_y^3}}+\Big(\sqrt{\frac{\rho_{yy}\widetilde L_y}{\epsilon\mu_y^{4}}} +  \sqrt{\frac{\rho_{xy}\widetilde L_y}{\epsilon\mu_y^3}}\Big)\sqrt{\Delta^*_{\text{\normalfont\tiny CSC}}}\Big)\Big).
\end{align*}
\item {\bf (Gradient norm)} Choose $R=B$ in \cref{regularized_fxy}, and choose the same parameters as in \Cref{upper_srsr_withnoB} with 
 $\epsilon$ and $\mu_x$ being replaced by $\epsilon^2/(4L_{\widetilde \Phi}+ \frac{8\epsilon}{R})$ and $\frac{\epsilon}{R}$, respectively. 
 To achieve $\|\nabla \Phi (z_k)\|\leq 5\epsilon$, the required complexity is at most 
  \begin{align*}
\mathcal{C}_{\text{\normalfont norm}}(\mathcal{A},\epsilon) \leq \mathcal{\widetilde O}\Big( \Big( \sqrt{\frac{B\widetilde L_y}{\epsilon\mu_y^3}}+\Big(\sqrt{\frac{B\rho_{yy}\widetilde L_y}{\epsilon\mu_y^{4}}} +  \sqrt{\frac{B\rho_{xy}\widetilde L_y}{\epsilon\mu_y^3}}\Big)\sqrt{\Delta^*_{\text{\normalfont\tiny CSC}}}\Big)\Big).
\end{align*}
\end{itemize}
\end{theorem}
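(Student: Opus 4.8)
The plan is to reduce the convex-strongly-convex problem to the strongly-convex-strongly-convex setting already handled by \Cref{upper_srsr_withnoB} via the regularization in \cref{regularized_fxy}. Since the added term $\frac{\epsilon}{2R}\|x\|^2$ is independent of $y$, the inner solution $y^*(x)$ is unchanged, so $\widetilde\Phi(x)=\Phi(x)+\frac{\epsilon}{2R}\|x\|^2$. As $\Phi$ is convex, $\widetilde\Phi$ is $\frac{\epsilon}{R}$-strongly-convex, hence $(\widetilde f,g)\in\mathcal{F}_{scsc}$ with $\mu_x=\frac{\epsilon}{R}$. Running AccBiO on $\widetilde\Phi$ and invoking \Cref{upper_srsr_withnoB} with accuracy $\epsilon'$ then gives $\widetilde\Phi(z_K)-\widetilde\Phi(\widetilde x^*)\leq\epsilon'$ at a cost governed by the SCSC bound; substituting $\frac{1}{\sqrt{\mu_x}}=\sqrt{R/\epsilon}$ already produces the $\sqrt{\widetilde L_y/(\epsilon\mu_y^3)}$-type scaling in the statement.

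For the suboptimality-gap criterion I would set $R=B^2$ and $\epsilon'=\epsilon/2$. Using $\widetilde\Phi(z_K)\geq\Phi(z_K)$, $\widetilde\Phi(\widetilde x^*)\leq\widetilde\Phi(x^*)=\Phi(x^*)+\frac{\epsilon}{2R}\|x^*\|^2$, and $\|x^*\|=B$, the decomposition
\begin{align*}
\Phi(z_K)-\Phi(x^*)\leq\big(\widetilde\Phi(z_K)-\widetilde\Phi(\widetilde x^*)\big)+\frac{\epsilon}{2R}\|x^*\|^2\leq \epsilon'+\frac{\epsilon}{2}=\epsilon
\end{align*}
closes the accuracy requirement. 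It then remains to express the complexity in terms of $\Phi$, i.e.\ to bound the regularized quantity $\Delta^*_{\text{\normalfont\tiny SCSC}}$ (evaluated at $\widetilde x^*,\widetilde f,\widetilde\Phi,\mu_x=\epsilon/R$) by $\Delta^*_{\text{\normalfont\tiny CSC}}$. Here I would use strong convexity of $\widetilde\Phi$ at its minimizer, together with $\widetilde\Phi(\widetilde x^*)\geq\Phi(x^*)$, to show $\|\widetilde x^*\|\leq\|x^*\|$ and $\|\widetilde x^*-x^*\|\leq\|x^*\|$, and to get $\widetilde\Phi(0)-\widetilde\Phi(\widetilde x^*)\leq\Phi(0)-\Phi(x^*)$ (noting $\widetilde\Phi(0)=\Phi(0)$). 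The gradient term $\|\nabla_yf(\widetilde x^*,y^*(\widetilde x^*))\|$ is then controlled by $\|\nabla_yf(x^*,y^*(x^*))\|$ plus a $B$-proportional Lipschitz correction, using $\nabla_yf$ Lipschitzness and the $\frac{\widetilde L_{xy}}{\mu_y}$-Lipschitzness of $y^*(\cdot)$. These corrections are exactly what the $(\|x^*\|+1)$ factor in $\Delta^*_{\text{\normalfont\tiny CSC}}$ is meant to absorb, and substituting $\frac{1}{\sqrt{\mu_x}}=B/\sqrt{\epsilon}$ yields the claimed $\mathcal{\widetilde O}(B(\cdots))$ bound.

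For the gradient-norm criterion I would instead take $R=B$ and $\epsilon'=\epsilon^2/(4L_{\widetilde\Phi}+8\epsilon/R)$, and exploit $\nabla\widetilde\Phi(x)=\nabla\Phi(x)+\frac{\epsilon}{R}x$, so that $\|\nabla\Phi(z_K)\|\leq\|\nabla\widetilde\Phi(z_K)\|+\frac{\epsilon}{R}\|z_K\|$. Smoothness of $\widetilde\Phi$ gives $\|\nabla\widetilde\Phi(z_K)\|\leq\sqrt{2L_{\widetilde\Phi}\epsilon'}\leq\epsilon$ by the choice of $\epsilon'$, while strong convexity gives $\|z_K-\widetilde x^*\|\leq\sqrt{2\epsilon'/\mu_x}$, hence $\|z_K\|\leq B+\sqrt{2\epsilon'/\mu_x}=\mathcal{O}(B)$ and $\frac{\epsilon}{R}\|z_K\|=\mathcal{O}(\epsilon)$; tracking constants then yields $\|\nabla\Phi(z_K)\|\leq 5\epsilon$. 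The complexity again follows from \Cref{upper_srsr_withnoB} with $\mu_x=\epsilon/B$, the accuracy $\epsilon'$ entering only through logarithmic factors, so that $\frac{1}{\sqrt{\mu_x}}=\sqrt{B/\epsilon}$ is pushed inside each radical to give the stated $\sqrt{B\widetilde L_y/(\epsilon\mu_y^3)}$-type terms.

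The main obstacle I anticipate is the reduction of the regularized optimality measure to $\Delta^*_{\text{\normalfont\tiny CSC}}$: unlike the optimal-value and iterate-norm bounds, which follow from one-line strong-convexity arguments, controlling $\|\nabla_yf(\widetilde x^*,y^*(\widetilde x^*))\|$ requires simultaneously bounding $\|\widetilde x^*-x^*\|$ and the induced drift $\|y^*(\widetilde x^*)-y^*(x^*)\|$, and then verifying that all resulting $B$-scaled error terms are genuinely absorbed by the $(\|x^*\|+1)$ factor rather than introducing a larger power of $B$. A secondary delicate point is tuning $\epsilon'$ (and the constant $5$) in the gradient-norm case so that both $\|\nabla\widetilde\Phi(z_K)\|$ and $\frac{\epsilon}{R}\|z_K\|$ are each $\mathcal{O}(\epsilon)$ while the complexity retains the clean $\sqrt{B/\epsilon}$ dependence.
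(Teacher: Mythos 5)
Your proposal is correct and follows essentially the same route as the paper's proof: reduce to the strongly-convex-strongly-convex case via the quadratic regularizer with $\mu_x=\frac{\epsilon}{R}$, bound $\|\widetilde x^*\|$, $\widetilde\Phi(0)-\widetilde\Phi(\widetilde x^*)$, and $\|\nabla_y \widetilde f(\widetilde x^*,y^*(\widetilde x^*))\|$ in terms of the unregularized quantities so that the regularized optimality measure is absorbed into $\Delta^*_{\text{\normalfont\tiny CSC}}$, and then convert the $\widetilde\Phi$-guarantee back to $\Phi$ using $\widetilde\Phi(z_K)\geq\Phi(z_K)$ and $\widetilde\Phi(\widetilde x^*)\leq\Phi(x^*)+\frac{\epsilon}{2R}\|x^*\|^2$. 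The only deviations are cosmetic: you bound the gradient error by the triangle inequality $\|\nabla\Phi(z_K)\|\leq\|\nabla\widetilde\Phi(z_K)\|+\frac{\epsilon}{R}\|z_K\|$ where the paper works with squared norms, and your bound $\|\widetilde x^*\|\leq\|x^*\|$ is slightly tighter than the paper's $\|\widetilde x^*\|\leq 2\|x^*\|$, both of which land within the same $5\epsilon$ budget and complexity order.
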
  
As far as we know, \Cref{th:upper_csc1sc} is the first convergence result for convex-strongly-convex bilevel optimization without the bounded gradient assumption. Then, similarly to \Cref{coro:quadaticSr}, we also study the quadratic $g(x,y)$ case where $g$ takes the quadratic form as given in \cref{quadratic_case}.  
\begin{corollary}[\bf Quadratic $g$ subclass]\label{coro:quadaticConv}
Under the same setting of \Cref{th:upper_csc1sc}, consider the quadratic inner-level function $g(x,y)$ 
where  $\nabla_x\nabla_y g(\cdot,\cdot)$ and $\nabla_y^2 g(\cdot,\cdot)$ are constant. Then, we have
\begin{itemize}
\item {\bf (Suboptimality  gap)} To achieve $\Phi(z_K) - \Phi(x^*)\leq \epsilon$, we have $\mathcal{C}_{\text{\normalfont sub}}(\mathcal{A},\epsilon) \leq  \mathcal{\widetilde O}\Big(B \sqrt{\frac{\widetilde L_y}{\epsilon\mu_y^3}}\Big)$.
\item {\bf (Gradient norm)} To achieve $\|\nabla \Phi (z_k)\|\leq \epsilon$, we have $\mathcal{C}_{\text{\normalfont norm}}(\mathcal{A},\epsilon) \leq \mathcal{\widetilde O}\Big(\sqrt{\frac{B\widetilde L_y}{\epsilon\mu_y^3}}\Big).$
\end{itemize}
\end{corollary}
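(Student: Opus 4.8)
The plan is to derive \Cref{coro:quadaticConv} as a direct specialization of \Cref{th:upper_csc1sc}, exploiting the single structural fact that a quadratic inner-level function has constant second-order derivatives. Concretely, when $g(x,y)=\frac12 y^THy+x^TJy+b^Ty+h(x)$, both $\nabla_x\nabla_y g(\cdot)=J$ and $\nabla_y^2 g(\cdot)=H$ are independent of $z=(x,y)$, so the Lipschitz parameters in \Cref{g:hessiansJaco} may be taken to be $\rho_{xy}=\rho_{yy}=0$. First I would substitute these two zeros into every quantity of \Cref{th:upper_csc1sc} that carries $\rho$-dependence.

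The first place this matters is the smoothness parameter. Since $L_{\widetilde\Phi}$ inherits the form of $L_\Phi$ in \Cref{upper_srsr_withnoB}, namely $L_\Phi=\Theta\big(\frac{1}{\mu_y^2}+\big(\frac{\rho_{yy}}{\mu_y^3}+\frac{\rho_{xy}}{\mu_y^2}\big)(\cdots)\big)$, setting $\rho_{xy}=\rho_{yy}=0$ collapses it to $L_{\widetilde\Phi}=\Theta\big(\frac{1}{\mu_y^2}\big)$, free of the $\Delta^*_{\text{\normalfont\tiny CSC}}$ term. This tightened smoothness is what drives the improvement: the outer Nesterov loop converges at rate $(1-1/\sqrt{\kappa_x})^K$ with $\kappa_x=L_{\widetilde\Phi}/\mu_x$, so the outer iteration count is $K=\mathcal{\widetilde O}(\sqrt{\kappa_x})=\mathcal{\widetilde O}\big(\sqrt{1/(\mu_x\mu_y^2)}\big)$, while each outer iteration still costs $\mathcal{\widetilde O}(\sqrt{\kappa_y})$ inner AGD and heavy-ball steps with $\kappa_y=\widetilde L_y/\mu_y$. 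Multiplying gives a per-problem total of $\mathcal{\widetilde O}\big(\sqrt{\widetilde L_y/(\mu_x\mu_y^3)}\big)$.

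It then remains to insert the two reductions prescribed by \Cref{th:upper_csc1sc}. For the suboptimality-gap criterion one takes $R=B^2$ and replaces $\mu_x$ by $\epsilon/R=\epsilon/B^2$, giving $\sqrt{\kappa_x}=\mathcal{\widetilde O}\big(B/(\sqrt{\epsilon}\mu_y)\big)$ and hence $\mathcal{C}_{\text{\normalfont sub}}(\mathcal A,\epsilon)\leq\mathcal{\widetilde O}\big(B\sqrt{\widetilde L_y/(\epsilon\mu_y^3)}\big)$. For the gradient-norm criterion one takes $R=B$ and $\mu_x=\epsilon/B$, giving $\sqrt{\kappa_x}=\mathcal{\widetilde O}\big(\sqrt{B}/(\sqrt{\epsilon}\mu_y)\big)$ and $\mathcal{C}_{\text{\normalfont norm}}(\mathcal A,\epsilon)\leq\mathcal{\widetilde O}\big(\sqrt{B\widetilde L_y/(\epsilon\mu_y^3)}\big)$; the further reduction of the internal accuracy target by a factor of order $L_{\widetilde\Phi}$ only contributes logarithmic factors already absorbed into $\mathcal{\widetilde O}$.

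The main point to verify---the only step needing more than arithmetic---is that vanishing $\rho_{xy},\rho_{yy}$ truly removes all $\rho$-dependence from the analysis and not merely from the displayed bound. The place to check is the hypergradient estimation error $\|G_k-\nabla\Phi(x_k)\|$: in the general proof this error acquires terms proportional to $\rho_{yy}\|y_k^N-y_k^*\|$ and $\rho_{xy}\|y_k^N-y_k^*\|$, because the matrices $\nabla_y^2 g(x_k,y_k^N)$ and $\nabla_x\nabla_y g(x_k,y_k^N)$ are evaluated at the inexact $y_k^N$ rather than at $y_k^*=\argmin_y g(x_k,y)$. When $g$ is quadratic these matrices are exact regardless of the argument, so those contributions disappear; the residual error, coming only from the inexactness of $y_k^N$ inside $\nabla_x f$ and $\nabla_y f$ and from the heavy-ball residual $\|v_k^M-v_k^*\|$, is controlled within the same $\mathcal{\widetilde O}(\sqrt{\kappa_y})$ inner budget. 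Once this is confirmed, the two displayed complexities follow.
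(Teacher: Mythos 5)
Your proposal is correct and follows essentially the same route as the paper: the paper's own proof simply sets $\rho_{xy}=\rho_{yy}=0$ (constant Jacobian and Hessian) in the bounds of \Cref{th:upper_csc1sc}, which is exactly your specialization. The additional check you perform---that the $\rho$-terms also vanish from the hypergradient estimation error and not just from the displayed bound---is the same verification the paper carries out explicitly for the strongly-convex counterpart (\Cref{coro:quadaticSr}), so it is a sound and consistent supplement rather than a divergence.
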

It can be seen from \Cref{coro:quadaticConv} that for the quadratic $g$ subclass, AccBiO achieves a computational complexity of $ \mathcal{\widetilde O}\Big(\sqrt{\frac{B\widetilde L_y}{\epsilon\mu_y^3}}\Big)$ in term of the gradient norm. For the case where $\widetilde L_y\leq\mathcal{O}(\mu_y)$, the complexity becomes $ \mathcal{\widetilde O}\Big(\sqrt{\frac{B}{\epsilon\mu_y^2}}\Big)$. 

\section{Upper Bounds with Bounded Gradient Assumption} \label{sec:upper_gBscss}

Our analysis in \Cref{sec:caoaccsas} for AccBiO does not make the bounded gradient assumption, which has been commonly taken in the existing studies~\cite{ghadimi2018approximation,ji2020bilevel,hong2020two,ji2020convergence}. 
In this section, we establish tighter upper bounds than those in existing works~\cite{ghadimi2018approximation,ji2020bilevel} under such an additional assumption.

\begin{assum}[{\bf Bounded gradient}]\label{bounded_f_assump}
There exists a constant $U$ such that for any $(x^\prime,y^\prime)\in\mathbb{R}^p\times \mathbb{R}^q$, $\|\nabla_y f(x^\prime,y^\prime)\|\leq U$.
\end{assum}

We propose an accelerated algorithm named AccBiO-BG in \Cref{alg:bio} for bilevel optimization under the additional bounded gradient assumption. Similarly to  AccBiO, AccBiO-BG first runs $N$ steps of accelerated gradient descent (AGD) at each outer iteration. Note that AccBiO-BG here adopts a warm start strategy with $y_k^0=y_{k-1}^N$ so that our analysis does not require the boundedness of $y^*(x_k),k=0...,K$ and reduces the total computational complexity. 
Then, 
AccBiO-BG constructs the hypergradient estimate $G_k:= \nabla_x f(\widetilde x_k,y_k^N) -\nabla_x \nabla_y g(\widetilde x_k,y_k^N)v_k^M$ following the same steps as in AccBiO. 
Finally, we update variables $x_{k},z_{k}$ via two  accelerated gradient steps, where we incorporate a variant~\cite{ghadimi2016accelerated} of Nesterov's momentum. We use this variant instead of vanilla Nesterov's momentum~\cite{nesterov2018lectures} in \Cref{alg:bioNoBG}, because the resulting analysis is easier to handle the warm start strategy, which backpropagates the tracking error $\|y^N_k-y^*(x_k)\|$ to previous loops. 



\begin{algorithm}[t]
	\caption{Accelerated Bilevel Optimizer under Bounded Gradient Assumption (AccBiO-BG) } 
	\small
	\label{alg:bio}
	\begin{algorithmic}[1]
		\STATE {\bfseries Input:}  Initialization $ z_0=x_0=y_0=0$, parameters  $\eta_k,\tau_k.\alpha_k,\beta_k,\lambda$ and $\theta$ 
		\FOR{$k=0,...,K$}
		\STATE{Set $\widetilde x_k = \eta_k x_k + (1-\eta_k)z_k$}
		\STATE{Set $y_k^0 = y_{k-1}^{N} \mbox{ if }\; k> 0$ and $y_0$ otherwise (warm start)}
		\FOR{$t=1,....,N$}
		\STATE{
		
		\vspace{-0.45cm} 
		\begin{align*}
		\mbox{(AGD:)} \quad y_k^{t} &= s_k^{t-1} - \frac{1}{\widetilde L_y} \nabla_y g(\widetilde x_k,s_k^{t-1}), \quad s_k^{t} = \frac{2\sqrt{\kappa_y}}{\sqrt{\kappa_y}+1} y_k^{t} - \frac{\sqrt{\kappa_y}-1}{\sqrt{\kappa_y}+1} y_k^{t-1}.
		\end{align*}
		\vspace{-0.7cm} }
		\ENDFOR
                 \STATE{ \textit {Hypergradient computation}: \\
               \hspace{0.4cm}1) Get $v_k^M$ after running $M$ steps of heavy-ball method $$v_k^{t+1} = v_k^t-\lambda\nabla Q(v_k^t) +\theta(v_k^t-v_k^{t-1})$$ \\ \hspace{0.7cm} with initialization $v_k^{0}=v_k^{1}=0$ over \vspace{-0.3cm} 
               \begin{align*}
             \mbox{(Quadratic programming:)}\;\;  \min_v Q(v):=\frac{1}{2}v^T\nabla_y^2 g(\widetilde x_k,y_k^N) v - v^T
\nabla_y f(\widetilde x_k,y^N_k);
\end{align*}\vspace{-0.4cm}\\
    \hspace{0.4cm}2) Compute Jacobian-vector product $\nabla_x \nabla_y g(\widetilde x_k,y_k^N)v_k^M $ via automatic differentiation; \\
    \vspace{0.1cm}
    \hspace{0.4cm}3) compute {\bf hypergradient estimate} $G_k:= \nabla_x f(\widetilde x_k,y_k^N) -\nabla_x \nabla_y g(\widetilde x_k,y_k^N)v_k^M.$ 
              }
               \vspace{0.1cm}
               \STATE{Update $x_{k+1}=\tau_k \widetilde x_{k}+(1-\tau_k)x_k-\beta_k G_k$}
               \vspace{0.1cm}
               \STATE{Update $z_{k+1}=\widetilde x_{k}-\alpha_k G_k$}\vspace{0.1cm}
                              
		\ENDFOR
	\end{algorithmic}
	\end{algorithm}

\vspace{-0.1cm}	
\section{Convergence Analysis for AccBiO-BG}	
We first consider the strongly-convex-strongly-convex setting under the bounded gradient assumption. The following theorem provides a theoretical convergence guarantee for AccBiO-BG.
\begin{theorem}\label{upper_srsr} 
Suppose that $(f,g)$ belong to the strongly-convex-strongly-convex class $\mathcal{F}_{scsc}$ in \Cref{de:pc} and further suppose Assumption~\ref{bounded_f_assump} is satisfied. 
Choose $\alpha_k=\alpha\leq \frac{1}{2L_\Phi}$, $\eta_k=\frac{\sqrt{\alpha\mu_x}}{\sqrt{\alpha\mu_x}+2}$, $\tau_k=\frac{\sqrt{\alpha\mu_x}}{2}$ and $\beta_k=\sqrt{\frac{\alpha}{\mu_x}}$, where $L_\Phi$ is the smoothness parameter of $\Phi(x)$. Choose stepsizes {\small $\lambda=\frac{4}{(\sqrt{\widetilde L_y}+\sqrt{\mu_y})^2}$} and {\small $\theta=\max\big\{\big(1-\sqrt{\lambda\mu_y}\big)^2,\big(1-\sqrt{\lambda\widetilde L_y}\big)^2\big\}$} for the heavy-ball method.  
Then, to achieve $\Phi(z_K) - \Phi(x^*) \leq \epsilon$, the required complexity $\mathcal{C}_{\text{\normalfont sub}}(\mathcal{A},\epsilon)$ is at most 
\begin{align*}
\mathcal{C}_{\text{\normalfont sub}}(\mathcal{A},\epsilon)\leq  \mathcal{O}\Big(\sqrt{\frac{\widetilde L_y}{\mu_x\mu_y^4}}\log\frac{\mbox{\normalfont\small poly}(\mu_x,\mu_y,U,\Phi(x_{0})-\Phi(x^*))}{\epsilon}  \log \frac{\mbox{\normalfont \small poly}(\mu_x,\mu_y,U)}{\epsilon} \Big).
\end{align*}
\end{theorem}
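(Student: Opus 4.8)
The plan is to treat AccBiO-BG as an \emph{inexact} accelerated gradient method for the $\mu_x$-strongly-convex objective $\Phi$, where the inexactness comes entirely from the hypergradient estimation error $\|G_k-\nabla\Phi(\widetilde x_k)\|$. The proof therefore splits into three components: (i) an error bound showing that $\|G_k-\nabla\Phi(\widetilde x_k)\|$ is controlled by the inner-loop AGD error $\|y_k^N-y^*(\widetilde x_k)\|$ and the heavy-ball error $\|v_k^M-v_k^*\|$; (ii) linear-rate bounds for these two inner errors; and (iii) an outer-loop Lyapunov contraction that tolerates the inexact gradient. The bounded gradient assumption (Assumption~\ref{bounded_f_assump}) is what makes every constant uniform over $k$: since $g(\widetilde x_k,\cdot)$ is $\mu_y$-strongly-convex, the minimizer $v_k^*$ of the quadratic $Q$ in line~7 satisfies $\|v_k^*\|\le U/\mu_y$, so the smoothness parameter $L_\Phi$ and all intermediate quantities stay bounded without a separate argument on the boundedness of the iterates (unlike \Cref{upper_srsr_withnoB}, which must control $\|x_k-x^*\|$ by induction).

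First I would establish the hypergradient error bound. Using the closed form of $\nabla\Phi$ from \Cref{prop:grad}, I would decompose $G_k-\nabla\Phi(\widetilde x_k)$ into the discrepancy from evaluating $\nabla_x f$ and $\nabla_x\nabla_y g$ at $y_k^N$ instead of $y^*(\widetilde x_k)$, plus the residual $\nabla_x\nabla_y g(\widetilde x_k,y_k^N)(v_k^M-v_k^*)$. Assumptions~\ref{fg:smooth} and~\ref{g:hessiansJaco} together with $\|v_k^*\|\le U/\mu_y$ yield
$$\|G_k-\nabla\Phi(\widetilde x_k)\|\le c_1\|y_k^N-y^*(\widetilde x_k)\|+c_2\|v_k^M-v_k^*\|$$
for explicit constants $c_1,c_2$ polynomial in $U,\widetilde L_y,\mu_y^{-1}$ and the Lipschitz parameters. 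For the inner errors, $g(\widetilde x_k,\cdot)$ is $\mu_y$-strongly-convex and $\widetilde L_y$-smooth, so AGD contracts the tracking error at the optimal rate $(1-1/\sqrt{\kappa_y})$, and the heavy-ball scheme with the prescribed $\lambda,\theta$ contracts the quadratic residual at the same rate; starting from $v_k^0=v_k^1=0$ gives $\|v_k^M-v_k^*\|\le C(1-1/\sqrt{\kappa_y})^M\,U/\mu_y$.

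The key coupling arises from the warm start $y_k^0=y_{k-1}^N$. I would control the initial inner error via $\|y_{k-1}^N-y^*(\widetilde x_k)\|\le\|y_{k-1}^N-y^*(\widetilde x_{k-1})\|+\frac{\widetilde L_{xy}}{\mu_y}\|\widetilde x_k-\widetilde x_{k-1}\|$, using that $y^*(\cdot)$ is $\widetilde L_{xy}/\mu_y$-Lipschitz (implicit differentiation of the inner optimality condition). This feeds the tracking error $\|y_k^N-y^*(\widetilde x_k)\|$ by the outer step $\|\widetilde x_k-\widetilde x_{k-1}\|$, which is in turn bounded by the outer suboptimality. In parallel, following the inexact-AGD template of~\cite{ghadimi2016accelerated}, I would derive a one-step inequality $V_{k+1}\le(1-\sqrt{\alpha\mu_x})V_k+\gamma\|G_k-\nabla\Phi(\widetilde x_k)\|^2$ for a Lyapunov function $V_k$ combining $\Phi(z_k)-\Phi(x^*)$ and $\|x_k-x^*\|^2$. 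Combining the two recursions and choosing $N,M=\Theta(\sqrt{\kappa_y}\log(\cdot/\epsilon))$ large enough that the per-iteration error is dominated by a $(1-\sqrt{\alpha\mu_x})$-contraction of the coupled system yields, by induction over $k$, the linear convergence $V_K\le(1-\sqrt{\alpha\mu_x})^{K}\,V_0\,\mathrm{poly}(\cdot)+\mathcal{O}(\epsilon)$.

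Finally I would assemble the complexity. Linear outer convergence needs $K=\mathcal{O}(\sqrt{\kappa_x}\log(1/\epsilon))$ with $\kappa_x=L_\Phi/\mu_x$, and under Assumption~\ref{bounded_f_assump} one has $L_\Phi=\Theta(\mu_y^{-3})$, so $\sqrt{\kappa_x}=\Theta(1/\sqrt{\mu_x\mu_y^{3}})$. Each outer iteration costs $\mathcal{O}(N+M)=\mathcal{O}(\sqrt{\kappa_y}\log(\cdot/\epsilon))=\mathcal{O}(\sqrt{\widetilde L_y/\mu_y}\log(\cdot/\epsilon))$ gradient, Jacobian- and Hessian-vector evaluations in the measure of \Cref{complexity_measyre}. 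Multiplying gives $\mathcal{C}_{\text{\normalfont sub}}(\mathcal{A},\epsilon)=\mathcal{O}(\sqrt{\kappa_x\kappa_y}\log^2(\cdot/\epsilon))=\mathcal{O}\big(\sqrt{\widetilde L_y/(\mu_x\mu_y^{4})}\,\log^2(\cdot/\epsilon)\big)$, matching the claimed bound. I expect the main obstacle to be the warm-start coupling in the third step: because the inner tracking error and the outer momentum dynamics feed back into each other, they cannot be bounded separately, and the induction must simultaneously contract the joint error while ensuring that the $\mathrm{poly}$ prefactors driven by $U$ and $\Phi(x_0)-\Phi(x^*)$ enter only inside the logarithms.
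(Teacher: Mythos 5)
Your proposal is correct and follows essentially the same route as the paper's proof: the paper likewise bounds the hypergradient error by the inner AGD tracking error plus the heavy-ball residual (using $\|v_k^*\|\le U/\mu_y$ from Assumption~\ref{bounded_f_assump}), runs the inexact accelerated scheme of~\cite{ghadimi2016accelerated} to obtain a $\big(1-\tfrac{\sqrt{\alpha\mu_x}}{2}\big)$ one-step contraction of $\Phi(z_k)-\Phi(x^*)$ with an $\mathcal{O}(1/\mu_x)\|G_k-\nabla\Phi(\widetilde x_k)\|^2$ error term, handles the warm-start coupling by the same recursion $\|y_k^N-y^*(\widetilde x_k)\|\lesssim \tau_N\big(\|y_{k-1}^N-y^*(\widetilde x_{k-1})\|+\kappa_y\|\widetilde x_k-\widetilde x_{k-1}\|\big)$ telescoped jointly with the outer contraction, and assembles $K=\mathcal{O}(\sqrt{\kappa_x}\log(1/\epsilon))$, $N,M=\mathcal{O}(\sqrt{\kappa_y}\log(\cdot/\epsilon))$ into the claimed bound. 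The only cosmetic difference is bookkeeping: the paper absorbs the coupling terms $\|x_{i+1}-\widetilde x_i\|^2$, $\|z_{i+1}-\widetilde x_i\|^2$ into negative quadratic terms retained in the descent inequality, whereas you fold them into a Lyapunov function that also carries $\|x_k-x^*\|^2$; both implement the same coupled induction.
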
	
The proof of \Cref{upper_srsr} is provided in \Cref{proof:upss_wb}. As shown in \Cref{upper_srsr}, the upper bound achieved by our proposed AccBiO-BG algorithm is {\small$\mathcal{\widetilde O}(\sqrt{\frac{1}{\mu_x\mu_y^4}})$}. This bound improves the best known $\mathcal{\widetilde O}\big(\max\big\{\frac{1}{\mu_x\mu_y^3},\frac{\widetilde L^2_y}{\mu_y^2}\big\}\big)$ (see eq. (2.60) therein)  of the accelerated bilevel approximation algorithm (ABA) in  \cite{ghadimi2018approximation} by a factor of $\mathcal{O}(\mu_x^{-1/2}\mu_y^{-1})$. 



We then study the convex-strongly-convex bilevel optimization under the bounded gradient assumption. Similarly to \Cref{th:upper_csc1sc}, we consider a strongly-convex-strongly-convex function $\widetilde \Phi(\cdot)=\widetilde f(x,y^*(x))$  with $\widetilde f(x,y) = f(x,y) +\frac{\epsilon}{2B^2} \|x\|^2$, where $B=\|x^*\|$ as defined in~\Cref{de:pc}. 
Then, we have the following theorem. 
\begin{theorem}\label{convex_upper_BG}
Suppose that $(f,g)$ belong to the convex-strongly-convex class $\mathcal{F}_{csc}$ in \Cref{de:pc} and further suppose Assumption~\ref{bounded_f_assump} is satisfied. Let $L_{\widetilde\Phi}$ be the smoothness parameter of $\widetilde\Phi(\cdot)$, which takes the same form as $L_\Phi$ in \Cref{upper_srsr} but with $L_x$ being  replaced by  $L_x+ \frac{\epsilon}{B^2}$. 
Choose the same parameter as in \Cref{upper_srsr} with $\alpha=\frac{1}{2L_{\widetilde \Phi}}$ and $\mu_x=\frac{\epsilon}{B^2}$. Then, to achieve $\Phi(z_K) - \Phi(x^*) \leq \epsilon$, the required complexity $\mathcal{C}_{\text{\normalfont sub}}(\mathcal{A},\epsilon)$ is 
\begin{align*}
\mathcal{C}_{\text{\normalfont sub}}(\mathcal{A},\epsilon)\leq \mathcal{O}\Big(B\sqrt{\frac{\widetilde L_y}{\epsilon\mu_y^4}}\log\frac{\mbox{\small \normalfont poly}(\epsilon,\mu_y,B,U,\Phi(x_{0})-\Phi(x^*))}{\epsilon}  \log \frac{\mbox{\small \normalfont poly}(B,\epsilon,\mu_y,U)}{\epsilon} \Big).
\end{align*}
\end{theorem}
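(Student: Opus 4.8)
The plan is to prove \Cref{convex_upper_BG} by a \emph{regularization reduction} to the strongly-convex-strongly-convex result \Cref{upper_srsr}. The key observation is that adding the quadratic penalty $\frac{\epsilon}{2B^2}\|x\|^2$ to the outer-level function $f$ produces $\widetilde f(x,y)=f(x,y)+\frac{\epsilon}{2B^2}\|x\|^2$, whose induced objective $\widetilde\Phi(x)=\widetilde f(x,y^*(x))=\Phi(x)+\frac{\epsilon}{2B^2}\|x\|^2$ is $\frac{\epsilon}{B^2}$-strongly-convex, since $\Phi$ is convex by assumption. Thus, setting $\mu_x=\frac{\epsilon}{B^2}$, the pair $(\widetilde f,g)$ falls into the class $\mathcal{F}_{scsc}$ of \Cref{de:pc}: the inner function $g$ is untouched and remains $\mu_y$-strongly-convex. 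Crucially, the bounded-gradient Assumption~\ref{bounded_f_assump} is also preserved, because the penalty depends only on $x$ and hence $\nabla_y\widetilde f=\nabla_y f$, giving $\|\nabla_y\widetilde f(x',y')\|\leq U$. Therefore \Cref{upper_srsr} applies verbatim to $(\widetilde f,g)$.

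I would then run AccBiO-BG on the regularized problem with the parameters prescribed in \Cref{upper_srsr}, taking $\mu_x=\frac{\epsilon}{B^2}$ and $\alpha=\frac{1}{2L_{\widetilde\Phi}}$, and aim for suboptimality $\epsilon/2$ on $\widetilde\Phi$. Before invoking the complexity bound I would verify that $L_{\widetilde\Phi}$ remains of the same order as $L_\Phi$: the only change is $L_x\to L_x+\frac{\epsilon}{B^2}$, and since $\frac{\epsilon}{B^2}=\mu_x$ is a lower-order term compared to the $\mu_y$- and $U$-dependent terms that dominate $L_\Phi$ for small $\epsilon$, the smoothness parameter is unchanged up to constants.

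The next step is the suboptimality translation from $\widetilde\Phi$ back to $\Phi$. Writing $\widetilde x^*=\argmin_x\widetilde\Phi(x)$ and using $\widetilde\Phi(x^*)\geq\widetilde\Phi(\widetilde x^*)$ together with $\Phi(x)=\widetilde\Phi(x)-\frac{\epsilon}{2B^2}\|x\|^2$, I get
\begin{align*}
\Phi(z_K)-\Phi(x^*)\leq \big(\widetilde\Phi(z_K)-\widetilde\Phi(\widetilde x^*)\big)-\frac{\epsilon}{2B^2}\|z_K\|^2+\frac{\epsilon}{2B^2}\|x^*\|^2\leq \big(\widetilde\Phi(z_K)-\widetilde\Phi(\widetilde x^*)\big)+\frac{\epsilon}{2},
\end{align*}
where the last inequality drops the nonpositive term and uses $\|x^*\|=B$. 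Hence driving the regularized suboptimality below $\epsilon/2$ guarantees $\Phi(z_K)-\Phi(x^*)\leq\epsilon$, as required.

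Finally, I would substitute $\mu_x=\frac{\epsilon}{B^2}$ into the complexity bound of \Cref{upper_srsr}: the leading factor $\sqrt{\widetilde L_y/(\mu_x\mu_y^4)}$ becomes $B\sqrt{\widetilde L_y/(\epsilon\mu_y^4)}$, and the polynomial arguments inside the logarithms transform as $\mathrm{poly}(\mu_x,\mu_y,U,\dots)=\mathrm{poly}(\epsilon/B^2,\mu_y,U,\dots)$, which is itself a polynomial in $\epsilon,B,\mu_y,U$; the quantity $\Phi(x_0)-\Phi(x^*)$ is matched by $\widetilde\Phi(x_0)-\widetilde\Phi(\widetilde x^*)$ up to an additive $\epsilon/2$ (since $x_0=0$ gives $\widetilde\Phi(x_0)=\Phi(0)$ and $\widetilde\Phi(\widetilde x^*)\in[\Phi(x^*),\Phi(x^*)+\epsilon/2]$). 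This yields exactly the claimed bound. The main obstacle I anticipate is the careful bookkeeping in this last step—confirming that the perturbation to $L_x$, the substitution $\mu_x=\epsilon/B^2$, and the translation of the log-polynomial arguments all stay within the same order—rather than any new convergence argument, since all the heavy lifting (the induction controlling the iterates and the acceleration analysis under the warm-start strategy) is already encapsulated in \Cref{upper_srsr}.
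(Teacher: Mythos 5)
Your proposal is correct and follows essentially the same route as the paper's proof: apply \Cref{upper_srsr} to the regularized objective $\widetilde\Phi(x)=\Phi(x)+\frac{\epsilon}{2B^2}\|x\|^2$ with $\mu_x=\frac{\epsilon}{B^2}$, translate the suboptimality back via $\widetilde\Phi(z_K)\geq\Phi(z_K)$ and $\widetilde\Phi(\widetilde x^*)\leq\Phi(x^*)+\frac{\epsilon}{2}$ (using $\|x^*\|=B$), and substitute $\mu_x=\frac{\epsilon}{B^2}$ into the complexity bound. Your additional checks (preservation of Assumption~\ref{bounded_f_assump} since $\nabla_y\widetilde f=\nabla_y f$, and the bound $\widetilde\Phi(x_0)-\widetilde\Phi(\widetilde x^*)\leq\Phi(0)-\Phi(x^*)$) match what the paper uses, so no gap remains.
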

As shown in \Cref{convex_upper_BG}, our proposed AccBiO-BG algorithm achieves a complexity  of $\mathcal{\widetilde O}\big(\frac{1}{\epsilon^{0.5}\mu_y^2}\big)$, which improves the best known result $\mathcal{O}\big(\frac{1}{\epsilon^{0.75}\mu_y^{6.75}}\big)$ achieved by the ABA algorithm in \cite{ghadimi2018approximation} (see eq. (2.61) therein) by an order of $\mathcal{\widetilde O}\big(\frac{1}{\epsilon^{0.25}\mu_y^{4.75}}\big)$.

\section{Summary of Contributions}
In this chapter, we propose new acceleration algorithms named AccBiO and AccBiO-BG for bilevel optimization. 
For AccBiO, we provide the first-known convergence analysis without the bounded gradient assumption (which was made in existing studies). We further show AccBiO-BG achieves a significantly lower complexity than existing bilevel algorithms when the bounded gradient assumption does hold. We anticipate the proposed AccBiO and AccBiO-BG can be useful for various applications such as meta-learning and hyperparameter optimization. 

\chapter{Lower Bounds and Optimality for Bilevel Optimization}\label{chp_lower_bilevel}

In this chapter,  we develop lower bounds for problem-based bilevel optimization and discuss the optimality of the   AccBiO algorithm proposed in \Cref{chp_acc_bilevel}.  All technical proofs for the results in this chapter are provided in \Cref{appendix:lower_bilevel}. 


\section{Algorithm Class for Bilevel Optimization}
Compared to  {\bf minimization} and {\bf minimax} problems, the most different and challenging component of  {\bf bilevel optimization} lies in the computation  of the {\em hypergradient} $\nabla\Phi(\cdot)$. In specific, when functions $f$ and $g$ are continuously twice differentiable, it has been shown in~\cite{foo2008efficient} that  $\nabla\Phi(\cdot)$ takes the form of 
\begin{align}\label{hyperG}
\nabla \Phi(x) =&  \nabla_x f(x,y^*(x)) -\nabla_x \nabla_y g(x,y^*(x)) [\nabla_y^2 g(x,y^*(x)) ]^{-1}\nabla_y f(x,y^*(x)). 
\end{align}
In practice, exactly calculating the Hessian inverse  $(\nabla_y^2 g(\cdot) )^{-1}$ in \cref{hyperG} is computationally infeasible, and hence two types of hypergradient  estimation methods named AID and ITD have been proposed, where only efficient  {\bf Hessian- and Jacobian-vector products} need to be computed.
We present ITD- and AID-based bilevel optimization algorithms as follows. 

\begin{example}[ITD-based Bilevel Algorithms]\label{exam:itd}\cite{maclaurin2015gradient,franceschi2017forward,ji2020bilevel,grazzi2020iteration} Such type of algorithms use ITD-based methods for hypergradient computation, and take the following updates.  

\vspace{0.2cm}
\noindent For each outer iteration $m=0,....,Q-1$,
\begin{list}{$\bullet$}{\topsep=0.3ex \leftmargin=0.09in \rightmargin=0.in \itemsep =0.01in}
\item  Update variable $y$ for $N$ times via iterative algorithms (e.g., gradient descent, accelerated gradient methods).
\begin{align}\label{inner_up}
(\text{\normalfont Gradient descent:}) \quad y_{m}^t =  y_m^{t-1} - \eta \nabla_y g(x_m, y_m^{t-1}), t=1,...,N.
\end{align}
\item Compute the hypergradient estimate $G_m=\frac{\partial f(x_m,y_m^N(x_m))}{\partial x_m}$ via backpropagation. Under the gradient updates in~\cref{inner_up}, $G_m$ takes the form of 
\begin{align}\label{g:omamascsa}
G_m =& \nabla_x f(x_m,y^N_m) \nonumber
\\&-\eta\sum_{t=0}^{N-1}\nabla_x\nabla_y g(x_m,y_m^t)\prod_{j=t+1}^{N-1}(I-\eta  \nabla^2_y g(x_m,y_m^{j}))\nabla_y f(x_m,y_m^N).
\end{align}
A similar form holds for case when updating $y$ with accelerated gradient methods. 
\item Update $x$ based on $G_m$ via gradient-based iterative methods.
\end{list}
\end{example}
It can be seen from~\cref{g:omamascsa}  that only Hessian-vector products $ \nabla^2_y g(x_m,y_m^{j})v_j, j=1,...,N$ and Jacobian-vector products $\nabla_x\nabla_y g(x_m,y_m^j)v_{j},j=1,...,N$ are computed, where each $v_j$ is  obtained recursively via   
\begin{align*}
v_{j-1} = \underbrace{(I-\alpha  \nabla^2_y g(x_m,y_m^{j}))v_j}_{\text{Hessian-vector product}} \text{ with } v_N = \nabla_y f(x_m,y_m^N).
\end{align*} 
The same observation applies to the following AID-based bilevel methods.

\begin{example}[AID-based Bilevel Algorithms]\label{exam:aids}\cite{domke2012generic,pedregosa2016hyperparameter,grazzi2020iteration,ji2020bilevel} Such a class of algorithms use AID-based approaches for hypergradient computation, and take the following updates.

\vspace{0.2cm}
\noindent For each outer iteration $m=0,....,Q-1$,
\begin{list}{$\bullet$}{\topsep=0.3ex \leftmargin=0.11in \rightmargin=0.in \itemsep =0.01in}
\item  Update variable $y$ using gradient decent (GD) or accelerated gradient descent (AGD)
\begin{align} 
(\mbox{\normalfont GD:})\quad y_{m}^t &=  y_m^{t-1} - \eta \nabla_y g(x_m, y_m^{t-1}), t=1,...,N \nonumber
\\ (\mbox{\normalfont AGD:}) \quad y_m^{t} &= z_m^{t-1} - \eta \nabla_y g(x_m,z_m^{t-1}),  \nonumber
\\z_m^{t} &= \Big(1+\frac{\sqrt{\kappa_y}-1}{\sqrt{\kappa_y}+1}\Big) y_m^{t} - \frac{\sqrt{\kappa_y}-1}{\sqrt{\kappa_y}+1} y_m^{t-1}, t= 1,...,N 
\end{align}
where $\kappa_y= \widetilde L_y/\mu_y$ be the condition number of the inner-level function $g(x,\cdot)$. 
\item Update $x$ via $x_{m+1} = x_{m}-\beta G_m$, where $G_m$ is constructed via AID and takes the form of 
\begin{align}\label{aid_hgd}
G_m = \nabla_x f(x_m,y_m^N)- \nabla_x \nabla_y g(x_m,y_m^N)v_m^T, 
\end{align}
where vector $v_m^S$ is obtained by running $S$ steps of GD (with initialization $v_m^0=0$) or accelerated gradient methods (e.g., heavy-ball method with $v_m^0 = v_m^1=0$) to solve a quadratic programming
\begin{align}\label{exam:quadratic}
\min_{v} Q(v):= \frac{1}{2}v^T\nabla_y^2 g(x_m,y_m^N)v-v^T\nabla_y f(x_m,y_m^N).
\end{align}
\end{list}
\end{example}
We next verify that \cref{exam:aids} belongs  to the algorithm class defined in \Cref{alg_class}.  
For the case when $S$-steps GD with initialization $\bf{0}$ is applied to solve the quadratic program in~\cref{exam:quadratic}, simple telescoping yields 
\begin{align*}
v_m^{S} = \alpha \sum_{t=0}^{S-1}(I-\alpha\nabla_y^2g(x_m,y_m^N))^{t} \nabla_yf(x_m,y_m^N),
\end{align*}
which, incorporated into \cref{aid_hgd}, implies that $G_m$ falls into the span subspaces in~\cref{x_span}, and hence all updates fall into the subspaces $\mathcal{H}_x^k, \mathcal{H}_y^k, k=0,...,K$ defined in \Cref{alg_class}. For the case when heavy-ball method, i.e., $v_m^{t+1} = v_m^t-\eta_t \nabla Q(v_m^t) +\theta_t(v_m^t-v_m^{t-1})$, with initialization $v_m^0=v_m^1=\bf{0}$ is applied for~\cref{exam:quadratic}, expressing the updates via a dynamic system perspective yields 
\begin{align}\label{vmt_form} 
\begin{bmatrix}
v_m^{S}\\v_m^{S-1} 
\end{bmatrix}
= 
\sum_{s=2}^{S}\prod_{t=s}^{S-1}
\begin{bmatrix}
(1+\theta_t) I- \eta_t \nabla_y^2 g(x_m,y_m^N) & -\theta_t I\\I & \bf{0}
\end{bmatrix}
 \begin{bmatrix}
\eta_t \nabla_y f(x_m,y_m^N)\\\bf{0} 
\end{bmatrix}.
\end{align}


We next introduce a general hypergradient-based algorithm class, which includes the above AID- and ITD-based bilevel optimization algorithms. 
\begin{definition}[Hypergradient-Based Algorithm Class]\label{alg_class}
Suppose there are totally $K$ iterations and $x$ is updated for $Q$ times at iterations indexed by $s_i,i=1,...,Q-1$ with $s_0<...<s_{Q-1}\leq K$. Note that $Q$ is an arbitrary  positive integer in $0,...,K$ and $s_i,i=1,...,Q-1$ are $Q$ arbitrary distinct integers in $0,...,K$. The iterates $\{(x_k,y_k)\}_{k=0,...,K}$ are generated according to $(x_k,y_k)\in\mathcal{H}_x^k, \mathcal{H}_y^k$, where the linear subspaces $\mathcal{H}_x^k, \mathcal{H}_y^k, k=0,...,K$ with $\mathcal{H}_x^{0} = \mathcal{H}_y^0=\{{\bf 0 }\}$ are given as follows. 
\begin{align}\label{hxk}
H_y^{k+1} = \text{\normalfont Span}\left\{y_i,\nabla_y g(\widetilde x_i,\widetilde y_i), \forall \widetilde x_i\in\mathcal{H}_x^i,\forall y_i,\widetilde y_i\in\mathcal{H}_y^i, 1\leq i \leq k \right\}.  
\end{align}
For $x$, we have, for all $m=0,..., Q-1$,  
\begin{align}\label{x_span}
&\mathcal{H}_x^{s_m} =  \text{\normalfont Span} \Big\{x_i,\nabla_xf(\widetilde x_i,\widetilde y_i),\nabla_x\nabla_y g( x_{i}^t, y_{i}^t)\prod_{j=1}^t(I-\alpha \nabla_y^2g(x_{i,j}^t,y_{i,j}^t))\nabla_y f(\hat x_i,\hat y_i), \nonumber
\\&\hspace{0.3cm}t=0,...,T,\forall x_i, \hat x_i, x_{i}^t,x_{i,j}^t \in \mathcal{H}_x^i, \forall \hat y_i, y_{i}^t,y_{i,j}^t \in \mathcal{H}_y^i, 1\leq i \leq s_m-1, \forall \alpha\in\mathbb{R}, T\in \mathbb{N}\Big\}\nonumber
\\ &\mathcal{H}_x^n = \mathcal{H}_x^{s_m}, \forall s_m\leq n\leq s_{m+1}-1 \text{ with } s_Q=K+1. 
\end{align}
\end{definition}
It can be easily verified that the ITD-based methods in \Cref{exam:itd} belong to the algorithm class in  \Cref{alg_class}. 
Combining $v_m^S$ in \cref{vmt_form} with~\cref{aid_hgd}, we can see that the resulting $G_m$ falls into the span subspaces in~\cref{x_span}, and hence the AID-based methods in \Cref{exam:aids} also belong to the algorithm class in  \Cref{alg_class}. 
Note that the algorithm class considered in \Cref{alg_class} also includes single-loop (i.e., updating $x$ and $y$ simultaneously) bilevel optimization algorithms, e.g., by setting $N=1$ in \Cref{exam:itd} and \Cref{exam:aids}.

Note that in this algorithm class, $x$ can be updated at any iteration due to the arbitrary choices of $Q,s_i,i=1,...,Q-1$ and the hypergradient estimate can be constructed using any combination of points in the historical search space (similarly for $y$). 

%

\section{Lower Bound for Strongly-Convex-Strongly-Convex Case}
We first study the case when $\Phi(\cdot)$ is $\mu_x$-strongly-convex and the inner-level function $g(x,\cdot)$ is $\mu_y$-strongly-convex. We present our lower bound result for this case in the following theorem.
\begin{theorem}\label{thm:low1} 
Let $M = K+QT+Q + 2$ with $K, T,Q $ given by~\Cref{alg_class}. 
There exists a problem instance in $\mathcal{F}_{scsc}$ defined in~\Cref{de:pc} with dimensions $p=q=d> \max\big\{2M,M+1+\log_{r}\big(\mbox{\normalfont poly}\big(\mu_x\mu_y^2\big)\big)\big\}$ such that for this problem, any output $x^K$ belonging to the subspace $\mathcal{H}_x^K$, i.e., generated by any algorithm in the hypergradient-based algorithm class
defined in~\Cref{alg_class}, satisfies 
\begin{align}\label{result:first}
\Phi(x^K)-\Phi(x^*) \geq \Omega\Big(\mu_x\mu_y^2(\Phi(x_0)-\Phi(x^*)) r^{2M}\Big),
\end{align}
where $x^*=\argmin_{x\in\mathbb{R}^d}\Phi(x)$ and the parameter $r$ satisfies $1-\Big(\frac{1}{2}+\sqrt{\xi+\frac{1}{4}}\Big)^{-1} < r< 1$ with $\xi$ given by 
$\xi\geq \frac{\widetilde L_y}{4\mu_y} +\frac{L_x}{8\mu_x} +\frac{L_y\widetilde L_{xy}^2}{8\mu_x\mu_y^2} -\frac{3}{8}\geq \Omega \big(\frac{1}{\mu_x\mu_y^2}\big)$. 
To achieve $\Phi(x^K)-\Phi(x^*)\leq \epsilon$,  the total complexity $\mathcal{C}_{\text{\normalfont sub}}(\mathcal{A},\epsilon)$ satisfies 
\begin{align*}
\mathcal{C}_{\text{\normalfont sub}}(\mathcal{A},\epsilon) \geq \Omega\bigg(\sqrt{\frac{L_y\widetilde L_{xy}^2}{\mu_x\mu_y^2}}\log  \frac{\mu_x\mu_y^2(\Phi(x_0)-\Phi(x^*))}{\epsilon}  \bigg).
\end{align*}
\end{theorem}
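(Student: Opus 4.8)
The plan is to establish the lower bound by constructing a single hard instance $(f,g)$ in $\mathcal{F}_{scsc}$ on which \emph{every} algorithm in the hypergradient-based class of \Cref{alg_class} makes provably slow progress, and then convert the resulting suboptimality bound into a complexity bound. Following the classical Nesterov-style strategy adapted to the bilevel setting, I would take both $f$ and $g$ to be quadratics, with $g(x,y)=\frac12 y^T H y + x^T J y + (\text{terms in }x)$ so that $\nabla_y^2 g = H$ and $\nabla_x\nabla_y g = J$ are \emph{constant} and $y^*(x) = -H^{-1}(J^T x + \cdots)$ is explicit. The matrices $H$ (with $\mu_y I\preceq H$ enforcing $\mu_y$-strong-convexity and $\|H\|\le\widetilde L_y$) and the bilinear coupling $J$ (with $\|J\|\le\widetilde L_{xy}$) are chosen to be \emph{tridiagonal / shift-structured}, so that the hypergradient $\nabla\Phi(x)=\nabla_x f - J H^{-1}\nabla_y f$ has Hessian equal to a scaled copy of the canonical tridiagonal (graph-Laplacian-type) matrix. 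The point of this design is twofold: it makes $\Phi(\cdot)$ genuinely $\mu_x$-strongly-convex with the prescribed condition number $\xi\asymp \frac{L_y\widetilde L_{xy}^2}{\mu_x\mu_y^2}$, and it forces the minimizer $x^*$ to have geometrically decaying coordinates $x^*_i\propto r^{i}$ with $r\approx 1-1/\sqrt{\xi}$, exactly the root appearing in \cref{result:first}.

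First I would verify that this instance lies in $\mathcal{F}_{scsc}$ and that all the Lipschitz and strong-convexity constants $L_x,L_y,\widetilde L_y,\widetilde L_{xy},\mu_x,\mu_y$ match the quantities in the statement, so that the stated bound on $\xi$ follows from a direct computation of $\nabla^2\Phi$. The core of the argument is then the subspace (``zero-respecting'') characterization. Starting from $\mathcal{H}_x^0=\mathcal{H}_y^0=\{\mathbf{0}\}$, I would prove by induction on $k$ that $\mathcal{H}_y^k\subseteq\mathrm{Span}\{e_1,\dots,e_{s(k)}\}$ and $\mathcal{H}_x^k\subseteq\mathrm{Span}\{e_1,\dots,e_{t(k)}\}$ for indices growing slowly in $k$. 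The key structural fact is that every admissible operation --- a gradient $\nabla_y g$, a Jacobian- or Hessian-vector product with $J$ or $H$, and crucially the products $\prod_{j=1}^t(I-\alpha\nabla_y^2 g)$ allowed in \cref{x_span} for any $\alpha$ and any power $t\le T$ --- advances the support by at most one coordinate per factor, because $H$ and $J$ are shift-structured. Summing over the budget gives that the final iterate $x^K$ is supported on the first $M=K+QT+Q+2$ coordinates, where the $QT$ term absorbs the at-most-$T$-coordinate jump from each of the $Q$ power products used in the $x$-updates.

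With the support confined to the first $M$ coordinates, the suboptimality is lower-bounded by the ``tail'' of the geometric minimizer: choosing the ambient dimension $d>\max\{2M,\,M+1+\log_r(\mathrm{poly}(\mu_x\mu_y^2))\}$ guarantees enough uncoupled mass beyond coordinate $M$, so that
\begin{align*}
\Phi(x^K)-\Phi(x^*)\ \geq\ \frac{\mu_x}{2}\,\|x^K-x^*\|^2\ \geq\ \frac{\mu_x}{2}\sum_{i>M}(x^*_i)^2\ \gtrsim\ \mu_x\mu_y^2\,(\Phi(x_0)-\Phi(x^*))\,r^{2M},
\end{align*}
using $x_0=\mathbf{0}$ and tracking the normalization constants of $J$ and $H$ that produce the $\mu_x\mu_y^2$ prefactor. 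This is \cref{result:first}. Finally, to reach accuracy $\epsilon$ one needs $r^{2M}\lesssim \epsilon/(\mu_x\mu_y^2(\Phi(x_0)-\Phi(x^*)))$, i.e. $M\gtrsim \frac{1}{1-r}\log(\cdots)\asymp\sqrt{\xi}\,\log(\cdots)$; since the total complexity $\mathcal{C}_{\text{sub}}(\mathcal{A},\epsilon)=\tau(n_J+n_H)+n_G$ is proportional to the number $M$ of admissible operations, this yields the claimed $\Omega\big(\sqrt{L_y\widetilde L_{xy}^2/(\mu_x\mu_y^2)}\,\log(\cdots)\big)$ bound.

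I expect the main obstacle to be the coupled subspace characterization in the presence of the inverse-Hessian surrogate. Unlike single-level lower bounds, the admissible $x$-subspace in \cref{x_span} contains the products $\prod_{j=1}^t(I-\alpha\nabla_y^2 g)\nabla_y f$ with \emph{arbitrary} stepsize $\alpha$ and arbitrary power $t$, which is precisely how ITD/AID algorithms emulate $[\nabla_y^2 g]^{-1}$; I must show that even these powers cannot ``leapfrog'' the chain faster than one coordinate per factor, and that interleaving the $x$- and $y$-subspaces through the bilinear term $J$ creates no additional reach. Getting the structured matrices $H$ and $J$ to \emph{simultaneously} realize the shift property, keep $\Phi$ strongly convex with the desired $\nabla^2\Phi$, and produce the precise condition-number factor $\xi\gtrsim 1/(\mu_x\mu_y^2)$ is the delicate part of the construction that drives the separation from minimax optimization.
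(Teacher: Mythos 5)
Your skeleton matches the paper's proof: a quadratic hard instance built on zero-chain matrices, an inductive subspace characterization showing $x^K$ is supported on the first $M=K+QT+Q+2$ coordinates (with the $QT$ term correctly absorbing the powers $\prod_{j}(I-\alpha\nabla_y^2 g)$ allowed in \Cref{alg_class}), a geometrically decaying minimizer, a strong-convexity tail bound, and the conversion $M\gtrsim \frac{1}{1-r}\log(\cdot)$. Steps 3 and 4 of your plan are essentially the paper's Steps 3 and 4.

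The genuine gap is in the construction, precisely at the point you flag as "delicate" but then resolve incorrectly. You require $H=\nabla_y^2 g$ to be a nontrivial tridiagonal matrix (so that $\mu_y I\preceq H\preceq \widetilde L_y I$ with $\widetilde L_y>\mu_y$) and simultaneously assert that this shift structure makes $\nabla^2\Phi$ "a scaled copy of the canonical tridiagonal matrix," whence $x^*_i\propto r^i$ with $r\approx 1-1/\sqrt{\xi}$. These two requirements are incompatible: for quadratic $f,g$ one has $\nabla^2\Phi = A_{xx}-A_{xy}H^{-1}J^T-JH^{-1}A_{xy}^T+JH^{-1}A_{yy}H^{-1}J^T$, so any non-scalar $H$ injects $H^{-1}$ and $H^{-2}$ factors and makes $\nabla^2\Phi$ a \emph{rational} (dense) function of $Z^2$, not a tridiagonal one; the geometric form of $x^*$ and the rate $1-1/\sqrt{\xi}$ therefore do not follow from your stated reasoning. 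This is exactly where the paper introduces an ingredient you never mention: the cross term $-\frac{\alpha\beta}{\widetilde L_{xy}}x^TZ^3y$ in $f$ (see \cref{str_fg}), chosen so that after multiplying the optimality condition by $(\beta Z^2+\mu_y I)^2$ the leading $Z^6$ term cancels and $x^*$ satisfies the \emph{quartic} equation $Z^4x^*+\lambda Z^2x^*+\tau x^*=\widetilde b$ of \cref{z_equations}, with $\lambda=\Theta(1)$ and $\tau=\Theta(\mu_x\mu_y^2)$. The decay rate $r$ is then extracted not from a tridiagonal recursion but from a root bound for the quartic \cref{eq:soulc} (\Cref{le:x_star}, invoking Lemma 4.2 of Zhang et al.), and it is this analysis that delivers $\xi\gtrsim L_y\widetilde L_{xy}^2/(\mu_x\mu_y^2)$; the paper's remark after the proof sketch shows that omitting the cancellation term degrades the bound. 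You could salvage your plan by taking $H=\mu_y I$: then $\nabla^2\Phi=\mu_x I+\frac{L_y\widetilde L_{xy}^2}{4\mu_y^2}Z^2$ genuinely is tridiagonal, the powers $\prod_j(I-\alpha H)$ become scalars, and the final complexity bound $\Omega\big(\sqrt{L_y\widetilde L_{xy}^2/(\mu_x\mu_y^2)}\log(\cdot)\big)$ follows; but this contradicts your own stipulation on $H$, and such an instance realizes neither the $\frac{\widetilde L_y}{4\mu_y}+\frac{L_x}{8\mu_x}$ terms in the theorem's $\xi$ nor, consequently, the stated range of $r$ in \cref{result:first}. As written, the proposal's central structural claim (shift-structured $H$ and $J$ imply tridiagonal $\nabla^2\Phi$) is false, and the missing cross-term-plus-quartic-root argument is the core of the paper's proof rather than a detail.
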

Note that the inner-level function $g(x,y)$ in our constructed worst-case instance takes the same quadratic form as in \cref{quadratic_case} so that the lower bound in \Cref{thm:low1} also applies to the quadratic $g$ subclass. 
We provide a proof sketch of \Cref{thm:low1} as follows, and present the complete proof in~\Cref{appen:thm1}. 
 
\subsection*{Proof Sketch of \Cref{thm:low1}}
The proof of  \Cref{thm:low1}  can be divided into four main steps: 1) constructing a worst-case instance $(f,g)\in\mathcal{F}_{scsc}$ ; 2) characterizing the optimal point $x^*=\argmin_{x\in\mathbb{R}^d}\Phi(x)$; 3) characterizing the subspaces $\mathcal{H}_x^K,\mathcal{H}_y^K$; and 4) lower-bounding the convergence rate and complexity. 

\vspace{0.1cm}
\noindent {\bf Step 1 (construct a worse-case instance):} We construct the following instance functions $f$ and $g$.
 \begin{align}\label{str_fg}
f(x,y) &= \frac{1}{2} x^T(\alpha Z^2 +\mu_x I) x -\frac{\alpha\beta}{\widetilde L_{xy}}x^TZ^3y +\frac{\bar L_{xy}}{2}x^TZy + \frac{L_y}{2}\|y\|^2 + \frac{\bar L_{xy}}{\widetilde L_{xy}} b^T y, \nonumber
\\g(x,y) &= \frac{1}{2} y^T (\beta Z^2 +\mu_y I)y -\frac{\widetilde L_{xy}}{2} x^TZy + b^Ty, 
\end{align} 
where $\alpha=\frac{L_x-\mu_x}{4}$, $\beta=\frac{\widetilde L_y-\mu_y}{4}$, and the coupling matrices  $Z,Z^2,Z^4$ take the forms of 
{\footnotesize
\begin{align}\label{matrices_coupling}
Z= \begin{bmatrix}
 &   &  & 1\\
 &  &  1& -1  \\
  &\text{\reflectbox{$\ddots$}} &\text{\reflectbox{$\ddots
  $}} &  \\
  1& -1  &  & \\ 
\end{bmatrix},\;
Z^2 =
\begin{bmatrix}
 1& -1 &  &  & \\
 -1& 2  &-1 &  & \\
 &   \ddots & \ddots & \ddots  & \\
  && -1& 2 & -1 \\
  &  & & -1 & 2\\ 
\end{bmatrix},\;
Z^4 =  
\begin{bmatrix}
 2& -3 & 1 &  & &\\
 -3& 6  &-4 &1  & &\\
 1& -4 & 6 & -4 & 1&\\
  &  \ddots&  \ddots&  \ddots &  \ddots & \ddots\\
   &  & 1& -4 &  6& -4 \\
  &  & & 1 & -4 & 5\\ 
\end{bmatrix}.
\end{align}
}
\hspace{-0.13cm}The above matrices play an important role in developing lower bounds due to their following zero-chain properties~\cite{nesterov2003introductory,zhang2019lower}. Let $\mathbb{R}^{k,d}= \{x\in\mathbb{R}^d | x_i=0 \text{ for } k+1\leq i \leq d\}$, where $x_i$ denotes the $i^{th}$ coordinate of the vector $x$. 

\begin{lemma}[Zero-Chain Property]\label{zero_chain}
For any given vector $v\in\mathbb{R}^{k,d}$, we have $Z^2v\in\mathbb{R}^{k+1,d}$.
\end{lemma}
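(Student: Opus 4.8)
The plan is to exploit the tridiagonal (bandwidth-one) structure of the matrix $Z^2$ displayed in \cref{matrices_coupling}. The key observation is that multiplication by a tridiagonal matrix can enlarge the support of a vector by at most one index in each direction, so a vector supported on the first $k$ coordinates is mapped to one supported on the first $k+1$ coordinates.

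Concretely, first I would record the explicit entrywise action of $Z^2$. Reading off the matrix, its diagonal entries are $1$ in position $(1,1)$ and $2$ in positions $(i,i)$ for $2\le i\le d$, while the only nonzero off-diagonal entries are $-1$ on the first sub- and super-diagonals. Hence for any $v\in\mathbb{R}^d$ and any index $i$,
\[
(Z^2 v)_i = -v_{i-1} + d_i v_i - v_{i+1},
\]
with the conventions $v_0 = v_{d+1}=0$, $d_1=1$ and $d_i=2$ for $i\ge 2$; that is, the $i$-th coordinate of $Z^2 v$ depends only on $v_{i-1}, v_i, v_{i+1}$.

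Next I would invoke the hypothesis $v\in\mathbb{R}^{k,d}$, which means $v_j=0$ for all $j\ge k+1$. Fixing any index $i\ge k+2$, the three relevant indices satisfy $i-1\ge k+1$, so $v_{i-1}=v_i=v_{i+1}=0$, and therefore $(Z^2 v)_i = 0$. Since this holds for every $i\ge k+2$, the vector $Z^2 v$ vanishes on coordinates $k+2,\dots,d$, i.e. $Z^2 v\in\mathbb{R}^{k+1,d}$, which is the claim.

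There is no serious obstacle here; the result follows directly from the band structure, and the only points needing care are the boundary conventions (the first diagonal entry being $1$ rather than $2$, and the absent terms $v_0, v_{d+1}$), which are harmless because they never enter the computation for $i\ge k+2$. I would also remark that the inclusion cannot be sharpened to $\mathbb{R}^{k,d}$, since $(Z^2 v)_{k+1} = -v_k$ is generally nonzero; this one-step spreading is exactly the zero-chain behavior that is later used to propagate the subspaces $\mathcal{H}_x^k,\mathcal{H}_y^k$ of \Cref{alg_class}.
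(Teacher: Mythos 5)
Your proof is correct and is exactly the standard band-structure argument that the paper itself relies on (the paper states the lemma with a citation rather than an explicit proof): since $Z^2$ in \cref{matrices_coupling} is tridiagonal, $(Z^2v)_i$ depends only on $v_{i-1},v_i,v_{i+1}$, so a vector supported on the first $k$ coordinates maps to one supported on the first $k+1$ coordinates. Your boundary-condition bookkeeping and the remark that the inclusion is sharp (via $(Z^2v)_{k+1}=-v_k$) are both accurate and consistent with how the lemma is used in Steps 3--4 of the lower-bound proofs.
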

\Cref{zero_chain} indicates  that if a vector $v$ has nonzero entries only at the first $k$ coordinates, then multiplying it with a matrix $Z^2$ has at most one more nonzero entry at position $k+1$. We demonstrate the validity of the constructed instance by 
showing that  $f$ and $g$ in \cref{str_fg} satisfy Assumptions~\ref{fg:smooth} and~\ref{g:hessiansJaco}, and $\Phi(x)$ is $\mu_x$-strongly-convex.  

\vspace{0.1cm}
 \noindent {\bf Step 2 (characterize the minimizer $x^*$):} We show that the unique minimizer $x^*$ satisfies the following equation   
 \begin{align}\label{refere:eqs}
 Z^4 x^* +\lambda Z^2x^* +\tau x^* = \gamma Zb,
\end{align}
where $\lambda=\Theta(1)$ and $\gamma=\Theta(1)$, $\tau = \Theta(\mu_x\mu_y^2)$. We  choose $b$ in \cref{refere:eqs} such that $(Zb)_t=0$  for all $t\geq 3$, which is feasible because we show that $Z$ is invertible. Based on the structure of $Z$ in \cref{matrices_coupling}, we show that there exists a vector $\hat x$ with its $i^{th}$ coordinate $\hat x_i = r^i$ such that 
\begin{align}\label{sketch_p}
\|x^*-\hat x\| \leq \mathcal{O}(r^d),
\end{align}
where $0<r<1$ satisfies  $1-r=\Theta(\mu_x\mu_y^2)$. Then, based on the above  \cref{sketch_p}, we are able to characterize $x^*$, e.g., its norm $\|x^*\|$, using its approximate (exponentially close) $\hat x$. 

\vspace{0.1cm}
 \noindent {\bf Step 3 (characterize the iterate subspaces):}  By exploiting the forms of  the subspaces {\small$\{\mathcal{H}_x^k,\mathcal{H}_y^k\}_{k=1}^K$}
defined in \Cref{alg_class}, we use the induction to show that 
{\small$H_{x}^{K }\subseteq \mbox{Span}\{Z^{2(K+QT+Q)}(Zb),....,Z^2(Zb),(Zb) \}$}.
 Then, noting that $(Zb)_t=0$  for all $t\geq 3$ and using the zero-chain property of $Z^2$, we have the $t^{th}$ coordinate of the output $x^K$ to be zero, i.e., $(x^K)_{t}=0$,  for all $t\geq M+1$.
 
 \vspace{0.1cm}
 \noindent {\bf Step 4 (combine Steps $1,2,3$ and characterize the complexity):} By choosing $d> \max\big\{2M,M+1+\log_{r}\big(\frac{\tau}{4(7+\lambda)}\big)\big\}$, and based on {\bf Steps} 2 and 3, we have  
$ \|x^K-x^*\| \geq \frac{\|x^*-x_0\|}{3\sqrt{2}} r^M$
which, in conjunction with the form of $\Phi(x)$, yields the result in~\cref{result:first}. The complexity result then follows because $1-r=\Theta(\mu_x\mu_y^2)$ and from the definition of the complexity measure in~\Cref{complexity_measyre}.

\vspace{0.2cm}
{\noindent \bf Remark.} We note that the introduction of the term $\frac{\alpha\beta}{\widetilde L_{xy}}x^TZ^3y$ in $f$ is necessary to obtain the lower bound $\widetilde \Omega (\mu_x\mu_y^{2})$. Without such a term, there will be an additional high-order term $\Omega(A^6x)$ at the left hand side of \cref{refere:eqs}. Then, following the same steps as in Step 2, we would obtain a result similar to \cref{sketch_p}, but with a parameter $r$ satisfying 
$ 0<\frac{1}{1-r}<\mathcal{O}\big( {\frac{1}{\mu_x\mu_y}} \big).$
Then, following the same steps as in Steps 3 and 4, the final overall complexity $\mathcal{C}_{\text{\normalfont sub}}(\mathcal{A},\epsilon)\geq \Omega\big( {\frac{1}{\mu_x\mu_y}} \big)$, which is not as tight as 
 $\Omega\big( {\frac{1}{\mu_x\mu^2_y}} \big)$ which we obtain under the selection in \cref{str_fg}.

\section{Lower Bound for Convex-Strongly-Convex Case}
We next characterize the lower complexity bound for the convex-strongly-convex setting, where $\Phi(\cdot)$ is {\bf convex} and the inner-level function $g(x,\cdot)$ is $\mu_y$-strongly-convex.
We state our main result for this case in the following theorem.
\begin{theorem}\label{main:convex}
Let $M = K+QT-Q+3$ with $K, T,Q $ given by~\Cref{alg_class}, and let $x^K$ be an output belonging to the  subspace $\mathcal{H}_x^K$, i.e., generated by any algorithm in the hypergradient-based algorithm class
defined in~\Cref{alg_class}.  
There exists an instance in $\mathcal{F}_{csc}$ defined in~\Cref{de:pc} with dimensions $p=q=d$ such that in order to achieve $\|\nabla\Phi(x^K)\|\leq \epsilon$, it requires 
$M \geq \lfloor r^*\rfloor -3$, 
 where $r^*$ is the solution of the equation 
 \begin{align}\label{eq:rsoltion}
 r^4 +r \Big(\frac{2\beta^4}{\mu_y^4}+\frac{4\beta^3}{\mu_y^3}+\frac{4\beta^2}{\mu_y^2}\Big) = \frac{B^2(\widetilde L^2_{xy}L_y+L_x\mu_y^2)^2}{128\mu_y^4\epsilon^2},
 \end{align}
 where $\beta=\frac{\widetilde L_y-\mu_y}{4}$ and $B$ is defined in \Cref{de:pc}. The total complexity satisfies $\mathcal{C}_{\text{\normalfont norm}}(\mathcal{A},\epsilon)\geq \Omega(r^*)$.
\end{theorem}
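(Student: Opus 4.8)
The plan is to follow the same four-step template as the proof sketch of \Cref{thm:low1}, but adapted to the convex geometry and to the gradient-norm criterion $\|\nabla\Phi(x^K)\|\leq\epsilon$. First I would construct a worst-case instance $(f,g)\in\mathcal{F}_{csc}$ of the same quadratic flavor as \cref{str_fg}, but with the strong-convexity term of the outer variable removed (i.e.\ taking $\mu_x=0$ in the $\frac{1}{2}x^T(\alpha Z^2+\mu_x I)x$ block) so that $\Phi(\cdot)$ becomes convex while $g(x,\cdot)$ retains its $\mu_y$-strong convexity through the block $\frac{1}{2}y^T(\beta Z^2+\mu_y I)y$. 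Here $Z$ is the unnormalized path-graph Laplacian factor of \cref{matrices_coupling}, $\beta=\frac{\widetilde L_y-\mu_y}{4}$, and the linear term $b$ is chosen (as in Step 2 of the SCSC sketch) so that $Zb$ is supported on only the first few coordinates. I would verify that this instance obeys Assumptions~\ref{fg:smooth} and~\ref{g:hessiansJaco}, that $\Phi$ is convex, and that its minimizer satisfies $\|x^*\|=B$ as required by \Cref{de:pc}.

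The second step, which replaces the strongly-convex minimizer characterization, is to compute the hypergradient $\nabla\Phi(x)$ in closed form. Because everything is quadratic, $y^*(x)=(\beta Z^2+\mu_y I)^{-1}(\cdots)$ is linear in $x$, and substituting into \cref{hyperG} yields $\nabla\Phi(x)=Ax-c$, where $A$ is a fixed matrix built from the powers $Z^2,Z^4$ and the inverse Hessian $(\beta Z^2+\mu_y I)^{-1}$, and $c$ is built from the low-support vector $b$. The crucial structural fact I would exploit is that, after expanding $(\beta Z^2+\mu_y I)^{-1}=\frac{1}{\mu_y}\sum_{k\geq 0}(-\frac{\beta}{\mu_y}Z^2)^k$, the operator $A$ is a rational function of $Z^2$ whose leading polynomial behaviour is governed by $Z^4$; the coefficients $\frac{2\beta^4}{\mu_y^4},\frac{4\beta^3}{\mu_y^3},\frac{4\beta^2}{\mu_y^2}$ appearing in \cref{eq:rsoltion} are precisely the weights of the lower powers of $Z$ relative to $Z^4$ coming from this Neumann expansion.

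Third, I would reuse the zero-chain argument: by induction over the subspaces $\{\mathcal{H}_x^k,\mathcal{H}_y^k\}$ of \Cref{alg_class}, together with the fact that the algorithm only ever multiplies by $Z^2$ (through Hessian-vector products and through the inner quadratic solve), the output $x^K$ lies in the Krylov subspace $\mathrm{Span}\{(Zb),Z^2(Zb),Z^4(Zb),\dots\}$, and the zero-chain property of \Cref{zero_chain} then forces $(x^K)_t=0$ for all $t\geq M+1$, where $M=K+QT-Q+3$ absorbs the bookkeeping of the inner solve. The last step is then a residual lower bound: since $x^K\in\mathbb{R}^{M,d}$, I would bound $\|\nabla\Phi(x^K)\|=\|Ax^K-c\|$ below by $\min_{x\in\mathbb{R}^{M,d}}\|Ax-c\|$ and show, using the explicit power structure of $A$ and the support of $c$, that this minimum residual exceeds $\epsilon$ whenever $M<\lfloor r^*\rfloor-3$. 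Setting the residual estimate equal to $\epsilon$ produces exactly the quartic balance \cref{eq:rsoltion}, after which $\mathcal{C}_{\text{\normalfont norm}}(\mathcal{A},\epsilon)\geq\Omega(r^*)$ follows from the complexity measure of \Cref{complexity_measyre} and the relation $M=O(\mathcal{C}_{\text{\normalfont norm}})$.

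The main obstacle is the residual lower bound in the final step. Unlike the strongly-convex case of \Cref{thm:low1}, where strong convexity of $\Phi$ lets one convert a bound on $\|x^K-x^*\|$ into a geometric (exponentially decaying) rate via \cref{sketch_p}, here $\Phi$ is only convex, so the residual decays merely polynomially in $M$ and must be controlled directly. Concretely, I expect the effort to lie in sharply estimating $\min_{x\in\mathbb{R}^{M,d}}\|Ax-c\|$ where $A$ is a high-degree rational function of the truncated Laplacian $Z^2$: this is a best-polynomial-approximation estimate over the spectrum of $Z$ restricted to the leading $M$ coordinates, and it is the interplay of the $Z^4$ leading term with the $\beta^k/\mu_y^k$-weighted lower-order terms that yields the quartic equation for $r^*$ rather than a cleaner closed form. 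Guaranteeing that the constructed $b$ keeps $c$ concentrated enough for the zero-chain argument while still making the residual large is the delicate point I would have to get right.
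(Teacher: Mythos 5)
Your proposal reproduces the right skeleton (quadratic instance, zero-chain/Krylov argument for $\mathcal{H}_x^K$, residual lower bound over truncated vectors, then solve for $\epsilon$), but two of its key steps cannot be carried out as written, and both are places where the paper's proof relies on devices your plan lacks. First, the instance: you propose to recycle the strongly-convex-strongly-convex construction \cref{str_fg} built on the coupling matrix $Z$ of \cref{matrices_coupling}, setting $\mu_x=0$ and keeping the bilinear terms. The paper instead uses \cref{con_fg}, with the plain outer function $f(x,y)=\frac{L_x}{8}x^TZ^2x+\frac{L_y}{2}\|y\|^2$ and, crucially, a \emph{different} coupling matrix \cref{matrices_coupling_convex} whose boundary rows are reversed. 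This is not cosmetic: the convex-case argument needs the two exact identities $Z^2\mathbf{1}=e_1$ and $Z^2h=e_d$ for $h=(1,2,\ldots,d)^T$, which hold for \cref{matrices_coupling_convex} but fail (indeed reverse, $Z^2\mathbf{1}=e_d$) for the matrix you chose. The first identity is what allows $\widetilde b$ to be supported on the first three coordinates while forcing $x^*=\frac{B}{\sqrt d}\mathbf{1}$, so that $\|x^*\|=B$ exactly as \Cref{de:pc} requires and the support of $b$ is compatible with the zero-chain argument; with your $Z$, forcing $x^*\propto\mathbf{1}$ pushes $b$'s support to the \emph{last} coordinates, which destroys the zero-chain step, and with $\mu_x=0$ the geometric-decay characterization of $x^*$ from \cref{sketch_p} is also unavailable (it needs $\tau=\Theta(\mu_x\mu_y^2)>0$).

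Second, the final residual bound, which you correctly identify as the hard step, is attacked in your plan as a best-polynomial-approximation problem: lower bounding $\min_{x\in\mathbb{R}^{M,d}}\|Ax-c\|$ for every $M$. The paper never solves such a problem. It instead tunes the dimension to the accuracy, taking $d=\lfloor r^*\rfloor$ with $r^*$ solving \cref{eq:rsoltion}, and then needs only a lower bound on $\min\{\|\nabla\Phi(x)\|: x_{d-2}=x_{d-1}=x_d=0\}$, i.e., over a single codimension-$3$ subspace. That minimum is computed essentially in closed form via the explicit null vector $z\propto(\beta Z^2+\mu_yI)^2h$ of the truncated operator (this is where the second identity above enters), and the coefficients $\frac{2\beta^4}{\mu_y^4},\frac{4\beta^3}{\mu_y^3},\frac{4\beta^2}{\mu_y^2}$ in \cref{eq:rsoltion} arise from the boundary terms of $\|(\beta Z^2+\mu_yI)^2h\|$ --- not, as you suggest, from a Neumann expansion of $(\beta Z^2+\mu_yI)^{-1}$. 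Since any output with $M\leq d-3$ automatically lies in that subspace by the zero-chain step (the one part of your plan that does match the paper), the theorem follows with no approximation-theoretic estimate at all. Without the dimension-tuning-plus-null-vector device, your program leaves its hardest step unproved, and it is not clear it would produce the specific quartic balance \cref{eq:rsoltion} claimed in the statement.
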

Note that \Cref{main:convex} uses the gradient norm $\|\nabla \Phi(x)\|\leq \epsilon$ rather than the suboptimality gap $\Phi(x^K)-\Phi(x^*)$ as the convergence criteria. This is because for the convex-strongly-convex case,  lower-bounding the suboptimality gap requires  the Hessian matrix $A$ in the worst-case construction of the total objective function $\Phi(x)$ to have a nice structure, e.g., the solution of $A^\prime x = e_1$ ($e_1$ has a single non-zero value $1$ at the first coordinate) is explicit, where $A^\prime$ is derived by removing last $k$ columns and rows of $A$.  However, in bilevel optimization, $A$ often contains different powers of the zero-chain matrix $Z$, and does not have such a structure. We will leave the lower bound under the suboptimality criteria for the future study. 

Note that $r^*$ in \Cref{main:convex} has a complicated form. 
The following two corollaries simplify the complexity results by considering specific parameter regimes. 
\begin{corollary}\label{co:co1}
Under the same setting of \Cref{main:convex}, consider the case when $\beta\leq \mathcal{O}(\mu_y)$. Then, we have 
$\mathcal{C}_{\text{\normalfont norm}}(\mathcal{A},\epsilon)\geq \Omega \big(\frac{B^{\frac{1}{2}}(\widetilde L^2_{xy}L_y+L_x\mu_y^2)^{\frac{1}{2}}}{\mu_y\epsilon^{\frac{1}{2}}} \big)$. 
\end{corollary}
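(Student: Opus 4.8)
The plan is to derive the corollary directly from \Cref{main:convex}, which already reduces the lower-bound claim to understanding the order of the root $r^*$ of the quartic equation \cref{eq:rsoltion}. Since \Cref{main:convex} guarantees $\mathcal{C}_{\text{\normalfont norm}}(\mathcal{A},\epsilon)\geq \Omega(r^*)$, it suffices to show that in the regime $\beta\leq \mathcal{O}(\mu_y)$ one has $r^*=\Omega\big(B^{1/2}(\widetilde L_{xy}^2 L_y+L_x\mu_y^2)^{1/2}/(\mu_y\epsilon^{1/2})\big)$. The key simplification is that the assumption $\beta\leq\mathcal{O}(\mu_y)$ forces the coefficient of the linear term in \cref{eq:rsoltion} to be of constant order, so that the leading quartic term $r^4$ dictates the asymptotics of $r^*$ for small $\epsilon$.

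First I would abbreviate the right-hand side of \cref{eq:rsoltion} as $C:=\frac{B^2(\widetilde L_{xy}^2 L_y+L_x\mu_y^2)^2}{128\mu_y^4\epsilon^2}$ and the linear coefficient as $c_1:=\frac{2\beta^4}{\mu_y^4}+\frac{4\beta^3}{\mu_y^3}+\frac{4\beta^2}{\mu_y^2}$, so that $r^*$ solves $h(r):=r^4+c_1 r=C$. Using $\beta/\mu_y\leq\mathcal{O}(1)$, each ratio $\beta^i/\mu_y^i$ is $\mathcal{O}(1)$, hence $c_1=\mathcal{O}(1)$. Since $h$ is strictly increasing on $(0,\infty)$, the positive root $r^*$ is unique and monotone in $C$, so to obtain a lower bound on $r^*$ I would evaluate $h$ at a constant fraction of $C^{1/4}$: taking $r=\tfrac12 C^{1/4}$ gives $h(\tfrac12 C^{1/4})=\tfrac1{16}C+\tfrac12 c_1 C^{1/4}$, which is strictly smaller than $C$ once $C$ is large enough (equivalently, once $\epsilon$ is small enough) so that $\tfrac12 c_1 C^{1/4}<\tfrac{15}{16}C$. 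Monotonicity of $h$ then yields $r^*\geq \tfrac12 C^{1/4}=\Omega(C^{1/4})$.

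It then remains to evaluate $C^{1/4}=\Theta\big(B^{1/2}(\widetilde L_{xy}^2 L_y+L_x\mu_y^2)^{1/2}/(\mu_y\epsilon^{1/2})\big)$, which after absorbing the universal constant $(128)^{-1/4}$ is exactly the target rate; combining with $\mathcal{C}_{\text{\normalfont norm}}(\mathcal{A},\epsilon)\geq \Omega(r^*)$ from \Cref{main:convex} finishes the proof. The only point requiring care — and the closest thing to an obstacle — is justifying that the quartic term truly dominates: I would note that the quartic governs $h$ precisely when $r\gtrsim c_1^{1/3}=\mathcal{O}(1)$, while the conclusion $r^*\asymp C^{1/4}\gg 1$ places us in exactly that regime for small $\epsilon$, so no extra linear contribution can degrade the rate. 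This is why the hypothesis $\beta\leq\mathcal{O}(\mu_y)$ is essential: dropping it would let $c_1$ grow, opening a regime where the linear term dominates and $r^*$ scales like $C/c_1$ rather than $C^{1/4}$, producing a different and generally weaker bound.
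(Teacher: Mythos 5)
Your proof is correct and takes essentially the same route as the paper: both reduce the claim to $\mathcal{C}_{\text{\normalfont norm}}(\mathcal{A},\epsilon)\geq\Omega(r^*)$ from \Cref{main:convex}, observe that $\beta\leq\mathcal{O}(\mu_y)$ makes the linear coefficient $c_1$ in \cref{eq:rsoltion} of constant order, and conclude that the quartic term dominates so that $r^*\geq\Omega(C^{1/4})$, which is exactly the stated rate. Your monotonicity-plus-test-point argument at $r=\tfrac12 C^{1/4}$ merely fills in, rigorously, the step the paper compresses into ``it can be verified that $(r^*)^3>\Omega\big(\frac{2\beta^4}{\mu_y^4}+\frac{4\beta^3}{\mu_y^3}+\frac{4\beta^2}{\mu_y^2}\big)$.''
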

\begin{corollary}\label{co:co2}
Under the same setting of \Cref{main:convex}, consider the case when $\beta \leq\mathcal{O}(1)$, i.e., at a constant level. Then, we have {\small $\mathcal{C}_{\text{\normalfont norm}}(\mathcal{A},\epsilon)\geq \widetilde \Omega(\frac{1}{\sqrt{\epsilon}}\min\{\frac{1}{\mu_y},\frac{1}{\sqrt{\epsilon^{3}}}\})$}.
\end{corollary}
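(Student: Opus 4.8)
The plan is to reduce the corollary to an asymptotic analysis of the quartic root equation \eqref{eq:rsoltion} from \Cref{main:convex}, specialized to the regime $\beta\leq\mathcal{O}(1)$. Writing that equation as $h(r):=r^4+Cr=D$ with $C:=\frac{2\beta^4}{\mu_y^4}+\frac{4\beta^3}{\mu_y^3}+\frac{4\beta^2}{\mu_y^2}$ and $D:=\frac{B^2(\widetilde L^2_{xy}L_y+L_x\mu_y^2)^2}{128\mu_y^4\epsilon^2}$, I would first observe that $h$ is strictly increasing on $[0,\infty)$, so $r^*=h^{-1}(D)$ is well defined, and that it suffices to produce a \emph{lower} bound on $r^*$, since \Cref{main:convex} already gives $\mathcal{C}_{\text{\normalfont norm}}(\mathcal{A},\epsilon)\geq\Omega(r^*)$.

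The key technical step is a two-term estimate of the root: for $C,D>0$ one has $r^*=\Theta\big(\min\{D^{1/4},\,D/C\}\big)$. For the lower bound that I actually need, I would set $r_0=\tfrac12\min\{D^{1/4},D/C\}$ and verify $h(r_0)\leq D$ directly: $r_0\leq\tfrac12 D^{1/4}$ forces $r_0^4\leq\tfrac1{16}D$, while $r_0\leq\tfrac12 D/C$ forces $Cr_0\leq\tfrac12 D$, so $h(r_0)\leq\tfrac{9}{16}D<D$, and monotonicity yields $r^*\geq r_0=\tfrac12\min\{D^{1/4},D/C\}$. This sidesteps any explicit closed-form solution of the quartic.

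Next I would evaluate the two competing scales under $\beta\leq\mathcal{O}(1)$, taking $\beta$ at the constant level $\beta=\Theta(1)$ and treating $B,\widetilde L_{xy},L_y,L_x$ as constants, as in the worst-case construction of \Cref{main:convex}. Since $\mu_y\leq\mathcal{O}(\beta)$, the term $\beta^4/\mu_y^4$ dominates the three summands of $C$ up to a constant factor, so $C=\Theta(\mu_y^{-4})$; and $\widetilde L^2_{xy}L_y+L_x\mu_y^2=\Theta(1)$ gives $D=\Theta(\mu_y^{-4}\epsilon^{-2})$. Consequently $D^{1/4}=\Theta(\mu_y^{-1}\epsilon^{-1/2})$ and $D/C=\Theta(\epsilon^{-2})$. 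Factoring out $\epsilon^{-1/2}$ then rewrites $\min\{D^{1/4},D/C\}$ as $\epsilon^{-1/2}\min\{\mu_y^{-1},\epsilon^{-3/2}\}$, and combining with $\mathcal{C}_{\text{\normalfont norm}}\geq\Omega(r^*)$ delivers the claimed bound $\widetilde\Omega\big(\epsilon^{-1/2}\min\{\mu_y^{-1},\epsilon^{-3/2}\}\big)$, where the tilde absorbs the universal constants.

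I expect the main obstacle to be bookkeeping rather than anything conceptual: pinning down which term governs $C$, and hence whether $D^{1/4}$ or $D/C$ is the binding scale, requires care about the relative orders of $\beta$, $\mu_y$, and $\epsilon$, and one must confirm both that $\beta=\Theta(1)$ is admissible under $\beta\leq\mathcal{O}(1)$ and that this choice is what produces the stated transition of the minimum at $\mu_y\asymp\epsilon^{3/2}$. Once those orderings are fixed, the remainder is the elementary monotonicity argument above.
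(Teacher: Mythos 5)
Your proposal is correct and in substance identical to the paper's own proof: the paper likewise lower-bounds the root $r^*$ of \cref{eq:rsoltion} by deciding which term of $r^4+Cr=D$ is binding, obtaining $r^*\geq\Omega(\mu_y^{-1}\epsilon^{-1/2})$ when $\mu_y\geq\Omega(\epsilon^{3/2})$ and $r^*\geq\Omega(\epsilon^{-2})$ when $\mu_y\leq\mathcal{O}(\epsilon^{3/2})$, which is exactly your $\min\{D^{1/4},D/C\}$ evaluated on either side of the crossover $\mu_y\asymp\epsilon^{3/2}$. Your test-point monotonicity bound ($h(r_0)\leq\tfrac{9}{16}D$ at $r_0=\tfrac{1}{2}\min\{D^{1/4},D/C\}$) is merely a cleaner, unified packaging of the paper's two-level case split (first on $\mu_y$ versus $\epsilon^{3/2}$, then on $(r^*)^3$ versus $C$), and the side conditions you flag ($\beta=\Theta(1)$, $\mu_y\leq\mathcal{O}(1)$, with only the upper bound $C\leq\mathcal{O}(\mu_y^{-4})$ actually needed) coincide with what the paper implicitly assumes.
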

The proof sketch of \Cref{main:convex} is provided as follows. The complete proof is provided in \Cref{apep:mainconvexs}. 

\subsection*{Proof Sketch of \Cref{main:convex}}
{\bf Step 1(construct the worst-case instance):} We construct the instance functions $f$ and $g$ as follows.
\begin{align}\label{con_fg}
f(x,y) &= \frac{L_x}{8} x^T Z^2 x+ \frac{L_y}{2}\|y\|^2, \nonumber
\\g(x,y) &= \frac{1}{2} y^T (\beta Z^2 +\mu_y I)y -\frac{\widetilde L_{xy}}{2} x^TZy + b^Ty, 
\end{align}
where $\beta=\frac{\widetilde L_y-\mu_y}{4}$.  Here, the coupling matrix $Z$ is different from that~\cref{matrices_coupling} for the strongly-convex-strongly-convex case, which takes the form of 
\begin{align}\label{matrices_coupling_convex}
Z:=\begin{bmatrix}
   & &  1& -1\\
   &1&  -1&   \\
 \text{\reflectbox{$\ddots $}}  &\text{\reflectbox{$\ddots$}}  &  \\
  -1&   &  & \\ 
\end{bmatrix},\quad
&Z^2 := 
\begin{bmatrix}
 2& -1 &  &  & \\
 -1& 2  &-1 &  & \\
 &   \ddots & \ddots & \ddots  & \\
  && -1& 2 & -1 \\
  &  & & -1 & 1\\ 
\end{bmatrix}.
\end{align}
It can be verified that $Z$ is invertible and $Z^2$ in  \cref{matrices_coupling_convex} also satisfies the zero-chain property, i.e., \Cref{zero_chain}. 
We can further verify that  $\Phi(x)$ is convex and functions $f,g$ satisfy Assumptions~\ref{fg:smooth} and~\ref{g:hessiansJaco}.  

 \vspace{0.1cm}
\noindent {\bf Step 2 (characterize the minimizer $x^*$):} Recall that $x^*\in \argmin_{x\in\mathbb{R}^d}\Phi(x)$. We then show that $x^*$ satisfies the equation 
\begin{align*}
\Big (\frac{L_x\beta^2}{4} Z^6 +  \frac{L_x\beta^2\beta\mu_y}{2} Z^4 +\Big (\frac{L_y\widetilde L_{xy}^2}{4}+\frac{L_x\mu_y^2}{4}\Big) Z^2 \Big) x^* =  \frac{L_y\widetilde L_{xy}}{2} Zb. 
\end{align*}
Let $\widetilde b =  \frac{L_y\widetilde L_{xy}}{2} Zb$ and choose $b$  such that $\widetilde b_t=0$ for all $t\geq 4$. Then, by choosing $\widetilde b_1,\widetilde b_2, \widetilde b_3$ properly, we derive that $x^* = \frac{B}{\sqrt{d}} {\bf 1}$, where ${\bf 1}$ is the all-one vector, and hence $\|x^*\|= B$. 

 \vspace{0.1cm}
\noindent  {\bf Step 3 (characterize the gradient norm):} In this step, we show that for any $x$ whose last three coordinates are zeros, the gradient norm of $\nabla\Phi(x)$ is lower-bounded. Namely, we prove that 
\begin{align}\label{eq:sehcsa}
\min_{x\in\mathbb{R}^d: \;x_{d-2}=x_{d-1}=x_d=0} \|\nabla\Phi(x)\|^2 \geq \frac{B^2\Big(\frac{\widetilde L^2_{xy}L_y}{4}+\frac{L_x\mu_y^2}{4}\Big)^2}{8\mu_y^4d^4 +16d\beta^4+32d\beta^3\mu_y+32d\beta^2\mu_y^2}.
\end{align}

 \vspace{0.1cm}
\noindent{\bf Step 4 (characterize the iterate subspaces):} By exploiting the forms of  the subspaces $\{\mathcal{H}_x^k,\mathcal{H}_y^k\}_{k=1}^K$
defined in \Cref{alg_class} and by induction, we show that 
{\small $H_{x}^{K }\subseteq \mbox{Span}\{Z^{2(K+QT-Q)}(Zb),....,Z^2(Zb),(Zb)\}$}.
Since  $(Zb)_t=0$  for all $t\geq 4$ and using the zero-chain property of $Z^2$, we have the $t^{th}$ coordinate of the output $x^K$ is zero, i.e., $(x^K)_{t}=0$,  for all $t\geq M+1$, where $M=K+QT-Q + 3$. 

 \vspace{0.1cm}
\noindent  {\bf Step 5 (combine Steps $1,2,3,4$ and characterize the complexity):} Choose $d$ such that the right hand side of \cref{eq:sehcsa} equals $\epsilon$ by  solving \cref{eq:rsoltion}. Then, using the results in Steps 3 and 4, it follows that for any $M\leq d-3$, $\|\nabla \Phi(x^K)\|\geq \epsilon$. Thus, to achieve $\|\nabla \Phi(x)\|\leq \epsilon$ , it requires $M>d-3$ and the complexity result follows as  $\mathcal{C}_{\text{\normalfont norm}}(\mathcal{A},\epsilon) \geq  \Omega(M)$.

\section{Optimality of Bilevel Optimization and Discussion}
We compare the lower and upper bounds and make the following remarks on the optimality of bilevel optimization and its comparison to minimax optimization. 

\vspace{0.3cm}
\noindent {\bf Optimality of results for quadratic $g$ subclass.} We compare the developed lower and upper bounds and make a few remarks on the optimality of the proposed AccBiO algorithms. Let us first focus on the quadratic $g$ subclass where $g(x,y)$ takes the quadratic form as in \cref{quadratic_case}. For the strongly-convex-strongly-convex setting, comparison of \Cref{thm:low1} and \Cref{coro:quadaticSr} implies that AccBiO achieves the optimal complexity for $\widetilde L_y\leq \mathcal{O}(\mu_y)$, i.e., the inner-level problem is easy to solve.  For the general case, there is still a gap of $\frac{1}{\sqrt{\mu_y}}$ between lower and upper bounds.  For the convex-strongly-convex setting, comparison of \Cref{main:convex} and \Cref{coro:quadaticConv} shows that AccBiO is optimal for $\widetilde L_y\leq \mathcal{O}(\mu_y)$, and there is a gap for the general case. Such a gap is mainly due to the large smoothness parameter $L_\Phi$ of $\Phi(\cdot)$. We note that a similar issue also occurs for minimax optimization, which has been addressed by \cite{lin2020near} using an accelerated proximal point method for the inner-level problem and exploiting Sion's minimax theorem $\min_x\max_y f(x,y) =\max_y\min_x f(x,y)$. However, such an approach is not applicable for bilevel optimization due to the asymmetry of $x$ and $y$, e.g., $\min_xf(x,y^*(x)) \neq \min_y g(x^*(y),y)$. This gap between lower and upper bounds deserves future efforts.

\vspace{0.3cm}
\noindent {\bf Optimality of results for general $g$.} We now discuss the optimality of our results for a more general $g$ whose second-order derivatives are Lipschitz continuous. For the strongly-convex-strongly-convex setting, it can be seen from the comparison of \Cref{thm:low1} and \Cref{upper_srsr_withnoB} that  there is a gap between the lower and upper bounds. This gap is because the lower bounds construct the bilinearly coupled worst-case $g(x,y)$ whose Hessians and Jacobians are constant, rather than generally $\rho_{yy}$- and $\rho_{xy}$-Lipschitz continuous as considered in the upper bounds. Hence, tighter lower bounds need to be provided for this setting, which requires more sophisticated  worst-case instances with Lipschitz continuous Hessians $\nabla_y^2 g(x,y)$ and Jacobians $\nabla_x\nabla_yg(x,y)$.  For example, it is possible to construct $g(x,y)$ as $g(x,y)=\sigma(y)y^T Zy - x^TZy +b^Ty$, where $\sigma(\cdot):\mathbb{R}^d\rightarrow \mathbb{R}$ satisfies a certain Lipschitz property. For example, if $\sigma$ is Lipchitz continuous, simple calculation shows that $L_\Phi$ scales at an order of $\kappa_y^3$. However, it still requires significant efforts to determine the form of $\sigma$ such that the optimal point of $\Phi(\cdot)$  and the subspaces $\mathcal{H}_x,\mathcal{H}_y$ are easy to characterize and satisfy the properties outlined in the proof of \Cref{upper_srsr_withnoB}.  

\vspace{0.3cm}
\noindent {\bf Comparison to minimax optimization.} We compare the optimality between minimax optimization and bilevel optimization. For the strongly-convex-strongly-convex minimax optimization, \cite{zhang2019lower} developed a lower bound of  {\small$\widetilde{\Omega}(\frac{1}{\sqrt{\mu_x\mu_y}})$} for minimax optimization, which is achieved by the accelerated proximal point method proposed by \cite{lin2020near} up to logarithmic factors. 
For the same type of bilevel optimization,
we provide a lower bound of {\small$\widetilde{\Omega}\big(\sqrt{\frac{1}{\mu_x\mu_y^2}}\big)$} in~\Cref{thm:low1}, which is larger than that of minimax optimization by a factor of $\frac{1}{\sqrt{\mu_y}}$.  Similarly for the convex-strongly-convex bilevel optimization, we provide a lower bound of {\small $\widetilde \Omega\big(\frac{1}{\sqrt{\epsilon}}\min\{\frac{1}{\mu_y},\frac{1}{\epsilon^{1.5}}\}\big)$}, which is larger than the optimal complexity of {\small$\widetilde{\Omega}(\frac{1}{\sqrt{\epsilon\mu_y}})$} for the same type of minimax optimization~\cite{lin2020near} in a large regime of $\mu_y\geq\Omega(\epsilon^3)$. 
This establishes that bilevel optimization is fundamentally more challenging than minimax optimization. This is because bilevel optimization needs to handle the different structures of the outer- and inner-level functions $f$ and $g$ (e.g., second-order derivatives in the hypergradient), whereas for minimax optimization, the fact of $f=g$ simplifies the problem (e.g., no second-order derivatives) and allows more efficient algorithm designs.

\section{Summary of Contributions} 
In this chapter, we develop the first lower bounds for  the convex-strongly-convex and strongly-convex-strongly-convex  bilevel optimizations. We further show that the upper bounds achieved by AccBiO in \Cref{chp_acc_bilevel} match the lower bounds for a quadratic inner problem with a constant-level condition number. We anticipate that the analysis can be extended to other algorithm classes such as penalty-based algorithm class and other problems such as minimax optimization.  

\chapter{Enhanced Design for Stochastic Bilevel Optimization}\label{chp:stoc_bilevel}
In this chapter,  we provide faster and sample-efficient stochastic bilevel optimization algorithms with performance guarantee. All technical proofs for the results in this chapter are provided in \Cref{appendix:stoc_bilevel}. 

\section{Algorithm for Stochastic Bilevel Optimization}
  \begin{figure*}[t]
	\centering  
	\includegraphics[width=150mm]{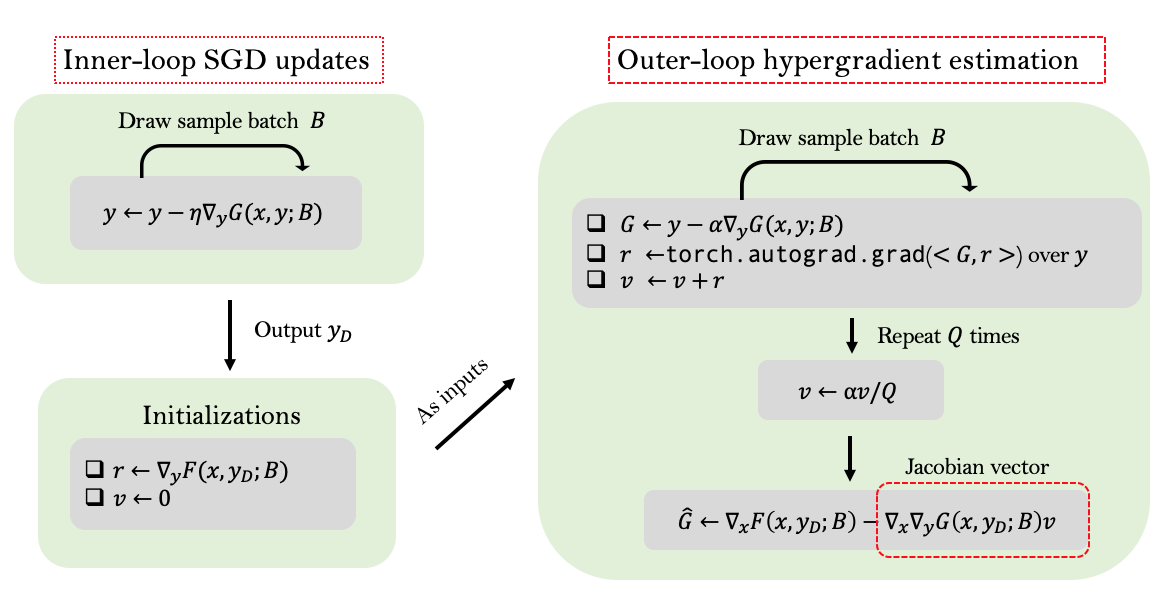}
	\vspace{-0.2cm}
	\caption{Illustration of hyperparameter estimation in our proposed stocBiO algorithm. Note that the  hyperparameter estimation (lines 9-10 in \Cref{alg:main}) involves only computations of automatic differentiation over scalar $<G_j(y),r_i>$ w.r.t.~$y$. In addition, our implementation applies the function {\em torch.autograd.grad} in PyTorch, which automatically determines the size of Jacobians. 
}\label{fig:stocBiO_illustrate}
\end{figure*}

We  propose a new stochastic bilevel optimizer (stocBiO) in~\Cref{alg:main} to solve the problem~\cref{objective}. 
It has a double-loop structure similar to \Cref{alg:main_deter}, but  runs $D$ steps of stochastic gradient decent (SGD) at the inner loop to obtain an approximated solution  $y_k^D$.  
Based on the output $y_k^D$ of the inner loop, stocBiO first computes a gradient {\small $\nabla_y F(x_k,y_k^D;\gD_F)$} over a sample batch $\gD_F$, and then computes a vector $v_Q$ as an estimated solution  of the linear system {\small $ \nabla_y^2 g(x_k,y^*(x_k))v=
\nabla_y f(x_k,y^*(x_k))$} via \Cref{alg:hessianEst}. Here, $v_Q$  takes a form of 
\begin{align}\label{ours:est}
v_Q =& \eta \sum_{q=-1}^{Q-1}\prod_{j=Q-q}^Q (I - \eta \nabla_y^2G(x_k,y_k^D;\gB_j)) v_0,
\end{align}
where {\small$v_0 = \nabla_y F(x_k,y_k^D;\gD_F)$} and $\gB_j,j=1,...,Q$ are mutually-independent sample sets, $Q$ and $\eta$ are constants, and we let $\prod_{Q+1}^Q (\cdot)= I$ for notational simplification.  
Our construction of $v_Q$, i.e., \Cref{alg:hessianEst}, is motived by the Neumann series $\sum_{i=0}^\infty U^k=(I-U)^{-1}$, and involves only Hessian-vector products rather than Hessians, and hence is computationally and memory efficient.  {\bf This procedure is illustrated in \Cref{fig:stocBiO_illustrate}.} Then, we construct 
 \begin{align}\label{estG}
 \widehat \nabla \Phi(x_k) =&  \nabla_x F(x_k,y_k^D;\gD_F)-\nabla_x \nabla_y G(x_k,y_k^D;\gD_G)v_Q
 \end{align}
as an estimate of hypergradient $\nabla \Phi(x_k)$. 
Compared to the deterministic case, it is more challenging to design a sample-efficient Hypergradient estimator in the stochastic case. For example, instead of choosing the same batch sizes for all $\gB_j,j=1,...,Q$  in~\cref{ours:est}, 
 our analysis captures the different impact of components $\nabla_y^2G(x_k,y_k^D;\gB_j)$, $j=1,...,Q$ on  the hypergradient estimation variance, and inspires an adaptive and more efficient choice  by setting $|\gB_{Q-j}|$ to   decay  exponentially with  $j$ from $0$ to $Q-1$. By doing so, we  achieve an improved  complexity. 

\begin{algorithm}[t]
	\caption{Stochastic bilevel optimizer (stocBiO)}  
	\label{alg:main}
	\begin{algorithmic}[1]
		\STATE {\bfseries Input:} $K,D,Q$, stepsizes $\alpha$ and $\beta$, initializations $x_0$ and $y_0$.
		\FOR{$k=0,1,2,...,K$}
		\STATE{Set $y_k^0 = y_{k-1}^{D} \mbox{ if }\; k> 0$ and $y_0$ otherwise 
		}
		\FOR{$t=1,....,D$}
		\STATE{Draw a sample batch $\gS_{t-1}$}  
		\vspace{0.05cm}
		\STATE{Update $y_k^t = y_k^{t-1}-\alpha \nabla_y G(x_k,y_k^{t-1}; \gS_{t-1}) $}
		\ENDFOR
                 \STATE{Draw sample batches $\gD_F,\gD_H$ and $\gD_G$ }
                 \STATE{Compute gradient {\small$v_0=\nabla_y F(x_k,y_k^D;\gD_F)$}}
                  \STATE{Construct estimate $v_Q$ via \Cref{alg:hessianEst} given $v_0$}
                  \STATE{Compute {\small$\nabla_x \nabla_y G(x_k,y_k^D;\gD_G) v_Q $}}

                  \STATE{ Compute gradient estimate $  \widehat \nabla \Phi(x_k) $ via~\cref{estG}
                    }
                 \STATE{Update $x_{k+1}=x_k- \beta \widehat \nabla \Phi(x_k) $}
		\ENDFOR
	\end{algorithmic}
	\end{algorithm}	
	\begin{algorithm}[t]
	\caption{Construct $v_Q$ given $v_0$  }  
	\label{alg:hessianEst}
	\begin{algorithmic}[1]
		\STATE {\bfseries Input:} Integer $Q$,  samples $\gD_H= \{\gB_j\}_{j=1}^{Q}$ and constant $\eta$.
		\FOR{$j=1,2,...,Q$}
		\STATE{Sample $\gB_j$ and compute $G_j(y)=y-\eta \nabla_y G(x,y; \gB_j)$
		}
		\ENDFOR
		\STATE{Set $r_Q = v_0$}
		\FOR{$i=Q,...,1$}
		\STATE{$r_{i-1}=\partial \big( G_i(y)r_i\big)/\partial y=r_i-\eta \nabla_y^2G(x,y;\gB_i)r_i$ via automatic differentiation}
		\ENDFOR
		\STATE{Return $v_Q=\eta \sum_{i=0}^Qr_i$
		}
	\end{algorithmic}
	\end{algorithm}

\section{Definitions and Assumptions}
Let $z=(x,y)$ denote all parameters. For simplicity, suppose sample sets $\gS_t$  for all $t=0,...,D-1$, $\gD_G$ and $\gD_F$ have the sizes of  $S$, $D_g$ and $D_f$, respectively.  In this thesis, we focus on the following types of loss functions.
\begin{assum}\label{assum:stoc_geo}
For any $\zeta$, the lower-level function $G(x,y;\zeta)$ is $\mu$-strongly-convex w.r.t.~$y$ and the total objective function $\Phi(x)=f(x,y^*(x))$ is nonconvex w.r.t.~$x$.  
\vspace{-0.2cm} 
\end{assum}
Since $\Phi(x)$ is nonconvex, algorithms are expected to find an $\epsilon$-accurate stationary point defined as follows. 
\begin{definition}
We say $\bar x$ is an $\epsilon$-accurate stationary point for the objective function $\Phi(x)$ in~\cref{objective} if $\mathbb{E}\|\nabla \Phi(\bar x)\|^2\leq \epsilon$, where $\bar x$ is the output of  an algorithm.
\end{definition}
In order to compare the performance of different bilevel algorithms, we adopt the following metrics of  complexity. 

\begin{definition}\label{stoc_com_measure}
For a function $F(x,y;\xi)$  and a vector $v$, let $\mbox{\normalfont Gc}(F,\epsilon)$ be the number of the partial gradient $\nabla_x F(x,y;\xi)$ or $\nabla_y F(x,y;\xi)$, and let $\mbox{\normalfont JV}(G,\epsilon)$ and  $\mbox{\normalfont HV}(G,\epsilon)$ be the number of Jacobian-vector products $\nabla_x\nabla_y G (x,y;\zeta)v$ 
  and  Hessian-vector products $\nabla_y^2G(x,y;\zeta) v$, respectively.  
 \end{definition} 
We  take the following standard assumptions on the loss functions in~\cref{objective}, which have been widely adopted in bilevel optimization~\cite{ghadimi2018approximation,ji2020convergence}.
\begin{assum}\label{ass:lip_stoc}
The loss function $F(z;\xi)$ and $G(z;\zeta)$ satisfy
\begin{list}{$\bullet$}{\topsep=0.2ex \leftmargin=0.2in \rightmargin=0.in \itemsep =0.01in}
\item $F(z;\xi)$ is $M$-Lipschitz, i.e.,  for any $z,z^\prime$ and $\xi$, $|F(z;\xi)-F(z^\prime;\xi)|\leq M\|z-z^\prime\|.$
\item $\nabla F(z;\xi)$ and $\nabla G(z;\zeta)$ are $L$-Lipschitz, i.e., for any $z,z^\prime,\xi,\zeta$, 
\vspace{-0.1cm}
\begin{align*}
\|\nabla F(z;\xi)-\nabla F(z^\prime;\xi)\|\leq& L\|z-z^\prime\|,\;\|\nabla G(z;\zeta)-\nabla G(z^\prime;\zeta)\|\leq L\|z-z^\prime\|.
\end{align*}
\end{list}
\vspace{-0.4cm}
\end{assum}
The following assumption imposes the Lipschitz conditions on such high-order derivatives, as also made in~\cite{ghadimi2018approximation}.
\begin{assum}\label{high_lip_stoc}
Suppose the derivatives $\nabla_x\nabla_y G(z;\zeta)$ and $\nabla_y^2 G(z;\zeta)$ are $\tau$- and $\rho$- Lipschitz, i.e.,
\begin{list}{$\bullet$}{\topsep=0.2ex \leftmargin=0.2in \rightmargin=0.in \itemsep =0.02in}
\item For any $z,z^\prime,\zeta$, $\|\nabla_x\nabla_y G(z;\zeta)-\nabla_x\nabla_y G(z^\prime;\zeta)\| \leq \tau \|z-z^\prime\|$.
\item For any $z,z^\prime,\zeta$, $\|\nabla_y^2 G(z;\zeta)-\nabla_y^2G(z^\prime;\zeta)\|\leq \rho \|z-z^\prime\|$.
\end{list} 
\end{assum}
As typically adopted in the analysis for stochastic optimization, we make the following bounded-variance assumption for the lower-level stochastic function $G(z;\zeta)$. 
\begin{assum} \label{ass:bound} 
Gradient $\nabla G(z;\zeta)$ has a bounded variance, i.e., $\mathbb{E}_\xi \|\nabla G(z;\zeta)-\nabla g(z)\|^2 \leq \sigma^2$ for some constant $\sigma>0$.
\end{assum}
\section{Convergence for Stochastic Bilevel Optimization}\label{main:result}
We first  characterize the bias and variance of a key component $v_Q$ in~\cref{ours:est}. 

\begin{proposition}\label{prop:hessian}
Let Assumptions~\ref{assum:stoc_geo},~\ref{ass:lip_stoc} and \ref{high_lip_stoc} hold. Let  $\eta\leq \frac{1}{L}$ and choose  $|\gB_{Q+1-j}|=BQ(1-\eta\mu)^{j-1}$ for $j=1,...,Q$, where $B\geq \frac{1}{Q(1-\eta\mu)^{Q-1}}$. 
The  bias satisfies   
\begin{align}{\label{bias}}
\big\|\mathbb{E} v_Q- [\nabla_y^2 g(x_k,y^D_k&)]^{-1}\nabla_y f(x_k,y^D_k)\big\| \leq  \mu^{-1}(1-\eta \mu)^{Q+1}M.
\end{align}
Furthermore, the estimation variance is given by 
\begin{align}\label{hessian_variance}
\mathbb{E}\|v_Q-[\nabla_y^2 &g(x_k,y^D_k)]^{-1}\nabla_y f(x_k,y^D_k)\|^2 \nonumber
\\&\leq\frac{4\eta^2  L^2M^2}{\mu^2} \frac{1}{B}+\frac{4(1-\eta \mu)^{2Q+2}M^2}{\mu^2}+ \frac{2M^2}{\mu^2D_f}.
\end{align}
\end{proposition}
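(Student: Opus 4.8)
The plan is to first rewrite the estimator $v_Q$ produced by \Cref{alg:hessianEst} in closed form, then split the mean-squared error into a squared-bias term and a variance term, and finally control the variance by a recursion that exploits the exponentially decaying batch-size schedule. Unrolling $r_{i-1} = (I-\eta\nabla_y^2 G(x_k,y_k^D;\gB_i))r_i$ with $r_Q = v_0$ gives $r_i = \prod_{j=i+1}^Q (I-\eta H_j)v_0$ with $H_j := \nabla_y^2 G(x_k,y_k^D;\gB_j)$, so that $v_Q = \eta\sum_{i=0}^Q \prod_{j=i+1}^Q (I-\eta H_j)v_0$, matching~\cref{ours:est}. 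Throughout I would condition on $(x_k,y_k^D)$ and use that the batches $\gB_1,\dots,\gB_Q,\gD_F$ are mutually independent, that each $H_j$ is unbiased for $H:=\nabla_y^2 g(x_k,y_k^D)$, and that $v_0$ is unbiased for $\nabla_y f(x_k,y_k^D)$. Strong convexity and smoothness give $\mu I\preceq H\preceq L I$, so with $\eta\le 1/L$ each factor satisfies $0\preceq I-\eta H_j\preceq(1-\eta\mu)I$ and $\|H^{-1}\|\le\mu^{-1}$; also $\|\nabla_y f\|\le M$ since $f$ is $M$-Lipschitz.

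For the \textbf{bias}, independence of the $\gB_j$ yields $\E[\prod_{j=i+1}^Q(I-\eta H_j)] = (I-\eta H)^{Q-i}$, hence $\E v_Q = \eta\sum_{\ell=0}^Q (I-\eta H)^\ell \nabla_y f(x_k,y_k^D)$. This is a truncated Neumann series for $H^{-1}$; using $\eta\sum_{\ell=0}^\infty(I-\eta H)^\ell = H^{-1}$ I would identify the error exactly as $\E v_Q - H^{-1}\nabla_y f = -(I-\eta H)^{Q+1}H^{-1}\nabla_y f$, whose norm is at most $(1-\eta\mu)^{Q+1}\mu^{-1}M$, giving~\cref{bias}.

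For the \textbf{variance}, I would write the mean-squared error as squared bias plus $\E\|v_Q-\E v_Q\|^2$; the squared bias is at most $(1-\eta\mu)^{2Q+2}\mu^{-2}M^2$ and is absorbed into the $4(1-\eta\mu)^{2Q+2}M^2/\mu^2$ term. Writing $v_Q = P v_0$ with the random matrix $P := \eta\sum_{i=0}^Q\prod_{j=i+1}^Q(I-\eta H_j)$ independent of $v_0$, setting $\bar P=\E P$ and $w:=\nabla_y f(x_k,y_k^D)$, the decomposition $v_Q-\E v_Q = P(v_0-w)+(P-\bar P)w$ has vanishing cross term by independence and $\E[v_0-w]=0$. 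Since $\|P\|\le\eta\sum_{i=0}^Q(1-\eta\mu)^{Q-i}\le\mu^{-1}$ almost surely and $\E\|v_0-w\|^2\le M^2/D_f$ (as $\|\nabla_y F\|\le M$ and $\gD_F$ averages i.i.d.\ samples), the first piece contributes at most $M^2/(\mu^2 D_f)$.

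The \textbf{main obstacle} is the random-matrix term $\E\|(P-\bar P)w\|^2$, which is exactly where the batch-size schedule enters. I would track $V_i:=\E\|\tilde r_i-\E\tilde r_i\|^2$ for the deterministically-initialized sequence $\tilde r_i=\prod_{j=i+1}^Q(I-\eta H_j)w$. Because $I-\eta H_i$ is independent of $\tilde r_i$, one has $\E[(I-\eta H_i)^2]=(I-\eta H)^2+\eta^2\mathrm{Cov}(H_i)$, which yields the recursion
\begin{align*}
V_{i-1}\le (1-\eta\mu)^2 V_i + \eta^2\|\mathrm{Cov}(H_i)\|(1-\eta\mu)^{2(Q-i)}M^2,\quad V_Q=0,
\end{align*}
together with the per-batch bound $\|\mathrm{Cov}(H_i)\|\le L^2/|\gB_i|$ (the single-sample Hessian satisfies $\|\nabla_y^2 G\|\le L$). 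Substituting the schedule $|\gB_{Q+1-j}|=BQ(1-\eta\mu)^{j-1}$ makes each forcing term equal to $\eta^2 L^2 M^2(1-\eta\mu)^{Q-i}/(BQ)$; solving the geometric recursion gives $V_i\le \eta L^2 M^2(1-\eta\mu)^{Q-i-1}/(\mu BQ)$ and $\sum_{i=0}^Q V_i\le L^2M^2/(\mu^2 BQ)$. Finally $\E\|(P-\bar P)w\|^2=\eta^2\E\|\sum_{i=0}^Q(\tilde r_i-\E\tilde r_i)\|^2\le \eta^2(Q+1)\sum_{i=0}^Q V_i$, and the factor $Q$ built into the batch sizes cancels the $(Q+1)$ loss from Cauchy--Schwarz, leaving the $4\eta^2 L^2 M^2/(\mu^2 B)$ term of~\cref{hessian_variance}. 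The delicate point, and the reason the schedule is chosen this way, is precisely this balance: the geometric factor $(1-\eta\mu)^{j-1}$ matches the attenuation of the deeper products while the multiplicative $Q$ offsets the accumulation across the $Q$ nested levels.
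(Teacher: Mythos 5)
Your proposal is correct and follows essentially the same route as the paper's proof: the same truncated-Neumann-series argument for the bias, the same three-way error decomposition (sampling noise in $v_0$, random fluctuation of the matrix product, and Neumann truncation), and the same level-by-level recursion — with cross terms killed by unbiasedness and independence of the batches — whose forcing terms the exponentially decaying batch schedule turns into a geometric sum, the factor $Q$ in the batch sizes absorbing the Cauchy–Schwarz loss over the $Q+1$ summands. The only differences are minor bookkeeping choices: you use exact orthogonal (bias–variance) splits where the paper applies Young's inequality twice, and you run the recursion on the vector variances $V_i$ rather than on the operator-norm deviations $\mathbb{E}\big\|\prod_{j}(I-\eta H_j)-(I-\eta H)^q\big\|^2$, which yields slightly tighter constants but the same final bound.
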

\vspace{-0.2cm}
\Cref{prop:hessian} shows that if we choose $Q$, $B$ and $D_f$ at the order level of $\mathcal{O}(\log \frac{1}{\epsilon}) $, $\mathcal{O}(1/\epsilon)$ and $\mathcal{O}(1/\epsilon)$,  the bias and variance are smaller than $\mathcal{O}(\epsilon)$, and the required number of samples is
$\sum_{j=1}^Q BQ(1-\eta\mu)^{j-1} = \mathcal{O}\left(\epsilon^{-1}\log \frac{1}{\epsilon}\right)$.
Note that  the chosen batch size $|\gB_{Q+1-j}|$ exponentially decays w.r.t.~the index $j$. In comparison, the uniform choice of all $|\gB_j|$ would yield a worse complexity of $ \mathcal{O}\big( \epsilon^{-1}(\log\frac{1}{\epsilon})^2\big)$.

We next analyze stocBiO when $\Phi(x)$ is nonconvex.
\begin{theorem}\label{th:nonconvex}
Suppose Assumptions~\ref{assum:stoc_geo},~\ref{ass:lip_stoc}, \ref{high_lip_stoc} and~\ref{ass:bound} hold. Define 
$L_\Phi = L + \frac{2L^2+\tau M^2}{\mu} + \frac{\rho L M+L^3+\tau M L}{\mu^2} + \frac{\rho L^2 M}{\mu^3}$, and choose $\beta=\frac{1}{4L_\Phi}, \eta<\frac{1}{L}$, and 
 $D\geq\Theta(\kappa\log \kappa)$, where the detailed form of $D$ can be found in \Cref{mianshisimida}.  
We have
\begin{align}\label{eq:main_nonconvex}
\frac{1}{K}\sum_{k=0}^{K-1}\mathbb{E}\|\nabla&\Phi(x_k)\|^2 \leq  \mathcal{O}\Big( \frac{L_\Phi}{K}+  \kappa^2(1-\eta \mu)^{2Q}+\frac{\kappa^5\sigma^2}{S}+ \frac{\kappa^2}{D_g} +\frac{\kappa^2}{D_f}+ \frac{\kappa^2}{B}\Big).
\end{align}
In order to achieve an $\epsilon$-accurate stationary point, the complexities satisfy 
\begin{list}{$\bullet$}{\topsep=0.4ex \leftmargin=0.15in \rightmargin=0.in \itemsep =0.01in}
\item Gradient: {\small$\mbox{\normalfont Gc}(F,\epsilon)=\mathcal{O}(\kappa^5\epsilon^{-2}), \mbox{\normalfont Gc}(G,\epsilon)=\mathcal{O}(\kappa^9\epsilon^{-2}).$}
\item Jacobian-, Hessian-vector complexities: {\small$\mbox{\normalfont JV}(G,\epsilon)=\mathcal{O}(\kappa^5\epsilon^{-2}), \mbox{\normalfont HV}(G,\epsilon)=\mathcal{\widetilde O}(\kappa^6\epsilon^{-2}).$}
\end{list}
\end{theorem}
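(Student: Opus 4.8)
The plan is to carry out a standard nonconvex stochastic first-order descent analysis on the outer objective $\Phi(\cdot)$, reducing the whole argument to a careful bound on the mean-squared error of the hypergradient estimate $\widehat\nabla\Phi(x_k)$ in \cref{estG}. First I would establish that $\Phi(\cdot)$ is $L_\Phi$-smooth with $L_\Phi=\Theta(\kappa^3)$: this follows from the closed form of the hypergradient (as in \Cref{prop:grad}), namely $\nabla\Phi(x)=\nabla_x f(x,y^*(x))-\nabla_x\nabla_y g(x,y^*(x))[\nabla_y^2 g(x,y^*(x))]^{-1}\nabla_y f(x,y^*(x))$, together with Assumptions~\ref{ass:lip_stoc} and~\ref{high_lip_stoc}, by differentiating through the implicit solution map $y^*(\cdot)$ and using that $g(x,\cdot)$ is $\mu$-strongly-convex so that the Hessian inverse has operator norm at most $1/\mu$; along the way one also records that $y^*(\cdot)$ is $\kappa$-Lipschitz. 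Given smoothness, the descent inequality for $x_{k+1}=x_k-\beta\widehat\nabla\Phi(x_k)$ with $\beta=\frac{1}{4L_\Phi}$, after conditioning and using $\langle a,b\rangle=\tfrac12\|a\|^2+\tfrac12\|b\|^2-\tfrac12\|a-b\|^2$ on the cross term, yields
\[
\mathbb{E}[\Phi(x_{k+1})]\le \mathbb{E}[\Phi(x_k)]-\frac{\beta}{4}\mathbb{E}\|\nabla\Phi(x_k)\|^2+C\beta\,\mathbb{E}\big\|\widehat\nabla\Phi(x_k)-\nabla\Phi(x_k)\big\|^2
\]
for a universal constant $C$. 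Telescoping over $k=0,\dots,K-1$ and dividing by $K$ gives $\frac{1}{K}\sum_k\mathbb{E}\|\nabla\Phi(x_k)\|^2\lesssim \frac{L_\Phi(\Phi(x_0)-\inf_x\Phi(x))}{K}+\frac1K\sum_k\mathbb{E}\|\widehat\nabla\Phi(x_k)-\nabla\Phi(x_k)\|^2$, which produces the $L_\Phi/K$ term of \cref{eq:main_nonconvex} and leaves the estimation-error term to be bounded.

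The core of the proof is controlling $\mathbb{E}\|\widehat\nabla\Phi(x_k)-\nabla\Phi(x_k)\|^2$. I would introduce the deterministic surrogate
\[
\bar\nabla\Phi(x_k)=\nabla_x f(x_k,y_k^D)-\nabla_x\nabla_y g(x_k,y_k^D)\big[\nabla_y^2 g(x_k,y_k^D)\big]^{-1}\nabla_y f(x_k,y_k^D),
\]
i.e.\ the exact hypergradient formula evaluated at the inner iterate $y_k^D$, and split $\mathbb{E}\|\widehat\nabla\Phi(x_k)-\nabla\Phi(x_k)\|^2\le 2\,\mathbb{E}\|\widehat\nabla\Phi(x_k)-\bar\nabla\Phi(x_k)\|^2+2\,\mathbb{E}\|\bar\nabla\Phi(x_k)-\nabla\Phi(x_k)\|^2$. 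The second term is a pure approximation error of replacing $y^*(x_k)$ by $y_k^D$; using the Lipschitz properties of $\nabla_x f,\nabla_x\nabla_y g,\nabla_y^2 g,\nabla_y f$ and the $1/\mu$ bound on the Hessian inverse, it is bounded by a polynomial-in-$\kappa$ multiple of $\mathbb{E}\|y_k^D-y^*(x_k)\|^2$. The first term collects the genuinely stochastic errors: the mini-batch noise of $\nabla_x F(\cdot;\gD_F)$ and of the Jacobian $\nabla_x\nabla_y G(\cdot;\gD_G)$, together with the bias and variance of the Neumann-type estimator $v_Q$, which are exactly quantified by \Cref{prop:hessian}. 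Feeding in the bias bound $\mu^{-1}(1-\eta\mu)^{Q+1}M$ and the variance bound $\tfrac{4\eta^2L^2M^2}{\mu^2 B}+\tfrac{4(1-\eta\mu)^{2Q+2}M^2}{\mu^2}+\tfrac{2M^2}{\mu^2 D_f}$, multiplying by the bounded Jacobian norm $\le L$, and tracking the product structure of \cref{estG} yields the terms $\kappa^2(1-\eta\mu)^{2Q}$, $\kappa^2/B$, $\kappa^2/D_f$ and $\kappa^2/D_g$ of \cref{eq:main_nonconvex}.

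It remains to bound the inner-loop error $\mathbb{E}\|y_k^D-y^*(x_k)\|^2$, and this is where the warm start makes the argument delicate and constitutes the hard part of the proof. Running $D$ SGD steps on the $\mu$-strongly-convex $g(x_k,\cdot)$ with stepsize $\alpha\le 1/L$ and batch size $S$ gives a one-loop bound $\mathbb{E}\|y_k^D-y^*(x_k)\|^2\lesssim (1-\alpha\mu)^D\mathbb{E}\|y_k^0-y^*(x_k)\|^2+\frac{\alpha\sigma^2}{\mu S}$; but since $y_k^0=y_{k-1}^D$, the initial error splits into $\mathbb{E}\|y_{k-1}^D-y^*(x_{k-1})\|^2$ plus the drift $\|y^*(x_{k-1})-y^*(x_k)\|^2\le\kappa^2\beta^2\|\widehat\nabla\Phi(x_{k-1})\|^2$, coupling the inner error across outer iterations and back to the hypergradient magnitude. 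Choosing $D\ge\Theta(\kappa\log\kappa)$ forces $(1-\alpha\mu)^D$ below the right inverse-polynomial-in-$\kappa$ threshold, so this coupled recursion can be summed and the residual, amplified by the $\kappa$-power factor from the approximation term, produces exactly the $\kappa^5\sigma^2/S$ term; the main obstacle throughout is this joint bookkeeping, simultaneously resolving the warm-start recursion for the inner error and tracking the correct condition-number powers through the product form of \cref{estG} so that the several error sources assemble into \cref{eq:main_nonconvex} without loose $\kappa$ factors. Finally, setting each of the six terms of \cref{eq:main_nonconvex} to $\mathcal{O}(\epsilon)$ fixes $K=\mathcal{O}(\kappa^3\epsilon^{-1})$, $Q=\widetilde{\mathcal{O}}(\kappa)$, $S=\mathcal{O}(\kappa^5\epsilon^{-1})$, $D_g=D_f=\mathcal{O}(\kappa^2\epsilon^{-1})$ and $B=\mathcal{O}(\kappa^2\epsilon^{-1})$; multiplying the per-iteration oracle counts (one outer gradient batch $D_f$, $D$ inner SGD gradient batches $S$, one Jacobian batch $D_g$, and the exponentially-decaying Hessian-vector batches $\{|\gB_j|\}$ of \Cref{prop:hessian}) by $K$ then gives the claimed $\mbox{\normalfont Gc}(F,\epsilon)$, $\mbox{\normalfont Gc}(G,\epsilon)$, $\mbox{\normalfont JV}(G,\epsilon)$ and $\mbox{\normalfont HV}(G,\epsilon)$.
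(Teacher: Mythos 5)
Your proposal is correct and follows essentially the same route as the paper's proof: the same descent-lemma framework with $L_\Phi=\Theta(\kappa^3)$, the same surrogate decomposition (your $\bar\nabla\Phi(x_k)$ is exactly the paper's $\widetilde\nabla\Phi_D(x_k)$ used in Lemmas~\ref{le:first_m} and~\ref{le:variancc}), the same invocation of \Cref{prop:hessian} for the Neumann estimator $v_Q$, and the same warm-start coupled recursion for $\mathbb{E}\|y_k^D-y^*(x_k)\|^2$ (the paper's \Cref{tra_error}) whose geometric sum is absorbed into the left-hand side precisely because $D\geq\Theta(\kappa\log\kappa)$, followed by the identical parameter choices and oracle accounting. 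The only cosmetic differences are that the paper separates a conditional-bias bound from a variance bound instead of using a single mean-squared-error term, and runs the inner loop with stepsize $\alpha=\tfrac{2}{L+\mu}$ (contraction $(\tfrac{L-\mu}{L+\mu})^{2D}$) rather than $\alpha\le 1/L$; neither changes the order of any term.
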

\Cref{th:nonconvex} shows that stocBiO converges sublinearly with the convergence error decaying exponentially w.r.t.~$Q$ and sublinearly w.r.t.~the batch sizes $S,D_g,D_f$ for gradient estimation and $B$ for Hessian inverse estimation. In addition, it can be seen that the number $D$ of the inner-loop steps is 
at  a  constant level, rather than a typical choice of $\Theta(\log(\frac{1}{\epsilon}))$.

As shown in~\Cref{tab:stochastic}, the gradient complexities of our proposed algorithm in terms of  $F$  and $G$  improve those of BSA in \cite{ghadimi2018approximation} by an order of $\kappa$ and $\epsilon^{-1}$, respectively. In addition, the Jacobian-vector product complexity $\mbox{\normalfont JV}(G,\epsilon)$ of our algorithm improves that of BSA by the order of $\kappa$. 
In terms of the accuracy $\epsilon$, our gradient,  Jacobian- and Hessian-vector product complexities  improve those of TTSA in \cite{hong2020two} all by an order of $\epsilon^{-0.5}$.  

\section{Applications to Hyperparameter Optimization}

The goal of hyperparameter optimization~\cite{franceschi2018bilevel,feurer2019hyperparameter} is to search for representation or regularization parameters  $\lambda$ to minimize the validation error evaluated over the learner's parameters $w^*$,  
 where $w^*$ is the minimizer of the inner-loop regularized  training error. Mathematically, the objective function is given by 
\begin{align}\label{obj:hyper_opt}
&\min_\lambda \gL_{\gD_{\text{val}}}(\lambda) = \frac{1}{|\gD_{\text{val}}|}\sum_{\xi\in \gD_{\text{val}}} \gL(w^*; \xi) \nonumber
\\& \;\mbox{s.t.} \; w^*= \argmin_{w} \underbrace{\frac{1}{|\gD_{\text{tr}}|}\sum_{\xi\in \gD_{\text{tr}}} \big(\gL(w,\lambda;\xi)  + \gR_{w,\lambda}\big)}_{\gL_{\gD_{\text{tr}}}(w,\lambda)},
\end{align}
where $\gD_{\text{val}}$ and $\gD_{\text{tr}}$ are validation and training data,  $\gL$ is the loss, and $\gR_{w,\lambda}$ is a regularizer. In practice,  the lower-level function $\gL_{\gD_{\text{tr}}}(w,\lambda)$ is often strongly-convex w.r.t.~$w$. For example, for the data hyper-cleaning application proposed by~\cite{franceschi2018bilevel,shaban2019truncated}, the predictor is modeled by a linear classifier, and  the 
loss function $\gL(w;\xi)  $ is convex w.r.t.~$w$ and $\gR_{w,\lambda}$ is a strongly-convex regularizer, e.g., $L^2$ regularization.  
The sample sizes of  $\gD_{\text{val}}$ and $\gD_{\text{tr}}$ are often large, and stochastic algorithms are preferred for achieving better efficiency. 
As a result, the above hyperparameter optimization falls into the stochastic bilevel optimization we study in~\cref{objective}, and we can apply the proposed stocBiO. Furthermore, \Cref{th:nonconvex} establishes its performance guarantee. 
\subsection*{Experiments}

We compare our proposed {\bf stocBiO} with the following baseline bilevel optimization algorithms. 
\begin{list}{$\bullet$}{\topsep=0.ex \leftmargin=0.12in \rightmargin=0.in \itemsep =0.0in}
\item {\bf BSA}~\cite{ghadimi2018approximation}: implicit gradient based stochastic optimizer via single-sample sampling.
\item {\bf TTSA}~\cite{hong2020two}: two-time-scale stochastic optimizer via single-sample data sampling. 
\item {\bf HOAG}~\cite{pedregosa2016hyperparameter}: a hyperparameter optimization algorithm with approximate gradient. We use the implementation in the repository~ {\small\url{https://github.com/fabianp/hoag}}.  
\item {\bf reverse}~\cite{franceschi2017forward}: an iterative differentiation based method that approximates the hypergradient via backpropagation. We use its implementation in {\small \url{https://github.com/prolearner/hypertorch}}.  
 \item {\bf AID-FP}~\cite{grazzi2020iteration}: AID with the  fixed-point method. We use its implementation in {\small \url{https://github.com/prolearner/hypertorch}} 
\item  {\bf AID-CG}~\cite{grazzi2020iteration}: AID with the conjugate gradient method. We use its implementation in {\small \url{https://github.com/prolearner/hypertorch}}.  
 \end{list}
We demonstrate the effectiveness of the proposed stocBiO algorithm on two experiments: data hyper-cleaning and logistic regression.  
\begin{figure*}[ht]
	\centering  
	\subfigure[Test loss and test accuracy v.s. running time]{\label{figure:lr}\includegraphics[width=60mm]{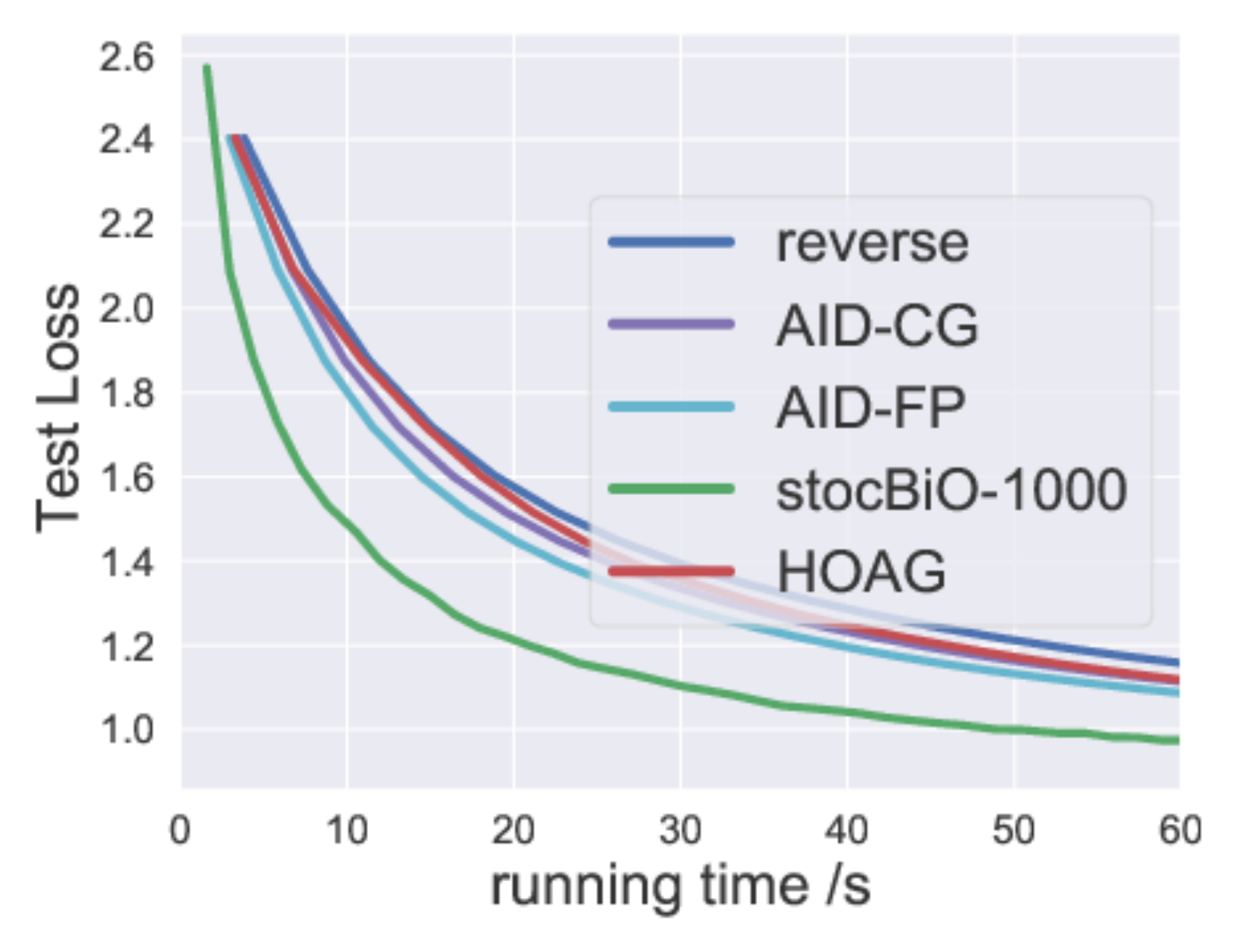}\includegraphics[width=60mm]{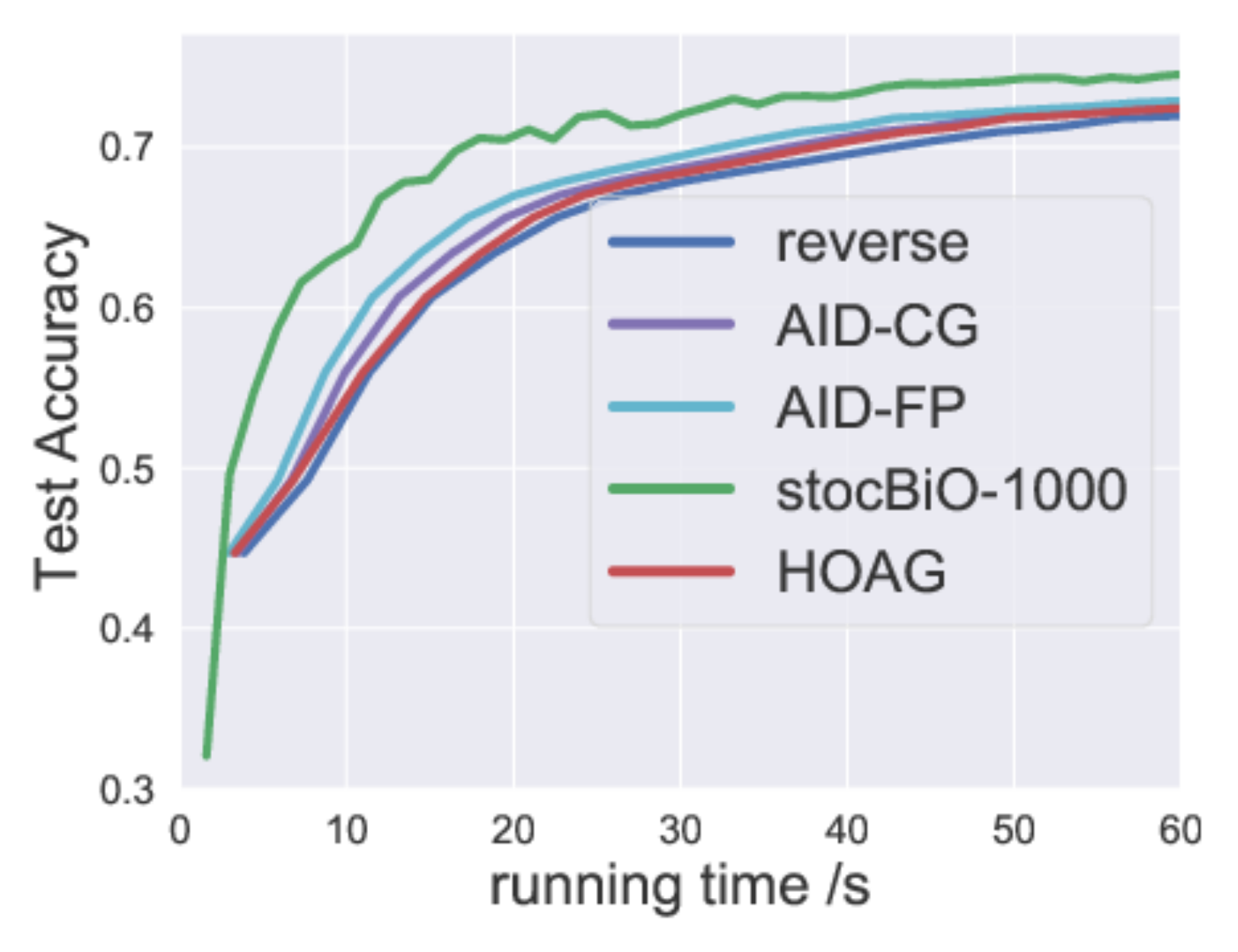}}    
	\subfigure[Convergence rate with different batch sizes]{\label{figure:batch}\includegraphics[width=60mm]{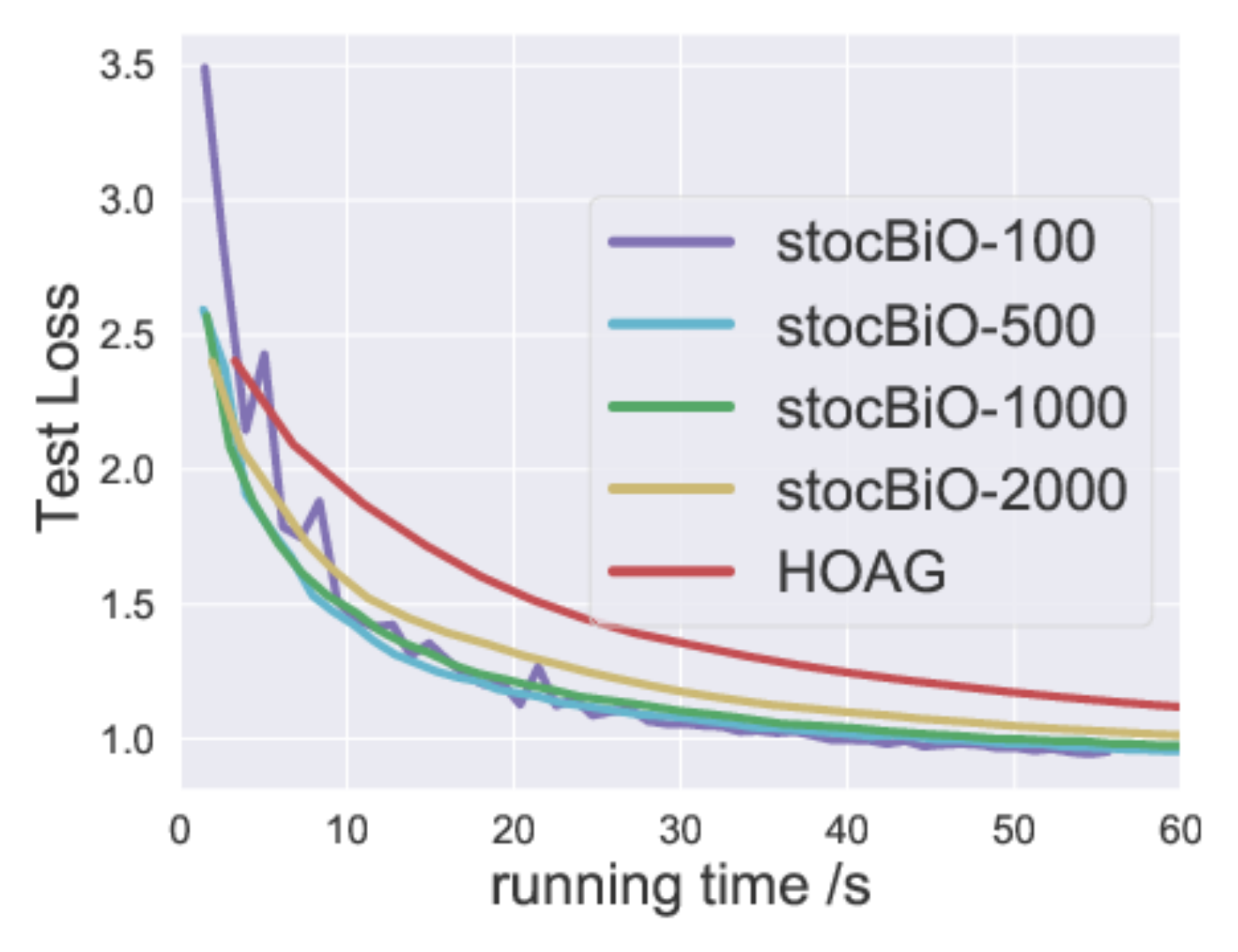}\includegraphics[width=60mm]{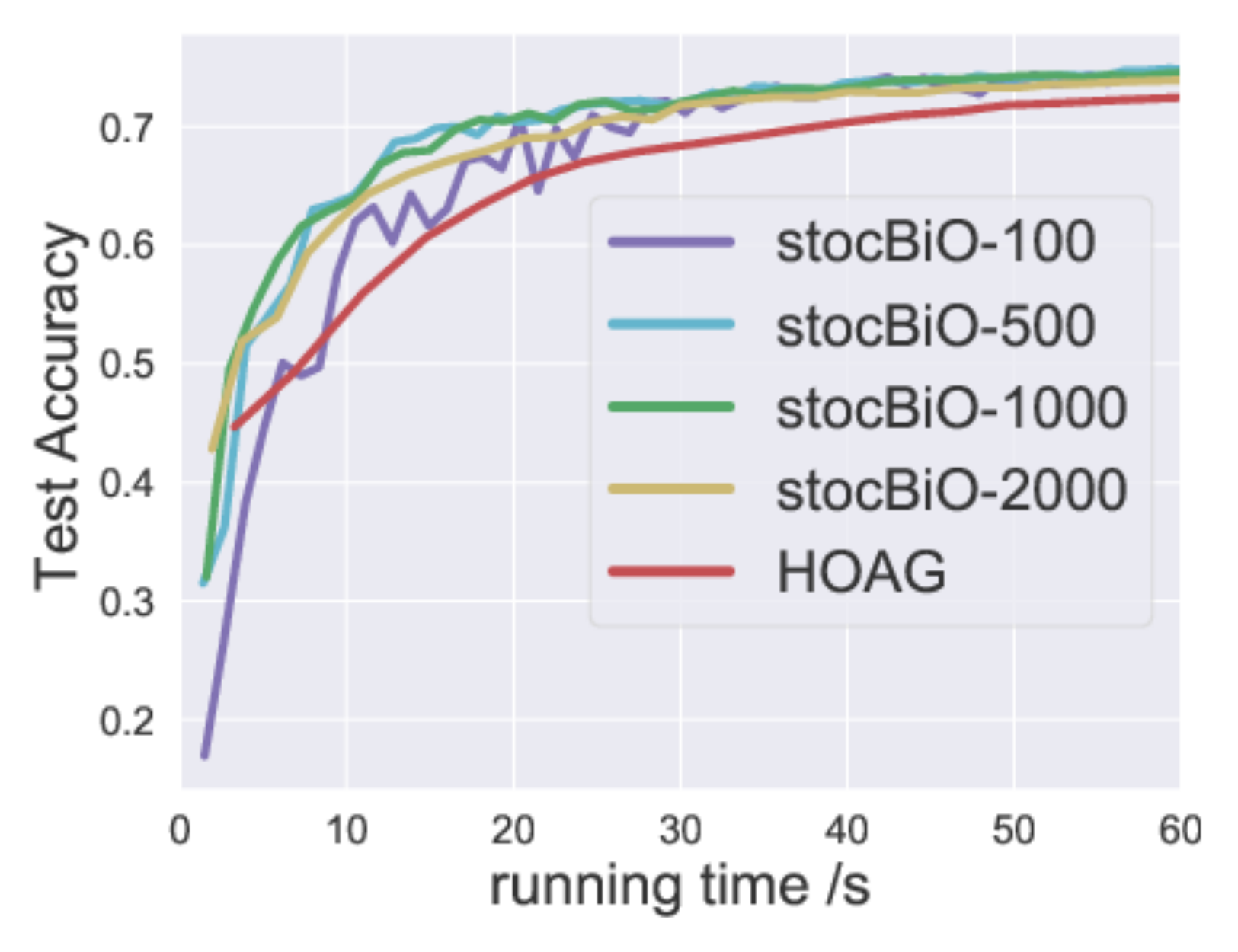}}  
	\vspace{-0.2cm}
	\caption{Comparison of various stochastic bilevel algorithms on logistic regression on 20 Newsgroup dataset.}\label{figure:newfigure}
	  \vspace{-0.3cm}
\end{figure*}

\vspace{0.1cm}
\noindent{\bf Logistic Regression on 20 Newsgroup:} 
We compare the performance of our algorithm {\bf stocBiO} with the existing baseline algorithms {\bf reverse, AID-FP, AID-CG and HOAG }over a logistic regression problem on $20$ Newsgroup dataset~\cite{grazzi2020iteration}. The objective function of such a problem is given by 
 \begin{align*}
&\min_\lambda E(\lambda,w^*) = \frac{1}{|\gD_{\text{val}}|}\sum_{(x_i,y_i)\in \gD_{\text{val}}} L(x_iw^*, y_i) \nonumber
\\& \;\mbox{s.t.} \quad w^* = \argmin_{w\in\mathbb{R}^{p\times c}}  \Big(\frac{1}{|\gD_{\text{tr}}|}\sum_{(x_i,y_i)\in \gD_{\text{tr}}}L(x_iw, y_i)  + \frac{1}{cp} \sum_{i=1}^c\sum_{j=1}^p \exp(\lambda_j)w_{ij}^2\Big),
\end{align*}
where $L$ is the cross-entropy loss, $c=20$ is the number of topics, and $p=101631$ is the feature dimension. Following \cite{grazzi2020iteration}, we use SGD as the optimizer for the outer-loop update for all algorithms. For reverse, AID-FP, AID-CG, we use the suggested and well-tuned hyperparameter setting in their implementations~\url{https://github.com/prolearner/hypertorch} on this application. In specific, they choose the inner- and outer-loop stepsizes as $100$, the number of inner loops as $10$, the number of CG steps as $10$.  For HOAG, we use the same parameters as reverse, AID-FP, AID-CG. For stocBiO, we use the same parameters as reverse, AID-FP, AID-CG, and choose $\eta=0.5,Q=10$. We use stocBiO-$B$ as shorthand of stocBiO with a batch size of $B$.

  As shown in \Cref{figure:lr}, the proposed stocBiO achieves the fastest convergence rate as well as the best test accuracy among all comparison algorithms. This demonstrates the practical advantage of our proposed algorithm stocBiO. Note that we do not include BSA and TTSA in the comparison, because they converge too slowly with a large variance, and are much worse than the other competing algorithms. In addition, we investigate the impact of the batch size on the performance of our stocBiO in \Cref{figure:batch}. It can be seen that stocBiO outperforms HOAG under the batch sizes of $100,500,1000,2000$. This shows that the performance of stocBiO is not very sensitive to the batch size, and hence the tuning of the batch size is easy to handle in practice. 

  \begin{figure*}[ht]
  \vspace{-2mm}
	\centering  
	\subfigure[Corruption rate $p=0.1$]{\label{fig2:cbikeveksa}\includegraphics[width=60mm]{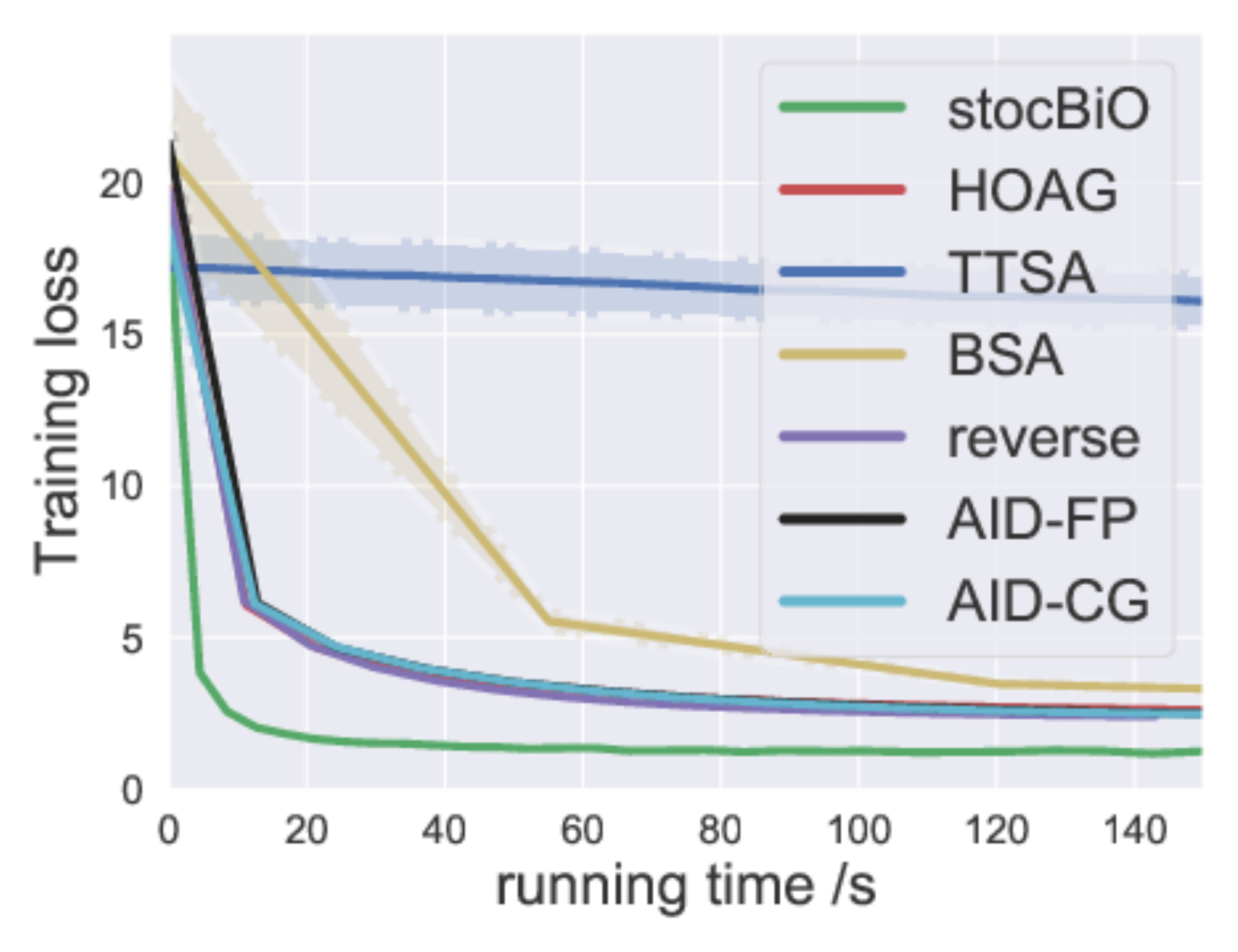}\includegraphics[width=60mm]{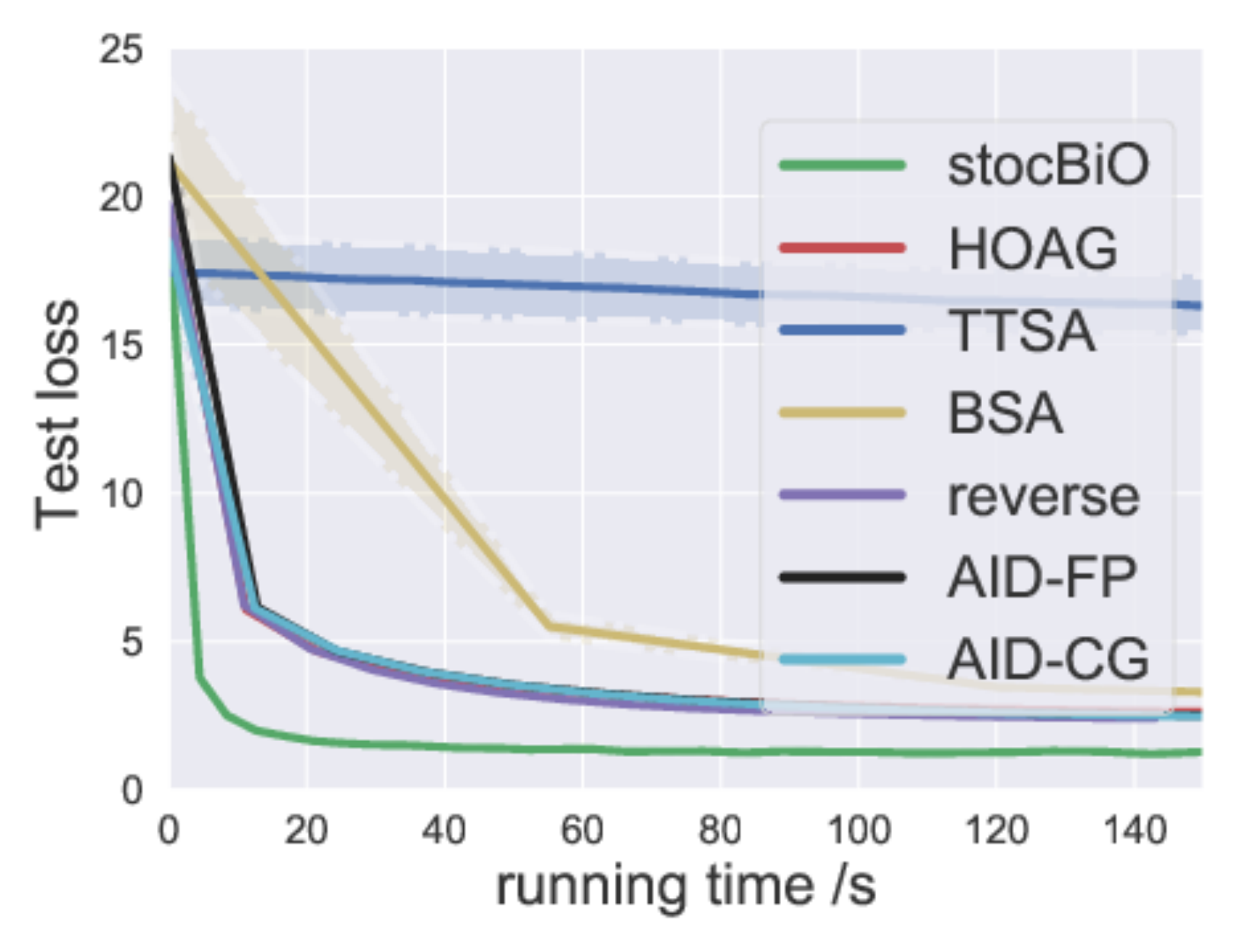}}    
	\subfigure[Corruption rate $p=0.4$]{\label{fig2:padascqsaca}\includegraphics[width=60mm]{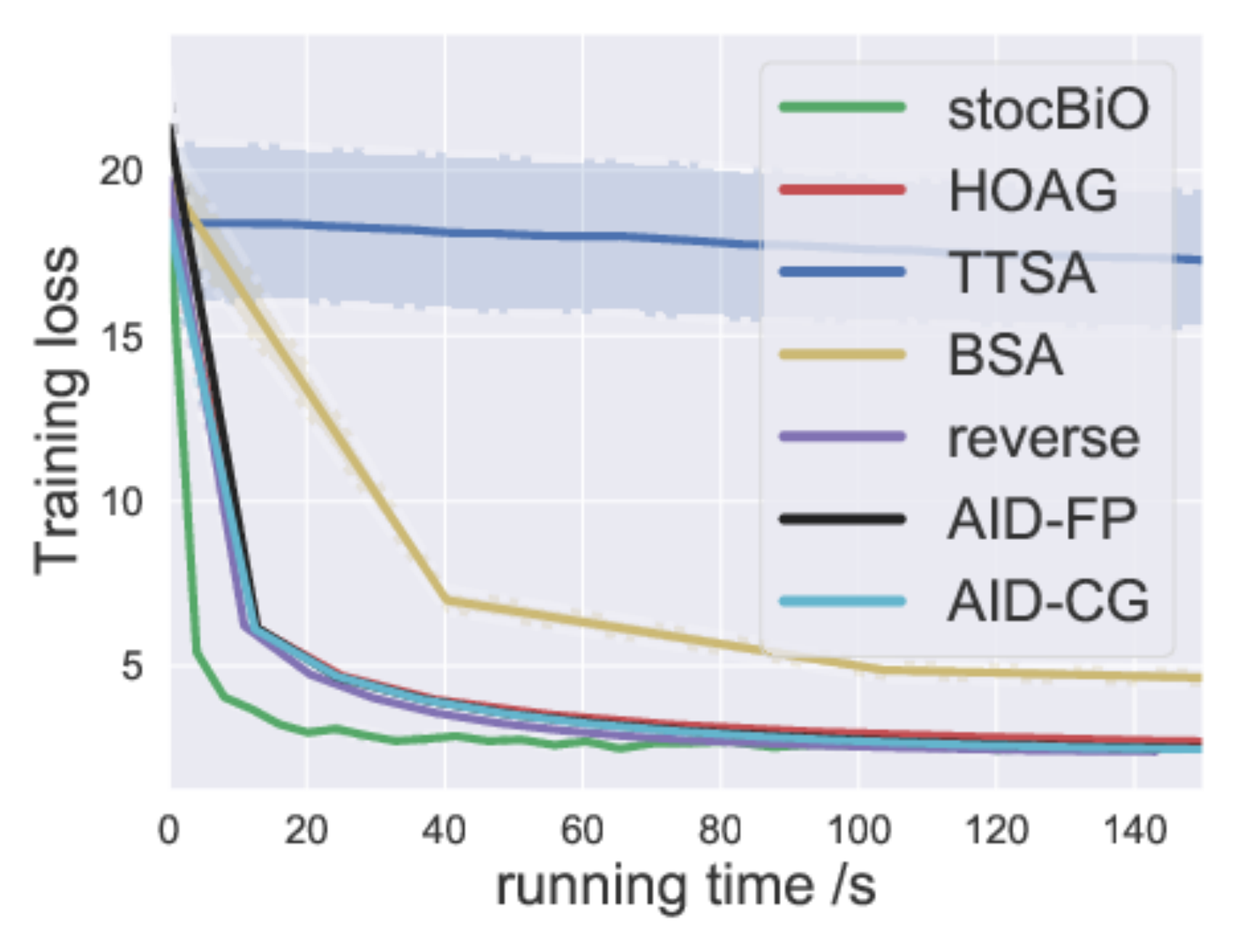}\includegraphics[width=60mm]{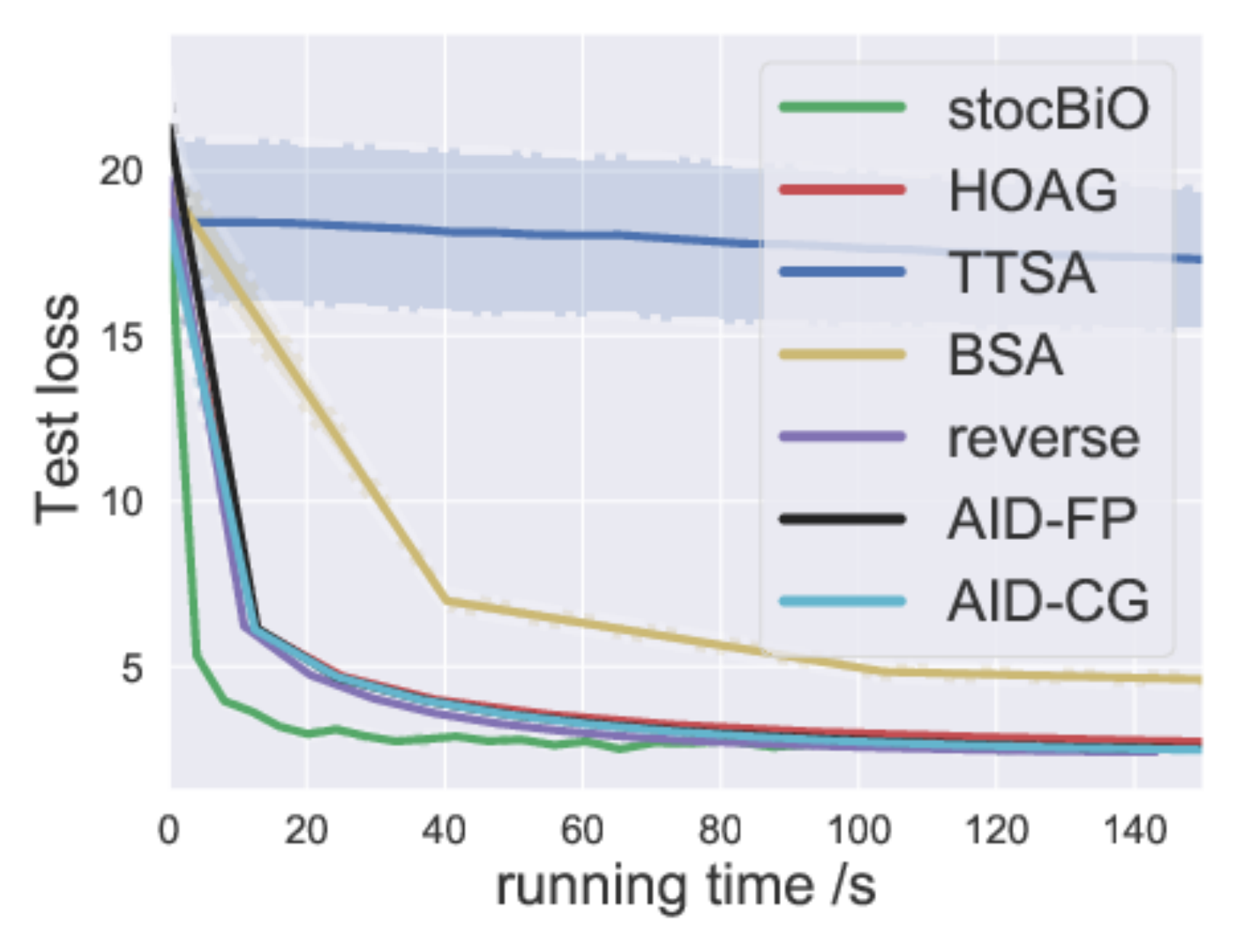}}  
	\vspace{-0.2cm}
	\caption{Comparison of various stochastic bilevel algorithms on hyperparameter optimization at different corruption rates.   For each corruption rate $p$, left plot: training loss v.s. running time; right plot: test loss v.s. running time.}\label{fig:hyper}
	  \vspace{-0.3cm}
\end{figure*}

  \begin{figure}[ht]
  \vspace{-2mm}
	\centering  
	\subfigure{\includegraphics[width=60mm]{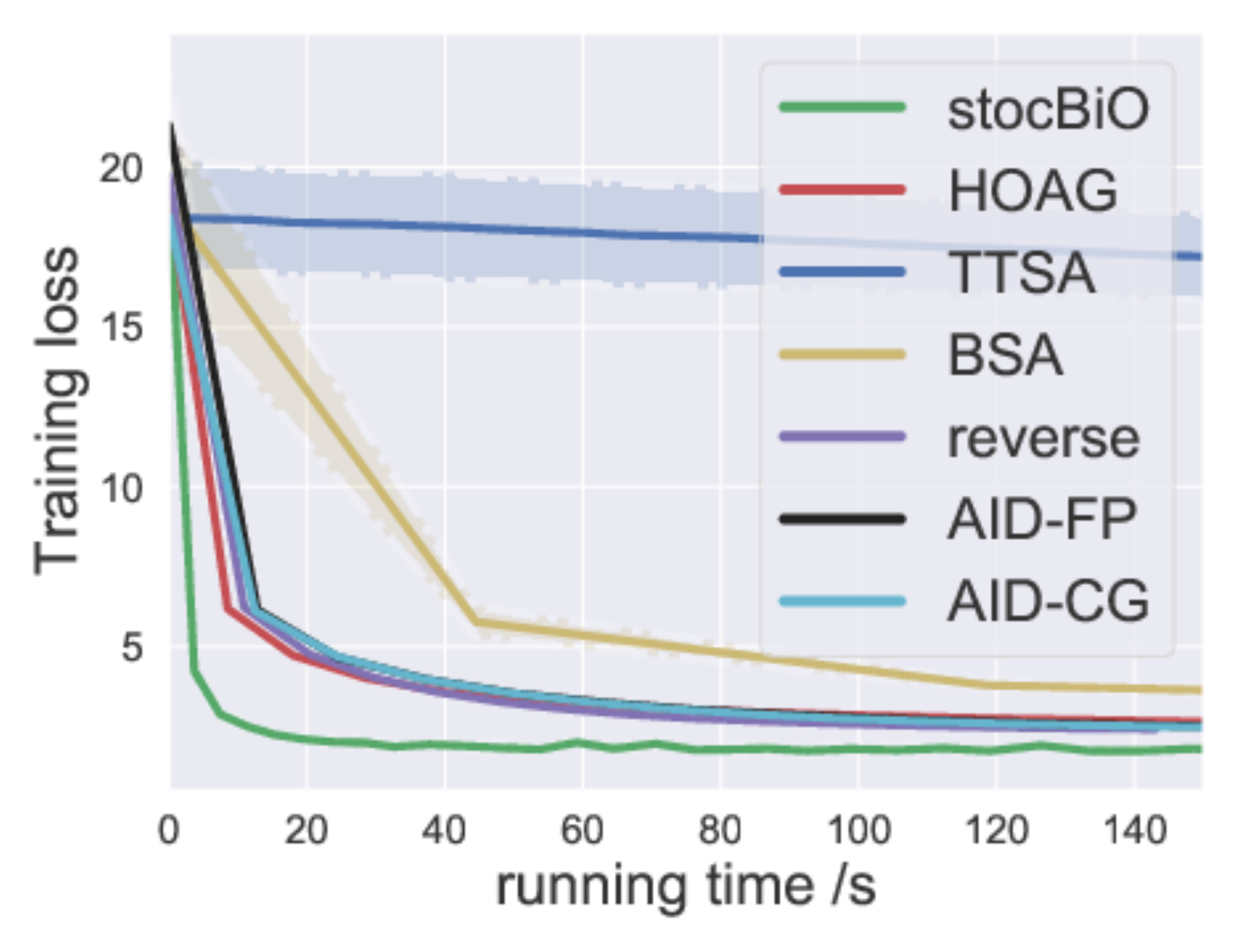}}
	\subfigure{\includegraphics[width=60mm]{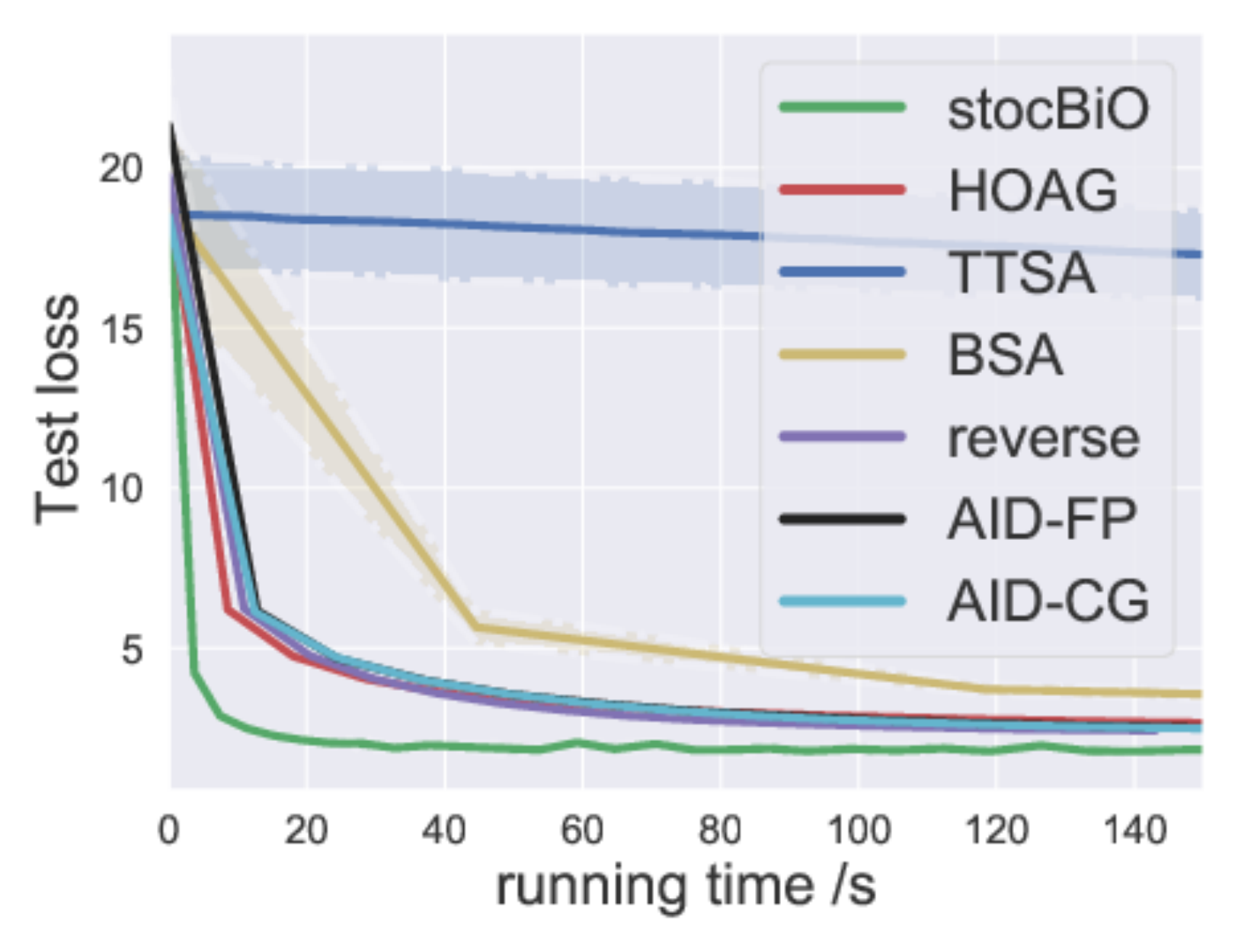}} 
	\vspace{-0.2cm}
	\caption{Convergence of algorithms at corruption rate $p=0.2$.   
	}\label{fig:hyper_appen}
	  \vspace{-0.4cm}
\end{figure}

\vspace{0.1cm}
\noindent{\bf Data Hyper-Cleaning on MNIST.} 
We compare the performance of our proposed algorithm stocBiO with other baseline algorithms BSA, TTSA, HOAG on a hyperparameter optimization problem: data hyper-cleaning~\cite{shaban2019truncated} on a dataset derived from MNIST~\cite{lecun1998gradient}, which consists of 20000 images for training, 5000 images for validation, and 10000 images for testing.  
Data hyper-cleaning is to train a classifier in a corrupted setting where each label of training data is replaced by a random class number with a probability $p$ (i.e., the corruption rate). The objective function is given by 
\begin{align*}
&\min_\lambda E(\lambda,w^*) = \frac{1}{|\gD_{\text{val}}|}\sum_{(x_i,y_i)\in \gD_{\text{val}}} L(w^*x_i, y_i) \nonumber
\\& \;\mbox{s.t.} \quad w^* = \argmin_{w} \gL(w,\lambda):= \frac{1}{|\gD_{\text{tr}}|}\sum_{(x_i,y_i)\in \gD_{\text{tr}}}\sigma(\lambda_i)L(wx_i, y_i)  + C_r \|w\|^2,
\end{align*}
where $L$ is the cross-entropy loss, $\sigma(\cdot)$ is the sigmoid function, $C_r$ is a regularization parameter. Following~\cite{shaban2019truncated}, we choose $C_r=0.001$. 
 All results are averaged over 10 trials with
different random seeds. We adopt Adam~\cite{kingma2014adam} as the optimizer for the outer-loop update for all algorithms. For stochastic algorithms, we set the batch size as $50$ for stocBiO, and $1$ for BSA and TTSA because they use the single-sample data sampling. For all algorithms, we use a grid search to choose the inner-loop stepsize from $\{0.01,0.1,1,10\}$, the  outer-loop stepsize from $\{10^{i},i=-4,-3,-2,-1,0,1,2,3,4\}$, and the number $D$ of inner-loop steps from $\{1,10,50,100,200,1000\}$, where values that achieve the lowest loss after a fixed running time are selected. For  stocBiO, BSA, and TTSA, we choose $\eta$ from $\{0.5\times 2^i, i=-3,-2,-1,0,1,2,3\}$, and $Q$ from $\{3\times 2^i, i=0,1,2,3\}$. 

It can be seen from Figures \ref{fig:hyper} and \ref{fig:hyper_appen} that our proposed stocBiO algorithm achieves the fastest convergence rate among all competing algorithms in terms of both the training loss and the test loss. It is also observed that such an improvement is more significant when the corruption rate $p$ is smaller.  We note that the stochastic algorithm TTSA converges very slowly with a large variance. This is because TTSA  updates the costly outer loop more frequently than other algorithms,  and has a larger variance due to the single-sample data sampling. As a comparison, our stocBiO has a much smaller variance for hypergradient estimation as well as a much faster convergence rate.  This validates our theoretical results in~\Cref{th:nonconvex}.

\section{Summary of Contributions}
In this chapter, we propose a faster stochastic optimization algorithm named stocBiO, and we show that its computational complexity outperforms the best known results orderwisely. Our results also provide the theoretical guarantee for stocBiO in hyperparameter optimization. Our experiments demonstrate the superior performance of the proposed stocBiO algorithm.  
We anticipate that the proposed algorithms will be useful for other applications such as reinforcement learning and Stackelberg game. 

\chapter{Convergence Theory for Model-Agnostic Meta-Learning}\label{chp: maml}
%
In this chapter, we study the convergence of the multi-step MAML algorithm. We consider two types of objective functions that are commonly used in practice: (a) {\bf resampling case}~\cite{finn2017model,fallah2020convergence}, where loss functions take the form  in expectation and new data are sampled as the algorithm runs; and (b) {\bf finite-sum case}~\cite{antoniou2019train}, where loss functions take the finite-sum form with given samples. The resampling case occurs often in reinforcement learning where data are continuously sampled as the algorithm iterates, whereas the finite-sum case typically occurs in classification problems where the datasets are already sampled in advance. In~\Cref{append: maml}, we provide examples for these two types of problems and all technical proofs for the results in this chapter.

\section{Resampling Case for Multi-Step MAML}
Suppose a set $\mathcal{T} = \{\mathcal{T}_i, i\in \mathcal{I}\}$ of tasks are available for learning and tasks are sampled based on a probability distribution $p(\mathcal{T})$ over the task set. Assume that each task $\mathcal{T}_i$  is associated with a loss  $l_i(w): \mathbb{R}^d \rightarrow \mathbb{R}$ parameterized by $w$.

The goal of multi-step MAML is to find a good initial parameter $w^*$ such that after observing a new task, a few gradient descend steps starting from such a point $w^*$ can efficiently approach the optimizer (or a stationary point) of the corresponding loss function. Towards this end, multi-step MAML consists of two nested stages, where the inner stage consists of  {\em multiple} steps of (stochastic) gradient descent for each individual tasks, and the outer stage updates the meta parameter over all the sampled tasks. More specifically, at each inner stage, each $\mathcal{T}_i$ initializes at the meta parameter, i.e., $\widetilde w^i_0 := w$, and runs $N$ {\em gradient descent} steps as 
\begin{align}\label{gd_w}
&\widetilde w^i_{j+1} = \widetilde w^i_j - \alpha \nabla l_i( \widetilde w^i_j), \quad j = 0,1,..., N-1 .
\end{align}
Thus, the loss of task $\mathcal{T}_i$ after the $N$-step inner stage iteration is given by $l_i(\widetilde w^i_N)$, where $\widetilde w^i_N$ depends on the meta parameter $w$ through the iteration updates in \cref{gd_w}, and can hence be written as $\widetilde w^i_N(w)$.  We further define $\mathcal{L}_i(w):=l_i(\widetilde w^i_N(w))$, and hence the overall meta objective is given by
\begin{align}\label{objectiveMAML}
\min_{w\in\mathbb{R}^d} \mathcal{L}(w):= \mathbb{E}_{i\sim p(\mathcal{T})}[ \mathcal{L}_i(w)]  :=  \mathbb{E}_{i\sim p(\mathcal{T})} [l_i(\widetilde w^i_N(w))].
\end{align}
Then the outer stage of meta update is a gradient decent step to optimize the above objective function. Using the chain rule, we provide a simplified form (see~\Cref{simplifeid} for its derivations) of  gradient $\nabla \mathcal{L}_i(w)$ by   
	\begin{align}\label{nablaF}
	\nabla \mathcal{L}_i(w) = \bigg[ \prod_{j=0}^{N-1}(I-\alpha \nabla^2 l_i(\widetilde w^i_{j}))\bigg]\nabla l_i(\widetilde w^i _{N}),
	\end{align}
	where $\widetilde w^i_{0} = w$ for all tasks. 
	Hence, the {\em full gradient descent} step of the outer stage for~\cref{objectiveMAML} can be written as  
\begin{align}\label{true_meta_gd}
w_{k+1}  = w_k - \beta_k \mathbb{E}_{i\sim p(\mathcal{T})}\bigg[\prod_{j=0}^{N-1}(I-\alpha \nabla^2 l_i(\widetilde w^i_{k,j}))\bigg]\nabla l_i(\widetilde w^i _{k,N}),
\end{align}
where the index $k$ is added to $\widetilde w^i_{j}$ in \cref{nablaF} to denote that these parameters are at the $k^{th}$ iteration of the meta parameter $w$.

\begin{algorithm}[t]
	\caption{Multi-step MAML in the resampling case} 
	\label{alg:online}
	\begin{algorithmic}[1]
		\STATE {\bfseries Input:}  Initial parameter $w_0$, inner stepsize $\alpha>0$	
		\FOR{$k=1,...,K$}
		\STATE{Sample $B_k\subset \mathcal{I}$ of i.i.d. tasks by distribution $p(\mathcal{T})$}
		\FOR{all tasks $\mathcal{T}_i$ in $B_k$}
		\FOR{$j = 0, 1,...,N-1$}
		\STATE{Sample a training set $S^i_{k,j}$ 
			\\ Update { $w^i_{k, j+1} = w^i_{k, j} - \alpha \nabla l_i(w^i_{k,j}; S^i_{k,j})$}
		}
		\ENDFOR
		\ENDFOR
		\STATE{Sample $T^i_k$ and $D_{k,j}^i$ and compute {\small$\widehat G_i(w_k)$} through~\cref{eq:metagrad_est}.}
 	\STATE{update  
$		w_{k+1}= w_k - \beta_k\frac{\sum_{i\in B_k}\widehat G_i(w_k) }{|B_k|}$.
			}
		\ENDFOR
	\end{algorithmic}
	\end{algorithm}

The inner- and outer-stage updates of MAML given in \cref{gd_w} and \cref{true_meta_gd} involve the gradient $\nabla l_i(\cdot)$ 
 and the Hessian $\nabla^2 l_i(\cdot)$ of the loss function $l_i(\cdot)$, which takes the form of the expectation over the distribution of data samples as given by 
\begin{align}\label{fiw}
l_i(\cdot) = \mathbb{E}_{\tau} l_i(\cdot\,; \tau),
\end{align} 
where $\tau$ represents the data sample. In practice, these two quantities based on the population loss function are estimated by samples. In specific, each task $\mathcal{T}_i$ samples a batch $\Omega$ of data under the current parameter $w$, and uses $\nabla l_i(\cdot\,; \Omega):= \frac{\sum_{\tau \in \Omega} \nabla l_i(\cdot\,; \tau)}{|\Omega|} $  and $\nabla^2 l_i(\cdot\,; \Omega):= \frac{\sum_{\tau \in \Omega} \nabla^2 l_i(\cdot\,; \tau)}{|\Omega|} $ as {\em unbiased} estimates of  the gradient $\nabla l_i(\cdot)$ and the Hessian $\nabla^2 l_i(\cdot)$, respectively. 

\vspace{0.01cm}

For practical multi-step MAML as shown in Algorithm~\ref{alg:online}, at the $k^{th}$ outer stage, we sample a set $ B_k$ of tasks. Then,  at the inner stage, each task $\mathcal{T}_i\in B_k$ samples a training set {\small$S_{k,j}^i$} for each iteration $j$ in the inner stage, uses  {\small$\nabla l_i(w^i_{k,j};S^i_{k,j})$} as an estimate of {\small$\nabla l_i(\widetilde w^i_{k,j})$} in \cref{gd_w}, and runs a SGD update as
\begin{align}\label{es:up}
w^i_{k, j+1} = w^i_{k, j} - \alpha \nabla l_i(w^i_{k,j};S^i_{k,j}), \quad  j=0,..,N-1,
\end{align} 
where the initialization parameter $w^i_{k,0}=w_k$ for all $i \in B_k$. 

At the $k^{th}$ outer stage, we draw a  batch {\small$T^i_k$} and {\small$D_{k,j}^i$} of data samples  independent from each other and both independent from {\small$S^i_{k,j}$} and
use {\small$\nabla l_i(w_{k,N}^i;T^i_k)$} and  {\small$ \nabla^2 l_i(w_{k,j}^i; D_{k,j}^i)$} to estimate {\small$\nabla l_i(\widetilde w^i _{k,N})$} and {\small$\nabla^2 l_i(\widetilde w^i_{k,j})$} in~\cref{true_meta_gd}, respectively.
Then, the meta parameter $w_{k+1}$ at the outer stage is updated by a SGD step as shown in line $10$ of Algorithm~\ref{alg:online}, 
where the estimated gradient $\widehat G_i(w_k)$  has a form of 
\begin{align}\label{eq:metagrad_est}
\widehat G_i(w_k) =\prod_{j=0}^{N-1}\big(I - \alpha \nabla^2 l_i\big(w_{k,j}^i;D_{k,j}^i\big)\big)\nabla l_i(w_{k,N}^i; T^i_k).
\end{align}
For simplicity, we suppose the sizes of { $S_{k,j}^i$, $D_{k,j}^i$} and  { $T_k^i$}  are $S$, $D$ and $T$.

\section{Finite-Sum Case for Multi-Step MAML}\label{app:finitesum}
In the finite-sum case, each task $\mathcal{T}_i$ is {\em pre-assigned} with a support/training sample set $S_i$ and a query/test sample set $T_i$. Differently from the resampling case, these sample sets are fixed and no additional fresh data are sampled as the algorithm runs. The goal here is to learn an initial parameter $w$ such that for each task $i$, after $N$ {\em gradient descent} steps on data from $S_i$ starting from this $w$, we can find a parameter $w_N$ that performs well on the test data set $T_i$. Thus, each task $\mathcal{T}_i$ is associated with two fixed loss functions { $l_{S_i}(w):= \frac{1}{|S_i|}\sum_{\tau \in S_i} l_i(w; \tau)$} and { $l_{T_i}(w):= \frac{1}{|T_i|}\sum_{\tau \in T_i} l_i(w; \tau)$} with a finite-sum structure, where $l_i(w; \tau)$ is the loss on a single sample point $\tau$ and a parameter $w$.  Then, the meta objective function takes the form of 
\begin{align}\label{objective2}
\min_{w\in\mathbb{R}^d} \mathcal{L}(w):= \mathbb{E}_{i\sim p(\mathcal{T})} [\mathcal{L}_i(w)] =  \mathbb{E}_{i\sim p(\mathcal{T})} [l_{T_i}(\widetilde w^i_N)], 
\end{align}
where $\widetilde w^i_N$ is obtained by 
\begin{align}\label{innerfinite}
&\widetilde w^i_{j+1} = \widetilde w^i_j - \alpha \nabla l_{S_i}(\widetilde w^i_j),  \quad j = 0, 1,..., N-1 \, \text{ with }\, \widetilde w^i_0 := w.
\end{align} 

We want to emphasize that  $S_i$ and $T_i$ are both training datasets (they together form into {\bf meta-training datasets}), and \cref{objective2} is the  meta-training loss, i.e., the empirical loss for estimating the test time expected loss. \cref{objective2} does not involve anything correlated with test error. During the test period, MAML is evaluated over {\bf meta-test datasets} that are separate from meta-training datasets $S_i$ and $T_i$.

Similarly to the resampling case, we define  the expected losses  $l_S(w)=\mathbb{E}_{i} l_{S_i}(w)$  and $l_T(w)=\mathbb{E}_{i} l_{T_i}(w)$, and the meta gradient step of the outer stage for \cref{objective2} can be written as 
\begin{align}\label{fulll_updd}
w_{k+1}  = w_k - \beta_k \mathbb{E}_{i\sim p(\mathcal{T})}\prod_{j=0}^{N-1}(I-\alpha \nabla^2 l_{S_i}(\widetilde w^i_{k,j}))\nabla l_{T_i}(\widetilde w^i _{k,N}),
\end{align}
where the index $k$ is added to $\widetilde w^i_{j}$ in \cref{innerfinite} to denote that these parameters are at the $k^{th}$ iteration of the meta parameter $w$.
 
\begin{algorithm}[t]
	\caption{Multi-step MAML in the finite-sum case} 
	\label{alg:offline}
	\begin{algorithmic}[1]
		\STATE {\bfseries Input:}  Initial parameter $w_0$, inner stepsize $\alpha>0$	
		\FOR{$k=1,...,K$}
		\STATE{Sample  $B_k\subset \mathcal{I}$ of i.i.d. tasks  by distribution $p(\mathcal{T})$}
		\FOR{all tasks $\mathcal{T}_i$ in $ B_k$}
		\FOR{$j = 0, 1,...,N-1$}
		\STATE{
			Update { $w^i_{k, j+1} = w^i_{k, j} - \alpha \nabla l_{S_i}\big(w^i_{k,j}\big)$}
		}
		\ENDFOR
		\ENDFOR
		\STATE{
			Update $w_{k+1}= w_k -\frac{\beta_k}{|B_k|} \sum_{i\in B_k}\widehat G_i(w_k)$
			}
		\ENDFOR
	\end{algorithmic}
\end{algorithm}

As shown in Algorithm~\ref{alg:offline}, MAML in the finite-sum case has a nested structure similar to that in the resampling case except that it does not sample fresh data at each iteration. 
In the inner stage, MAML performs a sequence of {\em full gradient descent steps} (instead of stochastic gradient steps as in the resampling case) for each task $i \in B_k$ as given by 
\begin{align}\label{giniteoo}
w^i_{k, j+1} = w^i_{k, j} - \alpha \nabla l_{S_i}\big(w^i_{k,j}\big), \text{ for } j=0,....,N-1
\end{align}  
where $w_{k,0}^i = w_k$ for all $i\in B_k$. As a result, the parameter $w_{k,j}$ (which denotes the parameter due to the full gradient update) in the update step~\cref{giniteoo} is equal to $\widetilde w_{k,j}$ in \cref{fulll_updd} for all $j=0,...,N$. 


At the outer-stage iteration, the meta optimization of MAML performs a SGD step as shown in line 9 of Algorithm~\ref{alg:offline},  where $ \widehat G_i(w_k) $ is given by 
\begin{align}\label{offline:obj}
 \widehat G_i(w_k) &=\prod_{j=0}^{N-1}(I - \alpha \nabla^2 l_{S_i}(w_{k,j}^i))\nabla l_{T_i}(w_{k,N}^i).
\end{align}

Compared with the resampling case, the biggest difference for analyzing Algorithm~\ref{alg:offline} in the finite-sum case is that the losses $ l_{S_i}(\cdot)$ and $ l_{T_i}(\cdot)$ used in the inner and outer stages respectively are different from each other,  whereas in the resampling case, they both are equal to $l_i(\cdot)$ which takes the expectation over the corresponding samples. 
Thus, the convergence analysis for the finite-sum case  requires to develop different techniques.  For simplicity, we assume that the sizes of all $B_k$ are $B$.

\section{Convergence of Multi-Step MAML in Resampling Case} \label{theory:online}
In this section, we first make some basic assumptions for the meta loss functions. 
\subsection*{Basic Assumptions}
We adopt the following standard assumptions 
\cite{fallah2020convergence,rajeswaran2019meta}. Let $\|\cdot\|$ denote the $\ell_2$-norm or spectrum norm for a vector or matrix, respectively. 
\begin{assum}\label{assum:smooth}
	The loss $l_i(\cdot)$ of task $\mathcal{T}_i$ given by~\cref{fiw} satisfies
	\begin{enumerate}
		\item The loss $l_i(\cdot)$  is bounded below, i.e., { $ \inf_{w\in\mathbb{R}^d} l_i(w) > -\infty$}.  \label{item1}		

		\item $\nabla l_i(\cdot)$ is $L_i$-Lipschitz, i.e., for any $w,u\in\mathbb{R}^d$,     \label{item2}
$		\|\nabla l_i(w)-\nabla l_i(u)\| \leq L_i\|w-u\|$.
		\item  $\nabla^2 l_i(\cdot)$ is  $\rho_i$-Lipschitz, i.e., for any $w,u\in\mathbb{R}^d$,
$		\|\nabla^2 l_i(w)-\nabla^2 l_i(u)\| \leq \rho_i\|w-u\|$.
	\end{enumerate}
\end{assum}

By the definition of the objective function $\mathcal{L}(\cdot)$ in~\cref{objectiveMAML},  item~\ref{item1} of  Assumption~\ref{assum:smooth} implies that $\mathcal{L}(\cdot)$ is bounded below. In addition, 
item~\ref{item2}  implies that {\small$\|\nabla^2 l_i(w)\|\leq L_i$} for any $w\in\mathbb{R}^d$. 
For notational convenience,   we take $L=\max_i L_i$ and $\rho = \max_{i} \rho_i$. 
The following assumptions impose the  bounded-variance conditions on $\nabla l_i(w)$, $\nabla l_i(w; \tau)$ and $\nabla^2 l_i(w; \tau)$.
\begin{assum}\label{a2}
	The stochastic gradient $\nabla l_i(\cdot)$ (with $i$ uniformly randomly chosen from set $\mathcal{I}$) has bounded variance, i.e., there exists a constant $\sigma>0$ such that, for any $w\in\mathbb{R}^d$,
	\begin{align*}
	\mathbb{E}_{i}\|\nabla l_i(w) - \nabla l(w)\|^2 \leq \sigma^2, 
	\end{align*}
	where the expected loss function $l(w):=\mathbb{E}_{i}l_i(w)$.
\end{assum}

\begin{assum}\label{a3}
	For any $w\in\mathbb{R}^d$ and $i\in\mathcal{I}$, there exist positive constants $\sigma_g, \sigma_H>0$ such that  
	\begin{align*}
	&\mathbb{E}_{\tau}\|\nabla l_i(w; \tau)- \nabla l_i(w)\|^2 \leq \sigma_g^2\;\text{ and }\;
\mathbb{E}_{\tau}\|\nabla^2 l_i(w;\tau)- \nabla^2 l_i(w)\|^2 \leq \sigma_H^2.
	\end{align*}
\end{assum}

Note that the above assumptions are  made only on individual loss functions $l_i(\cdot)$ rather than on the total loss $\mathcal{L}(\cdot)$, because some  conditions do not hold for $\mathcal{L}(\cdot)$, as shown later. 

\subsection*{Challenges of Analyzing Multi-Step MAML}

Several new challenges arise when we analyze the convergence of {\em multi}-step MAML (with $N\ge 2$) compared to the one-step case (with $N=1$).

First, each iteration of  the meta parameter affects the overall objective function via a nested structure of $N$-step SGD optimization paths over all tasks. 
Hence, our analysis of the convergence of such a meta parameter 
needs to characterize the nested structure and the recursive updates. 

Second, the meta gradient estimator  {\small$\widehat G_i(w_k)$}  given in \cref{eq:metagrad_est} involves {\small$\nabla^2 l_i(w_{k,j}^i;D_{k,j}^i)$} for $ j=1,...,N-1$, which are all {\em biased} estimators of {\small$\nabla^2 l_i(\widetilde w^i_{k,j})$} in terms of the randomness over $D_{k,j}^i$.
This is because $ w^i_{k,j}$ is  a stochastic estimator of $\widetilde w^i_{k,j}$ obtained via  random training sets $S_{k,t}^i, t=0,...,j-1$ along an $N$-step SGD optimization path in the inner stage. In fact, such a bias error occurs only for multi-step MAML with $N\geq 2$ (which equals zero for $N=1$), and requires additional efforts to handle. 

Third, both the Hessian term {\small$\nabla^2 l_i(w_{k,j}^i; D_{k,j}^i)$} for $j=2,...,N-1$ and the gradient term {\small$\nabla l_i(w_{k,N}^i; T^i_k)$} in the meta gradient estimator {\small$\widehat G_i(w_k)$} given in \cref{eq:metagrad_est} depend on the sample sets  $S_{k,i}^i$ used for inner stage iteration to obtain $w_{k,N}^i$, and hence they are statistically {\em correlated} even conditioned on $w_k$. Such complication also occurs only for multi-step MAML with $N\geq 2$ and requires new treatment (the two terms are independent for $N=1$).



\vspace{0.2cm}
\noindent {\bf Solutions to address the above challenges.} The first challenge is mainly caused by the recursive structure of the meta gradient $\nabla \mathcal{L}(w)$ in \cref{true_meta_gd} and the meta gradient estimator {\small$\widehat G_i(w_k)$}  given in \cref{eq:metagrad_est}. For example, when analyzing the smoothness of the meta gradient $\nabla \mathcal{L}(w)$, we  need to characterize the gap $\Delta_p$ between two quantities $\prod_{j=0}^{N-1}(I-\alpha \nabla^2 l_i(\widetilde w^i_{j}))$ and $\prod_{j=0}^{N-1}(I-\alpha \nabla^2 l_i(\widetilde u^i_{j}))$, where $w_j^i$ and $u_j^i$ are the $j^{th}$ iterates of two different inner-loop updating paths. Then, using the error decomposition strategy that $\|f_1f_2-f_1^\prime f_2^\prime\|\leq\|f_1-f_1^\prime\|\|f_2\| + \|f_1^\prime\|\|f_2-f_2^\prime\|$, we can decompose the error $\Delta_p$ into $N$ parts, where each one corresponds to the distance $\|w^i_{j}-u^i_{j}\|$. The remaining step is to bound the distances $\|w^i_{j}-u^i_{j}\|, j=0,...,N-1$ by finding the relationship between $\|w^i_{j+1}-u^i_{j+1}\|$ and $\|w^i_{j}-u^i_{j}\|$
based on the inner-loop gradient descent updates. 
\vspace{0.01cm}

To address the second and third challenges, we first use the strategy we propose in the first challenge to decompose the error into $N$ components with each one taking the form of $\|w_{k,j}^i-\widetilde w_{k,j}^i\|$, where $w_{k,j}^i$ and $\widetilde w_{k,j}^i$ are the $j^{th}$ stochastic gradient step and true gradient step
of the inner loop at iteration $k$.  The remaining step is to upper-bound the first- and second-moment distances between $w_{k,j}^i$ and $\widetilde w_{k,j}^i$ for all { $j= 0,..., N$}  by finding the relationship between $\|w_{k,j+1}^i-\widetilde w_{k,j+1}^i\|$ and $\|w_{k,j}^i-\widetilde w_{k,j}^i\|$
based on the inner-loop {\em stochastic} gradient updates.

\subsection*{Properties of Meta Gradient}
Differently from the conventional  gradient whose corresponding loss is evaluated directly at the current parameter $w$, the meta  gradient has a more complicated nested structure with respect to $w$, because its loss is evaluated at the final output of the inner optimization stage, which is $N$-step SGD updates.
As a result, analyzing the meta gradient is very different and more challenging compared to analyzing the conventional  gradient. In this subsection, we establish some important properties of the meta gradient which are useful for characterizing the convergence of multi-step MAML.  


Recall that  $\nabla \mathcal{L}(w) = \mathbb{E}_{i\sim p(\mathcal{T})} [\nabla \mathcal{L}_i(w)]$ with  $\nabla \mathcal{L}_i(w)$ given by~\cref{nablaF}. The following proposition characterizes the Lipschitz property of the gradient $\nabla \mathcal{L}(\cdot)$. 
\begin{proposition}\label{th:lipshiz}
	 Suppose   Assumptions~\ref{assum:smooth},~\ref{a2} and~\ref{a3} hold. For $\forall w,u\in\mathbb{R}^d$, we have 
	\begin{align*}
	\|\nabla \mathcal{L}(w) - \nabla \mathcal{L}(u)\|  \leq \big( (1+\alpha L) ^{2N}L + C_\mathcal{L} \mathbb{E}_{i}\|\nabla l_i(w)\|\big) \|w-u\|,
	\end{align*}
	where $C_\mathcal{L}$ is a positive constant  given by{
	\begin{align}\label{clcl}
	C_\mathcal{L}= \big(  (1+\alpha L) ^{N-1}\alpha \rho + \frac{\rho}{L}  (1+\alpha L) ^N ( (1+\alpha L) ^{N-1} -1) \big)  (1+\alpha L) ^N.
	\end{align}}
\end{proposition}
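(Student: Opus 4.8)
The plan is to reduce the claim to a per-task estimate and then exploit the product structure of the meta gradient in~\cref{nablaF}. Since $\nabla\mathcal{L}(w)=\mathbb{E}_{i}[\nabla\mathcal{L}_i(w)]$, Jensen's inequality reduces the problem to bounding $\|\nabla\mathcal{L}_i(w)-\nabla\mathcal{L}_i(u)\|$ for each fixed task $\mathcal{T}_i$ and then taking the expectation over $i$. Writing $P_i(w)=\prod_{j=0}^{N-1}(I-\alpha\nabla^2 l_i(\widetilde w^i_j))$ so that $\nabla\mathcal{L}_i(w)=P_i(w)\nabla l_i(\widetilde w^i_N)$, I would split the difference via the product-rule decomposition
\begin{align*}
\nabla\mathcal{L}_i(w)-\nabla\mathcal{L}_i(u)=\big(P_i(w)-P_i(u)\big)\nabla l_i(\widetilde w^i_N)+P_i(u)\big(\nabla l_i(\widetilde w^i_N)-\nabla l_i(\widetilde u^i_N)\big),
\end{align*}
where $\widetilde w^i_j$ and $\widetilde u^i_j$ denote the inner-loop iterates~\cref{gd_w} initialized at $w$ and $u$, respectively.

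The two auxiliary estimates needed are stability bounds along the inner optimization path. First, by induction on $j$ together with the $L$-Lipschitz property of $\nabla l_i$ (Assumption~\ref{assum:smooth}), the inner iterates satisfy $\|\widetilde w^i_j-\widetilde u^i_j\|\leq(1+\alpha L)^j\|w-u\|$, since each gradient-descent step expands the discrepancy by at most a factor $1+\alpha L$. Second, applying the same Lipschitz property to $\widetilde w^i_{j+1}=\widetilde w^i_j-\alpha\nabla l_i(\widetilde w^i_j)$ shows the gradient norm grows at most geometrically, $\|\nabla l_i(\widetilde w^i_j)\|\leq(1+\alpha L)^j\|\nabla l_i(w)\|$. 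Using $\|\nabla^2 l_i\|\leq L$, every factor of $P_i$ has operator norm at most $1+\alpha L$, whence $\|P_i(u)\|\leq(1+\alpha L)^N$. Combining this with the final-iterate bound $\|\nabla l_i(\widetilde w^i_N)-\nabla l_i(\widetilde u^i_N)\|\leq L(1+\alpha L)^N\|w-u\|$ controls the second term by $(1+\alpha L)^{2N}L\|w-u\|$, producing the first summand of the claimed Lipschitz constant.

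For the first term I would telescope the product difference factor by factor, bounding $\|P_i(w)-P_i(u)\|\leq\sum_{j=0}^{N-1}(1+\alpha L)^{N-1}\,\alpha\|\nabla^2 l_i(\widetilde w^i_j)-\nabla^2 l_i(\widetilde u^i_j)\|$, then invoking the $\rho$-Lipschitz Hessian (Assumption~\ref{assum:smooth}) and the path-stability bound to obtain $\|\nabla^2 l_i(\widetilde w^i_j)-\nabla^2 l_i(\widetilde u^i_j)\|\leq\rho(1+\alpha L)^j\|w-u\|$. Summing the resulting geometric series in $j$ and multiplying by the gradient-growth bound $\|\nabla l_i(\widetilde w^i_N)\|\leq(1+\alpha L)^N\|\nabla l_i(w)\|$ yields a contribution of the form $C_\mathcal{L}\|\nabla l_i(w)\|\|w-u\|$; separating the $j=0$ term from the geometric sum and rewriting $\alpha\rho$ as $\tfrac{\rho}{L}\big((1+\alpha L)-1\big)$ recovers exactly the two-term expression for $C_\mathcal{L}$ in~\cref{clcl}. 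Taking $\mathbb{E}_i$ then gives the stated bound.

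I expect the main obstacle to be the bookkeeping of the product-difference telescoping, which is precisely what distinguishes multi-step from one-step MAML. For $N=1$ the product collapses and $P_i(w)-P_i(u)$ is a single Hessian difference, but for $N\geq2$ the error in each Hessian factor is amplified by the remaining $N-1$ factors and by the path-distance factor $(1+\alpha L)^j$, and these contributions must be accumulated across all $j$ through the correct geometric sums. Matching the constant to~\cref{clcl} exactly, rather than settling for a cruder upper bound, requires carefully peeling off the $j=0$ contribution and performing the algebraic reorganization above; this is the step where the nested recursive structure of the meta gradient is genuinely felt.
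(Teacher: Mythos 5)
Your proposal is correct and follows essentially the same route as the paper's proof: the same splitting of $\nabla\mathcal{L}_i(w)-\nabla\mathcal{L}_i(u)$ into a product-difference term and a gradient-difference term, the same two stability lemmas $\|\widetilde w^i_j-\widetilde u^i_j\|\leq(1+\alpha L)^j\|w-u\|$ and $\|\nabla l_i(\widetilde w^i_j)\|\leq(1+\alpha L)^j\|\nabla l_i(w)\|$, and a telescoping of the Hessian-product difference that, once the geometric sum is evaluated, yields exactly the constant $C_\mathcal{L}$ in~\cref{clcl} (your direct factor-by-factor sum is just the unrolled form of the paper's recursion $V(m)\leq(1+\alpha L)V(m-1)+\alpha\rho(1+\alpha L)^{2(m-1)}\|w-u\|$). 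No gaps.
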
   
The proof of Proposition~\ref{th:lipshiz} handles the aforementioned first challenge. More specifically,  
we bound the differences between $\widetilde w_j^i$ and $\widetilde u_j^i$ along two separate paths $(\widetilde w_j^i, j =0,....,N)$ and $(\widetilde u_j^i, j =0,....,N )$, and then connect these differences to the distance $\|w-u\|$. Proposition~\ref{th:lipshiz} shows that the objective $\mathcal{L}(\cdot)$ has a gradient-Lipschitz parameter $$L_w =  (1+\alpha L) ^{2N}L + C_\mathcal{L}\mathbb{E}_{i}\|\nabla l_i(w)\|,$$ 
which can  be unbounded since  $\nabla l_i(w)$ may be unbounded. 
Similarly to~\cite{fallah2020convergence}, we  use 
\begin{align}\label{hatlw}
\widehat L_{w_k} =   (1+\alpha L) ^{2N}L + \frac{ C_\mathcal{L}\sum_{i\in B_k^\prime}\|\nabla l_i(w_k; D_{L_k}^i)\|}{|B_k^\prime|}
\end{align}
to estimate $L_{w_k}$ at the meta parameter $w_k$, where we  {\em independently} sample the data sets $B_k^\prime$ and $D_{L_k}^i$. As will be shown in Theorem~\ref{th:mainonline}, we  set the meta stepsize $\beta_k$ to be inversely proportional to { $\widehat L_{w_k} $} to handle the  possibly unboundedness.


We next characterize several estimation properties of  the meta gradient estimator  $\widehat G_i(w_k)$ in \cref{eq:metagrad_est}. 
Here, we address the second and third challenges.  We first quantify how far  the stochastic gradient iterate $w_{k,j}^i$ is away from the true gradient iterate $\widetilde w_{k,j}^i$, and then provide upper bounds on the first- and second-moment distances between $w_{k,j}^i$ and $\widetilde w_{k,j}^i$ for all { $j= 0,..., N$} as below. 
\begin{proposition}\label{le:distance}
	Suppose that  Assumptions~\ref{assum:smooth},~\ref{a2} and~\ref{a3} hold. Then, for any  $j=0,..., N$ and $i \in B_k$, we have
\begin{itemize}

	\item {\bf First-moment :} $\mathbb{E}(\|w_{k,j}^i - \widetilde w_{k,j}^i \| \, | w_k) \leq \big( (1+\alpha L) ^j -1 \big) \frac{\sigma_g }{L \sqrt{S}}$.

\item  {\bf Second-moment:} $\mathbb{E}(\|w_{k,j}^i - \widetilde w_{k,j}^i \|^2\, | w_k) \leq \big((1+\alpha L +2\alpha^2 L^2)^j -1 \big) \frac{\alpha \sigma_g ^2}{(1+\alpha L)L   S}$.
	\end{itemize}
\end{proposition}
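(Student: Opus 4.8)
The plan is to prove both bounds by induction on the inner-loop index $j$, controlling the discrepancy $e_{k,j}^i := w_{k,j}^i - \widetilde w_{k,j}^i$ between the stochastic and exact inner-loop paths through a single one-step recursion. Subtracting the exact update $\widetilde w_{k,j+1}^i = \widetilde w_{k,j}^i - \alpha\nabla l_i(\widetilde w_{k,j}^i)$ from the stochastic update in~\cref{es:up} gives $e_{k,j+1}^i = e_{k,j}^i - \alpha\big(\nabla l_i(w_{k,j}^i;S_{k,j}^i) - \nabla l_i(\widetilde w_{k,j}^i)\big)$. The key is to split the bracketed term as $g_j + \xi_j$, where $g_j := \nabla l_i(w_{k,j}^i) - \nabla l_i(\widetilde w_{k,j}^i)$ is measurable with respect to the past randomness and satisfies $\|g_j\|\le L\|e_{k,j}^i\|$ by the gradient-Lipschitz property in Assumption~\ref{assum:smooth}, while $\xi_j := \nabla l_i(w_{k,j}^i;S_{k,j}^i) - \nabla l_i(w_{k,j}^i)$ is the fresh mini-batch noise. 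Because the resampling case draws $S_{k,j}^i$ independently at each inner step, $\xi_j$ is conditionally unbiased given the filtration $\mathcal{F}_j$ generated by $w_k$ and $S_{k,0}^i,\dots,S_{k,j-1}^i$, and Assumption~\ref{a3} together with the batch size $S$ yields $\mathbb{E}[\|\xi_j\|^2\mid\mathcal{F}_j]\le \sigma_g^2/S$. This conditional zero-mean property is exactly what prevents the noise from being amplified multiplicatively along the recursion.

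For the first moment, I would apply the triangle inequality to get $\|e_{k,j+1}^i\|\le \|e_{k,j}^i - \alpha g_j\| + \alpha\|\xi_j\| \le (1+\alpha L)\|e_{k,j}^i\| + \alpha\|\xi_j\|$, where the expansion factor $(1+\alpha L)$ comes from $\|e_{k,j}^i - \alpha g_j\|\le (1+\alpha L)\|e_{k,j}^i\|$. Taking the conditional expectation given $w_k$ and using Jensen's inequality $\mathbb{E}\|\xi_j\|\le (\mathbb{E}\|\xi_j\|^2)^{1/2}\le \sigma_g/\sqrt{S}$ produces the linear recursion $\mathbb{E}(\|e_{k,j+1}^i\|\mid w_k)\le (1+\alpha L)\,\mathbb{E}(\|e_{k,j}^i\|\mid w_k) + \alpha\sigma_g/\sqrt{S}$. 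Since $e_{k,0}^i = 0$, unrolling this recursion and summing the geometric series $\sum_{t=0}^{j-1}(1+\alpha L)^t = \frac{(1+\alpha L)^j - 1}{\alpha L}$ yields exactly the claimed first-moment bound.

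For the second moment, I would expand $\|e_{k,j+1}^i\|^2 = \|e_{k,j}^i - \alpha g_j\|^2 - 2\alpha\langle e_{k,j}^i - \alpha g_j,\, \xi_j\rangle + \alpha^2\|\xi_j\|^2$ and take the conditional expectation: the cross term vanishes because $e_{k,j}^i - \alpha g_j$ is $\mathcal{F}_j$-measurable and $\mathbb{E}[\xi_j\mid\mathcal{F}_j]=0$, leaving $\mathbb{E}[\|e_{k,j+1}^i\|^2\mid\mathcal{F}_j] = \|e_{k,j}^i - \alpha g_j\|^2 + \alpha^2\mathbb{E}[\|\xi_j\|^2\mid\mathcal{F}_j]$. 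Bounding the deterministic part through $\|g_j\|\le L\|e_{k,j}^i\|$ (via a Young's inequality split of $-2\alpha\langle e_{k,j}^i, g_j\rangle + \alpha^2\|g_j\|^2$) and the noise part by $\alpha^2\sigma_g^2/S$ gives a recursion of the form $u_{j+1}\le (1+\alpha L + 2\alpha^2 L^2)\,u_j + d$ for $u_j := \mathbb{E}(\|e_{k,j}^i\|^2\mid w_k)$, whose geometric solution with $u_0=0$ and the appropriate additive constant $d$ delivers the stated second-moment bound.

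The main obstacle is the second-moment step: one must manage the conditioning carefully so that the cross term is genuinely annihilated — which hinges on $g_j$ depending only on the past samples while $\xi_j$ depends only on the fresh set $S_{k,j}^i$ — and so that the multiplicative constant is kept as tight as the claimed $1+\alpha L+2\alpha^2 L^2$ through the right Young's inequality rather than the crude $(1+\alpha L)^2$. Everything else, namely the triangle-inequality recursion, Jensen's inequality, and the geometric summation anchored at $e_{k,0}^i=0$, is routine once this decomposition and the filtration structure are in place.
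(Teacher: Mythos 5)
Your first-moment argument is exactly the paper's: triangle inequality to split the update into a Lipschitz drift and fresh-batch noise, Jensen plus Assumption~\ref{a3} to bound the noise by $\sigma_g/\sqrt{S}$, and a geometric sum anchored at $e_{k,0}^i=0$. Your second-moment argument takes a genuinely (if mildly) different route, and it is in fact \emph{tighter} than the paper's: the paper uses unbiasedness only inside the cross term, then bounds that term by Cauchy--Schwarz ($-\langle a,b\rangle\le\|a\|\|b\|$) and splits the squared stochastic difference with a factor of $2$, arriving at the per-step recursion $u_m\le(1+2\alpha L+2\alpha^2L^2)u_{m-1}+2\alpha^2\sigma_g^2/S$. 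Your decomposition into the $\mathcal{F}_j$-measurable part $e_j-\alpha g_j$ plus fresh noise $\xi_j$ annihilates the cross term exactly, giving $u_{j+1}\le(1+\alpha L)^2u_j+\alpha^2\sigma_g^2/S$; since $(1+\alpha L)^2=1+2\alpha L+\alpha^2L^2\le 1+2\alpha L+2\alpha^2L^2$, relaxing your recursion to the paper's per-step constants and unrolling recovers the paper's bound, so your proof subsumes it (unrolled directly, your recursion gives $\big((1+\alpha L)^{2j}-1\big)\frac{\alpha\sigma_g^2}{(2+\alpha L)LS}$).

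One claim in your final paragraph is wrong, although it does not break the proof. No choice of Young's inequality can produce the per-step factor $1+\alpha L+2\alpha^2L^2$ that appears in the proposition statement: bounding $\|e_j-\alpha g_j\|^2\le\|e_j\|^2+2\alpha\|e_j\|\|g_j\|+\alpha^2\|g_j\|^2$ with $\|g_j\|\le L\|e_j\|$, the cross term is at best $2\alpha L\|e_j\|^2$ (AM--GM is tight at the choice $\lambda=L$), so the smallest multiplicative factor obtainable along this route is $(1+\alpha L)^2=1+2\alpha L+\alpha^2L^2$, which exceeds $1+\alpha L+2\alpha^2L^2$ whenever $\alpha L<1$; only convexity or co-coercivity of $l_i$ (not assumed here) could improve the cross term. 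The resolution is that the stated constant appears to be a typo: the paper's own proof derives $\big((1+2\alpha L+2\alpha^2L^2)^j-1\big)\frac{\alpha\sigma_g^2}{L(1+\alpha L)S}$, and the downstream constant $C_{\text{\normalfont squ}_2}$ in Proposition~\ref{th:second} is defined with $1+2\alpha L+2\alpha^2L^2$, consistent with the proof rather than with the statement. So the bound you should target is the one with factor $1+2\alpha L+2\alpha^2L^2$, which your argument delivers with room to spare.
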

Proposition~\ref{le:distance} shows that we can effectively upper-bound the point-wise distance between two paths  by choosing $\alpha$ and $S$ properly. Using Proposition~\ref{le:distance}, we provide an upper bound on the first-moment estimation error of meta gradient estimator {$\widehat G_i(w_k)$}. 
\begin{proposition}\label{th:first-est} 
	Suppose   Assumptions~\ref{assum:smooth},~\ref{a2} and~\ref{a3} hold, and define constants 
	\begin{align}\label{ppolp}
  C_{{\text{\normalfont err}}_1} =  (1+\alpha L)^{2N} \sigma_g, \;\;C_{{\text{\normalfont err}}_2}  = \frac{ (1+\alpha L) ^{4N}\rho \sigma_g}{\big (2-  (1+\alpha L)^{2N}\big )L^2} .
	\end{align}
	Let $e_k := \mathbb{E}[\widehat G_i(w_k)] - \nabla \mathcal{L}(w_k) $
	 be the estimation error. If  the inner stepsize $\alpha < (2^{\frac{1}{2N}} - 1)/L$, then conditioning on $w_k$, we have 
	\begin{align}\label{es:original}
	\|e_k\| \leq \frac{C_{{\text{\normalfont err}}_1} }{\sqrt{S}} + \frac{C_{{\text{\normalfont err}}_2}  }{\sqrt{S}} (\|\nabla \mathcal{L}(w_k)\| + \sigma).
	\end{align} 
\end{proposition}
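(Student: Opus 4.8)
The plan is to exploit the fact that, conditioned on the inner optimization path, the estimator $\widehat G_i(w_k)$ is \emph{unbiased}, so that the entire estimation bias reduces to the discrepancy between the stochastic inner iterates $w^i_{k,j}$ and their deterministic counterparts $\widetilde w^i_{k,j}$. Concretely, since the Hessian batches $D^i_{k,j}$ and the gradient batch $T^i_k$ are drawn mutually independently and independently of the inner-loop batches $S^i_{k,t}$, taking expectation over $D^i_{k,j},T^i_k$ conditioned on $w_k$ and on the inner path pushes through the product in \cref{eq:metagrad_est} and gives $\mathbb{E}[\widehat G_i(w_k)\mid \text{path}] = \prod_{j=0}^{N-1}(I-\alpha\nabla^2 l_i(w^i_{k,j}))\nabla l_i(w^i_{k,N})$. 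Subtracting $\nabla \mathcal{L}_i(w_k) = \prod_{j=0}^{N-1}(I-\alpha\nabla^2 l_i(\widetilde w^i_{k,j}))\nabla l_i(\widetilde w^i_{k,N})$ and averaging over tasks, the error $e_k$ becomes the expected difference of two matrix-product-times-vector expressions that differ only through $w^i_{k,j}$ versus $\widetilde w^i_{k,j}$.

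Next I would control this product difference by a telescoping decomposition, repeatedly applying $\|f_1 f_2 - f_1' f_2'\| \le \|f_1 - f_1'\|\|f_2\| + \|f_1'\|\|f_2 - f_2'\|$ to peel off one factor at a time. Each factor $I-\alpha\nabla^2 l_i(\cdot)$ has spectral norm at most $1+\alpha L$ by item~(2) of Assumption~\ref{assum:smooth}; each factor difference is at most $\alpha\rho\|w^i_{k,j} - \widetilde w^i_{k,j}\|$ by the $\rho$-Lipschitz Hessian; and the gradient difference is at most $L\|w^i_{k,N} - \widetilde w^i_{k,N}\|$ by the $L$-Lipschitz gradient. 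This produces a sum of $N+1$ terms, each proportional to a distance $\|w^i_{k,j} - \widetilde w^i_{k,j}\|$ weighted by powers of $(1+\alpha L)$, where the Hessian-factor terms are additionally multiplied by $\|\nabla l_i(\widetilde w^i_{k,N})\|$. Taking the conditional expectation, invoking the first-moment bound of \Cref{le:distance}, namely $\mathbb{E}(\|w^i_{k,j} - \widetilde w^i_{k,j}\| \mid w_k) \le ((1+\alpha L)^j - 1)\sigma_g/(L\sqrt{S})$, and summing the geometric series $\sum_{\ell}((1+\alpha L)^\ell - 1)$, yields the $1/\sqrt{S}$ scaling and the prefactors matching $C_{{\text{\normalfont err}}_1}$ (from the gradient-difference term) and the bulk of $C_{{\text{\normalfont err}}_2}$ (from the Hessian-difference terms).

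The remaining ingredient is to replace the task-dependent factor $\|\nabla l_i(\widetilde w^i_{k,N})\|$ by the global quantity $\|\nabla \mathcal{L}(w_k)\| + \sigma$. Here I would use the near-identity structure of the inner product: since $\alpha < (2^{1/(2N)}-1)/L$ forces $(1+\alpha L)^{2N} < 2$, the bound $\|\prod_{j=0}^{N-1}(I-\alpha\nabla^2 l_i(\widetilde w^i_{k,j})) - I\| \le (1+\alpha L)^N - 1 < 1$ shows the product matrix is invertible with action bounded below, so $\|\nabla l_i(\widetilde w^i_{k,N})\| \le \|\nabla \mathcal{L}_i(w_k)\|/(2 - (1+\alpha L)^N)$, and the strict positivity of this denominator guaranteed by the stepsize threshold is what underlies the $2-(1+\alpha L)^{2N}$ factor in $C_{{\text{\normalfont err}}_2}$. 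Averaging over tasks and using $\mathbb{E}_i\|\nabla \mathcal{L}_i(w_k)\| \le \|\nabla\mathcal{L}(w_k)\| + \sqrt{\mathbb{E}_i\|\nabla\mathcal{L}_i(w_k) - \nabla\mathcal{L}(w_k)\|^2}$ together with Assumption~\ref{a2} (to convert the task variance of the meta gradient into $\sigma$) then delivers the $\|\nabla\mathcal{L}(w_k)\| + \sigma$ term. I expect the main obstacle to be exactly this bias analysis along the stochastic inner path: unlike the one-step ($N=1$) case where the Hessian and gradient estimators are independent and unbiased, the multi-step estimator carries a genuine bias through the $w^i_{k,j}$, and the delicate point is to track all the $(1+\alpha L)$ powers and geometric sums so that the single threshold $\alpha < (2^{1/(2N)}-1)/L$ simultaneously guarantees invertibility of the product matrix and summability of the telescoping decomposition.
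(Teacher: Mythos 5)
Your first two stages coincide with the paper's proof: conditioning on $w_k$ and the inner path, the expectation over $D_{k,j}^i$ and $T_k^i$ pushes through the product (this is exactly how the paper starts), and the telescoping decomposition of the product difference combined with the first-moment bound of \Cref{le:distance} is exactly how the paper produces the $1/\sqrt{S}$ scaling, the $C_{{\text{\normalfont err}}_1}$ term, and the $\rho$-dependent prefactor. Your inverse-matrix step is also legitimate on its own: by Lemma~\ref{le:prd}, $\|\prod_{j=0}^{N-1}(I-\alpha\nabla^2 l_i(\widetilde w^i_{k,j}))-I\|\le (1+\alpha L)^N-1<1$ under the stepsize condition, so indeed $\|\nabla l_i(\widetilde w^i_{k,N})\|\le \|\nabla\mathcal{L}_i(w_k)\|/(2-(1+\alpha L)^N)$.

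The genuine gap is in your final averaging step. You invoke Assumption~\ref{a2} ``to convert the task variance of the meta gradient into $\sigma$,'' but Assumption~\ref{a2} bounds $\mathbb{E}_i\|\nabla l_i(w)-\nabla l(w)\|^2\le\sigma^2$, i.e.\ the dispersion of the \emph{task} gradients $\nabla l_i$, and says nothing about the dispersion of the \emph{meta} gradients $\nabla\mathcal{L}_i$. Since each $\nabla\mathcal{L}_i(w_k)$ carries its own Hessian product and its own $N$-step inner path, $\mathbb{E}_i\|\nabla\mathcal{L}_i(w_k)-\nabla\mathcal{L}(w_k)\|^2$ is not bounded by $\sigma^2$; bounding $\mathbb{E}_i\|\nabla\mathcal{L}_i(w_k)\|$ forces you to relate $\nabla\mathcal{L}_i$ back to $\nabla l_i$, which is precisely the content of Lemma~\ref{le:fF} (the MVT argument giving $\|\nabla l_i(w)-\nabla\mathcal{L}_i(w)\|\le C_l\|\nabla l_i(w)\|$ with $C_l=(1+\alpha L)^{2N}-1$) and Lemma~\ref{le:lL} ($\|\nabla l(w)\|\le \frac{1}{1-C_l}\|\nabla\mathcal{L}(w)\|+\frac{C_l}{1-C_l}\sigma$). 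So your detour through $\nabla\mathcal{L}_i(w_k)$ does not bypass that machinery---it lands you right back at it, and it is this lemma, not the invertibility of the Hessian product, that produces the $2-(1+\alpha L)^{2N}$ denominator in $C_{{\text{\normalfont err}}_2}$ (your invertibility argument yields the different quantity $2-(1+\alpha L)^N$). The paper sidesteps the issue entirely by never passing through $\nabla\mathcal{L}_i$: it bounds $\|\nabla l_i(\widetilde w^i_{k,N})\|\le(1+\alpha L)^N\|\nabla l_i(w_k)\|$ via the \emph{forward} bound of Lemma~\ref{le:jiw} (no matrix inversion needed), then applies Assumption~\ref{a2}---correctly, to task gradients---together with Lemma~\ref{le:lL} to arrive at $\|\nabla\mathcal{L}(w_k)\|+\sigma$. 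If you patch your argument with Lemmas~\ref{le:fF} and~\ref{le:lL}, the resulting constants do still fit under $C_{{\text{\normalfont err}}_2}$ (the factor $((1+\alpha L)^{N-1}-1)^2$ leaves enough slack), but the proof as you stated it does not go through.
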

Note that 
the estimation error for the multi-step case shown in Proposition~\ref{th:first-est} involves a term $\mathcal{O}\big(\frac{\|\nabla \mathcal{L}(w_k)\|}{\sqrt{S}}\big)$, which cannot be avoided due to the Hessian approximation error caused by the randomness over the inner-loop samples sets $S_{k,j}^i$. Somewhat interestingly, our later analysis shows that this term does not affect the final convergence rate if we choose the size $S$ properly. 
The following proposition provides an upper-bound on the second moment  of the meta gradient estimator $\widehat G_i(w_k)$.
\begin{proposition}\label{th:second} 
	Suppose that Assumptions \ref{assum:smooth}, \ref{a2} and \ref{a3} hold. Define  constants 
 \begin{align}\label{para:seq12}
	C_{\text{\normalfont squ}_1} &= 3\Big(  \frac{\alpha^2 \sigma_H^2}{D} + (1+\alpha L)^2  \Big)^N \sigma_g^2, 	\;\;
	C_{\text{\normalfont squ}_3} =\frac{2C_{\text{\normalfont squ}_1}  (1+\alpha L)^{2N}}{(2- (1+\alpha L)^{2N})^2\sigma_g^2}, \nonumber
	\\       C_{\text{\normalfont squ}_2}  &= C_{\text{\normalfont squ}_1} \big((1+2\alpha L +2\alpha^2L^2)^N-1\big)	\alpha L  (1+\alpha L)^{-1}.
	\end{align}
	If  the inner stepsize $\alpha < (2^{\frac{1}{2N}} - 1)/L$, then conditioning on $w_k$, we have 
	\begin{align}\label{bigmans}
	\mathbb{E}\|\widehat G_i(w_k)\|^2 \leq& \frac{	C_{\text{\normalfont squ}_1}}{T} + \frac{	C_{\text{\normalfont squ}_2} }{S} +	C_{\text{\normalfont squ}_3} \left(  \|\nabla \mathcal{L}(w_k)\|^2 +\sigma^2   \right).
	\end{align}
\end{proposition}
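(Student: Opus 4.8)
The plan is to condition on the meta parameter $w_k$ together with the entire inner-loop sample history (so that the iterates $w_{k,j}^i$ become deterministic), and then peel off the $N$ Hessian factors in $\widehat G_i(w_k)$ one at a time through a backward recursion. Concretely, I would introduce the reverse sequence $v_N := \nabla l_i(w_{k,N}^i; T_k^i)$ and $v_j := (I - \alpha \nabla^2 l_i(w_{k,j}^i; D_{k,j}^i)) v_{j+1}$ for $j = N-1,\dots,0$, so that $\widehat G_i(w_k) = v_0$. Since the Hessian batch $D_{k,j}^i$ is drawn independently of $T_k^i$ and of the batches $D_{k,t}^i$ with $t>j$, the vector $v_{j+1}$ is independent of $\nabla^2 l_i(w_{k,j}^i; D_{k,j}^i)$ once we condition on the inner path. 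This independence is exactly what lets us decouple the Hessian approximation error from the rest of the estimator (the second challenge) and is the central device of the whole argument.

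Given the path and $v_{j+1}$, I would expand the \emph{vector} norm $\|v_j\|^2 = \|(I - \alpha \nabla^2 l_i(w_{k,j}^i)) v_{j+1} - \alpha\big(\nabla^2 l_i(w_{k,j}^i; D_{k,j}^i) - \nabla^2 l_i(w_{k,j}^i)\big) v_{j+1}\|^2$. Because the second summand has conditional mean zero with $v_{j+1}$ fixed, the cross term vanishes, and using $\|\nabla^2 l_i\| \le L$ (from Assumption~\ref{assum:smooth}) together with $\mathbb{E}\|\nabla^2 l_i(\cdot; D_{k,j}^i) - \nabla^2 l_i(\cdot)\|^2 \le \sigma_H^2/D$ (from Assumption~\ref{a3}) gives
\[
\mathbb{E}\big[\|v_j\|^2 \,\big|\, \text{path}, v_{j+1}\big] \le \Big((1+\alpha L)^2 + \tfrac{\alpha^2 \sigma_H^2}{D}\Big)\|v_{j+1}\|^2 .
\]
Iterating this recursion for $j=0,\dots,N-1$ and bounding the terminal term by $\mathbb{E}[\|v_N\|^2 \mid \text{path}] \le \sigma_g^2/T + \|\nabla l_i(w_{k,N}^i)\|^2$ (splitting into variance and mean via Assumption~\ref{a3}) yields
\[
\mathbb{E}\big[\|\widehat G_i(w_k)\|^2 \,\big|\, \text{path}\big] \le \Big((1+\alpha L)^2 + \tfrac{\alpha^2 \sigma_H^2}{D}\Big)^N\Big(\tfrac{\sigma_g^2}{T} + \|\nabla l_i(w_{k,N}^i)\|^2\Big),
\]
which already exposes the factor $(\tfrac{\alpha^2\sigma_H^2}{D} + (1+\alpha L)^2)^N$ appearing in $C_{\text{squ}_1}$ and the $\tfrac{1}{T}$ term.

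It then remains to take expectation over the path and to convert $\|\nabla l_i(w_{k,N}^i)\|^2$ into the target quantities. First I would replace the stochastic inner output by the deterministic one via $\|\nabla l_i(w_{k,N}^i)\| \le \|\nabla l_i(\widetilde w_{k,N}^i)\| + L\|w_{k,N}^i - \widetilde w_{k,N}^i\|$, apply $\|a+b\|^2 \le 2\|a\|^2 + 2\|b\|^2$, and invoke the second-moment bound of Proposition~\ref{le:distance}; this produces the $C_{\text{squ}_2}/S$ term. Second, using the closed form $\nabla \mathcal{L}_i(w_k) = \prod_{j=0}^{N-1}(I - \alpha \nabla^2 l_i(\widetilde w_{k,j}^i))\nabla l_i(\widetilde w_{k,N}^i)$ from \cref{nablaF} and the invertibility of the Hessian product (guaranteed by $\alpha L < 1$), I would solve for $\nabla l_i(\widetilde w_{k,N}^i)$ and bound its norm by a multiple of $\|\nabla \mathcal{L}_i(w_k)\|/(2 - (1+\alpha L)^{2N})$; closing this self-bounding inequality requires $2 - (1+\alpha L)^{2N} > 0$, which is precisely the stepsize restriction $\alpha < (2^{1/(2N)}-1)/L$ and is the origin of that factor in $C_{\text{squ}_3}$. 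A final application of Assumption~\ref{a2} then converts $\mathbb{E}_i\|\nabla \mathcal{L}_i(w_k)\|^2$ into $\|\nabla \mathcal{L}(w_k)\|^2 + \sigma^2$ up to constants, after which collecting all three contributions gives the stated bound.

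The main obstacle I anticipate is this last conversion step: relating the true gradient $\nabla l_i(\widetilde w_{k,N}^i)$ evaluated at the inner-loop output back to the meta gradient $\nabla \mathcal{L}(w_k)$ while keeping the constants sharp and the denominator $2-(1+\alpha L)^{2N}$ positive, and in particular arriving at $\|\nabla\mathcal{L}(w_k)\|^2$ rather than $\|\nabla l(w_k)\|^2$, which forces a careful control of the across-task variance of the meta gradient by $\sigma^2$. The backward-recursion decoupling cleanly handles the statistical correlation between the Hessian and gradient estimators along the multi-step path (the third challenge), but threading the three separate error sources ($T$, $S$, and the $\|\nabla\mathcal{L}\|^2 + \sigma^2$ contribution) through the nested product so that they aggregate into exactly $C_{\text{squ}_1}, C_{\text{squ}_2}, C_{\text{squ}_3}$ is the most delicate bookkeeping.
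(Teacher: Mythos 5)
Your backward recursion and the conditional bias--variance decoupling are sound, and they match the paper's mechanism: the paper bounds $\mathbb{E}\|\widehat G_i(w_k)\|^2$ by pulling the operator norms out of the product and factoring the expectation over the mutually independent batches $D_{k,j}^i$ and $T_k^i$ conditioned on the inner path, which produces exactly the factor $\big(\tfrac{\alpha^2\sigma_H^2}{D}+(1+\alpha L)^2\big)^N$; your sequential vector-norm recursion achieves the same thing (in fact with slightly better constants, since the paper pays a factor $3$ from splitting $\|\nabla l_i(w_{k,N}^i;T_k^i)\|^2$ into three terms, while your exact bias--variance decompositions pay at most a factor $2$). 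The $T$- and $S$-terms are then handled exactly as in the paper, via the second-moment bound of Proposition~\ref{le:distance}.

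The gap is in your final conversion. You end up with $\mathbb{E}_i\|\nabla\mathcal{L}_i(w_k)\|^2$ and assert that ``a final application of Assumption~\ref{a2}'' turns it into $\|\nabla\mathcal{L}(w_k)\|^2+\sigma^2$ up to constants. Assumption~\ref{a2} bounds the across-task variance of the \emph{task-loss} gradients $\nabla l_i$, not of the meta gradients $\nabla\mathcal{L}_i$; nothing in the paper's assumptions controls $\mathbb{E}_i\|\nabla\mathcal{L}_i(w_k)-\nabla\mathcal{L}(w_k)\|^2$, so this step fails as stated. Controlling that variance forces you to compare $\nabla\mathcal{L}_i$ with $\nabla l_i$ per task, which is precisely the machinery your proposal omits: Lemma~\ref{le:fF} (proved via the mean value theorem together with Lemma~\ref{le:prd}) gives $\|\nabla l_i(w)-\nabla\mathcal{L}_i(w)\|\le C_l\|\nabla l_i(w)\|$ with $C_l=(1+\alpha L)^{2N}-1$, and Lemma~\ref{le:lL} closes the resulting self-bounding inequality at the level of \emph{averaged} gradients, yielding $\|\nabla l(w)\|\le\tfrac{1}{1-C_l}\|\nabla\mathcal{L}(w)\|+\tfrac{C_l}{1-C_l}\sigma$ with $1-C_l=2-(1+\alpha L)^{2N}$. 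The paper's order of operations avoids your difficulty entirely: bound $\|\nabla l_i(\widetilde w_{k,N}^i)\|\le(1+\alpha L)^N\|\nabla l_i(w_k)\|$ by Lemma~\ref{le:jiw}, apply Assumption~\ref{a2} to $\mathbb{E}_i\|\nabla l_i(w_k)\|^2$ (where it actually applies), and only then invoke Lemma~\ref{le:lL} on the deterministic quantity $\|\nabla l(w_k)\|$. Note also that the denominator $2-(1+\alpha L)^{2N}$ and the stepsize restriction originate from closing this averaged self-bounding inequality, not from inverting the Hessian product as you suggest: inverting $\prod_j(I-\alpha\nabla^2 l_i(\widetilde w_{k,j}^i))$ gives the different (in fact larger) denominator $(1-\alpha L)^N$, and although the per-task bound $\|\nabla l_i(\widetilde w_{k,N}^i)\|\le\|\nabla\mathcal{L}_i(w_k)\|/(2-(1+\alpha L)^{2N})$ that you state does hold, it strands you at $\mathbb{E}_i\|\nabla\mathcal{L}_i(w_k)\|^2$ --- a quantity the paper's proof never forms, and one you cannot dispatch without first establishing a variant of Lemma~\ref{le:fF} anyway.
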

By choosing set sizes $D,T,S$ and the inner stepsize $\alpha$ properly, the factor $C_{\text{\normalfont squ}_3}$ in the second-moment error bound in \cref{bigmans} can be made at a constant level and the first two error terms $\frac{C_{\text{\normalfont squ}_1}}{T} $ and $ \frac{C_{\text{\normalfont squ}_2} }{S}$ can be made sufficiently small so that the variance of the meta gradient estimator can be well controlled in the convergence analysis, as shown later. 
\subsection*{Main Convergence Results}
By using the established properties of the meta gradient, we provide the convergence rate for multi-step MAML of Algorithm~\ref{alg:online} in the following theorem. 
\begin{theorem}\label{th:mainonline}  
	Suppose that  Assumptions~\ref{assum:smooth},~\ref{a2} and~\ref{a3} hold. 
	Set the meta stepsize $\beta_k = \frac{1}{C_\beta \widehat L_{w_k}} $, where  $C_\beta>0$ is a positive constant and $\widehat L_{w_k}$ is the approximated smoothness parameter given by~\cref{hatlw}. For $\widehat L_{w_k}$ in~\cref{hatlw}, we choose $|B_k^\prime| > \frac{4C^2_{\mathcal{L}}\sigma^2}{3(1+\alpha L)^{4N}L^2}$ and $|D_{L_k}^i| > \frac{64\sigma^2_g C_\mathcal{L}^2}{(1+\alpha L)^{4N}L^2}$ for all $i \in B_k^\prime$, where  $C_\mathcal{L}$ is given by \cref{clcl}. We define 
	\begin{align}\label{para:com}
	\chi =&  \frac{(2- (1+\alpha L)^{2N}) (1+\alpha L)^{2N}L}{C_{\mathcal{L}}} + \sigma  \nonumber
	\\\xi =& \frac{6}{C_\beta L}\big( \frac{1}{5} +\frac{2}{C_\beta}   \big)\big( C^2_{{\text{\normalfont err}}_1} +C^2_{{\text{\normalfont err}}_2}\sigma^2\big), \; \phi = \frac{2}{C_\beta^2 L} \Big(  \frac{C_{\text{\normalfont squ}_1}}{T}  +  \frac{C_{\text{\normalfont squ}_2}}{S} + C_{\text{\normalfont squ}_3} \sigma^2 \Big) \nonumber
	\\\theta = &  \frac{2\big(2- (1+\alpha L)^{2N}\big)}{C_\beta C_{\mathcal{L}}}\Big(   \frac{1}{5} - \big(\frac{3}{5} + \frac{6}{C_\beta}\big)\frac{C^2_{{\text{\normalfont err}}_2} }{S} - \frac{C_{\text{\normalfont squ}_3}}{C_\beta B} - \frac{2}{C_\beta}\Big)  
	\end{align}
    where $C_{{\text{\normalfont err}}_1},C_{{\text{\normalfont err}}_2}$ are given in \cref{ppolp} and $C_{\text{\normalfont squ}_1},C_{\text{\normalfont squ}_2},C_{\text{\normalfont squ}_3} $ are given in \cref{para:seq12}. 
	Choose the inner stepsize $\alpha < (2^{\frac{1}{2N}} - 1)/L$, and choose $C_\beta, S$ and $B$ such that $\theta >0$. 
	Then, Algorithm~\ref{alg:online} finds a solution $w_{\zeta}$ such that 
	\begin{align}\label{c:result}
	\mathbb{E}\|\nabla \mathcal{L}(w_\zeta) \|  \leq &\frac{\Delta}{\theta }\frac{1}{K} +    \frac{\xi}{\theta}\frac{1}{S} +  \frac{\phi }{\theta}\frac{1}{B}  + \sqrt{\frac{\chi}{2} }\sqrt{\frac{\Delta}{\theta }\frac{1}{K}
	+    \frac{\xi}{\theta}\frac{1}{S} +  \frac{\phi }{\theta}\frac{1}{B}},
	\end{align}
	where $\Delta = \mathcal{L}(w_0) - \mathcal{L}^*$ with $\mathcal{L}^*=\inf_{w\in\mathbb{R}^d} \mathcal{L}(w)$.
\end{theorem}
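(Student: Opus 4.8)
The plan is to carry out a standard randomized-output analysis for nonconvex stochastic optimization, adapted to the data-dependent smoothness parameter $\widehat L_{w_k}$ and the biased meta-gradient estimator. Let $\zeta$ be drawn uniformly from $\{0,\dots,K-1\}$, so that $w_\zeta$ is the reported iterate and $\mathbb{E}\|\nabla\mathcal{L}(w_\zeta)\| = \frac1K\sum_{k=0}^{K-1}\mathbb{E}\|\nabla\mathcal{L}(w_k)\|$. First I would establish a one-step descent inequality. Writing $\bar G(w_k) = \frac1{|B_k|}\sum_{i\in B_k}\widehat G_i(w_k)$ for the meta-update direction and invoking the Lipschitz-gradient property of Proposition~\ref{th:lipshiz}, the key preliminary step is to argue that a constant multiple of $\widehat L_{w_k}$ is a legitimate smoothness constant along the segment $[w_k,w_{k+1}]$. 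This is exactly why the batch sizes $|B_k^\prime|$ and $|D_{L_k}^i|$ in \cref{hatlw} are taken large: they force $\widehat L_{w_k}$ to concentrate around $L_{w_k} = (1+\alpha L)^{2N}L + C_\mathcal{L}\,\mathbb{E}_i\|\nabla l_i(w_k)\|$ and not to underestimate it, so that the descent lemma holds with the adaptive stepsize $\beta_k = 1/(C_\beta\widehat L_{w_k})$.

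Next I would take the conditional expectation given the history $\mathcal{F}_k$ and expand the inner product. Since $\mathbb{E}[\bar G(w_k)\mid\mathcal{F}_k] = \nabla\mathcal{L}(w_k) + e_k$, the cross term produces $-\frac1{C_\beta\widehat L_{w_k}}(\|\nabla\mathcal{L}(w_k)\|^2 + \langle\nabla\mathcal{L}(w_k),e_k\rangle)$; I would bound the bias $\langle\nabla\mathcal{L}(w_k),e_k\rangle$ using $\|e_k\|$ from Proposition~\ref{th:first-est} together with Young's inequality, which is where the coefficients $\tfrac15,\tfrac35$ and the constants $C_{\textnormal{err}_1},C_{\textnormal{err}_2}$ in $\theta$ and $\xi$ enter. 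For the quadratic term I would use that the tasks in $B_k$ are i.i.d.\ conditioned on $\mathcal{F}_k$, so $\mathbb{E}[\|\bar G(w_k)\|^2\mid\mathcal{F}_k]\le \|\nabla\mathcal{L}(w_k)+e_k\|^2 + \frac1{|B_k|}\mathbb{E}[\|\widehat G_i(w_k)\|^2\mid\mathcal{F}_k]$, and then control the last term via the second-moment bound of Proposition~\ref{th:second}; this is the source of the $\frac1B$ factor and of the $C_{\textnormal{squ}_1}/T$, $C_{\textnormal{squ}_2}/S$, $C_{\textnormal{squ}_3}/B$ contributions to $\phi$ and $\theta$. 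Collecting terms yields a recursion of the shape $\mathbb{E}[\mathcal{L}(w_{k+1})\mid\mathcal{F}_k]\le \mathcal{L}(w_k) - \frac{c}{\widehat L_{w_k}}\|\nabla\mathcal{L}(w_k)\|^2 + \frac1{\widehat L_{w_k}}(\textnormal{error terms})$, where positivity of the leading coefficient $c$ is precisely the requirement $\theta>0$, guaranteed by the stated conditions on $C_\beta,S,B$.

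Taking total expectations, telescoping over $k=0,\dots,K-1$, and using $\widehat L_{w_k}\ge (1+\alpha L)^{2N}L$ to bound the error terms (which removes the data-dependent denominator and leaves the clean $\frac1L$ factors inside $\xi$ and $\phi$), I would arrive at $\frac1K\sum_{k}\mathbb{E}\big[\frac{\|\nabla\mathcal{L}(w_k)\|^2}{\widehat L_{w_k}}\big]\le m$, with $m := \frac{\Delta}{\theta K}+\frac{\xi}{\theta S}+\frac{\phi}{\theta B}$. The remaining ingredient is a gradient-norm comparison that upper-bounds $\widehat L_{w_k}$ by a constant multiple of $\tfrac{\chi}{2}+\|\nabla\mathcal{L}(w_k)\|$; this follows by relating the term $\mathbb{E}_i\|\nabla l_i(w_k)\|$ appearing in $L_{w_k}$ to $\|\nabla\mathcal{L}(w_k)\|$ and $\sigma$ through the smallest singular value $(1-\alpha L)^N$ of the Jacobian product in \cref{nablaF} (positive under $\alpha<(2^{1/2N}-1)/L$), and it is exactly this comparison that manufactures the constant $\chi$. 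Substituting gives $\frac1K\sum_k\mathbb{E}\big[\frac{\|\nabla\mathcal{L}(w_k)\|^2}{\chi/2+\|\nabla\mathcal{L}(w_k)\|}\big]\le m$.

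Finally, writing $a_k=\|\nabla\mathcal{L}(w_k)\|$ and $u=\mathbb{E}[a_\zeta]$, I would apply Cauchy--Schwarz across the random index $\zeta$, namely $\mathbb{E}[a_\zeta] = \mathbb{E}\big[\frac{a_\zeta}{\sqrt{\chi/2+a_\zeta}}\sqrt{\chi/2+a_\zeta}\big]\le \sqrt{m}\,\sqrt{\chi/2+u}$, which yields the quadratic inequality $u^2\le m(\chi/2+u)$ and hence $u\le m+\sqrt{(\chi/2)m}$, i.e.\ the claimed bound \cref{c:result}. I expect the main obstacle to be the first step: justifying that the adaptive, data-dependent $\widehat L_{w_k}$ is a valid smoothness constant for the descent step even though the true parameter $L_w$ is unbounded and position-dependent, while simultaneously carrying $\beta_k=1/(C_\beta\widehat L_{w_k})$ through the telescoping sum without being able to factor it out. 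The bias term $e_k$ with its $\|\nabla\mathcal{L}(w_k)\|$-dependence from Proposition~\ref{th:first-est} is the secondary difficulty, since it competes directly with the descent term and is what forces the condition $\theta>0$.
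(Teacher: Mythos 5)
Your overall architecture matches the paper's: descent step via Proposition~\ref{th:lipshiz}, bias control via Proposition~\ref{th:first-est}, second-moment control via Proposition~\ref{th:second}, telescoping, a gradient-norm comparison to absorb the data-dependent smoothness into a ratio $\|\nabla\mathcal{L}(w_k)\|^2/(\chi+\|\nabla\mathcal{L}(w_k)\|)$, and a quadratic inequality for the randomized output (your Cauchy--Schwarz step is a perfectly fine substitute for the paper's Jensen argument with the convex map $x\mapsto x^2/(\chi+x)$). However, there is a genuine gap at exactly the step you flag as the main obstacle. You propose to justify the update by arguing that a constant multiple of $\widehat L_{w_k}$ is a \emph{pathwise} valid smoothness constant on $[w_k,w_{k+1}]$, on the grounds that the batch sizes force $\widehat L_{w_k}$ ``not to underestimate'' $L_{w_k}$. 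Under Assumptions~\ref{a2} and~\ref{a3} only bounded variances are available, so no almost-sure or high-probability lower bound on $\widehat L_{w_k}$ exists: with constant probability the sampled tasks in $B_k^\prime$ can have atypically small gradients, $\widehat L_{w_k}$ can fall well below $L_{w_k}$, and on that event your descent inequality with coefficient $c\,\widehat L_{w_k}$ simply fails; since $\mathcal{L}$ may increase there, the event cannot be discarded in the telescoping. The paper's resolution, which is the missing idea, is to never replace $L_{w_k}$: the descent lemma is kept with the true (deterministic given $w_k$) parameter $L_{w_k}$, and one exploits that $\beta_k=1/(C_\beta\widehat L_{w_k})$ is \emph{independent} of the meta-gradient randomness (because $B_k^\prime$ and $D_{L_k}^i$ are drawn independently of $S_{k,j}^i, D_{k,j}^i, T_k^i$), so the expectation factorizes as $\mathbb{E}[\beta_k]\langle\nabla\mathcal{L}(w_k),\nabla\mathcal{L}(w_k)+e_k\rangle$ and $\mathbb{E}[\beta_k^2]\,\mathbb{E}\|\bar G\|^2$. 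Lemma~\ref{le:xiaodege} then supplies the moment bounds $\mathbb{E}[\beta_k]\geq \tfrac{4}{5C_\beta L_{w_k}}$ and $\mathbb{E}[\beta_k^2]\leq \tfrac{4}{C_\beta^2 L_{w_k}^2}$; this is where the conditions on $|B_k^\prime|$ and $|D_{L_k}^i|$ actually enter, and it is what lets $\beta_k$ be carried through the telescoping without any pathwise comparison between $\widehat L_{w_k}$ and $L_{w_k}$.

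A secondary issue: your gradient-norm comparison via the smallest singular value $(1-\alpha L)^N$ of the Jacobian product relates $\|\nabla\mathcal{L}_i(w_k)\|$ to $\|\nabla l_i(\widetilde w^i_N)\|$ task by task, but to reach the quantity $\mathbb{E}_i\|\nabla l_i(w_k)\|$ appearing in $L_{w_k}$ you still need $\mathbb{E}_i\|\nabla\mathcal{L}_i(w_k)\|\leq \|\nabla\mathcal{L}(w_k)\|+O(\sigma)$, and Assumption~\ref{a2} bounds only the variance of $\nabla l_i$, not of $\nabla\mathcal{L}_i$; closing that loop sends you back to the paper's Lemma~\ref{le:fF}/Lemma~\ref{le:lL} machinery (the constant $C_l=(1+\alpha L)^{2N}-1$, whence the factor $2-(1+\alpha L)^{2N}$ and the stepsize threshold $\alpha<(2^{1/2N}-1)/L$). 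Your route would in any case produce a different constant than the $\chi$ stated in the theorem, so to prove the result as stated you should use Lemma~\ref{le:lL} directly.
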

Note that for $\chi$ in Theorem~\ref{th:mainonline}, we replace the notation $C_{l}$ by $(1+\alpha L)^{2N} - 1$ based on its definition.  
The proof of Theorem~\ref{th:mainonline} (see~\Cref{prop:meta_grad} for details) consists of four main steps: step $1$ of bounding an iterative meta update by the meta-gradient smoothness established by Proposition~\ref{th:lipshiz}; step $2$ of characterizing first-moment estimation error of the meta-gradient estimator $\widehat G_i(w_k)$ by Proposition~\ref{th:first-est}; step $3$ of characterizing second-moment estimation error of the meta-gradient estimator $\widehat G_i(w_k)$ by Proposition~\ref{th:second}; and step $4$ of combining steps 1-3, and telescoping to yield the convergence. 

In Theorem~\ref{th:mainonline}, the convergence rate given by  \cref{c:result}  mainly contains three parts:
the first term $\frac{\Delta}{\theta }\frac{1}{K}$ indicates that the meta parameter converges sublinearly with the number $K$ of meta iterations, 
 the second term $ \frac{\xi}{\theta}\frac{1}{S}$ captures the estimation error of $\nabla l_i(w^i_{k,j};S^i_{k,j})$ for approximating the full gradient $\nabla l_i(w^i_{k,j})$ which can be made sufficiently small by choosing a large sample size $S$,   
 and the third term $\frac{\phi }{\theta}\frac{1}{B}$ captures the estimation error and variance of the stochastic meta gradient, 
 which can be made small by choosing large $B,T$ and $D$
  (note that $\phi$ is proportional to both $\frac{1}{T}$ and $\frac{1}{D}$).

It is worthwhile mentioning that our results here focus on our resampling case, where fresh data are resampled as the algorithm runs. This resampling case often happens in bandit or reinforcement learning settings, where batch sizes $S,B,D,T$ can be chosen to be large and the resulting convergence errors will be small. However, for the cases where $S,B,D,T$ are small, our results in Theorem~\ref{th:mainonline} will contain large convergence errors. It is possible to use some techniques such as variance reduction to reduce or even remove such errors. However, this is not the focus of this thesis, and require future efforts to address.

Our analysis reveals several insights for the convergence of multi-step MAML as follows.
(a) To guarantee convergence, we require $\alpha L< 2^{\frac{1}{2N}} - 1$ (e.g., $\alpha=\Theta(\frac{1}{NL})$). Hence, if the number $N$ of inner gradient steps is large and $L$ is not small (e.g., for some RL problems), we need to choose a small inner stepsize  $\alpha$ so that  the last output of the inner stage has a {\em strong dependence} on the initialization (i.e., meta parameter). This is also explained in  \cite{rajeswaran2019meta}, where they add a regularizer $\lambda\|w^\prime-w\|^2$ to make sure the inner-loop output $w^\prime$ has a close connection to the initialization $w$. (b) For problems with small Hessians such as many classification/regression problems~\cite{finn2017model}, $L$ (which is an upper bound on the spectral norm of Hessian matrices) is small, and hence we can choose a larger $\alpha$. This explains the empirical findings in~\cite{finn2017model,antoniou2019train}, where their experiments tend to set a larger stepsize for the regression problems with smaller Hessians. 

We next specify the selection of parameters  to simplify the convergence result in Theorem~\ref{th:mainonline} and derive the  complexity of Algorithm~\ref{alg:online} for finding an $\epsilon$-accurate  stationary point. 

\begin{corollary}\label{co:online}
	Under the setting of Theorem~\ref{th:mainonline}, choose $\alpha = \frac{1}{8NL}, C_\beta = 100$ and let batch sizes $S\geq \frac{15\rho^2\sigma_g^2}{L^4}$ and $D\geq \sigma_H^2 L^2$. Then we have 
	\begin{align*}
	\mathbb{E}\|\nabla \mathcal{L}(w_\zeta) \|  \leq &\mathcal{O} \Big(  \frac{1}{K} + \frac{\sigma_g^2(\sigma^2+1)}{S} + \frac{\sigma_g^2 +\sigma^2}{B} +\frac{\sigma^2_g}{TB} 
	\\&+ \sqrt{\sigma +1}\sqrt{\frac{1}{K} + \frac{\sigma_g^2(\sigma^2+1)}{S} + \frac{\sigma_g^2 +\sigma^2}{B}+\frac{\sigma^2_g}{TB}}\Big).
	\end{align*}
To achieve $\mathbb{E}\|\nabla \mathcal{L}(w_\zeta) \|<\epsilon$, Algorithm~\ref{alg:online} requires at most  $\mathcal{O}\big(\frac{1}{\epsilon^2}\big)$ iterations, and $\mathcal{O}(\frac{N}{\epsilon^4}+\frac{1}{\epsilon^{2}})$ gradient computations and $\mathcal{O}\big(\frac{N}{\epsilon^{2}}\big)$ Hessian computations per meta iteration. 
\end{corollary}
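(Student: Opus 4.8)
The plan is to treat \Cref{co:online} as a quantitative specialization of \Cref{th:mainonline}: I would substitute the prescribed choices $\alpha=\frac{1}{8NL}$, $C_\beta=100$, $S\geq\frac{15\rho^2\sigma_g^2}{L^4}$ and $D\geq\sigma_H^2L^2$ into the generic bound \cref{c:result} and simplify the constants $\chi,\xi,\phi,\theta$ together with the auxiliary quantities $C_\mathcal{L},C_{{\text{\normalfont err}}_1},C_{{\text{\normalfont err}}_2},C_{\text{\normalfont squ}_1},C_{\text{\normalfont squ}_2},C_{\text{\normalfont squ}_3}$ defined in \cref{clcl}, \cref{ppolp} and \cref{para:seq12}. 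The first task is to check admissibility, i.e.\ that $\alpha L<2^{1/(2N)}-1$ and $\theta>0$ as required by the theorem. For the stepsize, convexity of $2^x$ gives $2^{x}-1\geq x\ln 2$, so $2^{1/(2N)}-1\geq\frac{\ln 2}{2N}>\frac{1}{8N}=\alpha L$ for every $N\geq1$. This in turn yields the uniform bound $(1+\alpha L)^{cN}=(1+\frac{1}{8N})^{cN}\leq e^{c/8}$ for any constant $c>0$, and in particular $(1+\alpha L)^{2N}\leq e^{1/4}<2$, so every denominator of the form $2-(1+\alpha L)^{2N}$ remains bounded away from zero.

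Next I would bound each constant, treating $L,\rho,\sigma_H$ and the $e^{c/8}$ factors as absolute constants while tracking $\sigma,\sigma_g,N,K,S,B,T$ explicitly. Using \cref{clcl} one gets $C_\mathcal{L}=\mathcal{O}(\rho/L)$; from \cref{ppolp}, $C_{{\text{\normalfont err}}_1}=\mathcal{O}(\sigma_g)$ and $C_{{\text{\normalfont err}}_2}=\mathcal{O}(\rho\sigma_g/L^2)$; and from \cref{para:seq12}, the condition $D\geq\sigma_H^2L^2$ is exactly what keeps the base $\frac{\alpha^2\sigma_H^2}{D}+(1+\alpha L)^2=1+\mathcal{O}(1/N)$, so that $C_{\text{\normalfont squ}_1}=\mathcal{O}(\sigma_g^2)$, $C_{\text{\normalfont squ}_2}=\mathcal{O}(\sigma_g^2/N)$ and $C_{\text{\normalfont squ}_3}=\mathcal{O}(1)$. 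Feeding these into \cref{para:com} gives $\chi=\mathcal{O}(L^2/\rho)+\sigma$, $\xi=\mathcal{O}(\sigma_g^2(\sigma^2+1))$, $\phi=\mathcal{O}(\frac{1}{L}(\frac{\sigma_g^2}{T}+\frac{\sigma_g^2}{NS}+\sigma^2))$, and $\theta=\Theta(L/C_\mathcal{L})$ once its bracketed factor is shown positive. Dividing through, $\frac{\Delta}{\theta}=\mathcal{O}(1)$, $\frac{\xi}{\theta}=\mathcal{O}(\sigma_g^2(\sigma^2+1))$, and $\frac{\phi}{\theta B}$ reproduces the $\frac{\sigma_g^2+\sigma^2}{B}+\frac{\sigma_g^2}{TB}$ terms (up to absorbing constants and using $T\geq1$). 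Since $\sqrt{\chi/2}=\mathcal{O}(\sqrt{\sigma+1})$, with the $L^2/\rho$ piece being an absolute constant absorbed into the ``$+1$'', substituting back into \cref{c:result} yields exactly the displayed bound on $\mathbb{E}\|\nabla\mathcal{L}(w_\zeta)\|$.

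Finally, to read off the complexity I would note that the bound has the shape $A+\sqrt{\chi/2}\,\sqrt{A}$ with $A=\frac{\Delta}{\theta K}+\frac{\xi}{\theta S}+\frac{\phi}{\theta B}$. Forcing $A=\mathcal{O}(\epsilon^2)$ makes both $A$ and $\sqrt{\chi/2}\,\sqrt{A}=\mathcal{O}(\sqrt{\sigma+1}\,\epsilon)$ at most $\mathcal{O}(\epsilon)$, and $A=\mathcal{O}(\epsilon^2)$ is achieved by taking $K,S,B=\mathcal{O}(\epsilon^{-2})$ with $\sigma,\sigma_g$ absorbed into constants. Counting evaluations within one meta iteration of \Cref{alg:online}: the inner loops cost $|B_k|\cdot N\cdot S=\mathcal{O}(N\epsilon^{-4})$ gradient computations, the outer estimate costs $|B_k|\cdot T=\mathcal{O}(\epsilon^{-2})$ further gradients, and the product form of $\widehat G_i$ in \cref{eq:metagrad_est} costs $|B_k|\cdot N\cdot D=\mathcal{O}(N\epsilon^{-2})$ Hessian-vector evaluations, using that $D\geq\sigma_H^2L^2$ is a constant. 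This gives the claimed $\mathcal{O}(N\epsilon^{-4}+\epsilon^{-2})$ gradient and $\mathcal{O}(N\epsilon^{-2})$ Hessian computations per meta iteration, with $\mathcal{O}(\epsilon^{-2})$ iterations overall.

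The main obstacle I expect is verifying strict positivity of $\theta$ under the stated numerical choices: with $C_\beta=100$ the bracket $\frac{1}{5}-(\frac{3}{5}+\frac{6}{C_\beta})\frac{C_{{\text{\normalfont err}}_2}^2}{S}-\frac{C_{\text{\normalfont squ}_3}}{C_\beta B}-\frac{2}{C_\beta}$ must be bounded below by a positive constant, which forces a careful (rather than order-of-magnitude) bound on $C_{{\text{\normalfont err}}_2}^2$, involving the factors $(1+\alpha L)^{8N}\leq e$ and $(2-(1+\alpha L)^{2N})^{-2}\leq(2-e^{1/4})^{-2}$, so that the prescribed threshold $S\geq\frac{15\rho^2\sigma_g^2}{L^4}$ genuinely dominates it; the $B$-dependent term is harmless because $B=\mathcal{O}(\epsilon^{-2})$ is large. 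The remaining steps are routine constant bookkeeping.
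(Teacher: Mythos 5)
Your proposal is correct and follows essentially the same route as the paper's own proof: substitute $\alpha=\frac{1}{8NL}$, $C_\beta=100$, and the thresholds on $S,D$ into the constants $C_{\mathcal{L}}, C_{{\text{\normalfont err}}_1}, C_{{\text{\normalfont err}}_2}, C_{\text{\normalfont squ}_1}, C_{\text{\normalfont squ}_2}, C_{\text{\normalfont squ}_3}$ and then into $\chi,\xi,\phi,\theta$ of \Cref{th:mainonline}, verify $\theta>0$ (the paper's explicit computation gives $\theta\geq\frac{L}{1500\rho}$, confirming your concern is resolvable exactly as you outline), and count per-iteration evaluations with $K,S,B=\mathcal{O}(\epsilon^{-2})$ and constant $T,D$. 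The only differences are cosmetic — you track orders where the paper computes explicit numerical constants, and your $\theta=\Theta(L/C_\mathcal{L})$ should read $\Theta(1/C_\mathcal{L})$, which is immaterial since $L,\rho$ are treated as constants.
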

\vspace{-0.15cm}


Differently from the conventional SGD that requires a gradient complexity of { $\mathcal{O}(\frac{1}{\epsilon^{4}})$}, MAML requires a higher gradient complexity by a factor of { $\mathcal{O}(\frac{1}{\epsilon^{2}})$}, which is unavoidable because MAML requires  { $\mathcal{O}(\frac{1}{\epsilon^{2}})$}  tasks to achieve an $\epsilon$-accurate meta point, whereas SGD runs only over one task.  

 Corollary~\ref{co:online} shows that 
given a properly chosen inner stepsize, e.g., $\alpha = \Theta(\frac{1}{NL})$, MAML is guaranteed to converge 
 with both the gradient and the Hessian computation complexities growing only {\em linearly} with $N$. These results explain some empirical findings for MAML training in~\cite{rajeswaran2019meta}.  
The above results can also be obtained by using a larger stepsize such as  $\alpha = \Theta(c^{\frac{1}{N}}-1)/L> \Theta\big(\frac{1}{NL}\big)$with a certain constant $c>1$. 

\section{Convergence of Multi-Step MAML in Finite-Sum Case}\label{theory:offline}
In this section, we provide several properties of the meta gradient for the finite-sum case, and then analyze the convergence and complexity of Algorithm~\ref{alg:offline}.  Differently from the resampling case, we develop novel techniques  to handle the difference between two losses over the training and  test sets (i.e., inner- and outer-loop losses) in the analysis, whereas these two losses are the same for the resampling case. 
\subsection*{Basic Assumptions} 
We state several standard assumptions  for the analysis in the finite-sum case.  
\begin{assum}\label{assum:smoothoff}
	For each task $\mathcal{T}_i$, the loss functions $l_{S_i}(\cdot)$ and  $l_{T_i}(\cdot)$
	in~\cref{objective2} satisfy
	\begin{enumerate}
		\item Loss functions $l_{S_i}(\cdot)$ and $l_{T_i}(\cdot)$ are bounded below.
		\item Gradients $\nabla l_{S_i}(\cdot)$ and  $\nabla l_{T_i}(\cdot)$ are $L$-Lipschitz continuous, i.e.,  for any $w,u\in\mathbb{R}^d$
		\begin{align*}
		\|\nabla l_{S_i}(w)-\nabla l_{S_i}(u)\| \leq L\|w-u\| \text{ and }
		\|\nabla l_{T_i}(w)-\nabla l_{T_i}(u)\| \leq L\|w-u\|.
		\end{align*}
		\item  Hessians $\nabla^2 l_{S_i}(\cdot)$ and  $\nabla^2 l_{T_i}(\cdot)$ are  $\rho$-Lipschitz continuous, i.e., for any $w,u\in\mathbb{R}^d$
		\begin{align*}
		\|\nabla^2l_{S_i}(w)-\nabla^2l_{S_i}(u)\| \leq \rho\|w-u\| \text{ and }	\|\nabla^2l_{T_i}(w)-\nabla^2l_{T_i}(u)\| \leq \rho\|w-u\|.
		\end{align*}
	\end{enumerate}
\end{assum}
The following assumption provides  two conditions   $\nabla l_{S_i}(\cdot)$ and  $\nabla l_{T_i}(\cdot)$. 
\begin{assum}\label{assum:vaoff}
	For all $w\in\mathbb{R}^d$, gradients $\nabla l_{S_i}(w)$ and  $\nabla l_{T_i}(w)$ satisfy
	\begin{enumerate}
		\item $\nabla l_{T_i}(\cdot)$ has a bounded variance, i.e., there exists a constant $\sigma>0$ such that 
		\begin{align*}
		\mathbb{E}_{i}\|\nabla  l_{T_i}(w) - \nabla  l_{T}(w)\|^2 \leq \sigma^2,
		\end{align*}
		where $\nabla l_{T}(\cdot) =\mathbb{E}_{i} \left [ \nabla l_{T_i}(\cdot)\right]$. 
		\item For each $i\in \mathcal{I}$, there exists a constant $b_i>0$ such that $\|\nabla l_{S_i}(w)-\nabla l_{T_i}(w)\| \leq b_i.$
	\end{enumerate}
\end{assum}
Instead of  imposing a  bounded variance condition on the stochastic gradient $\nabla l_{S_i}(w)$, we alternatively assume the difference $\|\nabla l_{S_i}(w)-\nabla  l_{T_i}(w)\|$ to be upper-bounded by  a constant, which is more reasonable because sample sets $S_i$ and $T_i$ are often sampled from the same distribution and share certain statistical similarity. We note that the second condition also implies $\|\nabla l_{S_i}(w)\| \leq \| \nabla l_{T_i}(w)\|+ b_i$, which  is weaker than the bounded gradient assumption made in papers such as~\cite{finn2019online}. 
It is worthwhile mentioning that the second condition can be relaxed to $\|\nabla l_{S_i}(w)\| \leq c_i\| \nabla l_{T_i}(w)\|+ b_i$ for a constant $c_i>0$. Without the loss of generality, we consider $c_i=1$ for simplicity.

\subsection*{Properties of Meta Gradient}
We develop several important properties of the meta gradient. 
The following proposition characterizes a Lipschitz property of the gradient  of the objective function   $$\nabla \mathcal{L}(w) =  \mathbb{E}_{i\sim p(\mathcal{T})} \prod_{j=0}^{N-1}(I - \alpha \nabla^2 l_{S_i}(\widetilde w_{j}^i))\nabla l_{T_i}(\widetilde w_{N}^i),$$ where the weights $ \widetilde w_{j}^i, i\in \mathcal{I}, j=0,..., N$ are given by the steps in~\cref{innerfinite}. 
\begin{proposition}\label{finite:lip} 
	Suppose Assumptions~\ref{assum:smoothoff},~\ref{assum:vaoff} hold.  Then, for any $w,u\in\mathbb{R}^d$, we have 
	\begin{align*}
	\|\nabla \mathcal{L}(w) - \nabla \mathcal{L}(u)\| \leq L_{w}\|w-u\|,\; L_{w} = (1+\alpha L)^{2N}L + C_bb + C_{\mathcal{L}} \mathbb{E}_{i}\|\nabla l_{T_i}(w)\|
	\end{align*}
	where $b=\mathbb{E}_{i} [b_i]$ and $C_b, C_{\mathcal{L}} >0$ are constants given by  
	\begin{small}
	\begin{align}\label{cl1ss}
	C_b= \big( \alpha \rho + \frac{\rho}{L} (1+\alpha L)^{N-1}  \big)(1+\alpha L)^{2N}, \;C_{\mathcal{L}} &= \big( \alpha \rho + \frac{\rho}{L}  (1+\alpha L)^{N-1} \big) (1+\alpha L)^{2N}.
	\end{align}
	\end{small}
\end{proposition}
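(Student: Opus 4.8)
The plan is to prove the Lipschitz continuity of $\nabla \mathcal{L}(\cdot)$ by bounding the difference $\|\nabla \mathcal{L}(w) - \nabla \mathcal{L}(u)\|$ directly through the recursive structure of the meta gradient, following the same error-decomposition strategy outlined for the resampling case (Proposition~\ref{th:lipshiz}) but now carefully tracking the fact that the inner-loop loss $l_{S_i}(\cdot)$ and the outer-loop loss $l_{T_i}(\cdot)$ are \emph{distinct}. Since $\nabla\mathcal{L}(w) = \mathbb{E}_i[\nabla\mathcal{L}_i(w)]$, it suffices to bound $\|\nabla \mathcal{L}_i(w) - \nabla \mathcal{L}_i(u)\|$ for each task and then take the expectation, converting $b_i$ into $b = \mathbb{E}_i[b_i]$ and $\|\nabla l_{T_i}(w)\|$ into $\mathbb{E}_i\|\nabla l_{T_i}(w)\|$ at the end.

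First I would write $\nabla \mathcal{L}_i(w) = P_i(w)\, \nabla l_{T_i}(\widetilde w_N^i(w))$ where $P_i(w) := \prod_{j=0}^{N-1}(I - \alpha \nabla^2 l_{S_i}(\widetilde w_j^i(w)))$ is the product operator, and similarly define $P_i(u)$ and $\widetilde w_N^i(u)$ along the path initialized at $u$. Using the product-rule decomposition $\|AB - A'B'\| \leq \|A - A'\|\|B\| + \|A'\|\|B - B'\|$, I would split the difference into two pieces: one controlling $\|P_i(w) - P_i(u)\|$ times $\|\nabla l_{T_i}(\widetilde w_N^i(w))\|$, and one controlling $\|P_i(u)\|$ times $\|\nabla l_{T_i}(\widetilde w_N^i(w)) - \nabla l_{T_i}(\widetilde w_N^i(u))\|$. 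The operator norm bound $\|P_i(u)\| \leq (1+\alpha L)^N$ follows from $\|\nabla^2 l_{S_i}\|\leq L$, and the second gradient-difference piece is handled by $L$-Lipschitzness of $\nabla l_{T_i}$ together with a bound on the inner-path separation $\|\widetilde w_N^i(w) - \widetilde w_N^i(u)\|$.

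The two key supporting estimates I would establish are: (i) a path-separation bound $\|\widetilde w_j^i(w) - \widetilde w_j^i(u)\| \leq (1+\alpha L)^j \|w - u\|$, proved by induction on $j$ using the inner gradient-descent recursion~\cref{innerfinite} and $L$-smoothness of $l_{S_i}$; and (ii) a bound on $\|P_i(w) - P_i(u)\|$ obtained by the same telescoping product decomposition applied to the factors $(I - \alpha\nabla^2 l_{S_i}(\cdot))$, where each factor difference is controlled by $\alpha\rho\|\widetilde w_j^i(w) - \widetilde w_j^i(u)\|$ via $\rho$-Lipschitzness of the Hessian $\nabla^2 l_{S_i}$, then summed against the path-separation bound from (i). The crucial difference from the resampling case is that the gradient $\nabla l_{T_i}(\widetilde w_N^i(w))$ is evaluated on the \emph{test} loss while the path and the Hessians come from the \emph{training} loss; to produce the $C_b b$ term I would bound $\|\nabla l_{T_i}(\widetilde w_N^i(w))\|$ by relating it back to $\mathbb{E}_i\|\nabla l_{T_i}(w)\|$, and this relating step must absorb the gap $\|\nabla l_{S_i} - \nabla l_{T_i}\| \leq b_i$ from Assumption~\ref{assum:vaoff} because the inner iterates descend on $l_{S_i}$ rather than $l_{T_i}$.

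The main obstacle I anticipate is precisely the bookkeeping around this train/test mismatch: to get a clean coefficient on $\mathbb{E}_i\|\nabla l_{T_i}(w)\|$ one must bound $\|\nabla l_{T_i}(\widetilde w_N^i(w))\|$ in terms of $\|\nabla l_{T_i}(w)\|$, but the inner path moves according to $\nabla l_{S_i}$, so each inner step contributes an error proportional to $\|\nabla l_{S_i}(\widetilde w_j^i)\| \le \|\nabla l_{T_i}(\widetilde w_j^i)\| + b_i$, and these errors compound multiplicatively across the $N$ steps. Tracking how $(1+\alpha L)^N$ factors interact with the additive $b_i$ contributions, and verifying that the resulting constants collapse exactly into the claimed forms $C_b$ and $C_\mathcal{L}$ in~\cref{cl1ss} (note they coincide here, unlike $C_{\text{err}_1}$ versus $C_{\text{err}_2}$ earlier), will be the most delicate part; the rest reduces to routine induction and triangle-inequality estimates.
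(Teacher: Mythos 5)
Your proposal matches the paper's proof essentially step for step: the same product-rule decomposition of $\nabla\mathcal{L}_i(w)-\nabla\mathcal{L}_i(u)$ into a product-difference term and a gradient-difference term, the same two supporting estimates (the path-separation bound $\|\widetilde w_j^i(w)-\widetilde w_j^i(u)\|\leq(1+\alpha L)^j\|w-u\|$ and the telescoped bound on the product difference via $\rho$-Lipschitz Hessians), the same use of Assumption~\ref{assum:vaoff} to absorb the train/test gap $b_i$ when bounding $\|\nabla l_{T_i}(\widetilde w_N^i)\|$, and the concluding expectation over tasks via Jensen's inequality. The only cosmetic difference is that the paper performs the conversion $\|\nabla l_{S_i}(w)\|\leq\|\nabla l_{T_i}(w)\|+b_i$ once at the initialization point, after first pulling all training gradients along the path back to $\|\nabla l_{S_i}(w)\|$ via Lemma~\ref{finite:gbd}, rather than per inner step as you anticipate; this changes neither the structure of the argument nor the form of the constants $C_b$ and $C_{\mathcal{L}}$.
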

Proposition~\ref{finite:lip} shows that $\nabla \mathcal{L}(w)$ has a Lipschitz parameter $L_{w}$. Similarly to~\cref{hatlw}, we use the following construction 
\begin{align}\label{hlwkoff}
\hat L_{w_k} =(1+\alpha L)^{2N}L + C_b b +  \frac{C_\mathcal{L}}{|B_k^\prime|}\sum_{i \in B_k^\prime}\|\nabla l_{T_i}(w_k)\|,
\end{align}
at the $k^{th}$ outer-stage iteration to approximate $L_{w_k}$, where $B_k^\prime \subset \mathcal{I}$ is  chosen independently from $B_k$. 
It can be verified that the gradient estimator $\widehat G_i(w_k)$ given in~\cref{offline:obj} is an {\em unbiased} estimate of  $\nabla \mathcal{L}(w_k)$. Thus, our next step is to upper-bound the second moment of $\widehat G_i(w_k)$.
\begin{proposition}\label{finite:seconderr} 
	Suppose Assumptions~\ref{assum:smoothoff} and~\ref{assum:vaoff} are hold, and define constants 	
\begin{align}\label{wocaopp}
A_{\text{\normalfont squ}_1} &= \frac{4(1+\alpha L)^{4N}}{(2-(1+\alpha L)^{2N})^2}, \nonumber
\\A_{\text{\normalfont squ}_2} &= \frac{4(1+\alpha L)^{8N}}{(2-(1+\alpha L)^{2N})^2}(\sigma+b)^2 + 2(1+\alpha)^{4N}(\sigma^2+\widetilde b),
	\end{align}
where  $\widetilde b =\mathbb{E}_{i\sim p(\mathcal{T})}[b_i^2]$. Then, if $\alpha < (2^{\frac{1}{2N}} - 1)/L$, then conditioning on $w_k$, we have 
	\begin{align*}
	\mathbb{E}\|\widehat G_i(w_k)\|^2 \leq A_{\text{\normalfont squ}_1} \|\nabla \mathcal{L}(w_k)\|^2 + A_{\text{\normalfont squ}_2}.
	\end{align*}
\end{proposition}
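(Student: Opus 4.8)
The plan is to leverage a structural simplification specific to the finite-sum case: since the inner loop in \cref{giniteoo} runs \emph{exact} full gradient descent on $l_{S_i}$, the iterates coincide with $\widetilde w_{k,j}^i$ and the estimator $\widehat G_i(w_k)$ in \cref{offline:obj} equals the per-task meta gradient $\nabla\mathcal{L}_i(w_k)$ \emph{exactly}. Hence the only randomness is the task draw $i\sim p(\mathcal{T})$, and we never incur the inner-loop stochastic-gradient bias that complicated the resampling analysis; the price is the new train/test discrepancy term $b_i$ from \Cref{assum:vaoff}. Writing $P_i:=\prod_{j=0}^{N-1}(I-\alpha\nabla^2 l_{S_i}(\widetilde w_{k,j}^i))$, I would first bound $\|P_i\|\le(1+\alpha L)^N$ (each factor has spectral norm at most $1+\alpha L$ because $\|\nabla^2 l_{S_i}\|\le L$ by \Cref{assum:smoothoff}), so that $\|\widehat G_i(w_k)\|\le(1+\alpha L)^N\|\nabla l_{T_i}(\widetilde w_{k,N}^i)\|$.

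Next I would control the growth of the test-loss gradient along the inner trajectory. Using $\widetilde w_{k,j+1}^i=\widetilde w_{k,j}^i-\alpha\nabla l_{S_i}(\widetilde w_{k,j}^i)$, the $L$-Lipschitzness of $\nabla l_{T_i}$, and the heterogeneity bound $\|\nabla l_{S_i}(w)-\nabla l_{T_i}(w)\|\le b_i$ from \Cref{assum:vaoff}, one gets the one-step recursion $\|\nabla l_{T_i}(\widetilde w_{k,j+1}^i)\|\le(1+\alpha L)\|\nabla l_{T_i}(\widetilde w_{k,j}^i)\|+\alpha L b_i$, which I would iterate to obtain $\|\nabla l_{T_i}(\widetilde w_{k,N}^i)\|\le(1+\alpha L)^N\|\nabla l_{T_i}(w_k)\|+((1+\alpha L)^N-1)b_i$. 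Combined with the previous step this yields $\|\widehat G_i(w_k)\|\le(1+\alpha L)^{2N}\|\nabla l_{T_i}(w_k)\|+(1+\alpha L)^N((1+\alpha L)^N-1)b_i$; squaring, taking $\mathbb{E}_i$, and using the variance decomposition $\mathbb{E}_i\|\nabla l_{T_i}(w_k)\|^2=\|\nabla l_T(w_k)\|^2+\sigma^2$ (from \Cref{assum:vaoff}) together with $\widetilde b=\mathbb{E}_i[b_i^2]$ reduces the problem to bounding $\|\nabla l_T(w_k)\|^2$ by $\|\nabla\mathcal{L}(w_k)\|^2$.

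The crux, and the step I expect to be the main obstacle, is this last reduction, carried out at the \emph{aggregate} level via a self-referential inequality. I would write $\nabla l_T(w_k)-\nabla\mathcal{L}(w_k)=\mathbb{E}_i[\nabla l_{T_i}(w_k)-P_i\nabla l_{T_i}(\widetilde w_{k,N}^i)]$ and bound the summand by the trajectory drift $L\|\widetilde w_{k,N}^i-w_k\|$ plus $\|I-P_i\|\,\|\nabla l_{T_i}(\widetilde w_{k,N}^i)\|$, using $\|I-P_i\|\le(1+\alpha L)^N-1$. Every term collects into a growth factor times $\mathbb{E}_i(\|\nabla l_{T_i}(w_k)\|+b_i)$, and applying Jensen ($\mathbb{E}_i\|\nabla l_{T_i}(w_k)\|\le\|\nabla l_T(w_k)\|+\sigma$) closes the loop, giving an inequality of the form $\|\nabla l_T(w_k)\|\big(2-(1+\alpha L)^{2N}\big)\le\|\nabla\mathcal{L}(w_k)\|+\big((1+\alpha L)^{2N}-1\big)(\sigma+b)$. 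This is exactly where the stepsize restriction $\alpha<(2^{1/2N}-1)/L$ is needed: it guarantees $(1+\alpha L)^{2N}<2$, so the denominator $2-(1+\alpha L)^{2N}$ is positive and the rearrangement is legitimate. The difficulty is that the Hessian product $P_i$ may contract the transported gradient, so this is the only place where the possibly-expanding growth of the product is converted into a usable inverse relation, and it forces the precise threshold on $\alpha$.

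Finally I would square the aggregate bound (using $\|u+v\|^2\le2\|u\|^2+2\|v\|^2$ to split the $\|\nabla\mathcal{L}(w_k)\|^2$ term from the $(\sigma+b)^2$ term), substitute back into the expression from the second paragraph, and collect multiplicative factors: the $(1+\alpha L)^{2N}$ from the product norm and trajectory growth, and the $\tfrac{2}{2-(1+\alpha L)^{2N}}$ from the self-referential step. The coefficient of $\|\nabla\mathcal{L}(w_k)\|^2$ becomes $\tfrac{4(1+\alpha L)^{4N}}{(2-(1+\alpha L)^{2N})^2}=A_{\text{\normalfont squ}_1}$, while the remaining constant terms, after bounding $((1+\alpha L)^{2N}-1)^2\le(1+\alpha L)^{4N}$, assemble into $A_{\text{\normalfont squ}_2}$ as displayed in \cref{wocaopp}, completing the proof of \Cref{finite:seconderr}.
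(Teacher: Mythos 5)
Your proposal is correct and follows essentially the same route as the paper's proof: bound the Hessian-product norm by $(1+\alpha L)^N$, propagate $\|\nabla l_{T_i}\|$ along the inner trajectory to get the $(1+\alpha L)^N\|\nabla l_{T_i}(w_k)\|+((1+\alpha L)^N-1)b_i$ bound, split the variance over tasks, and close with the self-referential inequality $(2-(1+\alpha L)^{2N})\|\nabla l_T(w_k)\|\le\|\nabla\mathcal{L}(w_k)\|+((1+\alpha L)^{2N}-1)(\sigma+b)$, which is exactly the paper's Lemma~\ref{twc1c2} (built on Lemma~\ref{tiis}) and is the sole place the condition $\alpha<(2^{1/2N}-1)/L$ enters. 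The only cosmetic difference is that you rederive that lemma inline via a triangle-inequality decomposition of $\nabla l_{T_i}(w_k)-P_i\nabla l_{T_i}(\widetilde w_{k,N}^i)$, whereas the paper obtains the same bound through a mean-value-theorem argument and cites it as a separate lemma.
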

Based on the above properties, we next characterize the convergence of  MAML. 
\subsection*{Main Convergence Results}
In this subsection, we provide the convergence and complexity  analysis for Algorithm~\ref{alg:offline} based on the properties established in the previous subsection. 
\begin{theorem}\label{mainth:offline} 
	Let Assumptions~\ref{assum:smoothoff} and~\ref{assum:vaoff} hold, and apply Algorithm~\ref{alg:offline} to solve the objective function~\cref{objective2}. 
	Choose the meta stepsize $\beta_k = \frac{1}{C_\beta \widehat L_{w_k}} $ with  $\widehat L_{w_k}$  given by \cref{hlwkoff}, where $C_\beta>0$ is a constant. For $\widehat L_{w_k}$ in \cref{hlwkoff}, we choose  the batch size $|B_k^\prime| $ such that  $|B_k^\prime| \geq \frac{2C^2_\mathcal{L}\sigma^2}{( C_b b + (1+\alpha L)^{2N} L)^2}$, where $C_b$ and $C_\mathcal{L}$ are given by \cref{cl1ss}.
	Define constants 
	\begin{small}
	\begin{align}\label{offline:constants}
	\xi = &\frac{2-(1+\alpha L)^{2N}}{C_\mathcal{L}} (1+\alpha L)^{2N}L+\frac{ \big(2-(1+\alpha L)^{2N}\big) C_b b }{C_\mathcal{L}} +(1+\alpha L)^{3N}b, \nonumber
	\\ 
	\theta = &	\frac{2-(1+\alpha L)^{2N}}{C_\mathcal{L}} \Big(  \frac{1}{C_\beta}  - \frac{1}{C_\beta^2}\Big( \frac{A_{\text{\normalfont squ}_1}}{B}+1\Big)\Big), \; \phi = \frac{A_{\text{\normalfont squ}_2}}{LC_\beta^2}
	\end{align}
	\end{small}
	\hspace{-0.15cm}where $C_b,C_{\mathcal{L}}, A_{\text{\normalfont squ}_1}$ and $A_{\text{\normalfont squ}_1}$ are given by \cref{cl1ss} and \cref{wocaopp}. Choose  $\alpha < (2^{\frac{1}{2N}} - 1)/L$, and choose $C_\beta$ and $B$ such that $\theta >0$. Then,  Algorithm~\ref{alg:offline} attains a solution $w_{\zeta}$  such that 
	\begin{align}\label{iopnn}
	\mathbb{E}\|\nabla \mathcal{L}(w_\zeta)\| \leq \frac{\Delta}{2\theta K} +\frac{\phi}{2\theta B} + \sqrt{ \xi \Big(\frac{\Delta}{\theta K} +\frac{\phi}{\theta B}\Big) + \Big(\frac{\Delta}{2\theta K} +\frac{\phi}{2\theta B}\Big)^2 }.
	\end{align}
\end{theorem}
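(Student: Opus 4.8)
The plan is to follow the same four-step template used for the resampling case in \Cref{th:mainonline}, but the analysis here is cleaner because the finite-sum estimator $\widehat G_i(w_k)$ in \cref{offline:obj} is an \emph{unbiased} estimate of $\nabla\mathcal{L}(w_k)$; consequently there is no first-moment estimation error to track (the counterpart of \Cref{th:first-est} is vacuous), and the only stochasticity to control is the variance of $\widehat G_i(w_k)$ captured by \Cref{finite:seconderr}. The central difficulty that remains is that, by \Cref{finite:lip}, $\mathcal{L}(\cdot)$ is not globally smooth: its local Lipschitz parameter $L_{w_k} = (1+\alpha L)^{2N}L + C_b b + C_{\mathcal{L}}\,\mathbb{E}_i\|\nabla l_{T_i}(w_k)\|$ grows with the gradient norm and may be unbounded. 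The adaptive meta stepsize $\beta_k = 1/(C_\beta\widehat L_{w_k})$, together with the requirement on $|B_k^\prime|$, is designed precisely to neutralize this: I would first show that $\widehat L_{w_k}$ in \cref{hlwkoff} concentrates around $L_{w_k}$, since the condition $|B_k^\prime|\geq 2C_\mathcal{L}^2\sigma^2/(C_b b+(1+\alpha L)^{2N}L)^2$ controls the variance of the empirical average $\frac{1}{|B_k^\prime|}\sum_{i}\|\nabla l_{T_i}(w_k)\|$, so that $\widehat L_{w_k}$ and $L_{w_k}$ are comparable up to universal constants throughout.

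For Step~1 (descent), I would use the gradient-dependent smoothness of \Cref{finite:lip} to establish a one-step inequality of the form $\mathbb{E}[\mathcal{L}(w_{k+1})\mid w_k] \leq \mathcal{L}(w_k) - \beta_k\langle\nabla\mathcal{L}(w_k),\mathbb{E}[\widehat G(w_k)\mid w_k]\rangle + \tfrac{L_{w_k}}{2}\beta_k^2\,\mathbb{E}[\|\widehat G(w_k)\|^2\mid w_k]$, where $\widehat G(w_k)=\frac{1}{|B_k|}\sum_{i\in B_k}\widehat G_i(w_k)$; because the step is proportional to $1/\widehat L_{w_k}$, the displacement $\|w_{k+1}-w_k\|$ is small enough that the effective Lipschitz constant along the segment $[w_k,w_{k+1}]$ stays comparable to $L_{w_k}$, which justifies using $L_{w_k}$ in the quadratic term. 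For Step~2, since $\mathbb{E}[\widehat G_i(w_k)\mid w_k]=\nabla\mathcal{L}(w_k)$ the inner product collapses to $-\beta_k\|\nabla\mathcal{L}(w_k)\|^2$; applying \Cref{finite:seconderr} to each summand and averaging the variance over the independent tasks in $B_k$ gives $\mathbb{E}[\|\widehat G(w_k)\|^2\mid w_k]\leq (1+A_{\mathrm{squ}_1}/B)\|\nabla\mathcal{L}(w_k)\|^2 + A_{\mathrm{squ}_2}/B$ with $A_{\mathrm{squ}_1},A_{\mathrm{squ}_2}$ in \cref{wocaopp}. Substituting $\beta_k=1/(C_\beta\widehat L_{w_k})$, using $\widehat L_{w_k}$ comparable to $L_{w_k}$, and $L_{w_k}\geq L$, the per-step decrease becomes $\mathbb{E}[\mathcal{L}(w_{k+1})\mid w_k]\leq \mathcal{L}(w_k) - \frac{c'}{L_{w_k}}\|\nabla\mathcal{L}(w_k)\|^2 + \frac{\phi}{L B}$, where $c'=\frac{1}{C_\beta}-\frac{1}{C_\beta^2}(1+A_{\mathrm{squ}_1}/B)$ is positive exactly when $C_\beta,B$ are chosen so that $\theta>0$, and $\phi=A_{\mathrm{squ}_2}/(LC_\beta^2)$ as in \cref{offline:constants}.

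For Step~3, taking total expectations, telescoping over $k=0,\dots,K-1$, and using $\mathcal{L}(w_K)\geq\mathcal{L}^*$ yields $\frac{1}{K}\sum_{k=0}^{K-1}\mathbb{E}\big[\|\nabla\mathcal{L}(w_k)\|^2/L_{w_k}\big]\leq \frac{2-(1+\alpha L)^{2N}}{C_\mathcal{L}\,\theta}\big(\Delta/K + \phi/B\big)$. The final and most delicate manipulation converts this into a bound on $\mathbb{E}\|\nabla\mathcal{L}(w_\zeta)\|$ with $\zeta$ uniform on $\{0,\dots,K-1\}$. Here I would invoke the affine upper bound $L_{w_k}\leq \frac{C_{\mathcal{L}}}{2-(1+\alpha L)^{2N}}\big(\xi + \|\nabla\mathcal{L}(w_k)\|\big)$, which follows from \Cref{finite:lip} after relating $\mathbb{E}_i\|\nabla l_{T_i}(w_k)\|$ to $\|\nabla\mathcal{L}(w_k)\|$ through the bounded-difference condition of \Cref{assum:vaoff} and the near-identity of the inner-loop Jacobian product $\prod_j(I-\alpha\nabla^2 l_{S_i})$ (valid since $\alpha<(2^{1/2N}-1)/L$ forces $(1+\alpha L)^N<\sqrt2$ and $2-(1+\alpha L)^{2N}>0$); this is precisely where the constant $\xi$ in \cref{offline:constants} emerges. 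Writing $g_k=\|\nabla\mathcal{L}(w_k)\|$, the map $h(g)=g^2/(\xi+g)$ is convex, so Jensen gives $h(\bar g)\leq \frac1K\sum_k\mathbb{E}[h(g_k)]\leq \delta$ with $\bar g=\mathbb{E}\|\nabla\mathcal{L}(w_\zeta)\|$ and $\delta=\Delta/(\theta K)+\phi/(\theta B)$; solving the resulting quadratic $\bar g^2\leq \xi\delta + \delta\bar g$ produces exactly $\bar g\leq \frac{\delta}{2}+\sqrt{\xi\delta+\delta^2/4}$, which is the claimed bound \cref{iopnn}.

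The main obstacle is the combination of Step~1 with the extraction in Step~3. Justifying a descent lemma whose smoothness constant itself depends on the possibly unbounded gradient norm requires both the concentration of $\widehat L_{w_k}$ and a careful argument that $g_k$ is nearly constant over one short step; and establishing the affine control $L_{w_k}\lesssim \frac{C_\mathcal{L}}{2-(1+\alpha L)^{2N}}(\xi+g_k)$ with the correct constants is what makes $h$ convex and forces the precise form of $\xi$. Everything else reduces to substituting the already-established \Cref{finite:lip} and \Cref{finite:seconderr}, using unbiasedness, and bookkeeping the constants $C_b,C_\mathcal{L}$ in \cref{cl1ss} and $A_{\mathrm{squ}_1},A_{\mathrm{squ}_2}$ in \cref{wocaopp}.
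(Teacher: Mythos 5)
Your proposal is correct and follows essentially the same route as the paper's proof: the descent step with the gradient-dependent smoothness of \Cref{finite:lip}, unbiasedness plus the second-moment bound of \Cref{finite:seconderr} averaged over the task batch, the affine control $L_{w_k}\leq \frac{C_\mathcal{L}}{2-(1+\alpha L)^{2N}}\big(\xi+\|\nabla\mathcal{L}(w_k)\|\big)$ obtained via Lemma~\ref{twc1c2}, telescoping, Jensen's inequality on the convex map $g\mapsto g^2/(\xi+g)$, and solving the resulting quadratic to land exactly on \cref{iopnn}. The one refinement worth noting is that the needed ``comparability'' of $\widehat L_{w_k}$ and $L_{w_k}$ is established in the paper not pathwise but as conditional moment bounds $\mathbb{E}\big[1/\widehat L_{w_k}\,\big|\,w_k\big]\geq 1/L_{w_k}$ and $\mathbb{E}\big[1/\widehat L^2_{w_k}\,\big|\,w_k\big]\leq 2/L^2_{w_k}$ (Lemma~\ref{le:betak}), which is all your argument actually requires, since $\beta_k$ is independent of the mini-batch meta gradient and enters the descent inequality only through these two moments.
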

The parameters $\theta, \phi$ and $\xi$ in Theorem~\ref{mainth:offline} take complicate forms. 
The following corollary specifies the parameters $C_\beta, \alpha$ in Theorem~\ref{mainth:offline} and 
provides a simplified result for Algorithm~\ref{alg:offline}.
\begin{corollary}\label{co:mainoffline}
	Under the same setting of Theorem~\ref{mainth:offline}, choose  $\alpha = \frac{1}{8NL}$ and $ C_\beta = 80$. Then, we have
	\begin{align*}
	\mathbb{E}\|\nabla \mathcal{L}(w_\zeta)\|  \leq  \mathcal{O}\Big(  \frac{1}{K} +\frac{\sigma^2}{B} +\sqrt{\frac{1}{K} +\frac{\sigma^2}{B} }  \Big).
	\end{align*}
In addition, suppose the  batch size $B$ further satisfies $B\geq C_B\sigma^2\epsilon^{-2}$, where $C_B$ is a sufficiently large constant. Then, to achieve an $\epsilon$-approximate stationary point,  Algorithm~\ref{alg:offline} requires at most $K=\mathcal{O}(\epsilon^{-2})$ iterations, and a total number $\mathcal{O}\big((T+NS)\epsilon^{-2}\big)$ of gradient computations and a number $\mathcal{O}\big(NS\epsilon^{-2}\big)$ of Hessian computations per iteration, where $T$ and $S$ correspond to the sample sizes of the pre-assigned sets $T_i,i\in\mathcal{I}$ and $S_i,i\in\mathcal{I}$.
\end{corollary}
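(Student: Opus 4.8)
The plan is to specialize the general bound \cref{iopnn} of \Cref{mainth:offline} by inserting the explicit choices $\alpha=\frac{1}{8NL}$ and $C_\beta=80$, and then reducing every structural constant in \cref{offline:constants}, \cref{cl1ss}, \cref{wocaopp} to its order in the problem parameters. The single mechanism driving the whole argument is that with $\alpha L=\frac{1}{8N}$ one has $(1+\alpha L)^{cN}=(1+\frac{1}{8N})^{cN}\le e^{c/8}$ for every fixed $c>0$, so each exponential-in-$N$ factor appearing in $\xi,\theta,\phi$ and in $C_b,C_{\mathcal L},A_{\text{\normalfont squ}_1},A_{\text{\normalfont squ}_2}$ collapses to an absolute constant independent of $N$. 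First I would verify admissibility of the stepsize: the requirement $\alpha<(2^{1/(2N)}-1)/L$ from \Cref{mainth:offline} holds because $2^{1/(2N)}-1\ge\frac{\ln 2}{2N}>\frac{1}{8N}=\alpha L$, so the theorem applies verbatim.

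Next I would bound the key denominators and constants. Since $(1+\alpha L)^{2N}\le e^{1/4}<2$, the gap $2-(1+\alpha L)^{2N}\ge 2-e^{1/4}>0$ is a strictly positive constant, which keeps all the $\big(2-(1+\alpha L)^{2N}\big)^{-1}$ factors finite. Using the power bounds then gives $C_b=C_{\mathcal L}=\Theta(\rho/L)$, $A_{\text{\normalfont squ}_1}=\Theta(1)$, and $A_{\text{\normalfont squ}_2}=\Theta(\sigma^2+b^2+\widetilde b)$; likewise $\xi=\Theta(L^2/\rho+b)$ and $\phi=\frac{A_{\text{\normalfont squ}_2}}{LC_\beta^2}=\Theta\big((\sigma^2+b^2+\widetilde b)/L\big)$. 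The delicate point is the positivity and order of $\theta$: I would show $A_{\text{\normalfont squ}_1}+1$ is bounded by an absolute constant (numerically below $14$, using $(1+\alpha L)^{4N}\le e^{1/2}$), strictly smaller than $C_\beta=80$, so that for every admissible batch size $B\ge 1$
\[
\frac{1}{C_\beta}-\frac{1}{C_\beta^{2}}\Big(\frac{A_{\text{\normalfont squ}_1}}{B}+1\Big)\ge\frac{1}{80}-\frac{A_{\text{\normalfont squ}_1}+1}{6400}>0,
\]
whence $\theta=\Theta(1/C_{\mathcal L})=\Theta(L/\rho)>0$ and the hypothesis $\theta>0$ of \Cref{mainth:offline} is met by the single choice $C_\beta=80$, uniformly in $N$ and $B$.

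With $\theta,\xi,\phi$ pinned down as positive constants, I would substitute them into \cref{iopnn} and collect terms, treating the problem-dependent quantities $L,\rho,\Delta=\mathcal L(w_0)-\mathcal L^*,b,\widetilde b$ as $\mathcal O(1)$ constants. The term $\frac{\Delta}{2\theta K}$ becomes $\mathcal O(1/K)$, the term $\frac{\phi}{2\theta B}$ becomes $\mathcal O(\sigma^2/B)$ (the $\sigma$-free part of $A_{\text{\normalfont squ}_2}$ being a constant), and the radical contributes $\sqrt{\xi(\Delta/(\theta K)+\phi/(\theta B))}=\mathcal O\big(\sqrt{1/K+\sigma^2/B}\big)$ while the squared inner term is of lower order; this yields exactly the claimed $\mathcal O\big(1/K+\sigma^2/B+\sqrt{1/K+\sigma^2/B}\big)$.

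Finally, for the complexity count I would force the radical, which dominates, below $\epsilon$: choosing $K=\mathcal O(\epsilon^{-2})$ makes $1/K\le\epsilon^2/2$, and $B\ge C_B\sigma^2\epsilon^{-2}$ with $C_B$ large makes $\sigma^2/B\le\epsilon^2/2$, so $1/K+\sigma^2/B\le\epsilon^2$ and the whole right-hand side is $\mathcal O(\epsilon)$. The per-iteration cost then follows by bookkeeping over Algorithm~\ref{alg:offline}: each of the $B=\mathcal O(\sigma^2\epsilon^{-2})$ sampled tasks runs $N$ inner full-gradient steps over $S_i$ (costing $NS$ per-sample gradients) plus one outer gradient $\nabla l_{T_i}$ over $T_i$ (costing $T$), while the estimator $\widehat G_i(w_k)$ in \cref{offline:obj} needs the $N$ Hessians $\nabla^2 l_{S_i}(w^i_{k,j})$ over $S_i$ (costing $NS$ Hessian evaluations); multiplying by $B$ gives $\mathcal O((T+NS)\epsilon^{-2})$ gradient and $\mathcal O(NS\epsilon^{-2})$ Hessian computations per iteration. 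The main obstacle is the second step — certifying that $\theta$ stays a positive $\Theta(1/C_{\mathcal L})$ constant uniformly in $N$ and $B$ — since this is precisely where the stepsize scaling $\alpha=\Theta(1/(NL))$ is essential: any larger $\alpha$ would let $(1+\alpha L)^{2N}$ approach or exceed $2$, destroying the gap $2-(1+\alpha L)^{2N}$ and hence the positivity of $\theta$.
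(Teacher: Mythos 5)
Your proposal is correct and follows essentially the same route as the paper's proof: use $\alpha L=\frac{1}{8N}$ to collapse all $(1+\alpha L)^{cN}$ factors to absolute constants, verify $\theta\geq\Theta(L/\rho)>0$ under $C_\beta=80$ (your numerical bound $A_{\text{\normalfont squ}_1}+1<14$ is even tighter than the paper's $33$), substitute the resulting constant-order $\xi,\theta,\phi$ into \cref{iopnn}, and finish with the same choice $K=\mathcal{O}(\epsilon^{-2})$, $B\geq C_B\sigma^2\epsilon^{-2}$ and per-iteration bookkeeping $B(T+NS)$ gradients and $BNS$ Hessians. Your explicit check that $\alpha<(2^{1/(2N)}-1)/L$ is a small but welcome addition that the paper leaves implicit.
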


\section{Summary of Contributions}
In this chapter, we provide a new theoretical framework for analyzing the convergence of multi-step MAML algorithm for both the resampling and finite-sum cases. Our analysis covers most applications including reinforcement learning and supervised learning of interest. 
Our analysis reveals that 
a properly chosen inner stepsize is crucial for guaranteeing MAML to converge with the complexity increasing only linearly with $N$ (the number of the inner-stage gradient updates). 
Moreover, for problems with small  Hessians, the inner stepsize can be set larger while maintaining the convergence. We expect that our analysis framework can be applied to understand the convergence of MAML in other scenarios such as various RL problems and Hessian-free MAML algorithms.

\chapter{Meta-Learning with Adaptation on Partial Parameters}\label{chp:anil}
In this chapter, we first present the problem formulation and the algrithom description for ANIL, and then provide the convergence rate and complexity analysis for ANIL under different loss geometries.  All technical proofs for the results in this chapter are provided in \Cref{appendix:anil}. For ease of presentation, we introduce the following notations for this chapter. For a function {\small $L(w,\phi)$} and a realization {\small $( w^\prime, \phi^\prime)$}, we define {\small $\nabla_{w} L (w^\prime,\phi^\prime) =\frac{\partial L (w,\phi)}{\partial w}\big |_{(w^\prime, \phi^\prime)} $, $\nabla^2_{w} L( w^\prime, \phi^\prime)=\frac{\partial^2 L(w,\phi)}{\partial w^2}\big |_{(w^\prime, \phi^\prime)}$,$\nabla_\phi\nabla_w L(w^\prime, \phi^\prime) = \frac{\partial^2 L(w,\phi)}{\partial \phi \partial w}\big |_{( w^\prime, \phi^\prime)}$}. The same notations hold for $\phi$.

\section{Problem Formulation}\label{problem}
Let $\mathcal{T}=(\mathcal{T}_i, i\in \mathcal{I})$ be a set of tasks available for meta-learning, where tasks are sampled for use by a distribution of $p_\mathcal{T}$. Each task $\mathcal{T}_i$ contains a training sample set $\mathcal{S}_i$ and a test set $\mathcal{D}_i$. Suppose that meta-learning divides all model parameters into mutually-exclusive sets $(w,\phi)$ as described below.
\begin{list}{$\bullet$}{\topsep=0.8ex \leftmargin=0.5in \rightmargin=0.1in \itemsep =0.05in}
\item $w$ includes task-specific parameters, and meta-learning trains a good initialization of $w$. 

\item $\phi$ includes common parameters shared by all tasks, and meta-learning trains $\phi$ for direct reuse. 
\end{list}
For example, in training neural networks, $w$ often represents the parameters of some partial layers,  and $\phi$ represents the parameters of the remaining inner layers.  The goal of meta-learning here is to jointly learn $w$ as a good initialization parameter and $\phi$ as a reuse parameter, such that $(w_N,\phi)$ performs well on a sampled individual task $\mathcal{T}$, where $w_N$ is the $N$-step gradient descent update of $w$. 
To this end, ANIL solves the following optimization problem with the objective function given by
\begin{align}\label{eq:obj} 
\hspace{-0.3cm}\text{(Meta objective function):}\quad & 
\min_{ w,\phi} L^{meta} ( w, \phi) := \mathbb{E}_{i\sim p_{\mathcal{T}}} L_{\mathcal{D}_i}( w^i_N( w,\phi), \phi), 
\end{align}
where the loss function $L_{\mathcal{D}_i}(w^i_N, \phi):=\sum_{\xi \in \mathcal{D}_i } \ell (w^i_N,\phi; \xi)$ takes the finite-sum form over the test dataset $\mathcal{D}_i$, and the parameter $w^i_N$  for task $i$ is obtained via an inner-loop $N$-step gradient descent update of $w^i_0=w$ (aiming to minimize the task $i$'s loss function $L_{\mathcal{S}_i}(w,\phi)$ over $w$) as given by
\begin{align}\label{inner:gd}
\text{(Inner-loop updates):} \quad w_{m+1}^i =  w_{m}^i - \alpha \nabla_{w} L_{\mathcal{S}_i} (w_{m}^i,\phi),\, m=0,...,N-1. 
\end{align}
Here, $w^i_N(w,\phi)$ explicitly indicates the dependence of $w^i_N$ on $\phi$ and the initialization $w$ via the iterative updates in~\cref{inner:gd}. To draw connection, the problem here reduces to the  MAML~\cite{finn2017model} framework if $w$ includes all training parameters and $\phi$ is empty, i.e., no parameters are reused directly.

\section{ANIL Algorithm}\label{Algorithm}
ANIL~\cite{raghu2019rapid} (as described in \Cref{alg:anil}) solves the problem in \cref{eq:obj} via two nested optimization loops, i.e., inner loop for task-specific adaptation and outer loop for updating meta-initialization and reuse parameters. At the $k$-th outer loop, ANIL samples a batch $\mathcal{B}_k$ of identical and independently distributed (i.i.d.) tasks based on $p_{\mathcal{T}}$. Then, each task in $\mathcal{B}_k$ runs an inner loop of $N$ steps of gradient descent with a stepsize $\alpha$ as in lines $5$-$7$ in~\Cref{alg:anil}, where $w_{k,0}^i =  w_k$ for all tasks $\mathcal{T}_i\in\mathcal{B}_{k}$.

After obtaining the inner-loop output $w^i_{k,N}$ for all tasks, ANIL computes two partial gradients $\frac{\partial L_{\mathcal{D}_i}(w^i_{k,N},\,\phi_k)}{\partial { w_k}}$ and $\frac{\partial L_{\mathcal{D}_i}(w^i_{k,N},\,\phi_k)}{\partial {\phi_k}}$ 
respectively by back-propagation, and updates $w_k$ and $\phi_k$ by stochastic gradient descent as in line $10$ in~\Cref{alg:anil}.
Note that $\phi_k$ and $w_k$ are treated to be mutually-independent during the differentiation process. Due to the nested dependence of $w_{k,N}^i$ on $\phi_k$ and $w_k$, the two partial gradients involve complicated second-order derivatives. Their explicit forms are provided in the following proposition. 
\begin{proposition}\label{le:gd_form}
The partial meta gradients take the following explicit form: 
\begin{align*}
{1)} \quad \frac{\partial L_{\mathcal{D}_i}( w^i_{k,N}, \phi_k)}{\partial w_k} =& \prod_{m=0}^{N-1}(I - \alpha \nabla_w^2L_{\mathcal{S}_i}(w_{k,m}^i,\phi_k)) \nabla_{w} L_{\mathcal{D}_i} (w_{k,N}^i,\phi_k). \nonumber
\\ {2)}\quad \frac{\partial L_{\mathcal{D}_i}( w^i_{k,N}, \phi_k)}{\partial \phi_k} =&
\nabla_\phi L_{\mathcal{D}_i}(w_{k,N}^i,\phi_k) \nonumber
\\ -\alpha \sum_{m=0}^{N-1}\nabla_\phi\nabla_w &L_{\mathcal{S}_i}(w_{k,m}^i,\phi_k) \prod_{j=m+1}^{N-1}(I-\alpha\nabla_w^2L_{\mathcal{S}_i}(w_{k,j}^i,\phi_k))\nabla_w L_{\mathcal{D}_i}(w_{k,N}^i,\phi_k).
\end{align*}
\end{proposition}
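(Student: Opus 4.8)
The plan is to differentiate the composite map $(w,\phi)\mapsto L_{\mathcal{D}_i}(w_{k,N}^i(w,\phi),\phi)$ by the multivariate chain rule, where the only nontrivial ingredients are the Jacobians $\partial w_{k,N}^i/\partial w_k$ and $\partial w_{k,N}^i/\partial \phi_k$ of the inner-loop output with respect to the initialization and the shared parameters. Both Jacobians are obtained by differentiating the inner-loop recursion~\cref{inner:gd} with respect to the relevant variable, which turns the recursion for $w_{k,m}^i$ into a \emph{linear} recursion for its Jacobian; unrolling that linear recursion is precisely what produces the products of $(I-\alpha\nabla_w^2 L_{\mathcal{S}_i})$ factors appearing in~Proposition~\ref{le:gd_form}. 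Throughout I assume $\ell$ (hence $L_{\mathcal{S}_i}$ and $L_{\mathcal{D}_i}$) is twice continuously differentiable, so the chain rule applies and mixed second partials are symmetric.

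First I would prove part 1. Since $w_k$ enters $L_{\mathcal{D}_i}(w_{k,N}^i,\phi_k)$ only through $w_{k,N}^i$ (the explicit second argument $\phi_k$ is independent of $w_k$ under the differentiation convention stated above the proposition), the chain rule gives $\partial L_{\mathcal{D}_i}/\partial w_k = (\partial w_{k,N}^i/\partial w_k)^T \nabla_w L_{\mathcal{D}_i}(w_{k,N}^i,\phi_k)$. Writing $J_m := \partial w_{k,m}^i/\partial w_k$ and differentiating~\cref{inner:gd} yields the homogeneous recursion $J_{m+1} = (I-\alpha\nabla_w^2 L_{\mathcal{S}_i}(w_{k,m}^i,\phi_k))J_m$ with $J_0 = I$, because $w_{k,0}^i = w_k$. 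A short induction on $m$ gives $J_N$ as the product of the shift factors in descending index order; transposing this product and invoking the symmetry of the Hessians $\nabla_w^2 L_{\mathcal{S}_i}$ reverses the order into the ascending form $\prod_{m=0}^{N-1}(I-\alpha\nabla_w^2 L_{\mathcal{S}_i})$ used in the statement, which delivers part 1.

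Next I would prove part 2. Here $\phi_k$ enters $L_{\mathcal{D}_i}(w_{k,N}^i,\phi_k)$ both \emph{directly}, as the second argument, and \emph{indirectly}, through $w_{k,N}^i$, so the chain rule produces two terms: the direct contribution $\nabla_\phi L_{\mathcal{D}_i}(w_{k,N}^i,\phi_k)$ and the indirect contribution $(\partial w_{k,N}^i/\partial \phi_k)^T \nabla_w L_{\mathcal{D}_i}(w_{k,N}^i,\phi_k)$. Setting $\widetilde J_m := \partial w_{k,m}^i/\partial \phi_k$ and differentiating~\cref{inner:gd} with respect to $\phi_k$ --- noting that $\nabla_w L_{\mathcal{S}_i}(w_{k,m}^i,\phi_k)$ depends on $\phi_k$ both through $w_{k,m}^i$ and directly --- gives the \emph{inhomogeneous} linear recursion $\widetilde J_{m+1} = (I-\alpha\nabla_w^2 L_{\mathcal{S}_i}(w_{k,m}^i,\phi_k))\widetilde J_m - \alpha\,\nabla_\phi\nabla_w L_{\mathcal{S}_i}(w_{k,m}^i,\phi_k)$ with $\widetilde J_0 = 0$. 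Unrolling this recursion by induction produces a sum over $m$ of a product of shift factors (over the range $j=m+1,\dots,N-1$) multiplying the mixed derivative $\nabla_\phi\nabla_w L_{\mathcal{S}_i}$; substituting into the indirect term and again using Hessian symmetry to fix the product order reproduces the second displayed formula.

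The main obstacle is purely the bookkeeping of matrix orderings and transposes, and I expect to handle it by explicit induction rather than informal telescoping. The subtle points are that unrolling a left-multiplied recursion yields a descending-order product, that transposition reverses this order, and that symmetry of $\nabla_w^2 L_{\mathcal{S}_i}$ is exactly what restores the ascending convention $\prod_{j=m+1}^{N-1}$; the inhomogeneous case in part 2 is where an off-by-one in the product range or a mis-transposed mixed derivative $\nabla_\phi\nabla_w L_{\mathcal{S}_i}$ would silently corrupt the answer, so I would verify the base cases $N=1$ and $N=2$ against the claimed formula before completing the induction.
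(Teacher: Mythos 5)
Your proposal is correct and follows essentially the same route as the paper's proof: apply the chain rule (using that $\phi_k$ and $w_k$ are treated as mutually independent), differentiate the inner-loop recursion~\cref{inner:gd} to get a linear (homogeneous for $w_k$, inhomogeneous for $\phi_k$) recursion for the Jacobians with initial conditions $I$ and $0$ respectively, and unroll/telescope. The only difference is cosmetic: the paper adopts the layout convention in which $\partial w^i_{k,m+1}/\partial w_k$ right-multiplies the factor $(I-\alpha\nabla_w^2 L_{\mathcal{S}_i})$, so the ascending-order products and the correctly-oriented $\nabla_\phi\nabla_w L_{\mathcal{S}_i}$ appear directly without the transpose-and-Hessian-symmetry step you invoke.
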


 \begin{algorithm}[t]
	\caption{ANIL Algorithm} 
	\label{alg:anil}
	\begin{algorithmic}[1]
		\STATE {\bfseries Input:} Distribution over tasks $p_\mathcal{T}$, inner stepsize $\alpha$, outer stepsize $\beta_w,\beta_\phi$, initialization $ w_0, \phi_0$ 
		\WHILE{not converged}
		\STATE{Sample a mini-batch of i.i.d. tasks $\mathcal{B}_k= \{\mathcal{T}_i\}_{i=1}^B$  based on  the distribution $p_\mathcal{T}$}
		\FOR{each task $\mathcal{T}_i$ in $\mathcal{B}_k$}
		\FOR{$m=0,1,...,N-1$}
		\STATE{Update $w_{k,m+1}^i = w_{k,m}^i - \alpha \nabla_w L_{\mathcal{S}_i}( w_{k,m}^i,\phi_k)$}
		\ENDFOR
		\STATE{Compute gradients $\frac{\partial L_{\mathcal{D}_i}(w^i_{k,N},\phi_k)}{\partial { w_k}},\frac{\partial L_{\mathcal{D}_i}(w^i_{k,N},\phi_k)}{\partial {\phi_k}}$ by back-propagation}
		\ENDFOR
                 \STATE{Update parameters $ w_k$ and $\phi_k$ by mini-batch SGD:
                 \begin{align*}
                 w_{k+1}=  w_{k} - \frac{\beta_w}{B}\sum_{i\in\mathcal{B}_k} \frac{\partial L_{\mathcal{D}_i}( w^i_{k,N},\phi_k)}{\partial {w_k}}, \quad \phi_{k+1}= \phi_{k} - \frac{\beta_\phi}{B}\sum_{i\in\mathcal{B}_k} \frac{\partial L_{\mathcal{D}_i}( w^i_{k,N},\phi_k)}{\partial {\phi_k}}
                 \end{align*}
                 \vspace{-0.2cm}
                 }
                                  \STATE{Update $k \leftarrow k+1$}
		\ENDWHILE
	\end{algorithmic}
	\end{algorithm}
\section{Technical Assumptions and Definitions}\label{se:pre} 
We let $z=(w,\phi)\in\mathbb{R}^{n}$ denote all parameters. For simplicity, suppose $\mathcal{S}_i$ and $\mathcal{D}_i$ for all $i\in\mathcal{I}$ have sizes of $S$ and $D$, respectively. In this paper, we consider the following types of loss functions.
\begin{list}{$\bullet$}{\topsep=0.7ex \leftmargin=0.32in \rightmargin=0.2in \itemsep =0.05in}

\item The outer-loop meta loss function in \cref{eq:obj} takes the finite-sum form as $L_{\mathcal{D}_i}(w^i_N, \phi):=\sum_{\xi \in \mathcal{D}_i } \ell (w^i_N,\phi; \xi)$. It is generally nonconvex in terms of both $w$ and $\phi$.

\item The inner-loop loss function $L_{\mathcal{S}_i}(w,\phi)$ with respect to $w$ has two cases: strongly-convexity and nonconvexity. The strongly-convex case occurs often when $w$ corresponds to parameters of the last {\em linear} layer of a neural network, so that the loss function of such a $w$ is naturally chosen to be a quadratic function or a logistic loss with a strongly convex regularizer~\cite{bertinetto2018meta,lee2019meta}. The nonconvex case can occur if $w$ represents parameters of more than one layers (e.g., last two layers~\cite{raghu2019rapid}). As we show later, such geometries affect the convergence rate significantly.
\vspace{0.1cm}
\end{list}
Since the objective function $L^{meta}(w,\phi)$ in~\cref{eq:obj} is generally nonconvex, we use the gradient norm as the convergence criterion, which is standard in nonconvex optimization.
\begin{definition}
We say that $(\bar{w},\bar{\phi})$ is an $\epsilon$-accurate solution for the meta optimization problem in~\cref{eq:obj} if 
{\small $\,\mathbb{E}\Big\| \frac{\partial L^{meta}(\bar{w},\bar{\phi})}{\partial \bar w}\Big\|^2 <\epsilon$ and $\mathbb{E}\Big\| \frac{\partial L^{meta}(\bar{w},\bar{\phi})}{\partial \bar \phi}\Big\|^2 <\epsilon$}.
\end{definition}
We further take the following standard assumptions on the {\em individual} loss function for each task, which have been commonly adopted in conventional minimization problems~\cite{ghadimi2013stochastic,wang2018spiderboost,ji2019improved} and min-max optimization~\cite{lin2020near} as well as the MAML-type optimization~\cite{finn2019online,ji2020multi}. 
\begin{assum}\label{assm:smooth}
The loss function $L_{\mathcal{S}_i}(z)$ and $L_{\mathcal{D}_i}(z)$ for each task $\mathcal{T}_i$ satisfy:
\begin{list}{$\bullet$}{\topsep=0.7ex \leftmargin=0.3in \rightmargin=0.5in \itemsep =0.05in}
\vspace{0.2cm}
\item $L_{\mathcal{S}_i}(z)$ and $L_{\mathcal{D}_i}(z)$ are $L$-smooth, i.e., for any $z,z^\prime \in \mathbb{R}^{n}$, $$\|\nabla L_{\mathcal{S}_i}(z) - \nabla L_{\mathcal{S}_i}(z^\prime) \| \leq L\|z-z^\prime\|, \|\nabla L_{\mathcal{D}_i}(z) - \nabla L_{\mathcal{D}_i}(z^\prime) \| \leq L\|z-z^\prime\|.$$
\item $L_{\mathcal{D}_i}(z)$ is $M$-Lipschitz, i.e., for any $z,z^\prime\in \mathbb{R}^{n}$,  $\| L_{\mathcal{D}_i}(z) -  L_{\mathcal{D}_i}(z^\prime) \| \leq M\|z-z^\prime\|$.
\end{list}
\end{assum}

Note that we {\em do not} impose the function Lipschitz assumption (i.e., item 2 in~\Cref{assm:smooth}) on the inner-loop loss function $L_{S_i}(z)$.
As shown in~\Cref{le:gd_form},  the partial meta gradients involve two types of high-order derivatives {\small$\nabla_w^2 L_{\mathcal{S}_i}(\cdot,\cdot)$} and {\small$\nabla_\phi\nabla_w L_{\mathcal{S}_i}(\cdot,\cdot)$}. 
The following assumption imposes a Lipschitz condition for these two high-order derivatives, which has been widely adopted in optimization problems that involve two sets of parameters, e.g, bi-level programming~\cite{ghadimi2018approximation}.
\begin{assum}\label{assm:second} $\nabla_w^2 L_{\mathcal{S}_i}(z)$ and $\nabla_\phi\nabla_w L_{\mathcal{S}_i}(z)$ are $\rho$-Lipschitz and $\tau$-Lipschitz, i.e., 
\begin{list}{$\bullet$}{\topsep=0.7ex \leftmargin=0.3in \rightmargin=0.5in \itemsep =0.05in}
\vspace{0.2cm}
\item For any $z,z^\prime\in\mathbb{R}^{n}$, $\|\nabla^2_w L_{\mathcal{S}_i}(z) - \nabla^2_w L_{\mathcal{S}_i}(z^\prime)\| \leq \rho \|z - z^\prime\|$.
\item For any $z,z^\prime\in\mathbb{R}^{n}$, $\|\nabla_\phi \nabla_w L_{\mathcal{S}_i}(z) - \nabla_\phi\nabla_w L_{\mathcal{S}_i}(z^\prime)\| \leq \tau\|z-z^\prime\|$.
\end{list}
\end{assum}


\section{Convergence of ANIL with Strongly-Convex Inner Loop}\label{se:strong-convex}
We first analyze the convergence rate of ANIL for the case where the inner-loop loss function $L_{\mathcal{S}_i}(\cdot,\phi)$ satisfies the following strongly-convex condition. 
\begin{definition}
$L_{\mathcal{S}_i}(w,\phi)$ is $\mu$-strongly convex with respect to $w$ if for any $w,w^\prime,\phi$,
 \begin{align*}
 L_{\mathcal{S}_i}(w^\prime,\phi) \geq L_{\mathcal{S}_i}(w,\phi) + \big\langle w^\prime -w , \nabla_w L_{\mathcal{S}_i}(w,\phi) \big\rangle + \frac{\mu}{2} \|w-w^\prime\|^2.
 \end{align*} 
 \end{definition}

Based on  \Cref{le:gd_form}, we characterize the smoothness property of $L^{meta}(w,\phi)$ in~\cref{eq:obj} as below. 
\begin{proposition}\label{le:strong-convex}
Suppose Assumptions~\ref{assm:smooth} and \ref{assm:second} hold
and choose the inner stepsize $\alpha=\frac{\mu}{L^2}$. Then, for any two points $(w_1,\phi_1), (w_2,\phi_2)\in\mathbb{R}^n$, we have
\begin{align*}
1)  \quad \Big\| &\frac{\partial L^{meta}( w,\phi)}{\partial w} \Big |_{(w_1,\phi_1)} - \frac{\partial L^{meta}( w,\phi)}{\partial w} \Big |_{(w_2,\phi_2)} \Big\| 
\\&\leq \text{\normalfont poly}(L,M,\rho)\frac{L}{\mu}(1-\alpha \mu)^{N}\|w_1-w_2\|
\\&\quad\quad+\text{\normalfont poly}(L,M,\rho)\left(\frac{L}{\mu}+1\right)N(1-\alpha \mu)^{N}\|\phi_1-\phi_2\|,
\\ 2)\quad  \Big\| &\frac{\partial L^{meta}( w,\phi)}{\partial \phi} \Big |_{(w_1,\phi_1)} -  \frac{\partial L^{meta}( w,\phi)}{\partial \phi} \Big |_{(w_2,\phi_2)} \Big\|
\\&\leq \text{\normalfont poly}(L,M,\tau,\rho) \frac{L}{\mu}(1-\alpha\mu)^{\frac{N}{2}}\|w_1-w_2\|+\text{\normalfont poly}(L,M,\rho) \frac{L^3}{\mu^3}\|\phi_1-\phi_2\|,
\end{align*}
where $\tau,\rho,L$ and $M$ are given in Assumptions~\ref{assm:smooth} and \ref{assm:second}, and $\text{\normalfont poly}(\cdot)$ denotes the polynomial function of the parameters with the explicit forms 
given in~\Cref{append:str}.
\end{proposition}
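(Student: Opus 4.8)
The plan is to prove both Lipschitz estimates task-by-task for the individual partial gradients in \Cref{le:gd_form}, and then transfer them to $L^{meta}(w,\phi)=\mathbb{E}_{i}L_{\mathcal{D}_i}(w_N^i(w,\phi),\phi)$ via the triangle inequality $\|\mathbb{E}_i[X_i]-\mathbb{E}_i[Y_i]\|\leq \mathbb{E}_i\|X_i-Y_i\|$, since the constants $L,M,\rho,\tau,\mu$ are uniform across tasks. Fix a task $i$ and consider the two inner-loop trajectories $\{w_m^{i,1}\}$ and $\{w_m^{i,2}\}$ generated by \cref{inner:gd} from $(w_1,\phi_1)$ and $(w_2,\phi_2)$ respectively. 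The first ingredient I would establish is a contraction estimate: since $L_{\mathcal{S}_i}(\cdot,\phi)$ is $\mu$-strongly convex and $L$-smooth in $w$, we have $\mu I\preceq \nabla_w^2 L_{\mathcal{S}_i}\preceq L I$, so with $\alpha=\mu/L^2\leq 1/L$ each map $w\mapsto w-\alpha\nabla_w L_{\mathcal{S}_i}(w,\phi)$ is a contraction with $\|I-\alpha\nabla_w^2 L_{\mathcal{S}_i}\|\leq 1-\alpha\mu$; hence every partial product $\prod_{j=m}^{N-1}(I-\alpha\nabla_w^2 L_{\mathcal{S}_i}(w_j^i,\phi))$ has spectral norm at most $(1-\alpha\mu)^{N-m}$.

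The second ingredient is a path-perturbation recursion. Differencing one inner step and splitting the gradient gap into its $w$- and $\phi$-parts (controlled respectively by strong convexity/smoothness and by $L$-smoothness) yields $\|w_{m+1}^{i,1}-w_{m+1}^{i,2}\|\leq(1-\alpha\mu)\|w_m^{i,1}-w_m^{i,2}\|+\alpha L\|\phi_1-\phi_2\|$, which unrolls to
\begin{align*}
\|w_m^{i,1}-w_m^{i,2}\|\leq (1-\alpha\mu)^m\|w_1-w_2\|+\tfrac{L}{\mu}\big(1-(1-\alpha\mu)^m\big)\|\phi_1-\phi_2\|.
\end{align*}
This explains why the $\phi$-dependence accumulates with a factor $L/\mu$ along the trajectory.

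For part 1 I would write the $w$-gradient gap as $P_i^1\nabla_w L_{\mathcal{D}_i}(w_N^{i,1},\phi_1)-P_i^2\nabla_w L_{\mathcal{D}_i}(w_N^{i,2},\phi_2)$ with $P_i^r=\prod_{m=0}^{N-1}(I-\alpha\nabla_w^2 L_{\mathcal{S}_i}(w_m^{i,r},\phi_r))$, and apply $\|AB-A'B'\|\leq\|A-A'\|\|B\|+\|A'\|\|B-B'\|$. The gradient factor is bounded by $M$ and its difference is $L$-controlled by $\|w_N^{i,1}-w_N^{i,2}\|+\|\phi_1-\phi_2\|$; the matrix $P_i^2$ contributes $(1-\alpha\mu)^N$, while $\|P_i^1-P_i^2\|$ is handled by a telescoping product-difference expansion whose $m$-th term carries $\alpha\rho(1-\alpha\mu)^{N-1}$ (from $\rho$-Lipschitzness of $\nabla_w^2 L_{\mathcal{S}_i}$) times the path gap. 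Substituting the unrolled bound above, summing the geometric series, and using $\alpha=\mu/L^2$ to convert prefactors into $\mathrm{poly}(L,M,\rho)$ and the advertised powers of $L/\mu$ gives both stated terms. Part 2 follows the same template applied to the three-factor summands of $\partial L_{\mathcal{D}_i}/\partial\phi$: the Jacobian $\nabla_\phi\nabla_w L_{\mathcal{S}_i}$ is $L$-bounded and $\tau$-Lipschitz, the inner products contract, and $\nabla_w L_{\mathcal{D}_i}$ is $M$-bounded and $L$-Lipschitz, so a triple product-difference decomposition plus the path bounds yields the result; here the explicit $\nabla_\phi L_{\mathcal{D}_i}$ term contributes an uncontracted $\phi$-Lipschitz piece, which is why the $\|\phi_1-\phi_2\|$ coefficient carries $L^3/\mu^3$ with no decay, while the $\|w_1-w_2\|$ coefficient retains $(1-\alpha\mu)^{N/2}$.

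The main obstacle will be the bookkeeping in part 2: the $\phi$-gradient mixes an order-$N$ sum of triple products, and each source of $\phi$-dependence (the explicit $\nabla_\phi L_{\mathcal{D}_i}$ and $\nabla_\phi\nabla_w L_{\mathcal{S}_i}$ factors, together with the implicit dependence entering every iterate $w_m^{i,r}$ through the path recursion) must be tracked separately and recombined. The delicate point is obtaining the exponent $N/2$ rather than $N-1$ on the $w$-dependence: the naive estimate produces a prefactor $N(1-\alpha\mu)^{N-1}$, which I would tame by splitting $(1-\alpha\mu)^{N-1}=(1-\alpha\mu)^{N/2-1}(1-\alpha\mu)^{N/2}$ and absorbing $N(1-\alpha\mu)^{N/2-1}\le \sup_{t\ge0}t\,(1-\alpha\mu)^{t/2-1}$ (finite and polynomial in $L/\mu$) into the $\mathrm{poly}(\cdot)$ factor. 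Verifying that every emergent constant is genuinely a polynomial in $L,M,\rho,\tau$ times a power of $L/\mu$, rather than a quantity that grows with $N$, is the crux of the argument.
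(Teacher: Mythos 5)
Your proposal is correct and its architecture coincides with the paper's proof in \Cref{append:str}: a path-perturbation lemma for the inner-loop iterates, a telescoping product-difference bound using the $\rho$-Lipschitzness of $\nabla_w^2 L_{\mathcal{S}_i}$, the two-factor decomposition $\|AB-A'B'\|\le\|A-A'\|\|B\|+\|A'\|\|B-B'\|$ for part 1, the three-factor ($R_1+R_2+R_3$) decomposition plus the explicit $\nabla_\phi L_{\mathcal{D}_i}$ term for part 2, and the Jensen/triangle transfer from $L_{\mathcal{D}_i}$ to $L^{meta}$. The one genuine difference is the path lemma itself. You bound each step by the spectral norm of $I-\alpha\nabla_w^2 L_{\mathcal{S}_i}$, giving per-step contraction $1-\alpha\mu$ (valid, since $\alpha=\mu/L^2\le 1/L$ and $\mu I\preceq\nabla_w^2 L_{\mathcal{S}_i}\preceq LI$), whereas the paper's \Cref{le:support} uses the co-coercivity expansion $\|w-w'-\alpha(\nabla_w L(w)-\nabla_w L(w'))\|^2\le(1-2\alpha\mu+\alpha^2L^2)\|w-w'\|^2$, which at $\alpha=\mu/L^2$ gives only $\sqrt{1-\alpha\mu}$ per step. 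This is why your raw exponents differ from the paper's: in part 2 the paper's mixed sums $\sum_m(1-\alpha\mu)^{N-1-m}(1-\alpha\mu)^{m/2}$ automatically factor as $(1-\alpha\mu)^{(N-1)/2}$ times a geometric sum bounded by $2/(\alpha\mu)$, so the $N/2$ exponent appears with no $N$ prefactor and no extra step; your tighter path bound instead produces $N(1-\alpha\mu)^{N-1}$, which is a strictly stronger bound but must be massaged back into the advertised form via the absorption $\sup_{t\ge 0} t(1-\alpha\mu)^{t/2-1}=O(L^2/\mu^2)$ that you correctly identify — and that quantity is of the same order as constants (e.g., $1/(\alpha\mu)$ and $1/(1-\alpha\mu)$ factors) the paper already carries in \Cref{append:str}, so it legitimately sits inside $\text{\normalfont poly}(\cdot)$. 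In short: your route yields quantitatively sharper intermediate estimates at the cost of one extra normalization step, while the paper's deliberately looser contraction makes the stated exponents fall out directly.
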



 \Cref{le:strong-convex} indicates that increasing the number $N$ of inner-loop gradient descent steps yields much {\em smaller} smoothness parameters for the meta objective function $L^{meta}(w,\phi)$. From an optimization perspective, this allows a larger stepsize chosen for the outer-loop meta optimization, and hence yields a faster convergence rate, as characterized in the following convergence theorem.
\begin{theorem}\label{th:strong-convex} 
Suppose Assumptions~\ref{assm:smooth} and \ref{assm:second} hold, and apply \Cref{alg:anil} to solve the meta  problem~\cref{eq:obj} with stepsizes $\beta_w={\small\text{\normalfont poly}(\rho,\tau,L,M) \mu^2(1-\frac{\mu^2}{L^2})^{-\frac{N}{2}}}$ and $\beta_\phi={\small\text{\normalfont poly}(\rho,\tau,L,M)\mu^{3}}$. 
Then, ANIL  finds a point $(w,\phi)\in\big\{(w_k,\phi_k),k=0,...,K-1\big\}$ such that 
\begin{align*}
\text{\normalfont (Rate w.r.t. $w$)}\;\mathbb{E}\left\| \frac{\partial L^{meta}(w,\phi)}{\partial w}\right\|^2  \leq& \mathcal{O}\Bigg( \frac{  \frac{1}{\mu^2}\left(1-\frac{\mu^2}{L^2}\right)^{\frac{N}{2}}}{K}      +\frac{\frac{1}{\mu} \left(1-\frac{\mu^2}{L^2}\right)^{\frac{N}{2}}}{B}     \Bigg), \nonumber
\\\text{\normalfont (Rate w.r.t. $\phi$)}\; \mathbb{E}\left\| \frac{\partial L^{meta}(w,\phi)}{\partial \phi}\right\|^2  \leq & \mathcal{O}\bigg(\frac{ \frac{1}{\mu^2}\left(1-\frac{\mu^2}{L^2}\right)^{\frac{N}{2}}+\frac{1}{\mu^3}}{K} +\frac{\frac{1}{\mu}\left(1-\frac{\mu^2}{L^2}\right)^{\frac{3N}{2}}+\frac{1}{\mu^2}}{B}\bigg).
\end{align*}
To achieve an $\epsilon$-accurate point, ANIL requires at most {\small $\mathcal{O}\big(\frac{N}{\mu^{4}}\big(1-\frac{\mu^2}{L^2}\big)^{N/2}+\frac{N}{\mu^{5}}\big)\epsilon^{-2}$} 
 gradient evaluations in $w$,  {\small $\mathcal{O}\big(\frac{1}{\mu^{4}}\big(1-\frac{\mu^2}{L^2}\big)^{N/2}+\frac{1}{\mu^{5}}\big)\epsilon^{-2}$} gradient evaluations in $\phi$,  and {\small $\mathcal{O}\big(\frac{N}{\mu^{4}}\big(1-\frac{\mu^2}{L^2}\big)^{N/2}+\frac{N}{\mu^{5}}\big)\epsilon^{-2}$} second-order derivative evaluations of {\small$\nabla_w^2 L_{\mathcal{S}_i}(\cdot,\cdot)$} and {\small$\nabla_\phi\nabla_w L_{\mathcal{S}_i}(\cdot,\cdot)$}.
\end{theorem}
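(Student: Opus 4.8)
The plan is to treat \Cref{th:strong-convex} as a two-block nonconvex stochastic optimization problem, where the blocks $w$ and $\phi$ are updated simultaneously by mini-batch SGD with separate stepsizes $\beta_w,\beta_\phi$. The two main ingredients are already prepared: \Cref{le:strong-convex} supplies block-wise smoothness of the meta objective $L^{meta}$, in which the self-smoothness of the $w$-block carries the contraction factor $(1-\mu^2/L^2)^N$ while the $\phi$-block self-smoothness stays at the constant level $L^3/\mu^3$; and \Cref{le:gd_form} gives the explicit forms of the two partial meta gradients through which the stochastic estimators are built. Because the inner loop uses the full task loss $L_{\mathcal{S}_i}$ and the outer update averages over an i.i.d.\ task batch $\mathcal{B}_k$, the mini-batch meta gradient is an \emph{unbiased} estimate of $\nabla L^{meta}(w_k,\phi_k)$, so the analysis mirrors the finite-sum MAML argument of \Cref{mainth:offline} rather than the biased resampling case. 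The overall route is: (i) a joint descent inequality; (ii) a second-moment bound on the stochastic meta gradient; (iii) telescoping with the stated stepsizes; and (iv) translating the rate into gradient/Hessian complexity.

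First I would fix $\alpha=\mu/L^2$, so that strong convexity and $L$-smoothness of $L_{\mathcal{S}_i}(\cdot,\phi)$ give $0\preceq I-\alpha\nabla_w^2 L_{\mathcal{S}_i}\preceq(1-\alpha\mu)I$, hence $\big\|\prod_{m=0}^{N-1}(I-\alpha\nabla_w^2 L_{\mathcal{S}_i}(w_{k,m}^i,\phi_k))\big\|\leq(1-\mu^2/L^2)^N$. This spectral contraction is the engine behind every decaying factor in the statement. Using the block smoothness from \Cref{le:strong-convex}, I would write a one-step descent of the form $L^{meta}(z_{k+1})\leq L^{meta}(z_k) - (\text{gain from }w) - (\text{gain from }\phi) + (\text{coupling and variance})$, where the coupling terms arise because the Lipschitz bound for $\partial_w L^{meta}$ depends on $\|\phi_1-\phi_2\|$ (with coefficient $\sim N(1-\alpha\mu)^N$) and that for $\partial_\phi L^{meta}$ depends on $\|w_1-w_2\|$ (with coefficient $\sim(1-\alpha\mu)^{N/2}$). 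Substituting the updates $w_{k+1}-w_k=-\beta_w\widehat G^w_k$ and $\phi_{k+1}-\phi_k=-\beta_\phi\widehat G^\phi_k$, the cross terms are controlled by Young's inequality, and the stepsizes $\beta_w\sim\mu^2(1-\mu^2/L^2)^{-N/2}$ and $\beta_\phi\sim\mu^3$ are chosen precisely so that the coupling is dominated by the negative self-gain after balancing, leaving a net decrease proportional to $\beta_w\|\partial_w L^{meta}\|^2+\beta_\phi\|\partial_\phi L^{meta}\|^2$.

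Next I would bound $\mathbb{E}\|\widehat G^w_k\|^2$ and $\mathbb{E}\|\widehat G^\phi_k\|^2$ conditioned on $z_k$, analogous to \Cref{finite:seconderr}. Inserting the product bound above into the explicit gradient forms of \Cref{le:gd_form}, together with the $M$-Lipschitz and Lipschitz-Hessian assumptions (\Cref{assm:smooth,assm:second}) and the variance of task sampling, yields bounds of the shape $\mathbb{E}\|\widehat G^\bullet_k\|^2\leq A_1\|\partial_\bullet L^{meta}(z_k)\|^2+A_2/B$, where the constants $A_1,A_2$ again inherit the contraction factors $(1-\mu^2/L^2)^{\,cN}$ for the appropriate power $c$. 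Because the estimator is unbiased, no bias term appears, which keeps the variance contribution at the clean $\mathcal{O}(1/B)$ level. Combining this with the descent inequality, summing over $k=0,\dots,K-1$, and dividing by $K$ produces $\frac1K\sum_k\mathbb{E}\|\partial_w L^{meta}(z_k)\|^2$ and its $\phi$-counterpart, bounded by $\mathcal{O}\big(\frac{1}{\beta_w K}(L^{meta}(z_0)-\inf L^{meta})\big)+\mathcal{O}(\text{variance}/B)$; plugging in $\beta_w$ gives the advertised $\frac1{\mu^2}(1-\mu^2/L^2)^{N/2}/K$ term, and similarly for $\phi$. Finally, since each inner loop costs $N$ gradient and $N$ second-order evaluations per task and the outer batch has size $B$, with $K=\mathcal{O}(\epsilon^{-2})$ iterations the complexity counts follow directly.

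I expect the main obstacle to be step (i): establishing a single, clean joint descent that correctly couples the two blocks. The asymmetry between the blocks---$w$ enjoying the exponentially small self-smoothness $(1-\mu^2/L^2)^N$ and the half-power cross coefficient $(1-\mu^2/L^2)^{N/2}$, versus $\phi$ with a constant-level self-smoothness $L^3/\mu^3$---forces a careful matching of $\beta_w$ and $\beta_\phi$ so that the Young's-inequality splitting of the cross terms does not overwhelm either negative self-gain. Getting exactly the half-power $(1-\mu^2/L^2)^{N/2}$ scaling of $\beta_w$ (rather than the naive full power suggested by the $w$-self-smoothness) is the delicate point, and it is precisely this balance that produces the ``exponential speedup in $N$ that then saturates'' behaviour claimed for the strongly-convex inner loop.
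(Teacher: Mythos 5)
Your proposal follows essentially the same route as the paper's proof: a joint descent inequality assembled from the block-smoothness constants of \Cref{le:strong-convex} with the cross term split by Cauchy--Schwarz/Young, unbiasedness of the mini-batch meta gradient, second-moment bounds obtained from \Cref{le:gd_form} together with the $M$-Lipschitz assumption, and stepsizes $\beta_w\sim 1/(L_w+L_\phi)$, $\beta_\phi\sim 1/(L_\phi+L_\phi^\prime)$ whose half-power factor $(1-\mu^2/L^2)^{-N/2}$ indeed comes from the cross-coupling constant dominating the $w$-block self-smoothness, exactly as you identify. The only slip is in the final accounting: the paper chooses $K$ and $B$ each of order $\epsilon^{-1}$ (so that both the $1/K$ and $1/B$ error terms fall below $\epsilon$), and the $\epsilon^{-2}$ in the stated complexity is the product $KB$ multiplying the per-iteration cost, not the iteration count $K$ itself.
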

\Cref{th:strong-convex} shows that ANIL converges sublinearly with the number $K$ of outer-loop meta iterations, and the convergence error decays sublinearly with the number $B$ of sampled tasks, which are consistent with the nonconvex nature of the meta objective function. The convergence rate is further significantly affected by the number $N$ of the inner-loop steps. Specifically, with respect to $w$, ANIL converges exponentially fast as $N$ increases due to the strong convexity of the inner-loop loss. With respect to $\phi$, the convergence rate depends on two components: an exponential decay term with $N$ and an $N$-independent term. As a result, the overall convergence of meta optimization becomes faster as $N$ increases, and then saturates for large enough $N$ as the second component starts to dominate. 
This is demonstrated by our experiments in \Cref{exp:anilsahjb}.

\Cref{th:strong-convex} further indicates that ANIL attains an $\epsilon$-accurate stationary point with the gradient and second-order evaluations at the order of $\mathcal{O}(\epsilon^{-2})$ due to nonconvexity of the meta objective function. The computational cost is further significantly affected by inner-loop steps. Specifically, the gradient and second-order derivative evaluations contain two terms: an exponential decay term with $N$ and a linear growth term with $N$. As a result, the computational cost of ANIL initially decreases because the exponential reduction dominates the linear growth. But when $N$ is large enough, the exponential decay saturates and the linear growth dominates, and hence the overall computational cost of ANIL gets higher as $N$ further increases. 
 This suggests to take a moderate but not too large $N$ in practice to achieve an optimized performance, which we also demonstrate in our experiments in \Cref{exp:anilsahjb}.

\section{Convergence of ANIL with Nonconvex Inner Loop}\label{sec:nonconvex}
In this section, we study the case, in which the inner-loop loss function $L_{\mathcal{S}_i}(\cdot,\phi)$ is  nonconvex. 
The following proposition characterizes the smoothness of $L^{meta}(w,\phi)$ in~\cref{eq:obj}.

\begin{proposition}\label{le:smooth_nonconvex}
Suppose Assumptions~\ref{assm:smooth} and \ref{assm:second} hold, and choose the inner-loop stepsize {\small$\alpha <\mathcal{O}(\frac{1}{N})$}.  
Then, for any two points $(w_1,\phi_1)$, $(w_2,\phi_2)\in\mathbb{R}^n$, we have 
\begin{align*}
1) \Big\| &\frac{\partial L^{meta}( w,\phi)}{\partial w} \Big |_{(w_1,\phi_1)} -  \frac{\partial L^{meta}( w,\phi)}{\partial w} \Big |_{(w_2,\phi_2)} \Big\|
\\&\quad\quad\quad\quad \leq\text{\normalfont poly}(M,\rho,\alpha,L) N(\|w_1-w_2\| + \|\phi_1-\phi_2\|),
\\ 2) \Big\| &\frac{\partial L^{meta}( w,\phi)}{\partial \phi} \Big |_{(w_1,\phi_1)} -  \frac{\partial L^{meta}( w,\phi)}{\partial \phi} \Big |_{(w_2,\phi_2)} \Big\|
\\&\quad\quad\quad\quad\leq \text{\normalfont poly}(M,\rho,\tau,\alpha,L) N(\|w_1-w_2\| + \|\phi_1-\phi_2\|),
\end{align*}
where $\tau,\rho,L$ and $M$ are given by Assumptions~\ref{assm:smooth} and \ref{assm:second}, and $\text{\normalfont poly}(\cdot)$ denotes the polynomial function of the parameters with the explicit forms of the smoothness parameters given in~\Cref{appen:smooth_nonconvex}.

\end{proposition}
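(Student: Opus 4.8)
The plan is to establish both Lipschitz bounds directly from the explicit meta-gradient formulas in \Cref{le:gd_form}, treating each factor in those formulas separately and bounding its variation as we move from $z_1:=(w_1,\phi_1)$ to $z_2:=(w_2,\phi_2)$. First I would record the elementary bounds that hold under Assumptions~\ref{assm:smooth} and~\ref{assm:second}: the outer gradients satisfy $\|\nabla_w L_{\mathcal{D}_i}\|\leq M$ and $\|\nabla_\phi L_{\mathcal{D}_i}\|\leq M$ (from $M$-Lipschitzness), each inner factor obeys $\|I-\alpha\nabla_w^2 L_{\mathcal{S}_i}\|\leq 1+\alpha L$, and $\|\nabla_\phi\nabla_w L_{\mathcal{S}_i}\|\leq L$. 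The decisive structural fact is that the stepsize choice $\alpha<\mathcal{O}(1/N)$ forces $(1+\alpha L)^N\leq e^{\alpha L N}=\mathcal{O}(1)$, so every product of inner factors appearing in \Cref{le:gd_form} has operator norm bounded by a \emph{constant} rather than growing with $N$; this is exactly what replaces the contraction term $(1-\alpha\mu)^N$ of the strongly-convex analysis in \Cref{le:strong-convex}.

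Next I would prove a path-perturbation lemma controlling how the inner iterates $w_m^i(z)$ depend on the initialization. Differencing the inner update in \cref{inner:gd} and applying $L$-smoothness of $L_{\mathcal{S}_i}$ to the two arguments gives the recursion
\begin{align*}
\|w_{m+1}^i(z_1) - w_{m+1}^i(z_2)\| \leq (1+\alpha L)\|w_{m}^i(z_1) - w_{m}^i(z_2)\| + \alpha L\|\phi_1-\phi_2\|,
\end{align*}
which unrolls to $\|w_m^i(z_1)-w_m^i(z_2)\|\leq (1+\alpha L)^m\|w_1-w_2\| + ((1+\alpha L)^m-1)\|\phi_1-\phi_2\|$. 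Under $\alpha<\mathcal{O}(1/N)$ both prefactors are $\mathcal{O}(1)$ for all $m\leq N$, so each inner iterate is a uniformly (in $N$) Lipschitz function of $(w,\phi)$. This lemma feeds directly into the Hessian- and Jacobian-difference estimates in the next step.

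With these ingredients, I would bound each gradient difference by the standard product-telescoping inequality $\|\prod_m a_m-\prod_m b_m\|\leq\sum_m(\prod_{l<m}\|b_l\|)\|a_m-b_m\|(\prod_{l>m}\|a_l\|)$, already exploited for MAML in \Cref{th:lipshiz}. For $\partial L^{meta}/\partial w$ I would split the difference into (i) the variation of the matrix product $\prod_{m}(I-\alpha\nabla_w^2 L_{\mathcal{S}_i}(w_m^i,\phi))$ and (ii) the variation of $\nabla_w L_{\mathcal{D}_i}(w_N^i,\phi)$. For (i), each factor difference $\|\nabla_w^2 L_{\mathcal{S}_i}(w_m^i(z_1),\phi_1)-\nabla_w^2 L_{\mathcal{S}_i}(w_m^i(z_2),\phi_2)\|$ is bounded via $\rho$-Lipschitzness by $\rho(\|w_m^i(z_1)-w_m^i(z_2)\|+\|\phi_1-\phi_2\|)$, i.e.\ by $\mathcal{O}(\rho)(\|w_1-w_2\|+\|\phi_1-\phi_2\|)$ using the path lemma; the surrounding products contribute only the constant $(1+\alpha L)^{N-1}$, and summing over the $N$ positions produces the explicit linear factor $N$ (each summand carrying the $\alpha\rho$ from the factor). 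For $\partial L^{meta}/\partial\phi$ I would handle the free term $\nabla_\phi L_{\mathcal{D}_i}$ by $L$-smoothness and the path lemma, and the summation $\sum_{m=0}^{N-1}\nabla_\phi\nabla_w L_{\mathcal{S}_i}(w_m^i,\phi)\prod_{j>m}(\cdots)\nabla_w L_{\mathcal{D}_i}$ term by term, now additionally invoking the $\tau$-Lipschitzness of $\nabla_\phi\nabla_w L_{\mathcal{S}_i}$; since this expression already contains an outer sum of $N$ terms, the careful bookkeeping ensures the interior product perturbation stays $\mathcal{O}(1)$ so that no second power of $N$ appears. Finally I would take $\mathbb{E}_i$ and apply Jensen's inequality to pass to $L^{meta}$.

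The hard part will be the bookkeeping in this last step for the $\phi$-gradient: its formula is a sum of $N$ terms, each itself a product of $N-m$ inner factors, so a naive estimate threatens an $N^2$ or even exponential dependence. The crux is to show that the path-perturbation coefficient $(1+\alpha L)^m-1=\mathcal{O}(\alpha L N)=\mathcal{O}(1)$ under $\alpha<\mathcal{O}(1/N)$, and that multiplying it by the per-factor $\alpha\rho$ (or $\alpha\tau$) and summing over at most $N$ positions collapses to a single explicit factor $N$ times a polynomial in $(M,\rho,\tau,\alpha,L)$. Verifying that all these $\alpha$- and $N$-factors combine to give exactly the claimed $N\cdot\text{poly}(\cdot)$ scaling, with the polynomial constants absorbing every $(1+\alpha L)^N=\mathcal{O}(1)$ bound, is the delicate calculation; the explicit forms of these constants are what I would defer to \Cref{appen:smooth_nonconvex}.
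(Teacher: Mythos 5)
Your proposal is correct and follows essentially the same route as the paper's proof in \Cref{appen:smooth_nonconvex}: the same path-perturbation recursion $\|w_{m+1}^i(z_1)-w_{m+1}^i(z_2)\|\leq(1+\alpha L)\|w_m^i(z_1)-w_m^i(z_2)\|+\alpha L\|\phi_1-\phi_2\|$ (the paper's eq.~\eqref{eq:wn}), the same product-telescoping bound on the Hessian-product difference (the paper writes it as the recursion $P_t\leq\alpha\rho(1+\alpha L)^{2t}(\cdot)+(1+\alpha L)P_{t-1}$, which is your telescoping inequality unrolled), the same $R_1+R_2+R_3$ split with $\tau$- and $\rho$-Lipschitzness for the $\phi$-gradient's sum, and the same final Jensen step. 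Your bookkeeping observation is also the one the paper relies on: each summand carries an $\alpha$ so that $\alpha\sum_{m=0}^{N-1}(\cdot)=\mathcal{O}(\alpha N)\cdot\text{poly}=\mathcal{O}(1)\cdot\text{poly}$ under $\alpha<\mathcal{O}(1/N)$, while all $(1+\alpha L)^{\mathcal{O}(N)}$ factors stay constant, so no $N^2$ term arises.
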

\Cref{le:smooth_nonconvex} indicates that the meta objective function $L^{meta}(w,\phi)$ is smooth with respect to both $w$ and $\phi$ with their smoothness parameters increasing linearly with $N$. Hence, $N$ should be chosen to be small so that the outer-loop meta optimization can take reasonably large stepsize to run fast. Such a property is in sharp contrast to the strongly-convex case in which the corresponding smoothness parameters decrease with $N$.


 The following theorem provides the convergence rate of ANIL under the nonconvex inner-loop loss.
\begin{theorem}\label{th:nonconvexcaonidaxeas} 
Under the setting of~\Cref{le:smooth_nonconvex}, and apply \Cref{alg:anil} to solve the meta optimization problem in~\cref{eq:obj} with the stepsizes {\small $\beta_w = \beta_\phi=\text{\normalfont poly}(\rho,\tau,M,\alpha,L) N^{-1}$}. 
Then, ANIL  finds a point $(w,\phi)\in\{(w_k,\phi_k),k=0,...,K-1\}$ such that 
\begin{align*}
\mathbb{E}\left\| \frac{\partial L^{meta}(w,\phi)}{\partial w}\right\|^2  \leq& \,\mathcal{O}\bigg(  \frac{N}{K} + \frac{N}{B}  \bigg), \qquad
\mathbb{E}\left\| \frac{\partial L^{meta}(w,\phi)}{\partial \phi}\right\|^2  \leq \,\mathcal{O}\bigg(  \frac{N}{K} + \frac{N}{B}  \bigg).
\end{align*}
To achieve an $\epsilon$-accurate point, ANIL requires at most {\small $\mathcal{O}(N^2\epsilon^{-2})$} gradient evaluations in $w$, {\small $\mathcal{O}(N\epsilon^{-2})$} gradient evaluations in $\phi$,  and {\small $\mathcal{O}(N^2\epsilon^{-2})$} second-order derivative evaluations.
\end{theorem}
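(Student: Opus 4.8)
The plan is to run a standard nonconvex descent analysis on the joint variable $z_k=(w_k,\phi_k)$, viewing ANIL's outer loop as mini-batch SGD on $L^{meta}$ whose per-task gradient is the block-structured object in Proposition~\ref{le:gd_form}. First I would convert the block-wise smoothness of Proposition~\ref{le:smooth_nonconvex} into a single joint statement: since each partial gradient is Lipschitz in both $w$ and $\phi$ with a constant of order $N$, combining the two bounds (using $\|a\|+\|b\|\leq\sqrt{2}\,\|(a,b)\|$) gives $\|\nabla L^{meta}(z_1)-\nabla L^{meta}(z_2)\|\leq L_{meta}\|z_1-z_2\|$ with $L_{meta}=\mathcal{O}(N)$, where $\nabla L^{meta}(z):=(\partial L^{meta}/\partial w,\,\partial L^{meta}/\partial\phi)$. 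This is exactly what licenses the choice $\beta_w=\beta_\phi=\Theta(1/L_{meta})=\Theta(1/N)$ matching the theorem's $N^{-1}$ scaling.

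Next I would analyze the stochastic gradient. Because the inner loop uses exact gradients of $L_{\mathcal{S}_i}$ and the only randomness in the outer update is the i.i.d.\ task sampling $\mathcal{B}_k\sim p_{\mathcal{T}}$, the estimator $\widehat G_{\mathcal{B}_k}:=\frac{1}{B}\sum_{i\in\mathcal{B}_k}(\partial L_{\mathcal{D}_i}/\partial w_k,\,\partial L_{\mathcal{D}_i}/\partial\phi_k)$ is an unbiased estimate of $\nabla L^{meta}(z_k)$ conditioned on $z_k$. I then need a second-moment bound of the form $\mathbb{E}\|\widehat G_i(z_k)\|^2\leq A_1\|\nabla L^{meta}(z_k)\|^2+A_2$. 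To establish it I use the explicit forms in Proposition~\ref{le:gd_form}: bound the operator $\prod_m(I-\alpha\nabla_w^2 L_{\mathcal{S}_i})$ by $(1+\alpha L)^N=\mathcal{O}(1)$ (finite since $\alpha<\mathcal{O}(1/N)$), use $M$-Lipschitzness to bound $\|\nabla_w L_{\mathcal{D}_i}\|$ and $\|\nabla_\phi L_{\mathcal{D}_i}\|$, and control the $N$-term sum in the $\phi$-gradient via triangle/Cauchy--Schwarz inequalities, where each summand carries an $\alpha=\mathcal{O}(1/N)$ factor. This is the step where factors of $N$ must be tracked exactly, and it is what ultimately produces the $N/B$ variance term.

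Then comes the descent step. I would apply the $L_{meta}$-smoothness descent inequality to $z_{k+1}=z_k-\beta\widehat G_{\mathcal{B}_k}$, take conditional expectation, use unbiasedness to generate the $-\beta\|\nabla L^{meta}(z_k)\|^2$ term, and use the second-moment bound together with mini-batch averaging (contributing $1/B$) to control the $\tfrac{L_{meta}\beta^2}{2}\mathbb{E}\|\widehat G_{\mathcal{B}_k}\|^2$ term. Choosing $\beta=\Theta(1/L_{meta})=\Theta(1/N)$ keeps the coefficient of $\|\nabla L^{meta}(z_k)\|^2$ bounded away from zero. Telescoping over $k=0,\dots,K-1$ and dividing by $K$ yields $\frac{1}{K}\sum_k\mathbb{E}\|\nabla L^{meta}(z_k)\|^2\leq\mathcal{O}\big(L_{meta}(L^{meta}(z_0)-\inf_{w,\phi}L^{meta})/K+A_2/B\big)$, and substituting $L_{meta}=\mathcal{O}(N)$ and $A_2=\mathcal{O}(N)$ gives the $\mathcal{O}(N/K+N/B)$ bound; the two block-wise rates follow since each partial-gradient norm is dominated by the joint one. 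The complexity claims then come from picking $K$ and $B$ so that each of $N/K,N/B$ is at most $\epsilon$, and counting per-iteration work: the $N$ inner $w$-gradient steps per task and the $N$-term second-order sum for $\partial/\partial\phi$ account for the extra factor of $N$ in the $w$-gradient and second-order counts relative to the single $\phi$-gradient evaluation.

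The \textbf{main obstacle} will be establishing the second-moment bound on $\widehat G_i$ with the correct (at most linear) dependence on $N$. Unlike the strongly-convex case, the inner iterates $w^i_{k,m}$ do not contract toward a unique minimizer, so there is no decaying factor $(1-\alpha\mu)^m$ to absorb the product operators; everything must instead be controlled purely through $\alpha<\mathcal{O}(1/N)$ keeping $(1+\alpha L)^N$ at constant order, and through carefully tracking how the $N$-fold sum in the $\phi$-gradient interacts with the per-term $\alpha$ factors. A secondary subtlety is that, because the inner loop is deterministic here, the per-task estimator is unbiased and no Hessian-bias term appears (in contrast to the resampling MAML analysis); this simplifies the argument but requires care when invoking the finite-sum structure of $L_{\mathcal{S}_i}$ and $L_{\mathcal{D}_i}$ to confirm the unbiasedness precisely.
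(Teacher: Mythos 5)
Your proposal is correct and follows essentially the same route as the paper's own proof: order-$N$ smoothness from \Cref{le:smooth_nonconvex}, unbiasedness of the task-sampled meta-gradient (deterministic inner loop plus i.i.d.\ task sampling), a per-task second moment controlled through $(1+\alpha L)^{2N}=\mathcal{O}(1)$ and the $M$-Lipschitzness of $L_{\mathcal{D}_i}$ (which in fact yields a uniform bound, so your $A_1$ term can be taken to be zero), and a standard nonconvex SGD descent-plus-telescoping argument with $\beta=\Theta(1/N)$, followed by the same per-iteration work counting ($N$ inner $w$-gradients and an $N$-term second-order sum versus one $\phi$-gradient). The only cosmetic difference is that you fold $(w,\phi)$ into a single variable with one joint smoothness constant---legitimate here precisely because $\beta_w=\beta_\phi$---whereas the paper keeps the two blocks separate and descends through the intermediate point $(w_{k+1},\phi_k)$, absorbing the cross term by Cauchy--Schwarz.
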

\Cref{th:nonconvexcaonidaxeas} shows that ANIL converges sublinearly with $K$, the convergence error decays sublinearly with $B$, and the computational complexity scales at the order of $\mathcal{O}(\epsilon^{-2})$. But the nonconvexity of the inner loop affects the convergence very differently. Specifically, increasing the number $N$ of the inner-loop gradient descent steps yields slower convergence and higher computational complexity. This suggests to choose a relatively small $N$ for an efficient optimization process, which is demonstrated in our experiments in \Cref{exp:anilsahjb}
\section{Comparison of Different Geometries and Algorithms}

In this section, we first compare the performance for ANIL under strongly convex and nonconvex inner-loop loss functions, and then compare the performance between ANIL and MAML.
\begin{table*}[th] 
\vspace{-0.2cm}
	\centering 
	\small
	\caption{Comparison of different geometries on convergence rate and complexity of ANIL. GC: gradient complexity. SOC: second-order complexity. }
	\vspace{0.2cm}
	\begin{tabular}{lccc} \toprule
		{Geometries} &Convergence rate  & GC & SOC  \\   \midrule
		Strongly convex & $\mathcal{O}\Big(\frac{ (1-\xi)^{\frac{N}{2}}+c_k}{K} +\frac{(1-\xi)^{\frac{3N}{2}}+c_b}{B}\Big){\color{red}^\sharp}$ &    $\mathcal{O}\Big(\frac{N((1-\xi)^{\frac{N}{2}}+c_\epsilon)}{\epsilon^2}\Big){\color{red}^\S}$ &$\mathcal{O}\Big(\frac{N((1-\xi)^{\frac{N}{2}}+c_\epsilon)}{\epsilon^2}\Big)$\\   
		Nonconvex &$\mathcal{O}\Big(  \frac{N}{K} + \frac{N}{B}  \Big)$ &    $\mathcal{O}\big(\frac{N^2}{\epsilon^{2}}\big)$ & $\mathcal{O}\big(\frac{N^2}{\epsilon^{2}}\big)$  \\  
		\bottomrule	
		\multicolumn{4}{l}{%
  \begin{minipage}{11cm}%
  \vspace{0.1cm}
    \footnotesize Each order term in the table summarizes the dominant components of both $w$ and $\phi$. \\${\color{red}^\sharp}:$ $\xi=\frac{\mu^2}{L^2}<1$, $c_k,c_b$ are constants.  ${\color{red}^\S}:$ $c_\epsilon$ is constant.
  \end{minipage}%
}\\
	\end{tabular} 
	\label{tas11anil}
	\vspace{-0.2cm}
\end{table*}

{\noindent \bf Comparison for ANIL between strongly convex and nonconvex inner-loop  geometries:} 
Our results in \Cref{se:strong-convex,sec:nonconvex} have showed that the inner-loop geometry can significantly affect the convergence rate and the computational complexity of ANIL. The detailed comparison is provided in~\Cref{tas11anil}. It can be seen that increasing $N$ yields a faster convergence rate  for the strongly-convex inner loop, but a slower convergence rate for the nonconvex inner loop. \Cref{tas11anil} also indicates that increasing $N$ first reduces and then increases the computational complexity for the strongly-convex inner loop, but constantly increases the complexity for the nonconvex inner loop. 


We next provide an intuitive explanation for such different behaviors under these two geometries. For the nonconvex inner loop, $N$ gradient descent iterations starting from two different initializations  likely reach two points that are far away from each other due to the nonconvex landscape so that  the meta objective function can have a large smoothness parameter. Consequently, the stepsize should be small to avoid divergence, which yields slow convergence.
However, for the strongly-convex inner loop, also consider two $N$-step inner-loop gradient descent paths. Due to the strong convexity, they both approach to the same unique optimal point, and hence their corresponding values of the meta objective function are guaranteed to be close to each other as $N$ increases. Thus, increasing $N$ reduces the smoothness parameter, and allows a faster convergence rate. 


\vspace{0.2cm}
{\noindent \bf Comparison between ANIL and MAML:} \cite{raghu2019rapid} empirically showed that ANIL significantly speeds up MAML due to the fact that only a very small subset of parameters go through the inner-loop update. The complexity results in~\Cref{th:strong-convex} and \Cref{th:nonconvexcaonidaxeas} provide theoretical characterization of such an acceleration. To formally compare the performance between ANIL and MAML, let $n_w$ and $n_\phi$ be the dimensions of $w$ and $\phi$, respectively. 
The detailed comparison is provided in~\Cref{table:maml_anil}. 

 

\begin{table*}[th] 
	\centering 
	\caption{Comparison of the computational complexities of ANIL and MAML.}
	\vspace{0.1cm}
	\begin{tabular}{lcc} \toprule
		{Algorithms} & \# of gradient entries ${\color{red}^\sharp }$  & \# of second-order entries ${\color{red}^\S }$  		 \\   \midrule
		MAML~\cite[Theorem 2]{ji2020multi} & $\mathcal{O}\Big(\frac{(Nn_w+Nn_\phi)(1+\kappa L)^N}{\epsilon^{2}}\Big){\color{red}^\aleph }$&    $\mathcal{O}\Big(\frac{(n_w+n_\phi)^2N(1+\kappa L)^N}{\epsilon^{2}}\Big)$	 \\  
		ANIL (Strongly convex)& $\mathcal{O}\Big(\frac{(N n_w+n_\phi)((1-\xi )^{\frac{N}{2}}+c_\epsilon)}{\epsilon^{2}}\Big){\color{red}^\flat }$ &    $\mathcal{O}\Big(\frac{(n^2_w+n_w n_\phi) N ((1-\xi )^{\frac{N}{2}}+ c_\epsilon )}{\epsilon^{2}}\Big)$ \\   
		ANIL (Nonconvex) &$\mathcal{O}\Big(\frac{(N n_w+n_\phi)N}{\epsilon^{2}}\Big)$ &   $\mathcal{O}\Big(\frac{(n^2_w+n_wn_\phi)N^2}{\epsilon^{2}}\Big)$ \\  
		\bottomrule	
		\multicolumn{3}{l}{%
  \begin{minipage}{13cm}%
  \vspace{0.1cm}
    \footnotesize ${\color{red}^\sharp }$: number of evaluations with respect to each dimension of gradient. ${\color{red}^\S}$: number of evaluations with respect to each entry of second-order derivatives.
    \\ ${\color{red}^\aleph }$: $\kappa$ is the inner-loop stepsize used in MAML.
    ${\color{red}^\flat}:$ $\xi=\frac{\mu^2}{L^2}<1$ and  $c_\epsilon$ is a constant.
      \end{minipage}%
}\\
	\end{tabular} 
	\label{table:maml_anil}
	\vspace{-0.4cm}
\end{table*} 

For ANIL with the strongly-convex inner loop, \Cref{table:maml_anil} shows that ANIL requires fewer  gradient and second-order entry evaluations than MAML by a factor of {\small$\mathcal{O}\big(\frac{N n_w+N n_\phi}{N n_w+n_\phi}\big(1+\kappa L\big)^N\big)$} and {\small$\mathcal{O}\big(\frac{n_w+n_\phi}{n_w}\big(1+\kappa L\big)^N\big)$}, respectively. Such improvements are significant because  $n_\phi$ is often much larger than $n_w$.

For nonconvex inner loop, we set $\kappa\leq 1/N$ for MAML~\cite[Corollary 2]{ji2020multi} to be consistent with our analysis for ANIL in \Cref{th:nonconvexcaonidaxeas}. Then,~\Cref{table:maml_anil} indicates that 
ANIL requires fewer gradient and second-order entry computations than MAML by a factor of {\small$\mathcal{O}\big(\frac{Nn_w+Nn_\phi}{Nn_w+n_\phi}\big)$} and {\small$\mathcal{O}\big(\frac{n_w+n_\phi}{n_w}\big)$}.

%
%

\section{Experiments}\label{exp:anilsahjb}
In this section, we validate our theory on the ANIL algorithm over two benchmarks for few-shot multiclass classification, i.e., FC100~\cite{oreshkin2018tadam} and miniImageNet~\cite{vinyals2016matching}. The experimental implementation and the model architectures are adapted from the existing repository~\cite{learn2learn2019} for ANIL. 
We consider a 5-way 5-shot task on both the FC100 and miniImageNet datasets. 
We relegate the introduction of datasets, model architectures and hyper-parameter settings to~\Cref{appen:exp}.
Our experiments aim to explore how the different geometry (i.e., strong convexity and nonconvexity) of the inner loop affects the convergence performance of ANIL. 

\subsection*{ANIL with Strongly-Convex Inner-Loop Loss}

We first validate the convergence results of ANIL under the {\em strongly-convex} inner-loop loss function $L_{\mathcal{S}_i}(\cdot,\phi)$, as we establish in~\Cref{se:strong-convex}. 
Here, we let $w$ be parameters of {\em the last layer} of CNN and $\phi$ be parameters of the remaining inner layers. As in~\cite{bertinetto2018meta,lee2019meta}, the inner-loop loss function adopts $L^2$ regularization on $w$ with a hyper-parameter $\lambda>0$, and hence is {\em strongly convex}. 

  \begin{figure*}[h]
	\centering    
	\subfigure[dataset: FC100 ]{\label{fig1:a}\includegraphics[width=60mm]{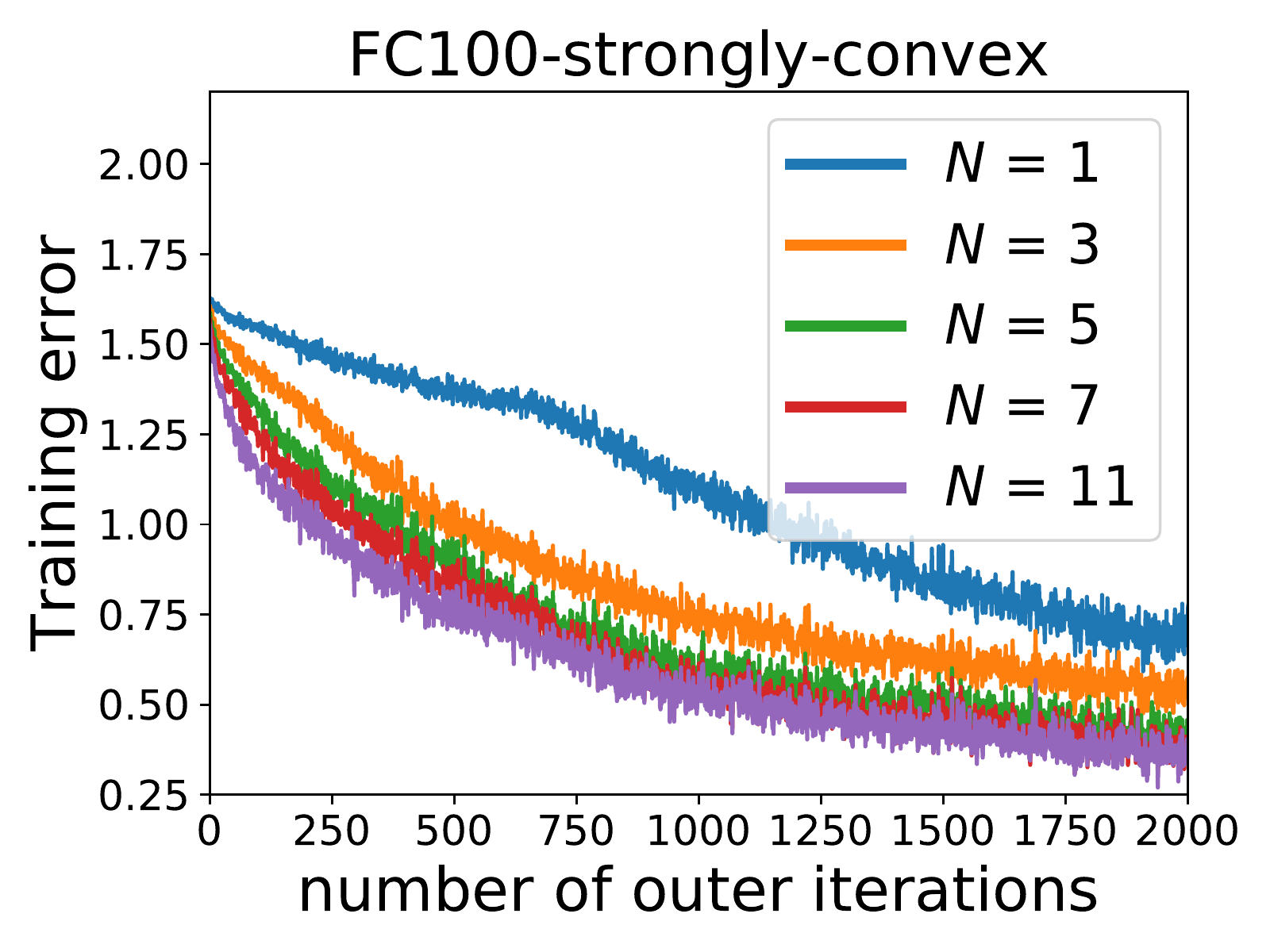}\includegraphics[width=60mm]{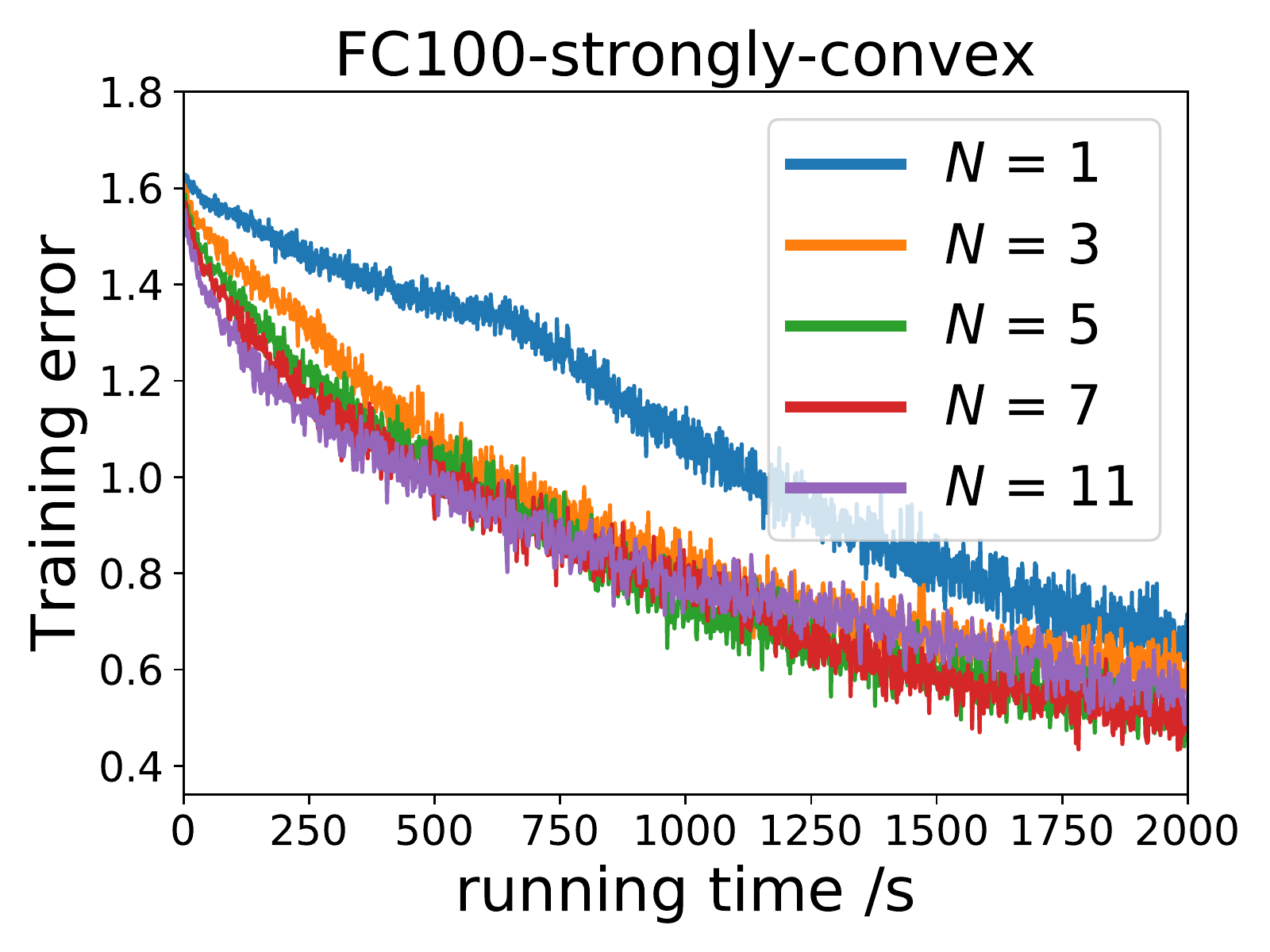}} 
	\subfigure[dataset: miniImageNet ]{\label{fig1:b}\includegraphics[width=60mm]{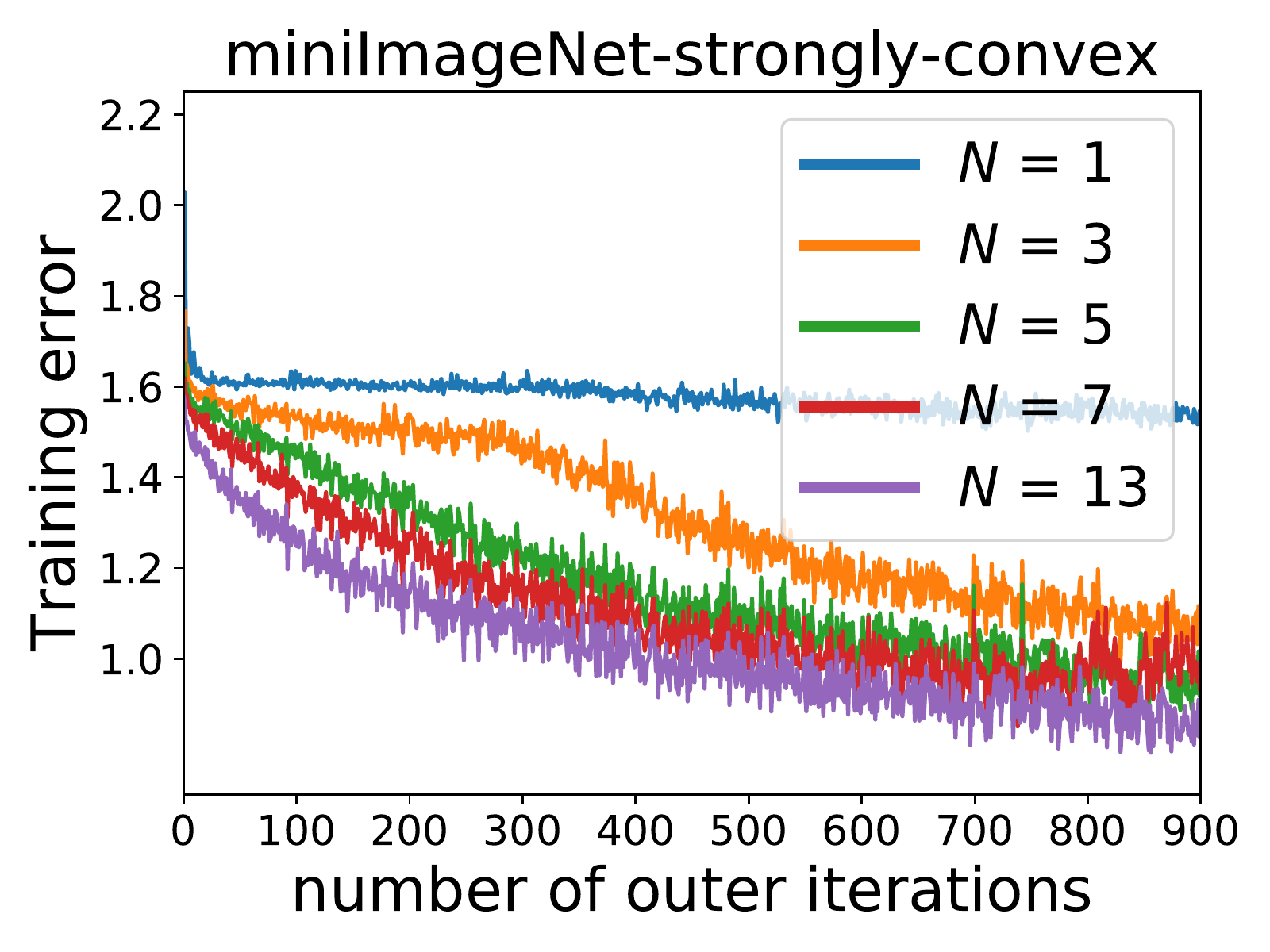}\includegraphics[width=60mm]{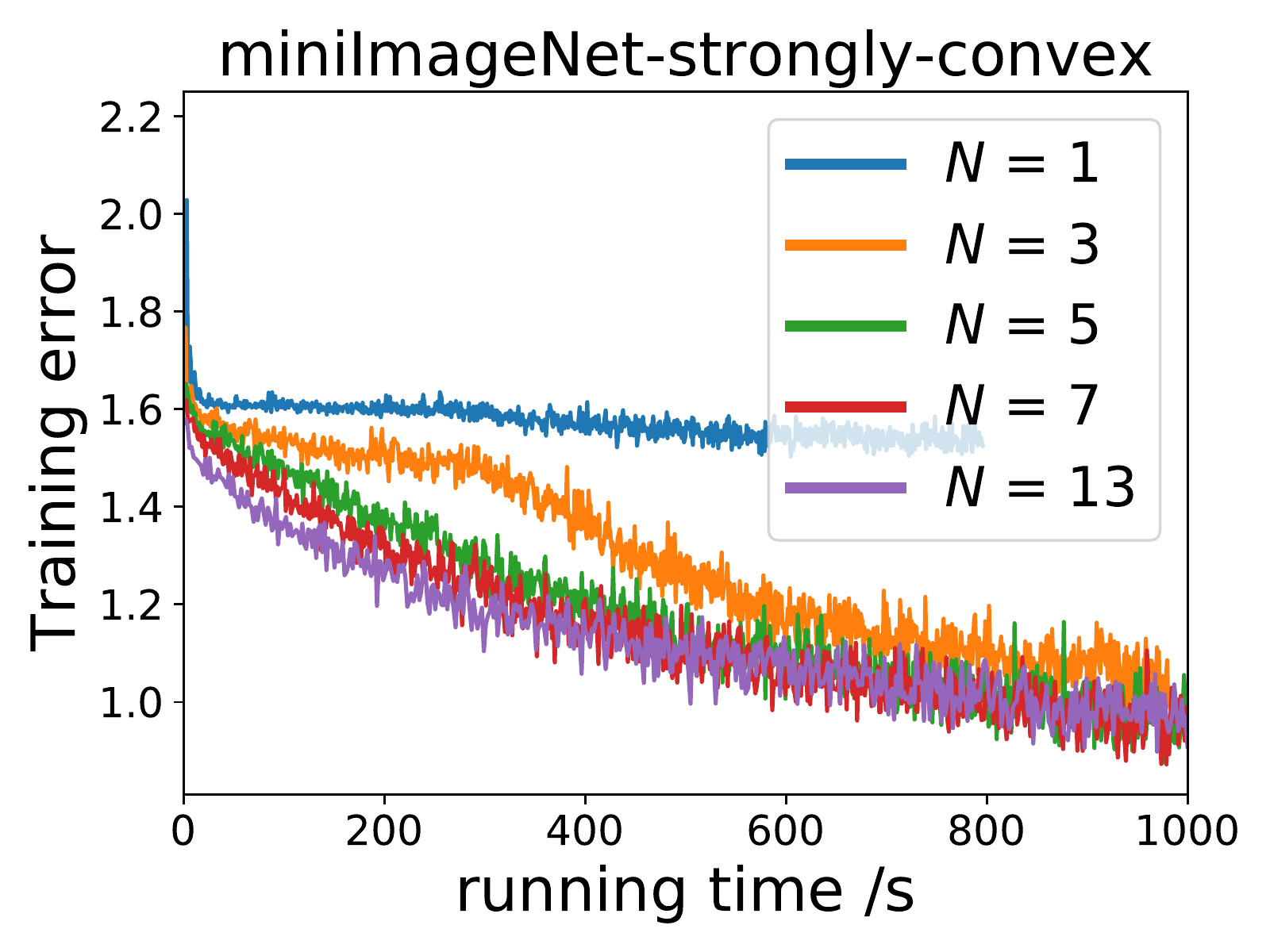}}  
	\caption{Convergence of ANIL with strongly-convex inner-loop loss function.  For each dataset, left plot: training loss v.s. number of total meta iterations; right plot: training loss v.s. running time.}\label{fig:strfc100}
\end{figure*}

For the FC100 dataset, the left plot of~\Cref{fig1:a} shows that the convergence rate in terms of the number of meta outer-loop iterations becomes faster as the inner-loop steps $N$ increases, but nearly saturates at $N=7$ (i.e., there is not much improvement for $N\geq 7$).
This is consistent with \Cref{th:strong-convex}, in which the gradient convergence bound first  
 decays exponentially with $N$, and then the bound in $\phi$ dominates and saturates to a constant. Furthermore, the right plot of~\Cref{fig1:a} shows that the running-time convergence first becomes faster  as $N$ increases up to $N\leq 7$, and then starts to slow down as $N$ further increases. 
This is also captured by~\Cref{th:strong-convex} as follows. 
The computational cost of ANIL initially decreases because the exponential reduction dominates the linear growth in the gradient and second-order derivative evaluations. But when $N$ becomes large enough, the linear growth dominates, and hence the overall computational cost of ANIL gets higher as $N$ further increases. 
Similar nature of convergence behavior is also observed over the miniImageNet dataset as shown in~\Cref{fig1:b}.  Thus, our experiment suggests that for the strongly-convex inner-loop loss, choosing a relatively large $N$ (e.g., $N=7$) 
achieves a good balance between the convergence rate (as well as the convergence error) and the computational complexity.  


\subsection*{ANIL with Nonconvex Inner-Loop Loss}

We next validate the convergence results of ANIL under the {\em nonconvex} inner-loop loss function $L_{\mathcal{S}_i}(\cdot,\phi)$, as we establish in~\Cref{sec:nonconvex}. 
Here, we let $w$ be the parameters of {\em the last two layers with ReLU activation} of CNN (and hence the inner-loop loss is nonconvex with respect to $w$) and $\phi$ be the remaining parameters of the inner layers. 

  \begin{figure*}[h]
	\centering    
	\subfigure[dataset: FC100 ]{\label{fig2:acansicasa}\includegraphics[width=60mm]{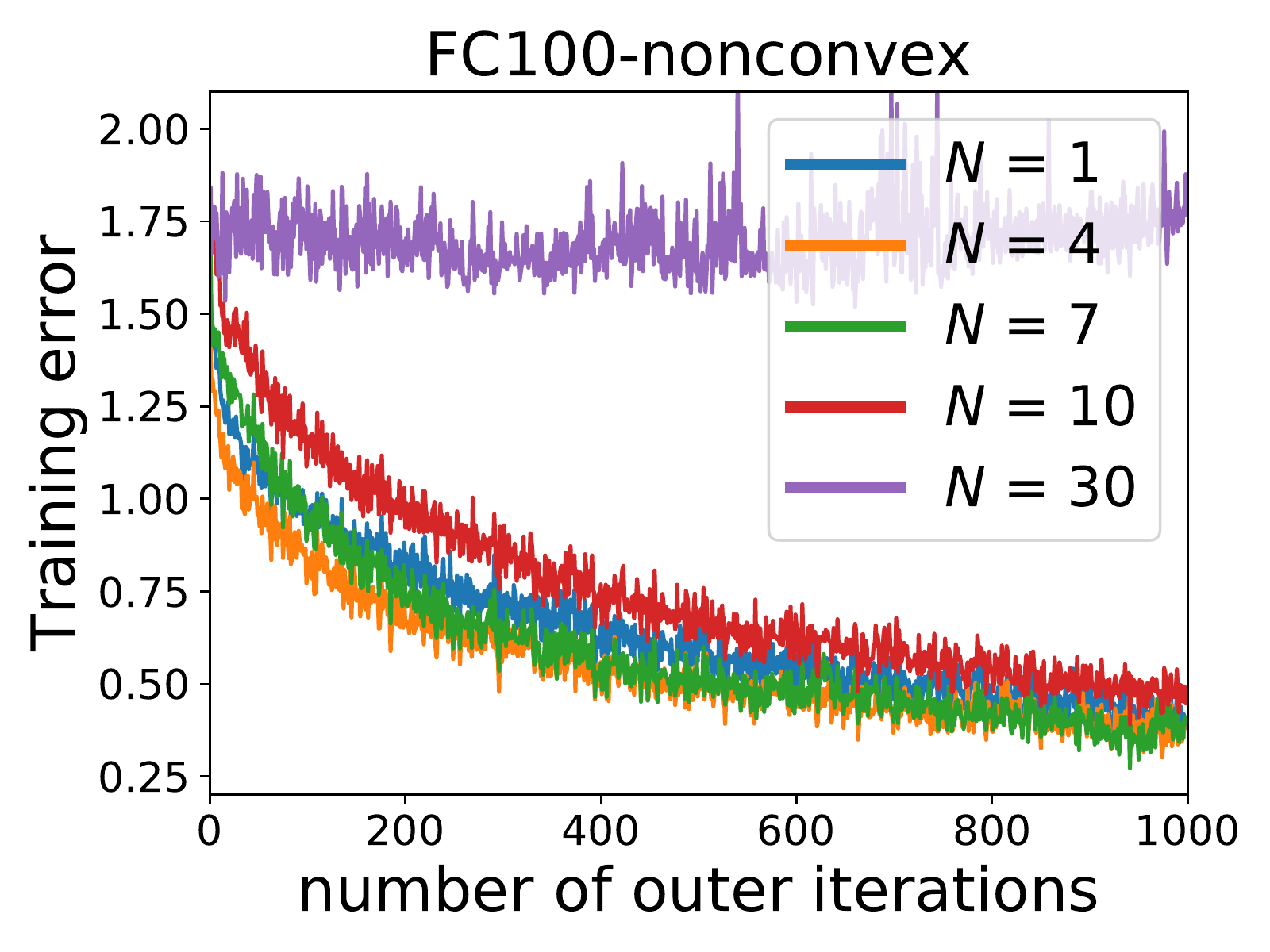}\includegraphics[width=60mm]{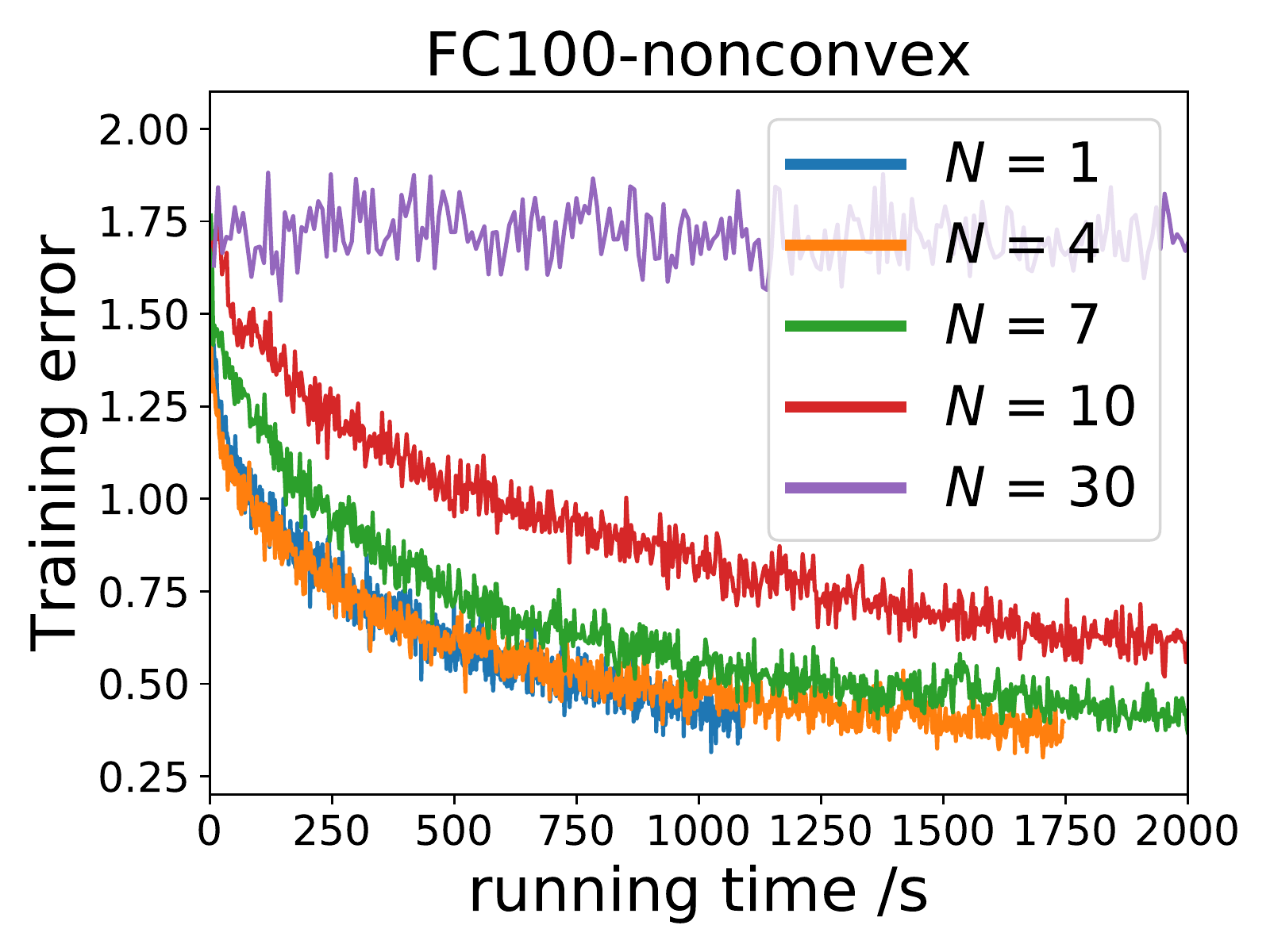}} 
	\subfigure[dataset: miniImageNet ]{\label{fig2:sacasd1asca}\includegraphics[width=60mm]{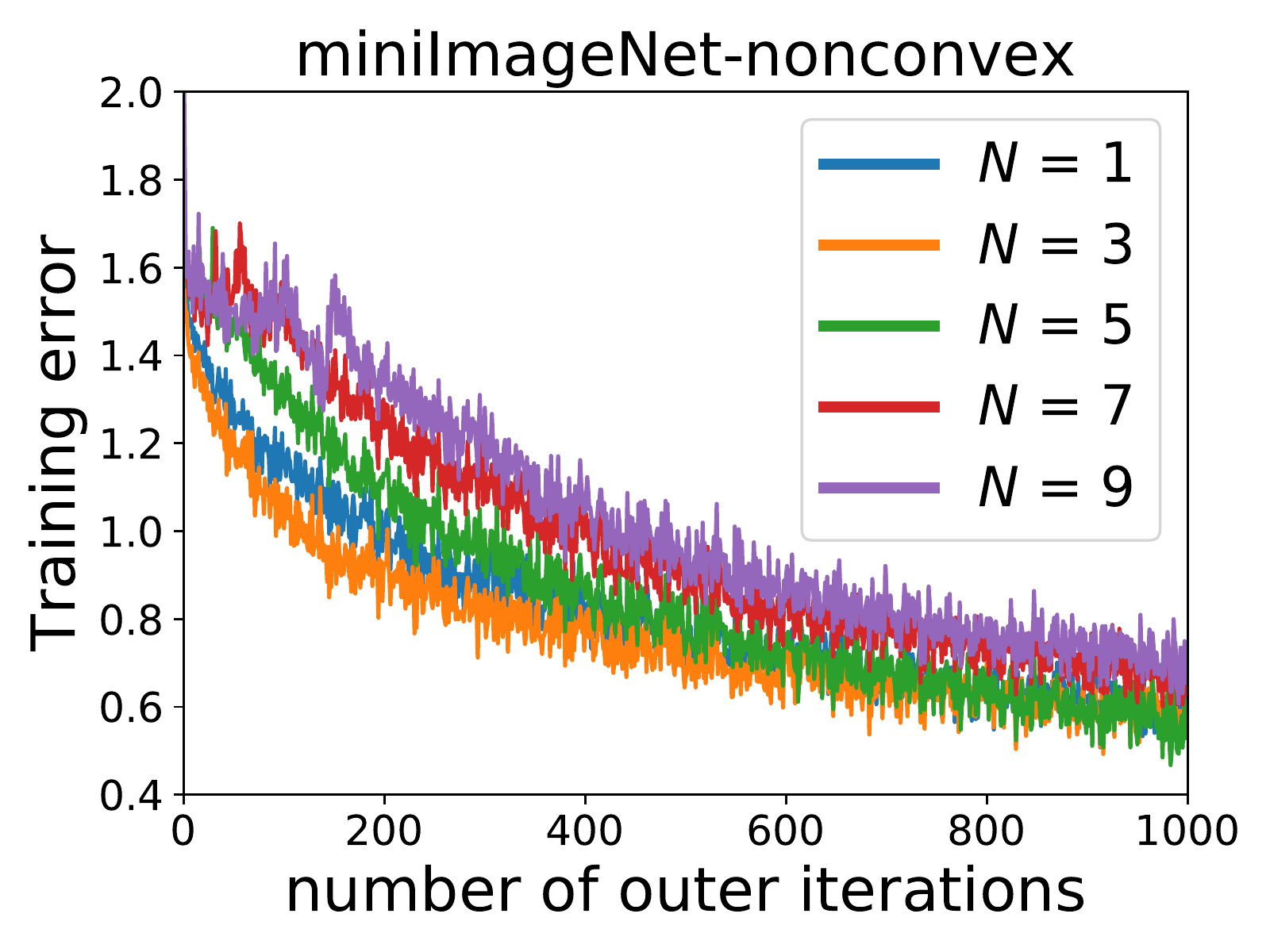}\includegraphics[width=60mm]{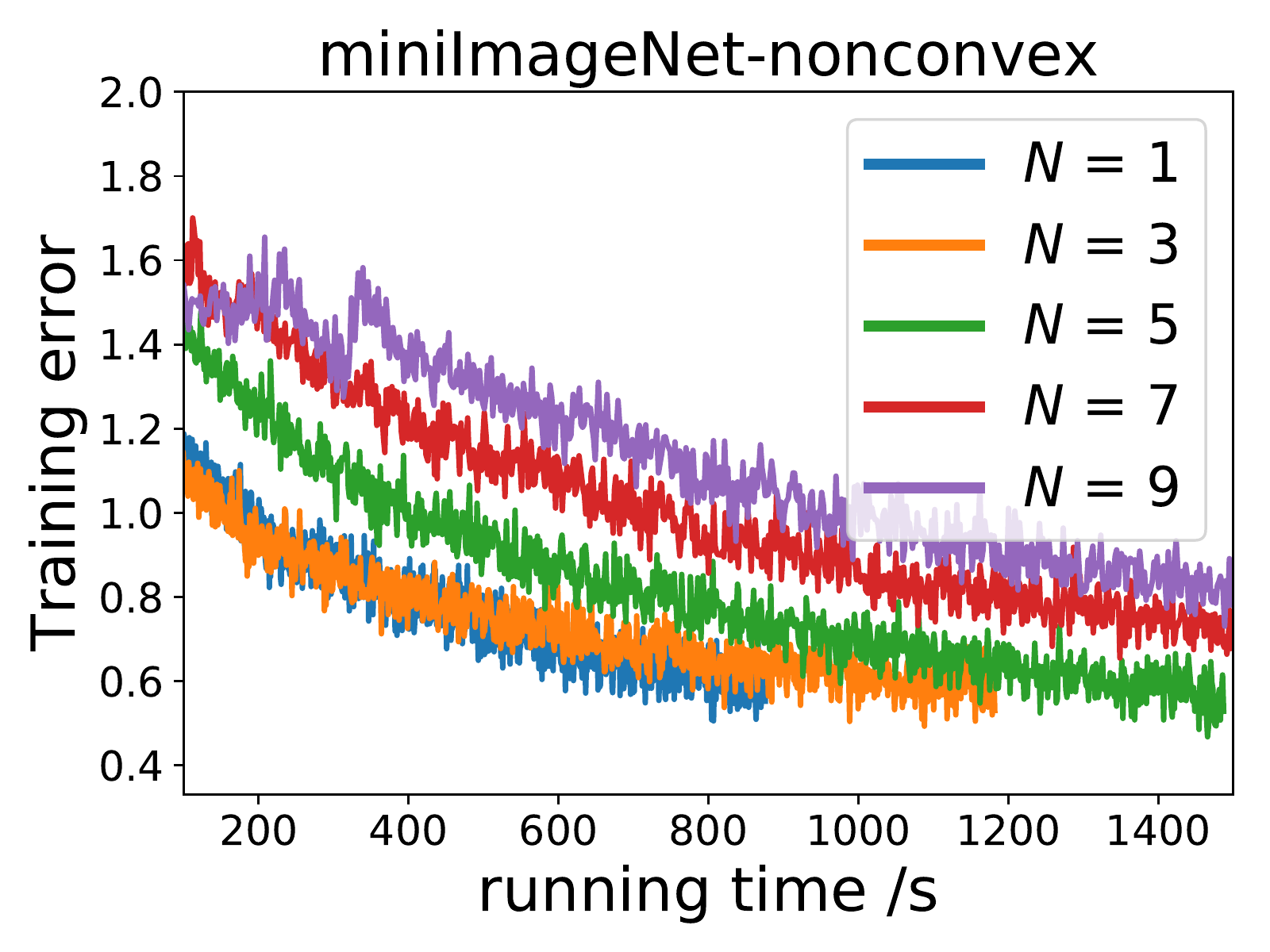}}  
	\caption{Convergence of ANIL with nonconvex inner-loop loss function.  For each dataset, left plot: training loss v.s. number of total meta iterations; right plot: training loss v.s. running time.}\label{figure:resultxiarenbaba}
\end{figure*}

\Cref{figure:resultxiarenbaba} provides the experimental results over the datasets FC100 and miniImageNet. 
For both datasets, the running-time convergence (right plot for each dataset) 
becomes {\em slower} as $N$ increases, where $N=1$ is fastest, and the algorithm even diverges for $N=30$ over the FC100 dataset. The plots are consist with \Cref{th:nonconvexcaonidaxeas}, in which the computational complexity increases as $N$ becomes large. Note that $N=1$ is not the fastest in the left plot for each dataset because the influence of $N$ is more prominent in terms of the running time than the number of outer-loop iterations (which is likely offset by other constant-level parameters for small $N$).
Thus, the optimization perspective here suggests that $N$ should be chosen as small as possible for computational efficiency, which in practice should be jointly considered with other aspects such as generalization for determining $N$.

\section{Summary of Contributions}
In this chapter, we provide theoretical convergence guarantee for the ANIL algorithm under strongly-convex and nonconvex inner-loop loss functions, respectively. Our analysis reveals different performance behaviors of ANIL under the two geometries by characterizing the impact of inner-loop adaptation steps on the overall convergence rate. Our results further provide guidelines for the hyper-parameter selections for ANIL under different inner-loop loss geometries. 

\chapter{Future Work and Other Ph.D. Studies}\label{end.ch}

In this chapter, we first propose several interesting research directions for the future study, and then briefly talk about some of the author's other research works. 

\section{Future Work}
In this section, we provide several potential research directions for future studies. 

\subsection*{Bilevel Optimization beyond Inner Strong Convexity}
Existing convergence rate analysis relies on the assumption that the inner-level function $g(x,\cdot)$ is strongly convex to ensure that 1) the total objective function $\Phi(x)$ is smooth, 2) the convergence rate for the inner-level problem is easy to characterize and 3) the Hessian in the hypergradient is invertible. However, this may sometimes restrict the application of the developed theory in areas where the loss $g(x,\cdot)$ contains multiple solutions, e.g., when $g(x,\cdot)$ is convex or satisfies the Polyak-{\L}ojasiewicz  (PL) inequality. For such cases, some crucial properties of the hypergradient in bilevel optimization do not hold any more. For example, the explicit form of the hypergradient via implicit gradient theorem may not hold because the outer-level objective function $\Phi(x)=f(x,y^*(x))$ is not necessarily differentiable. This means that the convergence metric for conventional smooth bilevel optimization cannot be directly adopted here, and new convergence criterions and analysis frameworks need to be developed. For example, for the nonconvex-convex setting, one possible solution is to measure the convergence in terms of an alternative notion of stationarity~\cite{davis2019stochastic} based on the Moreau envelope, and show that at least one subgradient has $\epsilon$-level magnitude. 

\subsection*{Lower Bound for Nonconvex Bilevel Optimization}
This thesis provides lower bounds for the convex-strongly-convex  and strongly-convex-strongly-convex bilevel optimization. The lower bounds for nonconvex-convex-strongly bilevel optimization problems still remain unexplored. Compared to the minimization optimization, constructing the worst-case instances for nonconvex bilevel optimization can be even harder due to the nested structure of the objective function. For example, \cite{carmon2019lower} provided lower bounds for first-order minimization optimization via constructing weakly convex worst-case objective functions. However, directly using such constructed worst-case instances in bilevel optimization is not applicable because they do not satisfy the nested structure as in bilevel optimization. Then, one possible solution is to add the worst-case instance functions we construct in \Cref{chp_lower_bilevel} with a nonconvex regularizer similarly \cite{carmon2019lower}. However, this requires future efforts to address. 

\subsection*{Optimal Bilevel Optimization Algorithms}
In \cite{ji2021lower}, we show that 
 for the strongly-convex-strongly-convex setting, our proposed AccBiO achieves the optimal complexity for the quadratic case with $\kappa_y\leq\mathcal{O}(1)$, where $\kappa_y$ is the condition number of the inner-level loss function. For the general case, there is a gap of $\mathcal{O}(\kappa_y^{-0.5})$. For the convex-strongly-convex setting, AccBiO is optimal for the quadratic case with $ \kappa_y\leq \mathcal{O}(1)$, and there is a gap of $\mathcal{O}(\kappa_y^{-0.5})$ for the general case. Such a gap is mainly due to the large smoothness parameter of the overall objective function. We note that a similar issue occurs for minimax optimization, which has been addressed by \cite{lin2020near} using an accelerated proximal point for inner-level problem and based on Sion's minimax theorem $\min_x\max_y f(x,y) =\max_y\min_x f(x,y)$. 
 However, as mentioned before, this scheme may not work here because the roles of variables $x$ and $y$ are unchangeable for bilevel optimization, i.e., Sion's theorem does work here. Then, another possibility is to develop a single-loop bilevel optimization by regarding $x$ and $y$ as a concatenated vector $z=(x,y)$, and then directly applying accelerated gradient methods to $z$. However, this still requires great efforts to the asymmetric between the outer and inner variables $x$ and $y$. 


\subsection*{Application of Our Lower Bound Analysis}
We note that some of our analysis can be applied to other problem domains such as minimax optimization. For example, our lower-bounding technique for \Cref{main:convex} can be extended to {\bf convex-concave} or {\bf convex-strongly-concave minimax} optimization, where the objective function $f(x,y)$ satisfies the general smoothness property as in \cref{def:first} with the general smoothness parameters $L_x,L_{xy}, L_y\geq 0$. The resulting lower bound will be different from that in \cite{ouyang2019lower}, which considered a special case with $L_y=0$ and the convergence is measured in terms of the suboptimality gap $\mathcal{O}(\Phi(x)-\Phi(x^*))$ rather than the gradient norm $\|\nabla\Phi(x)\|$ considered in this paper. Thus, such an extension will serve as a new contribution to lower complexity bounds for minimax optimization.

\section{Other Ph.D. Research}
To provide a neat version of thesis with closely correlated topics, this thesis does not include all of the author's works. We briefly talk about some representatives of the author's other research works~\cite{ji2018minimax,ji2021understanding,ji2020learning,wang2019spiderboost,ji2019improved,ji2020history,xu2021will,quan2018lru,tan2018resource,guan2020robust,zhang2020improving,zhang2020boosting,zhou2020proximal} as follows. 

 \vspace{0.2cm}
\noindent{\bf 1) Fundamental Limits of Generative adversarial networks (GANs)~(reference~\cite{ji2018minimax}):} 
This work developed a new theoretical framework to characterize the generalization error of GAN training from an information theoretic viewpoint. 
We first established a better convergence rate of the empirical estimator than the existing one, which captures much more refined dependence on the neural network parameters. Second, by Le Cam's method with various new technical developments, we further provided the first known lower bound on the minimax estimation error.
Combining the two steps then establishes that the GANs' framework is statistically optimal, which provides a theoretical foundation for the success of GAN training in practice. 

 \vspace{0.2cm}
\noindent{\bf 2) Generalization of GANs~(reference~\cite{ji2021understanding}):}
This work investigates the estimation and generalization errors of GAN training. On the statistical side, we develop an upper bound as well as a minimax lower bound on the estimation error  for training GANs. The upper  bound incorporates the roles of both the discriminator and the generator of GANs, and matches the minimax lower bound in terms of the sample size and the norm of the parameter matrices of neural networks under ReLU activation. On the algorithmic side,   
we develop a generalization error bound for the stochastic gradient method (SGM) in training GANs. Such a bound justifies the generalization ability of the GAN training via SGM after multiple passes over the data and reflects the interplay between the discriminator and the generator. 
Our results imply that the training of the generator requires more samples than the training of the discriminator. The experiments validate our theoretical results.


\vspace{0.2cm}
\noindent{\bf 3) Enhanced Matrix Completion via Pairwise Penalties (reference~\cite{ji2020learning}):} Low-rank matrix completion (MC) has achieved great success in many real-world data applications including movie recommendation and image restoration.  To fully empower pairwise learning for matrix completion, 
we propose a general optimization framework that allows a rich class of (non-)convex pairwise penalty functions, and develop a new and efficient algorithm with a theoretical convergence guarantee. 
The proposed framework shows superior performance in various applications including movie recommendation and data subgrouping. 

\vspace{0.2cm}
\noindent{\bf 4) Asymptotic Miss Ratio of LRU Caching with Consistent Hashing (reference~\cite{ji2018asymptotic}):} 
To efficiently scale data caching infrastructure to support emerging big data applications, many caching systems rely on consistent
hashing to group
a large number of servers to form a cooperative cluster.
These servers are organized together according to a random hash function.
They jointly provide a unified but distributed hash table to serve swift and voluminous
data item requests. 
In this work, we derive the asymptotic miss ratio of data item requests on a LRU cluster
with consistent hashing.
We show that these individual cache spaces on different servers  
can be effectively viewed as if
they could be pooled together 
to form a single virtual LRU cache space parametrized by an appropriate cache size.  This equivalence can be established rigorously
under the condition that the cache sizes of
the individual servers are large. For typical data caching systems this condition is common.  
Our theoretical framework provides a convenient abstraction that can directly apply the results from the simpler
single LRU cache to the more complex LRU cluster with consistent hashing.  

 \vspace{0.2cm}
\noindent{\bf 5) Variance Reduced Zeroth-Order Optimization~(reference~\cite{ji2019improved}):}
This work addresses several open issues in zeroth-order optimization. First, all existing SVRG-type zeroth-order algorithms suffer from worse function query complexities than either zeroth-order gradient descent (ZO-GD) or stochastic gradient descent (ZO-SGD). In this work, we propose a new algorithm ZO-SVRG-Coord-Rand and develop a new analysis for an existing ZO-SVRG-Coord algorithm proposed in~\cite{liu2018zeroth}, and show that both ZO-SVRG-Coord-Rand and ZO-SVRG-Coord (under our new analysis) outperform other exiting SVRG-type zeroth-order methods as well as ZO-GD and ZO-SGD. Second, the existing SPIDER-type algorithm SPIDER-SZO \cite{fang2018spider} has superior theoretical performance, but suffers from the generation of a large number of Gaussian random variables as well as a $\sqrt{\epsilon}$-level stepsize in practice. In this work, we develop a new algorithm ZO-SPIDER-Coord, which is free from Gaussian variable generation and allows a large constant stepsize while maintaining the same convergence rate and query complexity.

 \vspace{0.2cm}
\noindent{\bf 6) History-Gradient Aided Batch Size Adaptation~(reference~\cite{ji2020history}):}
Variance-reduced algorithms, although achieve great theoretical performance, can run slowly in practice due to the periodic gradient estimation with a large batch of data. Batch-size adaptation thus arises as a promising approach to accelerate such algorithms. However, existing schemes either apply prescribed batch-size adaption rule or exploit the information along optimization path via additional backtracking and condition verification steps. In this paper, we propose a novel scheme, which eliminates backtracking line search but still exploits the information along optimization path by adapting the batch size via history stochastic gradients. We further theoretically show that such a scheme substantially reduces the overall complexity for popular variance-reduced algorithms SVRG and SARAH/SPIDER for both conventional nonconvex optimization and reinforcement learning problems. To this end, we develop a new convergence analysis framework to handle the dependence of the batch size on history stochastic gradients. Extensive experiments validate the effectiveness of the proposed batch-size adaptation scheme.

\appendix

\chapter{Experimental Details and Proof of \Cref{chp_deter_bilevel}}\label{sec: append_deter_bilevel}

\section{Experimental Details}\label{appen:meta_learning}
\subsection*{Datasets and Model Architectures}
FC100~\cite{oreshkin2018tadam} is a dataset derived from CIFAR-100~\cite{krizhevsky2009learning}, and contains $100$ classes with each class consisting of $600$ images of size $32\time 32$. Following~\cite{oreshkin2018tadam}, these $100$ classes are split  into $60$ classes for meta-training, $20$ classes for meta-validation, and $20$ classes for meta-testing.   For all comparison algorithms, we use a $4$-layer convolutional neural networks (CNN) with four convolutional blocks, in which each convolutional block contains a $3\times 3$ convolution ($\text{padding}=1$, $\text{stride}=2$), batch normalization, ReLU activation, and $2\times 2$
max pooling. Each convolutional layer has $64$ filters. 

The miniImageNet dataset~\cite{vinyals2016matching} is generated from ImageNet~\cite{russakovsky2015imagenet}, and consists of $100$ classes with each class containing $600$ images of size $84\times 84$. Following the repository~\cite{learn2learn2019}, we partition these classes into $64$ classes for meta-training, $16$ classes for meta-validation, and $20$ classes for meta-testing.
Following the repository~\cite{learn2learn2019}, we use a four-layer CNN with four convolutional blocks, where each block sequentially consists of  a $3\times 3$ convolution, batch normalization, ReLU activation, and $2\times 2$
max pooling. Each convolutional layer has $32$ filters. 
\subsection*{Implementations and Hyperparameter Settings}
We adopt the existing implementations in the repository~\cite{learn2learn2019} for ANIL and MAML. 
 For all algorithms, we adopt Adam~\cite{kingma2014adam} as the optimizer for the outer-loop update. 
 
\vspace{0.2cm}
\noindent {\bf Parameter selection for the experiments in~\Cref{fig1:abilevel}:} For ANIL and MAML, we adopt the suggested hyperparameter selection in the repository~\cite{learn2learn2019}. In specific, for ANIL, we choose the inner-loop stepsize as $0.1$, the outer-loop (meta) stepsize as $0.002$, the task sampling size as $32$, and the number of inner-loop steps as $5$. For MAML, we choose the inner-loop stepsize as $0.5$, the outer-loop stepsize as $0.003$, the task sampling sizeas $32$, and the number of inner-loop steps as $3$. 
For ITD-BiO, AID-BiO-constant and AID-BiO-increasing, we use a grid search to choose the inner-loop stepsize from $\{0.01,0.1,1,10\}$, the task sampling size from $\{32,128,256\}$, and  the  outer-loop stepsize from $\{10^{i},i=-3,-2,-1,0,1,2,3\}$, where values that achieve the lowest loss after a fixed running time are selected.  
 For ITD-BiO and AID-BiO-constant, we choose the  number of inner-loop steps from $\{5,10,15,20,50\}$, and for AID-BiO-increasing, we choose the number of inner-loop steps as $\lceil c{(k+1)}^{1/4}\rceil$ as adopted by the analysis in \cite{ghadimi2018approximation}, where we choose $c$ from $\{0.5,2,5,10,50\}$.
For both AID-BiO-constant and AID-BiO-increasing, we choose the number $N$ of CG steps for solving the linear system from $\{5,10,15\}$.

\vspace{0.2cm}
\noindent 
{\bf Parameter selection for the experiments in~\Cref{fig1:bbilevelssc}:}  For ANIL and MAML, we adopt the suggested hyperparameter selection in the repository~\cite{learn2learn2019}. Specifically, for ANIL, we choose the inner-loop stepsize as $0.1$, the outer-loop (meta) stepsize as $0.001$, the task sampling size as $32$ and the number of inner-loop steps as $10$. For MAML, we choose the inner-loop stepsize as $0.5$, the outer-loop stepsize as $0.001$,  the  task samling size as $32$, and the number of inner-loop steps as $3$. For ITD-BiO, AID-BiO-constant and AID-BiO-increasing, we adopt the same procedure as in the experiments in~\Cref{fig1:abilevel}. 


\vspace{0.2cm}
\noindent 
{\bf Parameter selection for the experiments in~\Cref{figure:resultlg}:} 
For the experiments in~\Cref{fig1:ci}, we choose the inner-loop stepsize as $0.05$, the outer-loop (meta) stepsize as $0.002$, the  mini-batch size as $32$, and the number $T$ of inner-loop steps as $10$ for both ANIL and  ITD-BiO. For the experiments in~\Cref{fig1:di}, we choose the inner-loop stepsize as $0.1$, the outer-loop (meta) stepsize as $0.001$, the  mini-batch size as $32$, and the number $T$ of inner-loop steps as $20$ for both ANIL and  ITD-BiO.

%

\section{Supporting Lemmas}
In this section, we provide some auxiliary lemmas used for proving the main convergence results. 

Recall $\Phi(x)=f(x,y^*(x))$ in~\cref{objective_deter}. Then, we use 
the following lemma to establish the Lipschitz properties of $\nabla \Phi(x)$, which is adapted from Lemma 2.2 in~\cite{ghadimi2018approximation}.
\begin{lemma}\label{le:lipphi}
Suppose Assumptions~\ref{assum:geo},~\ref{ass:lip} and \ref{high_lip} hold. Then, we have, for any $x,x^\prime\in\mathbb{R}^p$,  
\begin{align*}
\|\nabla \Phi(x)- \nabla \Phi(x^\prime)\| \leq L_\Phi \|x-x^\prime\|,
\end{align*}
where the constant $L_\Phi$ is given by
\begin{align}
L_\Phi = L + \frac{2L^2+\tau M^2}{\mu} + \frac{\rho L M+L^3+\tau M L}{\mu^2} + \frac{\rho L^2 M}{\mu^3}.
\end{align}
\end{lemma}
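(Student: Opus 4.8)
The plan is to start from the closed-form expression for the hypergradient in \Cref{prop:grad}, namely
\[
\nabla \Phi(x) = \nabla_x f(x,y^*(x)) - \nabla_x\nabla_y g(x,y^*(x))\,[\nabla_y^2 g(x,y^*(x))]^{-1}\,\nabla_y f(x,y^*(x)),
\]
and to bound $\|\nabla\Phi(x)-\nabla\Phi(x')\|$ term by term. Before that I would record the elementary bounds implied by the assumptions: $M$-Lipschitzness of $f$ gives $\|\nabla_y f\|\le M$; $L$-Lipschitzness of $\nabla g$ gives $\|\nabla_x\nabla_y g\|\le L$ and $\|\nabla_y^2 g\|\le L$; and $\mu$-strong convexity of $g(x,\cdot)$ gives $\nabla_y^2 g\succeq \mu I$, hence $\|[\nabla_y^2 g]^{-1}\|\le 1/\mu$ and $\|v^*(x)\|\le M/\mu$.

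First I would establish that the inner solution map $y^*(\cdot)$ is $\kappa$-Lipschitz with $\kappa=L/\mu$. This follows from the implicit function theorem applied to the stationarity condition $\nabla_y g(x,y^*(x))=0$: differentiating gives $\tfrac{\partial y^*}{\partial x}=-[\nabla_y^2 g(x,y^*(x))]^{-1}\nabla_x\nabla_y g(x,y^*(x))$, whose norm is at most $L/\mu$, so $\|y^*(x)-y^*(x')\|\le\kappa\|x-x'\|$. As a consequence, any composite quantity $h(x,y^*(x))$ with $h$ Lipschitz in $(x,y)$ is Lipschitz in $x$ with its constant inflated by the factor $(1+\kappa)$.

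Next I would split the difference into the direct term $\nabla_x f(x,y^*(x))-\nabla_x f(x',y^*(x'))$, which the $L$-Lipschitzness of $\nabla f$ bounds by $L(1+\kappa)\|x-x'\|$, and the product term $A(x)B(x)C(x)-A(x')B(x')C(x')$, where $A=\nabla_x\nabla_y g$, $B=[\nabla_y^2 g]^{-1}$, $C=\nabla_y f$ are all evaluated at $(x,y^*(x))$. For the product I would use
\[
\|ABC-A'B'C'\|\le \|A-A'\|\,\|B\|\,\|C\| + \|A'\|\,\|B-B'\|\,\|C\| + \|A'\|\,\|B'\|\,\|C-C'\|,
\]
bounding $\|A-A'\|\le\tau(1+\kappa)\|x-x'\|$ and $\|C-C'\|\le L(1+\kappa)\|x-x'\|$ as above, and controlling the Hessian-inverse difference through the resolvent identity $B-B'=B\big((B')^{-1}-B^{-1}\big)B'$, which yields $\|B-B'\|\le\tfrac{\rho}{\mu^2}(1+\kappa)\|x-x'\|$ from the $\rho$-Lipschitzness of $\nabla_y^2 g$.

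Substituting $\kappa=L/\mu$ and collecting the resulting monomials, grouped by their powers of $1/\mu$, produces a Lipschitz constant of the stated form. I expect the main obstacle to be purely the bookkeeping: the product decomposition combined with the $(1+\kappa)=(\mu+L)/\mu$ factors generates many terms in $L,M,\tau,\rho,\mu$, and regrouping them into the exact constant $L_\Phi$ — in particular matching the cross terms and the precise powers of $M$ — requires care. Since the result is adapted from Lemma~2.2 of \cite{ghadimi2018approximation}, I would verify that the assumptions here are in force there and cross-check the final constant against that reference rather than rederiving every coefficient from scratch.
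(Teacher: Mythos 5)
Your proposal is correct and takes essentially the same route as the paper, which does not rederive this result at all but adapts Lemma 2.2 of \cite{ghadimi2018approximation}: Lipschitzness of $y^*(\cdot)$ with constant $\kappa=L/\mu$ via implicit differentiation of $\nabla_y g(x,y^*(x))=0$, then the three-term product decomposition with the resolvent identity for the Hessian inverse. One remark on the bookkeeping you flagged: carrying out your decomposition, the term $\|A-A'\|\,\|B\|\,\|C\|$ contributes $\tau(1+\kappa)\cdot\tfrac{1}{\mu}\cdot M$, so after expanding $(1+\kappa)=(\mu+L)/\mu$ you obtain $L + \tfrac{2L^2+\tau M}{\mu} + \tfrac{\rho LM+L^3+\tau ML}{\mu^2} + \tfrac{\rho L^2M}{\mu^3}$, i.e.\ $\tau M/\mu$ rather than the stated $\tau M^2/\mu$; the extra factor of $M$ in the statement is dimensionally inconsistent (it pairs a third-order Lipschitz constant with two gradient bounds), so the stated constant appears to carry a typo inherited from the literature, and your derivation gives the correct form — no gap on your side.
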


\section{Proof of~\Cref{prop:grad}}
Using the chain rule over the gradient  $\nabla \Phi(x_k)=\frac{\partial f(x_k,y^*(x_k))}{\partial x_k}$, we have
\begin{align}\label{eq:maoxian}
\nabla \Phi(x_k)= \nabla_x f(x_k,y^*(x_k)) + \frac{\partial y^*(x_k)}{\partial x_k}\nabla_y f(x_k,y^*(x_k)).
\end{align}
Based on the optimality of $y^*(x_k)$, we have $\nabla_yg(x_k,y^*(x_k)) = 0$, which, using the implicit differentiation w.r.t. $x_k$, yields
\begin{align}\label{ineq:sscsa}  
\nabla_x\nabla_yg(x_k,y^*(x_k)) +\frac{\partial y^*(x_k)}{\partial x_k}\nabla_y^2g(x_k,y^*(x_k))= 0.
\end{align}
Let $v_k^*$ be the solution of the linear system $ \nabla_y^2g(x_k,y^*(x_k))v=\nabla_y f(x_k,y^*(x_k))$. Then, multiplying $v_k^*$ at the both sides of \cref{ineq:sscsa}, yields
\begin{align*}
-\nabla_x\nabla_yg(x_k,y^*(x_k)) v_k^* = \frac{\partial y^*(x_k)}{\partial x_k}\nabla_y^2g(x_k,y^*(x_k)) v_k^*=  \frac{\partial y^*(x_k)}{\partial x_k} \nabla_y f(x_k,y^*(x_k)),
\end{align*}
which, in conjunction with~\cref{eq:maoxian}, completes the proof.  

\section{Proof of~\Cref{deter:gdform} }
Based on the iterative update of  line $5$ in~\Cref{alg:main_deter}, we have $y_k^D = y_k^{0}-\alpha \sum_{t=0}^{D-1}\nabla_y g(x_k,y_k^{t})$, which, combined with the fact that  $\nabla_y g(x_k,y_k^{t})$ is differentiable w.r.t. $x_k$, indicates that the inner output $y_k^T$ is differentiable w.r.t. $x_k$. Then, based on the chain rule, 
we have 
\begin{align}\label{grad:est}
\frac{\partial f(x_k,y^D_k)}{\partial x_k}= \nabla_x f(x_k,y_k^D) + \frac{\partial y_k^D}{\partial x_k}\nabla_y f(x_k,y_k^D).
\end{align}
Using the iterative updates that $y_k^t = y_k^{t-1}-\alpha \nabla_y g(x_k,y_k^{t-1}) $ for $t=1,...,D$, we have 
\begin{align*}
\frac{\partial y_k^t}{\partial x_k} =& \frac{\partial y_k^{t-1}}{\partial x_k}-\alpha \nabla_x\nabla_y g(x_k,y_k^{t-1})-\alpha\frac{\partial y_k^{t-1}}{\partial x_k} \nabla^2_y g(x_k,y_k^{t-1}) 
\\= &\frac{\partial y_k^{t-1}}{\partial x_k}(I-\alpha  \nabla^2_y g(x_k,y_k^{t-1}))-\alpha \nabla_x\nabla_y g(x_k,y_k^{t-1}).
\end{align*}
Telescoping the above equality over $t$ from $1$ to $D$ yields
\begin{align}\label{gd:formss}
\frac{\partial y_k^D}{\partial x_k} =&\frac{\partial y_k^0}{\partial x_k} \prod_{t=0}^{D-1}(I-\alpha  \nabla^2_y g(x_k,y_k^{t}))-\alpha\sum_{t=0}^{D-1}\nabla_x\nabla_y g(x_k,y_k^{t})\prod_{j=t+1}^{D-1}(I-\alpha  \nabla^2_y g(x_k,y_k^{j})) \nonumber
\\\overset{(i)}=&-\alpha\sum_{t=0}^{D-1}\nabla_x\nabla_y g(x_k,y_k^{t})\prod_{j=t+1}^{D-1}(I-\alpha  \nabla^2_y g(x_k,y_k^{j})). 
\end{align}
where $(i)$ follows from the fact that  $\frac{\partial y_k^0}{\partial x_k}=0$. 
Combining~\cref{grad:est} and~\cref{gd:formss} finishes the proof.

\section{Proof of \Cref{th:aidthem}}\label{appen:aid-bio}
For notation simplification, we define the following quantities. 
{\small\begin{align}\label{eq:notaionssscas}
\Gamma =&3L^2+\frac{3\tau^2 M^2}{\mu^2} + 6L^2\big(1+\sqrt{\kappa}\big)^2\big(\kappa +\frac{\rho M}{\mu^2}\big)^2, \delta_{D,N}=\Gamma (1-\alpha \mu)^D  + 6L^2 \kappa \big( \frac{\sqrt{\kappa}-1}{\sqrt{\kappa}+1} \big)^{2N}
\nonumber
\\\Omega =&8\Big(\beta\kappa^2+\frac{2\beta ML}{\mu^2}+\frac{2\beta LM\kappa}{\mu^2}\Big)^2,\; \Delta_0 = \|y_0-y^*(x_{0})\|^2 + \|v_{0}^*-v_0\|^2.
\end{align}}
\hspace{-0.15cm}We first provide some supporting lemmas. The following lemma 
characterizes the Hypergradient estimation error $\|\widehat \nabla \Phi(x_k)- \nabla \Phi(x_k)\|$, where $\widehat \nabla \Phi(x_k)$ is given by \cref{hyper-aid} via implicit differentiation. 
\begin{lemma}\label{le:aidhy}
Suppose Assumptions~\ref{assum:geo},~\ref{ass:lip} and \ref{high_lip} hold.  
Then, we have 
\begin{align}
\|\widehat \nabla \Phi(x_k)- \nabla \Phi(x_k)\|^2 \leq &\Gamma (1-\alpha \mu)^D \|y^*(x_k)-y_k^0\|^2 + 6L^2 \kappa \Big( \frac{\sqrt{\kappa}-1}{\sqrt{\kappa}+1} \Big)^{2N}\|v_k^*-v_k^0\|^2. \nonumber
\end{align}
where $\Gamma$ is given by \cref{eq:notaionssscas}. 
\end{lemma}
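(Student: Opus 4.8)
The plan is to decompose the estimation error $\widehat\nabla\Phi(x_k)-\nabla\Phi(x_k)$ into two pieces: one measuring how far the \emph{exact} hypergradient evaluated at the inexact inner solution $y_k^D$ is from the true hypergradient at $y^*(x_k)$, and one measuring the error incurred by the conjugate-gradient (CG) solve of the linear system. To this end, let $\bar v_k := [\nabla_y^2 g(x_k,y_k^D)]^{-1}\nabla_y f(x_k,y_k^D)$ denote the exact solution of the linear system that CG approximates, and set $\widetilde\nabla\Phi(x_k):=\nabla_x f(x_k,y_k^D) - \nabla_x\nabla_y g(x_k,y_k^D)\bar v_k$. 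Writing $\widehat\nabla\Phi(x_k)-\nabla\Phi(x_k) = \big(\widetilde\nabla\Phi(x_k)-\nabla\Phi(x_k)\big) - \nabla_x\nabla_y g(x_k,y_k^D)(v_k^N-\bar v_k)$ and using \Cref{prop:grad} for the form of $\nabla\Phi(x_k)$, I would bound each piece separately and recombine via $\|a+b\|^2\le 2\|a\|^2+2\|b\|^2$ (or a finer grouping).

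For the first piece, I would repeatedly add and subtract terms to reduce everything to $\|y_k^D-y^*(x_k)\|$. The key auxiliary estimates are the uniform bound $\|v_k^*\|\le M/\mu$ (from $f$ being $M$-Lipschitz and $g(x_k,\cdot)$ being $\mu$-strongly convex) and the perturbation bound $\|\bar v_k - v_k^*\|\le (\kappa + \rho M/\mu^2)\|y_k^D-y^*(x_k)\|$, which follows by writing $\bar v_k - v_k^*$ as a telescoping difference of the two linear systems and applying Assumptions~\ref{ass:lip} and~\ref{high_lip} together with $\|A^{-1}-B^{-1}\|\le\mu^{-2}\|A-B\|$. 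Combining these with the $L$-Lipschitzness of $\nabla_x f$ and $\nabla_x\nabla_y g$ gives $\|\widetilde\nabla\Phi(x_k)-\nabla\Phi(x_k)\|\le \big(L + \tfrac{\tau M}{\mu} + L(\kappa + \tfrac{\rho M}{\mu^2})\big)\|y_k^D-y^*(x_k)\|$; this is where the constituents $3L^2$, $3\tau^2M^2/\mu^2$, and part of $6L^2(1+\sqrt\kappa)^2(\kappa+\rho M/\mu^2)^2$ of $\Gamma$ in~\cref{eq:notaionssscas} originate.

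For the CG piece, I would invoke the standard contraction of conjugate gradient in the $\nabla_y^2 g(x_k,y_k^D)$-norm, namely $\|v_k^N-\bar v_k\|\le\sqrt\kappa\big(\tfrac{\sqrt\kappa-1}{\sqrt\kappa+1}\big)^N\|v_k^0-\bar v_k\|$ after converting between the $A$-norm and the Euclidean norm (the $\sqrt\kappa$ factor is precisely the source of the $\kappa$ in the second term of the claimed bound). Since the target involves $\|v_k^*-v_k^0\|$ rather than $\|\bar v_k-v_k^0\|$, I would split $\|v_k^0-\bar v_k\|\le\|v_k^0-v_k^*\|+\|v_k^*-\bar v_k\|$ and reuse the perturbation bound above: the $\|v_k^0-v_k^*\|$ part retains the CG contraction factor and produces the term $6L^2\kappa\big(\tfrac{\sqrt\kappa-1}{\sqrt\kappa+1}\big)^{2N}\|v_k^*-v_k^0\|^2$, whereas the $\|v_k^*-\bar v_k\|$ part is controlled by $\|y_k^D-y^*(x_k)\|$ and (since the CG factor is at most one) merges into the $\Gamma$-coefficient of the inner-loop term, completing the $6L^2(1+\sqrt\kappa)^2(\kappa+\rho M/\mu^2)^2$ contribution.

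Finally, I would square, group the resulting coefficients, and apply the linear convergence of inner-loop gradient descent on the $\mu$-strongly convex $g(x_k,\cdot)$ with stepsize $\alpha\le 1/L$, i.e.\ $\|y_k^D-y^*(x_k)\|^2\le(1-\alpha\mu)^D\|y_k^0-y^*(x_k)\|^2$, to convert all $\|y_k^D-y^*(x_k)\|^2$ factors into $(1-\alpha\mu)^D\|y^*(x_k)-y_k^0\|^2$, yielding exactly the stated bound. The main obstacle I anticipate is the constant bookkeeping: ensuring that the perturbation error $\|\bar v_k-v_k^*\|$ appearing inside the CG estimate is charged to the inner-loop contraction term rather than double counted, and that the $A$-norm/Euclidean conversion in the CG bound produces precisely the $\sqrt\kappa$ dependence rather than a larger power of $\kappa$, so that the final coefficients collapse into $\Gamma$ and $6L^2\kappa$ as written.
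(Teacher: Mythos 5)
Your proposal is correct and is built from the same ingredients as the paper's proof: the bound $\|v_k^*\|\le M/\mu$, the perturbation estimate $\|\bar v_k-v_k^*\|\le(\kappa+\rho M/\mu^2)\|y_k^D-y^*(x_k)\|$ (your $\bar v_k$ is the paper's $\widehat v_k$; compare \cref{eq:omgsaca}), the CG contraction with the $\sqrt{\kappa}$ norm-conversion factor, the split $\|v_k^0-\bar v_k\|\le\|v_k^0-v_k^*\|+\|v_k^*-\bar v_k\|$, and the inner-loop contraction $(1-\alpha\mu)^{D}$. The one structural difference is where the triangle inequality is applied: you peel off the CG error $v_k^N-\bar v_k$ first and treat $\widetilde\nabla\Phi(x_k)-\nabla\Phi(x_k)$ as a separate block, whereas the paper never forms $\widetilde\nabla\Phi(x_k)$ --- it applies the three-term square inequality directly to the $\nabla_x f$-difference, the single term $\nabla_x\nabla_y g(x_k,y_k^D)(v_k^N-v_k^*)$, and the Jacobian-difference times $v_k^*$ (\cref{eq:midpos}), and only afterwards introduces $\widehat v_k$ inside the bound of $\|v_k^N-v_k^*\|$ (\cref{eq:letknca}). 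This is precisely the bookkeeping issue you flag, and it is consequential: if you square your two blocks separately (e.g.\ via $\|a+b\|^2\le2\|a\|^2+2\|b\|^2$ and then expand each block), the coefficient of $\|y_k^D-y^*(x_k)\|^2$ comes out at least $6L^2+6\tau^2M^2/\mu^2+\cdots$, which is not dominated by $\Gamma=3L^2+3\tau^2M^2/\mu^2+6L^2(1+\sqrt{\kappa})^2(\kappa+\rho M/\mu^2)^2$ in all parameter regimes (e.g.\ $\tau$ large and $\rho$ small), so this grouping proves the lemma only with a different constant. The fix is to re-merge across your two blocks before squaring: the Jacobian-times-$(\bar v_k-v_k^*)$ piece of your first block plus your entire CG block equals $-\nabla_x\nabla_y g(x_k,y_k^D)(v_k^N-v_k^*)$, and keeping that as one unit in the three-term square inequality collapses the constants exactly to $\Gamma$ and $6L^2\kappa$; with that regrouping the rest of your outline goes through verbatim.
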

\begin{proof}[\bf Proof of \Cref{le:aidhy}]
Based on the form of $\nabla\Phi(x_k)$ given by~\Cref{prop:grad},  we have 
\begin{align*}
\|&\widehat \nabla \Phi(x_k)- \nabla \Phi(x_k)\|^2 \leq 3\|\nabla_x f(x_k,y^*(x_k))-\nabla_x f(x_k,y_k^D)\|^2   \nonumber
\\&+3\|\nabla_x \nabla_y g(x_k,y_k^D)\|^2\|v_k^*-v_k^N\|^2+ 3\|\nabla_x \nabla_y g(x_k,y^*(x_k))-\nabla_x \nabla_y g(x_k,y_k^D) \|^2 \|v_k^*\|^2,
\end{align*}
which, in conjunction with Assumptions~\ref{assum:geo},~\ref{ass:lip} and \ref{high_lip}, yields
\begin{align}\label{eq:midpos}
\|\widehat \nabla \Phi(x_k)&- \nabla \Phi(x_k)\|^2  \nonumber
\\\leq &3L^2\|y^*(x_k)-y_k^D\|^2 + 3L^2\|v_k^*-v_k^N\|^2+3\tau^2\|v_k^*\|^2\|y_k^D-y^*(x_k)\|^2\nonumber
\\\overset{(i)}\leq&  3L^2\|y^*(x_k)-y_k^D\|^2 + 3L^2\|v_k^*-v_k^N\|^2+\frac{3\tau^2 M^2}{\mu^2}\|y_k^D-y^*(x_k)\|^2.
\end{align}
where $(i)$ follows from the fact that $\|v_k^*\|\leq\|(\nabla_y^2g(x_k,y^*(x_k)))^{-1}\|\|\nabla_y f(x_k,y^*(x_k))\|\leq \frac{M}{\mu}$.   
For notation simplification, let $\widehat v_k=(\nabla_y^2g(x_k,y^D_k))^{-1}\nabla_y f(x_k,y^D_k)$.  We next  upper-bound $\|v_k^*-v_k^N\|$ in \cref{eq:midpos}. Based on the convergence result of CG for the quadratic programing, e.g., eq. (17) in~\cite{grazzi2020iteration}, we have 
 $\|v_k^N-\widehat v_k\| \leq \sqrt{\kappa}\Big( \frac{\sqrt{\kappa}-1}{\sqrt{\kappa}+1} \Big)^N\|v_k^0-\widehat v_k\|.$
Based on this inequality, we further have 
 \begin{align}\label{eq:letknca}
 \|v_k^*-v_k^N\| \leq &\|v_k^*-\widehat v_k\| + \|v_k^N-\widehat v_k\| \leq  \|v_k^*-\widehat v_k\|  +  \sqrt{\kappa}\Big( \frac{\sqrt{\kappa}-1}{\sqrt{\kappa}+1} \Big)^N\|v_k^0-\widehat v_k\| \nonumber
 \\\leq& \Big(1+\sqrt{\kappa}\Big( \frac{\sqrt{\kappa}-1}{\sqrt{\kappa}+1} \Big)^N\Big) \|v_k^*-\widehat v_k\| + \sqrt{\kappa}\Big( \frac{\sqrt{\kappa}-1}{\sqrt{\kappa}+1} \Big)^N\|v_k^*-v_k^0\|. 
 \end{align}
Next, based on the definitions of $v_k^*$ and $ \widehat v_k$, we have 
\begin{align}\label{eq:omgsaca}
\|v_k^*-\widehat v_k\|=& \|(\nabla_y^2g(x_k,y^D_k))^{-1}\nabla_y f(x_k,y^D_k) -(\nabla_y^2g(x_k,y^*(x_k))^{-1}\nabla_y f(x_k,y^*(x_k))\| \nonumber
\\\leq & \Big(\kappa +\frac{\rho M}{\mu^2} \Big)\|y^D_k-y^*(x_k)\|. 
\end{align}
Combining \cref{eq:midpos},~\cref{eq:letknca}, \cref{eq:omgsaca} yields
\begin{align*}
\|\widehat \nabla \Phi(x_k)- &\nabla \Phi(x_k)\|^2 
\\\leq &\Big(3L^2+\frac{3\tau^2 M^2}{\mu^2}\Big)\|y^*(x_k)-y_k^D\|^2 + 6L^2 \kappa \Big( \frac{\sqrt{\kappa}-1}{\sqrt{\kappa}+1} \Big)^{2N}\|v_k^*-v_k^0\|^2 \nonumber
\\&+ 6L^2\Big(1+\sqrt{\kappa}\Big( \frac{\sqrt{\kappa}-1}{\sqrt{\kappa}+1} \Big)^N\Big)^2\Big(\kappa +\frac{\rho M}{\mu^2} \Big)^2\|y^D_k-y^*(x_k)\|^2, 
\end{align*}
which, in conjunction with $\|y_k^{D} -y^*(x_k)\| \leq (1-\alpha\mu)^{\frac{D}{2}} \|y^0_k-y^*(x_k)\|$ and the notations in \cref{eq:notaionssscas}, finishes the proof. 
\end{proof}

\begin{lemma}\label{le:bibibiss}
Suppose Assumptions~\ref{assum:geo},~\ref{ass:lip} and \ref{high_lip} hold. Choose
\begin{small}
\begin{align}\label{eq:findbogas}
D\geq& \log{(36 \kappa (\kappa +\frac{\rho M}{\mu^2} )^2+16(\kappa^2+\frac{4LM\kappa}{\mu^2})^2\beta^2\Gamma)}/\log\frac{1}{1-\alpha}=\Theta(\kappa)  \nonumber
\\ N\geq& \frac{1}{2}\log(8\kappa+48(\kappa^2+\frac{2ML}{\mu^2}+\frac{2LM\kappa}{\mu^2})^2\beta^2L^2 \kappa ) /\log \frac{\sqrt{\kappa}+1}{\sqrt{\kappa}-1} = \Theta(\sqrt{\kappa}),
 \end{align}
 \end{small}
\hspace{-0.12cm}where $\Gamma$ is given by \cref{eq:notaionssscas}.   Then, we have 
\begin{align}
\|y^0_k-y^*(x_k)\|^2 + &\|v_k^*-v_k^0\|^2   \leq \Big(\frac{1}{2}\Big)^k  \Delta_0+\Omega\sum_{j=0}^{k-1}\Big(\frac{1}{2}\Big)^{k-1-j}\|\nabla \Phi(x_{j})\|^2,
\end{align}
where $\Omega$ and $\Delta_0$ are given by \cref{eq:notaionssscas}. 
\end{lemma}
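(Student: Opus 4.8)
The plan is to establish a one-step contraction recursion for the combined warm-start error $a_k := \|y_k^0 - y^*(x_k)\|^2 + \|v_k^* - v_k^0\|^2$ and then unroll it. The key observation is that the warm-start rule $y_k^0 = y_{k-1}^D$ and $v_k^0 = v_{k-1}^N$ lets us split each error into two parts: the residual of the previous inner/CG loop, which contracts because of the $D$ GD steps and $N$ CG steps, and the drift of the target $(y^*(x_k), v_k^*)$ caused by the outer update $x_k - x_{k-1} = -\beta\widehat\nabla\Phi(x_{k-1})$. First I would handle the $y$-error: by the triangle inequality $\|y_k^0 - y^*(x_k)\| \le \|y_{k-1}^D - y^*(x_{k-1})\| + \|y^*(x_{k-1}) - y^*(x_k)\|$, where the first term contracts as $(1-\alpha\mu)^{D/2}\|y_{k-1}^0 - y^*(x_{k-1})\|$ by strong convexity of $g(x_{k-1},\cdot)$, and the second is bounded by $\kappa\|x_k-x_{k-1}\| = \kappa\beta\|\widehat\nabla\Phi(x_{k-1})\|$ using the $\kappa$-Lipschitz continuity of $y^*(\cdot)$.

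Next, the $v$-error is the delicate part because the CG at step $k-1$ targets the perturbed system anchored at $y_{k-1}^D$, not at $y^*(x_{k-1})$. I would chain $\|v_k^* - v_k^0\| \le \|v_{k-1}^N - \widehat v_{k-1}\| + \|\widehat v_{k-1} - v_{k-1}^*\| + \|v_{k-1}^* - v_k^*\|$, where $\widehat v_{k-1} = [\nabla_y^2 g(x_{k-1},y_{k-1}^D)]^{-1}\nabla_y f(x_{k-1},y_{k-1}^D)$. The first term uses the CG rate $\sqrt{\kappa}\big(\tfrac{\sqrt\kappa-1}{\sqrt\kappa+1}\big)^N\|v_{k-1}^0 - \widehat v_{k-1}\|$ together with $\|v_{k-1}^0-\widehat v_{k-1}\|\le \|v_{k-1}^*-v_{k-1}^0\| + (\kappa+\rho M/\mu^2)\|y_{k-1}^D - y^*(x_{k-1})\|$; the second term is $\le (\kappa + \rho M/\mu^2)\|y_{k-1}^D - y^*(x_{k-1})\|$ exactly as in the proof of Lemma~\ref{le:aidhy}; and the third, the drift of the true solution, is bounded by a Lipschitz constant $L_v\|x_k-x_{k-1}\|$, which I would establish by decomposing the Hessian-inverse/gradient products and invoking Assumptions~\ref{ass:lip} and~\ref{high_lip} together with $\|v_k^*\|\le M/\mu$. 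Squaring, summing the two errors, and grouping, all the $\|y_{k-1}^D - y^*(x_{k-1})\|$ contributions fold back into $(1-\alpha\mu)^{D}\|y_{k-1}^0 - y^*(x_{k-1})\|^2$, yielding $a_k \le \rho_c\, a_{k-1} + c\beta^2\|\widehat\nabla\Phi(x_{k-1})\|^2$, where $\rho_c$ collects the GD and CG contraction factors and $c$ is of order the squared combined Lipschitz constant $(\kappa^2 + 2ML/\mu^2 + 2LM\kappa/\mu^2)^2$ of the map $x\mapsto(y^*(x),v^*(x))$.

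To match the statement I would then replace the estimated gradient by the true one via $\|\widehat\nabla\Phi(x_{k-1})\|^2 \le 2\|\nabla\Phi(x_{k-1})\|^2 + 2\delta_{D,N}a_{k-1}$, where the second term is precisely Lemma~\ref{le:aidhy} written in the notation $\delta_{D,N}$ of~\cref{eq:notaionssscas}. This produces $a_k \le (\rho_c + 2c\beta^2\delta_{D,N})a_{k-1} + 2c\beta^2\|\nabla\Phi(x_{k-1})\|^2$, and the explicit roles of $D$ and $N$ in~\cref{eq:findbogas} are exactly to force $\rho_c + 2c\beta^2\delta_{D,N}\le \tfrac12$: the $D$-bound kills the $(1-\alpha\mu)^D$ terms (both the GD contraction and the $y$-part of $\delta_{D,N}$, the latter scaled by $\beta^2\Gamma$), while the $N$-bound kills the $\kappa\big(\tfrac{\sqrt\kappa-1}{\sqrt\kappa+1}\big)^{2N}$ terms (both the CG contraction and the $v$-part of $\delta_{D,N}$). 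With $2c\beta^2 = \Omega$ this gives $a_k \le \tfrac12 a_{k-1} + \Omega\|\nabla\Phi(x_{k-1})\|^2$, and a routine induction/unrolling with $a_0 = \Delta_0$ yields the claimed bound $a_k \le (\tfrac12)^k\Delta_0 + \Omega\sum_{j=0}^{k-1}(\tfrac12)^{k-1-j}\|\nabla\Phi(x_j)\|^2$.

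The hard part will be the bookkeeping in the $v$-recursion: because CG operates on the system anchored at $y_{k-1}^D$, the $y$-tracking error leaks into the $v$-error through two separate channels ($\|v_{k-1}^0 - \widehat v_{k-1}\|$ and $\|\widehat v_{k-1} - v_{k-1}^*\|$), and I must verify that after the $D$-step GD contraction all these cross terms are small enough to be absorbed without spoiling the $\tfrac12$ contraction factor. Establishing the drift bound $\|v_{k-1}^* - v_k^*\|\le L_v\|x_k - x_{k-1}\|$ with a constant consistent with the $\Omega$ in~\cref{eq:notaionssscas} also requires care, since $v^*(x)$ depends on $x$ both explicitly and through $y^*(x)$, so its Lipschitz constant must be assembled from the Lipschitz bounds on $\nabla_y f$, $\nabla_y^2 g$, the inverse Hessian, and $y^*$.
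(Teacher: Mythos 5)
Your proposal follows essentially the same route as the paper's proof: the same warm-start splitting of $\|y_k^0-y^*(x_k)\|^2$ and $\|v_k^*-v_k^0\|^2$ into a GD/CG contraction term plus a target-drift term driven by $\|x_k-x_{k-1}\|=\beta\|\widehat\nabla\Phi(x_{k-1})\|$, the same use of Lemma~\ref{le:aidhy} (via $\delta_{D,N}$) to convert $\|\widehat\nabla\Phi(x_{k-1})\|^2$ into $\|\nabla\Phi(x_{k-1})\|^2$ plus terms absorbed by $a_{k-1}$, and the same choice of $D,N$ to force a $\tfrac12$-contraction before unrolling; the drift bound $\|v_k^*-v_{k-1}^*\|\le(\kappa^2+\tfrac{2ML}{\mu^2}+\tfrac{2LM\kappa}{\mu^2})\|x_k-x_{k-1}\|$ you flag as delicate is exactly the constant the paper invokes. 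The only difference is cosmetic: you chain triangle inequalities in norm and square at the end, whereas the paper works directly with squared norms via Young's inequality.
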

\begin{proof}[\bf Proof of \Cref{le:bibibiss}]
Recall that $y^0_k=y^D_{k-1}$. Then, we have 
\begin{align}\label{wocaoleis}
\|y^0_k-y^*(x_k)&\|^2 \nonumber
\\\leq &2\|y^D_{k-1}-y^*(x_{k-1})\|^2 + 2\|y^*(x_k)-y^*(x_{k-1})\|^2 \nonumber
\\\overset{(i)}\leq& 2(1-\alpha\mu)^{D} \|y_{k-1}^0-y^*(x_{k-1})\|^2 + 2\kappa^2\beta^2\|\widehat \nabla \Phi(x_{k-1})\|^2 \nonumber
\\\leq&2(1-\alpha\mu)^{D} \|y_{k-1}^0-y^*(x_{k-1})\|^2 + 4\kappa^2\beta^2\| \nabla \Phi(x_{k-1})-\widehat \nabla \Phi(x_{k-1})\|^2  \nonumber
\\&+ 4\kappa^2\beta^2 \| \nabla \Phi(x_{k-1})\|^2 \nonumber
\\\overset{(ii)}\leq&\big(2(1-\alpha\mu)^{D}+ 4\kappa^2\beta^2\Gamma (1-\alpha \mu)^D\big)\|y^*(x_{k-1})-y_{k-1}^0\|^2\nonumber
\\ &+24\kappa^4L^2\beta^2  \Big( \frac{\sqrt{\kappa}-1}{\sqrt{\kappa}+1} \Big)^{2N} \|v_{k-1}^*-v_{k-1}^0\|^2+ 4\kappa^2\beta^2 \| \nabla \Phi(x_{k-1})\|^2,
\end{align}
where $(i)$ follows from Lemma 2.2 in \cite{ghadimi2018approximation} and $(ii)$ follows from \Cref{le:aidhy}. In addition, 
\begin{align}\label{eq:kdasdaca}
\|v_k^*-v_k^0\|^2 =& \|v_k^*-v_{k-1}^N\|^2 \leq 2\|v_{k-1}^*-v_{k-1}^N\|^2+2\|v_k^*-v_{k-1}^*\|^2 \nonumber
\\\overset{(i)}\leq &4 \Big(1+\sqrt{\kappa}\Big)^2\Big(\kappa +\frac{\rho M}{\mu^2} \Big)^2(1-\alpha\mu)^D\|y_{k-1}^0-y^*(x_{k-1})\|^2 \nonumber
\\&+4\kappa \Big( \frac{\sqrt{\kappa}-1}{\sqrt{\kappa}+1} \Big)^{2N}\|v_{k-1}^*-v_{k-1}^0\|^2 + 2\|v_k^*-v_{k-1}^*\|^2, 
\end{align}
where $(i)$ follows from \cref{eq:letknca}. Combining \cref{eq:kdasdaca} with $\|v_k^*-v_{k-1}^*\|\leq(\kappa^2+\frac{2ML}{\mu^2}+\frac{2LM\kappa}{\mu^2})\|x_k-x_{k-1}\|$, we have 
\begin{align}\label{kopcasa}
\|v_k^*-v_k^0\|^2\overset{(i)}\leq&\Big(16 \kappa \Big(\kappa +\frac{\rho M}{\mu^2} \Big)^2+4\Big(\kappa^2+\frac{4LM\kappa}{\mu^2}\Big)^2\beta^2\Gamma \Big)(1-\alpha\mu)^D\|y_{k-1}^0-y^*(x_{k-1})\|^2 \nonumber
\\&+\Big(4\kappa+48\Big(\kappa^2+\frac{2ML}{\mu^2}+\frac{2LM\kappa}{\mu^2}\Big)^2\beta^2L^2 \kappa \Big) \Big( \frac{\sqrt{\kappa}-1}{\sqrt{\kappa}+1} \Big)^{2N}\|v_{k-1}^*-v_{k-1}^0\|^2 \nonumber
\\&+ 4\Big(\kappa^2+\frac{2ML}{\mu^2}+\frac{2LM\kappa}{\mu^2}\Big)^2\beta^2\|\nabla \Phi(x_{k-1})\|^2, 
\end{align}
where $(i)$ follows from \Cref{le:aidhy}. Combining \cref{wocaoleis} and  \cref{kopcasa} yields
\begin{align*}
\|y^0_k-y^*(x_k)&\|^2 + \|v_k^*-v_k^0\|^2  \nonumber
\\\leq &\Big(18 \kappa \Big(\kappa +\frac{\rho M}{\mu^2} \Big)^2+8\Big(\kappa^2+\frac{4LM\kappa}{\mu^2}\Big)^2\beta^2\Gamma \Big)(1-\alpha\mu)^D\|y_{k-1}^0-y^*(x_{k-1})\|^2 \nonumber
\\&+\Big(4\kappa+24\Big(\kappa^2+\frac{2ML}{\mu^2}+\frac{2LM\kappa}{\mu^2}\Big)^2\beta^2L^2 \kappa \Big) \Big( \frac{\sqrt{\kappa}-1}{\sqrt{\kappa}+1} \Big)^{2N}\|v_{k-1}^*-v_{k-1}^0\|^2 \nonumber
\\&+ 8\Big(\kappa^2+\frac{2ML}{\mu^2}+\frac{2LM\kappa}{\mu^2}\Big)^2\beta^2\|\nabla \Phi(x_{k-1})\|^2,
\end{align*}
which, in conjunction with \cref{eq:findbogas},   yields
\begin{align}\label{eq:televk00}
\|y^0_k-y^*(x_k)\|^2 +& \|v_k^*-v_k^0\|^2 \nonumber
\\\leq &\frac{1}{2} (\|y^0_{k-1}-y^*(x_{k-1})\|^2 + \|v_{k-1}^*-v_{k-1}^0\|^2) \nonumber
\\&+8\Big(\beta\kappa^2+\frac{2\beta ML}{\mu^2}+\frac{2\beta LM\kappa}{\mu^2}\Big)^2\|\nabla \Phi(x_{k-1})\|^2.
\end{align}
Telescoping \cref{eq:televk00} over $k$ and using the notations in~\cref{eq:notaionssscas} finish the proof. 
\end{proof}
\begin{lemma}\label{le:gamma1}
Under the same setting as in \Cref{le:bibibiss}, we have 
\begin{align*}
\|\widehat \nabla \Phi(x_k)- \nabla \Phi(x_k)\|^2 \leq &\delta_{D,N}\Big(\frac{1}{2}\Big)^k  \Delta_0+ \delta_{D,N}\Omega\sum_{j=0}^{k-1}\Big(\frac{1}{2}\Big)^{k-1-j}\|\nabla \Phi(x_{j})\|^2. \end{align*}
where $\delta_{T,N}$, $\Omega$ and $\Delta_0$ are given by \cref{eq:notaionssscas}.
\end{lemma}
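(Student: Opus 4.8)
The plan is to combine the two estimates already established for this sub-case, namely the one-step hypergradient error bound in \Cref{le:aidhy} and the propagated tracking-error recursion in \Cref{le:bibibiss}. The key observation is that the two coefficients appearing in \Cref{le:aidhy}, i.e., $\Gamma(1-\alpha\mu)^D$ multiplying $\|y^*(x_k)-y_k^0\|^2$ and $6L^2\kappa\big(\frac{\sqrt{\kappa}-1}{\sqrt{\kappa}+1}\big)^{2N}$ multiplying $\|v_k^*-v_k^0\|^2$, are each bounded above by their sum, which is exactly $\delta_{D,N}$ as defined in \cref{eq:notaionssscas}. Since both coefficients are nonnegative, replacing each of them by $\delta_{D,N}$ can only enlarge the right-hand side.

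First I would rewrite the bound from \Cref{le:aidhy} as
\begin{align*}
\|\widehat \nabla \Phi(x_k)- \nabla \Phi(x_k)\|^2 \leq \delta_{D,N}\big(\|y^*(x_k)-y_k^0\|^2 + \|v_k^*-v_k^0\|^2\big),
\end{align*}
which follows directly from the preceding observation by factoring out the common upper bound $\delta_{D,N}$ on the two coefficients.

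Next I would invoke \Cref{le:bibibiss}, under whose hypotheses (which are identical to those assumed here) the bracketed tracking error satisfies
\begin{align*}
\|y^0_k-y^*(x_k)\|^2 + \|v_k^*-v_k^0\|^2 \leq \Big(\frac{1}{2}\Big)^k \Delta_0 + \Omega\sum_{j=0}^{k-1}\Big(\frac{1}{2}\Big)^{k-1-j}\|\nabla \Phi(x_{j})\|^2.
\end{align*}
Substituting this into the previous display and distributing $\delta_{D,N}$ yields exactly the claimed inequality, completing the argument.

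Since each step is a direct substitution, there is no substantive obstacle; the only points to verify carefully are that the parameter choices of $D$ and $N$ in \Cref{le:bibibiss} remain in force so that its conclusion applies verbatim, and that $\delta_{D,N}$ is used with the same $D,N$ as in \Cref{le:aidhy}. Because the lemma is stated under the same setting as \Cref{le:bibibiss}, both conditions hold automatically, so the combination is immediate.
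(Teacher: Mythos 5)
Your proposal is correct and follows essentially the same route as the paper: the paper also bounds the two coefficients from \Cref{le:aidhy} by their sum $\delta_{D,N}$ (phrased there as the inequality $ab+cd\leq (a+c)(b+d)$ for nonnegative quantities, which is the same observation you make) and then substitutes the tracking-error bound from \Cref{le:bibibiss}. No gaps; the argument is complete as written.
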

\begin{proof}[\bf Proof of \Cref{le:gamma1}] 
Based on \Cref{le:aidhy}, \cref{eq:notaionssscas} and using $ab+cd\leq (a+c)(b+d)$ for any positive $a,b,c,d$, we have
\begin{align*}
\|\widehat \nabla \Phi(x_k)- \nabla \Phi(x_k)\|^2 \leq &\delta_{D,N}(\|y^*(x_k)-y_k^0\|^2+\|v_k^*-v_k^0\|^2),
\end{align*}
which, in conjunction with \Cref{le:bibibiss}, finishes the proof. 
\end{proof}

We now provide the proof for \Cref{th:aidthem}. 
Based on the smoothness of the function $\Phi(x)$ established in~\Cref{le:lipphi}, we have 
\begin{align}\label{eq:intimidern_pre}
\Phi(&x_{k+1}) \leq  \Phi(x_k)  + \langle \nabla \Phi(x_k), x_{k+1}-x_k\rangle + \frac{L_\Phi}{2} \|x_{k+1}-x_k\|^2 \nonumber
\\\leq& \Phi(x_k)  - \beta \langle \nabla \Phi(x_k),\widehat \nabla \Phi(x_k)- \nabla \Phi(x_k)\rangle -\beta\| \nabla \Phi(x_k)\|^2 + \beta^2 L_\Phi \|\nabla\Phi(x_k)\|^2\nonumber
\\&+\beta^2 L_\Phi\|\nabla\Phi(x_k)-\widehat \nabla\Phi(x_k)\|^2\nonumber
\\\leq&\Phi(x_k) -\Big(\frac{\beta}{2}-\beta^2 L_\Phi \Big)\| \nabla \Phi(x_k)\|^2 +\Big(\frac{\beta}{2}+\beta^2 L_\Phi\Big)\|\nabla\Phi(x_k)-\widehat \nabla\Phi(x_k)\|^2,
\end{align}
which, combined with \Cref{le:gamma1}, yields
\begin{align}\label{eq:teletop}
\Phi(x_{k+1}) \leq &\Phi(x_k) -\Big(\frac{\beta}{2}-\beta^2 L_\Phi \Big)\| \nabla \Phi(x_k)\|^2 + \Big(\frac{\beta}{2}+\beta^2 L_\Phi\Big)
\delta_{D,N}\Big(\frac{1}{2}\Big)^k  \Delta_0 \nonumber
\\&+ \Big(\frac{\beta}{2}+\beta^2 L_\Phi\Big)\delta_{D,N}\Omega\sum_{j=0}^{k-1}\Big(\frac{1}{2}\Big)^{k-1-j}\|\nabla \Phi(x_{j})\|^2.
\end{align}
Telescoping \cref{eq:teletop} over k from $0$ to $K-1$ yields
\begin{align*}
\Big(\frac{\beta}{2}-\beta^2 L_\Phi \Big) \sum_{k=0}^{K-1}\| &\nabla \Phi(x_k)\|^2 \leq \Phi(x_0) - \inf_x\Phi(x) + \Big(\frac{\beta}{2}+\beta^2 L_\Phi\Big) \delta_{D,N} \Delta_0\nonumber
\\&+ \Big(\frac{\beta}{2}+\beta^2 L_\Phi\Big)\delta_{D,N}\Omega\sum_{k=1}^{K-1}\sum_{j=0}^{k-1}\Big(\frac{1}{2}\Big)^{k-1-j}\|\nabla \Phi(x_{j})\|^2,
\end{align*}
which, by  the fact that {\small $\sum_{k=1}^{K-1}\sum_{j=0}^{k-1}\Big(\frac{1}{2}\Big)^{k-1-j}\|\nabla \Phi(x_{j})\|^2 \leq \sum_{k=0}^{K-1}\frac{1}{2^k}\sum_{k=0}^{K-1}\|\nabla \Phi(x_{k})\|^2\leq 2\sum_{k=0}^{K-1}\|\nabla \Phi(x_{k})\|^2$}, yields
\begin{align}\label{jkunisas}
\Big(\frac{\beta}{2}-\beta^2 L_\Phi -\big(\beta\Omega+2\Omega\beta^2 &L_\Phi\big)\delta_{D,N}\Big) \sum_{k=0}^{K-1}\| \nabla \Phi(x_k)\|^2  \nonumber
\\&\leq \Phi(x_0) - \inf_x\Phi(x) + \Big(\frac{\beta}{2}+\beta^2 L_\Phi\Big) \delta_{D,N} \Delta_0. 
\end{align}
Choose $N$ and $D$ such that 
\begin{align}\label{NTsatis}
 \big(\Omega+2\Omega\beta L_\Phi\big)\delta_{D,N} \leq \frac{1}{4}, \quad \delta_{D,N}\leq 1.
\end{align}
Note that based on the definition of $\delta_{D,N}$ in~\cref{eq:notaionssscas}, it suffices to choose $D\geq\Theta(\kappa)$ and $N\geq \Theta(\sqrt{\kappa})$ to satisfy \cref{NTsatis}. Then, substituting \cref{NTsatis} into \cref{jkunisas} yields 
\begin{align*}
\Big(\frac{\beta}{4}-\beta^2 L_\Phi \Big) \sum_{k=0}^{K-1}\| \nabla \Phi(x_k)\|^2 \leq \Phi(x_0) - \inf_x\Phi(x) + \Big(\frac{\beta}{2}+\beta^2 L_\Phi\Big)\Delta_0,
\end{align*}
which, in conjunction with $\beta\leq \frac{1}{8L_\Phi}$, yields
\begin{align}\label{eq:woele}
\frac{1}{K}\sum_{k=0}^{K-1}\| \nabla \Phi(x_k)\|^2 \leq \frac{64L_\Phi (\Phi(x_0) - \inf_x\Phi(x))+5\Delta_0}{K}.  
\end{align}
In order to achieve an $\epsilon$-accurate stationary point, we obtain from~\cref{eq:woele} that 
AID-BiO requires at most the total number $K=\mathcal{O}(\kappa^3\epsilon^{-1})$ of outer iterations. 
Then, based on \cref{hyper-aid}, we have the following complexity results.
\begin{itemize}
\item Gradient complexity: $$\mbox{\normalfont Gc}(f,\epsilon)=2K=\mathcal{O}(\kappa^3\epsilon^{-1}), \mbox{\normalfont Gc}(g,\epsilon)=KD=\mathcal{O}\big(\kappa^4\epsilon^{-1}\big).$$
\item Jacobian- and Hessian-vector product complexities: $$ \mbox{\normalfont JV}(g,\epsilon)=K=\mathcal{O}\left(\kappa^3\epsilon^{-1}\right), \mbox{\normalfont HV}(g,\epsilon)=KN=\mathcal{O}\left(\kappa^{3.5}\epsilon^{-1}\right).$$
\end{itemize}
Then, the proof is complete. 

\section{Proof of~\Cref{th:determin}}\label{append:itd-bio}
We first characterize an important estimation property of the outer-loop gradient estimator $\frac{\partial f(x_k,y^D_k)}{\partial x_k}$ in ITD-BiO for approximating the true gradient $\nabla \Phi(x_k)$ based on \Cref{deter:gdform}.

\begin{lemma}\label{prop:partialG}  
Suppose Assumptions~\ref{assum:geo},~\ref{ass:lip} and \ref{high_lip} hold. Choose  $\alpha\leq \frac{1}{L}$. Then, we have
\begin{small}
\begin{align*}
\Big\|\frac{\partial f(x_k,y^D_k)}{\partial x_k}-\nabla\Phi(x_k)\Big\| \leq&\big( \frac{L(L+\mu)(1-\alpha\mu)^{\frac{D}{2}}}{\mu} +\frac{2M\left(  \tau\mu+ L\rho \right)}{\mu^2}(1-\alpha\mu)^{\frac{D-1}{2}} \big)\|y^0_k-y^*(x_k)\|  \nonumber
\\&+ \frac{LM(1-\alpha\mu)^D}{\mu}.
\end{align*}
\end{small}
\end{lemma}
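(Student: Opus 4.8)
The plan is to directly compare the analytical form of the ITD estimator given in \Cref{deter:gdform} with the exact hypergradient in \Cref{prop:grad}, controlling the difference term by term. For brevity write $y^*=y^*(x_k)$, $H=\nabla_y^2 g(x_k,y^*)$, and introduce the two weighting operators
\[
P^D = \alpha\sum_{t=0}^{D-1}\nabla_x\nabla_y g(x_k,y_k^{t})\prod_{j=t+1}^{D-1}(I-\alpha  \nabla^2_y g(x_k,y_k^{j})),\quad P^* = \alpha\sum_{t=0}^{D-1}\nabla_x\nabla_y g(x_k,y^*)(I-\alpha H)^{D-1-t},
\]
so that $\frac{\partial f(x_k,y^D_k)}{\partial x_k}=\nabla_x f(x_k,y_k^D)-P^D\nabla_y f(x_k,y_k^D)$. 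The key algebraic observation is that $P^*$ telescopes exactly: since all Hessians in $P^*$ coincide, $P^*=\nabla_x\nabla_y g(x_k,y^*)H^{-1}(I-(I-\alpha H)^D)$, whence $P^*\nabla_y f(x_k,y^*)=\nabla_x\nabla_y g(x_k,y^*)v_k^*-\nabla_x\nabla_y g(x_k,y^*)H^{-1}(I-\alpha H)^D\nabla_y f(x_k,y^*)$ with $v_k^*$ as in \Cref{prop:grad}.

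First I would use this identity to decompose the error into three pieces: $\nabla_x f(x_k,y_k^D)-\nabla_x f(x_k,y^*)$; the matched difference $P^D\nabla_y f(x_k,y_k^D)-P^*\nabla_y f(x_k,y^*)$; and the residual $\nabla_x\nabla_y g(x_k,y^*)H^{-1}(I-\alpha H)^D\nabla_y f(x_k,y^*)$. The first piece is bounded by $L\|y_k^D-y^*\|$ via \Cref{ass:lip}. The residual piece is immediately bounded by $\tfrac{LM}{\mu}(1-\alpha\mu)^D$ using $\|\nabla_x\nabla_y g\|\le L$, $\|H^{-1}\|\le\mu^{-1}$, $\|(I-\alpha H)^D\|\le(1-\alpha\mu)^D$ (which holds because $\alpha\le 1/L$ keeps each factor PSD with spectral norm $\le 1-\alpha\mu$), and $\|\nabla_y f\|\le M$; this produces exactly the last term of the claimed bound.

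The heart of the argument is the matched difference, which I would split as $P^D\bigl(\nabla_y f(x_k,y_k^D)-\nabla_y f(x_k,y^*)\bigr)+(P^D-P^*)\nabla_y f(x_k,y^*)$. The bound $\|P^D\|\le\alpha L\sum_{s=0}^{D-1}(1-\alpha\mu)^s\le L/\mu$ combined with $L$-Lipschitzness of $\nabla_y f$ controls the first summand by $\tfrac{L^2}{\mu}\|y_k^D-y^*\|$; together with the first piece and the inner-loop rate $\|y_k^D-y^*\|\le(1-\alpha\mu)^{D/2}\|y_k^0-y^*\|$ this assembles the factor $\tfrac{L(L+\mu)}{\mu}(1-\alpha\mu)^{D/2}$. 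For $\|P^D-P^*\|$ I would bound each index-$t$ summand by the product rule $\|AB-A'B'\|\le\|A-A'\|\|B\|+\|A'\|\|B-B'\|$: the Jacobian gap contributes $\tau\|y_k^t-y^*\|(1-\alpha\mu)^{D-1-t}$, while the gap between the two matrix products is handled by the standard telescoping estimate $\alpha\rho(1-\alpha\mu)^{D-2-t}\sum_{j=t+1}^{D-1}\|y_k^j-y^*\|$, weighted by $\|\nabla_x\nabla_y g\|\le L$. Substituting $\|y_k^t-y^*\|\le(1-\alpha\mu)^{t/2}\|y_k^0-y^*\|$ and summing the geometric series in $t$ (using $1-\sqrt{1-\alpha\mu}\ge\alpha\mu/2$) collapses both contributions onto $(1-\alpha\mu)^{(D-1)/2}\|y_k^0-y^*\|$ with constants combining to $\tfrac{2M(\tau\mu+L\rho)}{\mu^2}$.

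The main obstacle is precisely this last bookkeeping step: tracking the geometric factors through the nested matrix products in $\|P^D-P^*\|$ so that the half-power decay rate $(1-\alpha\mu)^{(D-1)/2}$ and the sharp constants emerge, rather than a looser bound with an extra $\mu^{-1}$ or an unmatched exponent. Once the single- and double-summations are evaluated carefully, adding the three pieces yields exactly the stated inequality.
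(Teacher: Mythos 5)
Your proposal is correct, and it reaches the stated bound by a genuinely different route than the paper. The paper works with the chain-rule representation $\frac{\partial f(x_k,y_k^D)}{\partial x_k}=\nabla_x f(x_k,y_k^D)+\frac{\partial y_k^D}{\partial x_k}\nabla_y f(x_k,y_k^D)$ and never expands the product structure: it differentiates the GD recursion to get $\frac{\partial y_k^t}{\partial x_k}=\frac{\partial y_k^{t-1}}{\partial x_k}(I-\alpha\nabla_y^2 g)-\alpha\nabla_x\nabla_y g$, subtracts the implicit-differentiation identity $\nabla_x\nabla_y g(x_k,y^*)+\frac{\partial y^*(x_k)}{\partial x_k}\nabla_y^2 g(x_k,y^*)=0$, and telescopes the resulting one-step contraction $\big\|\frac{\partial y_k^t}{\partial x_k}-\frac{\partial y^*}{\partial x_k}\big\|\le(1-\alpha\mu)\big\|\frac{\partial y_k^{t-1}}{\partial x_k}-\frac{\partial y^*}{\partial x_k}\big\|+\alpha\big(\tau+\tfrac{L\rho}{\mu}\big)\|y_k^{t-1}-y^*\|$; there the $(1-\alpha\mu)^D\tfrac{L}{\mu}$ term comes from the zero initialization $\frac{\partial y_k^0}{\partial x_k}=0$, and the $\rho$-term is weighted by $\|\frac{\partial y^*}{\partial x_k}\|\le L/\mu$ so that only a \emph{single} geometric sum ever appears. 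You instead compare the closed-form expressions of \Cref{deter:gdform} and \Cref{prop:grad} directly, exploiting the exact Neumann telescoping of $P^*$, which makes the origin of the $\tfrac{LM}{\mu}(1-\alpha\mu)^D$ residual transparent (finite truncation of the series) and parallels the AID-style analysis; the price is that the $\rho$-contribution shows up as a \emph{double} sum over $(t,j)$. One caution on your final bookkeeping step: evaluating that double sum in the order you describe—bounding the inner sum over $j$ by the full geometric series first—yields $\tfrac{4L\rho M}{\mu^2}(1-\alpha\mu)^{(D-2)/2}$, i.e., a factor $2$ and a half-power worse than claimed. To land exactly on $\tfrac{2M(\tau\mu+L\rho)}{\mu^2}(1-\alpha\mu)^{(D-1)/2}$ you should interchange the order of summation (for fixed $j$, sum the exact partial geometric series $\sum_{t=0}^{j-1}(1-\alpha\mu)^{D-2-t}\le(1-\alpha\mu)^{D-1-j}/(\alpha\mu)$ first, then sum over $j$), which effectively reproduces the single-sum structure that the paper's recursion gives for free. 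With that adjustment, all three pieces combine to exactly the stated inequality, so the argument is sound.
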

\vspace{-0.4cm}
\Cref{prop:partialG} shows that the gradient estimation error  $\big\|\frac{\partial f(x_k,y^D_k)}{\partial x_k}-\nabla\Phi(x_k)\big\|$ decays exponentially w.r.t. the number $D$ of the inner-loop steps. 
We note that \cite{grazzi2020iteration} proved a similar result via a fixed point based approach. As a comparison, our proof of \Cref{prop:partialG} directly characterizes the rate of the sequence  $\big(\frac{\partial y^t_k}{\partial x_k},t=0,...,D\big)$ converging to $\frac{\partial y^*(x_k)}{\partial x_k}$ via the differentiation over all corresponding points along the inner-loop GD path as well as the optimality of the point $y^*(x_k)$. 
\begin{proof}[\bf Proof of \Cref{prop:partialG}]
Based on $\nabla \Phi(x_k)= \nabla_x f(x_k,y^*(x_k)) + \frac{\partial y^*(x_k)}{\partial x_k}\nabla_y f(x_k,y^*(x_k))$ and~\cref{grad:est} , and using the triangle inequality, we have 
 \begin{align}\label{eq:woaijingii}
\Big\|&\frac{\partial f(x_k,y^D_k)}{\partial x_k} -\nabla\Phi(x_k)\Big\| \nonumber
\\&=\| \nabla_x f(x_k,y_k^D)-\nabla_x f(x_k,y^*(x_k))\| + \left\|\frac{\partial y_k^D}{\partial x_k}-\frac{\partial y^*(x_k)}{\partial x_k}\right\|\|\nabla_y f(x_k,y_k^D)\| \nonumber
\\&\quad+\Big\|\frac{\partial y^*(x_k)}{\partial x_k}\Big\|\big\|\nabla_y f(x_k,y_k^D)-\nabla_y f(x_k,y^*(x_k))\big\|\nonumber
\\&\overset{(i)}\leq L\|y_k^D-y^*(x_k)\| + M \left\|\frac{\partial y_k^D}{\partial x_k}-\frac{\partial y^*(x_k)}{\partial x_k}\right\| + L\Big\|\frac{\partial y^*(x_k)}{\partial x_k}\Big\|\|y_k^D-y^*(x_k)\|, 
\end{align}
where $(i)$ follows from Assumption~\ref{ass:lip}. Our next step is to upper-bound $\left\|\frac{\partial y_k^D}{\partial x_k}-\frac{\partial y^*(x_k)}{\partial x_k}\right\| $ in \cref{eq:woaijingii}.

Based on the updates $y_k^t = y_k^{t-1}-\alpha \nabla_y g(x_k,y_k^{t-1}) $ for $t=1,...,D$  in ITD-BiO and using the chain rule, we have 
\begin{align}\label{eq:1ss}
\frac{\partial y_k^t}{\partial x_k} = \frac{\partial y_k^{t-1}}{\partial x_k} - \alpha \left( \nabla_x\nabla_y g(x_k,y_k^{t-1}) +\frac{\partial y_k^{t-1}}{\partial x_k}\nabla_y^2 g(x_k,y_k^{t-1})\right).
\end{align}
Based on the optimality of $y^*(x_k)$, we have $\nabla_y g(x_k,y^*(x_k))=0$, which, in conjunction with the implicit differentiation theorem, yields
\begin{align}\label{eq:2sss}
\nabla_x\nabla_y g(x_k,y^*(x_k)) + \frac{\partial y^*(x_k)}{\partial x_k}\nabla_y^2 g(x_k,y^*(x_k))=0.
\end{align}
Substituting \cref{eq:2sss} into~\cref{eq:1ss} yields
\begin{align}\label{eq:zhaoposdo}
\frac{\partial y_k^t}{\partial x_k} -\frac{\partial y^*(x_k)}{\partial x_k} =& \frac{\partial y_k^{t-1}}{\partial x_k} -\frac{\partial y^*(x_k)}{\partial x_k}- \alpha \left( \nabla_x\nabla_y g(x_k,y_k^{t-1}) +\frac{\partial y_k^{t-1}}{\partial x_k}\nabla_y^2 g(x_k,y_k^{t-1})\right) \nonumber
\\&+\alpha\left( \nabla_x\nabla_y g(x_k,y^*(x_k)) + \frac{\partial y^*(x_k)}{\partial x_k}\nabla_y^2 g(x_k,y^*(x_k)) \right) \nonumber
\\ = &\frac{\partial y_k^{t-1}}{\partial x_k} -\frac{\partial y^*(x_k)}{\partial x_k}- \alpha \left( \nabla_x\nabla_y g(x_k,y_k^{t-1}) - \nabla_x\nabla_y g(x_k,y^*(x_k))\right) \nonumber
\\&-\alpha\left(\frac{\partial y_k^{t-1}}{\partial x_k}- \frac{\partial y^*(x_k)}{\partial x_k}\right)\nabla_y^2 g(x_k,y_k^{t-1}) \nonumber
\\&+\alpha\frac{\partial y^*(x_k)}{\partial x_k}\left(\nabla_y^2 g(x_k,y^*(x_k))-\nabla_y^2 g(x_k,y_k^{t-1})   \right). 
\end{align}
Combining \cref{eq:2sss} and Assumption~\ref{ass:lip} yields
\begin{align}\label{ggpdas}
\left\| \frac{\partial y^*(x_k)}{\partial x_k}\right\| =\left\|\nabla_x\nabla_y g(x_k,y^*(x_k))\left[\nabla_y^2 g(x_k,y^*(x_k))\right]^{-1}\right\|\leq\frac{L}{\mu}.
\end{align}
Then, combining~\cref{eq:zhaoposdo} and~\cref{ggpdas} 
yields 
\begin{align}\label{eq:ykdef}
\Big\|\frac{\partial y_k^t}{\partial x_k} -\frac{\partial y^*(x_k)}{\partial x_k} \Big\| \overset{(i)}\leq &\Big\| I-\alpha \nabla_y^2 g(x_k,y_k^{t-1})  \Big\| \Big\| \frac{\partial y_k^{t-1}}{\partial x_k} -\frac{\partial y^*(x_k)}{\partial x_k}\Big\|  \nonumber
\\&+\alpha\left( \tau+ \frac{L\rho}{\mu} \right)\|y_k^{t-1} -y^*(x_k)\| \nonumber
\\\overset{(ii)}\leq &(1-\alpha\mu) \Big\| \frac{\partial y_k^{t-1}}{\partial x_k} -\frac{\partial y^*(x_k)}{\partial x_k}\Big\|  +\alpha\left( \tau+ \frac{L\rho}{\mu} \right)\|y_k^{t-1} -y^*(x_k)\|,
\end{align}
where $(i)$ follows from Assumption \ref{high_lip} and $(ii)$ follows from the strong-convexity of $g(x,\cdot)$. Based on the strong-convexity of the lower-level function $g(x,\cdot)$, we have 
\begin{align}\label{eq:mindsa}
\|y_k^{t-1} -y^*(x_k)\| \leq (1-\alpha\mu)^{\frac{t-1}{2}} \|y^0_k-y^*(x_k)\|.
\end{align}
Substituting~\cref{eq:mindsa} into~\cref{eq:ykdef} and telecopting~\cref{eq:ykdef} over $t$ from $1$ to $D$, we have
\begin{align} \label{mamadewen}
\Big\|\frac{\partial y_k^D}{\partial x_k} -&\frac{\partial y^*(x_k)}{\partial x_k} \Big\| \leq (1-\alpha\mu)^{D}\Big\|\frac{\partial y_k^0}{\partial x_k} -\frac{\partial y^*(x_k)}{\partial x_k} \Big\|  \nonumber
\\&\hspace{2cm}+\alpha\left( \tau+ \frac{L\rho}{\mu} \right)\sum_{t=0}^{D-1} (1-\alpha\mu)^{D-1-t}(1-\alpha\mu)^{\frac{t}{2}} \|y^0_k-y^*(x_k)\| \nonumber
\\=&(1-\alpha\mu)^{D}\Big\|\frac{\partial y_k^0}{\partial x_k} -\frac{\partial y^*(x_k)}{\partial x_k} \Big\| 
+ \frac{2\left(  \tau\mu+ L\rho \right)}{\mu^2}(1-\alpha\mu)^{\frac{D-1}{2}}\|y^0_k-y^*(x_k)\| \nonumber
\\\leq &\frac{L(1-\alpha\mu)^D}{\mu} + \frac{2\left(  \tau\mu+ L\rho \right)}{\mu^2}(1-\alpha\mu)^{\frac{D-1}{2}}\|y^0_k-y^*(x_k)\|,
\end{align}
where the last inequality follows from $\frac{\partial y_k^0}{\partial x_k} =0$ and \cref{ggpdas}. Then, combining \cref{eq:woaijingii}, \cref{ggpdas}, \cref{eq:mindsa} and \cref{mamadewen} completes the proof.
\end{proof}
Based on the characterization on the estimation error of the gradient estimate $\frac{\partial f(x_k,y^D_k)}{\partial x_k}$ in \Cref{prop:partialG}, we now prove~\Cref{th:determin}. 

Recall the notation that $\widehat \nabla\Phi(x_k) =\frac{\partial f(x_k,y^D_k)}{\partial x_k}$.
Using an approach similar to \cref{eq:intimidern_pre}, we have 
\begin{align}\label{eq:intimidern} 
\Phi(x_{k+1}) \leq\Phi(x_k) -&\Big(\frac{\beta}{2}-\beta^2 L_\Phi \Big)\| \nabla \Phi(x_k)\|^2 \nonumber
\\& +\Big(\frac{\beta}{2}+\beta^2 L_\Phi\Big)\|\nabla\Phi(x_k)-\widehat \nabla\Phi(x_k)\|^2,
\end{align}
which, in conjunction with \Cref{prop:partialG} and using $\|y^0_k-y^*(x_k)\|^2\leq\Delta$, yields
\begin{align}\label{eq:pladwa}
\Phi(x_{k+1}) \leq & \Phi(x_k) -\Big(\frac{\beta}{2}-\beta^2 L_\Phi \Big)\| \nabla \Phi(x_k)\|^2 \nonumber
\\ &+3\Delta\Big(\frac{\beta}{2}+\beta^2 L_\Phi\Big)\Big( \frac{L^2(L+\mu)^2}{\mu^2} (1-\alpha\mu)^{D} +\frac{4M^2\left(  \tau\mu+ L\rho \right)^2}{\mu^4}(1-\alpha\mu)^{D-1} \Big) \nonumber
\\&+3\Big(\frac{\beta}{2}+\beta^2 L_\Phi\Big)\frac{L^2M^2(1-\alpha\mu)^{2D}}{\mu^2}.
\end{align}
Telescoping~\cref{eq:pladwa} over $k$ from $0$ to $K-1$ yields
\begin{align}\label{eq:xiangle}
\frac{1}{K}&\sum_{k=0}^{K-1}\Big(\frac{1}{2}-\beta L_\Phi \Big)\| \nabla \Phi(x_k)\|^2 \leq \frac{ \Phi(x_0)-\inf_x\Phi(x)}{\beta K} +3\Big(\frac{1}{2}+\beta L_\Phi\Big)\frac{L^2M^2(1-\alpha\mu)^{2D}}{\mu^2} \nonumber
\\+& 3\Delta\Big(\frac{1}{2}+\beta L_\Phi\Big)\Big( \frac{L^2(L+\mu)^2}{\mu^2} (1-\alpha\mu)^{D} +\frac{4M^2\left(  \tau\mu+ L\rho \right)^2}{\mu^4}(1-\alpha\mu)^{D-1} \Big). 
\end{align}
Substituting {\small$D=\log\Big(\max\big\{\frac{3LM}{\mu},9\Delta L^2(1+\frac{L}{\mu})^2,\frac{36\Delta M^2(\tau\mu+L\rho)^2}{(1-\alpha\mu)\mu^4}  \big\}\frac{9}{2\epsilon}\Big)/\log\frac{1}{1-\alpha\mu}=\Theta(\kappa\log\frac{1}{\epsilon})$} and $\beta=\frac{1}{4L_\Phi}$ in~\cref{eq:xiangle} yields
\begin{align}\label{holiugen}
\frac{1}{K}\sum_{k=0}^{K-1}\| \nabla \Phi(x_k)\|^2 \leq \frac{16 L_\Phi (\Phi(x_0)-\inf_x\Phi(x))}{K} + \frac{2\epsilon}{3}.
\end{align}
In order to achieve an $\epsilon$-accurate stationary point, we obtain from~\cref{holiugen} that 
ITD-BiO requires at most the total number $K=\mathcal{O}(\kappa^3\epsilon^{-1})$ of outer iterations. 
Then, based on the gradient form by \Cref{deter:gdform}, we have the following complexities.
\begin{itemize}
\item Gradient complexity: $$\mbox{\normalfont Gc}(f,\epsilon)=2K=\mathcal{O}(\kappa^3\epsilon^{-1}), \mbox{\normalfont Gc}(g,\epsilon)=KD=\mathcal{O}\left(\kappa^4\epsilon^{-1}\log\frac{1}{\epsilon}\right).$$
\item Jacobian- and Hessian-vector product complexities: $$ \mbox{\normalfont JV}(g,\epsilon)=KD=\mathcal{O}\left(\kappa^4\epsilon^{-1}\log\frac{1}{\epsilon}\right), \mbox{\normalfont HV}(g,\epsilon)=KD=\mathcal{O}\left(\kappa^4\epsilon^{-1}\log\frac{1}{\epsilon}\right).$$
\end{itemize}
Then, the proof is complete. 

\section{Proof of \Cref{th:meta_learning}}
To prove \Cref{th:meta_learning}, we first establish the following lemma to  characterize the estimation variance $\mathbb{E}_\gB\big\|\frac{\partial \gL_{\gD} (\phi_k,\widetilde w^D_k;\gB)}{\partial \phi_k} - \frac{\partial \gL_{\gD} (\phi_k,\widetilde w^D_k)}{\partial \phi_k} \big\|^2$, where $\widetilde w_k^{D}$ is the output of $D$ inner-loop steps of gradient descent at the $k^{th}$ outer loop. \begin{lemma}\label{le:bvarinace}
Suppose Assumptions~\ref{ass:lip} and \ref{high_lip} are satisfied and suppose each task loss $\gL_{\gS_i}(\phi,w_i)$ is $\mu$-strongly-convex w.r.t. $w_i$.  Then, we have
\begin{align*}
\mathbb{E}_\gB\Big\|\frac{\partial \gL_{\gD} (\phi_k, \widetilde w^D_k;\gB)}{\partial \phi_k} - \frac{\partial \gL_{\gD} (\phi_k, \widetilde w^D_k)}{\partial \phi_k} \Big\|^2 \leq \Big(1+\frac{L}{\mu}\Big)^2\frac{M^2}{|\gB|}.
\end{align*}
\end{lemma}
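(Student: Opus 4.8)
\subsection*{Proof Sketch of \Cref{le:bvarinace}}

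The plan is to exploit the fact that the mini-batch estimator is an average of $|\gB|$ independent and identically distributed per-task meta-gradients, so that its deviation from the full meta-gradient is controlled by a standard i.i.d.\ sample-mean variance bound, and then to establish a uniform norm bound on each per-task meta-gradient. Concretely, I would first write both the mini-batch and full gradients in the unified form
\begin{align*}
\frac{\partial \gL_{\gD} (\phi_k, \widetilde w^D_k;\gB)}{\partial \phi_k} = \frac{1}{|\gB|}\sum_{i\in\gB} h_i, \qquad \frac{\partial \gL_{\gD} (\phi_k, \widetilde w^D_k)}{\partial \phi_k} = \mathbb{E}_{i}[ h_i],
\end{align*}
where $h_i := \frac{\partial \gL_{\gD_i}(\phi_k, w^{i,D}_k)}{\partial \phi_k}$ is the per-task partial meta-gradient obtained by differentiating the $i$-th task loss through its $D$-step inner-loop path, $\mathbb{E}_i$ denotes the expectation over the task distribution used for sampling, and the full gradient equals this expectation by the definition of $\gL_{\gD}$. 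Since the tasks in $\gB$ are sampled i.i.d., the $h_i$ for $i\in\gB$ are i.i.d.\ with common mean $\mathbb{E}_{i}[h_i]$, so the batch estimator is unbiased.

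Next, invoking the i.i.d.\ variance identity and then bounding the variance by the second moment, I would obtain
\begin{align*}
\mathbb{E}_\gB\Big\|\frac{1}{|\gB|}\sum_{i\in\gB} h_i - \mathbb{E}_{i}[h_i]\Big\|^2 = \frac{1}{|\gB|}\mathbb{E}_{i}\big\|h_i - \mathbb{E}_{i}[h_i]\big\|^2 \leq \frac{1}{|\gB|}\mathbb{E}_{i}\|h_i\|^2 \leq \frac{1}{|\gB|}\max_i\|h_i\|^2.
\end{align*}
It then remains to establish the uniform bound $\|h_i\|\leq (1+\frac{L}{\mu})M$ for every task $i$, which is where the structure of the inner loop enters.

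For the per-task norm bound, I would use the explicit chain-rule form of $h_i$ (the analogue of \Cref{deter:gdform}, with $\phi$ playing the role of $x$, $w_i$ the role of $y$, $\gL_{\gS_i}$ the role of $g$, and $\gL_{\gD_i}$ the role of $f$):
\begin{align*}
h_i = \nabla_\phi \gL_{\gD_i}(\phi_k, w^{i,D}_k) + \frac{\partial w^{i,D}_k}{\partial \phi_k}\nabla_{w} \gL_{\gD_i}(\phi_k, w^{i,D}_k).
\end{align*}
Since $\gL_{\gD_i}$ is $M$-Lipschitz (Assumption~\ref{ass:lip}), both partial gradients have norm at most $M$. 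For the inner-loop Jacobian, telescoping the recursion $\frac{\partial w^{i,t}_k}{\partial \phi_k} = \frac{\partial w^{i,t-1}_k}{\partial \phi_k}(I-\alpha\nabla^2_w \gL_{\gS_i}) - \alpha\nabla_\phi\nabla_w \gL_{\gS_i}$ from $\frac{\partial w^{i,0}_k}{\partial \phi_k}=0$, and using $\|\nabla_\phi\nabla_w\gL_{\gS_i}\|\leq L$ together with $\|I-\alpha\nabla^2_w\gL_{\gS_i}\|\leq 1-\alpha\mu$ (valid for $\alpha\leq 1/L$ by the $\mu$-strong convexity of $\gL_{\gS_i}(\phi,\cdot)$), gives
\begin{align*}
\Big\|\frac{\partial w^{i,D}_k}{\partial \phi_k}\Big\| \leq \alpha L\sum_{t=0}^{D-1}(1-\alpha\mu)^{D-1-t} \leq \frac{L}{\mu}.
\end{align*}
Combining the three bounds yields $\|h_i\|\leq M + \frac{L}{\mu}M = (1+\frac{L}{\mu})M$, and substituting into the variance display completes the proof.

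The main obstacle is not conceptual but lies in the uniform per-task norm bound: one must verify that the inner-loop Jacobian $\frac{\partial w^{i,D}_k}{\partial \phi_k}$ stays bounded by $\frac{L}{\mu}$ uniformly over all tasks and outer iterations, which relies crucially on the geometric contraction factor $1-\alpha\mu$ from the strong convexity of each $\gL_{\gS_i}(\phi,\cdot)$ and the choice $\alpha\leq 1/L$. Once this bound is in hand, the remaining variance argument is the standard reduction of the sample-mean variance by the factor $1/|\gB|$.
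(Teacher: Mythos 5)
Your proposal is correct and follows essentially the same route as the paper's proof: both bound the per-task partial meta-gradient by $(1+\frac{L}{\mu})M$ using the chain-rule form through the inner-loop GD path (the $M$-Lipschitzness of $\gL_{\gD_i}$, the $L$-bound on $\nabla_\phi\nabla_{w}\gL_{\gS_i}$, and the contraction $\|I-\alpha\nabla^2_{w}\gL_{\gS_i}\|\leq 1-\alpha\mu$ from strong convexity, summed geometrically), and then apply the i.i.d.\ sample-mean variance identity with the second-moment bound to get the $\frac{1}{|\gB|}$ factor. The only cosmetic difference is that you bound the inner-loop Jacobian $\frac{\partial w^{i,D}_k}{\partial \phi_k}$ separately before multiplying by $M$ (and you make explicit the stepsize condition $\alpha\leq 1/L$ that the paper leaves implicit), whereas the paper bounds the full product term in one step.
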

\begin{proof}
Let $\widetilde w_k^{D}=(w_{1,k}^D,...,w_{m,k}^D)$ be the output of $D$ inner-loop steps of gradient descent at the $k^{th}$ outer loop. 
Using \Cref{deter:gdform}, we have, for task $\gT_i$,
\begin{align}\label{maoxianssa}
\Big\|\leq & \|\nabla_\phi  \gL_{\gD_i}(\phi_k,w_{i,k}^D)\|\frac{\partial \gL_{\gD_i}(\phi_k,w^D_{i,k})}{\partial \phi_k} \Big\|\leq  \|\nabla_\phi  \gL_{\gD_i}(\phi_k,w_{i,k}^D)\| \nonumber
\\&+ \Big\|\alpha\sum_{t=0}^{D-1}\nabla_\phi\nabla_{w_i}  \gL_{\gS_i}(\phi_k,w_{i,k}^{t})\prod_{j=t+1}^{D-1}(I-\alpha  \nabla^2_{w_i}  \gL_{\gS_i}(\phi_k,w_{i,k}^{j}))\nabla_{w_i}  \gL_{\gD_i}(\phi_k,w_{i.k}^D)\Big\| \nonumber
\\\overset{(i)}\leq& M + \alpha LM\sum_{t=0}^{D-1} (1-\alpha\mu)^{D-t-1} = M+\frac{LM}{\mu},
\end{align} 
where $(i)$ follows from Assumptions~\ref{ass:lip} and strong-convexity of $ \gL_{\gS_i}(\phi,\cdot)$. Then, using the definition of $ \gL_{\gD} (\phi,\widetilde w;\gB) = \frac{1}{|\gB|}\sum_{i\in\gB}\gL_{\gD_i}(\phi,w_i)$, we have 
\begin{align}
\mathbb{E}_\gB\Big\|\frac{\partial \gL_{\gD} (\phi_k, \widetilde w^D_k;\gB)}{\partial \phi_k} - \frac{\partial \gL_{\gD} (\phi_k, \widetilde w^D_k)}{\partial \phi_k} \Big\|^2 =& \frac{1}{|\gB|}\mathbb{E}_i\Big\|\frac{\partial \gL_{\gD_i}(\phi_k,w^D_{i,k})}{\partial \phi_k}  - \frac{\partial \gL_{\gD} (\phi_k, \widetilde w^D_k)}{\partial \phi_k} \Big\|^2 \nonumber
\\\overset{(i)}\leq&\frac{1}{|\gB|} \mathbb{E}_i \Big\|\frac{\partial \gL_{\gD_i}(\phi_k,w^D_{i,k})}{\partial \phi_k} \Big\|^2 \nonumber
\\\overset{(ii)}\leq& \Big(1+\frac{L}{\mu}\Big)^2\frac{M^2}{|\gB|}.
\end{align}
where $(i)$ follows from $\mathbb{E}_i \frac{\partial \gL_{\gD_i}(\phi_k,w^D_{i,k})}{\partial \phi_k}=\frac{\partial \gL_{\gD} (\phi_k, \widetilde w^D_k)}{\partial \phi_k}  $ and $(ii)$ follows from~\cref{maoxianssa}. Then, the proof is complete.
\end{proof}

\begin{proof}[\bf Proof of~\Cref{th:meta_learning}] 
Recall $\Phi(\phi):=\gL_{\gD} (\phi,\widetilde w^{*}(\phi))$ be the objective function, and let $\widehat \nabla\Phi(\phi_k) = \frac{\partial \gL_{\gD} (\phi_k, \widetilde w^D_k)}{\partial \phi_k} $.
Using an approach similar to \cref{eq:intimidern}, we have 
\begin{align}\label{eq:starteq}
\Phi(\phi_{k+1}) &\leq \Phi(\phi_k)  + \langle \nabla \Phi(\phi_k), \phi_{k+1}-\phi_k\rangle + \frac{L_\Phi}{2} \|\phi_{k+1}-\phi_k\|^2 \nonumber
\\\leq& \Phi(\phi_k)  - \beta \Big\langle \nabla \Phi(\phi_k), \frac{\partial \gL_{\gD} (\phi_k, \widetilde w^D_k;\gB)}{\partial \phi_k}\Big\rangle + \frac{\beta^2L_\Phi}{2} \Big\| \frac{\partial \gL_{\gD} (\phi_k, \widetilde w^D_k;\gB)}{\partial \phi_k} \Big\|^2.
\end{align}
Taking the expectation of~\cref{eq:starteq}  yields
{\small
\begin{align}\label{eq:uijks}
\mathbb{E}\Phi(\phi_{k+1}) \overset{(i)}\leq& \mathbb{E}\Phi(\phi_k)  - \beta \mathbb{E}\big\langle \nabla \Phi(\phi_k),\widehat \nabla\Phi(\phi_k)\big\rangle +\frac{\beta^2L_\Phi}{2}\mathbb{E} \|\widehat \nabla\Phi(\phi_k)\|^2\nonumber
\\&+ \frac{\beta^2L_\Phi}{2}\mathbb{E} \Big\| \widehat \nabla\Phi(\phi_k)-\frac{\partial \gL_{\gD} (\phi_k, \widetilde w^D_k;\gB)}{\partial \phi_k} \Big\|^2  \nonumber
\\\overset{(ii)}\leq &\mathbb{E}\Phi(\phi_k)  - \beta \mathbb{E}\big\langle \nabla \Phi(\phi_k),\widehat \nabla\Phi(\phi_k)\big\rangle +\frac{\beta^2L_\Phi}{2}\mathbb{E} \|\widehat \nabla\Phi(\phi_k)\|^2 +\frac{\beta^2L_\Phi}{2}  \Big(1+\frac{L}{\mu}\Big)^2\frac{M^2}{|\gB|}\nonumber
\\\leq &\mathbb{E}\Phi(\phi_k) -\Big(\frac{\beta}{2}-\beta^2 L_\Phi \Big)\mathbb{E}\| \nabla \Phi(\phi_k)\|^2 +\Big(\frac{\beta}{2}+\beta^2 L_\Phi\Big)\mathbb{E}\|\nabla\Phi(\phi_k)-\widehat \nabla\Phi(\phi_k)\|^2 \nonumber
\\&+\frac{\beta^2L_\Phi}{2}  \Big(1+\frac{L}{\mu}\Big)^2\frac{M^2}{|\gB|},
\end{align}}
\hspace{-0.15cm}where $(i)$ follows from $\mathbb{E}_{\gB}\gL_{\gD} (\phi_k, \widetilde w^D_k;\gB)=\gL_{\gD} (\phi_k, \widetilde w^D_k)$ and $(ii)$ follows from~\Cref{le:bvarinace}. Using  \Cref{prop:partialG} in \cref{eq:uijks} and rearranging the terms, we have 
\begin{align*}
\frac{1}{K}\sum_{k=0}^{K-1}&\Big(\frac{1}{2}-\beta L_\Phi \Big)\mathbb{E}\| \nabla \Phi(\phi_k)\|^2   \nonumber
\\\leq& \frac{ \Phi(\phi_0)-\inf_\phi\Phi(\phi)}{\beta K} +3\Big(\frac{1}{2}+\beta L_\Phi\Big)\frac{L^2M^2(1-\alpha\mu)^{2D}}{\mu^2}+\frac{\beta L_\Phi}{2}  \Big(1+\frac{L}{\mu}\Big)^2\frac{M^2}{|\gB|} \nonumber
\\&+ 3\Delta\Big(\frac{1}{2}+\beta L_\Phi\Big)\Big( \frac{L^2(L+\mu)^2}{\mu^2} (1-\alpha\mu)^{D} +\frac{4M^2\left(  \tau\mu+ L\rho \right)^2}{\mu^4}(1-\alpha\mu)^{D-1} \Big),
\end{align*}
where $\Delta=\max_{k}\|\widetilde w^0_k-\widetilde w^*(\phi_k)\|^2<\infty$. Choose the same parameters $\beta,D$ as in~\Cref{th:determin}. Then, we have
\begin{align*}
\frac{1}{K}\sum_{k=0}^{K-1}\mathbb{E}\| \nabla \Phi(\phi_k)\|^2 \leq \frac{16 L_\Phi (\Phi(\phi_0)-\inf_\phi\Phi(\phi))}{K} + \frac{2\epsilon}{3}+ \Big(1+\frac{L}{\mu}\Big)^2\frac{M^2}{8|\gB|}.
\end{align*}
Then, the proof is complete. 
\end{proof}

\chapter{Proof of \Cref{chp_acc_bilevel}}\label{appendix:acc_bilevel}

\section{Proof of \Cref{upper_srsr_withnoB}}\label{proof:upss} 
To simplify the notations, we define several quantities as below. 
{\small\begin{align}\label{pf:ntationscs}
\mathcal{M}_k =& \|y^*(x^*)\|+ \frac{\widetilde L_{xy}}{\mu_y}\|x_k-x^*\|, \;\;\mathcal{N}_k = \|\nabla_y f( x^*,y^*(x^*))\|+ \Big(L_{xy}+\frac{L_y\widetilde L_{xy}}{\mu_y}\Big)\|x_k-x^*\| \nonumber
\\\mathcal{M}_* =& \|y^*(x^*)\|+ \frac{3\widetilde L_{xy}}{\mu_y}\sqrt{\frac{2}{\mu_x}(\Phi(0) -\Phi(x^*))+ \|x^*\|^2+\frac{\epsilon}{\mu_x}}
 \nonumber
\\\mathcal{N}_* =&\|\nabla_y f( x^*,y^*(x^*))\|+ 3\Big(L_{xy}+\frac{L_y\widetilde L_{xy}}{\mu_y}\Big)\sqrt{\frac{2}{\mu_x}(\Phi(0) -\Phi(x^*))+ \|x^*\|^2+\frac{\epsilon}{\mu_x}}, 
\end{align}}
\hspace{-0.12cm}where $\mathcal{M}_k,\mathcal{N}_k$ changes with the optimality gap $\|x_k-x^*\|$ at the $k^{th}$ iteration and $\mathcal{M}_*,\mathcal{N}_*$ are two positive constants depending on the information of the objective function at the optimal point $x^*$.
We first establish the following lemma to 
upper-bound the hypergradient estimation error $\|\nabla\Phi( x_k)-G_k\|$. 
\begin{lemma}\label{le:hgestr}
Let $G_k$ be the hypergradient estimator used in \Cref{alg:bioNoBG} at iteration $k$. Then, we have 
\begin{align}\label{eq:hgesterr}
\|G_k-\nabla \Phi( x_k)\| \leq &\sqrt{\frac{\widetilde L_y +\mu_y}{\mu_y}} \Big(L_y +\frac{2\widetilde L_{xy}L_y}{\mu_y} +\Big(\frac{\rho_{xy}}{\mu_y}+\frac{\widetilde L_{xy}\rho_{yy}}{\mu_y^2}\Big)\mathcal{N}_k\Big) \mathcal{M}_k \exp\Big(-\frac{N}{2\sqrt{\kappa_y}}\Big) \nonumber
\\&+\frac{\widetilde L_{xy}}{\mu_y}\Big(\frac{\sqrt{\kappa_y}-1}{\sqrt{\kappa_y}+1}\Big)^M\mathcal{N}_k,
\end{align}
where the quantities $\mathcal{M}_k$ and $\mathcal{N}_k$ are defined in \cref{pf:ntationscs}. 
\end{lemma}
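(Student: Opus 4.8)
The plan is to bound $\|G_k-\nabla\Phi(x_k)\|$ by isolating the two sources of inexactness in the construction of $G_k$ in \Cref{alg:bioNoBG}: the AGD error $\|y_k^N-y^*(x_k)\|$ in approximating the lower-level solution, and the heavy-ball error $\|v_k^M-\bar v_k\|$ in solving the quadratic subproblem, where $\bar v_k:=[\nabla_y^2 g(x_k,y_k^N)]^{-1}\nabla_y f(x_k,y_k^N)$ is the exact minimizer of $Q$. Writing $v_k^*:=[\nabla_y^2 g(x_k,y^*(x_k))]^{-1}\nabla_y f(x_k,y^*(x_k))$ and using the form of $\nabla\Phi(x_k)$ from \Cref{prop:grad}, I would first split by the triangle inequality into $T_1=\|\nabla_x f(x_k,y_k^N)-\nabla_x f(x_k,y^*(x_k))\|$ plus $T_2=\|\nabla_x\nabla_y g(x_k,y_k^N)v_k^M-\nabla_x\nabla_y g(x_k,y^*(x_k))v_k^*\|$, and then decompose $T_2$ by inserting $\nabla_x\nabla_y g(x_k,y_k^N)v_k^*$ and $\nabla_x\nabla_y g(x_k,y_k^N)\bar v_k$ so that it becomes a Jacobian-perturbation term bounded by $\rho_{xy}\|y_k^N-y^*(x_k)\|\,\|v_k^*\|$, a subproblem-perturbation term $\|\nabla_x\nabla_y g(x_k,y_k^N)\|\,\|\bar v_k-v_k^*\|$, and a heavy-ball term $\|\nabla_x\nabla_y g(x_k,y_k^N)\|\,\|v_k^M-\bar v_k\|$. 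Every Jacobian norm is bounded by $\widetilde L_{xy}$ and every inverse Hessian norm by $1/\mu_y$ from \Cref{fg:smooth,g:hessiansJaco}.

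Next I would control the three ingredients separately. For $\|\bar v_k-v_k^*\|$ I would apply the inverse-perturbation identity $A^{-1}b-C^{-1}d=A^{-1}(b-d)+A^{-1}(C-A)C^{-1}d$ with $A,C$ the two Hessians and $b,d$ the two gradients; combined with the $\rho_{yy}$-Lipschitzness of $\nabla_y^2 g$ (\Cref{g:hessiansJaco}), the $L_y$-Lipschitzness of $\nabla_y f$ in $y$, and $\|v_k^*\|\le\mathcal{N}_k/\mu_y$, this yields $\|\bar v_k-v_k^*\|\le(L_y/\mu_y+\rho_{yy}\mathcal{N}_k/\mu_y^2)\|y_k^N-y^*(x_k)\|$. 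For the heavy-ball error I would invoke the accelerated convergence rate of Polyak's method on the strongly convex quadratic $Q$ (eigenvalues in $[\mu_y,\widetilde L_y]$) under the chosen $\lambda,\theta$, giving $\|v_k^M-\bar v_k\|\le(\tfrac{\sqrt{\kappa_y}-1}{\sqrt{\kappa_y}+1})^M\|\bar v_k\|$ since $v_k^0=v_k^1=0$, and then bound $\|\bar v_k\|\le\|\nabla_y f(x_k,y_k^N)\|/\mu_y\le(\mathcal{N}_k+L_y\|y_k^N-y^*(x_k)\|)/\mu_y$. For the AGD error I would use Nesterov's strongly-convex accelerated guarantee for the scheme in lines $4$--$6$ of \Cref{alg:bioNoBG}, converting the function-value bound into the iterate bound $\|y_k^N-y^*(x_k)\|\le\sqrt{(\widetilde L_y+\mu_y)/\mu_y}\,(1-1/\sqrt{\kappa_y})^{N/2}\|y_k^0-y^*(x_k)\|$ and finishing with $1-t\le e^{-t}$.

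The last step is the bookkeeping that produces the two clean terms in the statement. Because both inner loops are cold-started, the initial distance is $\|y_k^0-y^*(x_k)\|=\|y^*(x_k)\|$, which I would bound by $\mathcal{M}_k$ using the $\frac{\widetilde L_{xy}}{\mu_y}$-Lipschitzness of $x\mapsto y^*(x)$ (from $\partial y^*/\partial x=-[\nabla_y^2 g]^{-1}\nabla_x\nabla_y g$) and the triangle inequality around $x^*$; likewise $\|\nabla_y f(x_k,y^*(x_k))\|\le\mathcal{N}_k$ follows from the smoothness of $f$ and the Lipschitzness of $y^*$. I would then collect all contributions proportional to $\|y_k^N-y^*(x_k)\|$: a leading first-order term from $T_1$ (smoothness of $\nabla_x f$ in $y$), one copy of $\widetilde L_{xy}L_y/\mu_y$ from the subproblem-perturbation term and a second copy from the residual of the heavy-ball term (using $(\tfrac{\sqrt{\kappa_y}-1}{\sqrt{\kappa_y}+1})^M\le 1$), and $(\rho_{xy}/\mu_y+\widetilde L_{xy}\rho_{yy}/\mu_y^2)\mathcal{N}_k$ from the two curvature-perturbation terms. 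Substituting the AGD bound into this combined coefficient produces the first summand $\sqrt{(\widetilde L_y+\mu_y)/\mu_y}(L_y+2\widetilde L_{xy}L_y/\mu_y+(\cdots)\mathcal{N}_k)\exp(-N/(2\sqrt{\kappa_y}))\mathcal{M}_k$, while the leftover $\frac{\widetilde L_{xy}}{\mu_y}(\tfrac{\sqrt{\kappa_y}-1}{\sqrt{\kappa_y}+1})^M\mathcal{N}_k$ is the second summand.

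As for difficulty: the decomposition and the individual rates are routine given the earlier assumptions, so the only genuine care is the final collecting step, and in particular the decision to keep $\mathcal{M}_k$ and $\mathcal{N}_k$ as explicit functions of $\|x_k-x^*\|$ rather than replacing them by a uniform constant — there is no bounded-gradient assumption here, and this explicit dependence is exactly what later enables the inductive control of $\|x_k-x^*\|$ in the proof of \Cref{upper_srsr_withnoB}. The one fact I would treat as a black box (and expect to be supplied as a supporting lemma) is the $(\tfrac{\sqrt{\kappa_y}-1}{\sqrt{\kappa_y}+1})^M$ iterate-contraction of heavy-ball on $Q$, whose transient analysis for the non-normal iteration matrix is the most delicate ingredient.
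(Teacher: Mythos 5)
Your proposal is correct and follows essentially the same route as the paper's proof: the same triangle-inequality decomposition into the $\nabla_x f$ perturbation, the Jacobian perturbation times $\|v_k^*\|$, the inverse-Hessian/gradient perturbation $\|\bar v_k - v_k^*\|$, and the heavy-ball residual, with the same contraction bounds (Nesterov's AGD rate with cold start giving $\mathcal{M}_k$, Polyak's rate giving the $(\tfrac{\sqrt{\kappa_y}-1}{\sqrt{\kappa_y}+1})^M$ factor) and the same bookkeeping that yields the two copies of $\widetilde L_{xy}L_y/\mu_y$ and the $(\rho_{xy}/\mu_y+\widetilde L_{xy}\rho_{yy}/\mu_y^2)\mathcal{N}_k$ term. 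The only cosmetic difference is that you insert both intermediate points into $T_2$ at once, whereas the paper performs two successive insertions; the resulting bounds are identical.
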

\Cref{le:hgestr} shows that the estimation error $\|\nabla\Phi( x_k)-G_k\|$ is bounded given that the optimality gap $\|x_k-x^*\|$ is bounded. We will show in the proof of \Cref{upper_srsr_withnoB} that $\|x_k-x^*\|$ is bounded as the algorithm runs due to the strongly-convex geometry of the objective function $\Phi(x)$. In addition, it can be seen that this error decays exponentially with respect to the number $N$ of inner-level steps and the number $M$ of steps of heavy-ball method for solving the linear system in  \Cref{alg:bioNoBG}. 
Then, to prove the convergence of \Cref{alg:bioNoBG}, we set $N,M=c\sqrt{\kappa_y}\log (\kappa_y)$ in the proof of \Cref{upper_srsr_withnoB}, where $c$ is a constant independent of $\kappa_y$.

\begin{proof}
Recall from line $7$ of \Cref{alg:bioNoBG} that 
\begin{align}\label{laomuzhu}
G_k:= \nabla_x f(x_k,y_k^N) -\nabla_x \nabla_y g( x_k,y_k^N)v_k^M,
\end{align}
where $v_k^M$ is the output of $M$-steps of heavy-ball method for solving $$\min_v Q(v):=\frac{1}{2}v^T\nabla_y^2 g(x_k,y_k^N) v - v^T
\nabla_y f( x_k,y^N_k).$$
Recall the smoothness parameter $\widetilde L_y$ of $g(x,\cdot)$ defined in Assumption~\ref{fg:smooth}. 
Then, based on the convergence result of heavy-ball method in~\cite{badithela2019analysis} with stepsizes $\lambda=\frac{4}{(\sqrt{\widetilde L_y}+\sqrt{\mu_y})^2}$ and $\theta=\max\big\{\big(1-\sqrt{\lambda\mu_y}\big)^2,\big(1-\sqrt{\lambda\widetilde L_y}\big)^2\big\}$ and noting that $v_k^0=v_k^1=0$, we have 
\begin{align}\label{gg:worimass}
\|v_k^M - \nabla_y^2 &g(x_k,y_k^N)^{-1}\nabla_y f( x_k,y^N_k) \| \nonumber
\\\leq  &\Big(\frac{\sqrt{\kappa_y}-1}{\sqrt{\kappa_y}+1}\Big)^M \Big\| \big(\nabla_y^2 g(x_k,y_k^N)\big)^{-1}\nabla_y f(x_k,y^N_k)\Big\| \nonumber
\\\leq &\frac{L_y}{\mu_y}\Big(\frac{\sqrt{\kappa_y}-1}{\sqrt{\kappa_y}+1}\Big)^M \|y^*(x_k)-y_k^N\|  + \frac{\|\nabla_y f( x_k,y^*(x_k))\|}{\mu_y}\Big(\frac{\sqrt{\kappa_y}-1}{\sqrt{\kappa_y}+1}\Big)^M \nonumber
\\\overset{(i)}\leq &\frac{L_y}{\mu_y} \|y^*(x_k)-y_k^N\|  + \frac{\|\nabla_y f( x_k,y^*(x_k))\|}{\mu_y}\Big(\frac{\sqrt{\kappa_y}-1}{\sqrt{\kappa_y}+1}\Big)^M
\end{align}
where $y^*(x_k)=\argmin_{y\in\mathbb{R}^q} g( x_k,y)$ and $(i)$ follows from $\frac{\sqrt{\kappa_y}-1}{\sqrt{\kappa_y}+1}\leq 1$. 
 Then, based on the forms of $G_k$ and $\nabla\Phi(x)$ in \cref{laomuzhu} and \cref{hyperG}, and using Assumptions~\ref{fg:smooth} and~\ref{g:hessiansJaco}, we have 
 {\small
\begin{align}\label{jingyikeai}
\|G_k&-\nabla \Phi(x_k)\|\nonumber
\\\overset{(i)}\leq & \| \nabla_x f( x_k,y_k^N) -\nabla_x f( x_k,y^*(x_k))\| + \widetilde L_{xy}\|v_k^M- \nabla_y^2 g(x_k,y^*(x_k))^{-1}\nabla_y f(x_k,y^*(x_k)) \|  \nonumber
\\&+\frac{\|\nabla_y f( x_k,y^*(x_k)) \|}{\mu_y}  \|\nabla_x \nabla_y g( x_k,y_k^N)-\nabla_x \nabla_y g( x_k,y^*(x_k))\| \nonumber
\\\leq & L_y \|y^*(x_k)-y_k^N\| + \widetilde L_{xy} \|v_k^M- \nabla_y^2 g(x_k,y_k^N)^{-1}\nabla_y f( x_k,y^N_k) \| \nonumber
\\&+ \widetilde L_{xy}\big\|\nabla_y^2 g(x_k,y_k^N)^{-1}\nabla_y f( x_k,y^N_k)-\nabla_y^2 g(x_k,y^*(x_k))^{-1}\nabla_y f(x_k,y^*(x_k)) \big\|\nonumber
\\&+\frac{\rho_{xy}}{\mu_y} \|y_k^N-y^*(x_k)\| \|\nabla_y f( x_k,y^*(x_k)) \|\nonumber
\\\leq & \Big(L_y +\frac{\widetilde L_{xy}L_y}{\mu_y} +\frac{\rho_{xy}}{\mu_y}\|\nabla_y f( x_k,y^*(x_k)) \|\Big)\|y_k^N-y^*(x_k)\|  \nonumber
\\&+ \frac{\widetilde L_{xy}\rho_{yy}\|y_k^N-y^*(x_k)\|}{\mu_y^2}\|\nabla_y f( x_k,y^*(x_k)) \| + \widetilde L_{xy} \|v_k^M- \nabla_y^2 g(x_k,y_k^N)^{-1}\nabla_y f( x_k,y^N_k) \|  \nonumber
\\\overset{(ii)}\leq&\Big(L_y +\frac{2\widetilde L_{xy}L_y}{\mu_y} +\Big(\frac{\rho_{xy}}{\mu_y}+\frac{\widetilde L_{xy}\rho_{yy}}{\mu_y^2}\Big)\|\nabla_y f( x_k,y^*(x_k)) \|\Big)\|y_k^N-y^*(x_k)\|  \nonumber
\\&+\frac{\widetilde L_{xy}}{\mu_y}\left(\frac{\sqrt{\kappa_y}-1}{\sqrt{\kappa_y}+1}\right)^M\|\nabla_y f( x_k,y^*(x_k))\|,
\end{align}}
\hspace{-0.14cm}where $(i)$ follows from Assumption~\ref{fg:smooth} that $\|\nabla_x\nabla_y g(\cdot,\cdot)\|\leq \widetilde L_{xy}$ and $\|(\nabla_y^2 g(\cdot,\cdot))^{-1}\|\leq \frac{1}{\mu_y}$ and $(ii)$ follows from \cref{gg:worimass}. Note that $y_k^N$ is obtained using $N$ steps of AGD for minimizing the inner-level loss function $g(x_k,\cdot)$ and recall $y^*(x_k)=\argmin_{y\in\mathbb{R}^q} g( x_k,y)$. Then, based on the analysis in \cite{nesterov2003introductory} for AGD, we have 
\begin{align}\label{eq:ideazhiqian}
\|y_k^N-y^*(x_k)\|\leq &\sqrt{ \frac{\widetilde L_y +\mu_y}{\mu_y} }\|y_k^0-y^*(x_k)\| \exp\Big(-\frac{N}{2\sqrt{\kappa_y}}\Big) \nonumber
\\\leq & \sqrt{\frac{\widetilde L_y +\mu_y}{\mu_y}} \Big(\|y^*(x^*)\| + \frac{\widetilde L_{xy}}{\mu_y}\|x_k-x^*\|\Big) \exp\Big(-\frac{N}{2\sqrt{\kappa_y}}\Big), 
\end{align}
where $x^*=\argmin_{x\in\mathbb{R}^p}\Phi(x)$. 
Moreover, based on Lemma 2.2 in~\cite{ghadimi2018approximation}, we have $\|y^*(x_1)-y^*(x_2)\|\leq \frac{\widetilde L_{xy}}{\mu_y} \|x_1-x_2\|$ for any $x_1,x_2\in\mathbb{R}^p$, and hence 
\begin{align}\label{maoxinadaoxx}
\|\nabla_y &f( x_k,y^*(x_k))\| \leq \|\nabla_y f( x^*,y^*(x^*))\| + \Big( L_{xy}+\frac{L_y\widetilde L_{xy}}{\mu_y}\Big)\|x_k-x^*\|.
\end{align}
Substituting \cref{eq:ideazhiqian}  and \cref{maoxinadaoxx} into \cref{jingyikeai}, and using the definition of  $\mathcal{M}_k$ and $\mathcal{N}_k$ in \cref{pf:ntationscs}, we finish the proof. 
\end{proof}
We then establish the following lemma to characterize the smoothness parameter of the objective function $\Phi(x)$ around the iterate $x_k$. 
Recall from \cref{hyperG} that $\nabla \Phi(x)$ is given by 
\begin{align}\label{hgforms}
\nabla \Phi(x) =  \nabla_x f(x,y^*(x)) -\nabla_x \nabla_y g(x,y^*(x)) [\nabla_y^2 g(x,y^*(x)) ]^{-1}\nabla_y f(x,y^*(x)),
\end{align}
where $y^*(x)=\argmin_{y}g(x,\cdot)$ be the minimizer of the inner-level function $g(x,\cdot)$. 
\begin{lemma}\label{smoothness_Phis}
Consider the hypergradient $\nabla \Phi(x)$ given by \cref{hgforms}. For any $x\in\mathbb{R}^p$, we have
\begin{align}\label{maoxizaosscsa}
\|\nabla &\Phi(x)  - \nabla \Phi(x_k)\| \nonumber
\\&\leq \Big(\underbrace{ L_x+\frac{2L_{xy}\widetilde L_{xy}}{\mu_y} + \frac{L_y\widetilde L^2_{xy}}{\mu_y^2}+\Big(\frac{\widetilde L_{xy} \rho_{yy}}{\mu_y^2} +  \frac{\rho_{xy}}{\mu_y}\Big) \Big( 1+\frac{\widetilde L_{xy}}{\mu_y}  \Big)\mathcal{N}_k}_{L_{\Phi_k}}\Big)\|x-x_k\|, \end{align}
where $\mathcal{N}_k$ is defined in \cref{pf:ntationscs}. Furthermore, \cref{maoxizaosscsa} implies that, for any $x\in\mathbb{R}^p$,  
\begin{align}
\Phi(x)\leq \Phi(x_k)+\langle \nabla \Phi(x_k),x-x_k\rangle +  \frac{L_{\Phi_k}}{2}\|x-x_k\|^2.
\end{align}
\end{lemma}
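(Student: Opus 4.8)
The plan is to control $\nabla\Phi(x)-\nabla\Phi(x_k)$ by splitting the hypergradient in \cref{hgforms} into its \emph{direct} part $\nabla_x f(x,y^*(x))$ and its \emph{implicit} product part $\nabla_x\nabla_y g(x,y^*(x))[\nabla_y^2 g(x,y^*(x))]^{-1}\nabla_y f(x,y^*(x))$, and to bound each separately. Throughout I would rely on two preliminary facts. First, the inner solution map is Lipschitz, $\|y^*(x)-y^*(x_k)\|\leq \frac{\widetilde L_{xy}}{\mu_y}\|x-x_k\|$ (Lemma~2.2 of~\cite{ghadimi2018approximation}), so that writing $z^*(x):=(x,y^*(x))$ we obtain $\|z^*(x)-z^*(x_k)\|\leq \big(1+\frac{\widetilde L_{xy}}{\mu_y}\big)\|x-x_k\|$. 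Second, the gradient norm at the base point is already controlled by $\|\nabla_y f(x_k,y^*(x_k))\|\leq \mathcal{N}_k$, which is exactly the bound \cref{maoxinadaoxx}. For the direct part, Assumption~\ref{fg:smooth} gives $\|\nabla_x f(x,y^*(x))-\nabla_x f(x_k,y^*(x_k))\|\leq L_x\|x-x_k\|+L_{xy}\|y^*(x)-y^*(x_k)\|$, which the Lipschitzness of $y^*$ converts into $\big(L_x+\frac{L_{xy}\widetilde L_{xy}}{\mu_y}\big)\|x-x_k\|$.

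The heart of the argument is the product part. Writing $Q(x)=\nabla_x\nabla_y g(x,y^*(x))$, $R(x)=[\nabla_y^2 g(x,y^*(x))]^{-1}$ and $S(x)=\nabla_y f(x,y^*(x))$, I would invoke the telescoping identity
\begin{align*}
Q(x)R(x)S(x)-Q(x_k)R(x_k)S(x_k) = Q(x)R(x)\big(S(x)-S(x_k)\big) + \big(Q(x)-Q(x_k)\big)R(x)S(x_k) + Q(x_k)\big(R(x)-R(x_k)\big)S(x_k).
\end{align*}
The crucial design choice is to keep $S$ evaluated at the base point $x_k$ in the last two terms, so that the (possibly large) gradient norm enters only through $\|S(x_k)\|\leq\mathcal{N}_k$. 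Each piece is then bounded with the structural estimates $\|Q(x)\|\leq\widetilde L_{xy}$ and $\|R(x)\|\leq\frac{1}{\mu_y}$ from Assumption~\ref{fg:smooth}; the Lipschitz estimate $\|Q(x)-Q(x_k)\|\leq\rho_{xy}\|z^*(x)-z^*(x_k)\|$ and the resolvent inequality $\|R(x)-R(x_k)\|\leq\frac{\rho_{yy}}{\mu_y^2}\|z^*(x)-z^*(x_k)\|$ from Assumption~\ref{g:hessiansJaco}; and $\|S(x)-S(x_k)\|\leq\big(L_{xy}+\frac{L_y\widetilde L_{xy}}{\mu_y}\big)\|x-x_k\|$. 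Summing the three contributions and folding in the direct part yields precisely the constant $L_{\Phi_k}$ claimed in \cref{maoxizaosscsa}.

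The main obstacle is the bookkeeping in that telescoping step: one must route the implicit differentiation so that $\nabla_y f$ is only ever differenced (contributing a clean factor of $\|x-x_k\|$) or evaluated at $x_k$ (contributing $\mathcal{N}_k$), and never evaluated at the moving point $x$. Had $S(x)$ been retained in the last two terms, the extra factor $\|S(x)-S(x_k)\|\lesssim\|x-x_k\|$ would generate a quadratic $\|x-x_k\|^2$ contribution and spoil the Lipschitz estimate. This is exactly the mechanism by which the restrictive global bounded-gradient assumption of prior work is avoided, with $\mathcal{N}_k$ serving as a local, iteration-dependent surrogate for the uniform bound.

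Finally, the descent-type inequality $\Phi(x)\leq\Phi(x_k)+\langle\nabla\Phi(x_k),x-x_k\rangle+\frac{L_{\Phi_k}}{2}\|x-x_k\|^2$ follows by the standard integration
\begin{align*}
\Phi(x)-\Phi(x_k)-\langle\nabla\Phi(x_k),x-x_k\rangle=\int_0^1\big\langle\nabla\Phi(x_k+t(x-x_k))-\nabla\Phi(x_k),\,x-x_k\big\rangle\,dt,
\end{align*}
where I would apply \cref{maoxizaosscsa} with moving point $x_k+t(x-x_k)$ and base point $x_k$; note that this works precisely because our Lipschitz bound is \emph{anchored at} $x_k$, so the integrand is at most $L_{\Phi_k}\,t\,\|x-x_k\|^2$, and integrating over $t\in[0,1]$ produces the factor $\frac{L_{\Phi_k}}{2}$.
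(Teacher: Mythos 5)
Your proposal is correct and follows essentially the same route as the paper's proof: the same anchoring of $\nabla_y f$ at the base point $x_k$ in the telescoped product term (the paper first splits off $\big(\nabla_x\nabla_y g\,[\nabla_y^2 g]^{-1}\big)(x)-\big(\nabla_x\nabla_y g\,[\nabla_y^2 g]^{-1}\big)(x_k)$ multiplied by $\|\nabla_y f(x_k,y^*(x_k))\|$ and then splits that operator difference into the Jacobian and inverse-Hessian pieces, which is your three-term identity in a slightly different order), the same use of Assumptions~\ref{fg:smooth} and~\ref{g:hessiansJaco} together with $\|y^*(x)-y^*(x_k)\|\leq \frac{\widetilde L_{xy}}{\mu_y}\|x-x_k\|$ and $\|\nabla_y f(x_k,y^*(x_k))\|\leq \mathcal{N}_k$, and the same integration argument for the descent inequality. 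No gaps.
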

\Cref{smoothness_Phis} shows that $\nabla\Phi(x)$ is Lipschitz continuous around the iterate $x_k$, i.e., smooth, where the smoothness parameter $L_{\Phi_k}$ contains a term proportional to $\|x_k-x^*\|$. 
We will show in the proof of \Cref{upper_srsr_withnoB} that optimality distance $\|x_k-x^*\|$ is bounded as the algorithm runs, and hence the smoothness parameter $L_{\Phi_k}$ is bounded by $\mathcal{O}(\frac{1}{\mu_y^3})$ during the entire process. 
\begin{proof}
Based on the form of $\nabla \Phi(x)$ in \cref{hgforms}, we have 
{\footnotesize
\begin{align*}
\|\nabla &\Phi(x)  - \nabla \Phi(x_k)\|  
\\\leq& \| \nabla_x f(x,y^*(x)) - \nabla_x f(x_k,y^*(x_k)) \|+\frac{\widetilde L_{xy}}{\mu_y} \|\nabla_y f(x,y^*(x))-\nabla_y f(x_k,y^*(x_k))\| 
\\&+ \underbrace{\|\nabla_x \nabla_y g(x,y^*(x)) \nabla_y^2 g(x,y^*(x))^{-1}-\nabla_x \nabla_y g(x_k,y^*(x_k)) \nabla_y^2 g(x_k,y^*(x_k))^{-1}\|}_{P}\|\nabla_y f(x_k,y^*(x_k))\|,
\end{align*}}
\hspace{-0.13cm}which, in conjunction with the inequality
\begin{align*}
P \leq &\frac{\widetilde L_{xy} \rho_{yy}}{\mu_y^2} (\|x-x_k\| + \|y^*(x) -y^*(x_k)\|) + \frac{\rho_{xy}}{\mu_y}(\|x-x_k\| + \|y^*(x) -y^*(x_k)\|) 
\\\overset{(i)}\leq  & \Big(\frac{\widetilde L_{xy} \rho_{yy}}{\mu_y^2} +  \frac{\rho_{xy}}{\mu_y}\Big) \Big( 1+\frac{\widetilde L_{xy}}{\mu_y}  \Big) \|x-x_k\|,
\end{align*}
and using Assumption~\ref{fg:smooth}, yields 
\begin{align}\label{eq:particsasq}
\|\nabla &\Phi(x)  - \nabla \Phi(x_k)\|  \nonumber
\\\leq& \Big( L_x+\frac{2L_{xy}\widetilde L_{xy}}{\mu_y} + \frac{L_y\widetilde L^2_{xy}}{\mu_y^2}\Big)\|x-x_k\|  \nonumber
\\&\hspace{2cm}+ \Big(\frac{\widetilde L_{xy} \rho_{yy}}{\mu_y^2} +  \frac{\rho_{xy}}{\mu_y}\Big) \Big( 1+\frac{\widetilde L_{xy}}{\mu_y}  \Big)\|\nabla_y f(x_k,y^*(x_k))\| \|x-x_k\|,
\end{align}
where $(i)$ follows from the $\frac{\widetilde L_{xy}}{\mu_y}$-smoothness of $y^*(\cdot)$. 
Substituting \cref{maoxinadaoxx} into \cref{eq:particsasq} and using the definition of $\mathcal{N}_k$ in \cref{pf:ntationscs}, we have
\begin{align}\label{ijj:ntidesaca}
\|&\nabla \Phi(x)  - \nabla \Phi(x_k)\| \nonumber
\\&\leq \Big(\underbrace{ L_x+\frac{2L_{xy}\widetilde L_{xy}}{\mu_y} + \frac{L_y\widetilde L^2_{xy}}{\mu_y^2}+\Big(\frac{\widetilde L_{xy} \rho_{yy}}{\mu_y^2} +  \frac{\rho_{xy}}{\mu_y}\Big) \Big( 1+\frac{\widetilde L_{xy}}{\mu_y}  \Big)\mathcal{N}_k}_{L_{\Phi_k}}\Big)\|x-x_k\|. 
\end{align}
Based on \cref{ijj:ntidesaca}, we further obtain
\begin{align*}
|\Phi(x)-\Phi(x_k)&-\langle \nabla \Phi(x_k),x-x_k\rangle| 
\\=& \Big|\int_0^1 \langle \nabla\Phi(x_k+t(x-x_k)),x-x_k \rangle d t-\langle \nabla \Phi(x_k),x-x_k\rangle\Big| \nonumber
\\\leq& \Big| \int_0^1 \langle \nabla\Phi(x_k+t(x-x_k))-\nabla \Phi(x_k),x-x_k \rangle d t \Big| \nonumber
\\\leq & \Big| \int_0^1 \| \nabla\Phi(x_k+t(x-x_k))-\nabla \Phi(x_k)\| \|x-x_k\| d t \Big| \nonumber
\\\overset{(i)}\leq &\Big| \int_0^1 L_{\Phi_k}\|(x-x_k)\|^2 td t \Big| = \frac{L_{\Phi_k}}{2}\|x-x_k\|^2.
\end{align*}
Then, the proof is now complete. 
\end{proof}
Based on \Cref{le:hgestr} and \Cref{smoothness_Phis}, we are ready to prove \Cref{upper_srsr_withnoB}. 
\begin{proof}[{\bf Proof of \Cref{upper_srsr_withnoB}}]
 \Cref{alg:bioNoBG} conduct the following updates 
\begin{align}\label{updateRule}
z_{k+1}=&x_k -\frac{1}{L_\Phi} G_k, \nonumber
\\x_{k+1}=&\Big(1+\frac{\sqrt{\kappa_x}-1}{\sqrt{\kappa_x}+1}\Big)z_{k+1} - \frac{\sqrt{\kappa_x}-1}{\sqrt{\kappa_x}+1} z_k,
\end{align}
where the smoothness parameter $L_{\Phi}$ takes the form of 
{\small\begin{align}\label{wodishenamxd}
 L_{\Phi} =& L_x+\frac{2L_{xy}\widetilde L_{xy}}{\mu_y} + \frac{L_y\widetilde L^2_{xy}}{\mu_y^2}  +\Big(\frac{\widetilde L_{xy} \rho_{yy}}{\mu_y^2} +  \frac{\rho_{xy}}{\mu_y}\Big) \Big( 1+\frac{\widetilde L_{xy}}{\mu_y}  \Big)\|\nabla_y f( x^*,y^*(x^*))\|\nonumber
 \\&+3\Big(\frac{\widetilde L_{xy} \rho_{yy}}{\mu_y^2} +  \frac{\rho_{xy}}{\mu_y}\Big) \Big( 1+\frac{\widetilde L_{xy}}{\mu_y}  \Big)\Big(L_{xy}+\frac{L_y\widetilde L_{xy}}{\mu_y}\Big)\sqrt{\frac{2}{\mu_x}(\Phi(0) -\Phi(x^*))+ \|x^*\|^2+\frac{\epsilon}{\mu_x}} \nonumber
 \\= &\Theta\Big(\frac{1}{\mu_y^2}+\Big(\frac{ \rho_{yy}}{\mu_y^3} +  \frac{\rho_{xy}}{\mu_y^2}\Big)\Big(\|\nabla_y f( x^*,y^*(x^*))\|+\frac{\|x^*\|}{\mu_y}+\frac{\sqrt{\Phi(0)-\Phi(x^*)}}{\sqrt{\mu_x}\mu_y}\Big)  \Big)
\end{align}}
\hspace{-0.15cm}and  
$\kappa_x = \frac{L_\Phi}{\mu_x}$ is the condition number of the objective function $\Phi(x)$. 

The remaining proof is based on the modification of the results in Section 2.2.5 of \cite{nesterov2018lectures}. The key differences here are that  we need to prove the boundedness of the iterates  as the algorithm runs, and carefully handle the hypergradient estimation error in the convergence analysis for accelerated gradient methods. In specific, we first need to construct the estimate sequences as follows. 
\begin{align}\label{ssses_seq}
S_0(x) =& \Phi(x_0) +\frac{\mu_x}{2} \|x-x_0\|^2 \nonumber
\\S_{k+1}(x) = & \Big(1 -\frac{1}{\sqrt{\kappa_x}} \Big)S_{k}(x)  \nonumber
\\&+\frac{1}{\sqrt{\kappa_x}} \Big( \Phi(x_k) +\langle G_k,x-x_k\rangle + \frac{\mu_x}{2}\|x - x_k\|^2 +  \frac{\epsilon}{4}\Big).
\end{align}
Note that $\nabla^2S_0(x) = \mu_x I $  and $\nabla^2 S_{k+1}(x) = \big(1 -\frac{1}{\sqrt{\kappa_x}} \big)\nabla^2 S_{k}(x)+\frac{ \mu_x}{\sqrt{\kappa_x}}I$. Then, by induction, it can be verified that $\nabla^2 S_{k}(x)=\mu_x I$ for all $k=0,...,K$. This implies that $S_k(x)$ can be written as $S_k(x) = S_k^* + \frac{\mu_x}{2}\|x-v_k\|^2$, where $v_k = \argmin_{x\in\mathbb{R}^p}S_k(x)$. Next, we show by induction that
\begin{align}
&1.\quad \|z_k-x^*\|\leq \sqrt{\frac{2}{\mu_x}(\Phi(0) -\Phi(x^*))+ \|x^*\|^2+\frac{\epsilon}{\mu_x}} \text{ for all } k=0,...,K. \label{pocasca}
\\&2.\quad S_k^*\geq  \Phi(z_k) \text{ for all } k=0,...,K. \label{dasimaomaoss}
\end{align} 
Combining the first item~\cref{pocasca} above with the updates~\cref{updateRule} also implies the boundedness of sequence $x_k,k=0,...,K$ by noting that 
\begin{align}\label{boundednessofx_k}
\|x_{k}-x^*\| \leq &\Big(1+\frac{\sqrt{\kappa_x}-1}{\sqrt{\kappa_x}+1}\Big)\|z_{k} -x^*\|+ \frac{\sqrt{\kappa_x}-1}{\sqrt{\kappa_x}+1} \|z_{k-1}-x^*\| \nonumber
\\ \leq& 3\sqrt{\frac{2}{\mu_x}(\Phi(0) -\Phi(x^*))+ \|x^*\|^2+\frac{\epsilon}{\mu_x}}. 
\end{align}
Next, we prove the above two items \cref{pocasca} and \cref{dasimaomaoss} by induction. First, it can be verified that they hold for $k=0$ by noting that $\|z_0-x^*\|=\|x^*\|$ and $S_0^* = \Phi(x_0)$. Then, we suppose they hold for all $k=0,...,k^\prime$ and prove the $k^\prime+1$ case. 

Based on \Cref{smoothness_Phis}, we have, for all $k=0,...,k^\prime$,  
\begin{align}\label{dotahaonanss}
\Phi(z_{k+1})\leq& \Phi(x_k)+\langle \nabla \Phi(x_k),z_{k+1}-x_k\rangle +  \frac{L_{\Phi_k}}{2}\|z_{k+1}-x_k\|^2 \nonumber
\\\overset{(i)}=& \Phi(x_k)-\frac{1}{L_\Phi}\langle \nabla \Phi(x_k),G_k\rangle +  \frac{L_{\Phi_k}}{2L_\Phi^2}\|G_k\|^2,  
\end{align}
 where $(i)$ follows from the updates in~\cref{updateRule}. Note that for $k=0,...,k^\prime$, it is seen from \cref{boundednessofx_k} that the optimality gap  $\|x_k-x^*\|\leq 3\sqrt{\frac{2}{\mu_x}(\Phi(0) -\Phi(x^*))+ \|x^*\|^2+\frac{\epsilon}{\mu_x}}$, which, combined with the definition of $L_{\Phi_k}$ in \cref{maoxizaosscsa}, 
 yields $L_{\Phi_k} \leq L_\Phi$ for all $k=0,...,k^\prime$, where $L_\Phi$ is given by \cref{wodishenamxd}. Then, we obtain from \cref{dotahaonanss} that for all $k=0,...,k^\prime$, 
 \begin{align}\label{mxdhuaqingduo}
 \Phi(z_{k+1})\leq& \Phi(x_k)-\frac{1}{L_\Phi}\langle \nabla \Phi(x_k),G_k\rangle +  \frac{1}{2L_\Phi}\|G_k\|^2 \nonumber
 \\=& \Phi(x_k)-\frac{1}{L_\Phi}\| \nabla \Phi(x_k)\|^2 - \frac{1}{L_\Phi} \langle \nabla \Phi(x_k),G_k- \nabla \Phi(x_k)\rangle +  \frac{1}{2L_\Phi}\|G_k\|^2 \nonumber
 \\=& \Phi(x_k)-\frac{1}{L_\Phi}\| \nabla \Phi(x_k)\|^2 + \frac{1}{2L_\Phi} \|\nabla \Phi(x_k)\|^2 +  \frac{1}{2L_\Phi} \|G_k-\nabla \Phi(x_k)\|^2 \nonumber
 \\=& \Phi(x_k)-\frac{1}{2L_\Phi}\| \nabla \Phi(x_k)\|^2 +  \frac{1}{2L_\Phi} \|G_k-\nabla \Phi(x_k)\|^2,
 \end{align}
 which, in conjunction with the strong convexity of $\Phi(\cdot)$,  yields
 \begin{align}\label{wodialayacs}
 \Phi(z_{k+1})\leq &\Big( 1- \frac{1}{\sqrt{\kappa_x}}\Big) \Phi(z_k) +\Big( 1- \frac{1}{\sqrt{\kappa_x}}\Big) \langle \nabla\Phi(x_k),x_k-z_k\rangle + \frac{1}{\sqrt{\kappa_x}}\Phi(x_k) \nonumber
 \\&-\frac{1}{2L_\Phi}\| \nabla \Phi(x_k)\|^2 +  \frac{1}{2L_\Phi} \|G_k-\nabla \Phi(x_k)\|^2 \nonumber
 \\\overset{(i)}\leq &\Big( 1- \frac{1}{\sqrt{\kappa_x}}\Big) S_k^* +\Big( 1- \frac{1}{\sqrt{\kappa_x}}\Big) \langle \nabla\Phi(x_k),x_k-z_k\rangle + \frac{1}{\sqrt{\kappa_x}}\Phi(x_k) \nonumber
 \\&-\frac{1}{2L_\Phi}\| \nabla \Phi(x_k)\|^2 +  \frac{1}{2L_\Phi} \|G_k-\nabla \Phi(x_k)\|^2,
 \end{align}
 where $(i)$ follows from $S_k^*\geq  \Phi(z_k)$ for $k=0,...,k^\prime$. Next, based on the definition of $S_k(x)$ in \cref{ssses_seq} and taking derivative w.r.t.~$x$ on both sides of  \cref{ssses_seq}, we have 
 \begin{align}\label{gg_smidasssca}
 \nabla S_{k+1}(x) &\overset{(i)}= \Big( 1- \frac{1}{\sqrt{\kappa_x}} \Big)\nabla S_k(x) + \frac{1}{\sqrt{\kappa_x}} G_k +  \frac{\mu_x}{\sqrt{\kappa_x}}(x-x_k) \nonumber
 \\&=\mu_x\Big( 1- \frac{1}{\sqrt{\kappa_x}} \Big)(x-v_k)+\frac{1}{\sqrt{\kappa_x}} G_k +  \frac{\mu_x}{\sqrt{\kappa_x}}(x-x_k), 
 \end{align}
 where $(i)$ follows from $S_k(x) = S_k^* + \frac{\mu_x}{2}\|x-v_k\|^2$. Noting that $\nabla S_{k+1}(v_{k+1})= 0$, we obtain from  \cref{gg_smidasssca} that 
 \begin{align*}
 \mu_x\Big( 1- \frac{1}{\sqrt{\kappa_x}} \Big)(v_{k+1}-v_k)+\frac{1}{\sqrt{\kappa_x}} G_k +  \frac{\mu_x}{\sqrt{\kappa_x}}(v_{k+1}-x_k) = 0,
 \end{align*}
 which yields 
 \begin{align}\label{mamamiyasscas}
 v_{k+1} = \Big(1-\frac{1}{\sqrt{\kappa_x}} \Big)v_k + \frac{1}{\sqrt{\kappa_x}} x_k - \frac{1}{\mu_x\sqrt{\kappa_x}} G_k.
 \end{align}
 Based on \cref{ssses_seq} and using $S_k(x) = S_k^* + \frac{\mu_x}{2}\|x-v_k\|^2$,  we have 
 \begin{align*}
S_{k+1}^*  + \frac{\mu_x}{2}\|x_k-v_{k+1}\|^2 = \Big(1-\frac{1}{\sqrt{\kappa_x}} \Big)\Big(S_{k}^*  + \frac{\mu_x}{2}\|x_k-v_{k}\|^2 \Big) + \frac{1}{\sqrt{\kappa_x}}\Phi(x_k) + \frac{\epsilon}{4\sqrt{\kappa_x}},
 \end{align*}
 which, in conjunction with \cref{mamamiyasscas}, yields
 \begin{align}\label{xingyunwos}
 S_{k+1}^*  =&  \Big(1-\frac{1}{\sqrt{\kappa_x}} \Big) S_{k}^*  +  \Big(1-\frac{1}{\sqrt{\kappa_x}} \Big)  \frac{\mu_x}{2}\|x_k-v_k\|^2 + \frac{1}{\sqrt{\kappa_x}}\Phi(x_k) + \frac{\epsilon}{4\sqrt{\kappa_x}} \nonumber
 \\&- \Big(1-\frac{1}{\sqrt{\kappa_x}} \Big)^2\frac{\mu_x}{2}\|x_k-v_k\|^2 - \frac{1}{2\mu_x\kappa_x}\|G_k\|^2 +  \Big(1-\frac{1}{\sqrt{\kappa_x}} \Big)\frac{1}{\sqrt{\kappa_x}}\langle v_k-x_k,G_k \rangle \nonumber
 \\ = & \Big(1-\frac{1}{\sqrt{\kappa_x}} \Big) S_{k}^*  + \Big(1-\frac{1}{\sqrt{\kappa_x}} \Big) \frac{1}{\sqrt{\kappa_x}} \frac{\mu_x}{2}\|x_k-v_k\|^2 +\frac{1}{\sqrt{\kappa_x}}\Phi(x_k) + \frac{\epsilon}{4\sqrt{\kappa_x}}\nonumber
 \\&- \frac{1}{2\mu_x\kappa_x}\|G_k\|^2 +  \Big(1-\frac{1}{\sqrt{\kappa_x}} \Big)\frac{1}{\sqrt{\kappa_x}}\langle v_k-x_k,G_k \rangle.
 \end{align}  
 Based on the definition of $\kappa_x$, we simplify \cref{xingyunwos} to 
 \begin{align}\label{yifenyimiaos}
  S_{k+1}^* \geq & \Big(1-\frac{1}{\sqrt{\kappa_x}} \Big) S_{k}^* +\frac{1}{\sqrt{\kappa_x}}\Phi(x_k) + \frac{\epsilon}{4\sqrt{\kappa_x}}- \frac{1}{2L_\Phi}\|G_k\|^2  \nonumber
  \\&+  \Big(1-\frac{1}{\sqrt{\kappa_x}} \Big)\frac{1}{\sqrt{\kappa_x}}\langle v_k-x_k,G_k \rangle.
 \end{align}
 Next, we prove $v_k-x_k = \sqrt{\kappa_x}(x_k-z_k)$ by induction. First note that this equality holds for $k=0$ based on the fact that $v_0-x_0=\sqrt{\kappa_x}(x_0-z_0)=0$. Then, suppose that it holds for the $k$ case, and for the $k+1$ case, we obtain from \cref{mamamiyasscas} that 
 \begin{align*}
  v_{k+1}& -x_{k+1}
  \\= &\Big(1-\frac{1}{\sqrt{\kappa_x}} \Big)v_k + \frac{1}{\sqrt{\kappa_x}} x_k-x_{k+1} - \frac{1}{\mu_x\sqrt{\kappa_x}} G_k \nonumber
  \\ \overset{(i)}=& \Big(1-\frac{1}{\sqrt{\kappa_x}} \Big)\Big(1+\sqrt{\kappa_x} \Big) x_k -\Big(1-\frac{1}{\sqrt{\kappa_x}} \Big)\sqrt{\kappa_x}z_k +\frac{1}{\sqrt{\kappa_x}} x_k -x_{k+1} -\frac{1}{\mu_x\sqrt{\kappa_x}} G_k \nonumber
  \\=& \sqrt{\kappa_x} \Big(x_k-\frac{1}{L_\Phi}G_k\Big) -(\sqrt{\kappa_x}-1) z_k -x_{k+1} \nonumber
  \\\overset{(ii)}=&\sqrt{\kappa_x} (x_{k+1}-z_{k+1}),
 \end{align*}
 where $(i)$ follows from $v_k-x_k = \sqrt{\kappa_x}(x_k-z_k)$ and $(ii)$ follows from the updating step in \cref{updateRule}. Then, by induction, we have $v_k-x_k = \sqrt{\kappa_x}(x_k-z_k)$ holds for all $k=0,...,K$. Combining this equality with \cref{yifenyimiaos}, we have
 \begin{align}\label{wocayoudiandaca}
 S_{k+1}^* \geq & \big(1-\frac{1}{\sqrt{\kappa_x}} \big) S_{k}^* +\frac{1}{\sqrt{\kappa_x}}\Phi(x_k) + \frac{\epsilon}{4\sqrt{\kappa_x}}- \frac{1}{2L_\Phi}\|G_k\|^2 +  \big(1-\frac{1}{\sqrt{\kappa_x}} \big)\langle x_k-z_k,G_k \rangle \nonumber
 \\=& \Big(1-\frac{1}{\sqrt{\kappa_x}} \Big) S_{k}^* +\frac{1}{\sqrt{\kappa_x}}\Phi(x_k) + \frac{\epsilon}{4\sqrt{\kappa_x}}- \frac{1}{2L_\Phi}\|\nabla\Phi(x_k)\|^2  \nonumber
 \\&  +\Big(1-\frac{1}{\sqrt{\kappa_x}} \Big)\langle x_k-z_k,\nabla \Phi(x_k) \rangle +  \Big(1-\frac{1}{\sqrt{\kappa_x}} \Big)\langle x_k-z_k,G_k-\nabla \Phi(x_k)\rangle \nonumber
 \\& - \frac{1}{2L_\Phi}\|G_k-\nabla\Phi(x_k)\|^2 - \frac{1}{L_\Phi}\langle G_k-\nabla \Phi(x_k), \nabla\Phi(x_k)\rangle \nonumber
\\\overset{(i)}\geq& \big(1-\frac{1}{\sqrt{\kappa_x}} \big) S_{k}^* +\frac{1}{\sqrt{\kappa_x}}\Phi(x_k) - \frac{1}{2L_\Phi}\|\nabla\Phi(x_k)\|^2 +\big(1-\frac{1}{\sqrt{\kappa_x}} \big)\langle x_k-z_k,\nabla \Phi(x_k) \rangle  \nonumber
 \\&+\frac{\epsilon}{4\sqrt{\kappa_x}} -  \Big(1-\frac{1}{\sqrt{\kappa_x}} \Big)\| x_k-z_k\|\|G_k-\nabla \Phi(x_k)\| - \frac{1}{2L_\Phi}\|G_k-\nabla\Phi(x_k)\|^2 \nonumber
 \\&-\| G_k-\nabla \Phi(x_k)\|\| x_k-x^*\|
 \end{align}
 where $(i)$ follows from the smoothness of $\Phi(\cdot)$. Based on $$\|z_k-x^*\|\leq\sqrt{\frac{2}{\mu_x}(\Phi(0) -\Phi(x^*))+ \|x^*\|^2+\frac{\epsilon}{\mu_x}}$$ and $\|x_k-x^*\|<3\sqrt{\frac{2}{\mu_x}(\Phi(0) -\Phi(x^*))+ \|x^*\|^2+\frac{\epsilon}{\mu_x}}$ for $k=0,...,k^\prime$, we obtain from \cref{wocayoudiandaca} that 
 \begin{align}\label{wocaolalascs}
 S_{k+1}^* \geq & \big(1-\frac{1}{\sqrt{\kappa_x}} \big) S_{k}^* +\frac{1}{\sqrt{\kappa_x}}\Phi(x_k) - \frac{1}{2L_\Phi}\|\nabla\Phi(x_k)\|^2 +\big(1-\frac{1}{\sqrt{\kappa_x}} \big)\langle x_k-z_k,\nabla \Phi(x_k) \rangle  \nonumber
 \\& + \frac{\epsilon}{4\sqrt{\kappa_x}}-  \Big(7-\frac{4}{\sqrt{\kappa_x}} \Big)\sqrt{\frac{2}{\mu_x}(\Phi(0) -\Phi(x^*))+ \|x^*\|^2+\frac{\epsilon}{\mu_x}}\|G_k-\nabla \Phi(x_k)\| \nonumber
 \\& - \frac{1}{2L_\Phi}\|G_k-\nabla\Phi(x_k)\|^2.
 \end{align}
Next, we upper-bound the hypergradient estimation error $\|G_k-\nabla\Phi(x_k)\|$ in \cref{wocaolalascs}.  Based on \Cref{le:hgestr}, we have 
\begin{align*}
 \|G_k-&\nabla \Phi( x_k)\|  \nonumber 
\\\leq &\sqrt{\frac{\widetilde L_y +\mu_y}{\mu_y}} \Big(L_y +\frac{2\widetilde L_{xy}L_y}{\mu_y} +\Big(\frac{\rho_{xy}}{\mu_y}+\frac{\widetilde L_{xy}\rho_{yy}}{\mu_y^2}\Big)\mathcal{N}_k\Big) \mathcal{M}_k \exp\Big(-\frac{N}{2\sqrt{\kappa_y}}\Big) \nonumber
\\&+\frac{\widetilde L_{xy}}{\mu_y}\Big(\frac{\sqrt{\kappa_y}-1}{\sqrt{\kappa_y}+1}\Big)^M\mathcal{N}_k, \nonumber
\end{align*}
which, combined with the definitions of $\mathcal{M}_k,\mathcal{N}_k$ in \cref{pf:ntationscs} and $\|x_{k}-x^*\|\leq 3\sqrt{\frac{2}{\mu_x}(\Phi(0) -\Phi(x^*))+ \|x^*\|^2+\frac{\epsilon}{\mu_x}}$ for $k=0,...,k^\prime$, yields 
\begin{align}
 \|G_k-\nabla \Phi( x_k)\| \leq &\sqrt{\frac{\widetilde L_y +\mu_y}{\mu_y}} \Big(L_y +\frac{2\widetilde L_{xy}L_y}{\mu_y} +\Big(\frac{\rho_{xy}}{\mu_y}+\frac{\widetilde L_{xy}\rho_{yy}}{\mu_y^2}\Big)\mathcal{N}_*\Big) \mathcal{M}_* \exp\frac{-N}{2\sqrt{\kappa_y}} \nonumber
\\&+\frac{\widetilde L_{xy}}{\mu_y}\Big(\frac{\sqrt{\kappa_y}-1}{\sqrt{\kappa_y}+1}\Big)^M\mathcal{N}_*,
\end{align}
where constants $\mathcal{M}_*,\mathcal{N}_*$ are defined in \cref{pf:ntationscs}. Choose 
{\small
\begin{align}\label{wangdehuas}
N&=\Theta\Big(\sqrt{\kappa_y}\log \Big(\frac{\mathcal{M}_*(\mathcal{N}_*+\mu_y)}{\mu_x^{0.25}\mu_y^{2.5}\sqrt{\epsilon L_\Phi}}+\frac{\mathcal{M}_*(\mathcal{N}_*+\mu_y)\sqrt{L_\Phi}(\Phi(0) -\Phi(x^*)+\mu_x^{0.5}\|x^*\|+\epsilon)}{\mu_x\mu_y^{2.5}\epsilon}\Big)\Big), \nonumber
\\M&=\Theta\Big(\sqrt{\kappa_y}\log \Big(\frac{\mathcal{N}_*}{\mu_x^{0.25}\mu_y\sqrt{\epsilon L_\Phi}}+\frac{\mathcal{N}_*\sqrt{L_\Phi}(\Phi(0) -\Phi(x^*)+\mu_x^{0.5}\|x^*\|+\epsilon)}{\mu_x\mu_y\epsilon}\Big)\Big).
\end{align} }
\hspace{-0.12cm}In other words, $M$ and $N$ scale linearly with $\sqrt{\kappa_x}$ and depend only logarithmically on other constants such as $\mu_x,\mu_y,\|x^*\|,\|y^*(x^*)\|$ and so on.

 Then, we have $\|G_k-\nabla \Phi( x_k)\| \leq \frac{\sqrt{\epsilon L_\Phi}}{2\sqrt{2}\kappa_x^{1/4}}$ and $$\big(7-\frac{4}{\sqrt{\kappa_x}} \big)\sqrt{\frac{2}{\mu_x}(\Phi(0) -\Phi(x^*))+ \|x^*\|^2+\frac{\epsilon}{\mu_x}}\|G_k-\nabla \Phi( x_k)\| \leq \frac{\epsilon}{8\sqrt{\kappa_x}}.$$   Substituting these two inequalities into \cref{wocaolalascs} yields, for any $k=0,...,k^\prime$,
\begin{align}\label{jigyiscasesc}
 S_{k+1}^* \geq & \Big(1-\frac{1}{\sqrt{\kappa_x}} \Big) S_{k}^* +\frac{1}{\sqrt{\kappa_x}}\Phi(x_k) - \frac{1}{2L_\Phi}\|\nabla\Phi(x_k)\|^2 \nonumber
 \\&+\Big(1-\frac{1}{\sqrt{\kappa_x}} \Big)\langle x_k-z_k,\nabla \Phi(x_k) \rangle  + \frac{\epsilon}{16\sqrt{\kappa_x}}.
\end{align}
Then, using $\|G_k-\nabla \Phi( x_k)\| \leq \frac{\sqrt{\epsilon L_\Phi}}{2\sqrt{2}\kappa_x^{1/4}}$ in \cref{wodialayacs}, we have, for any $k=0,...,k^\prime$
\begin{align*}
\Phi(z_{k+1})\leq &\Big( 1- \frac{1}{\sqrt{\kappa_x}}\Big) S_k^* +\Big( 1- \frac{1}{\sqrt{\kappa_x}}\Big) \langle \nabla\Phi(x_k),x_k-z_k\rangle + \frac{1}{\sqrt{\kappa_x}}\Phi(x_k) \nonumber
 \\&-\frac{1}{2L_\Phi}\| \nabla \Phi(x_k)\|^2 + \frac{\epsilon}{16\sqrt{\kappa_x}},
\end{align*}
which, in conjunction with \cref{jigyiscasesc}, yields $S_{k^\prime+1}^*\geq \Phi(z_{k^\prime+1})$.  Then, by induction, we finish the proof of the second item \cref{dasimaomaoss}. To prove the first item \cref{pocasca}, letting $x=x^*$ in \cref{ssses_seq} yields, for $x=0,...,k^\prime$, 
\begin{align}\label{dalaoqiuings}
S_{k+1}&(x^*) \nonumber
\\= & \Big(1 -\frac{1}{\sqrt{\kappa_x}} \Big)S_{k}(x^*)  +\frac{1}{\sqrt{\kappa_x}} \big( \Phi(x_k) +\langle \nabla\Phi(x_k),x^*-x_k\rangle + \frac{\mu_x}{2}\|x^* - x_k\|^2 + \frac{ \epsilon}{4}\big) \nonumber
\\&+\frac{1}{\sqrt{\kappa_x}}\langle G_k-\nabla\Phi(x_k),x^*-x_k\rangle \nonumber
\\\leq&  \Big(1 -\frac{1}{\sqrt{\kappa_x}} \Big)S_{k}(x^*) +\frac{1}{\sqrt{\kappa_x}} \Phi(x^*) +\frac{\epsilon}{4\sqrt{\kappa_x}}  + \frac{1}{\sqrt{\kappa_x}} \|x_k-x^*\|\|G_k-\nabla\Phi(x_k)\|\nonumber
\\\overset{(i)}\leq & \Big(1 -\frac{1}{\sqrt{\kappa_x}} \Big)S_{k}(x^*) +\frac{1}{\sqrt{\kappa_x}} \Phi(x^*) +\frac{\epsilon}{2\sqrt{\kappa_x}},  
\end{align} 
where $(i)$ follows from the inequality $\|x_k-x^*\|\|G_k-\nabla\Phi(x_k)\|\leq  \frac{\epsilon}{8\sqrt{\kappa_x}}/(7-\frac{4}{\sqrt{\kappa_x}})<\frac{\epsilon}{24\sqrt{\kappa_x}}<\frac{\epsilon}{4}$. Subtracting both sides of \cref{dalaoqiuings} by $\Phi(x^*)$ yields, for all $k=0,...,k^\prime$,
\begin{align}\label{aihenqingchous}
S_{k+1}(x^*) - \Phi(x^*) \leq \Big(1 -\frac{1}{\sqrt{\kappa_x}} \Big)(S_{k}(x^*)-\Phi(x^*)) +\frac{\epsilon}{2\sqrt{\kappa_x}}.
\end{align}
Telescoping \cref{aihenqingchous} over $k$ from $0$ to $k^\prime$ and using $S_0(x^*) = \Phi(0)+\frac{\mu_x}{2} \|x^*\|^2$, we have 
\begin{align*}
S_{k^\prime+1}(x^*) - \Phi(x^*) &\leq  \Big(1 -\frac{1}{\sqrt{\kappa_x}} \Big)^{k^\prime+1}( \Phi(0) -\Phi(x^*)+\frac{\mu_x}{2} \|x^*\|^2)+\frac{\epsilon}{2} 
\\&\leq \Phi(0) -\Phi(x^*)+\frac{\mu_x}{2} \|x^*\|^2+\frac{\epsilon}{2},
\end{align*} 
which, in conjunction with $S_{k^\prime+1}(x^*)\geq S_{k^\prime+1}^*\geq \Phi(z_{k^\prime+1})$ and $\Phi(z_{k^\prime+1})-\Phi(x^*)\geq \frac{\mu_x}{2}\|z_{k^\prime+1}-x^*\|^2$, yields
\begin{align*}
\|z^{k^\prime+1}-x^*\|\leq \sqrt{\frac{2}{\mu_x}\Phi(0) -\Phi(x^*)+\|x^*\|^2+\frac{\epsilon}{\mu_x}}.
\end{align*}
Then, by induction, we finish the proof of the first item \cref{pocasca}.  Therefore, based on  \cref{pocasca} and \cref{dasimaomaoss} and using an approach similar to \cref{aihenqingchous}, we have
\begin{align}\label{heiyeibaizhoussc}
\Phi(z_K)- \Phi(x^*)&\leq S_{K}(x^*) - \Phi(x^*) \nonumber
\\&\leq \Big(1 -\frac{1}{\sqrt{\kappa_x}} \Big)^{K}(\Phi(0) -\Phi(x^*)+\frac{\mu_x}{2} \|x^*\|^2) +\frac{\epsilon}{2}.
\end{align}
Then, in order to achieve $\Phi(z_K)- \Phi(x^*)\leq S_{K}(x^*) - \Phi(x^*) \leq \epsilon$, it requires at most 
{\footnotesize
\begin{align}\label{anqilababascs}
K \leq& \mathcal{O}\Big( \sqrt{\frac{L_\Phi}{\mu_x}}\log\Big(\frac{\Phi(0) -\Phi(x^*)+\frac{\mu_x}{2} \|x^*\|^2}{\epsilon}\Big)\Big) \nonumber
\\\leq& \mathcal{\widetilde O}\Big(\frac{1}{\mu_x^{0.5}\mu_y}+\Big(\frac{ \sqrt{\rho_{yy}}}{\mu_x^{0.5}\mu_y^{1.5}} +  \frac{\sqrt{\rho_{xy}}}{\mu_x^{0.5}\mu_y}\Big)\sqrt{\|\nabla_y f( x^*,y^*(x^*))\|+\frac{\|x^*\|}{\mu_y}+\frac{\sqrt{\Phi(0)-\Phi(x^*)}}{\sqrt{\mu_x}\mu_y}}\Big). 
\end{align}}
\hspace{-0.13cm}Based on the choice of $M=N=\Theta(\sqrt{\kappa_y})$,  the complexity of \Cref{alg:bioNoBG} is given by 
{\footnotesize
\begin{align*}
\mathcal{C}_{\text{\normalfont sub}}(\mathcal{A},\epsilon) &\leq \mathcal{O}(n_J+n_H + n_G)\leq \mathcal{O}(K+KM+KN)   \nonumber
\\\leq& \mathcal{\widetilde O}\Big(\frac{\widetilde L_y^{0.5}}{\mu_x^{0.5}\mu_y^{1.5}}+\Big(\frac{ (\rho_{yy}\widetilde L_y)^{0.5}}{\mu_x^{0.5}\mu_y^{2}} +  \frac{(\rho_{xy}\widetilde L_y)^{0.5}}{\mu_x^{0.5}\mu_y^{1.5}}\Big)\sqrt{\|\nabla_y f( x^*,y^*(x^*))\|+\frac{\|x^*\|}{\mu_y}+\frac{\sqrt{\Phi(0)-\Phi(x^*)}}{\sqrt{\mu_x}\mu_y}}\Big),
\end{align*}}
\hspace{-0.13cm}which finish the proof. 
\end{proof}

\section{Proof of \Cref{coro:quadaticSr}}\label{proof:coUsrwB}
The proof follows a procedure similar to that for \Cref{upper_srsr_withnoB} except that the smoothness parameter of $\Phi(\cdot)$ at iterate $x_k$  and the hypergradient estimation error $\|G_k-\nabla\Phi(x_k)\|$ are different. In specific, for the quadratic inner problem, we have that there exist constant matrices $H,J$ such that 
$\nabla_y^2 g (x,y) 	\equiv H, \nabla_x\nabla_y g(x,y) \equiv J, \forall  x\in\mathbb{R}^p, y\in \mathbb{R}^q$. Then, based on the form of $\nabla \Phi(x)$ in \cref{hgforms}, we have 
\begin{align*}
\|\nabla&\Phi(x_1) - \nabla\Phi(x_2)\| \nonumber
\\\leq& \|\nabla_x f(x_1,y^*(x_1)) -\nabla_x f(x_2,y^*(x_2))  \| \nonumber
\\&+ \|JH^{-1}\nabla_y f(x_1,y^*(x_1)) -JH^{-1}\nabla_y f(x_2,y^*(x_2))\| \nonumber
\\  \leq & L_x\|x_1-x_2\| + L_{xy}\|y^*(x_1)-y^*(x_2)\| + \frac{\widetilde L_{xy}}{\mu_y}(L_{xy}\|x_1-x_2\|+L_y\|y^*(x_1)-y^*(x_2)\|)
\end{align*}
which, in conjunction with $\|y^*(x_1)-y^*(x_2)\|\leq \frac{\widetilde L_{xy}}{\mu_y} \|x_1-x_2\|$, yields 
\begin{align}\label{co:qiguqiaopwc}
\|\nabla\Phi(x_1) - \nabla\Phi(x_2)\| \leq \Big(\underbrace{ L_x + \frac{2\widetilde L_{xy}L_{xy}}{\mu_y} +\frac{L_y\widetilde L_{xy}^2}{\mu_y^2} }_{L_\Phi} \Big) \|x_1-x_2\|.
\end{align}
Note that \cref{co:qiguqiaopwc} shows that the objective function $\Phi(\cdot)$ is globally smooth, i.e., the smoothness parameter is bounded at all $x\in\mathbb{R}^p$. This is different from the proof in \Cref{upper_srsr_withnoB}, where the smoothness parameter is unbound at all $x\in\mathbb{R}^p$, but can be bounded at all iterates $x_k,k=0,...,K$ along the optimization path of the algorithm. Therefore, the proof for this quadratic special case is simpler. 

We next upper-bound the hypergradient estimation error $\|G_k-\nabla \Phi(x_k)\|$. Using an approach similar to \cref{jingyikeai}, we have 
\begin{align}\label{co:huxiuwancong}
\|G_k&-\nabla \Phi(x_k)\| \nonumber
\\\leq & L_y \|y^*(x_k)-y_k^N\| + \widetilde L_{xy} \|v_k^M- H^{-1}\nabla_y f( x_k,y^N_k) \| \nonumber
\\&+ \widetilde L_{xy}\big\|H^{-1}\nabla_y f( x_k,y^N_k)-H^{-1}\nabla_y f(x_k,y^*(x_k)) \big\|\nonumber
\\\leq & \Big(L_y +\frac{\widetilde L_{xy}L_y}{\mu_y} \Big)\|y_k^N-y^*(x_k)\| + \widetilde L_{xy} \|v_k^M- H^{-1}\nabla_y f( x_k,y^N_k) \|  \nonumber
\\\leq& \Big(L_y +\frac{\widetilde L_{xy}L_y}{\mu_y} \Big)\|y_k^N-y^*(x_k)\|+\frac{\widetilde L_{xy}}{\mu_y}\left(\frac{\sqrt{\kappa_y}-1}{\sqrt{\kappa_y}+1}\right)^M\|\nabla_y f( x_k,y^*(x_k))\| \nonumber
\\\leq &\sqrt{\frac{\widetilde L_y +\mu_y}{\mu_y}} \Big(L_y +\frac{\widetilde L_{xy}L_y}{\mu_y}\Big) \mathcal{M}_* \exp\Big(-\frac{N}{2\sqrt{\kappa_y}}\Big) +\frac{\widetilde L_{xy}}{\mu_y}\Big(\frac{\sqrt{\kappa_y}-1}{\sqrt{\kappa_y}+1}\Big)^M\mathcal{N}_*,
\end{align}
where $\mathcal{M}_*$ and $\mathcal{N}_*$ are given by \cref{pf:ntationscs}. 
Based on \cref{co:qiguqiaopwc} and \cref{co:huxiuwancong}, we choose
\begin{itemize}
\item $N=\Theta(\sqrt{\kappa_y}\log (\frac{\mathcal{M}_*}{\mu_x^{0.25}\mu_y^{1.5}\sqrt{\epsilon L_\Phi}}+\frac{\mathcal{M}_*\sqrt{L_\Phi}(\Phi(0) -\Phi(x^*)+\mu_x^{0.5}\|x^*\|+\epsilon)}{\mu_x\mu_y^{1.5}\epsilon}))$
\item $M=\Theta(\sqrt{\kappa_y}\log (\frac{\mathcal{N}_*}{\mu_x^{0.25}\mu_y\sqrt{\epsilon L_\Phi}}+\frac{\mathcal{N}_*\sqrt{L_\Phi}(\Phi(0) -\Phi(x^*)+\mu_x^{0.5}\|x^*\|+\epsilon)}{\mu_x\mu_y\epsilon})).$
\end{itemize} 
Then, using an approach similar to \cref{heiyeibaizhoussc} with $\rho_{xy}=\rho_{yy}=0$, we have 
\begin{align}
\Phi(z_K)- \Phi(x^*) \leq \Big(1 -\sqrt{\frac{\mu_x}{L_\Phi}} \Big)^{K}(\Phi(0) -\Phi(x^*)+\frac{\mu_x}{2} \|x^*\|^2) +\frac{\epsilon}{2},
\end{align}
where $L_\Phi$ is given in \cref{co:qiguqiaopwc}. Then, in order to achieve $\Phi(z_K)- \Phi(x^*) \leq \epsilon$, it requires at most 
\begin{align*}
\mathcal{C}_{\text{\normalfont sub}}(\mathcal{A},\epsilon) \leq &\mathcal{O}(n_J+n_H + n_G)\leq \mathcal{O}(K+KM+KN)   \nonumber
\\\leq& \mathcal{O}\Big(\sqrt{\frac{\widetilde L_y}{\mu_x\mu_y^3}}\log\, {\small \text{poly}(\mu_x,\mu_y,\|x^*\|,\Phi(0)-\Phi(x^*),\|\nabla_y f(x^*,y^*(x^*))\|)}\Big),
\end{align*}
which finishes the proof. 


\section{Proof of \Cref{th:upper_csc1sc}}

Recall that $\widetilde \Phi(\cdot)=\widetilde f(x,y^*(x))$ with  $\widetilde f(x,y) = f(x,y) +\frac{\epsilon}{2R} \|x\|^2$. Then, we have $\widetilde\Phi(x) = \Phi(x) +\frac{\epsilon}{2R} \|x\|^2$ is strongly-convex with parameter $\mu_x=\frac{\epsilon}{R}$. Note that the smoothness parameters of $\widetilde f(x,y)$ are the same as those of $f(x,y)$ except that $L_x$ in \cref{def:first} becomes $L_x+ \frac{\epsilon}{R}$ for $\widetilde f(x,y)$. 
Let $x^*\in\argmin_{x\in\mathbb{R}^p}\Phi(x)$ be one minimizer of the original objective function $\Phi(\cdot)$ and let $\widetilde x^*=\argmin_{x\in\mathbb{R}^p}\widetilde\Phi(x)$ be the minimizer of the regularized object function $\widetilde \Phi(\cdot)$.  We next characterize some useful inequalities between $x^*$ and $\widetilde x^*$. Based on the definition of $x^*$ and $\widetilde x^*$, we have $\nabla \widetilde \Phi(\widetilde x^*) = 0$ and $\nabla\widetilde\Phi( x^*) = \nabla\Phi(x^*) + \frac{\epsilon}{R} x^* = \frac{\epsilon}{R} x^*  $, which, combined with the strong convexity of $\widetilde \Phi(\cdot)$,  implies that $\frac{\epsilon}{R}\|x^*-\widetilde x^*\|\leq\|\nabla \widetilde \Phi(\widetilde x^*) -\nabla\widetilde\Phi( x^*) \|=  \frac{\epsilon}{R} \|x^*\|$ and hence $\|\widetilde x^*\|\leq2\|x^*\|$. Similarly, we have 
\begin{align}\label{ggsmidacposcs1}
\|y^*(\widetilde x^*)\|&\leq \|y^*(x^*)\|+\frac{3\widetilde L_{xy}}{\mu_y}\|x^*\|, \nonumber
\\\|\nabla_y \widetilde f(\widetilde x^*,y^*(\widetilde x^*))\|&\leq \|\nabla_y f(\widetilde x^*,y^*(\widetilde x^*))\| + \frac{\epsilon}{R}\|\widetilde x^*\| \nonumber
\\&\leq\|\nabla_y f(x^*,y^*(x^*))\| + \Big( 3L_{xy}+\frac{3L_y\widetilde L_{xy}}{\mu_y}+\frac{2\epsilon}{R}\Big)\|x^*\|  \nonumber
\\\widetilde \Phi(0) -\widetilde \Phi(\widetilde x^*) &= \Phi(0)-\Phi(\widetilde x^*) - \frac{\epsilon}{2R}\|\widetilde x^*\|^2 \overset{(i)}\leq \Phi(0) -\Phi(x^*),
\end{align}
where $(i)$ follows from the definition of $x^*\in\argmin_x\Phi(x)$.  

Let $L_{\widetilde \Phi}$ be one smoothness parameter of function $\widetilde \Phi(\cdot)$, which takes the same form as $L_\Phi$ in \cref{wodishenamxd} except that $L_x,f,x^*$ and $\Phi$ become $L_x+\frac{\epsilon}{R},\widetilde f,\widetilde x^*$ and $\widetilde \Phi$ in \cref{wodishenamxd}, respectively. Similarly to \cref{wangdehuas}, we choose  
\begin{align}\label{michiganletgo}
N=&\Theta(\sqrt{\kappa_y}\log (\text{poly}(\epsilon,\mu_x,\mu_y,\|\widetilde x^*\|,\|y^*(\widetilde x^*)\|,\|\nabla_y \widetilde f(\widetilde x^*,y^*(\widetilde x^*))\|,\widetilde \Phi(0) -\widetilde \Phi(\widetilde x^*)))), \nonumber
\\M=&\Theta(\sqrt{\kappa_y}\log (\text{poly}(\epsilon,\mu_x,\mu_y,\|\widetilde x^*\|,\|y^*(\widetilde x^*)\|,\|\nabla_y \widetilde f(\widetilde x^*,y^*(\widetilde x^*))\|,\widetilde \Phi(0) -\widetilde \Phi(\widetilde x^*) ))).
\end{align} 

We first prove the case when the convergence is measured in term of the suboptimality gap. Note that in this case we choose $R=B^2$. Using an approach similar to \cref{heiyeibaizhoussc} in the proof of \Cref{upper_srsr_withnoB} with $\epsilon$ and $\mu_x$ being replaced by $\epsilon/2$ and  $\frac{\epsilon}{B^2}$, respectively, we have 
\begin{align*}
\widetilde\Phi(z_K)- \widetilde\Phi(\widetilde x^*) \leq \Big(1 -\sqrt{\frac{\epsilon}{B^2L_{\widetilde \Phi}}} \Big)^{K}(\widetilde\Phi(0) -\widetilde\Phi(\widetilde x^*)+\frac{\epsilon}{2B^2} \|\widetilde x^*\|^2) +\frac{\epsilon}{4}.
\end{align*}
which, combined with $\widetilde\Phi(z_K)\geq\Phi(z_K)$ and $\widetilde\Phi(\widetilde x^*)\leq\widetilde\Phi(x^*)=\Phi(x^*)+\frac{\epsilon}{2B^2}\|x^*\|^2$, yields
{\small\begin{align}\label{ca:youdiannaoketongs}
\Phi(z_K) - \Phi(x^*)\leq \Big(1 -\sqrt{\frac{\epsilon}{B^2L_{\widetilde \Phi}}} \Big)^{K}(\widetilde\Phi(0) -\widetilde\Phi(\widetilde x^*)+\frac{\epsilon}{2B^2} \|\widetilde x^*\|^2) +\frac{\epsilon}{4}+\frac{\epsilon}{2B^2}\|x^*\|^2.
\end{align}}
\hspace{-0.13cm}Recall that $\|x^*\|=B$. Similarly to \cref{anqilababascs}, we choose
{\footnotesize
\begin{align}\label{Kwocayoudianda}
K =& \Theta\Big( \sqrt{\frac{B^2L_{\widetilde \Phi}}{\epsilon}}\log\Big(\frac{\widetilde\Phi(0) -\widetilde\Phi(\widetilde x^*)+\frac{\epsilon}{2B^2} \|\widetilde x^*\|^2}{\epsilon}\Big)\Big) \nonumber
\\=&\widetilde \Theta \Big(\sqrt{\frac{B^2}{\epsilon\mu_y^2}}+\Big(\sqrt{\frac{B^2\rho_{yy}}{\epsilon\mu_y^{3}}} +  \sqrt{\frac{B^2\rho_{xy}}{\epsilon\mu_y^2}}\Big)\sqrt{\|\nabla_y \widetilde f(\widetilde x^*,y^*(\widetilde x^*))\|+\frac{\|\widetilde x^*\|}{\mu_y}+\frac{\sqrt{B^2(\widetilde\Phi(0)-\widetilde\Phi(\widetilde x^*))}}{\sqrt{\epsilon}\mu_y}}\Big).
\end{align}}
\hspace{-0.13cm}Then,  we obtain from \cref{ca:youdiannaoketongs} that $\Phi(z_K) - \Phi(x^*)\leq \epsilon$, and the complexity $\mathcal{C}_{\text{\normalfont sub}}(\mathcal{A},\epsilon)$ after substituting \cref{ggsmidacposcs1} into \cref{michiganletgo} and \cref{Kwocayoudianda} is given by 
\begin{align}
\mathcal{C}_{\text{\normalfont sub}}&(\mathcal{A},\epsilon) \leq \mathcal{O}(n_J+n_H + n_G)\leq \mathcal{O}(K+KM+KN)   \nonumber
\\\leq& \mathcal{O}\Big( \Big( \sqrt{\frac{B^2\widetilde L_y}{\epsilon\mu_y^3}}+\Big(\sqrt{\frac{B^2\rho_{yy}\widetilde L_y}{\epsilon\mu_y^{4}}} +  \sqrt{\frac{B^2\rho_{xy}\widetilde L_y}{\epsilon\mu_y^3}}\Big)\sqrt{\Delta^*_{\text{\normalfont\tiny CSC}}}\Big)\log\, {\small \text{poly}(\epsilon,\mu_x,\mu_y,\Delta^*_{\text{\normalfont\tiny CSC}})}\Big).
\end{align}

Next, we characterize the convergence rate and complexity under the gradient norm metric.  Note that in this case we choose $R=B$. 
Using eq. (9.14) in \cite{boyd2004convex}, we have $\|\nabla\widetilde \Phi (z_k)\|^2\leq 2L_{\widetilde \Phi}(\widetilde\Phi(z_K)- \widetilde\Phi(\widetilde x^*)) $, which, combined with  
$\|\nabla\widetilde \Phi (z_k)\|^2 \geq \frac{1}{2}\|\nabla\Phi (z_k)\|^2 - \frac{\epsilon^2}{B^2}\|z_k\|^2\geq  \frac{1}{2}\|\nabla\Phi (z_k)\|^2 -  \frac{\epsilon^2}{B^2}(2\|z_k-\widetilde x^*\|^2 + 2\|\widetilde x^*\|^2)$
yields
\begin{align}\label{haofantoutongsc}
\|\nabla \Phi (z_k)\|^2\leq& 4L_{\widetilde \Phi}(\widetilde\Phi(z_K)- \widetilde\Phi(\widetilde x^*)) +\frac{4\epsilon^2}{B^2}\|z_k-\widetilde x^*\|^2 +\frac{4\epsilon^2}{B^2} \|\widetilde x^*\|^2 \nonumber
\\\overset{(i)}\leq&4L_{\widetilde \Phi}(\widetilde\Phi(z_K)- \widetilde\Phi(\widetilde x^*)) + \frac{8\epsilon}{B}(\widetilde\Phi(z_K)- \widetilde\Phi(\widetilde x^*)) + \frac{16\epsilon^2}{B^2} \| x^*\|^2  \nonumber
\\=& \Big(4L_{\widetilde \Phi}+ \frac{8\epsilon}{B}\Big)(\widetilde\Phi(z_K)- \widetilde\Phi(\widetilde x^*)) + \frac{16\epsilon^2}{B^2} \| x^*\|^2,  
\end{align}
where $(i)$ follows from the strong convexity of $\widetilde \Phi(\cdot)$ and $\|\widetilde x^*\|\leq 2\|x^*\|$, and $L_{\widetilde \Phi}$ takes the same form as $L_\Phi$ in \cref{wodishenamxd} except that $L_x,f,x^*$ and $\Phi$ become $L_x+\frac{\epsilon}{B},\widetilde f,\widetilde x^*$ and $\widetilde \Phi$ in \cref{wodishenamxd}, respectively.  Then, using an approach similar to \cref{heiyeibaizhoussc} in the proof of \Cref{upper_srsr_withnoB} with $\epsilon$ and $\mu_x$ being replaced by $\epsilon^2/(4L_{\widetilde \Phi}+ \frac{8\epsilon}{B})$ and $\frac{\epsilon}{B}$, respectively, we have
\begin{align*}
\widetilde\Phi(z_K)- \widetilde\Phi(\widetilde x^*) \leq \Big(1 -\sqrt{\frac{\epsilon}{BL_{\widetilde \Phi}}} \Big)^{K}(\widetilde\Phi(0) -\widetilde\Phi(\widetilde x^*)+\frac{\epsilon}{2B} \|\widetilde x^*\|^2) +\frac{\epsilon^2}{2(4L_{\widetilde \Phi}+ \frac{8\epsilon}{B})},
\end{align*}
which, in conjunction with \cref{wangdehuas} and \cref{haofantoutongsc}, yields 
{\small\begin{align*}
\|\nabla \Phi (z_k)\|^2\leq \Big(1 -\sqrt{\frac{\epsilon}{BL_{\widetilde \Phi}}} \Big)^{K}\Big(\widetilde\Phi(0) -\widetilde\Phi(\widetilde x^*)+\frac{\epsilon}{2B} \|\widetilde x^*\|^2\Big) \Big(4L_{\widetilde \Phi}+ \frac{8\epsilon}{B}\Big)  + \frac{\epsilon^2}{2} + \frac{16\epsilon^2}{B^2} \| x^*\|^2.
\end{align*}}
\hspace{-0.15cm}Then, to achieve $\|\nabla \Phi (z_k)\|\leq 5\epsilon$, it suffices to  choose $M,N$ as in \cref{michiganletgo} by replacing $\epsilon$ with $\epsilon^2/(4L_{\widetilde \Phi}+ \frac{8\epsilon}{B})$, and choose
\begin{align*}
K =& \Theta\Big( \sqrt{\frac{BL_{\widetilde \Phi}}{\epsilon}}\log\Big(\frac{(\widetilde\Phi(0) -\widetilde\Phi(\widetilde x^*)+\frac{\epsilon}{2B} \|\widetilde x^*\|^2)(4L_{\widetilde \Phi}+ \frac{8\epsilon}{B})}{\epsilon}\Big)\Big), 
\end{align*}
which, in conjunction with \cref{ggsmidacposcs1}, yields
\begin{align*}
\mathcal{C}_{\text{\normalfont norm}}&(\mathcal{A},\epsilon) \leq \mathcal{O}(n_J+n_H + n_G)\leq \mathcal{O}(K+KM+KN)   \nonumber
\\\leq& \mathcal{O}\Big( \Big( \sqrt{\frac{B\widetilde L_y}{\epsilon\mu_y^3}}+\Big(\sqrt{\frac{B\rho_{yy}\widetilde L_y}{\epsilon\mu_y^{4}}} +  \sqrt{\frac{B\rho_{xy}\widetilde L_y}{\epsilon\mu_y^3}}\Big)\sqrt{\Delta^*_{\text{\normalfont\tiny CSC}}}\Big)\log\, {\small \text{poly}(\epsilon,\mu_x,\mu_y,\Delta^*_{\text{\normalfont\tiny CSC}})}\Big),
\end{align*}
which finishes the proof. 
\section{Proof of \Cref{coro:quadaticConv}}
Note that for this quadratic inner problem, the Jacobians $\nabla_x\nabla_y g(x,y)$ and Hessians $\nabla_y^2 g(x,y)$ are {\bf constant} matrices, which imply that the parameters $\rho_{xx}=\rho_{xy}=0$ in Assumption \ref{g:hessiansJaco}. Then, letting $\rho_{xx}=\rho_{xy}=0$ in the results of \Cref{th:upper_csc1sc} yields the proof.

\section{Proof of \Cref{upper_srsr}}\label{proof:upss_wb}
Based on the update in line 9 of \Cref{alg:bio}, we  have,  for any $x\in\mathbb{R}^p$
\begin{align}\label{e1:ggmida}
\langle \beta_k G_k,x_{k+1}-x\rangle = \tau_k\underbrace{\langle x-x_{k+1}, x_{k+1} -\widetilde x_k\rangle}_{P} + (1-\tau_k)\underbrace{\langle x-x_{k+1},x_{k+1}-x_k\rangle}_{Q}.
\end{align}
Note that $P$ in the above \cref{e1:ggmida} satisfies 
\begin{align*}
P &= \langle \widetilde x_k - x_{k+1}, x -\widetilde x_k \rangle + \|x-\widetilde x_k\|^2 - \|x-x_{k+1}\|^2
\\&=-P + \|x-\widetilde x_k\|^2 -\|\widetilde x_k-x_{k+1}\|^2 -\|x-x_{k+1}\|^2,
\end{align*}
which yields $P =\frac{1}{2}(\|x-\widetilde x_k\|^2 -\|\widetilde x_k-x_{k+1}\|^2 -\|x-x_{k+1}\|^2)$. Taking an approach similar to the derivation of $P$, we can obtain $Q=\frac{1}{2}(\|x- x_k\|^2 -\| x-x_{k+1}\|^2 -\|x_{k}-x_{k+1}\|^2)$. Then,  substituting  the forms of $P,Q$ to \cref{e1:ggmida} and using the choices of $\tau_k$ and $\beta_k$, we have 
\begin{align}\label{eq:gdangle}
\big\langle G_k, \frac{\sqrt{\alpha\mu_x}}{2}(&x_{k+1}-x)\big\rangle =  \frac{\sqrt{\alpha\mu_x}\mu_x}{8} (\|x-\widetilde x_k\|^2 -\|\widetilde x_k-x_{k+1}\|^2 -\|x-x_{k+1}\|^2)  \nonumber
\\+&\frac{2\mu_x-\sqrt{\alpha\mu_x}\mu_x}{8}(\|x- x_k\|^2 -\| x-x_{k+1}\|^2 -\|x_{k}-x_{k+1}\|^2).
\end{align}
By the update $z_{k+1}= \widetilde x_k -\alpha_k G_k $ and the choice of $\alpha_k=\alpha$, we have, for any $x^\prime\in\mathbb{R}^p$,
\begin{align}\label{eq:changeone}
\langle z_{k+1}-x^\prime, G_k \rangle  =& \frac{1}{\alpha}\langle x^\prime-z_{k+1}, z_{k+1}-\widetilde x_k \rangle \nonumber
\\ = & \frac{1}{2\alpha} (\|x^\prime-\widetilde x_k\| - \|x^\prime-z_{k+1}\|^2 - \|z_{k+1}-\widetilde x_k\|^2).
\end{align}
Let  $x^{\prime} = (1-\frac{\sqrt{\alpha\mu_x}}{2})z_k + \frac{\sqrt{\alpha \mu_x}}{2}$ and recall $\widetilde x_k = \eta_kx_k + (1-\eta_k)z_k$. Then, we have
\begin{align}\label{eq:xprimes}
\|x^\prime - \widetilde x_k\|^2 = & \Big\|\frac{\sqrt{\alpha\mu_x}}{2}(x_{k+1}-z_k)+ \frac{\sqrt{\alpha\mu_x}}{\sqrt{\alpha\mu_x}+2}(z_k-x_k)\Big\|^2  \nonumber
\\ =& \Big\| \frac{\sqrt{\alpha\mu_x}}{2}(x_{k+1}-x_k)+ \frac{\alpha\mu_x}{2(\sqrt{\alpha\mu_x}+2)}(z_k-x_k)  \Big\|^2 \nonumber
\\ \overset{(i)}= & \frac{\alpha\mu_x}{4} \Big\|(1-\frac{\sqrt{\alpha\mu_x}}{2})(x_{k+1}-x_k) + \frac{\sqrt{\alpha\mu_x}}{2} (x_{k+1}-\widetilde x_k)\Big\|^2 \nonumber
\\ \leq &  \frac{\alpha\mu_x}{4} \big(1-\frac{\sqrt{\alpha\mu_x}}{2}\big) \|x_{k+1}-x_k\|^2 +   \frac{\alpha\mu_x\sqrt{\alpha\mu_x}}{8}\|x_{k+1}-\widetilde x_k\|^2,
\end{align}
where $(i)$ follows from $\widetilde x_k - x_k=\frac{2}{2+\sqrt{\alpha\mu_x} }(z_k-x_k)$.  Then, substituting \cref{eq:xprimes} in \cref{eq:changeone} and adding \cref{eq:gdangle}, \cref{eq:changeone} and cancelling out several negative terms, we have
\begin{align}\label{eq:gkquick}
\big \langle G_k, &\frac{\sqrt{\alpha\mu_x}}{2}(z_{k+1}-x) + (1-\frac{\sqrt{\alpha\mu_x}}{2}) (z_{k+1}-z_k) \big\rangle \nonumber
\\\leq& \frac{\sqrt{\alpha\mu_x}\mu_x}{8}\|x-\widetilde x_k\|^2-\frac{1}{2\alpha} \|z_{k+1}-\widetilde x_k\|^2 - \frac{\mu_x\sqrt{\alpha\mu_x}}{16} \|x_{k+1}-\widetilde x_k\|^2\nonumber
\\&-\frac{\mu_x}{4} \|x-x_{k+1}\|^2 -\frac{2\mu_x -\sqrt{\alpha\mu_x}\mu_x}{16} \|x_k-x_{k+1}\|^2.
\end{align}
Next, we characterize the smoothness property of $\Phi(x)$. Using the form of $\nabla \Phi(x)$ in \cref{hyperG}, and based on Assumptions~\ref{fg:smooth},~\ref{g:hessiansJaco} and Assumption~\ref{bounded_f_assump} that $\|\nabla_yf(\cdot,\cdot)\|\leq U$, we have, for any $x_1,x_2\in\mathbb{R}^p$, 
\begin{align*}
\|\nabla\Phi(x_1) - \nabla\Phi(x_2)\|
\leq& \|\nabla_x f(x_1,y^*(x_1)) -\nabla_x f(x_2,y^*(x_2))  \| \nonumber
\\&+ \|\nabla_x \nabla_y g(x_1,y^*(x_1)) \nabla_y^2 g(x_1,y^*(x_1))^{-1}\nabla_y f(x_1,y^*(x_1)) \nonumber
\\&\hspace{0.5cm}-\nabla_x \nabla_y g(x_2,y^*(x_2)) \nabla_y^2 g(x_2,y^*(x_2))^{-1}\nabla_y f(x_2,y^*(x_2))\| \nonumber
\\  \leq & L_x\|x_1-x_2\| + L_{xy}\|y^*(x_1)-y^*(x_2)\| \nonumber
\\&+ \Big(\frac{U\rho_{xy}}{\mu_y}+\frac{\widetilde L_{xy}U\rho_{yy}}{\mu_y^2}\Big)(\|x_1-x_2\|+\|y^*(x_1)-y^*(x_2)\|)  \nonumber
\\&+ \frac{\widetilde L_{xy}}{\mu_y} (L_{xy}\|x_1-x_2\|
+L_y\|y^*(x_1)-y^*(x_2)\|),
\end{align*}
which, combined with Lemma 2.2 in~\cite{ghadimi2018approximation} that $\|y^*(x_1)-y^*(x_2)\|\leq \frac{\widetilde L_{xy}}{\mu_y} \|x_1-x_2\|$, yields 
\begin{align}\label{smooth_phi}
\|&\nabla\Phi(x_1) - \nabla\Phi(x_2)\| \nonumber
\\&\leq \Big (\underbrace{L_x+ \frac{2L_{xy}\widetilde L_{xy}}{\mu_y} +\Big(\frac{U\rho_{xy}}{\mu_y}+\frac{U\widetilde L_{xy}\rho_{yy}}{\mu_y^2}\Big)\Big(1+\frac{\widetilde L_{xy}}{\mu_y}\Big)+\frac{\widetilde L^2_{xy}L_y}{\mu^2_y}}_{L_\Phi}\Big) \|x_1-x_2\|.
\end{align}
Then, based on the above $L_\Phi$-smoothness of $\Phi(\cdot)$, we have 
\begin{align}\label{eq:phismooth}
\Phi(z_{k+1}) \leq & \Phi(\widetilde x_k) + \langle \nabla \Phi(\widetilde x_k), z_{k+1}-\widetilde x_k\rangle  + \frac{L_\Phi}{2} \|z_{k+1}-\widetilde x_k\|^2 \nonumber
\\=& \big(1-\frac{\sqrt{\alpha\mu_x}}{2}\big)(\Phi(\widetilde x_k) + \langle \nabla \Phi(\widetilde x_k), z_{k+1}-\widetilde x_k\rangle) \nonumber
\\&+ \frac{\sqrt{\alpha\mu_x}}{2}(\Phi(\widetilde x_k) + \langle \nabla \Phi(\widetilde x_k), z_{k+1}-\widetilde x_k\rangle)+\frac{L_\Phi}{2} \|z_{k+1}-\widetilde x_k\|^2.
\end{align}
Adding \cref{eq:gkquick} and~\cref{eq:phismooth} yields
{\small \begin{align*}
\Phi(z_{k+1})\leq &  \big(1-\frac{\sqrt{\alpha\mu_x}}{2}\big)(\Phi(\widetilde x_k) + \langle \nabla \Phi(\widetilde x_k), z_{k}-\widetilde x_k\rangle) + \frac{\sqrt{\alpha\mu_x}}{2}(\Phi(\widetilde x_k) + \langle \nabla \Phi(\widetilde x_k), x-\widetilde x_k\rangle)  \nonumber
\\&+\big \langle \nabla\Phi(\widetilde x_k)-G_k, \frac{\sqrt{\alpha\mu_x}}{2}(z_{k+1}-x) + (1-\frac{\sqrt{\alpha\mu_x}}{2}) (z_{k+1}-z_k) \big\rangle  \nonumber
\\&+ \frac{\sqrt{\alpha\mu_x}\mu_x}{8}\|x-\widetilde x_k\|^2-\frac{1}{2\alpha}(1-\alpha L_\Phi) \|z_{k+1}-\widetilde x_k\|^2 - \frac{\mu_x\sqrt{\alpha\mu_x}}{16} \|x_{k+1}-\widetilde x_k\|^2\nonumber
\\&-\frac{\mu_x}{4} \|x-x_{k+1}\|^2 -\frac{2\mu_x -\sqrt{\alpha\mu_x}\mu_x}{16} \|x_k-x_{k+1}\|^2,
\end{align*}}
\hspace{-0.13cm}which, in conjunction with the strong-convexity of $\Phi(\cdot)$, the fact that $\sqrt{\alpha \mu_x}\leq 1$ and $\alpha\leq \frac{1}{2L_\Phi}$, yields 
\begin{align}\label{eq:orginalsca}
\Phi(z_{k+1})\leq& \big(1-\frac{\sqrt{\alpha\mu_x}}{2}\big) \big(\Phi(z_k) -\frac{\mu_x}{2}\|z_k-\widetilde x_k\|^2\big) + \frac{\sqrt{\alpha\mu_x}}{2} \big(\Phi(x) -\frac{\mu_x}{2}\|x-\widetilde x_k\|^2 \big)   \nonumber
\\&+\big \langle \nabla\Phi(\widetilde x_k)-G_k, \frac{\sqrt{\alpha\mu_x}}{2}(z_{k+1}-x) + (1-\frac{\sqrt{\alpha\mu_x}}{2}) (z_{k+1}-z_k) \big\rangle \nonumber
\\&+\frac{\sqrt{\alpha\mu_x}\mu_x}{8}\|x-\widetilde x_k\|^2-\frac{1}{4\alpha} \|z_{k+1}-\widetilde x_k\|^2 - \frac{\mu_x\sqrt{\alpha\mu_x}}{16} \|x_{k+1}-\widetilde x_k\|^2.
\end{align}
Note that  we have the equality that 
\begin{align}\label{eq:xzk1}
\frac{\sqrt{\alpha\mu_x}}{2}(z_{k+1}-x) + &(1-\frac{\sqrt{\alpha\mu_x}}{2}) (z_{k+1}-z_k) \nonumber
\\& =  (z_{k+1}-\widetilde x_k) + \frac{\sqrt{\alpha\mu_x}}{2} (\widetilde x_k - x) + (1-\frac{\sqrt{\alpha\mu_x}}{2})(\widetilde x_k-z_k).
\end{align}
Then, using \cref{eq:xzk1} and the Cauchy-Schwarz inequality, we have
\begin{align}\label{eq:gkdistance}
\big \langle \nabla\Phi(\widetilde x_k)-G_k, &\frac{\sqrt{\alpha\mu_x}}{2}(z_{k+1}-x) + (1-\frac{\sqrt{\alpha\mu_x}}{2}) (z_{k+1}-z_k) \big\rangle \nonumber
\\\leq & \Big(2\alpha+\frac{1}{2\mu_x} +\frac{\sqrt{\alpha\mu_x}}{4\mu_x}\Big)\|\nabla\Phi(\widetilde x_k)-G_k\|^2 + \frac{1}{8\alpha}\|z_{k+1}-\widetilde x_k\|^2
 \nonumber
\\&  +   \frac{\sqrt{\alpha\mu_x}\mu_x}{8} \|\widetilde x_k - x\|+ (1-\frac{\sqrt{\alpha\mu_x}}{2})\frac{\mu_x}{2} \|z_k-\widetilde x_k\|^2.
\end{align}
Substituting \cref{eq:gkdistance} into \cref{eq:orginalsca} and cancelling out negative terms, we have 
\begin{align}\label{cocoinasca}
\Phi(z_{k+1})\leq & \big(1-\frac{\sqrt{\alpha\mu_x}}{2}\big) \Phi(z_k) + \frac{\sqrt{\alpha\mu_x}}{2} \Phi(x) -\frac{1}{8\alpha} \|z_{k+1}-\widetilde x_k\|^2 - \frac{\mu_x\sqrt{\alpha\mu_x}}{16} \|x_{k+1}-\widetilde x_k\|^2 \nonumber
\\&+ \Big(2\alpha+\frac{1}{2\mu_x} +\frac{\sqrt{\alpha\mu_x}}{4\mu_x}\Big)\|\nabla\Phi(\widetilde x_k)-G_k\|^2.
\end{align}
We next upper-bound the hypergradient estimation error $\|\nabla\Phi(\widetilde x_k)-G_k\|^2$. Recall  
\begin{align}
G_k:= \nabla_x f(\widetilde x_k,y_k^N) -\nabla_x \nabla_y g(\widetilde x_k,y_k^N)v_k^M,
\end{align}
where $v_k^M$ is the output of $M$-steps of heavy-ball method for solving $$\min_v Q(v):=\frac{1}{2}v^T\nabla_y^2 g(\widetilde x_k,y_k^N) v - v^T
\nabla_y f(\widetilde x_k,y^N_k)$$
Then, based on the convergence result of heavy-ball method in~\cite{badithela2019analysis} with stepsizes $\lambda=\frac{4}{(\sqrt{\widetilde L_y}+\sqrt{\mu_y})^2}$ and $\theta=\max\big\{\big(1-\sqrt{\lambda\mu_y}\big)^2,\big(1-\sqrt{\lambda\widetilde L_y}\big)^2\big\}$, we have 
\begin{align}\label{gg:opascas}
\|v_k^M - \nabla_y^2 g(\widetilde x_k,y_k^N)^{-1}\nabla_y f(\widetilde x_k,y^N_k) \| \leq  &\Big(\frac{\sqrt{\kappa_y}-1}{\sqrt{\kappa_y}+1}\Big)^M \Big\| \nabla_y^2 g(\widetilde x_k,y_k^N)^{-1}\nabla_y f(\widetilde x_k,y^N_k)\Big\| \nonumber
\\\overset{(i)}\leq& \frac{U}{\mu_y}\Big(\frac{\sqrt{\kappa_y}-1}{\sqrt{\kappa_y}+1}\Big)^M, 
\end{align}
where $(i)$ follows from Assumption~\ref{bounded_f_assump} that  $\|\nabla_y f(\cdot,\cdot)\|$ is bounded by $U$. Let $y_k^*=\argmin_{y} g(\widetilde x_k,y)$. Then, based on the form of $\nabla\Phi(x)$ in \cref{hyperG}, we have 
\begin{align}\label{jingyisang}
\|G_k-&\nabla \Phi(\widetilde x_k)\| \nonumber
\\\overset{(i)}\leq & \| \nabla_x f( \widetilde x_k,y_k^N) -\nabla_x f(\widetilde x_k,y^*_k)\| + \widetilde L_{xy}\|v_k^M- \nabla_y^2 g(\widetilde x_k,y^*_k)^{-1}\nabla_y f(\widetilde x_k,y^*_k) \|  \nonumber
\\&+\frac{\|\nabla_y f(\widetilde  x_k,y^*_k) \|}{\mu_y}  \|\nabla_x \nabla_y g(\widetilde  x_k,y_k^N)-\nabla_x \nabla_y g(\widetilde x_k,y^*_k)\| \nonumber
\\\leq & L_y \|y^*_k-y_k^N\| + \widetilde L_{xy} \|v_k^M- \nabla_y^2 g(\widetilde x_k,y_k^N)^{-1}\nabla_y f(\widetilde x_k,y^N_k) \| \nonumber
\\&+ \widetilde L_{xy}\big\|\nabla_y^2 g(\widetilde x_k,y_k^N)^{-1}\nabla_y f(\widetilde x_k,y^N_k)-\nabla_y^2 g(\widetilde x_k,y^*_k)^{-1}\nabla_y f(\widetilde x_k,y^*_k) \big\| \nonumber
\\&+\frac{U\rho_{xy}}{\mu_y} \|y_k^N-y^*_k\| \nonumber
\\\overset{(ii)}\leq&\Big(L_y +\frac{\widetilde L_{xy}L_y}{\mu_y} +\Big(\frac{\rho_{xy}}{\mu_y}+\frac{\widetilde L_{xy}\rho_{yy}}{\mu_y^2}\Big)U\Big)\|y_k^N-y^*_k\| + \frac{U\widetilde L_{xy}}{\mu_y}\Big(\frac{\sqrt{\kappa_y}-1}{\sqrt{\kappa_y}+1}\Big)^M,
\end{align}
where $(ii)$ follows from \cref{gg:opascas}. Note that $y_k^N$ is obtained using $N$ steps of AGD. Then, based on the analysis in \cite{nesterov2003introductory} for AGD, we have 
\begin{align}\label{eq:taun}
\|y_k^N-y_k^*\|^2\leq & \frac{\widetilde L_y +\mu_y}{\mu_y} \|y_k^0-y^*_k\|^2 \exp\big(-\frac{N}{\sqrt{\kappa_y}}\big) =  \frac{\widetilde L_y +\mu_y}{\mu_y} \|y_{k-1}^N-y^*_k\|^2 \exp\big(-\frac{N}{\sqrt{\kappa_y}}\big) \nonumber
\\\leq &\frac{2(\widetilde L_y +\mu_y)}{\mu_y} \exp\Big(-\frac{N}{\sqrt{\kappa_y}}\Big) (\|y_{k-1}^N-y_{k-1}^*\|^2 + \|y_{k-1}^*-y_k^*\|^2) \nonumber
\\\leq &\underbrace{\frac{2(\widetilde L_y +\mu_y)}{\mu_y} \exp\Big(-\frac{N}{\sqrt{\kappa_y}}\Big) }_{\tau_N}(\|y_{k-1}^N-y_{k-1}^*\|^2 + \kappa_y\|\widetilde x_k - \widetilde x_{k-1}\|^2),
\end{align}
which, in conjunction with $\widetilde x_k - \widetilde x_{k-1}= \eta_k (x_k-\widetilde x_{k-1}) + (1-\eta_k)(z_k - \widetilde x_{k-1})$, yields
\begin{align}\label{eq:telejjs}
\|y_k^N-y_k^*\|^2 \leq & \tau_N \|y_{k-1}^N-y_{k-1}^*\|^2+ \kappa_y\eta_k\tau_N\|x_k-\widetilde x_{k-1}\|^2 \nonumber
\\& +\kappa_y(1-\eta_k)\tau_N \|z_k-\widetilde x_{k-1}\|^2.
\end{align} 
Telescoping the above \cref{eq:telejjs} over $k$ yields
{\small\begin{align*}
\|y_k^N-y_k^*\|^2\leq \tau_N^k\| y_0^N-y_0^*  \|^2 + \sum_{i=0}^{k-1}\tau_N^{k-i}\kappa_y\eta_k\|x_{i+1}-\widetilde x_{i}\|^2  +\sum_{i=0}^{k-1}\tau_N^{k-i}\kappa_y(1-\eta_k)\|z_{i+1}-\widetilde x_{i}\|^2, 
\end{align*}}
\hspace{-0.13cm}which, in conjunction with \cref{cocoinasca} and \cref{jingyisang} and letting $x=x^*$, yields
{\small\begin{align}\label{tele:bigeq}
\Phi(z_{k+1})-\Phi(x^*)\leq & \big(1-\frac{\sqrt{\alpha\mu_x}}{2}\big) (\Phi(z_k) -\Phi(x^*)-\frac{1}{8\alpha} \|z_{k+1}-\widetilde x_k\|^2 - \frac{\mu_x\sqrt{\alpha\mu_x}}{16} \|x_{k+1}-\widetilde x_k\|^2 \nonumber
\\&+ \lambda\sum_{i=0}^{k-1}\tau_N^{k-i}\kappa_y\eta_k\|x_{i+1}-\widetilde x_{i}\|^2  +\lambda\sum_{i=0}^{k-1}\tau_N^{k-i}\kappa_y(1-\eta_k)\|z_{i+1}-\widetilde x_{i}\|^2 \nonumber
\\&+\Delta+ \lambda \tau_N^k \|y_0^*-y_0^N\|^2,
\end{align}}
\hspace{-0.13cm}where the  notations $\Delta$ and $\lambda$ are given by 
\begin{small}
\begin{align}\label{def:lambda}
\Delta =& \Big(4\alpha+\frac{1}{\mu_x} +\frac{\sqrt{\alpha\mu_x}}{2\mu_x}\Big)\frac{U^2\widetilde L^2_{xy}}{\mu^2_y}\Big(\frac{\sqrt{\kappa_y}-1}{\sqrt{\kappa_y}+1}\Big)^{2M} \nonumber
\\\lambda =& \Big(4\alpha+\frac{1}{\mu_x} +\frac{\sqrt{\alpha\mu_x}}{2\mu_x}\Big) \Big(L_y +\frac{\widetilde L_{xy}L_y}{\mu_y} +\Big(\frac{\rho_{xy}}{\mu_y}+\frac{\widetilde L_{xy}\rho_{yy}}{\mu_y^2}\Big)U\Big)^2.
\end{align}
\end{small}
\hspace{-0.12cm} Telescoping \cref{tele:bigeq} over $k$ from $0$ to $K-1$ and noting $0<\eta_k\leq 1$, we have 
\begin{small}
\begin{align}
\Phi(z_{K})-\Phi(x^*) \leq & \big(1-\frac{\sqrt{\alpha\mu_x}}{2}\big)^K(\Phi(z_{0})-\Phi(x^*)) - \frac{1}{8\alpha}\sum_{k=0}^{K-1} \big(1-\frac{\sqrt{\alpha\mu_x}}{2}\big)^{K-1-k}\|z_{k+1}-\widetilde x_k\|^2 \nonumber
\\&- \frac{\mu_x\sqrt{\alpha\mu_x}}{16}\sum_{k=0}^{K-1} \big(1-\frac{\sqrt{\alpha\mu_x}}{2}\big)^{K-1-k}\|x_{k+1}-\widetilde x_k\|^2 + \frac{2\Delta}{\sqrt{\alpha\mu_x}} \nonumber
\\&+\sum_{k=0}^{K-1} \big(1-\frac{\sqrt{\alpha\mu_x}}{2}\big)^{K-1-k}\lambda \tau_N^k \|y_0^*-y_0^N\|^2 \nonumber
\\&+ \lambda\sum_{k=0}^{K-1} \big(1-\frac{\sqrt{\alpha\mu_x}}{2}\big)^{K-1-k}\sum_{i=0}^{k-1}\tau_N^{k-i}\kappa_y\|x_{i+1}-\widetilde x_{i}\|^2  \nonumber
\\&+\lambda\sum_{k=0}^{K-1} \big(1-\frac{\sqrt{\alpha\mu_x}}{2}\big)^{K-1-k}\sum_{i=0}^{k-1}\tau_N^{k-i}\kappa_y\|z_{i+1}-\widetilde x_{i}\|^2, \nonumber
\end{align}
\end{small}
\hspace{-0.12cm}which, in conjunction with the fact that $k\leq K-1$, yields
{\small
\begin{align}\label{eq:zkcgsaca}
\Phi(z_{K})-&\Phi(x^*) \leq  \big(1-\frac{\sqrt{\alpha\mu_x}}{2}\big)^K(\Phi(z_{0})-\Phi(x^*)) - \frac{1}{8\alpha}\sum_{k=0}^{K-1} \big(1-\frac{\sqrt{\alpha\mu_x}}{2}\big)^{K-1-k}\|z_{k+1}-\widetilde x_k\|^2 \nonumber
\\&- \frac{\mu_x\sqrt{\alpha\mu_x}}{16}\sum_{k=0}^{K-1} \big(1-\frac{\sqrt{\alpha\mu_x}}{2}\big)^{K-1-k}\|x_{k+1}-\widetilde x_k\|^2 + \frac{2\Delta}{\sqrt{\alpha\mu_x}} \nonumber
\\&+\sum_{k=0}^{K-1} \big(1-\frac{\sqrt{\alpha\mu_x}}{2}\big)^{K-1-k}\lambda \tau_N^k \|y_0^*-y_0^N\|^2 \nonumber
\\&+ \frac{2\tau_N\lambda\kappa_y}{\sqrt{\alpha\mu_x}}\sum_{i=0}^{K-2}\tau_N^{K-2-i}\|x_{i+1}-\widetilde x_{i}\|^2  + \frac{2\tau_N\lambda\kappa_y}{\sqrt{\alpha\mu_x}}\sum_{i=0}^{K-2}\tau_N^{K-2-i}\|z_{i+1}-\widetilde x_{i}\|^2.
\end{align}}
\hspace{-0.13cm}Recall the definition of $\tau_N$ in \cref{eq:taun}. Then, choose $N$ such that 
\begin{align}\label{eq:nrequire}
\tau_N=\frac{2(\widetilde L_y +\mu_y)}{\mu_y} \exp\Big(-\frac{N}{\sqrt{\kappa_y}}\Big) \leq \min\Big\{ \frac{\sqrt{\mu_x}}{16\lambda\kappa_y\sqrt{\alpha}}, \frac{\alpha\mu_x^2}{32\lambda\kappa_y},\big(1-\frac{\sqrt{\alpha\mu_x}}{2}\big)^2\Big\}, 
\end{align}
which, in conjunction with \cref{eq:zkcgsaca}, yields  
\begin{align*}
\Phi(z_{K})-&\Phi(x^*) \leq  \big(1-\frac{\sqrt{\alpha\mu_x}}{2}\big)^K\Big(\Phi(z_{0})-\Phi(x^*)+\frac{2\lambda  \|y_0^*-y_0^N\|^2}{\sqrt{\alpha\mu_x}}\Big)+ \frac{2\Delta}{\sqrt{\alpha\mu_x}}. 
\end{align*}
Then, based on the definitions of $\lambda,\Delta$ in~\cref{def:lambda} and $L_\Phi$ in~\cref{smooth_phi}, to achieve $\Phi(z^K)-\Phi(x^*)\leq \epsilon$, we have 
\begin{align}\label{eq:com_km}
K&\leq \mathcal{O}\Big( \sqrt{\frac{1}{\mu_x\mu_y^3}}\log\frac{\mbox{\small poly}(\mu_x,\mu_y,U,\Phi(x_{0})-\Phi(x^*))}{\epsilon} \Big), \nonumber
\\M &\leq \mathcal{O} \Big(\sqrt{\frac{\widetilde L_y}{\mu_y}}\log \frac{\mbox{\small poly}(\mu_x,\mu_y,U)}{\epsilon}\Big).
\end{align}
In addition, it follows from \cref{eq:nrequire} that 
\begin{align}\label{eq:com_n}
N\leq \mathcal{O}\Big(\sqrt{\frac{\widetilde L_y}{\mu_y}}\log (\mbox{\small poly}(\mu_x,\mu_y,U))\Big).
\end{align} 
Based on \cref{eq:com_km} and \cref{eq:com_n}, the total complexity is given by 
\begin{align*}
\mathcal{C}_{\text{\normalfont sub}}(\mathcal{A},\epsilon) &\leq \mathcal{O}(n_J+n_H + n_G)\leq \mathcal{O}(K+KM+KN) \nonumber
\\&\leq \mathcal{O}\Big(\sqrt{\frac{\widetilde L_y}{\mu_x\mu_y^4}}\log\frac{\mbox{\small poly}(\mu_x,\mu_y,U,\Phi(x_{0})-\Phi(x^*))}{\epsilon}  \log \frac{\mbox{\small poly}(\mu_x,\mu_y,U)}{\epsilon} \Big), 
\end{align*}
which finishes the proof. 

\section{Proof of \Cref{convex_upper_BG}}
Let $\widetilde x^*$ be the minimizer of $\widetilde \Phi(\cdot)$. Then, applying the results in \Cref{upper_srsr} to $\widetilde\Phi(x)$ with $\mu_x = \frac{\epsilon}{B^2}$ and choosing $N=\Theta \big(\sqrt{\frac{\widetilde L_y}{\mu_y}}\log (\mbox{\small poly}(B,\epsilon,\mu_y,U))\big)$, we have 
\begin{align*}
\widetilde\Phi(z_{K})-\widetilde\Phi(\widetilde x^*) \leq &  \big(1-\frac{\sqrt{\epsilon}}{2\sqrt{2L_{\widetilde\Phi}}B}\big)^K\Big(\widetilde \Phi(z_{0})-\widetilde \Phi(\widetilde x^*)+\frac{2\sqrt{2L_{\widetilde\Phi}}B\widetilde\lambda  \|y_0^*-y_0^N\|^2}{\sqrt{\alpha\epsilon}}\Big)
\\&+ \frac{2\widetilde\Delta\sqrt{2L_{\widetilde\Phi}}B}{\sqrt{\epsilon}},
\end{align*}
where $\widetilde \Delta$ and $\widetilde \lambda$ takes the same forms as $\Delta$ and $\lambda$ in \cref{def:lambda} with $\mu_x$ being replaced by $\frac{\epsilon}{B^2}$. By choosing $M = \Theta \Big(\sqrt{\frac{\widetilde L_y}{\mu_y}}\log \frac{\mbox{\small poly}(B,\epsilon,\mu_y,U)}{\epsilon}\Big)$ in $\widetilde \Delta$,  we have $\frac{2\widetilde\Delta\sqrt{2L_{\widetilde\Phi}}B}{\sqrt{\epsilon}}\leq \frac{\epsilon}{4}$, and 
\begin{align*}
\widetilde\Phi(z_{K})-&\widetilde\Phi(\widetilde x^*) \leq  \big(1-\frac{\sqrt{\epsilon}}{2\sqrt{2L_{\widetilde\Phi}}B}\big)^K\Big(\widetilde \Phi(z_{0})-\widetilde \Phi(\widetilde x^*)+\frac{2\sqrt{2L_{\widetilde\Phi}}B\widetilde\lambda  \|y_0^*-y_0^N\|^2}{\sqrt{\alpha\epsilon}}\Big)+ \frac{\epsilon}{4},
\end{align*}
which, in conjunction with $\widetilde\Phi(z_K)\geq\Phi(z_K)$, $\widetilde\Phi(\widetilde x^*)\leq\widetilde\Phi(x^*)=\Phi(x^*)+\frac{\epsilon}{2B^2}\|x^*\|^2$ and $z_0=0$, yields 
\begin{align}\label{maiyigelunhuiba}
\Phi(z_{K})-\Phi( x^*) \leq &  \big(1-\frac{\sqrt{\epsilon}}{2\sqrt{2L_{\widetilde\Phi}}B}\big)^K\Big(\Phi(0)-\widetilde \Phi(\widetilde x^*)+\frac{2\sqrt{2L_{\widetilde\Phi}}B\widetilde\lambda  \|y_0^*-y_0^N\|^2}{\sqrt{\alpha\epsilon}}\Big) \nonumber
\\&+ \frac{\epsilon}{4} +\frac{\epsilon}{2B^2}\|x^*\|^2.
\end{align}
Based on \cref{ggsmidacposcs1}, we have $\Phi(0)-\widetilde \Phi(\widetilde x^*) \leq \Phi(0)-\Phi(x^*)$, which, combined with $\|x^*\|= B$ and $K=\Theta \Big(B \sqrt{\frac{1}{\epsilon\mu_y^3}}\log\frac{\mbox{\small poly}(\epsilon,\mu_y,B,U,\Phi(x_{0})-\Phi(x^*))}{\epsilon} \Big)$, yields  $\Phi(z_{K})-\Phi( x^*) \leq \epsilon$, and the total complexity satisfies 
\begin{align}
&\mathcal{C}_{\text{\normalfont sub}}(\mathcal{A},\epsilon) \leq \mathcal{O}(n_J+n_H + n_G)\leq \mathcal{O}(K+KM+KN) \nonumber
\\&\leq \mathcal{O}\Big(B\sqrt{\frac{\widetilde L_y}{\epsilon\mu_y^4}}\log\frac{\mbox{\small poly}(\epsilon,\mu_y,B,U,\Phi(x_{0})-\Phi(x^*))}{\epsilon}  \log \frac{\mbox{\small poly}(B,\epsilon,\mu_y,U)}{\epsilon} \Big), 
\end{align}
which finishes the proof. 

\chapter{Proof of \Cref{chp_lower_bilevel}}\label{appendix:lower_bilevel}

\section{Proof of \Cref{thm:low1}}\label{appen:thm1}
In this section, we provide a complete proof of \Cref{thm:low1} under the strongly-convex-strongly-convex geometry. Note that our construction sets the dimensions of variables $x$ and $y$ to be the same, i.e.,  $p=q=d$.   
 From our proof sketch, the main proofs are divided into four steps: 1) constructing the worst-case instance that belongs to the problem class $\mathcal{F}_{scsc}$ defined in \Cref{de:pc}; 2) characterizing the optimal point $x^*=\argmin_{x\in\mathbb{R}^d}\Phi(x)$; 3) characterizing the subspaces $\mathcal{H}_x^k,\mathcal{H}_y^k$; and 4) developing lower bounds on the convergence and complexity.  

\vspace{0.2cm}
\noindent {\bf Step 1: construct the worst-case instance that satisfies \Cref{de:pc}.}
\vspace{0.2cm}

In this step, we show that the constructed $f,g$ in~\cref{str_fg} satisfy Assumptions~\ref{fg:smooth} and~\ref{g:hessiansJaco}, and $\Phi(x)$ is $\mu_x$-strongly-convex. It can be seen from~\cref{str_fg} that $f,g$ satisfies \cref{def:first} \eqref{df:sec} and \eqref{def:three} in Assumptions~\ref{fg:smooth} and~\ref{g:hessiansJaco} with arbitrary constants $L_x,L_y,\widetilde L_y,\widetilde L_{xy}$ and $\rho_{xy}=\rho_{yy}=0$ but requires $L_{xy}\geq \frac{(L_x-\mu_x)(\widetilde L_y-\mu_y)}{2\widetilde L_{xy}}$ (which is still at a constant level) due to the introduction of the term $\frac{\alpha\beta}{\widetilde L_{xy}}x^TZ^3y$ in $f$. We note that such a term introduces necessary connection between $f$ and $g$, and yields a tighter lower bound, as pointed out in the remark at the end of the proof sketch of \Cref{thm:low1}. 

We next show that the overall objective function $\Phi(x)=f(x,y^*(x)) $ is $\mu_x$-strongly-convex. 
According to~\cref{str_fg}, we have $g(x,\cdot)$ is $\mu_y$-strongly-convex with a single minimizer 
$y^*(x) = (\beta Z^2+\mu_y I)^{-1} \big(\frac{\widetilde L_{xy}}{2}Zx-b\big)$,
and hence we obtain from \cref{objective_deter} that $\Phi(x)$ is given by 
\begin{align}\label{phi_x}
\Phi(x) = &\frac{1}{2} x^T(\alpha Z^2 +\mu_x I) x -\frac{\alpha\beta}{\widetilde L_{xy}}x^TZ^3(\beta Z^2+\mu_y I)^{-1} \Big(\frac{\widetilde L_{xy}}{2}Zx-b\Big) \nonumber
\\ &+\frac{\bar L_{xy}}{2}x^TZ(\beta Z^2+\mu_y I)^{-1} \Big(\frac{\widetilde L_{xy}}{2}Zx-b\Big) + \frac{\bar L_{xy}}{\widetilde L_{xy}} b^T (\beta Z^2+\mu_y I)^{-1} \Big(\frac{\widetilde L_{xy}}{2}Zx-b\Big) \nonumber
\\&+ \frac{L_y}{2} \Big(\frac{\widetilde L_{xy}}{2}Zx-b\Big)^T (\beta Z^2+\mu_y I)^{-1}(\beta Z^2+\mu_y I)^{-1} \Big(\frac{\widetilde L_{xy}}{2}Zx-b\Big).  
\end{align}
Note that $Z$ is symmetric and invertible, and hence the singular value decomposition of $Z$ can be written as $Z= U \,\text{Diag}\{\sigma_1,...,\sigma_d\}U^T$, where $\sigma_i>0, i=1,...,d$ and $U$ is an orthogonal matrix. Then, for any integers $i,j>0$, simple calculation yields
\begin{align}\label{iterchange}
Z^i (\beta Z^2+\mu_y I)^{-j} &= U \text{Diag}\bigg\{\frac{\sigma^i_1}{(\beta \sigma_1^2+\mu_y)^j},...,\frac{\sigma^i_d}{(\beta \sigma_d^2+\mu_y)^j}\bigg\}U^T \nonumber
\\&= (\beta Z^2+\mu_y I)^{-j} Z^i. 
\end{align}
Using the relationship in \cref{iterchange}, we have  
\begin{align*}
\frac{1}{2} x^T\alpha Z^2 x =& \frac{\alpha\beta}{2} x^T Z^4 (\beta Z^2+\mu_y I)^{-1} x + \frac{\alpha \mu_y}{2}x^TZ^2(\beta Z^2+\mu_y I)^{-1}x,
\end{align*}
which, in conjunction with  \cref{phi_x} and \cref{iterchange}, yields
\begin{align}
\Phi(x) = & \frac{1}{2} \mu_x \|x\|^2 + \frac{2\alpha\mu_y+\bar L_{xy}\widetilde L_{xy}}{4} x^T Z^2 (\beta Z^2+\mu_y I)^{-1} x - \frac{\bar L_{xy}}{\widetilde L_{xy}} b^T(\beta Z^2+\mu_y I)^{-1} b
 \nonumber
\\&+ \frac{L_y}{2} \Big(\frac{\widetilde L_{xy}}{2}Zx-b\Big)^T (\beta Z^2+\mu_y I)^{-2}\Big(\frac{\widetilde L_{xy}}{2}Zx-b\Big) \nonumber
\\&+ \frac{2\alpha\beta}{\widetilde L_{xy}^2}b^TZ^2 (\beta Z^2+\mu_y I)^{-1} b 
\end{align}
which is $\mu_x$-strongly-convex. 

\vspace{0.2cm}
\noindent {\bf Step 2: characterize $x^*=\argmin_{x\in\mathbb{R}^d}\Phi(\cdot).$}
\vspace{0.2cm}

Based on the form of $\Phi(\cdot)$,  we have 
{\small\begin{align}\label{eq:aboveg}
\nabla \Phi(x) =& (\beta Z^2 +\mu_y I)^2\mu_x x + \Big(\alpha \mu_y +\frac{\bar L_{xy}\widetilde L_{xy}}{2}\Big) (\beta Z^2 +\mu_y I) Z^2 x + \frac{L_y\widetilde L_{xy}}{2}\Big(\frac{\widetilde L_{xy}}{2}Z^2x-Zb\Big) \nonumber
\\=& \Big( \beta^2\mu_x +\alpha \beta\mu_y+\frac{ \beta \bar L_{xy}\widetilde L_{xy}}{2}\Big) Z^4x + (2\beta\mu_x\mu_y+\alpha\mu_y^2+\frac{\mu_y\bar L_{xy}\widetilde L_{xy}}{2} +\frac{L_y\widetilde L_{xy}^2}{4}) Z^2x  \nonumber
\\&+ \mu_x\mu_y^2 x  - \frac{L_y\widetilde L_{xy}}{2} Zb.
\end{align}}
\hspace{-0.15cm}By setting $\nabla\Phi(x^*) = 0$, we have 
\begin{align}\label{z_equations}
Z^4x^* + &\underbrace{\frac{2\beta\mu_x\mu_y+\alpha\mu_y^2+\frac{\mu_y\bar L_{xy}\widetilde L_{xy}}{2} +\frac{L_y\widetilde L_{xy}^2}{4}}{\beta^2\mu_x +\alpha \beta\mu_y+\frac{ \beta \bar L_{xy}\widetilde L_{xy}}{2}}}_{\lambda} Z^2x^*  \nonumber
\\&\hspace{1cm}+ \underbrace{\frac{ \mu_x\mu_y^2}{\beta^2\mu_x +\alpha \beta\mu_y+\frac{ \beta \bar L_{xy}\widetilde L_{xy}}{2}}}_{\tau} x^* = \underbrace{\frac{L_y\widetilde L_{xy}Zb}{2(\beta^2\mu_x +\alpha \beta\mu_y+\frac{ \beta \bar L_{xy}\widetilde L_{xy}}{2})}}_{\widetilde b},
\end{align}
where we define $\lambda,\tau,\widetilde b$ for notational convenience. The following lemma establish useful properties of $x^*$ under a specific selection of $\widetilde b$. 
\begin{lemma}\label{le:x_star}
Let $b$ is chosen such that $\widetilde b$  satisfies $\widetilde  b_1= (2+\lambda +\tau) r -(3+\lambda)r^2 + r^3, \widetilde b_2 = r-1$ and $\widetilde b_t=0, t=3,...,d$, where $0<r<1$ is a solution of equation 
\begin{align}\label{eq:soulc}
1 - (4+\lambda)r + (6+2\lambda +\tau) r^2 -(4+\lambda) r^3 +r^4=0. 
\end{align}
Let $\hat x$ be a vector with each coordinate $\hat x_i = r^i$. Then, we have
\begin{align}
\|\hat x - x^*\| \leq \frac{(7+\lambda)
}{\tau}r^d.
\end{align}
\end{lemma}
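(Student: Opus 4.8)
The plan is to view $\hat x$ as an approximate solution of the linear system \cref{refere:eqs} and to control the error through its residual. Writing $A := Z^4 + \lambda Z^2 + \tau I$, the defining relation \cref{refere:eqs} reads $A x^* = \widetilde b$, so $x^* = A^{-1}\widetilde b$, and since
\begin{align}
\hat x - x^* = A^{-1}(A\hat x - \widetilde b) = A^{-1}\,\widetilde r, \qquad \widetilde r := A\hat x - \widetilde b,
\end{align}
it suffices to bound the residual $\widetilde r$ together with $\|A^{-1}\|$. First I would compute $\widetilde r$ row by row using the explicit banded forms of $Z^2$ and $Z^4$ in \cref{matrices_coupling}. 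For an interior index $3\le i\le d-2$ the relevant rows of $Z^4$, $Z^2$ and $I$ are $[1,-4,6,-4,1]$, $[-1,2,-1]$ and $1$, so plugging in $\hat x_i=r^i$ gives
\begin{align}
(A\hat x)_i = r^{i-2}\big(1-(4+\lambda)r+(6+2\lambda+\tau)r^2-(4+\lambda)r^3+r^4\big),
\end{align}
and the bracket vanishes exactly because $r$ solves \cref{eq:soulc}; as $\widetilde b_i=0$ there, $\widetilde r_i=0$.

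Next I would treat the boundary rows. For the top rows the first two rows of $Z^4$ are $[2,-3,1]$ and $[-3,6,-4,1]$, and direct substitution (using \cref{eq:soulc} once more to reduce the row-$2$ expression) yields $(A\hat x)_1=(2+\lambda+\tau)r-(3+\lambda)r^2+r^3$ and $(A\hat x)_2=r-1$, which are precisely the prescribed values $\widetilde b_1$ and $\widetilde b_2$; hence $\widetilde r_1=\widetilde r_2=0$. Thus the tailored choice of $\widetilde b_1,\widetilde b_2$ is exactly what annihilates the top-boundary residual, and the only surviving entries are $\widetilde r_{d-1},\widetilde r_d$, which come from the ``defects'' at the lower-right corner of $Z^4$ (the missing super-diagonal entries in rows $d-1,d$ and the corner value $5$ in place of $6$). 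Comparing the actual last two rows against the would-be interior recurrence isolates these defects and gives $\widetilde r_{d-1}=-r^{d+1}$ and $\widetilde r_d=-r^d+(4+\lambda)r^{d+1}-r^{d+2}$. Using $0<r<1$ to bound $|\widetilde r_{d-1}|\le r^d$ and $|\widetilde r_d|\le(6+\lambda)r^d$, I obtain
\begin{align}
\|\widetilde r\| \le \sqrt{1+(6+\lambda)^2}\,r^d \le (7+\lambda)\,r^d,
\end{align}
the last step since $1+(6+\lambda)^2 = 37+12\lambda+\lambda^2 \le (7+\lambda)^2$.

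Finally I would bound $\|A^{-1}\|$. Since $Z$ is symmetric and invertible, $Z^2=Z^TZ\succeq 0$ and $Z^4=(Z^2)^2\succeq 0$, while $\lambda,\tau>0$ by their definitions in \cref{refere:eqs}; hence $A\succeq \tau I$ and $\|A^{-1}\|\le 1/\tau$. Combining with the residual bound yields
\begin{align}
\|\hat x - x^*\| = \|A^{-1}\widetilde r\| \le \frac{1}{\tau}\,\|\widetilde r\| \le \frac{7+\lambda}{\tau}\,r^d,
\end{align}
which is the claimed estimate. The computation is essentially mechanical, and I expect the main obstacle to be the careful bookkeeping of the boundary rows of $Z^4$ and $Z^2$: one must check that the prescribed $\widetilde b_1,\widetilde b_2$ cancel the top-boundary residual \emph{exactly}, not merely to leading order, so that $\widetilde r$ is supported only on the last two coordinates and is therefore of size $O(r^d)$. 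This delicate point hinges on using \cref{eq:soulc} both to kill the interior rows and to simplify row $2$.
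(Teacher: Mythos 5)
Your proof is correct and is essentially the paper's own argument in a different wrapper: the paper defines $\hat b := (Z^4+\lambda Z^2+\tau I)\hat x$ and bounds $\|\widetilde b-\hat b\|$, which is exactly your residual $\widetilde r$ (with the same boundary defects $-r^{d+1}$ and $-r^d+(4+\lambda)r^{d+1}-r^{d+2}$), and the paper's step $\tau\|x^*-\hat x\|\le\|(Z^4+\lambda Z^2+\tau I)(x^*-\hat x)\|$ is the same as your bound $\|A^{-1}\|\le 1/\tau$ via $Z^2, Z^4\succeq 0$. The computations (interior rows annihilated by \cref{eq:soulc}, top rows cancelled exactly by the choice of $\widetilde b_1,\widetilde b_2$, and the final estimate $\sqrt{1+(6+\lambda)^2}\le 7+\lambda$) all match the paper's proof.
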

\begin{proof}
Note that the choice of $b$ is achievable because $Z$ is invertible with $Z^{-1}$, which is given by 
\begin{align*}
Z^{-1}= \begin{bmatrix}
 &   &  & 1\\
 &  &  1& 1  \\
  &\text{\reflectbox{$\ddots$}} &\text{\reflectbox{$\ddots
  $}} & \vdots \\
  1& 1  &1  &1 \\ 
\end{bmatrix}.
\end{align*}
Then, define a vector $\hat b$ with $\hat b_t=\widetilde b_t$ for $t=1,...,d-2$ and 
\begin{align}\label{eq:helpss}
\hat b_{d-1} =& r^{d-3} - (4+\lambda) r^{d-2} +(6+2\lambda+\tau)r^{d-1} - (4+\lambda) r^d \overset{\eqref{eq:soulc}}= -r^{d+1}\nonumber
\\ \hat b_d =& r^{d-2} - (4+\lambda) r^{d-1} +(5+2\lambda +\tau) r^d \overset{\eqref{eq:soulc}}= -r^d + (4+\lambda) r^{d+1}-r^{d+2}.
\end{align}
Then, it can be verified that $\hat x$ satisfies the following equations 
\begin{align*}
(2+\lambda +\tau)\hat x_1 - \hat (3+\lambda) x_2 +\hat x_3 &= \hat b_1 \nonumber
\\ -(3+\lambda) \hat x_1 +(6+2\lambda +\tau) \hat x_2 -(4+\lambda) \hat x_3 + \hat x_4 &= \hat b_2 \nonumber
\\ \hat x_t - (4+\lambda)\hat x_{t+1} + (6+2\lambda +\tau) \hat x_{t+2} -(4+\lambda) \hat x_{t+3}+\hat x_{t+4} &= \hat  b_{t+2}, \mbox{ for } 1\leq t\leq d-4  \nonumber
\\ \hat x_{d-3} - (4+\lambda)\hat x_{d-2} +(6+2\lambda +\tau) \hat x_{d-1} -(4+\lambda)\hat x_d &= \hat b_{d-1}\nonumber
\\\hat x_{d-2} -(4+\lambda) \hat x_{d-1} + (5+2\lambda+\tau) \hat x_d &= \hat b_d, 
\end{align*}
which, in conjunction with the forms of  $Z^2$ and $Z^4$ in \cref{matrices_coupling}, yields 
\begin{align*}
Z^4\hat x + \lambda Z^2 \hat x + \tau \hat x  = \hat b.
\end{align*}
Noting that $Z^4 x^* + \lambda Z^2 x^* + \tau x^*  = \widetilde b$, we have
\begin{align*}
\tau \|x^*-\hat x \|\leq \|(Z^4+\lambda Z^2 + \tau I)(x^*-\hat x)\| = \|\widetilde b -\hat b\| \overset{(i)}\leq (7+\lambda) r^d
\end{align*}
where $(i)$ follows from the definition of $\hat b $ in \cref{eq:helpss}. 
\end{proof}
\noindent {\bf Step 3: characterize subspaces $\mathcal{H}_x^K$ and $\mathcal{H}_y^K$.}
\vspace{0.2cm}

In this step, we characterize the forms of  the subspaces $\mathcal{H}_x^K$ and $\mathcal{H}_y^K$
for bilevel optimization algorithms considered in \Cref{alg_class}. Based on the constructions of $f,g$ in~\cref{str_fg}, we have
\begin{align*}
\nabla_x f(x,y) &= (\alpha Z^2+\mu_x I)x -\frac{\alpha\beta}{\widetilde L_{xy}} Z^3y +\frac{\bar L_{xy}}{2}Zy\nonumber
\\ \nabla_y f(x,y) &= -\frac{\alpha \beta}{\widetilde L_{xy}} Z^3x + \frac{\bar L_{xy}}{2} Zx +L_y y +\frac{\bar L_{xy}}{\widetilde L_{xy}} b - \frac{2\alpha\beta}{\widetilde L_{xy}^2} Z^2 b
\\\nabla_y g(x,y) &= (\beta Z^2 + \mu_y I) y - \frac{\widetilde L_{xy}}{2} Zx + b
\\\nabla_x\nabla_y g(x,y) &= -\frac{\widetilde L_{xy}}{2} Z, \; \nabla_y^2g(x,y) = \beta Z^2 +\mu_y I, 
\end{align*}
which, in conjunction with \cref{hxk} and \cref{x_span}, yields 
\begin{align}\label{hxy_steps}
\mathcal{H}_y^0 &= \mbox{Span}\{0\}, ...., \mathcal{H}_y^{s_0} = \mbox{Span}\{Z^{2(s_0-1)}b,...,Z^2b,b\} \nonumber
\\\mathcal{H}_x^0 &= .... \mathcal{H}_x^{s_0-1} =\mbox{Span}\{0\}, \mathcal{H}_x^{s_0} \subseteq\mbox{Span}\{Z^{2(T+s_0)}(Zb),....,Z^2(Zb),(Zb)\}. 
\end{align}
Repeating the same steps as in~\cref{hxy_steps}, it can be verified that 
\begin{align}\label{eq:intimedia}
H_{x}^{s_{Q-1}} \subseteq \mbox{Span}\{Z^{2(s_{Q-1} + QT+Q)}(Zb),...,Z^{2j}(Zb),...,Z^2(Zb),(Zb)\}. 
\end{align}
Recall from \cref{x_span} that $\mathcal{H}_x^K = \mathcal{H}_x^{s_{Q-1}}$ and $s_{Q-1}\leq K$. Then, we obtain from \cref{eq:intimedia} that $H_{x}^{K }$ satisfies 
\begin{align}\label{eq:H_xK}
H_{x}^{K }\subseteq \mbox{Span}\{Z^{2(K+QT+Q)}(Zb),....,Z^2(Zb),(Zb) \}.
\end{align}
\noindent {\bf Step 4: characterize convergence and complexity.}
\vspace{0.2cm}

Based on the results in Steps 1 and 2, we are now ready to provide a lower bound on the convergence rate and complexity of bilevel optimization algorithms.  Let $M = K+QT+Q+ 2$ and $x_0 = {\bf 0}$, and  
let  dimension $d$ satisfy 
\begin{align}\label{d_conditions}
d> \max\Big\{2M,M+1+\log_{r}\Big(\frac{\tau}{4(7+\lambda)}\Big)\Big\}.
\end{align}
Recall from \Cref{le:x_star} that $Zb$ has zeros at all coordinates $t=3,...,d$. Then, based on the form of subspaces $\mathcal{H}_x^K$ in \Cref{eq:H_xK} and using the zero-chain property in \Cref{zero_chain}, we have $x^K$ has zeros at coordinates $t=M+1,...,d$, and hence 
\begin{align}\label{x_kssca}
\|x^K-\hat x\| \geq \sqrt{\sum_{i=M+1}^d\|\hat x_i\|}  = r^{M}\sqrt{r^2+...+r^{2(d-M)}} \overset{(i)}\geq \frac{r^M}{\sqrt{2}}\|\hat x -x_0\|,
\end{align}
where $(i)$ follows from \cref{d_conditions}. Then, based on \cref{le:x_star} and~\cref{d_conditions}, we have 
\begin{align}\label{eq:hatxxs}
\|\hat x - x^*\| \leq \frac{7+\lambda}{\tau} < \frac{r^M}{2\sqrt{2}} r \overset{(i)}\leq \frac{r^M}{2\sqrt{2}} \|\hat x-x_0\|, 
\end{align}
where $(i)$ follows from the fact that $\|\hat x\| \geq r$. Combining \cref{x_kssca} and \cref{eq:hatxxs} further yields 
\begin{align}\label{eq:supp1}
\|x^K-x^*\| &\geq \|x^K-\hat x\| - \|\hat x - x^*\|  \nonumber
\\&\geq \frac{r^M}{\sqrt{2}} \|\hat x -x_0\| - \frac{r^M}{2\sqrt{2}}\|\hat x-x_0\| =\frac{r^M}{2\sqrt{2}}\|\hat x-x_0\|.
\end{align}
In addition,  note that 
\begin{align*}
\|x^* -\hat x\| \leq \frac{7+\lambda}{\tau} r^d  \overset{\eqref{d_conditions}}\leq \frac{1}{4} r \leq \frac{1}{4}\|\hat x\| \leq \frac{1}{4}\|\hat x - x^*\| +\frac{1}{4} \|x^*\|,
\end{align*}
which, in conjunction with $\|x_0-\hat x\|\geq \|x^*-x_0\| - \|x^*-\hat x\|$, yields
\begin{align}\label{eq:supp2}
\|x_0-\hat x\|\geq \frac{2}{3} \|x^*-x_0\|.
\end{align}
Combining \cref{eq:supp1} and \cref{eq:supp2} yields 
\begin{align}\label{x_lowbound}
\|x^K-x^*\| \geq \frac{\|x^*-x_0\|}{3\sqrt{2}} r^M.
\end{align}
 Then, since the objective function $\Phi(x)$ is $\mu_x$-strongly-convex, we have $\Phi(x^K)-\Phi(x^*)\geq \frac{\mu_x}{2}\|x^K-x^*\|^2$ and $\|x_0-x^*\|^2\geq \Omega(\mu_y^2)(\Phi(x_0)-\Phi(x^*))$, and hence \cref{x_lowbound} yields
\begin{align}
\Phi(x^K)-\Phi(x^*) \geq \Omega\Big( \frac{\mu_x\mu_y^2(\Phi(x_0)-\Phi(x^*))}{36} r^{2M}\Big). 
\end{align}
Recall that $r$ is the solution of equation $1 - (4+\lambda)r + (6+2\lambda +\tau) r^2 -(4+\lambda) r^3 +r^4=0$. Based on Lemma 4.2 in~\cite{zhang2019lower}, we have 
\begin{align}\label{eq:prange}
1-\frac{1}{\frac{1}{2}+\sqrt{\frac{\lambda}{2\tau}+\frac{1}{4}}} < r< 1. 
\end{align}
which, in conjunction with the definitions of $\lambda$ and $\tau$ in \cref{z_equations} and the fact $\bar L_{xy}\geq 0$, yields the first result \cref{result:first} in \Cref{thm:low1}. Then, in order to achieve an $\epsilon$-accurate solution, i.e., $\Phi(x^K)-\Phi(x^*) \leq \epsilon$, it requires
\begin{align}\label{eq: mbound}
M=K+QT+Q + 2 &\geq \frac{\log  \frac{\mu_x\mu_y^2(\Phi(x_0)-\Phi(x^*))}{\epsilon} }{2\log \frac{1}{r}}  \overset{(i)} \geq \Omega\Big(\sqrt{\frac{\lambda}{2\tau}}\log  \frac{\mu_x\mu_y^2(\Phi(x_0)-\Phi(x^*))}{\epsilon} \Big) \nonumber
\\&\geq \Omega\bigg(\sqrt{\frac{L_y\widetilde L_{xy}^2}{\mu_x\mu_y^2}}\log  \frac{\mu_x\mu_y^2(\Phi(x_0)-\Phi(x^*))}{\epsilon} \bigg),
\end{align}
where $(i)$ follows from \cref{eq:prange}. 
Recall that the complexity measure is given by $\mathcal{C}_{\text{\normalfont sub}}(\mathcal{A},\epsilon) \geq \Omega(n_J+n_H + n_G)$, where the numbers $n_J,n_H$ of Jacobian- and Hessian-vector products  are given by $n_J=Q$ and $n_H=QT$ and the number $n_G$ of gradient evaluation is given by $n_G=K$. Then, the total complexity $\mathcal{C}_{\text{\normalfont sub}}(\mathcal{A},\epsilon)\geq \Omega(Q+QT+K)$, which combined with \cref{eq: mbound}, implies 
\begin{align*}
\mathcal{C}_{\text{\normalfont sub}}(\mathcal{A},\epsilon) \geq \Omega\bigg(\sqrt{\frac{L_y\widetilde L_{xy}^2}{\mu_x\mu_y^2}}\log  \frac{\mu_x\mu_y^2(\Phi(x_0)-\Phi(x^*))}{\epsilon}  \bigg).
\end{align*}
Then, the proof is complete. 


\section{Proof of \Cref{main:convex}}\label{apep:mainconvexs}
In this section, we provide the proof for \Cref{main:convex} under the convex-strongly-convex geometry.  

 The proof is divided into the following steps: 
1) constructing the worst-case instance that belongs to the convex-strongly-convex problem class $\mathcal{F}_{csc}$ defined in \Cref{de:pc}; 2) characterizing $x^*\in \argmin_{x\in\mathbb{R}^d}\Phi(x)$; 3) developing the lower bound on gradient norm $\|\nabla\Phi(x)\|$ when last several coordinates of $x$ are zeros; 4) characterizing the subspaces $\mathcal{H}_x^k$ and $\mathcal{H}_x^k$; and 5) characterizing the convergence and complexity.

\vspace{0.2cm}
\noindent {\bf Step 1: construct the worst-case instance that satisfies \Cref{de:pc}.}
\vspace{0.2cm}

It can be verified that  the constructed $f,g$ in \cref{con_fg} satisfies \cref{def:first} \eqref{df:sec} and \eqref{def:three}  in Assumptions~\ref{fg:smooth} and~\ref{g:hessiansJaco}.  Then, similarly to the proof of \Cref{thm:low1}, we have $y^*(x) = (\beta Z^2 +\mu_y I)^{-1} (\frac{\widetilde L_{xy}}{2} Zx-b)$ and hence $\Phi(x) = f(x,y^*(x))$ takes the form of 
\begin{align*}
\Phi(x) =  \frac{L_x}{8} x^T Z^2 x+ \frac{L_y}{2} \Big (\frac{\widetilde L_{xy}}{2} Zx-b \Big)^T (\beta Z^2 +\mu_y I)^{-2}\Big(\frac{\widetilde L_{xy}}{2} Zx-b\Big),
\end{align*}
which can be verified to be convex. 

\vspace{0.2cm}
\noindent {\bf Step 2: characterize $x^*$.}
\vspace{0.2cm}

Note that 
the gradient $\nabla \Phi(x)$ is given by 
\begin{align}\label{conv_gdform}
\nabla \Phi(x) =  \frac{L_x}{4} Z^2 x + \frac{L_y\widetilde L_{xy}}{2} Z (\beta Z^2 +\mu_y I)^{-2}\Big(\frac{\widetilde L_{xy}}{2} Zx-b\Big). 
\end{align}
Then, setting $\nabla \Phi(x^*) = 0$ and using \cref{iterchange}, we have 
\begin{align}\label{x_star_convex}
\Big (\frac{L_x\beta^2}{4} Z^6 +  \frac{L_x\beta^2\beta\mu_y}{2} Z^4 +\Big (\frac{L_y\widetilde L_{xy}^2}{4}+\frac{L_x\mu_y^2}{4}\Big) Z^2 \Big) x^* =  \frac{L_y\widetilde L_{xy}}{2} Zb. 
\end{align}
Let $\widetilde b =\frac{L_y\widetilde L_{xy}}{2} Zb$, and we choose $b$ such that $\widetilde b_t = 0$ for $t=4,...,d$ and 
\begin{align}\label{de:bwidetilde}
\widetilde b_1 = &\frac{B}{\sqrt{d}} \Big( \frac{5}{4} L_x\beta^2+ L_x \beta\mu_y+ \frac{\widetilde L^2_{xy}L_y}{4}+\frac{L_x}{4} \mu_y^2\Big), \nonumber
\\ \widetilde b_2 =& \frac{B}{\sqrt{d}} (-L_x \beta^2 -\frac{L_x\beta }{2}\mu_y),\; \widetilde b_3 = \frac{B}{\sqrt{d}} \frac{L_x\beta^2}{4},
\end{align}
where the selection of $b$ is achievable because $Z$ is invertible with $Z^{-1}$ given by 
\begin{align*}
Z^{-1}= \begin{bmatrix}
 &   &  & -1\\
 &  &  -1& -1  \\
  &\text{\reflectbox{$\ddots$}} &\text{\reflectbox{$\ddots
  $}} & \vdots \\
  -1& -1  &-1  &-1 \\ 
\end{bmatrix}.
\end{align*}
Based on the forms of $Z^2$ in~\cref{matrices_coupling_convex} and the forms of $Z^4, Z^6$ that 
{\footnotesize
\begin{align}\label{coup_matrices_4_6}
Z^4=
\begin{bmatrix}
 5& -4 & 1 &  & &\\
 -4& 6  &-4 &1  & &\\
 1& -4 & 6 & -4 & 1&\\
  &  \ddots&  \ddots&  \ddots &  \ddots & \ddots\\
   &  & 1& -4 &  6& -3 \\
  &  & & 1 & -3 & 2\\ 
\end{bmatrix},\;
Z^6=
\begin{bmatrix}
 14& -14 & 6 & -1 & & & &\\
 -14& 20  &-15 & 6   & -1 & &&\\
 6& -15 & 20 & -15 & 6& -1&&\\
 -1&6& -15 & 20 & -15 & 6& -1&\\
&   \ddots&  \ddots&  \ddots&  \ddots &  \ddots & \ddots& \ddots\\
&& -1 & 6 & -15& 20 &  -15& 5 \\
 &&  &  -1& 6& -15 &  19& -9 \\
  && &  &-1 & 5 & -9 & 5\\
\end{bmatrix},
\end{align}}
 \hspace{-0.15cm}it can be checked from~\cref{x_star_convex}  that $x^* = \frac{B}{\sqrt{d}} {\bf 1}$, where $\bf 1$ is an all-ones vector and hence $\|x^*\|=B$. 

\vspace{0.2cm}
\noindent {\bf Step 3: characterize lower bound on $\|\nabla \Phi(x)\|$.}
\vspace{0.2cm}

Next, we characterize a lower bound on $\|\nabla \Phi(x)\|$ when the last three coordinates of $x$ are zeros, i.e., $x_{d-2}=x_{d-1}=x_d=0$. Let $\Omega = [I_{d-3}, {\bf 0}]^T$ and define $\widetilde x\in\mathbb{R}^{d-3}$ such that $\widetilde x_i=x_i$ for $i=1,...,d-3$. Then for any matrix $H$, $H\Omega$ is equivalent to removing the last three columns of $H$. Then,  based on the form of $\nabla \Phi(x)$ in~\cref{conv_gdform}, we have 
\begin{align}\label{eq:gdnorm1}
\min_{x\in\mathbb{R}^d: x_{d-2}=x_{d-1}=x_d=0} \|\nabla\Phi(x)\|^2 = \min_{\widetilde x\in\mathbb{R}^{d-3}} \|H\Omega \widetilde x -  (\beta Z^2 +\mu_y I)^{-2}\widetilde b\|^2
\end{align}
where matrix $H$ is given by 
\begin{align}\label{wideHdef}
H =  (\beta Z^2 +\mu_y I)^{-2}\underbrace{\Big (\frac{L_x\beta^2}{4} Z^6 +  \frac{L_x\beta^2\beta\mu_y}{2} Z^4 +\Big (\frac{L_y\widetilde L_{xy}^2}{4}+\frac{L_x\mu_y^2}{4}\Big) Z^2 \Big)}_{\widetilde H}.
\end{align}
Then using an approach similar to (7) in~\cite{carmon2019lower}, we have 
\begin{align}\label{eq:gdnomr2s}
\min_{\widetilde x\in\mathbb{R}^{d-3}} \|H\Omega \widetilde x -  (\beta Z^2 +\mu_y I)^{-2}\widetilde b\|^2 = \big(\widetilde b^T (\beta Z^2 +\mu_y I)^{-2} z\big)^2,
\end{align}
where $z$ is the normalized (i.e., $\|z\|=1$) solution of equation $(H\Omega)^Tz=0$. Next we characterize the solution $z$. Since $H=(\beta Z^2 +\mu_y I)^{-2}\widetilde H$, we have  
\begin{align}
(H\Omega)^Tz = (\widetilde H \Omega)^T(\beta Z^2 +\mu_y I)^{-2} z = 0.
\end{align}
Based on the definition of $\widetilde H $ in \cref{wideHdef} and the forms of $Z^2,Z^4,Z^6$ in \cref{matrices_coupling_convex} and~\cref{coup_matrices_4_6}, we have that the solution $z$ takes the form of $ z= \lambda (\beta Z^2 +\mu_y I)^{2}h$, where $\lambda$ is a factor such that $\|z\|=1$ and $h$ is a vector satisfying $h_t = t$ for $t=1,...,d$. Based on the definition of $Z^2$ in \cref{matrices_coupling_convex}, we have
\begin{align*}
1= \|z\| =& \lambda \sqrt{\sum_{i=1}^{d-2}(i\mu_y^2)^2 + ((d-1)\mu_y^2-\beta^2)^2 + (d\mu_y^2 +\beta^2+2\beta\mu_y)^2} \nonumber
\\\leq & \lambda \sqrt{\sum_{i=1}^{d-2}(i\mu_y^2)^2 + 2(d-1)^2\mu_y^4+2\beta^4 + 2d^2\mu_y^4 +2(\beta^2+2\beta\mu_y)^2} \nonumber
\\<& \lambda\sqrt{\frac{2}{3}\mu_y^4(d+1)^3 +4\beta^4+8\beta^3\mu_y+8\beta^2\mu_y^2},
\end{align*}
which further implies that 
\begin{align}\label{ggnormsacsa}
\lambda > \frac{1}{\sqrt{\frac{2}{3}\mu_y^4(d+1)^3 +4\beta^4+8\beta^3\mu_y+8\beta^2\mu_y^2}}.
\end{align}
Then, combining \cref{eq:gdnorm1}, \cref{eq:gdnomr2s} and \cref{ggnormsacsa} yields
\begin{align}\label{min_gdnorm}
\min_{x: x_{d-2}=x_{d-1}=x_d=0} \|\nabla\Phi(x)\|^2 =&\big(\widetilde b^T (\beta Z^2 +\mu_y I)^{-2} z\big)^2 = (\lambda \widetilde b^T h)^2 = \lambda^2 (\widetilde b_1 + 2\widetilde b_2+3\widetilde b_3)^2 \nonumber
\\\overset{(i)}=& \lambda^2 \frac {B^2}{4d}\Big(\frac{\widetilde L^2_{xy}L_y}{4}+\frac{L_x\mu_y^2}{4}\Big)^2 \nonumber
\\ \geq& \frac{B^2\Big(\frac{\widetilde L^2_{xy}L_y}{4}+\frac{L_x\mu_y^2}{4}\Big)^2}{\frac{8}{3}\mu_y^4d(d+1)^3 +16d\beta^4+32d\beta^3\mu_y+32d\beta^2\mu_y^2} \nonumber
\\\overset{(ii)}\geq &  \frac{B^2\Big(\frac{\widetilde L^2_{xy}L_y}{4}+\frac{L_x\mu_y^2}{4}\Big)^2}{8\mu_y^4d^4 +16d\beta^4+32d\beta^3\mu_y+32d\beta^2\mu_y^2}
\end{align}
where $(i)$ follows from the definition of $\widetilde b$ in~\cref{de:bwidetilde}, and $(ii)$ follows from $d\geq 3$. 

\vspace{0.2cm}
\noindent {\bf Step 4: characterize subspaces $\mathcal{H}_x^k$ and $\mathcal{H}_x^k$ .}
\vspace{0.2cm}

Based on the constructions of $f,g$ in \cref{con_fg}, we have 
\begin{align*}
\nabla_x f(x,y) &= \frac{L_x}{4} Z^2x,\;  \nabla_y f(x,y) = L_y y, \;\nabla_x\nabla_y g(x,y) = -\frac{\widetilde L_{xy}}{2} Z 
\nonumber
\\ \nabla_y^2g(x,y) &= \beta Z^2 +\mu_y I, \; \nabla_y g(x,y) = (\beta Z^2 + \mu_y I) y - \frac{\widetilde L_{xy}}{2} Zx + b, 
\end{align*}
which, in conjunction with \cref{hxk} and \cref{x_span}, yields 
\begin{align*}
\mathcal{H}_y^0 &= \mbox{Span}\{0\}, ...., \mathcal{H}_y^{s_0} = \mbox{Span}\{Z^{2(s_0-1)}b,...,Z^2b,b\} \nonumber
\\\mathcal{H}_x^0 &= .... \mathcal{H}_x^{s_0-1} =\mbox{Span}\{0\}, \mathcal{H}_x^{s_0} =\mbox{Span}\{Z^{2(T+s_0-2)}(Zb),....,Z^2(Zb),(Zb)\}. 
\end{align*}
Repeating the above procedure and noting that $s_{Q-1}\leq K$ yields
\begin{align}\label{eq:subsscas} 
\mathcal{H}_x^K = \mathcal{H}_x^{s_{Q-1}} &=  \mbox{Span}\{Z^{2(s_{Q-1}+QT-Q-1)}(Zb),....,Z^2(Zb),(Zb)\}\nonumber
\\&\subseteq \mbox{Span}\{Z^{2(K+QT-Q)}(Zb),....,Z^2(Zb),(Zb)\}. 
\end{align}

\noindent {\bf Step 5: characterize convergence and complexity.}
\vspace{0.2cm}

Let $M = K+QT-Q+3$ and consider an equation 
\begin{align}\label{d_eqtion}
r^4 +r \Big(\frac{2\beta^4}{\mu_y^4}+\frac{4\beta^3}{\mu_y^3}+\frac{4\beta^2}{\mu_y^2}\Big) = \frac{B^2\Big(\widetilde L^2_{xy}L_y+L_x\mu_y^2\Big)^2}{128\mu_y^4\epsilon^2},
\end{align}
where has a solution denoted as $r^*$.  
We choose $d = \lfloor r^* \rfloor$.Then, based on  \cref{min_gdnorm}, we have
\begin{align}\label{eq:quitescs}
\min_{x: x_{d-2}=x_{d-1}=x_d=0} \|\nabla\Phi(x)\|^2 \geq  \frac{B^2\Big(\frac{\widetilde L^2_{xy}L_y}{4}+\frac{L_x\mu_y^2}{4}\Big)^2}{8\mu_y^4(r^*)^4 +16r^*\beta^4+32r^*\beta^3\mu_y+32r^*\beta^2\mu_y^2} = \epsilon^2.
\end{align}
Then, to achieve $\|\nabla\Phi(x^K)\|< \epsilon$, it requires that $M> d-3$. Otherwise (i.e., if $M\leq d-3$), based on \cref{eq:subsscas} and the fact that $Zb$ has nonzeros only at the first three coordinates, we have $x^K$ has zeros at the last three coordinates and hence it follows from \cref{eq:quitescs} that 
$\|\nabla \Phi(x^K)\|\geq \epsilon$, which leads to a contradiction. Therefore, we have $M>  \lfloor r^* \rfloor-3$. Next, we characterize the total complexity.  Using the metric in~\cref{complexity_measyre}, we have 
\begin{align*} 
\mathcal{C}_{\text{\normalfont norm}}(\mathcal{A},\epsilon)\geq  \Omega(Q+QT+K) \geq  \Omega(M) \geq  \Omega(r^*).
\end{align*}
 Then, the proof is complete. 
\section{Proof of \Cref{co:co1}}
In this case, the condition number $\kappa_y$ satisfies $\kappa_y=\frac{\widetilde L_y}{\mu_y}\leq \mathcal{O}(1)$.
Then, it can be verified that $r^*$ satisfies $(r^*)^3> \Omega (\frac{2\beta^4}{\mu_y^4}+\frac{4\beta^3}{\mu_y^3}+\frac{4\beta^2}{\mu_y^2})$, and hence it follows from \cref{eq:rsoltion} that 
\begin{align*}
\mathcal{C}_{\text{\normalfont norm}}(\mathcal{A},\epsilon) \geq r^*\geq \Omega \Big(\frac{B^{\frac{1}{2}}(\widetilde L^2_{xy}L_y+L_x\mu_y^2)^{\frac{1}{2}}}{\mu_y\epsilon^{\frac{1}{2}}} \Big).
\end{align*}
\section{Proof of \Cref{co:co2}}
To prove  \Cref{co:co2}, we consider two cases $\mu_y\geq \Omega(\epsilon^{\frac{3}{2}})$ and $\mu_y\leq \mathcal{O}(\epsilon^{\frac{3}{2}})$ separately. 

{\bf Case 1: $\mu_y\geq \Omega(\epsilon^{\frac{3}{2}})$.} For this case, we have 
$\big(\frac{2\beta^4}{\mu_y^4}+\frac{4\beta^3}{\mu_y^3}+\frac{4\beta^2}{\mu_y^2}\big)\leq \mathcal{O}\big(\frac{1}{\mu_y^{3}\epsilon^{3/2}}\big)$. Then, it follows from \cref{eq:rsoltion} that $\mathcal{C}_{\text{\normalfont norm}}(\mathcal{A},\epsilon) \geq r^*\geq \Omega \big(\frac{1}{\mu_y\epsilon^{1/2}}\big)$. 

{\bf Case 2: $\mu_y\leq \mathcal{O}(\epsilon^{\frac{3}{2}})$.} For this case, first suppose $(r^*)^3\leq  \mathcal{O}\big(\frac{2\beta^4}{\mu_y^4}+\frac{4\beta^3}{\mu_y^3}+\frac{4\beta^2}{\mu_y^2}\big) $, and then it follows from \cref{eq:rsoltion}  that $r^* \geq \Omega(\frac{1}{\epsilon^{2}}) $. On the other hand, if $(r^*)^3\geq  \Omega\big(\frac{2\beta^4}{\mu_y^4}+\frac{4\beta^3}{\mu_y^3}+\frac{4\beta^2}{\mu_y^2}\big) $, then we obtain from \cref{eq:rsoltion}  that $r^*\geq \Omega(\frac{1}{\mu_y\epsilon^{1/2}})\geq \Omega(\frac{1}{\epsilon^{2}})$. Then, it concludes that $\mathcal{C}_{\text{\normalfont norm}}(\mathcal{A},\epsilon)\geq r^*\geq \Omega(\frac{1}{\epsilon^{2}})$.
Then, combining these two cases finishes the proof. 

\chapter{Proof of \Cref{chp:stoc_bilevel}}\label{appendix:stoc_bilevel}

\section{Supporting Lemmas}
 First  the Lipschitz properties in Assumption~\ref{ass:lip_stoc} imply the following lemma.
\begin{lemma}\label{le:boundv}
Suppose Assumption~\ref{ass:lip_stoc} holds. Then, the stochastic derivatives $\nabla F(z;\xi)$, $\nabla G(z;\xi)$, $\nabla_x\nabla_y G(z;\xi)$ and $\nabla_y^2 G(z;\xi)$ have bounded variances, i.e., for any $z$ and $\xi$, 
\begin{itemize}
\item $\mathbb{E}_\xi\left \|\nabla F(z;\xi)-\nabla f(z)\right\|^2 \leq M^2.$
\item $\mathbb{E}_\xi\left \|\nabla_x\nabla_y G(z;\xi)-\nabla_x\nabla_y g(z)\right\|^2 \leq L^2.$
\item $\mathbb{E}_\xi \left\|\nabla_y^2 G(z;\xi)-\nabla_y^2 g(z)\right\|^2 \leq L^2.$
\end{itemize}
\end{lemma}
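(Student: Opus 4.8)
The plan is to obtain all three bounds from a single two-step template: (i) the Lipschitz hypotheses in Assumption~\ref{ass:lip_stoc} force each \emph{realized} stochastic derivative to be bounded in norm by the stated constant, and (ii) each population derivative is exactly the expectation of its stochastic counterpart, so a bias--variance decomposition converts the magnitude bound into the desired variance bound. The whole argument reduces to the elementary fact that Lipschitz continuity caps the magnitude of a derivative.

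First I would record the pointwise magnitude bounds. Since $F(\cdot;\xi)$ is $M$-Lipschitz for every $\xi$, a differentiable $M$-Lipschitz function has gradient of norm at most $M$, so $\|\nabla F(z;\xi)\|\leq M$ for all $z,\xi$. Likewise, since $\nabla G(\cdot;\zeta)$ is $L$-Lipschitz, its Jacobian---the full Hessian $\nabla^2 G(z;\zeta)$ with respect to $z=(x,y)$---satisfies $\|\nabla^2 G(z;\zeta)\|\leq L$. Because $\nabla_x\nabla_y G(z;\zeta)$ and $\nabla_y^2 G(z;\zeta)$ are coordinate submatrices (blocks) of $\nabla^2 G(z;\zeta)$, each inherits the spectral-norm bound $L$, i.e. $\|\nabla_x\nabla_y G(z;\zeta)\|\leq L$ and $\|\nabla_y^2 G(z;\zeta)\|\leq L$.

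Second, I would identify the population derivatives as the expectations of the stochastic ones, namely $\nabla f(z)=\mathbb{E}_\xi\nabla F(z;\xi)$, $\nabla_x\nabla_y g(z)=\mathbb{E}_\zeta\nabla_x\nabla_y G(z;\zeta)$, and $\nabla_y^2 g(z)=\mathbb{E}_\zeta\nabla_y^2 G(z;\zeta)$, where the interchange of expectation and differentiation is justified by dominated convergence, using the uniform bounds of the previous step as integrable dominating functions. With this identification, each target is of the form $\mathbb{E}\|X-\mathbb{E}X\|^2$ for a random vector or matrix $X$ with $\|X\|\leq C$, where $C=M$ for item~1 and $C=L$ for items~2 and~3. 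Applying the bias--variance identity $\mathbb{E}\|X-\mathbb{E}X\|^2=\mathbb{E}\|X\|^2-\|\mathbb{E}X\|^2\leq \mathbb{E}\|X\|^2\leq C^2$ then yields all three inequalities.

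The main obstacle is the rigorous treatment of the two analytic subtleties the template glosses over. The first is the exchange of $\mathbb{E}$ and $\nabla$, which requires (almost-everywhere) differentiability of $F(\cdot;\xi)$ and $G(\cdot;\zeta)$ together with the uniform Lipschitz domination just established. The second is the norm convention in the variance identity: for the vector case (item~1) the Euclidean norm arises from an inner product, so $\mathbb{E}\|X-\mathbb{E}X\|^2=\mathbb{E}\|X\|^2-\|\mathbb{E}X\|^2$ is exact and the constant $M^2$ is immediate; for the matrix cases (items~2--3) one must apply the same decomposition under the spectral norm, and verifying that the sharp constant $L^2$ survives (rather than a loose multiple from the triangle inequality) is the point that genuinely deserves care. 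Everything else is mechanical once the Lipschitz-to-magnitude reduction is in place.
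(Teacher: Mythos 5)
Your overall template (Lipschitz continuity $\Rightarrow$ pointwise bound on the stochastic derivative $\Rightarrow$ variance bound) is exactly the reasoning the paper itself relies on: the paper gives no explicit proof of this lemma, saying only that Assumption~\ref{ass:lip_stoc} implies it. Your treatment of the first item is complete and correct: $\|\nabla F(z;\xi)\|\le M$ pointwise, $\nabla f(z)=\mathbb{E}_\xi \nabla F(z;\xi)$, and the Euclidean bias--variance identity give the bound $M^2$ with nothing left over.

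The gap is in items 2--3, and it sits precisely at the step you flagged and deferred: the constant $L^2$ does \emph{not} survive under the spectral norm, because the bias--variance identity (and even the weaker inequality $\mathbb{E}\|X-\mathbb{E}X\|^2\le \mathbb{E}\|X\|^2$) fails for norms not induced by an inner product. Concretely, let $X$ take the values $\mathrm{diag}(1,1)$, $\mathrm{diag}(-1,1)$, $\mathrm{diag}(0,-1)$ each with probability $1/3$; then $\|X\|\le 1$ almost surely, yet $\mathbb{E}X=\mathrm{diag}(0,1/3)$ and $\mathbb{E}\|X-\mathbb{E}X\|^2=\tfrac{1}{3}\bigl(1+1+\tfrac{16}{9}\bigr)=\tfrac{34}{27}>1$. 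So the verification you postponed cannot be carried out as envisioned. What the Lipschitz hypothesis actually buys under the spectral norm is $\|\nabla_y^2G(z;\zeta)\|\le L$ and, by Jensen's inequality, $\|\nabla_y^2 g(z)\|\le L$, whence the triangle inequality yields $\mathbb{E}\|\nabla_y^2G(z;\zeta)-\nabla_y^2 g(z)\|^2\le 4L^2$; this is loose by a factor of four but harmless, since every downstream use (Proposition~\ref{prop:hessian}, Theorem~\ref{th:nonconvex}) only needs the order of the bound. The alternative is to work in the Frobenius norm, where your inner-product identity is exact, but then the pointwise bound $\|\nabla_y^2G(z;\zeta)\|\le L$ no longer follows from Lipschitz continuity of $\nabla G$ without picking up dimension factors. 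So either accept $4L^2$ (and note that the stated constant is loose), or strengthen the norm convention in the assumption; as written, your step for the matrix items is unjustified, and under the natural spectral-norm reading the claimed constant $L^2$ is actually false.
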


\section{Proof of \Cref{prop:hessian}}
Based on the definition of $v_Q$ in~\cref{ours:est} and conditioning on $x_k,y_k^D$, we have 
\begin{align*}
\mathbb{E}&v_Q= \mathbb{E}  \eta \sum_{q=-1}^{Q-1}\prod_{j=Q-q}^Q (I - \eta \nabla_y^2G(x_k,y_k^D;\gB_j)) \nabla_y F(x_k,y_k^D;\gD_F),  \nonumber
 \\ & = \eta \sum_{q=0}^{Q} (I - \eta \nabla_y^2g(x_k,y_k^D))^q\nabla_y f(x_k,y_k^D)  \nonumber
 \\& = \eta \sum_{q=0}^{\infty} (I - \eta \nabla_y^2g(x_k,y_k^D))^q\nabla_y f(x_k,y_k^D) -\eta \sum_{q=Q+1}^{\infty} (I - \eta \nabla_y^2g(x_k,y_k^D))^q\nabla_y f(x_k,y_k^D) \nonumber
 \\& = \eta (\eta \nabla_y^2 g(x_k,y_k^D))^{-1}\nabla_y f(x_k,y_k^D) - \eta \sum_{q=Q+1}^{\infty} (I - \eta \nabla_y^2g(x_k,y_k^D))^q\nabla_y f(x_k,y_k^D), 
\end{align*}
which, in conjunction with the strong-convexity of function $g(x,\cdot)$, yields
{\small
\begin{align}\label{eq:fm}
\big\|\mathbb{E}v_Q- [\nabla_y^2 g(x_k,y^D_k)]^{-1}\nabla_y f(x_k,y_k^D) \big \| \leq \eta \sum_{q=Q+1}^{\infty}(1-\eta\mu)^{q} M\leq \frac{(1-\eta\mu)^{Q+1}M}{\mu}. 
\end{align}}
\hspace{-0.15cm}This finishes the proof for the estimation bias. 
We next prove the variance bound. 
{\small
\begin{align}\label{eq:init}
\mathbb{E} \bigg\|& \eta \sum_{q=-1}^{Q-1}\prod_{j=Q-q}^Q (I - \eta \nabla_y^2G(x_k,y_k^D;\gB_j)) \nabla_y F(x_k,y_k^D;\gD_F)-( \nabla_y^2 g(x_k,y_k^D))^{-1}\nabla_y f(x_k,y_k^D)\bigg\|^2 \nonumber
\\ \overset{(i)}\leq & 2\mathbb{E} \bigg\| \ \eta \sum_{q=-1}^{Q-1}\prod_{j=Q-q}^Q (I - \eta \nabla_y^2G(x_k,y_k^D;\gB_j))  -  ( \nabla_y^2 g(x_k,y_k^D))^{-1} \bigg \|^2M^2 + \frac{2M^2}{\mu^2D_f}  \nonumber
\\\leq& 4\mathbb{E}\bigg\|  \eta \sum_{q=-1}^{Q-1}\prod_{j=Q-q}^Q (I - \eta \nabla_y^2G(x_k,y_k^D;\gB_j)) -  \eta\sum_{q=0}^Q(I - \eta \nabla_y^2g(x_k,y_k^D))^q \bigg \|^2 M^2 \nonumber
\\&+ 4\mathbb{E}\bigg\|\eta\sum_{q=0}^Q(I - \eta \nabla_y^2g(x_k,y_k^D))^q) -  ( \nabla_y^2 g(x_k,y_k^D))^{-1} \bigg \|^2 M^2+ \frac{2M^2}{\mu^2D_f}  \nonumber
\\\overset{(ii)}\leq & 4\eta^2 \mathbb{E}\bigg\| \sum_{q=0}^{Q}\prod_{j=Q+1-q}^Q (I - \eta \nabla_y^2G(x_k,y_k^D;\gB_j)) -  \sum_{q=0}^Q(I - \eta \nabla_y^2g(x_k,y_k^D))^q \bigg \|^2M^2  \nonumber
\\&+\frac{4(1-\eta \mu)^{2Q+2}M^2}{\mu^2}+ \frac{2M^2}{\mu^2D_f}  \nonumber
\\\overset{(iii)}\leq&4\eta^2 M^2 Q\mathbb{E} \sum_{q=0}^{Q} \underbrace{\bigg\|\prod_{j=Q+1-q}^Q (I - \eta \nabla_y^2G(x_k,y_k^D;\gB_j))-  (I - \eta \nabla_y^2g(x_k,y_k^D))^q \bigg \|^2}_{M_q}   \nonumber
\\&+\frac{4(1-\eta \mu)^{2Q+2}M^2}{\mu^2}  + \frac{2M^2}{\mu^2D_f}
\end{align}
}
\hspace{-0.15cm}where $(i)$ follows from \Cref{le:boundv}, $(ii)$ follows from~\cref{eq:fm},  
 and $(iii)$ follows from the Cauchy-Schwarz inequality.

 Our next step is to upper-bound $M_q$ in~\cref{eq:init}. For simplicity, we define a general quantity $M_i$ for by replacing  $q$ in $M_q$ with $i$. Then, we have 
 
{\small\begin{align}
\mathbb{E}M_i =&\mathbb{E}\bigg\| (I - \eta \nabla_y^2g(x_k,y_k^D))\prod_{j=Q+2-i}^Q (I - \eta \nabla_y^2G(x_k,y_k^D;\gB_j)) -  (I - \eta \nabla_y^2g(x_k,y_k^D))^i \bigg \|^2 \nonumber
\\&+ \mathbb{E}\bigg\| \eta( \nabla_y^2g(x_k,y_k^D)-\nabla_y^2G(x_k,y_k^D;\gB_{Q+1-i}) ) \prod_{j=Q+2-i}^Q (I - \eta \nabla_y^2G(x_k,y_k^D;\gB_j)) \bigg \|^2 \nonumber
\\&+ 2\mathbb{E}\Big\langle(I - \eta \nabla_y^2g(x_k,y_k^D))\prod_{j=Q+2-i}^Q (I - \eta \nabla_y^2G(x_k,y_k^D;\gB_j)) -  (I - \eta \nabla_y^2g(x_k,y_k^D))^i, \nonumber
\\&\hspace{0.8cm}\eta( \nabla_y^2g(x_k,y_k^D)-\nabla_y^2G(x_k,y_k^D;\gB_{Q+1-i}) ) \prod_{j=Q+2-i}^Q (I - \eta \nabla_y^2G(x_k,y_k^D;\gB_j)) \Big\rangle \nonumber
\end{align}}
{\small
\begin{align}\label{eq:mq}
\overset{(i)}=&\mathbb{E}\bigg\| (I - \eta \nabla_y^2g(x_k,y_k^D))\prod_{j=Q+2-i}^Q (I - \eta \nabla_y^2G(x_k,y_k^D;\gB_j)) -  (I - \eta \nabla_y^2g(x_k,y_k^D))^i \bigg \|^2 \nonumber
\\&+ \mathbb{E}\bigg\|\eta( \nabla_y^2g(x_k,y_k^D)-\nabla_y^2G(x_k,y_k^D;\gB_{Q+1-i}) ) \prod_{j=Q+2-i}^Q (I - \eta \nabla_y^2G(x_k,y_k^D;\gB_j))\bigg \|^2 \nonumber
\\\overset{(ii)} \leq&(1-\eta\mu)^2\mathbb{E}M_{i-1} + \eta^2(1-\eta\mu)^{2i-2} \mathbb{E}\| \nabla_y^2g(x_k,y_k^D)-\nabla_y^2G(x_k,y_k^D;\gB_{Q+1-i}) \|^2 \nonumber
\\ \overset{(iii)}\leq&(1-\eta\mu)^2\mathbb{E} M_{i-1}+ \eta^2(1-\eta\mu)^{2i-2} \frac{L^2}{|\gB_{Q+1-i}|},
\end{align}}
\hspace{-0.15cm}where $(i)$ follows from the fact that $\mathbb{E}_{\gB_{Q+1-i}}\nabla_y^2G(x_k,y_k^D;\gB_{Q+1-i})  = \nabla_y^2g(x_k,y_k^D)$, $(ii)$ follows from the strong-convexity of function $G(x,\cdot;\xi)$, and $(iii)$ follows from~\Cref{le:boundv}.

Then, telescoping~\cref{eq:mq} over $i$ from $2$ to $q$ yields
\begin{align*}
\mathbb{E} M_q \leq L^2\eta^2(1-\eta\mu)^{2q-2}\sum_{j=1}^q \frac{1}{|\gB_{Q+1-j}|},
\end{align*}
which, combined with the choice of $|\gB_{Q+1-j}|=BQ(1-\eta\mu)^{j-1}$ for $j=1,...,Q$, yields
\begin{align}\label{eq:edhs}
\mathbb{E} M_q \leq &\eta^2(1-\eta\mu)^{2q-2}  \sum_{j=1}^q \frac{L^2}{BQ} \Big(\frac{1}{1-\eta\mu}\Big)^{j-1} \nonumber
\\=&\frac{\eta^2L^2}{BQ} (1-\eta\mu)^{2q-2} \frac{\left(\frac{1}{1-\eta\mu}\right)^{q-1}-1}{\frac{1}{1-\eta\mu} -1} \leq  \frac{\eta L^2}{(1-\eta\mu)\mu} \frac{1}{BQ}(1-\eta\mu)^q.
\end{align}
Substituting~\cref{eq:edhs} into~\cref{eq:init} yields
{\small\begin{align}
\mathbb{E} \bigg\|& \eta \sum_{q=-1}^{Q-1}\prod_{j=Q-q}^Q (I - \eta \nabla_y^2G(x_k,y_k^D;\gB_j)) \nabla_y F(x_k,y_k^D;\gD_F)-( \nabla_y^2 g(x_k,y_k^D))^{-1}\nabla_y f(x_k,y_k^D)\bigg\|^2 \nonumber
\\\leq& 4\eta^2 M^2 Q \sum_{q=0}^{Q} \frac{\eta L^2}{(1-\eta\mu)\mu} \frac{1}{BQ}(1-\eta\mu)^q+\frac{4(1-\eta \mu)^{2Q+2}M^2}{\mu^2}  + \frac{2M^2}{\mu^2D_f}  \nonumber
\\\leq &\frac{4\eta^2  L^2M^2}{\mu^2} \frac{1}{B}+\frac{4(1-\eta \mu)^{2Q+2}M^2}{\mu^2}+ \frac{2M^2}{\mu^2D_f} ,
\end{align} }
\hspace{-0.12cm}where the last inequality follows from the fact that $\sum_{q=0}^S x^{q}\leq \frac{1}{1-x}$. Then, the proof is complete.

\section{Auxiliary Lemmas}\label{se:supplemma}
We first use the following lemma to characterize the first-moment error of the gradient estimate $\widehat \nabla \Phi(x_k)$, whose form is given by~\cref{estG}. 
\begin{lemma}\label{le:first_m}
Suppose Assumptions~\ref{assum:stoc_geo},~\ref{ass:lip_stoc} and \ref{high_lip_stoc} hold.  Then, conditioning on $x_k$ and $y_k^D$, we have
{\small
\begin{align*}
\big\|\mathbb{E}\widehat \nabla \Phi(x_k)-\nabla \Phi(x_k)\big\|^2\leq 2 \Big( L+\frac{L^2}{\mu} + \frac{M\tau}{\mu}+\frac{LM\rho}{\mu^2}\Big)^2\|y_k^D-y^*(x_k)\|^2 +\frac{2L^2M^2(1-\eta \mu)^{2Q}}{\mu^2}.
\end{align*}}
\end{lemma}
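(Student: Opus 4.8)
The plan is to compute the conditional expectation of the estimator $\widehat\nabla\Phi(x_k)$ explicitly and then compare it with the true hypergradient through an intermediate quantity. First I would exploit the independence of the sample batches: conditioning on $x_k$ and $y_k^D$, the batch $\gD_F$ is independent of everything else, so $\mathbb{E}[\nabla_x F(x_k,y_k^D;\gD_F)]=\nabla_x f(x_k,y_k^D)$; and the Jacobian batch $\gD_G$ is drawn independently of the batches $\{\gB_j\}$ defining $v_Q$ (stored in $\gD_H$), so the expectation factors as $\mathbb{E}[\nabla_x\nabla_y G(x_k,y_k^D;\gD_G)v_Q]=\nabla_x\nabla_y g(x_k,y_k^D)\,\mathbb{E}[v_Q]$. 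Hence $\mathbb{E}\widehat\nabla\Phi(x_k)=\nabla_x f(x_k,y_k^D)-\nabla_x\nabla_y g(x_k,y_k^D)\mathbb{E}[v_Q]$.

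Next I would insert the intermediate hypergradient evaluated at the inexact inner solution,
\begin{align*}
\bar\nabla\Phi(x_k):=\nabla_x f(x_k,y_k^D)-\nabla_x\nabla_y g(x_k,y_k^D)[\nabla_y^2 g(x_k,y_k^D)]^{-1}\nabla_y f(x_k,y_k^D),
\end{align*}
and split the error as $\mathbb{E}\widehat\nabla\Phi(x_k)-\nabla\Phi(x_k)=(\mathbb{E}\widehat\nabla\Phi(x_k)-\bar\nabla\Phi(x_k))+(\bar\nabla\Phi(x_k)-\nabla\Phi(x_k))$. The first term equals $-\nabla_x\nabla_y g(x_k,y_k^D)(\mathbb{E}[v_Q]-[\nabla_y^2 g(x_k,y_k^D)]^{-1}\nabla_y f(x_k,y_k^D))$, which by $\|\nabla_x\nabla_y g\|\le L$ (Assumption~\ref{ass:lip_stoc}) and the Neumann-series bias bound~\cref{bias} of \Cref{prop:hessian} has norm at most $\frac{LM(1-\eta\mu)^{Q+1}}{\mu}\le\frac{LM(1-\eta\mu)^{Q}}{\mu}$.

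For the second term, the $y$-displacement error $\bar\nabla\Phi(x_k)-\nabla\Phi(x_k)$, I would use the Lipschitz continuity of each factor. The gradient part contributes $\|\nabla_x f(x_k,y_k^D)-\nabla_x f(x_k,y^*(x_k))\|\le L\|y_k^D-y^*(x_k)\|$, and the second-order part I would control by the three-factor product decomposition $\|A_1B_1C_1-A_2B_2C_2\|\le\|A_1-A_2\|\|B_1\|\|C_1\|+\|A_2\|\|B_1-B_2\|\|C_1\|+\|A_2\|\|B_2\|\|C_1-C_2\|$ applied with $A=\nabla_x\nabla_y g$, $B=[\nabla_y^2 g]^{-1}$, $C=\nabla_y f$, evaluated at $(x_k,y_k^D)$ versus $(x_k,y^*(x_k))$. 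Using $\|\nabla_x\nabla_y g\|\le L$ with $\tau$-Lipschitzness, $\|[\nabla_y^2 g]^{-1}\|\le\frac{1}{\mu}$ together with the resolvent estimate $\|[\nabla_y^2 g(z_1)]^{-1}-[\nabla_y^2 g(z_2)]^{-1}\|\le\frac{\rho}{\mu^2}\|z_1-z_2\|$ (from Assumption~\ref{high_lip_stoc} and the identity $B_1-B_2=B_1(\nabla_y^2 g(z_2)-\nabla_y^2 g(z_1))B_2$), and $\|\nabla_y f\|\le M$ with $L$-Lipschitzness, this yields the coefficient $\frac{M\tau}{\mu}+\frac{LM\rho}{\mu^2}+\frac{L^2}{\mu}$ multiplying $\|y_k^D-y^*(x_k)\|$. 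Adding the two brackets gives $\|\bar\nabla\Phi(x_k)-\nabla\Phi(x_k)\|\le\big(L+\frac{L^2}{\mu}+\frac{M\tau}{\mu}+\frac{LM\rho}{\mu^2}\big)\|y_k^D-y^*(x_k)\|$, and finally combining the two error terms through $\|a+b\|^2\le2\|a\|^2+2\|b\|^2$ delivers the stated bound.

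I expect the main obstacle to be the bookkeeping in the second-order product decomposition — specifically establishing the resolvent bound $\frac{\rho}{\mu^2}$ for the inverse Hessian and tracking which factor is controlled by $L$, $\frac{1}{\mu}$, or $M$ — rather than any conceptual difficulty. The one genuinely subtle modeling point is the independence argument that lets the conditional expectation pass through the Jacobian–vector product $\nabla_x\nabla_y G(x_k,y_k^D;\gD_G)v_Q$; this relies crucially on drawing $\gD_G$ separately from the batches $\{\gB_j\}$ collected in $\gD_H$, so that $\gD_G$ and $v_Q$ are conditionally independent given $(x_k,y_k^D)$.
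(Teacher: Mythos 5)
Your proposal is correct and follows essentially the same route as the paper's proof: the paper also introduces the intermediate quantity $\widetilde \nabla \Phi_D(x_k)=\nabla_x f(x_k,y^D_k) -\nabla_x \nabla_y g(x_k,y^D_k) [\nabla_y^2 g(x_k,y^D_k)]^{-1}\nabla_y f(x_k,y^D_k)$, splits the error into the Neumann-series bias term (bounded via \Cref{prop:hessian} and $\|\nabla_x\nabla_y g\|\le L$) and the $y$-displacement term (bounded via the same factor-wise Lipschitz decomposition, including the inverse-Hessian estimate $\|M_1^{-1}-M_2^{-1}\|\leq \|M_1^{-1}M_2^{-1}\|\|M_1-M_2\|$ yielding the $\frac{LM\rho}{\mu^2}$ coefficient), and finishes with $\|a+b\|^2\le 2\|a\|^2+2\|b\|^2$. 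The independence argument you flag as subtle is indeed what licenses the paper's factorization $\mathbb{E}[\nabla_x\nabla_y G(x_k,y_k^D;\gD_G)v_Q]=\nabla_x\nabla_y g(x_k,y_k^D)\,\mathbb{E}[v_Q]$, though the paper uses it implicitly without comment.
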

\begin{proof}[\bf Proof of~\Cref{le:first_m}] To simplify notations, we define 
\begin{align}\label{def:phih}
\widetilde \nabla \Phi_D(x_k) =  \nabla_x f(x_k,y^D_k) -\nabla_x \nabla_y g(x_k,y^D_k) \big[\nabla_y^2 g(x_k,y^D_k)\big]^{-1}
\nabla_y f(x_k,y^D_k).
\end{align}
By the definition of $\widehat \nabla \Phi(x_k)$ in~\cref{estG} and conditioning on $x_k$ and $y_k^D$, we have 
\begin{align*}
\mathbb{E}\widehat \nabla \Phi(x_k) =&  \nabla_x f(x_k,y_k^D)-\nabla_x \nabla_y g(x_k,y_k^D)\mathbb{E} v_Q\nonumber
\\=&\widetilde \nabla \Phi_D(x_k) - \nabla_x \nabla_y g(x_k,y_k^D)( \mathbb{E}v_Q- [\nabla_y^2 g(x_k,y^D_k)]^{-1}\nabla_y f(x_k,y_k^D)),
\end{align*}
which further implies that 
\begin{align}\label{eq:first_eg}
\big\|\mathbb{E}\widehat \nabla& \Phi(x_k)- \nabla \Phi(x_k)  \big\|^2\nonumber
\\\leq &2\mathbb{E}\|\widetilde \nabla \Phi_D(x_k) -\nabla \Phi(x_k)\|^2 +  2\|\mathbb{E}\widehat \nabla \Phi(x_k)- \widetilde \nabla \Phi_D(x_k)\|^2 \nonumber
\\\leq&2\mathbb{E}\|\widetilde \nabla \Phi_D(x_k) -\nabla \Phi(x_k)\|^2 + 2L^2\| \mathbb{E}v_Q- [\nabla_y^2 g(x_k,y^D_k)]^{-1}\nabla_y f(x_k,y_k^D) \|^2\nonumber
\\\leq& 2\mathbb{E}\|\widetilde \nabla \Phi_D(x_k) -\nabla \Phi(x_k)\|^2  + \frac{ 2L^2M^2(1-\eta\mu)^{2Q+2}}{\mu^2},
\end{align}
where the last inequality follows from~\Cref{prop:hessian}.  
Our next step is to upper-bound the first term at the right hand side of~\cref{eq:first_eg}. Using the fact that $\big\|\nabla_y^2 g(x,y)^{-1}\big\|\leq \frac{1}{\mu}$ and based on Assumptions~\ref{ass:lip_stoc} and~\ref{high_lip_stoc}, we have
\begin{align}\label{eq:appox}
\|\widetilde \nabla \Phi_D(x_k) -\nabla \Phi(x_k)\| \leq &\| \nabla_x f(x_k,y^D_k)- \nabla_x f(x_k,y^*(x_k))\| \nonumber
\\&+\frac{L^2}{\mu}\|y_k^D-y^*(x_k)\|+ \frac{M\tau}{\mu}\|y_k^D-y^*(x_k)\| \nonumber
\\&+ LM \big\|\nabla_y^2 g(x_k,y^D_k)^{-1} \nonumber
-\nabla_y^2 g(x_k,y^*(x_k))^{-1}
\big\|
\\\leq & \Big( L+\frac{L^2}{\mu} + \frac{M\tau}{\mu}+\frac{LM\rho}{\mu^2}\Big) \|y_k^D-y^*(x_k)\|,
\end{align} 
where  the last inequality follows because $\|M_1^{-1}-M_2^{-1}\|\leq \|M_1^{-1}M_2^{-1}\|\|M_1-M_2\| $ for any  matrices $M_1$ and $M_2$. Combining~\cref{eq:first_eg} and~\cref{eq:appox}
 completes the proof.
\end{proof}
Then, we characterize the variance of the estimator $\widehat \nabla \Phi(x_k)$. 
\begin{lemma}\label{le:variancc} 
Suppose Assumptions~\ref{assum:stoc_geo},~\ref{ass:lip_stoc} and \ref{high_lip_stoc} hold. Then, we have
\begin{align*}
\mathbb{E}\|\widehat \nabla& \Phi(x_k)-\nabla \Phi(x_k)\|^2 \leq   \frac{4L^2M^2}{\mu^2D_g} + \Big(\frac{8L^2}{\mu^2} + 2\Big) \frac{M^2}{D_f}+ \frac{16\eta^2  L^4M^2}{\mu^2} \frac{1}{B} \nonumber
\\&+\frac{16 L^2M^2(1-\eta \mu)^{2Q}}{\mu^2}+ \Big( L+\frac{L^2}{\mu} + \frac{M\tau}{\mu}+\frac{LM\rho}{\mu^2}\Big)^2 \mathbb{E}\|y_k^D-y^*(x_k)\|^2.
\end{align*}
\end{lemma}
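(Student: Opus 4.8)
The plan is to bound the second moment by inserting the intermediate quantity $\widetilde\nabla\Phi_D(x_k)$ defined in \cref{def:phih} and splitting the error into a \emph{stochastic} part $\widehat\nabla\Phi(x_k)-\widetilde\nabla\Phi_D(x_k)$ and a \emph{deterministic approximation} part $\widetilde\nabla\Phi_D(x_k)-\nabla\Phi(x_k)$. The second part is deterministic once we condition on $x_k$ and $y_k^D$, and \cref{eq:appox} (established inside the proof of \Cref{le:first_m}) already controls it by $\big(L+\tfrac{L^2}{\mu}+\tfrac{M\tau}{\mu}+\tfrac{LM\rho}{\mu^2}\big)\|y_k^D-y^*(x_k)\|$, which after squaring yields the last term of the bound. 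Since $\mathbb{E}[\widehat\nabla\Phi(x_k)\,|\,x_k,y_k^D]$ differs from $\widetilde\nabla\Phi_D(x_k)$ only through the bias of $v_Q$, the cross term arising from this split can be absorbed into the exponentially small $(1-\eta\mu)^{2Q}$ contribution via \Cref{prop:hessian}, so that the deterministic term keeps coefficient one while the exponential term picks up a larger constant.

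The core of the argument is bounding $\mathbb{E}\|\widehat\nabla\Phi(x_k)-\widetilde\nabla\Phi_D(x_k)\|^2$. Writing $v_Q^{*}=[\nabla_y^2 g(x_k,y_k^D)]^{-1}\nabla_y f(x_k,y_k^D)$ and recalling \cref{estG}, I would decompose this difference into three pieces: the gradient-sampling error $T_1=\nabla_x F(x_k,y_k^D;\gD_F)-\nabla_x f(x_k,y_k^D)$, the Jacobian-sampling error $T_2=[\nabla_x\nabla_y G(x_k,y_k^D;\gD_G)-\nabla_x\nabla_y g(x_k,y_k^D)]v_Q$, and the Hessian-inverse estimation error $T_3=\nabla_x\nabla_y g(x_k,y_k^D)(v_Q-v_Q^{*})$. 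The key observation is that $\gD_G$ is drawn independently of $\gD_F$ and $\gD_H$, so $T_2$ is mean-zero and uncorrelated with $T_1$ and $T_3$ conditionally on $(x_k,y_k^D,v_Q)$; this lets me peel off $\mathbb{E}\|T_2\|^2$ without paying a cross term. I would then bound $\mathbb{E}\|T_1\|^2\le M^2/D_f$ by \Cref{le:boundv}, bound $\mathbb{E}\|T_2\|^2\le \tfrac{L^2}{D_g}\mathbb{E}\|v_Q\|^2\lesssim \tfrac{L^2M^2}{\mu^2 D_g}$ using \Cref{le:boundv} together with the independence and a uniform bound on $\mathbb{E}\|v_Q\|^2$, and bound $\mathbb{E}\|T_3\|^2\le L^2\,\mathbb{E}\|v_Q-v_Q^{*}\|^2$, where the last factor is exactly the variance estimate of \Cref{prop:hessian}, producing the $\tfrac{1}{B}$, $(1-\eta\mu)^{2Q}$ and $\tfrac{1}{D_f}$ terms.

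Finally I would collect the three pieces together with the deterministic term and take the outer expectation over the randomness generating $(x_k,y_k^D)$, which turns $\|y_k^D-y^*(x_k)\|^2$ into $\mathbb{E}\|y_k^D-y^*(x_k)\|^2$ and gives the stated inequality up to the Cauchy--Schwarz constants. The main obstacle is the statistical coupling between $v_Q$ and the outer gradient batch $\gD_F$: since $v_0=\nabla_y F(x_k,y_k^D;\gD_F)$ is reused inside the construction of $v_Q$ in \cref{ours:est}, the terms $T_1$ and $T_3$ share randomness and cannot be decorrelated, so they must be bounded jointly; relatedly, controlling $\mathbb{E}\|v_Q\|^2$ for the $T_2$ estimate is delicate because $v_Q$ is a biased, potentially high-variance Neumann-type estimator, and one must combine its bias and variance bounds from \Cref{prop:hessian} with $\|v_Q^{*}\|\le M/\mu$ to obtain a clean uniform bound. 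Correctly exploiting the independence of $\gD_G$ to avoid an extra multiplicative factor is the other point requiring care.
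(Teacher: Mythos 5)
Your proposal is correct and follows the same skeleton as the paper's proof: insert $\widetilde\nabla\Phi_D(x_k)$ from \cref{def:phih}, bound the deterministic part by \cref{eq:appox}, and bound the stochastic part via \Cref{le:boundv} and the variance estimate of \Cref{prop:hessian}. Two details diverge, and they trade off rigor against constants. First, the paper never pairs the Jacobian sampling noise with the random vector $v_Q$ as your $T_2$ does; it instead splits $\nabla_x\nabla_y G(x_k,y_k^D;\gD_G)v_Q-\nabla_x\nabla_y g(x_k,y_k^D)v_Q^{*}$ into $[\nabla_x\nabla_y G(\gD_G)-\nabla_x\nabla_y g]\,v_Q^{*}$ plus $\nabla_x\nabla_y G(\gD_G)(v_Q-v_Q^{*})$, so the Jacobian noise multiplies the \emph{deterministic} vector $v_Q^{*}=[\nabla_y^2 g(x_k,y_k^D)]^{-1}\nabla_y f(x_k,y_k^D)$ with $\|v_Q^{*}\|\le M/\mu$. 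This produces the $\tfrac{4L^2M^2}{\mu^2 D_g}$ term directly and makes any bound on $\mathbb{E}\|v_Q\|^2$ unnecessary, whereas your route needs the extra (workable but constant-inflating) step $\mathbb{E}\|v_Q\|^2\le 2\mathbb{E}\|v_Q-v_Q^{*}\|^2+2M^2/\mu^2$; on the other hand, your observation that $T_2$ is conditionally mean-zero and decorrelated from $T_1,T_3$ saves a Young factor the paper pays. Second, for the outer bias--variance split the paper simply asserts $\mathbb{E}[\widehat\nabla\Phi(x_k)\mid x_k,y_k^D]=\widetilde\nabla\Phi_D(x_k)$ so that the cross term vanishes identically; as you correctly note, this holds only up to the exponentially small bias of $v_Q$, so your absorption of the cross term via \Cref{prop:hessian} is the more careful treatment---though, as you implicitly concede with ``up to Cauchy--Schwarz constants,'' Young's inequality then cannot keep the coefficient of the $\mathbb{E}\|y_k^D-y^*(x_k)\|^2$ term at exactly one. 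Net effect: your argument proves the lemma with the same architecture and marginally different constants; the paper's choice of $v_Q^{*}$ in the splitting is precisely what makes its constants come out as stated.
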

\begin{proof}[\bf Proof of~\Cref{le:variancc}] Based on the definitions of $\nabla \Phi(x_k)$ and $\widetilde \nabla \Phi_D(x_k)$ in~\cref{trueG} and~\cref{def:phih} and conditioning on $x_k$ and $y_k^D$, we have
\begin{align}\label{eq:vbbs}
\mathbb{E}\|\widehat \nabla& \Phi(x_k)-\nabla \Phi(x_k)\|^2  \nonumber
\\\overset{(i)}=& \mathbb{E} \|\widehat \nabla \Phi(x_k)- \widetilde \nabla \Phi_D(x_k)\|^2+\|\widetilde \nabla \Phi_D(x_k)-\nabla \Phi(x_k)\|^2 
\nonumber
\\\overset{(ii)}\leq &2\mathbb{E} \big\|\nabla_x \nabla_y G(x_k,y_k^D;\gD_G) v_Q -\nabla_x \nabla_y g(x_k,y^D_k) \big[\nabla_y^2 g(x_k,y^D_k)\big]^{-1}
\nabla_y f(x_k,y^D_k)\big\|^2  \nonumber
\\&+ \frac{2M^2}{D_f}+ \Big( L+\frac{L^2}{\mu} + \frac{M\tau}{\mu}+\frac{LM\rho}{\mu^2}\Big)^2 \|y_k^D-y^*(x_k)\|^2 \nonumber
\\\overset{(iii)}\leq& \frac{4M^2}{\mu^2}\mathbb{E} \|\nabla_x \nabla_y G(x_k,y_k^D;\gD_G) -\nabla_x \nabla_y g(x_k,y^D_k) \|^2 \nonumber
\\&+ 4L^2\mathbb{E}\|v_Q-\big[\nabla_y^2 g(x_k,y^D_k)\big]^{-1}
\nabla_y f(x_k,y^D_k)\|^2 \nonumber
\\&+ \Big( L+\frac{L^2}{\mu} + \frac{M\tau}{\mu}+\frac{LM\rho}{\mu^2}\Big)^2 \|y_k^D-y^*(x_k)\|^2 + \frac{2M^2}{D_f},
\end{align}
where $(i)$ follows because $\mathbb{E}_{\gD_G,\gD_H,\gD_F} \widehat \nabla \Phi(x_k) =  \widetilde \nabla \Phi_D(x_k)$, $(ii)$ follows from \Cref{le:boundv} and~\cref{eq:appox}, and $(iii)$
follows from the Young's inequality and Assumption~\ref{ass:lip_stoc}. 

 Using~\Cref{le:boundv} and~\Cref{prop:hessian} in \cref{eq:vbbs}, yields 
\begin{align}
\mathbb{E}\|\widehat \nabla \Phi(x_k)-&\nabla \Phi(x_k)\|^2 \leq   \frac{4L^2M^2}{\mu^2D_g} + \frac{16\eta^2  L^4M^2}{\mu^2} \frac{1}{B}+\frac{16(1-\eta \mu)^{2Q}L^2M^2}{\mu^2}+ \frac{8L^2M^2}{\mu^2D_f}\nonumber
\\&+ \Big( L+\frac{L^2}{\mu} + \frac{M\tau}{\mu}+\frac{LM\rho}{\mu^2}\Big)^2 \|y_k^D-y^*(x_k)\|^2 + \frac{2M^2}{D_f},
\end{align}
which, unconditioning on $x_k$ and $y_k^D$, completes the proof. 
\end{proof}
It can be seen from Lemmas~\ref{le:first_m} and~\ref{le:variancc} that the upper bounds on both the estimation error and bias depend on the tracking error $\|y_k^D-y^*(x_k)\|^2$. The following lemma provides an upper bound on such a tracking error $\|y_k^D-y^*(x_k)\|^2$. 
\begin{lemma}\label{tra_error}
Suppose Assumptions~\ref{assum:stoc_geo},~\ref{ass:lip_stoc} and~\ref{ass:bound} hold. Define constants
\begin{small}
\begin{align}\label{eq:defs}
\lambda=&  \Big(\frac{L-\mu}{L+\mu}\Big)^{2D} \Big(2+  \frac{4\beta^2L^2}{\mu^2}  \Big( L+\frac{L^2}{\mu} + \frac{M\tau}{\mu}+\frac{LM\rho}{\mu^2}\Big)^2  \Big) \nonumber
\\\Delta =&\frac{4L^2M^2}{\mu^2D_g} + \Big(\frac{8L^2}{\mu^2} + 2\Big) \frac{M^2}{D_f}+ \frac{16\eta^2  L^4M^2}{\mu^2} \frac{1}{B}+\frac{16 L^2M^2(1-\eta \mu)^{2Q}}{\mu^2} 
\nonumber
\\\omega = &\frac{4\beta^2L^2}{\mu^2} \Big(\frac{L-\mu}{L+\mu}\Big)^{2D}. 
\end{align}
\end{small}
\hspace{-0.12cm}Choose $D$ such that $\lambda<1$ and set inner-loop stepsize $\alpha=\frac{2}{L+\mu}$. Then, we have 
{\small\begin{align*}
\mathbb{E}\|&y_k^{D}-y^*(x_k) \|^2 \nonumber
\\\leq&\lambda^{k} \left( \left(\frac{L-\mu}{L+\mu}\right)^{2D}\|y_0-y^*(x_0)\|^2 + \frac{\sigma^2}{L\mu S}\right) + \omega\sum_{j=0}^{k-1}\lambda^{k-1-j} \mathbb{E}\|\nabla \Phi(x_j)\|^2 + \frac{\omega\Delta +\frac{\sigma^2}{L\mu S}}{1-\lambda}.
\end{align*}}
\end{lemma}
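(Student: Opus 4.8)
The plan is to establish a \textbf{two-level recursion} on the expected tracking error $b_k := \mathbb{E}\|y_k^D - y^*(x_k)\|^2$. The inner level quantifies how the $D$ steps of mini-batch SGD on the $\mu$-strongly-convex function $g(x_k,\cdot)$ contract the initial error $\mathbb{E}\|y_k^0-y^*(x_k)\|^2$, while the outer level accounts for the warm-start coupling $y_k^0 = y_{k-1}^D$ together with the drift of the moving target $y^*(x_k)$ induced by the outer update of $x$. Combining the two produces a scalar recursion of the form $b_k \le \lambda b_{k-1} + (\text{noise}) + \omega\,\mathbb{E}\|\nabla\Phi(x_{k-1})\|^2$, which I then telescope.

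First I would derive the \textbf{inner-loop contraction}. Fixing $x_k$ and $y_k^0$, the $t$-th SGD step $y_k^t = y_k^{t-1}-\alpha\nabla_y G(x_k,y_k^{t-1};\mathcal{S}_{t-1})$ on the $\mu$-strongly-convex, $L$-smooth $g(x_k,\cdot)$ satisfies, with the optimal stepsize $\alpha=\frac{2}{L+\mu}$ and the bounded-variance Assumption~\ref{ass:bound} under mini-batch size $S$, the one-step estimate $\mathbb{E}\|y_k^t-y^*(x_k)\|^2 \le \big(\tfrac{L-\mu}{L+\mu}\big)^2\mathbb{E}\|y_k^{t-1}-y^*(x_k)\|^2 + \alpha^2\tfrac{\sigma^2}{S}$ (the cross term vanishes by unbiasedness). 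Telescoping over $t=1,\dots,D$ and summing the geometric noise series, using $1-\big(\tfrac{L-\mu}{L+\mu}\big)^2=\tfrac{4L\mu}{(L+\mu)^2}$ so that $\tfrac{\alpha^2}{1-(\cdots)^2}=\tfrac{1}{L\mu}$, yields $b_k \le \big(\tfrac{L-\mu}{L+\mu}\big)^{2D}\mathbb{E}\|y_k^0-y^*(x_k)\|^2 + \tfrac{\sigma^2}{L\mu S}$.

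Next I would \textbf{link consecutive iterations}. Using the warm start $y_k^0 = y_{k-1}^D$, Young's inequality, and the $\tfrac{L}{\mu}$-Lipschitz continuity of $y^*(\cdot)$ (which follows from Assumptions~\ref{assum:stoc_geo} and~\ref{ass:lip_stoc}) together with the outer update $x_k = x_{k-1}-\beta\widehat\nabla\Phi(x_{k-1})$, I bound $\mathbb{E}\|y_k^0-y^*(x_k)\|^2 \le 2b_{k-1} + \tfrac{2L^2\beta^2}{\mu^2}\mathbb{E}\|\widehat\nabla\Phi(x_{k-1})\|^2$. I then split $\mathbb{E}\|\widehat\nabla\Phi(x_{k-1})\|^2 \le 2\mathbb{E}\|\widehat\nabla\Phi(x_{k-1})-\nabla\Phi(x_{k-1})\|^2 + 2\mathbb{E}\|\nabla\Phi(x_{k-1})\|^2$ and control the variance by \Cref{le:variancc}, which gives $\mathbb{E}\|\widehat\nabla\Phi(x_{k-1})-\nabla\Phi(x_{k-1})\|^2 \le \Delta + C\,b_{k-1}$ with $C=\big(L+\tfrac{L^2}{\mu}+\tfrac{M\tau}{\mu}+\tfrac{LM\rho}{\mu^2}\big)^2$. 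Substituting back and collecting the coefficient of $b_{k-1}$ reproduces precisely the constant $\lambda$, the factor $\omega$ multiplying $\Delta$, and the residual $\tfrac{\sigma^2}{L\mu S}$ in \cref{eq:defs}, yielding $b_k \le \lambda b_{k-1} + \omega\Delta + \tfrac{\sigma^2}{L\mu S} + \omega\,\mathbb{E}\|\nabla\Phi(x_{k-1})\|^2$. Finally I would \textbf{unroll} this recursion from the base case $b_0 \le \big(\tfrac{L-\mu}{L+\mu}\big)^{2D}\|y_0-y^*(x_0)\|^2 + \tfrac{\sigma^2}{L\mu S}$ (the inner-loop step at $k=0$ with $y_0^0=y_0$) and apply $\sum_{j=0}^{k-1}\lambda^{k-1-j}\le \tfrac{1}{1-\lambda}$ to the constant noise terms, which gives the claimed bound.

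The \textbf{main obstacle} is closing the recursion cleanly: the hypergradient variance in \Cref{le:variancc} itself scales with the very quantity $b_{k-1}$ being bounded, so I must track the coefficient of $b_{k-1}$ with care and guarantee $\lambda<1$, which is exactly where the hypothesis that $D$ is chosen large enough (shrinking the $\big(\tfrac{L-\mu}{L+\mu}\big)^{2D}$ factor) enters. A secondary care point is the correct conditioning of expectations across the statistically coupled inner SGD batches, the Hessian/Jacobian batches $\mathcal{D}_H,\mathcal{D}_G$, and the gradient batch $\mathcal{D}_F$, so that the tower property can be invoked at each stage without introducing uncontrolled correlation terms.
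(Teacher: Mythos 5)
Your proposal is correct and takes essentially the same route as the paper's own proof: the identical one-step SGD contraction with $\alpha=\frac{2}{L+\mu}$ (via co-coercivity of the $\mu$-strongly-convex, $L$-smooth inner loss) telescoped to give the $\big(\frac{L-\mu}{L+\mu}\big)^{2D}$ factor and $\frac{\sigma^2}{L\mu S}$ noise term, the same warm-start/Young/Lipschitz-of-$y^*(\cdot)$ linkage across outer iterations with the hypergradient split, the same invocation of \Cref{le:variancc} to close the recursion into exactly the constants $\lambda$, $\omega$, $\Delta$, and the same final unrolling with the geometric-sum bound. No gaps.
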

\begin{proof}[\bf Proof of~\Cref{tra_error}] First note that for an integer $t\leq D$
{\small
\begin{align}\label{eq:initssbilevelsacsds}
\|&y_k^{t+1}-y^*(x_k) \|^2 \nonumber
\\&= \|y_k^{t+1}-y_k^t\|^2 + 2\langle y_k^{t+1}-y_k^t,  y_k^t-y^*(x_k) \rangle + \| y_k^t-y^*(x_k)\|^2 \nonumber
\\&= \alpha^2\| \nabla_y G(x_k,y_k^{t}; \gS_t) \|^2 -  2\alpha\langle  \nabla_y G(x_k,y_k^{t}; \gS_t),  y_k^t-y^*(x_k) \rangle +\| y_k^t-y^*(x_k)\|^2.
\end{align}}
\hspace{-0.15cm}Conditioning on $y_k^t$ and taking expectation in~\cref{eq:initssbilevelsacsds}, we have 
\begin{align}\label{eq:traerr}
\mathbb{E}\|&y_k^{t+1}-y^*(x_k) \|^2  \nonumber
\\\overset{(i)}\leq& \alpha^2 \Big(\frac{\sigma^2}{S} + \|\nabla_y g(x_k,y_k^t)\|^2 \Big) - 2\alpha \langle  \nabla_y g(x_k,y_k^{t}),  y_k^t-y^*(x_k) \rangle  \nonumber
\\&+\| y_k^t-y^*(x_k)\|^2 \nonumber
\\\overset{(ii)}\leq&\frac{\alpha^2\sigma^2}{S} + \alpha^2\|\nabla_y g(x_k,y_k^t)\|^2 - 2\alpha\left(  \frac{L\mu}{L+\mu} \|y_k^t-y^*(x_k)\|^2+\frac{\|\nabla_y g(x_k,y_k^t)\|^2}{L+\mu} \right) \nonumber
\\&+\| y_k^t-y^*(x_k)\|^2  \nonumber
\\=& \frac{\alpha^2\sigma^2}{S} - \alpha\left(\frac{2}{L+\mu}-\alpha\right)\|\nabla_y g(x_k,y_k^t)\|^2 + \left(1-\frac{2\alpha L\mu}{L+\mu} \right)\|y_k^t-y^*(x_k)\|^2
\end{align}
where $(i)$ follows from the third item in Assumption~\ref{ass:lip_stoc}, and $(ii)$ follows from the strong-convexity and smoothness of $g$. Since $\alpha=\frac{2}{L+\mu}$, we obtain from~\cref{eq:traerr}~that  
\begin{align}\label{eq:eyk}
\mathbb{E}\|y_k^{t+1}&-y^*(x_k) \|^2\leq \left(\frac{L-\mu}{L+\mu} \right)^2\|y_k^t-y^*(x_k)\|^2 + \frac{4\sigma^2}{(L+\mu)^2S}.
\end{align}
Unconditioning on $y^t_k$ in \cref{eq:eyk} and telescoping~\cref{eq:eyk} over $t$ from $0$ to $D-1$ yield
\begin{align}\label{eq:yjtt}
\mathbb{E}\|y_k^{D}-y^*(x_k) \|^2 \leq &  \left(\frac{L-\mu}{L+\mu}\right)^{2D}\mathbb{E}\|y^0_k-y^*(x_k)\|^2 + \frac{\sigma^2}{L\mu S} \nonumber
\\ = & \left(\frac{L-\mu}{L+\mu}\right)^{2D}\mathbb{E}\|y^D_{k-1}-y^*(x_k)\|^2 + \frac{\sigma^2}{L\mu S},
\end{align} 
where the last inequality follows from \Cref{alg:main} that $y_k^0=y^{D}_{k-1}$. Note that 
\begin{align}\label{eq:midone}
\mathbb{E}\|y^D_{k-1}-y^*(x_k)\|^2 \leq &2\mathbb{E}\|y^D_{k-1}-y^*(x_{k-1})\|^2 + 2\mathbb{E}\|y^*(x_{k-1})-y^*(x_k)\|^2 \nonumber
\\\overset{(i)}\leq& 2\mathbb{E}\|y^D_{k-1}-y^*(x_{k-1})\|^2 +\frac{2L^2}{\mu^2} \mathbb{E}\|x_k-x_{k-1}\|^2 \nonumber
\\ \leq & 2\mathbb{E}\|y^D_{k-1}-y^*(x_{k-1})\|^2 +\frac{2\beta^2L^2}{\mu^2} \mathbb{E}\|\widehat \nabla \Phi(x_{k-1})\|^2 \nonumber
\\\leq & 2\mathbb{E}\|y^D_{k-1}-y^*(x_{k-1})\|^2  + \frac{4\beta^2L^2}{\mu^2} \mathbb{E}\|\nabla \Phi(x_{k-1})\|^2 \nonumber
\\&+\frac{4\beta^2L^2}{\mu^2} \mathbb{E}\|\widehat \nabla \Phi(x_{k-1})-\nabla \Phi(x_{k-1})\|^2,
\end{align}
where $(i)$ follows from Lemma 2.2 in \cite{ghadimi2018approximation}. Using~\Cref{le:variancc} in~\cref{eq:midone} yields
{\small
\begin{align}\label{eq:seceq}
\mathbb{E}\|&y^D_{k-1}-y^*(x_k)\|^2  \nonumber
\\\leq& \left(2+  \frac{4\beta^2L^2}{\mu^2}  \Big( L+\frac{L^2}{\mu} + \frac{M\tau}{\mu}+\frac{LM\rho}{\mu^2}\Big)^2  \right)\mathbb{E}\|y^D_{k-1}-y^*(x_{k-1})\|^2+ \frac{4\beta^2L^2}{\mu^2} \mathbb{E}\|\nabla \Phi(x_{k-1})\|^2  \nonumber
\\&+  \frac{4\beta^2L^2}{\mu^2} \left( \frac{4L^2M^2}{\mu^2D_g} + \Big(\frac{8L^2}{\mu^2} + 2\Big) \frac{M^2}{D_f}+ \frac{16\eta^2  L^4M^2}{\mu^2} \frac{1}{B}+\frac{16 L^2M^2(1-\eta \mu)^{2Q}}{\mu^2}  \right).
\end{align}}
\hspace{-0.15cm}Combining~\cref{eq:yjtt} and~\cref{eq:seceq} yields
{\small
\begin{align}\label{eq:enroll}
\mathbb{E}\|&y_k^{D}-y^*(x_k) \|^2  \nonumber
\\\leq&  \Big(\frac{L-\mu}{L+\mu}\Big)^{2D} \Big(2+  \frac{4\beta^2L^2}{\mu^2}  \Big( L+\frac{L^2}{\mu} + \frac{M\tau}{\mu}+\frac{LM\rho}{\mu^2}\Big)^2  \Big)\mathbb{E}\|y^D_{k-1}-y^*(x_{k-1})\|^2 \nonumber
\\&+ \Big(\frac{L-\mu}{L+\mu}\Big)^{2D} \frac{4\beta^2L^2}{\mu^2} \left( \frac{4L^2M^2}{\mu^2D_g} + \Big(\frac{8L^2}{\mu^2} + 2\Big) \frac{M^2}{D_f}+ \frac{16\eta^2  L^4M^2}{\mu^2} \frac{1}{B}+\frac{16 L^2M^2(1-\eta \mu)^{2Q}}{\mu^2}  \right)\nonumber
\\&+\frac{4\beta^2L^2}{\mu^2} \Big(\frac{L-\mu}{L+\mu}\Big)^{2D} \mathbb{E}\|\nabla \Phi(x_{k-1})\|^2  + \frac{\sigma^2}{L\mu S}. 
\end{align}
}
\hspace{-0.15cm}Based on the definitions of $\lambda,\omega,\Delta$ in~\cref{eq:defs}, we obtain from~\cref{eq:enroll} that 
\begin{align}\label{eq:readytote}
\mathbb{E}\|y_k^{D}-y^*(x_k) \|^2 \leq& \lambda \mathbb{E}\|y^D_{k-1}-y^*(x_{k-1})\|^2 + \omega\Delta +\frac{\sigma^2}{L\mu S}  +\omega \mathbb{E}\|\nabla \Phi(x_{k-1})\|^2. 
\end{align}
Telescoping~\cref{eq:readytote} over $k$ yields
\begin{align*}
\mathbb{E}\|&y_k^{D}-y^*(x_k) \|^2 \nonumber
\\\leq & \lambda^{k} \mathbb{E}\|y_0^D-y^*(x_0)\|^2 + \omega\sum_{j=0}^{k-1}\lambda^{k-1-j} \mathbb{E}\|\nabla \Phi(x_j)\|^2 + \frac{\omega\Delta +\frac{\sigma^2}{L\mu S}}{1-\lambda} \nonumber
\\\leq&\lambda^{k} \left( \left(\frac{L-\mu}{L+\mu}\right)^{2D}\|y_0-y^*(x_0)\|^2 + \frac{\sigma^2}{L\mu S}\right) + \omega\sum_{j=0}^{k-1}\lambda^{k-1-j} \mathbb{E}\|\nabla \Phi(x_j)\|^2 + \frac{\omega\Delta +\frac{\sigma^2}{L\mu S}}{1-\lambda}, 
\end{align*}
which completes the proof. 
\end{proof}

\section{Proof of~\Cref{th:nonconvex}}\label{mianshisimida}
We now provide the proof for~\Cref{th:nonconvex}, based on the supporting lemmas we develop in~\Cref{se:supplemma}. 

Based on the smoothness of the function $\Phi(x)$ in~\Cref{le:lipphi}, we have 
\begin{align*}
\Phi(x_{k+1}) &\leq  \Phi(x_k)  + \langle \nabla \Phi(x_k), x_{k+1}-x_k\rangle + \frac{L_\Phi}{2} \|x_{k+1}-x_k\|^2 \nonumber
\\\leq& \Phi(x_k)  - \beta \langle \nabla \Phi(x_k),\widehat \nabla \Phi(x_k)\rangle + \beta^2 L_\Phi \|\nabla\Phi(x_k)\|^2+\beta^2 L_\Phi\|\nabla\Phi(x_k)-\widehat \nabla\Phi(x_k)\|^2.
\end{align*}
For simplicity, let $\mathbb{E}_k = \mathbb{E}(\cdot\,| \,x_k,y_k^D)$. Note that we choose $\beta=\frac{1}{4L_\phi}$. Then, 
taking expectation over the above inequality, we have
\begin{align} \label{eq:jiayou}
\mathbb{E}\Phi(x_{k+1}) \leq &\mathbb{E}\Phi(x_k)  - \beta \mathbb{E}\langle \nabla \Phi(x_k),\mathbb{E}_k\widehat \nabla \Phi(x_k)\rangle + \beta^2 L_\Phi \mathbb{E}\|\nabla\Phi(x_k)\|^2 \nonumber
\\&+\beta^2 L_\Phi\mathbb{E}\|\nabla\Phi(x_k)-\widehat \nabla\Phi(x_k)\|^2 \nonumber
\\\overset{(i)}\leq& \mathbb{E}\Phi(x_k)  +\frac{\beta}{2}\mathbb{E}\|\mathbb{E}_k\widehat \nabla \Phi(x_k)-\nabla \Phi(x_k) \|^2 -\frac{\beta}{4} \mathbb{E}\|\nabla\Phi(x_k)\|^2 \nonumber
\\&+\frac{\beta}{4}\mathbb{E}\|\nabla\Phi(x_k)-\widehat \nabla\Phi(x_k)\|^2 \nonumber
\\\overset{(ii)}\leq& \mathbb{E}\Phi(x_k) -\frac{\beta}{4}\mathbb{E}\|\nabla\Phi(x_k)\|^2 +\frac{\beta L^2M^2(1-\eta \mu)^{2Q}}{\mu^2} 
\nonumber
\\&+\frac{\beta}{4}
\left(   \frac{4L^2M^2}{\mu^2D_g} + \Big(\frac{8L^2}{\mu^2} + 2\Big) \frac{M^2}{D_f}+ \frac{16\eta^2  L^4M^2}{\mu^2} \frac{1}{B}+\frac{16 L^2M^2(1-\eta \mu)^{2Q}}{\mu^2}\right) \nonumber
\\&+  \frac{5\beta}{4} \Big( L+\frac{L^2}{\mu} + \frac{M\tau}{\mu}+\frac{LM\rho}{\mu^2}\Big)^2\mathbb{E}\|y_k^D-y^*(x_k)\|^2 
\end{align}
where $(i)$ follows from Cauchy-Schwarz inequality, and $(ii)$ follows from \Cref{le:first_m} and~\Cref{le:variancc}. 
For simplicity,  let
\begin{align}\label{def:nu}
\nu= \frac{5}{4}\Big( L+\frac{L^2}{\mu} + \frac{M\tau}{\mu}+\frac{LM\rho}{\mu^2}\Big)^2.
\end{align} 
Then, applying~\Cref{tra_error} in~\cref{eq:jiayou} and using the definitions of $\omega,\Delta,\lambda$ in~\cref{eq:defs}, we have 
\begin{align}
\mathbb{E}\Phi(x_{k+1})\leq& \mathbb{E}\Phi(x_k) -\frac{\beta}{4} \mathbb{E}\|\nabla\Phi(x_k)\|^2 +\frac{\beta L^2M^2(1-\eta \mu)^{2Q}}{\mu^2} 
\nonumber
\\&+\frac{\beta}{4} \Delta + \beta\nu\lambda^{k} \left( \left(\frac{L-\mu}{L+\mu}\right)^{2D}\|y_0-y^*(x_0)\|^2 + \frac{\sigma^2}{L\mu S}\right) \nonumber
\\&+ \beta\nu\omega\sum_{j=0}^{k-1}\lambda^{k-1-j} \mathbb{E}\|\nabla \Phi(x_j)\|^2 + \frac{\beta\nu(\omega\Delta +\frac{\sigma^2}{L\mu S})}{1-\lambda}.\nonumber
\end{align}
Telescoping the above inequality over $k$ from $0$ to $K-1$ yields
\begin{align*}
\mathbb{E}\Phi(x_{K}) \leq \Phi(x_0) - &\frac{\beta}{4} \sum_{k=0}^{K-1}\mathbb{E}\|\nabla\Phi(x_k)\|^2 + \beta\nu\omega\sum_{k=1}^{K-1}\sum_{j=0}^{k-1}\lambda^{k-1-j} \mathbb{E}\|\nabla \Phi(x_j)\|^2 \nonumber
\\&+\frac{K\beta\Delta}{4} + \Big(\Big(\frac{L-\mu}{L+\mu}\Big)^{2D}\|y_0-y^*(x_0)\|^2 + \frac{\sigma^2}{L\mu S}\Big)\frac{\beta\nu}{1-\lambda} \nonumber
\\&+ \frac{K\beta L^2M^2(1-\eta \mu)^{2Q}}{\mu^2}  + \frac{K\beta\nu(\omega\Delta +\frac{\sigma^2}{L\mu S})}{1-\lambda},
\end{align*}
which, using the fact that {\small$$\sum_{k=1}^{K-1}\sum_{j=0}^{k-1}\lambda^{k-1-j} \mathbb{E}\|\nabla \Phi(x_j)\|^2\leq \left(\sum_{k=0}^{K-1}\lambda^k\right)\sum_{k=0}^{{K-1}}\mathbb{E}\|\nabla\Phi(x_k)\|^2<\frac{1}{1-\lambda}\sum_{k=0}^{{K-1}}\mathbb{E}\|\nabla\Phi(x_k)\|^2,$$} \hspace{-0.15cm}yields
\begin{align}\label{eq:opsac}
 \Big(\frac{1}{4} -&\frac{\nu\omega}{1-\lambda}\Big) \frac{1}{K}\sum_{k=0}^{K-1}\mathbb{E}\|\nabla\Phi(x_k)\|^2 \nonumber
 \\\leq &\frac{\Phi(x_0)-\inf_x\Phi(x)}{\beta K}+\frac{\nu\big((\frac{L-\mu}{L+\mu})^{2D}\|y_0-y^*(x_0)\|^2 + \frac{\sigma^2}{L\mu S}\big)}{K(1-\lambda)}+\frac{\Delta}{4} +  \frac{ L^2M^2(1-\eta \mu)^{2Q}}{\mu^2}  \nonumber
 \\&+ \frac{\nu(\omega\Delta +\frac{\sigma^2}{L\mu S})}{1-\lambda}.
\end{align}
We choose the number $D$ of inner-loop steps as 
$$D\geq \max\bigg\{\frac{\log \big(12+  \frac{48\beta^2L^2}{\mu^2} ( L+\frac{L^2}{\mu} + \frac{M\tau}{\mu}+\frac{LM\rho}{\mu^2})^2\big)}{2\log (\frac{L+\mu}{L-\mu})},\frac{\log \big(\sqrt{\beta}(L+\frac{L^2}{\mu} + \frac{M\tau}{\mu}+\frac{LM\rho}{\mu^2})\big)}{\log (\frac{L+\mu}{L-\mu})}\bigg\}.$$ Then,  since $\beta=\frac{1}{4L_\Phi}$ and $D\geq \frac{\log \big(12+  \frac{48\beta^2L^2}{\mu^2} ( L+\frac{L^2}{\mu} + \frac{M\tau}{\mu}+\frac{LM\rho}{\mu^2})^2\big)}{2\log (\frac{L+\mu}{L-\mu})}$, we have $\lambda\leq \frac{1}{6}$, and  \cref{eq:opsac} is further simplified to 
\begin{align}\label{eq:ephi}
 \Big(\frac{1}{4} -&\frac{6}{5}\nu\omega\Big) \frac{1}{K}\sum_{k=0}^{K-1}\mathbb{E}\|\nabla\Phi(x_k)\|^2 \nonumber
 \\\leq &\frac{\Phi(x_0)-\inf_x\Phi(x)}{\beta K}+\frac{2\nu\big((\frac{L-\mu}{L+\mu})^{2D}\|y_0-y^*(x_0)\|^2 + \frac{\sigma^2}{L\mu S}\big)}{K}+\frac{ \Delta}{4} +  \frac{ L^2M^2(1-\eta \mu)^{2Q}}{\mu^2}  \nonumber
 \\&+ 2\nu\Big(\omega\Delta +\frac{\sigma^2}{L\mu S}\Big).
\end{align}
By $\omega$ in~\cref{eq:defs}, $\nu$ in~\cref{def:nu} and $D\geq \frac{\log \big(12+  \frac{48\beta^2L^2}{\mu^2} ( L+\frac{L^2}{\mu} + \frac{M\tau}{\mu}+\frac{LM\rho}{\mu^2})^2\big)}{2\log (\frac{L+\mu}{L-\mu})}$, we have
\begin{align}\label{eq:opccsa}
\nu\omega = &\frac{5\beta^2L^2}{\mu^2} \Big(\frac{L-\mu}{L+\mu}\Big)^{2D}\Big( L+\frac{L^2}{\mu} + \frac{M\tau}{\mu}+\frac{LM\rho}{\mu^2}\Big)^2\nonumber
\\<& \frac{\frac{5\beta^2L^2}{\mu^2} \Big( L+\frac{L^2}{\mu} + \frac{M\tau}{\mu}+\frac{LM\rho}{\mu^2}\Big)^2}{12+  \frac{48\beta^2L^2}{\mu^2} ( L+\frac{L^2}{\mu} + \frac{M\tau}{\mu}+\frac{LM\rho}{\mu^2})^2} \leq \frac{5}{48}.
\end{align}
In addition, since $D>\frac{\log \big(\sqrt{\beta}\big(L+\frac{L^2}{\mu} + \frac{M\tau}{\mu}+\frac{LM\rho}{\mu^2}\big)\big)}{\log (\frac{L+\mu}{L-\mu})}$, we have
\begin{align}\label{eq:opccsa2}
 \nu\Big(\frac{L-\mu}{L+\mu}\Big)^{2D} =  \frac{5}{4}\Big(\frac{L-\mu}{L+\mu}\Big)^{2D}\Big( L+\frac{L^2}{\mu} + \frac{M\tau}{\mu}+\frac{LM\rho}{\mu^2}\Big)^2 <\frac{5}{4\beta}.
\end{align}
Substituting~\cref{eq:opccsa} and~\cref{eq:opccsa2} in~\cref{eq:ephi} yields 
\begin{align*}
\frac{1}{K}\sum_{k=0}^{K-1}\mathbb{E}\|\nabla\Phi(x_k)\|^2 \leq & \frac{8(\Phi(x_0)-\inf_x\Phi(x)+\frac{5}{2}\|y_0-y^*(x_0)\|^2) }{\beta K}+\Big(1+\frac{1}{K}\Big)\frac{16\nu\sigma^2}{L\mu S} \nonumber
\\&+\frac{11}{3} \Delta+  \frac{8L^2M^2}{\mu^2}(1-\eta \mu)^{2Q}, 
\end{align*}
which, in conjunction with~\cref{eq:defs} and \cref{def:nu}, yields~\cref{eq:main_nonconvex} in~\Cref{th:nonconvex}.  


Then, based on \cref{eq:main_nonconvex},  to achieve an $\epsilon$-accurate stationary point, i.e., $\mathbb{E}\|\nabla\Phi(\bar x)\|^2\leq \epsilon$ with $\bar x$ chosen from $x_0,...,x_{K-1}$ uniformly at random, it suffices to choose
\begin{align*}
K =&  \frac{32L_\Phi(\Phi(x_0)-\inf_x\Phi(x)+\frac{5}{2}\|y_0-y^*(x_0)\|^2) }{\epsilon}=\mathcal{O}\Big(\frac{\kappa^3}{\epsilon}\Big), D=\Theta(\kappa)\nonumber
\\Q = & \kappa\log \frac{\kappa^2}{\epsilon}, S = \mathcal{O}\Big(\frac{\kappa^5}{\epsilon}\Big),D_g =\mathcal{O}\left(\frac{\kappa^2}{\epsilon}\right), D_f =\mathcal{O}\left(\frac{\kappa^2}{\epsilon}\right), B =\mathcal{O}\left(\frac{\kappa^2}{\epsilon}\right).
\end{align*}
Note that the above choices of $Q$ and $B$ satisfy the condition that $B\geq \frac{1}{Q(1-\eta\mu)^{Q-1}}$ required in~\Cref{prop:hessian}. 

Then, the gradient complexity is given by $\mbox{\normalfont Gc}(F,\epsilon)=KD_f=\mathcal{O}(\kappa^5\epsilon^{-2}), \mbox{\normalfont Gc}(G,\epsilon)=KDS=\mathcal{O}(\kappa^9\epsilon^{-2}).$
In addition, the Jacobian- and Hessian-vector product complexities are given by $ \mbox{\normalfont JV}(G,\epsilon)=KD_g=\mathcal{O}(\kappa^5\epsilon^{-2})$ and 
\begin{align*}
\mbox{\normalfont HV}(G,\epsilon) = K \sum_{j=1}^Q BQ(1-\eta\mu)^{j-1}=\frac{KBQ}{\eta\mu} \leq\mathcal{O}\left( \frac{\kappa^6}{\epsilon^{2}}\log \frac{\kappa^2}{\epsilon} \right).
\end{align*}
Then, the proof is complete. 

\chapter{Objective Examples and Proof of \Cref{chp: maml}}\label{append: maml}

\section{Examples for Two Types of Objective Functions}\label{ggpopssasdasdax}
\subsection*{RL Example for Resampling Case}
RL problems are often captured by objective functions in the expectation form. Consider a RL meta learning problem,  where each task  corresponds to  a Markov decision process (MDP) with horizon $H$. Each RL task $\mathcal{T}_i$ corresponds to an initial state distribution $\rho_i$, a policy $\pi_w$ parameterized by $w$ that denotes a distribution over the action set given each state, and a transition distribution kernel $q_i(x_{t+1}|x_t,a_t)$ at time steps $t=0,...,H-1$. Then, the loss $l_i(w)$ is defined as negative total reward, i.e., 
$l_i(w):= -\mathbb{E}_{\tau\sim p_i(\cdot| w)}[ \mathcal{R}(\tau)]$,
where $\tau= (s_0,a_0,s_1,a_1,...,s_{H-1},a_{H-1})$ is a  trajectory following the distribution $p_i(\cdot | w)$, and the reward $$\mathcal{R}(\tau) := \sum_{t=0}^{H-1} \gamma^t\mathcal{R}(s_t,a_t)$$ with $\mathcal{R}(\cdot)$ given as a reward function. The estimated gradient here is  $$\nabla l_i(w; \Omega):= \frac{1}{|\Omega|} \sum_{\tau \in \Omega} g_i(w; \tau),$$ where $g_i(w; \tau)$ is an unbiased policy gradient estimator s.t. $\mathbb{E}_{\tau \sim p_i(\cdot|w)} g_i(w;\tau)= \nabla l_i(w)$, e.g, REINFORCE~\cite{williams1992simple} or G(PO)MDP~\cite{baxter2001infinite}. In addition, the estimated Hessian  is $$\nabla^2 l_i(w; \Omega):= \frac{1}{|\Omega|}\sum_{\tau\in \Omega}H_i(w; \tau)$$, where $H_i(w;\tau)$ is an unbiased policy Hessian estimator, e.g., DiCE~\cite{foerster2018dice} or LVC~\cite{rothfuss2019promp}.

\subsection*{Classification Example for Finite-Sum Case}
The risk minimization problem in classification often has a finite-sum objective function. For example, the 
mean-squared error (MSE) loss takes the form of  
\begin{align*}
(\text{Classification}):\; l_{S_i}(w):= \frac{1}{|S_i|}\sum_{(x_j, y_j)\in S_i}\|y_j - \phi (w; x_i) \|^2 \;(\text{similarly for } \, l_{T_i}(w)), 
\end{align*}
where $x_j, y_j$ are a feature-label pair and $\phi(w;\cdot)$ can be  a deep neural network parameterized by $w$.

\section{Derivation of Simplified Form of  Gradient $\nabla \mathcal{L}_i(w)$}\label{simplifeid}
First note that $\mathcal{L}_i(w_k) = l_i(\widetilde w_{k,N}^i)$ and $\widetilde w_{k,N}^i$ is obtained by the following gradient descent updates
\begin{align}\label{gd_pr}
\widetilde w^i_{k, j+1} =\widetilde w^i_{k,j} - \alpha \nabla l_i(\widetilde w^i_{k,j}), \,\,j = 0, 1,..., N-1 \, \text{ with }\, \widetilde w^i_{k,0} := w_k.
\end{align}
Then, by the chain rule, we have 
\begin{align*}
\nabla \mathcal{L}_i(w_k) = \nabla_{w_k} l_i(\widetilde w_{k,N}^i) = \prod_{j=0}^{N-1}\nabla_{\widetilde w_{k,j}^i} \left(\widetilde w_{k,j+1}^i\right) \nabla l_i(\widetilde w_{k,N}^i), 
\end{align*}
which, in conjunction with \cref{gd_pr}, implies that 
\begin{align*}
\nabla \mathcal{L}_i(w_k) &=\prod_{j=0}^{N-1}\nabla_{\widetilde w_{k,j}^i} \left(\widetilde w^i_{k,j} - \alpha \nabla l_i(\widetilde w^i_{k,j})\right) \nabla l_i(\widetilde w_{k,N}^i)
 = 
\prod_{j=0}^{N-1} \left(I - \alpha \nabla^2 l_i(\widetilde w^i_{k,j})\right) \nabla l_i(\widetilde w_{k,N}^i),
\end{align*}
which finishes the proof.

\section{Proof for Convergence in Resampling Case}\label{prop:meta_grad}
For the resampling case, we provide the proofs for Propositions~\ref{th:lipshiz},~\ref{le:distance}, \ref{th:first-est}, \ref{th:second} on the properties of meta gradient, and Theorem~\ref{th:mainonline} and Corollary~\ref{co:online} on the convergence and complexity performance of multi-step MAML. 
The proofs of these results require several technical lemmas, which we relegate to~\Cref{aux:lemma}.  

To simplify notations, we let $\bar S^i_j$ and  $\bar D^i_j$ denote the randomness over $S_{k,m}^i, D_{k,m}^i,m=0,...,j-1$ and  let $\bar S_j$ and $\bar D_j$ denote all randomness over $\bar S^i_j, \bar D^i_j, i\in \mathcal{I}$, respectively.
\subsection*{Proof of Proposition~\ref{th:lipshiz}}
	First recall that {\small$\nabla \mathcal{L}_i(w) =   \prod_{j=0}^{N-1}(I-\alpha \nabla^2 l_i(\widetilde w^i_{j}))\nabla l_i( \widetilde w^i _{N})$}. Then, we have 
	{\small
	\begin{align}\label{delff}
	\|\nabla \mathcal{L}_i(w) - \nabla \mathcal{L}_i(u)\| 
	 &\leq  \Big\| \prod_{j=0}^{N-1}(I-\alpha \nabla^2 l_i(\widetilde w^i_{j})) -  \prod_{j=0}^{N-1}(I-\alpha \nabla^2 l_i(\widetilde u^i_{j}))\Big\|\big\|\nabla l_i( \widetilde w^i _{N})\big\|   \nonumber
	\\ &+ (1+\alpha L)^N \|\nabla l_i( \widetilde w^i _{N})  - \nabla l_i( \widetilde u^i _{N}) \| \nonumber
	\\ \overset{(i)}\leq&  \Big\| \prod_{j=0}^{N-1}(I-\alpha \nabla^2 l_i(\widetilde w^i_{j})) -  \prod_{j=0}^{N-1}(I-\alpha \nabla^2 l_i(\widetilde u^i_{j}))\Big\|(1+\alpha L)^N\big\|\nabla l_i( w)\big\|   \nonumber
	\\ &+ (1+\alpha L)^N L \|\widetilde w^i _{N} - \widetilde u^i _{N} \| \nonumber
	\\ \overset{(ii)}\leq&  \underbrace{\Big\| \prod_{j=0}^{N-1}(I-\alpha \nabla^2 l_i(\widetilde w^i_{j})) -  \prod_{j=0}^{N-1}(I-\alpha \nabla^2 l_i(\widetilde u^i_{j}))\Big\|}_{V(N)}(1+\alpha L)^N\big\|\nabla l_i( w)\big\|   \nonumber
	\\ &+ (1+\alpha L)^{2N} L \|w-u\|,
	\end{align}}
\hspace{-0.15cm}where (i) follows from Lemma~\ref{le:jiw}, and (ii) follows from Lemma~\ref{d_u_w}.
	We next upper-bound the term $V(N)$ in the above inequality. 
	Specifically, define a more general quantity $V(m)$ by replacing $N$ in $V(N)$ with $m$.
	 Then, we have 
	\begin{align}\label{youyitian}
	V(m) 
	\leq& \Big\| \prod_{j=0}^{m-2}(I-\alpha \nabla^2 l_i(\widetilde w^i_{j})) \Big\|\big\|\alpha \nabla^2 l_i(\widetilde w_{m-1}^i) -\alpha \nabla^2 l_i(\widetilde u_{m-1}^i)  \big\| \nonumber
	\\&+ \Big\|\prod_{j=0}^{m-2}(I-\alpha \nabla^2 l_i(\widetilde w^i_{j})) - \prod_{j=0}^{m-2}(I-\alpha \nabla^2 l_i(\widetilde u^i_{j}))\Big\| \big\|I-\alpha\nabla^2 l_i(\widetilde u_{m-1}^i) \big\| \nonumber
	\\\leq& (1+\alpha L)^{m-1} \big\|\alpha \nabla^2 l_i(\widetilde w_{m-1}^i) -\alpha \nabla^2 l_i(\widetilde u_{m-1}^i)  \big\| \nonumber
	\\&+ (1+\alpha L)\Big\|\prod_{j=0}^{m-2}(I-\alpha \nabla^2 l_i(\widetilde w^i_{j})) - \prod_{j=0}^{m-2}(I-\alpha \nabla^2 l_i(\widetilde u^i_{j}))\Big\| \nonumber
	\\\leq & (1+\alpha L)^{m-1} \alpha \rho \|\widetilde w_{m-1}^i - \widetilde u_{m-1}^i\| + (1+\alpha L)V(m-1) \nonumber
	\\\leq & (1+\alpha L)^{m-1} \alpha \rho (1+\alpha L)^{m-1} \|w-u\| + (1+\alpha L)V(m-1). 
	\end{align}
	Telescoping~\cref{youyitian} over $m$ from $1$ to $N$ and noting $V(1) \leq \alpha \rho\|w-u\|$, we have 
	\begin{align}\label{vbbn}
	V(N)&\leq (1+\alpha L)^{N-1}V(1) + \sum_{m=0}^{N-2}\alpha \rho (1+\alpha L)^{2(N-m)-2}\|w-u\|(1+\alpha L)^m \nonumber
	\\& =  (1+\alpha L)^{N-1}\alpha \rho \|w-u\| +\alpha \rho (1+\alpha L)^N\sum_{m=0}^{N-2} (1+\alpha L)^{m}\|w-u\| \nonumber
	\\& \leq \left( (1+\alpha L)^{N-1}\alpha \rho + \frac{\rho}{L} (1+\alpha L)^N ((1+\alpha L)^{N-1}-1)\right) \|w-u\|.
	\end{align}
	Recalling the definition of $C_\mathcal{L}$ and 
	 Combining~\cref{delff},~\cref{vbbn}, we have 
	$\|\nabla \mathcal{L}_i(w) - \nabla \mathcal{L}_i(u)\| \leq \big( C_\mathcal{L} \|\nabla l_i(w)\| + (1+\alpha L)^{2N}L  \big) \|w-u\|.$ 
	We then have
	\begin{align*}
	\|\nabla \mathcal{L}(w) - \nabla \mathcal{L}(u)\| &\leq  \mathbb{E}_{i\sim p(\mathcal{T})}\|(\nabla \mathcal{L}_i(w) - \nabla \mathcal{L}_i(u))\|
	\\&\leq  \big( C_\mathcal{L}  \mathbb{E}_{i\sim p(\mathcal{T})}\|\nabla l_i(w)\| + (1+\alpha L)^{2N}L  \big) \|w-u\|,
	\end{align*}
	which finishes the proof. 

\subsection*{Proof of Proposition~\ref{le:distance}}	
	We first prove the first-moment bound. 
	Conditioning on $w_k$, we have 
	\begin{align*}
	\mathbb{E}_{\bar S^i_m}&\|w_{k,m}^i - \widetilde w_{k,m}^i\| 
	\\\overset{(i)}=& \mathbb{E}_{\bar S^i_m}\big\|w_{k,m-1}^i - \alpha \nabla l_i(w_{k,m-1}^i; S_{k,m-1}^i) - (\widetilde w_{k,m-1}^i - \alpha \nabla l_i(\widetilde w_{k,m-1}^i) )\big\|  \nonumber
	\\ \leq& \mathbb{E}_{\bar S^i_m} \|w_{k,m-1}^i - \widetilde w_{k,m-1}^i\| + \alpha \mathbb{E}_{\bar S^i_m}\big\|\nabla l_i(w_{k,m-1}^i; S_{k,m-1}^i) - \nabla l_i(w_{k,m-1}^i)\big\|\nonumber
	\\ &+ \alpha \mathbb{E}_{\bar S^i_m}\big\| \nabla l_i(w_{k,m-1}^i) - \nabla l_i(\widetilde w_{k,m-1}^i) \big\| \nonumber
	\\ \leq & \alpha \mathbb{E}_{\bar S^i_{m-1}} \Big(      \mathbb{E}_{S_{k,m-1}^i} \big(\|\nabla l_i(w_{k,m-1}^i; S_{k,m-1}^i) - \nabla l_i(w_{k,m-1}^i)\big\| \,\Big | \bar S^i_{m-2}\big)\Big)    \nonumber
	\\ & + (1+\alpha L)  \mathbb{E}_{\bar S^i_{m-1}} \|w_{k,m-1}^i - \widetilde w_{k,m-1}^i\| \nonumber
	\\\overset{(ii)}\leq & (1+\alpha L)  \mathbb{E}_{\bar S^i_{m-1}} \|w_{k,m-1}^i - \widetilde w_{k,m-1}^i\|  +  \alpha\frac{\sigma_g}{\sqrt{S}},\nonumber
	\end{align*}
	where (i) follows from~\cref{gd_w} and~\cref{es:up}, and (ii) follows from Assumption~\ref{a3}. 
	Telescoping the above inequality over $m$ from $1$ to $j$ and using  $w_{k,0}^i = \widetilde w_{k,0}^i = w_k$, we have 
	$\mathbb{E}_{\bar S^i_j}\|w_{k,j}^i - \widetilde w_{k,j}^i\|  \leq ((1+\alpha L)^j-1) \frac{\sigma_g}{L\sqrt{S}}$,
	which finishes the proof of the first-moment bound. 
	
	\vspace{0.1cm}
	We next begin to prove the second-moment bound. Conditioning on $w_k$, we have 
	{\small
	\begin{align*}
	&\mathbb{E}_{\bar S^i_m}\|w_{k,m}^i - \widetilde w_{k,m}^i\|^2 
	\\ &=   \mathbb{E}_{\bar S^i_{m-1}}\|w_{k,m-1}^i - \widetilde w_{k,m-1}^i\|^2  + 
	\alpha^2\mathbb{E}_{\bar S^i_m}\|\nabla l_i(w_{k,m-1}^i; S_{k,m-1}^i)- \nabla l_i(\widetilde w_{k,m-1}^i)\|^2
	\\ & \quad-2\alpha\mathbb{E}_{\bar S^i_{m-1}}\left(\mathbb{E}_{S_{k,m-1}^i} \langle w_{k,m-1}^i - \widetilde w_{k,m-1}^i, \nabla l_i(w_{k,m-1}^i; S_{k,m-1}^i)- \nabla l_i(\widetilde w_{k,m-1}^i)\rangle \big | \bar S^i_{m-1}\right)
	\\ &\overset{(i)}\leq   \mathbb{E}_{\bar S^i_{m-1}}\|w_{k,m-1}^i - \widetilde w_{k,m-1}^i\|^2  -2\alpha\mathbb{E}_{\bar S^i_{m-1}} \langle w_{k,m-1}^i - \widetilde w_{k,m-1}^i, \nabla l_i(w_{k,m-1}^i)- \nabla l_i(\widetilde w_{k,m-1}^i)\rangle
	\\ &\quad +
	\alpha^2\mathbb{E}_{\bar S^i_m}\left(  2\|\nabla l_i(w_{k,m-1}^i; S_{k,m-1}^i)- \nabla l_i( w_{k,m-1}^i)\|^2 + 2\|\nabla l_i( w_{k,m-1}^i)- \nabla l_i(\widetilde w_{k,m-1}^i)\|^2 \right)
	\\ &\overset{(ii)}\leq   \mathbb{E}_{\bar S^i_{m-1}}\|w_{k,m-1}^i - \widetilde w_{k,m-1}^i\|^2  +2\alpha\mathbb{E}_{\bar S^i_{m-1}} \| w_{k,m-1}^i - \widetilde w_{k,m-1}^i\|\|\nabla l_i(w_{k,m-1}^i)- \nabla l_i(\widetilde w_{k,m-1}^i)\|
	\\ & \quad+
	\alpha^2\mathbb{E}_{\bar S^i_m}\left(  2\|\nabla l_i(w_{k,m-1}^i; S_{k,m-1}^i)- \nabla l_i( w_{k,m-1}^i)\|^2 + 2\|\nabla l_i( w_{k,m-1}^i)- \nabla l_i(\widetilde w_{k,m-1}^i)\|^2 \right)
	\\ &\leq   \mathbb{E}_{\bar S^i_{m-1}}\|w_{k,m-1}^i - \widetilde w_{k,m-1}^i\|^2  +2\alpha L\mathbb{E}_{\bar S^i_{m-1}} \| w_{k,m-1}^i - \widetilde w_{k,m-1}^i\|^2
	\\ & \quad+
	2\alpha^2\mathbb{E}_{\bar S^i_{m-1}}\Big(  \frac{\sigma_g^2}{S}+ L^2\|w_{k,m-1}^i-  \widetilde w_{k,m-1}^i\|^2 \Big) 
	\\ &\leq \big(1+2\alpha L+2\alpha^2 L^2\big)  \mathbb{E}_{\bar S^i_{m-1}}\|w_{k,m-1}^i - \widetilde w_{k,m-1}^i\|^2   + \frac{2\alpha^2\sigma_g^2}{S},
	\end{align*}}
\hspace{-0.2cm}	where (i) follows from $\mathbb{E}_{S_{k,m-1}^i}  \nabla l_i(w_{k,m-1}^i; S_{k,m-1}^i)= \nabla l_i(w_{k,m-1}^i)$ and (ii) follows from $-\langle a,b\rangle\leq \|a\|\|b\|$. 
 	Noting that $w_{k,0}^i= \widetilde w_{k,0}^i = w_k$ and telescoping the above inequality over $m$ from $1$ to $j$, we obtain
	$\mathbb{E}_{\bar S^i_j}\|w_{k,j}^i - \widetilde w_{k,j}^i\|^2 \leq \left( (1+2\alpha L + 2\alpha^2L^2) ^j -1 \right)\frac{\alpha \sigma_g^2}{L(1+\alpha L) S}$.
	Then,taking the expectation over $w_k$ in the above inequality finishes the proof. 

\subsection*{Proof of Proposition~\ref{th:first-est}}
	Recall {\small $\widehat G_i(w_k)=  \prod_{j=0}^{N-1}(I - \alpha \nabla^2 l_i(w_{k,j}^i; D_{k,j}^i))\nabla l_i(w_{k,N}^i; T^i_k).$}
	Conditioning on $w_k$ yields
	{\small\begin{align}\label{gmeans}
	\mathbb{E} \widehat G_i(w_k) =& \mathbb{E}_{\bar S_N, i\sim p(\mathcal{T})} \mathbb{E}_{\bar D_N}\Big(  \prod_{j=0}^{N-1}  \big(I - \alpha \nabla^2 l_i(w_{k,j}^i; D_{k,j}^i)\big) \mathbb{E}_{T_k^i} \nabla l_i(w_{k,N}^i;  T_k^i) \big | \bar S_N, i  \Big) \nonumber
	\\ = & \mathbb{E}_{\bar S_N, i\sim p(\mathcal{T})}   \prod_{j=0}^{N-1}  \mathbb{E}_{D_{k,j}^i}\big(I - \alpha \nabla^2 l_i(w_{k,j}^i; D_{k,j}^i)\big |\bar  S_N, i \big)  \nabla l_i(w_{k,N}^i)     \nonumber
	\\ = &  \mathbb{E}_{\bar S_N, i\sim p(\mathcal{T})}  \prod_{j=0}^{N-1}  \big(I - \alpha \nabla^2 l_i(w_{k,j}^i)\big)  \nabla l_i(w_{k,N}^i)    ,
	\end{align}} 
	\hspace{-0.23cm}which, combined with {\small $\nabla \mathcal{L}(w_k)  =\mathbb{E}_{i\sim p(\mathcal{T})} \prod_{j=0}^{N-1}(I-\alpha \nabla^2 l_i(\widetilde w^i_{k,j}))\nabla l_i(\widetilde w^i _{k,N})$}, yields
	{\small	\begin{align}\label{eq:ek}
	\|\mathbb{E} \widehat G_i(w_k)&  - \nabla \mathcal{L}(w_k)\|  \nonumber
	\\ \overset{(i)}\leq & \mathbb{E}_{\bar S_N,  i\sim p(\mathcal{T})} \Big \|  \prod_{j=0}^{N-1}  \big(I - \alpha \nabla^2 l_i(w_{k,j}^i)\big)  \nabla l_i(w_{k,N}^i)  -   \prod_{j=0}^{N-1}(I-\alpha \nabla^2 l_i(\widetilde w^i_{k,j}))\nabla l_i(\widetilde w^i _{k,N})  \Big  \|  \nonumber
	\\ \leq &  \mathbb{E}_{\bar S_N,  i\sim p(\mathcal{T})} \Big \|  \prod_{j=0}^{N-1}  \big(I - \alpha \nabla^2 l_i(w_{k,j}^i)\big)  \nabla l_i(w_{k,N}^i)  -   \prod_{j=0}^{N-1}(I-\alpha \nabla^2 l_i(w^i_{k,j}))\nabla l_i(\widetilde w^i _{k,N})  \Big  \| \nonumber
	\\ &+ \mathbb{E}_{\bar S_N,  i\sim p(\mathcal{T})} \Big \|  \prod_{j=0}^{N-1}  \big(I - \alpha \nabla^2 l_i(w_{k,j}^i)\big)  \nabla l_i(\widetilde w_{k,N}^i)  -   \prod_{j=0}^{N-1}(I-\alpha \nabla^2 l_i(\widetilde w^i_{k,j}))\nabla l_i(\widetilde w^i _{k,N})  \Big  \| \nonumber
	\\ \overset{(ii)}\leq & 
	 (1+\alpha L)^N  \mathbb{E}_{\bar S_N,  i} \big\| \nabla l_i( w_{k}) \big \|   \Big \|  \prod_{j=0}^{N-1}  \big(I - \alpha \nabla^2 l_i(w_{k,j}^i)\big) -   \prod_{j=0}^{N-1}(I-\alpha \nabla^2 l_i(\widetilde w^i_{k,j})) \Big  \|   
\nonumber
	\\ &+   (1+\alpha L)^N L\mathbb{E}_{\bar S_N,  i} \big \| w_{k,N}^i  - \widetilde w^i _{k,N}  \big  \|    \nonumber
	\\\overset{(iii)}\leq & (1+\alpha L)^N  \mathbb{E}_{ i} \big\| \nabla l_i( w_{k}) \big \|   \underbrace{\mathbb{E}_{\bar S_N} \Big( \Big \|  \prod_{j=0}^{N-1}  \big(I - \alpha \nabla^2 l_i(w_{k,j}^i)\big) -   \prod_{j=0}^{N-1}(I-\alpha \nabla^2 l_i(\widetilde w^i_{k,j})) \Big  \|\, \Big |\, i \Big)}_{R(N)}    \nonumber
	\\ &+  (1+\alpha L)^N ((1+\alpha L)^N -1\big)  \frac{\sigma_g}{\sqrt{S}},  
	\end{align}}
\hspace{-0.2cm} where (i) follows from Jensen's inequality,  (ii) follows from Lemma~\ref{le:jiw}, and (iii) follows from item 1 in Proposition~\ref{le:distance}. 
	Our next step is to upper-bound the term $R(N)$. To simplify notations, we define a general quantity $R(m)$ by replacing $N$ in $R(N)$ with $m$,  and 
	we use $\mathbb{E}_{\bar S_m | i}(\cdot)$ to denote $\mathbb{E}_{\bar S_m}(\cdot| i)$. Then, we have 
	\begin{align}\label{eq:arjpp}
	R(m) \leq & \mathbb{E}_{\bar S_m| i}  \Big \|  \prod_{j=0}^{m-1}  \big(I - \alpha \nabla^2 l_i(w_{k,j}^i)\big) -   \prod_{j=0}^{m-2}(I-\alpha \nabla^2 l_i( w^i_{k,j})) (I-\alpha \nabla^2 l_i( \widetilde w^i_{k,m-1}) \Big  \| \nonumber
	\\ & + \mathbb{E}_{\bar S_m|i}  \Big \|  \prod_{j=0}^{m-2}(I-\alpha \nabla^2 l_i( w^i_{k,j})) (I-\alpha \nabla^2 l_i( \widetilde w^i_{k,m-1})  -  \prod_{j=0}^{m-1}(I-\alpha \nabla^2 l_i(\widetilde w^i_{k,j})) \Big  \| \nonumber
	\\\leq & (1+\alpha L)^{m-1} \alpha \rho\mathbb{E}_{\bar S_m|i} \|w_{k,m-1}^i - \widetilde w_{k,m-1}^i\| + (1+\alpha L) R(m-1) \nonumber
	\\\overset{(i)}\leq &  \alpha \rho (1+\alpha L)^{m-1} ( (1+\alpha L)^{m-1} -1 )\frac{\sigma_g}{L\sqrt{S}} + (1+\alpha L) R(m-1) \nonumber
	\\ \leq& \alpha \rho (1+\alpha L)^{N-1} \big( (1+\alpha L)^{N-1} -1  \big)\frac{\sigma_g}{L\sqrt{S}}  +  (1+\alpha L) R(m-1),
	\end{align}
	where (i) follows from Proposition~\ref{le:distance}. 
	Telescoping the above inequality over $m$ from $2$ to $N$ and using $R(1) =0$, we have 
	\begin{align}\label{addionl}
	R(N) \leq ((1+\alpha L)^{N-1}-1)^2 (1+\alpha L)^{N-1}\frac{\rho \sigma_g}{L^2\sqrt{S}}.
	\end{align}
	Thus, conditioning on $w_k$ and combining~\cref{addionl} and~\cref{eq:ek}, we have 
{\small	\begin{align*}
	\|\mathbb{E} \widehat G_i(w_k)  - \nabla \mathcal{L}(w_k)\|  
	 \leq & ((1+\alpha L)^{N-1}-1)^2\frac{\rho}{L} (1+\alpha L)^{2N-1}\frac{\sigma_g}{L\sqrt{S}}  \mathbb{E}_{ i\sim p(\mathcal{T})} \big(\big\| \nabla l_i( w_{k}) \big \|   \big) \nonumber
	\\ &+   \frac{(1+\alpha L)^N ((1+\alpha L)^N -1\big)\sigma_g}{\sqrt{S}}\nonumber
	\\\leq &((1+\alpha L)^{N-1}-1)^2\frac{\rho}{L} (1+\alpha L)^{2N-1}\frac{\sigma_g}{L\sqrt{S}}  \Big( \frac{\|\nabla \mathcal{L}(w_k)\| }{1-C_l} + \frac{\sigma }{1-C_l}     \Big) \nonumber
	\\ &+   \frac{(1+\alpha L)^N ((1+\alpha L)^N -1\big)\sigma_g}{\sqrt{S}}, 
	\end{align*} }
\hspace{-0.24cm}	where the last inequality follows from Lemma \ref{le:lL}. 
	Rearranging the above inequality and using $C_{\text{\normalfont err}_1} $ and $C_{\text{\normalfont err}_2}$ defined in Proposition~\ref{th:first-est} finish the proof.  

\subsection*{Proof of Proposition~\ref{th:second}}
Recall {\footnotesize$\widehat G_i(w_k)=  \prod_{j=0}^{N-1}(I - \alpha \nabla^2 l_i(w_{k,j}^i;  D_{k,j}^i))\nabla l_i(w_{k,N}^i;  T^i_k)$}.  
Conditioning on $w_k$, we have 
	\begin{align}\label{esni}
	\mathbb{E}\|&\widehat G_i(w_k)\|^2 \nonumber
	\\\leq &\mathbb{E}_{\bar S_N, i } \bigg( \mathbb{E}_{\bar D_N, T_k^i} \Big(  \Big \|\prod_{j=0}^{N-1}(I - \alpha \nabla^2 l_i(w_{k,j}^i; D_{k,j}^i))\Big\|^2 \|\nabla l_i(w_{k,N}^i; T^i_k)\|^2 \Big | \bar S_N, i         \Big)\bigg) \nonumber
	\\\leq & \underbrace{\mathbb{E}_{\bar S_N, i } \bigg( \prod_{j=0}^{N-1} \mathbb{E}_{\bar D_N} \Big(  \Big \|I - \alpha \nabla^2 l_i(w_{k,j}^i; D_{k,j}^i)\Big\|^2 \Big | \bar S_N, i \Big)}_{P} \underbrace{\mathbb{E}_{T_k^i}\Big( \|\nabla l_i(w_{k,N}^i; T^i_k)\|^2 \Big |\bar  S_N, i         \Big)}_{Q}\bigg). 
	\end{align}
	We  next upper-bound $P$ and $Q$ in~\cref{esni}. Note that $w_{k,j}^i, j=0,...,N-1$ are deterministic when conditioning on $S_N$, $i$, and $w_k$. Thus, conditioning on $S_N$, $i$, and $w_k$, we have 
	\begin{align}\label{pbound}
	\mathbb{E}_{\bar D_N}  \Big \|I - \alpha \nabla^2 l_i(w_{k,j}^i; D_{k,j}^i)\Big\|^2  = & \text{Var} \Big(  I - \alpha \nabla^2 l_i(w_{k,j}^i; D_{k,j}^i)  \Big) +\big\|I - \alpha \nabla^2 l_i(w_{k,j}^i) \big\|^2 \nonumber
	\\\leq & \frac{\alpha^2\sigma_H^2}{D} + (1+\alpha L)^2.
	\end{align}
	We next bound $Q$ term. Conditioning on $\bar S_N, i$ and $w_k$, we have 
	\begin{align}\label{etki}
	\mathbb{E}_{T_k^i} \|\nabla l_i(w_{k,N}^i; T^i_k)\|^2 \overset{(i)}\leq & 3\mathbb{E}_{T_k^i}\|\nabla l_i(w_{k,N}^i;  T^i_k) -\nabla l_i(w_{k,N}^i)\|^2 + 3\mathbb{E}_{T_k^i}\|\nabla l_i(\widetilde w_{k,N}^i) \|^2 \nonumber
	\\ &+ 3\mathbb{E}_{T_k^i}\|\nabla l_i(w_{k,N}^i) - \nabla l_i(\widetilde w_{k,N}^i)\|^2\nonumber
	\\\overset{(ii)}\leq & \frac{3\sigma_g^2}{T} + 3L^2 \|w_{k,N}^i - \widetilde w_{k,N}^i\|^2 + 3(1+\alpha L)^{2N} \|\nabla l_i(w_k)\|^2,
	\end{align}
	where (i) follows from $\|\sum_{i=1}^n a\|^2\leq n\sum_{i=1}^n\|a\|^2$, and (ii) follows from Lemma~\ref{le:jiw}. Thus, conditioning on $w_k$ and combining~\cref{esni},~\cref{pbound},~\cref{etki}, we have  
	\begin{align}
	\mathbb{E}&\|\widehat G_i(w_k)\|^2  \nonumber
	\\  &\leq  3\Big(\frac{\alpha^2\sigma_H^2}{D} + (1+\alpha L)^2\Big)^N\Big( \frac{\sigma_g^2}{T} + L^2 \mathbb{E}\|w_{k,N}^i - \widetilde w_{k,N}^i\|^2 + (1+\alpha L)^{2N} \mathbb{E}\|\nabla l_i(w_k)\|^2\Big) \nonumber
	\end{align}
	which, in conjunction with Proposition~\ref{le:distance},  
	yields 
	\begin{align}\label{ggsmida}
	\mathbb{E}\|\widehat G_i(w_k)\|^2   \leq& 3(1+\alpha L)^{2N} \big(\frac{\alpha^2\sigma_H^2}{D} + (1+\alpha L)^2\big)^N (\|\nabla l(w_k)\|^2 + \sigma^2)\nonumber
	\\&+\frac{C_{\text{\normalfont squ}_1}}{T} + \frac{C_{\text{\normalfont squ}_2}}{S}.
	\end{align} 	
Based on Lemma~\ref{le:lL} and  conditioning on $w_k$, we have  
	\begin{align*}
	\|\nabla l(w_k)\|^2 \leq \frac{2}{(1-C_l)^2} \|\nabla \mathcal{L}(w_k)\| + \frac{2C_l^2}{(1-C_l)^2} \sigma^2,
	\end{align*}
	which, in conjunction with  $\frac{2x^2}{(1-x)^2}+1 \leq \frac{2}{(1-x)^2}$ and \cref{ggsmida}, finishes the proof. 



\subsection*{Proof of Theorem~\ref{th:mainonline}}
 The proof of Theorem~\ref{th:mainonline} consists of four main steps: step $1$ of bounding an iterative meta update by the meta-gradient smoothness established by Proposition~\ref{th:lipshiz}; step $2$ of characterizing first-moment  error of the meta-gradient estimator { $\widehat G_i(w_k)$} by Proposition~\ref{th:first-est}; step $3$ of characterizing second-moment  error of the meta-gradient estimator { $\widehat G_i(w_k)$} by Proposition~\ref{th:second}; and step $4$ of combining steps 1-3, and telescoping to yield the convergence. 
 
To simplify notations, define the smoothness parameter of the meta-gradient as $$L_{w_k} = (1+\alpha L)^{2N}L + C_\mathcal{L} \mathbb{E}_{i\sim p(\mathcal{T})}\|\nabla l_i(w_k)\|,$$ where $C_\mathcal{L}$ is given in~\cref{clcl}. 	
Based on the smoothness of the gradient $\nabla \mathcal{L}(w) $ given by Proposition~\ref{th:lipshiz}, we have 
	\begin{align*}
	\mathcal{L}(w_{k+1}) \leq & \mathcal{L}(w_k) + \langle  \nabla \mathcal{L}(w)  , w_{k+1} - w_{k}    \rangle + \frac{L_{w_k}}{2} \|w_{k+1}-w_{k}\|^2 \nonumber
	\end{align*}
	The randomness from $\beta_k$ depends on $B_k^\prime$ and $D_{L_k}^i, i \in B_k^{\prime}$, and thus is independent of $S_{k,j}^i, D_{k,j}^i$ and $T_k^i$ for $i\in B_k, j=0,...,N$. Then,  taking expectation over the above inequality, conditioning on $w_k$, and recalling $e_k := \mathbb{E}\widehat G_i(w_k) - \nabla \mathcal{L}(w_k) $, we have  
	$\mathbb{E}( \mathcal{L}(w_{k+1})| w_k) \leq \mathcal{L}(w_{k}) - \mathbb{E} (\beta_k)\langle \nabla \mathcal{L}(w_{k}), \nabla \mathcal{L}(w_{k}) + e_k\rangle+\frac{L_{w_k}\mathbb{E} (\beta^2_k)\mathbb{E} \big\|  \frac{1}{B} \sum_{i\in B_k} \widehat G_i(w_k) \big \|^2}{2}.$ 
Then, applying Lemma~\ref{le:xiaodege} in the above inequality yields
	\begin{align}\label{qiangxing}
	\mathbb{E}( \mathcal{L}(w_{k+1})| w_k)
	\leq & \mathcal{L}(w_{k}) -\frac{4}{5C_\beta} \frac{1}{ L_{w_k}}  \|\nabla \mathcal{L}(w_{k})\|^2+ \frac{4}{5C_\beta} \frac{1}{ L_{w_k}} |\langle \nabla \mathcal{L}(w_{k}), e_k\rangle|  \nonumber
	\\ &+ \frac{2}{C_\beta^2} \frac{1}{L_{w_k}} \Big( \frac{1}{B}\mathbb{E}  \big\|   \widehat G_i(w_k) \big \|^2  +    \|\mathbb{E}  \widehat G_i(w_k)\|^2\Big).\nonumber
	\\\leq & \mathcal{L}(w_{k}) -\frac{4}{5C_\beta} \frac{1}{ L_{w_k}}  \|\nabla \mathcal{L}(w_{k})\|^2+ \frac{2}{5C_\beta} \frac{1}{ L_{w_k}} \| \nabla \mathcal{L}(w_{k})\|^2 +  \frac{2}{5C_\beta} \frac{\| e_k\|^2 }{ L_{w_k}} \nonumber
	\\ &+ \frac{2}{C_\beta^2} \frac{1}{L_{w_k}} \Big( \frac{1}{B}\mathbb{E}  \big\|   \widehat G_i(w_k) \big \|^2  +    \|\mathbb{E}  \widehat G_i(w_k)\|^2\Big).
	\end{align}
	Then, 
	applying Propositions~\ref{th:first-est} and~\ref{th:second} to  the above inequality yields
	\begin{align}\label{sikas}
	\mathbb{E}(& \mathcal{L}(w_{k+1})| w_k)  \nonumber
	 \\\leq & \mathcal{L}(w_{k}) -\frac{2}{5C_\beta} \frac{1}{ L_{w_k}}  \|\nabla \mathcal{L}(w_{k})\|^2+ \frac{2}{C_\beta^2} \frac{1}{L_{w_k}}  \frac{1}{B}\mathbb{E}  \big\|   \widehat G_i(w_k) \big \|^2  +    \frac{4}{C_\beta^2} \frac{1}{L_{w_k}} \|\nabla \mathcal{L}(w_{k})\|^2    \nonumber
	\\ &+ \Big(\frac{6}{5C_\beta L_{w_k}}  + \frac{12}{C_\beta^2 L_{w_k}} \Big) \Big ( \frac{C^2_{{\text{\normalfont err}}_2}  }{S}\|\nabla \mathcal{L}(w_k)\|^2 +\frac{C^2_{{\text{\normalfont err}}_1}}{S} + \frac{C^2_{{\text{\normalfont err}}_2} \sigma^2  }{S}\Big ) 
	  \nonumber
	\\ \leq & \mathcal{L}(w_{k}) - \frac{2}{C_\beta L_{w_k}} \left(   \frac{1}{5} - \left(\frac{3}{5} + \frac{6}{C_\beta}\right)\frac{C^2_{{\text{\normalfont err}}_2} }{S} - \frac{C_{\text{\normalfont squ}_3}}{C_\beta B} - \frac{2}{C_\beta}\right) \|\nabla \mathcal{L}(w_k) \|^2 \nonumber
	\\ &+ \frac{6( \frac{1}{5} +\frac{2}{C_\beta})}{C_\beta L_{w_k}S}\Big( C^2_{{\text{\normalfont err}}_1} +C^2_{{\text{\normalfont err}}_2}\sigma^2\Big)  +  \frac{2}{C_\beta^2 L_{w_k}B} \Big(  \frac{C_{\text{\normalfont squ}_1}}{T}  +  \frac{C_{\text{\normalfont squ}_2}}{S} + C_{\text{\normalfont squ}_3} \sigma^2 \Big).
	\end{align}
	Recalling {$L_{w_k} = (1+\alpha L)^{2N}L + C_\mathcal{L} \mathbb{E}_{i}\|\nabla l_i(w_k)\|$}, we have  $ L_{w_k} \geq L$ and 
	\begin{align}\label{oips}
	L_{w_k} 
	 \overset{(i)}\leq &  (1+\alpha L)^{2N}L + \frac{C_\mathcal{L}\sigma}{1-C_l}+ \frac{C_\mathcal{L}}{1-C_l} \|\nabla \mathcal{L}(w_k)\|,  
	\end{align}
	where (i) follows from Assumption~\ref{a2} and Lemma~\ref{le:lL}. 
	Combining~\cref{sikas} and~\cref{oips}, we have 
	\begin{align}\label{miops}
	\mathbb{E}( \mathcal{L}(w_{k+1})| w_k) \leq& \mathcal{L}(w_{k}) + \frac{6}{C_\beta L}\Big( \frac{1}{5} +\frac{2}{C_\beta}   \Big)\Big( C^2_{{\text{\normalfont err}}_1} +C^2_{{\text{\normalfont err}}_2}\sigma^2\Big) \frac{1}{S} \nonumber
	\\&+  \frac{2}{C_\beta^2 L} \Big(  \frac{C_{\text{\normalfont squ}_1}}{T}  +  \frac{C_{\text{\normalfont squ}_2}}{S} + C_{\text{\normalfont squ}_3} \sigma^2 \Big) \frac{1}{B} \nonumber
	\\ &	- \frac{2}{C_\beta } \frac{ \frac{1}{5} - \left(\frac{3}{5} + \frac{6}{C_\beta}\right)\frac{C^2_{{\text{\normalfont err}}_2} }{S} - \frac{C_{\text{\normalfont squ}_3}}{C_\beta B} - \frac{2}{C_\beta}}{(1+\alpha L)^{2N}L + \frac{C_\mathcal{L}\sigma}{1-C_l}+ \frac{C_\mathcal{L}}{1-C_l} \|\nabla \mathcal{L}(w_k)\|} \|\nabla \mathcal{L}(w_k) \|^2.
	\end{align}
	Based on the notations in \cref{para:com}, we rewrite~\cref{miops} as  
	\begin{align*}
	\mathbb{E}&( \mathcal{L}(w_{k+1})| w_k) \leq \mathcal{L}(w_{k}) + \frac{\xi}{S} +  \frac{\phi }{B}	-\theta \frac{  \|\nabla \mathcal{L}(w_k) \|^2}{\chi +\|\nabla \mathcal{L}(w_k) \|}.
	\end{align*}
	Unconditioning on $w_k$ in the above inequality and  
	telescoping the above inequality over $k$ from $0$ to $K-1$, we have 
	\begin{align}\label{ggopos}
	\frac{1}{K}\sum_{k=0}^{K-1} \mathbb{E}\left(\frac{ \theta \|\nabla \mathcal{L}(w_k) \|^2}{\chi +\|\nabla \mathcal{L}(w_k) \|}\right) \leq \frac{\Delta}{K} +    \frac{\xi}{S} +  \frac{\phi }{B},
	\end{align}
	where $\Delta = \mathcal{L}(w_0) - \mathcal{L}^*$. 
	Choosing $\zeta$  from $\{0,...,K-1\}$ uniformly at random, we obtain from \cref{ggopos} that 
	\begin{align}\label{medistep}
	\mathbb{E}\left(\frac{ \theta \|\nabla \mathcal{L}(w_\zeta) \|^2}{\chi +\|\nabla \mathcal{L}(w_\zeta) \|}\right) \leq \frac{\Delta}{K} +    \frac{\xi}{S} +  \frac{\phi }{B}.
	\end{align}
	Consider a function $f(x) = \frac{x^2}{c+x}, \,x>0$, where $c>0$ is a constant. Simple computation shows that $f^{\prime\prime}(x) =\frac{2c^2}{(x+c)^3}>0$. Thus, using Jensen's inequality in \cref{medistep}, we have 
	\begin{align}\label{reoolls}
	\frac{ \theta (\mathbb{E}\|\nabla \mathcal{L}(w_\zeta) \|)^2}{\chi +\mathbb{E}\|\nabla \mathcal{L}(w_\zeta) \|} \leq \frac{\Delta}{K} +    \frac{\xi}{S} +  \frac{\phi }{B}.
	\end{align}
	Rearranging the above inequality yields
	\begin{align}\label{havetogo}
	\mathbb{E}\|\nabla \mathcal{L}(w_\zeta) \|  \leq &\frac{\Delta}{\theta }\frac{1}{K} +    \frac{\xi}{\theta}\frac{1}{S} +  \frac{\phi }{\theta}\frac{1}{B} + \sqrt{\frac{\chi}{2} }\sqrt{\frac{\Delta}{\theta }\frac{1}{K} +    \frac{\xi}{\theta}\frac{1}{S} +  \frac{\phi }{\theta}\frac{1}{B}},
	\end{align}
	which finishes the proof.
\subsection*{Proof of Corollary~\ref{co:online}}
	Since $\alpha = \frac{1}{8NL}$, we have 
	\begin{align*}
	(1+\alpha L)^ N = &\big(1+ \frac{1}{8N}\big)^N = e^{N\log(1+\frac{1}{8N})} \leq e^{1/8}  < \frac{5}{4},
	(1+\alpha L)^ {2N} < e^{1/4} <  \frac{3}{2},
	\end{align*}
	which, in conjunction with~\cref{ppolp}, implies that 
	\begin{align}\label{errpp}
	C_{{\text{\normalfont err}}_1}  < \frac{5\sigma_g}{16},\quad C_{{\text{\normalfont err}}_2} < \frac{3\rho \sigma_g}{4L^2}.
	\end{align}
	Furthermore, noting that $D \geq \sigma_H^2/L^2$, we have
	\begin{align}\label{ctextp}
	C_{\text{\normalfont squ}_1} \leq &3(1+2\alpha L + 2\alpha^2 L^2)^N\sigma_g^2 <4\sigma_g^2, \; C_{\text{\normalfont squ}_2} < \frac{1.3\sigma^{2}_g}{8} < \frac{\sigma_g^2}{5},\; C_{\text{\normalfont squ}_3} \leq 11.
	\end{align}
	Based on~\cref{clcl}, we have 
	\begin{align}\label{bb:cl}
	C_{\mathcal{L}}<& \frac{75}{128}\frac{\rho}{L}<\frac{3}{5}\frac{\rho}{L} \,\text{ and } \,C_{\mathcal{L}} \overset{(i)}>  \frac{\rho}{L} ((N-1) \alpha L) > \frac{1}{16}\frac{\rho}{L},
	\end{align}
	where (i) follows from the inequality that $(1+a)^n > 1 +an$. 
	Then, using \cref{errpp}, \cref{ctextp} and~\cref{bb:cl}, we obtain from \cref{para:com} that 
	\begin{align}\label{manypara}
	\xi < &\frac{7}{500L} \Big( \frac{1}{10} + \frac{9\rho\sigma^2}{16L^4}\Big)\sigma_g^2,\quad \phi \leq \frac{1}{5000L} \Big( \frac{3\sigma_g^2}{T} + \frac{\sigma_g^2}{5S} + 11\sigma^2\Big) < \frac{1}{1000L}(\sigma_g^2 + 3\sigma^2)  \nonumber
	\\ \theta \geq & \frac{L}{60\rho} \Big( \frac{1}{5} - \frac{4}{5} \frac{9}{16} \frac{\rho^2\sigma_g^2}{L^4}\frac{1}{S}  - \frac{11}{100B} - \frac{1}{50}\Big) 
	 = \frac{L}{1500\rho},\; \chi \leq   \frac{24L^2}{\rho} +\sigma.
	\end{align}
	Then, treating $\Delta, \rho, L$ as constants and using~\cref{c:result}, we obtain
	\begin{small}
	\begin{align*}
	\mathbb{E}\|\nabla \mathcal{L}(w_\zeta) \|  \leq \mathcal{O} \Big(  \frac{1}{K} + \frac{\sigma_g^2(\sigma^2+1)}{S} &+ \frac{\sigma_g^2 +\sigma^2}{B} +\frac{\sigma^2_g}{TB}
	\\&+ \sqrt{\sigma +1}\sqrt{\frac{1}{K} + \frac{\sigma_g^2(\sigma^2+1)}{S} + \frac{\sigma_g^2 +\sigma^2}{B}+\frac{\sigma^2_g}{TB}}\Big).
	\end{align*}
	\end{small}
	\hspace{-0.15cm}Then, choosing  $S\geq C_S\sigma_g^2(\sigma^2+1)\max(\sigma,1)\epsilon^{-2}$, $B\geq C_B(\sigma_g^2+\sigma^2)\max(\sigma,1)\epsilon^{-2}$ and $TB >C_{T}\sigma_g^2\max(\sigma,1) \epsilon^{-2}$, we have 
	\begin{align*}
	\mathbb{E}\|\nabla \mathcal{L}(w_\zeta) \|  \leq  \mathcal{O}\bigg(\frac{1}{K} + \frac{1}{\epsilon^2} \Big(\frac{1}{C_S} +\frac{1}{C_B}+\frac{1}{C_{T}} \Big)+ \sqrt{\sigma} \sqrt{\frac{1}{K}+\frac{1}{\sigma\epsilon^2}\Big(\frac{1}{C_S} +\frac{1}{C_B}+\frac{1}{C_{T}} \Big)}\bigg)
	\end{align*}
	After at most $K =  C_K\max(\sigma,1)\epsilon^{-2}$ iterations, the above inequality implies, for constants $C_S, C_B,C_T$ and $C_K$ large enough, 
	$\mathbb{E}\|\nabla \mathcal{L}(w_\zeta)\| \leq \epsilon$.
	Recall that we need $|B_k^\prime| > \frac{4C^2_{\mathcal{L}}\sigma^2}{3(1+\alpha L)^{4N}L^2}$ and $|D_{L_k}^i| > \frac{64\sigma^2_g C_\mathcal{L}^2}{(1+\alpha L)^{4N}L^2}$ for building stepsize $\beta_k$ at each iteration $k$. Based on the selected parameters, we have 
	\begin{align*}
	\frac{4C^2_{\mathcal{L}}\sigma^2}{3(1+\alpha L)^{4N}L^2} \leq \frac{4\sigma^2}{3L^2} \frac{3\rho}{5L}\leq \Theta({\sigma^2}), \quad \frac{64\sigma^2_g C_\mathcal{L}^2}{(1+\alpha L)^{4N}L^2} < \Theta(\sigma_g^2),
	\end{align*}
	which implies  $|B_k^\prime| =\Theta(\sigma^2)$ and $|D_{L_k}^i| =\Theta(\sigma^2_g)$. Then, since  the batch size $D =\Theta(\sigma_H^2/L^2)$, the total number of gradient computations at each meta iteration $k$ is given by 
	$B (NS+T) + |B_k^\prime||D_{L_k}^i|\leq \mathcal{O}(N\epsilon^{-4}+\epsilon^{-2}    )$.
	The total number of Hessian computations at each meta iteration is  
	$BND \leq \mathcal{O}(N\epsilon^{-2}). $
	This completes the proof. 

\section{Proof for Convergence in Finite-Sum Case}

For the finite-sum case, we provide the proofs for Propositions~\ref{finite:lip},~\ref{finite:seconderr} on the properties of meta gradient, and Theorem~\ref{mainth:offline} and  Corollary~\ref{co:mainoffline} on the convergence and complexity of multi-step MAML. The proofs of these results rely on several technical lemmas, which we relegate to~\Cref{aux:lemma_finite}.  
\subsection*{Proof of Proposition~\ref{finite:lip}}
By the definition of $\nabla \mathcal{L}_i(\cdot)$, we have 
	\begin{align}\label{lopasv}
	\|\nabla \mathcal{L}_i(w) & -\nabla \mathcal{L}_i(u) \| \nonumber
	\\\leq &\Big\|\prod_{j=0}^{N-1}(I - \alpha \nabla^2 l_{S_i}(\widetilde w_{j}^i))\nabla l_{T_i}(\widetilde w_{N}^i) -\prod_{j=0}^{N-1}(I - \alpha \nabla^2 l_{S_i}(\widetilde u_{j}^i))\nabla l_{T_i}(\widetilde w_{N}^i)\Big\|  \nonumber
	\\ & + \Big\|\prod_{j=0}^{N-1}(I - \alpha \nabla^2 l_{S_i}(\widetilde u_{j}^i))\nabla l_{T_i}(\widetilde w_{N}^i) -\prod_{j=0}^{N-1}(I - \alpha \nabla^2 l_{S_i}(\widetilde u_{j}^i))\nabla l_{T_i}(\widetilde u_{N}^i)\Big\| \nonumber
	\\ \leq &\underbrace{ \Big\|\prod_{j=0}^{N-1}(I - \alpha \nabla^2 l_{S_i}(\widetilde  w_{j}^i)) -\prod_{j=0}^{N-1}(I - \alpha \nabla^2 l_{S_i}(\widetilde u_{j}^i))\Big\|}_{A}  \|\nabla l_{T_i}(\widetilde  w_{N}^i)\|  \nonumber
	\\& + (1+\alpha L)^N \|\nabla l_{T_i}(\widetilde w_{N}^i)- \nabla l_{T_i}(\widetilde  u_{N}^i)\|.
	\end{align}
	We next upper-bound $A$ in the above inequality. Specifically,  we have
	\begin{align}\label{alegeq}
	A \leq &  \Big\|\prod_{j=0}^{N-1}(I - \alpha \nabla^2 l_{S_i}(\widetilde w_{j}^i)) -\prod_{j=0}^{N-2}(I - \alpha \nabla^2 l_{S_i}(\widetilde w_{j}^i))(I - \alpha \nabla^2 l_{S_i}(\widetilde u_{N-1}^i))\Big\| \nonumber
	\\ &+\Big\| \prod_{j=0}^{N-2}(I - \alpha \nabla^2 l_{S_i}(\widetilde w_{j}^i))(I - \alpha \nabla^2 l_{S_i}(\widetilde u_{N-1}^i))-\prod_{j=0}^{N-1}(I - \alpha \nabla^2 l_{S_i}(\widetilde u_{j}^i))\Big\|\nonumber
	\\ \leq &\Big(   (1+\alpha  L)^{N-1}\alpha \rho  + \frac{\rho}{L} (1+\alpha L)^N \big( (1+\alpha L)^{N-1} -1 \big)\Big)\|w-u\|,
	\end{align}
	where the last inequality uses an approach similar to \cref{vbbn}. 
	Combining~\cref{lopasv} and \cref{alegeq} yields
	\begin{align}\label{inops}
	\|\nabla \mathcal{L}_i(w) & -\nabla \mathcal{L}_i(u) \| \nonumber
\\	 \leq& \big(   (1+\alpha  L)^{N-1}\alpha \rho  + \frac{\rho}{L} (1+\alpha L)^N \big( (1+\alpha L)^{N-1} -1 \big)\big)\|w-u\|  \|\nabla l_{T_i}(\widetilde w_{N}^i)\| \nonumber
	\\ &+ (1+\alpha L)^NL \|\widetilde w_{N}^i- \widetilde u_{N}^i\|.
	\end{align}
	To upper-bound $ \|\nabla l_{T_i}(\widetilde w_{N}^i)\| $ in~\cref{inops},  using the mean value theorem, we have
	\begin{align}\label{lowni}
	\|\nabla l_{T_i}(\widetilde w_{N}^i)\|  = &  \Big\|\nabla l_{T_i} (w-\sum_{j=0}^{N-1}\alpha \nabla l_{S_i}(\widetilde w_j^i))\Big\|\nonumber
	\\ \overset{(i)}\leq & \|\nabla l_{T_i} (w)\| + \alpha L \sum_{j=0}^{N-1} (1+\alpha L)^j\big\| \nabla l_{S_i}(w)   \big\| \nonumber
	\\ \overset{(ii)}\leq & (1+\alpha L)^N  \|\nabla l_{T_i} (w)\|  + \big( (1+\alpha L)^N-1 \big)b_i,
	\end{align}
	where (i) follows from Lemma~\ref{finite:gbd}, and (ii) follows from Assumption~\ref{assum:vaoff}. In addition, using an approach similar to Lemma~\ref{d_u_w}, we have
	\begin{align}\label{wnos}
	\|\widetilde w_{N}^i- \widetilde u_{N}^i\| \leq (1+\alpha L)^N \|w-u\|.
	\end{align}
	Combining~\cref{inops}, \cref{lowni} and \cref{wnos} yields
	{\small \begin{align*}
	\|\nabla &\mathcal{L}_i(w)  -\nabla \mathcal{L}_i(u) \| 
	\\ \leq& \Big(   (1+\alpha  L)^{N-1}\alpha \rho  + \frac{\rho}{L} (1+\alpha L)^N \big( (1+\alpha L)^{N-1} -1 \big)\Big)(1+\alpha L)^N \|\nabla l_{T_i} (w)\|\|w-u\| \nonumber
	\\&+ \Big(   (1+\alpha  L)^{N-1}\alpha \rho  + \frac{\rho}{L} (1+\alpha L)^N \big( (1+\alpha L)^{N-1} -1 \big)\Big)\big( (1+\alpha L)^N-1 \big)b_i\|w-u\| \nonumber
	\\ &+ (1+\alpha L)^{2N}L \|w- u\|,
	\end{align*}}
	\hspace{-0.23cm}which, in conjunction with $C_b$ and $C_\mathcal{L}$ given in \cref{cl1ss}, yields 
	\begin{align*}
	\|\nabla \mathcal{L}_i(w)  -\nabla \mathcal{L}_i(u) \| \leq \big((1+\alpha L)^{2N}L + C_bb_i + C_{\mathcal{L}} \|\nabla l_{T_i}(w)\|  \big)\|w-u\|.
	\end{align*}
	Based on the above inequality and  Jensen's inequality, we 
	finish the proof.

\subsection*{Proof of Proposition~\ref{finite:seconderr}}
	Conditioning on $w_k$, we have 
	\begin{align*}
	\mathbb{E}\|\widehat G_i(w_k)\|^2 = &\mathbb{E} \Big\| \prod_{j=0}^{N-1}(I - \alpha \nabla^2 l_{S_i}(w_{k,j}^i))\nabla l_{T_i}(w_{k,N}^i)  \Big\|^2 \leq  (1+\alpha L)^{2N} \mathbb{E} \|\nabla l_{T_i}(w_{k,N}^i)\|^2,
	\end{align*}
	which, using an approach similar to \cref{lowni}, yields
	\begin{align}\label{gwkopo}
	\mathbb{E}\|&\widehat G_i(w_k)\|^2\nonumber
	\\ \leq&  (1+\alpha L)^{2N} 2(1+\alpha L)^{2N} \mathbb{E} \|\nabla l_{T_i}(w_k)\|^2 + 2(1+\alpha L)^{2N} \big( (1+\alpha L)^N -1\big)^2 \mathbb{E}_i b_i^2 \nonumber
	\\ \leq  & 2(1+\alpha L)^{4N} (\|\nabla l_{T}(w_k)\|^2 + \sigma^2)+ 2(1+\alpha L)^{2N} \big( (1+\alpha L)^N -1\big)^2 \widetilde b \nonumber
	\\ \overset{(i)}\leq & 2(1+\alpha L)^{4N} \Big(  \frac{2}{C_1^2} \|\nabla l_{T}(w_k)\|^2 + \frac{2C_2^2}{C_1^2} + \sigma^2 \Big) + 2(1+\alpha L)^{2N} \big( (1+\alpha L)^N -1\big)^2 \widetilde b \nonumber
	\\ \leq & \frac{4(1+\alpha L)^{4N}}{C_1^2}\|\nabla l_{T}(w_k)\|^2 + \frac{4(1+\alpha L)^{4N}C_2^2}{C_1^2} + 2(1+\alpha L)^{4N}(\sigma^2 + \widetilde b),
	\end{align}
	where (i) follows from Lemma~\ref{twc1c2}, and constants $C_1$ and $C_2$ are given by \cref{c1c2}. Noting that $C_2=\big( (1+\alpha L)^{2N}-1  \big)\sigma + (1+\alpha L)^N \big((1+\alpha L)^N -1 \big) b < \big( (1+\alpha L)^{2N}-1  \big)(\sigma +b)$ and using the definitions of $A_{\text{\normalfont squ}_1}, A_{\text{\normalfont squ}_2}$ in \cref{wocaopp}, we finish the proof. 
\subsection*{Proof of Theorem~\ref{mainth:offline}}
	Based on the smoothness of $\nabla \mathcal{L}(\cdot)$ established in Proposition~\ref{finite:lip}, we have
	\begin{align*}
	\mathcal{L}(w_{k+1}) 
	 \leq &\mathcal{L}(w_k) -\beta_k\Big \langle \nabla \mathcal{L}(w_k), \frac{1}{B}\sum_{i\in B_k} \widehat G_i(w_k)\Big\rangle + \frac{L_{w_k}\beta_k^2}{2}\Big\|\frac{1}{B}\sum_{i\in B_k} \widehat G_i(w_k)\Big\|^2 \nonumber
	\end{align*}
	Taking the conditional expectation given $w_k$ over the above inequality and noting that the randomness over $\beta_k$ is independent of the randomness over $ \widehat G_i(w_k)$,  we have 
	\begin{align}\label{wk1k1}
	\mathbb{E}	(\mathcal{L}(w_{k+1})  | w_k)  \leq &\mathcal{L}(w_{k}) - \frac{1}{C_\beta}\mathbb{E}\Big(\frac{1}{\hat L_{w_k}} \,\Big |\, w_k\Big) \|\nabla \mathcal{L}(w_{k}) \|^2 \nonumber
	\\&+  \frac{L_{w_k}}{2C_\beta^2}\mathbb{E}\Big(\frac{1}{\hat L^2_{w_k}} \,\Big |\, w_k\Big) \mathbb{E} \Big(  \Big\|\frac{1}{B}\sum_{i\in B_k} \widehat G_i(w_k)\Big\|^2  \Big| w_k   \Big).
	\end{align}
	Note that, conditioning on $w_k$,
	\begin{align}\label{bbe1b}
	\mathbb{E}  \Big\|\frac{1}{B}\sum_{i\in B_k} \widehat G_i(w_k)\Big\|^2   
	\leq & \frac{1}{B}\big(  A_{\text{\normalfont squ}_1} \|\nabla \mathcal{L}(w_k)\|^2 + A_{\text{\normalfont squ}_2}   \big)  + \|\nabla \mathcal{L}(w_k)\|^2
	\end{align}
	where the inequality follows from Proposition~\ref{finite:seconderr}. Then, combining~\cref{bbe1b},~\cref{wk1k1} and applying Lemma~\ref{le:betak}, we have 
	\begin{align}\label{lwkpkps}
	\mathbb{E}	(\mathcal{L}(w_{k+1})  | w_k)
	 \leq & \mathcal{L}(w_{k})  - \Big(  \frac{1}{L_{w_k}C_\beta}  - \frac{\frac{A_{\text{\normalfont squ}_1}}{B}+1}{L_{w_k}C_\beta^2}\Big)\|\nabla \mathcal{L}(w_k)\|^2 + \frac{A_{\text{\normalfont squ}_2}}{L_{w_k}C_\beta^2b}.
	\end{align}
	Recalling that $L_{w_k} = (1+\alpha L)^{2N}L + C_b b +  C_\mathcal{L} \mathbb{E}_{i\sim p(\mathcal{T})}\|\nabla l_{T_i}(w_k)\|$ and conditioning on $w_k$, we have  $L_{w_k}\geq L$ and 
	\begin{align}\label{lkulpi}
	L_{w_k} \leq & (1+\alpha L)^{2N}L + C_b b +  C_\mathcal{L} (\|\nabla l_{T}(w_k)\| + \sigma) \nonumber
	\\\overset{(i)}\leq&(1+\alpha L)^{2N}L + C_b b + C_\mathcal{L}\Big( \frac{C_2}{C_1} +\sigma\Big)  + \frac{C_\mathcal{L}}{C_1} \|\nabla \mathcal{L}(w_k)\|,
	\end{align}
	where $(i)$ follows from Lemma~\ref{twc1c2}. Combining~\cref{lkulpi} and~\cref{lwkpkps} yields
	\begin{small}
	\begin{align}\label{b2n8} 
	\mathbb{E}	(&\mathcal{L}(w_{k+1})  | w_k) \nonumber
	\\\leq&  \mathcal{L}(w_{k})  -\frac{\Big(  \frac{1}{C_\beta}  - \frac{1}{C_\beta^2}\Big( \frac{A_{\text{\normalfont squ}_1}}{B}+1\Big)\Big)\|\nabla \mathcal{L}(w_k)\|^2 }{(1+\alpha L)^{2N}L + C_b b + C_\mathcal{L}\Big( \frac{C_2}{C_1} +\sigma\Big)  + \frac{C_\mathcal{L}}{C_1} \|\nabla \mathcal{L}(w_k)\|} + \frac{1}{LC_\beta^2} \frac{A_{\text{\normalfont squ}_2}}{B} \nonumber
	\\ = &\mathcal{L}(w_{k})  -\frac{\frac{C_1}{C_\mathcal{L}} \Big(  \frac{1}{C_\beta}  - \frac{1}{C_\beta^2}\Big( \frac{A_{\text{\normalfont squ}_1}}{B}+1\Big)\Big)\|\nabla \mathcal{L}(w_k)\|^2 }{\frac{C_1}{C_\mathcal{L}} (1+\alpha L)^{2N}L + \frac{bC_1C_b  }{C_\mathcal{L}} +C_2 +C_1\sigma  + \|\nabla \mathcal{L}(w_k)\|} + \frac{1}{LC_\beta^2} \frac{A_{\text{\normalfont squ}_2}}{B} \nonumber
	\\= &\mathcal{L}(w_{k})  -\frac{\frac{C_1}{C_\mathcal{L}} \Big(  \frac{1}{C_\beta}  - \frac{1}{C_\beta^2}\Big( \frac{A_{\text{\normalfont squ}_1}}{B}+1\Big)\Big)\|\nabla \mathcal{L}(w_k)\|^2 }{\frac{C_1}{C_\mathcal{L}} (1+\alpha L)^{2N}L + \frac{bC_1C_b  }{C_\mathcal{L}} +(1+\alpha L)^N((1+\alpha L)^{2N}-1)b  + \|\nabla \mathcal{L}(w_k)\|} +  \frac{A_{\text{\normalfont squ}_2}}{ LC_\beta^2 B},
	\end{align}
	\end{small}
	\hspace{-0.12cm}where the last equality follows from the definitions of $C_1,C_2$ in \cref{c1c2}. 
Combining the definitions in~\cref{offline:constants}  with \cref{b2n8} and taking the expectation over  $w_k$,  we have
	\begin{align*}
	\mathbb{E}\frac{\theta \|\nabla \mathcal{L}(w_k)\|^2}{\xi + \|\nabla \mathcal{L}(w_k)\|} \leq \mathbb{E}( \mathcal{L}(w_{k})  - \mathcal{L}(w_{k+1})     ) + \frac{\phi}{B}.
	\end{align*}
	Telescoping the above bound over $k$ from $0$ to $K-1$ and choosing $\zeta$  from $\{0,...,K-1\}$ uniformly at random, we have
	\begin{align}\label{oppps}
	\mathbb{E}\frac{\theta \|\nabla \mathcal{L}(w_\zeta)\|^2}{\xi + \|\nabla \mathcal{L}(w_\zeta)\|}  \leq \frac{\Delta}{K} +\frac{\phi}{B}. 
	\end{align}
	Using an approach similar to \cref{reoolls}, we obtain from~\cref{oppps} that 
	\begin{align*}
	\frac{	(\mathbb{E}\|\nabla \mathcal{L}(w_\zeta)\|)^2}{\xi + 	\mathbb{E}\|\nabla \mathcal{L}(w_\zeta)\|}  \leq \frac{\Delta}{\theta K} +\frac{\phi}{\theta B},
	\end{align*}
	which further implies that 
	\begin{align}\label{iolscasa}
	\mathbb{E}\|\nabla \mathcal{L}(w_\zeta)\| \leq \frac{\Delta}{2\theta K} +\frac{\phi}{2\theta B} + \sqrt{ \xi \Big(\frac{\Delta}{\theta K} +\frac{\phi}{\theta B}\Big) + \Big(\frac{\Delta}{2\theta K} +\frac{\phi}{2\theta B}\Big)^2 },
	\end{align}
	which finishes the proof. 
\subsection*{Proof of Corollary~\ref{co:mainoffline}}
	Since $\alpha = \frac{1}{8NL}$, we have $(1+\alpha L)^{4N}< e^{0.5}<2$, and thus
	\begin{align}\label{afterpuck}
	A_{\text{\normalfont squ}_1}  &<  32, \; A_{\text{\normalfont squ}_2} < 8(\sigma +b)^2 + 4(\sigma^2+\widetilde b),\nonumber
	\\C_{\mathcal{L}} &< \Big(\frac{5\rho}{32NL} + \frac{\rho}{L}\frac{5}{16}\Big) \frac{5}{4} < \frac{5\rho}{8L}, \; C_{\mathcal{L}} > \frac{\rho}{L} \alpha L (N-1)>\frac{\rho}{16L},\nonumber
	\\C_b &< \frac{15}{32}\frac{\rho}{L}\frac{1}{4}< \frac{\rho}{8L},
	\end{align}
	which, in conjunction with \cref{offline:constants}, yields
	\begin{align}\label{fini:offpara}
	\theta \geq &  \frac{1}{80} \frac{4L}{5\rho} \Big( 1- \frac{33}{80}\Big) \geq \frac{L}{200\rho}, \; \phi \leq \frac{2(\sigma +b)^2 + (\sigma^2+\widetilde b)}{1600L},\; \xi \leq  \frac{24L^2}{\rho} + \frac{37b}{16}. 
	\end{align}
	Combining~\cref{fini:offpara} and \cref{iopnn} yields
	\begin{align*}
	\mathbb{E}\|\nabla \mathcal{L}(w_\zeta)\| \leq &\frac{\Delta}{2\theta K} +\frac{\phi}{2\theta B} + \sqrt{ \xi \Big(\frac{\Delta}{\theta K} +\frac{\phi}{\theta B}\Big) + \Big(\frac{\Delta}{2\theta K} +\frac{\phi}{2\theta B}\Big)^2 } \nonumber
	\\ \leq & \mathcal{O}\Big(  \frac{1}{K} +\frac{\sigma^2}{B} +\sqrt{\frac{1}{K} +\frac{\sigma^2}{B} }  \Big).
	\end{align*}
	Then, based on the parameter selection that $B\geq C_B\sigma^2\epsilon^{-2}$ and after at most $K=C_k\epsilon^{-2}$ iterations, we have 
	\begin{align*}
	\mathbb{E}\|\nabla \mathcal{L}(w_\zeta)\| \leq \mathcal{O}\Big(\big(\frac{1}{C_B}+\frac{1}{C_k}\big)\frac{1}{\epsilon^2} + \frac{1}{\epsilon}\sqrt{\big(\frac{1}{C_B}+\frac{1}{C_k}\big)}\Big).
	\end{align*}	
	Then, for $C_B,C_K$ large enough, we obtain from the above inequality that 
$	\mathbb{E}\|\nabla \mathcal{L}(w_\zeta)\| \leq \epsilon.$
	Thus, the total number of gradient computations is given by $B(T+NS)=\mathcal{O}(\epsilon^{-2}(T+NS)).$ Furthermore, the  total number of Hessian computations is given by $BNS =\mathcal{O}(NS\epsilon^{-2}) $
at each iteration.  
	Then, the proof is complete.

\section{Auxiliary Lemmas for MAML in Resampling Case}
\label{aux:lemma}

In this section, we derive some useful lemmas  to prove the propositions given in  \Cref{theory:online} on the properties of the meta gradient and the main results Theorem~\ref{th:mainonline} and Corollary~\ref{co:online}. 

The first lemma provides a bound on the difference between $\|\widetilde w_j^i - \widetilde u_j^i\|$ for $j=0,...,N,  i\in\mathcal{I}$, where $\widetilde w_j^i,\, j=0,...,N, i\in\mathcal{I}$ are given through the {\em gradient descent} updates in~\cref{gd_w} and  $\widetilde u_j^i,\, j=0,...,N$ are defined in the same way. 
\begin{lemma}\label{d_u_w}
	For any $i\in\mathcal{I}$, $j=0,...,N$ and $w,u \in \mathbb{R}^d$, we have 
	\begin{align*}
	\left\|\widetilde w_j^i -\widetilde  u_j^i\right\| \leq (1+\alpha L)^j \|w-u\|. 
	\end{align*}
\end{lemma}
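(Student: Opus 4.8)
The plan is to prove this by induction on the inner-loop index $j$, exploiting the recursive structure of the gradient descent updates in~\cref{gd_w} together with the $L$-Lipschitz continuity of $\nabla l_i(\cdot)$ guaranteed by item~\ref{item2} of Assumption~\ref{assum:smooth} (recall $L=\max_i L_i$). The two sequences $\{\widetilde w_j^i\}$ and $\{\widetilde u_j^i\}$ are generated by the identical update map but from different initializations $\widetilde w_0^i = w$ and $\widetilde u_0^i = u$, so the whole argument amounts to showing that one step of gradient descent expands distances by at most a factor $(1+\alpha L)$.

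For the base case $j=0$, the claim is immediate since $\|\widetilde w_0^i - \widetilde u_0^i\| = \|w-u\| = (1+\alpha L)^0\|w-u\|$. For the inductive step, I would assume the bound $\|\widetilde w_j^i - \widetilde u_j^i\|\leq (1+\alpha L)^j\|w-u\|$ and then subtract the two update equations to obtain
\begin{align*}
\widetilde w_{j+1}^i - \widetilde u_{j+1}^i = \big(\widetilde w_j^i - \widetilde u_j^i\big) - \alpha\big(\nabla l_i(\widetilde w_j^i) - \nabla l_i(\widetilde u_j^i)\big).
\end{align*}
Applying the triangle inequality followed by the Lipschitz bound $\|\nabla l_i(\widetilde w_j^i) - \nabla l_i(\widetilde u_j^i)\|\leq L\|\widetilde w_j^i - \widetilde u_j^i\|$ gives
\begin{align*}
\|\widetilde w_{j+1}^i - \widetilde u_{j+1}^i\| \leq (1+\alpha L)\,\|\widetilde w_j^i - \widetilde u_j^i\|,
\end{align*}
and combining this with the inductive hypothesis yields the factor $(1+\alpha L)^{j+1}$, closing the induction.

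Since this is a routine one-step contraction-type estimate iterated across the inner loop, there is no substantive obstacle; the only point requiring minor care is bookkeeping the initialization so that the exponent of $(1+\alpha L)$ matches the iteration count exactly (i.e., that $\widetilde w_0^i$ and $\widetilde u_0^i$ are the actual starting points $w$ and $u$, not $w_k$ as in the algorithm's notation). I would note that the bound is uniform in $i$ precisely because we use $L=\max_i L_i$, which is what lets the same estimate hold for every task. This lemma will then feed directly into the proof of Proposition~\ref{th:lipshiz}, where the distances $\|\widetilde w_j^i - \widetilde u_j^i\|$ along two inner-loop paths are exactly the quantities that need to be controlled and connected back to $\|w-u\|$.
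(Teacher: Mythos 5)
Your proof is correct and follows essentially the same route as the paper's: both establish the one-step expansion bound $\|\widetilde w_{j+1}^i - \widetilde u_{j+1}^i\| \leq (1+\alpha L)\|\widetilde w_j^i - \widetilde u_j^i\|$ via the triangle inequality and the $L$-Lipschitz continuity of $\nabla l_i(\cdot)$, and then iterate it (the paper phrases this as telescoping from $m=1$ to $j$, while you phrase it as induction, but these are the same argument). Your bookkeeping of the initialization $\widetilde w_0^i = w$, $\widetilde u_0^i = u$ also matches the paper's closing step.
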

\begin{proof}
	Based on the updates that $\widetilde w_m^i = \widetilde w_{m-1}^i - \alpha\nabla l_i(\widetilde w_{m-1}^i)$ and $\widetilde u_m^i = \widetilde u_{m-1}^i - \alpha\nabla l_i(\widetilde u_{m-1}^i)$, we obtain, for any $i \in\mathcal{I}$,
	\begin{align*}
	\|\widetilde w_m^i - \widetilde u_m^i\|  =& \|\widetilde w_{m-1}^i - \alpha\nabla l_i(\widetilde w_{m-1}^i) -\widetilde u_{m-1}^i + \alpha\nabla l_i(\widetilde u_{m-1}^i) \| \nonumber
	\\ \overset{(i)}\leq & \|\widetilde w_{m-1}^i -\widetilde u_{m-1}^i\| + \alpha L\|\widetilde w_{m-1}^i - \widetilde u_{m-1}^i\|  \nonumber
	\\ \leq & (1+\alpha L)  \|\widetilde w_{m-1}^i -\widetilde u_{m-1}^i\|,
	\end{align*} 
	where (i) follows from the triangle inequality. Telescoping the above inequality over $m$ from $1$ to $j$, we obtain 
	\begin{align*}
	\left\|\widetilde w_j^i - \widetilde u_j^i\right\| \leq (1+\alpha L)^j \|\widetilde w^i_0-\widetilde u^i_0\|, 
	\end{align*}
	which, in conjunction with the fact that $\widetilde w_0^i = w$ and $\widetilde u_0^i = u$, finishes the proof.
\end{proof}
The following lemma provides an upper bound on $\|\nabla l_i(\widetilde w_j^i)\|$ for all $i\in\mathcal{I}$ and $j=0,..., N$, where $\widetilde w_j^i$ is defined in the same way as in Lemma~\ref{d_u_w}. 
\begin{lemma}\label{le:jiw}
	For any $i\in\mathcal{I}$,  $j=0,...,N$ and $w \in \mathbb{R}^d$, we have 
	\begin{align*}
	\|\nabla l_i(\widetilde w_j^i)\| \leq (1+\alpha L)^j \|\nabla l_i(w)\|.
	\end{align*}
\end{lemma}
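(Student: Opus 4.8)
The plan is to establish a one-step recursive bound and then telescope it. First I would fix a task index $i$ and a step $j$, and compare $\nabla l_i(\widetilde w_{j+1}^i)$ to $\nabla l_i(\widetilde w_j^i)$ by adding and subtracting $\nabla l_i(\widetilde w_j^i)$. Using the triangle inequality and the $L$-Lipschitz continuity of $\nabla l_i$ from item~\ref{item2} of Assumption~\ref{assum:smooth} (recall $L=\max_i L_i$), I obtain
\begin{align*}
\|\nabla l_i(\widetilde w_{j+1}^i)\| \leq \|\nabla l_i(\widetilde w_{j+1}^i)-\nabla l_i(\widetilde w_{j}^i)\| + \|\nabla l_i(\widetilde w_{j}^i)\| \leq L\|\widetilde w_{j+1}^i-\widetilde w_{j}^i\| + \|\nabla l_i(\widetilde w_{j}^i)\|.
\end{align*}

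The key observation is that the inner-loop gradient-descent update in~\cref{gd_w} gives exactly $\widetilde w_{j+1}^i-\widetilde w_{j}^i = -\alpha\nabla l_i(\widetilde w_{j}^i)$, so the displacement between consecutive iterates is itself controlled by the gradient norm I am trying to bound. Substituting this identity yields the clean contraction-type recursion
\begin{align*}
\|\nabla l_i(\widetilde w_{j+1}^i)\| \leq (1+\alpha L)\,\|\nabla l_i(\widetilde w_{j}^i)\|.
\end{align*}
Telescoping this inequality over the index from $0$ to $j$, together with the initialization $\widetilde w_0^i = w$, then gives $\|\nabla l_i(\widetilde w_j^i)\|\leq (1+\alpha L)^j\|\nabla l_i(w)\|$, which is the claim. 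This structure parallels the proof of Lemma~\ref{d_u_w}, where a similar single-step expansion of the gradient-descent map is telescoped.

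Honestly, there is no serious obstacle here: the argument is a routine one-step expansion followed by an induction, and the only point requiring mild care is ensuring the Lipschitz constant used is the uniform $L=\max_i L_i$ so that the bound holds simultaneously for every task, and that the self-referential appearance of $\nabla l_i(\widetilde w_j^i)$ through the update is handled cleanly rather than leaving a stray displacement term. The resulting factor $(1+\alpha L)^j$ is precisely the growth factor that feeds into the subsequent smoothness and estimation-error bounds (e.g., Propositions~\ref{th:lipshiz}, \ref{th:first-est} and \ref{th:second}), so I would make sure the statement is left in this exact multiplicative form.
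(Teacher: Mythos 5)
Your proposal is correct and follows essentially the same argument as the paper's proof: a one-step triangle inequality combined with the $L$-Lipschitz property of $\nabla l_i$, substitution of the gradient-descent identity $\widetilde w_{j+1}^i - \widetilde w_j^i = -\alpha \nabla l_i(\widetilde w_j^i)$ to obtain the recursion $\|\nabla l_i(\widetilde w_{j+1}^i)\| \leq (1+\alpha L)\|\nabla l_i(\widetilde w_j^i)\|$, and telescoping with $\widetilde w_0^i = w$. No differences worth noting.
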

\begin{proof}
	For $m\geq1$, we have 
	\begin{align*}
	\|\nabla l_i(\widetilde w_m^i)\| = & \|\nabla l_i(\widetilde w_m^i) - \nabla l_i(\widetilde w_{m-1}^i) + \nabla l_i(\widetilde w_{m-1}^i)\|
	\\\leq & \|\nabla l_i(\widetilde w_m^i) - \nabla l_i(\widetilde w_{m-1}^i) \| +  \|\nabla l_i(\widetilde w_{m-1}^i)\|
	\\\leq & L\|\widetilde w_m^i - \widetilde w_{m-1}^i\| +\|\nabla l_i(\widetilde w_{m-1}^i)\|\leq (1+\alpha L)\|\nabla l_i(\widetilde w_{m-1}^i)\|,
	\end{align*}
	where the last inequality follows from the update $\widetilde w_m^i =  \widetilde w_{m-1}^i - \alpha \nabla l_i(\widetilde w_{m-1}^i)$. Then, telescoping the above inequality over $m$ from $1$ to $j$ yields
	\begin{align*}
	\|\nabla l_i(\widetilde w_j^i)\| \leq (1+\alpha L)^j \|\nabla l_i(\widetilde w_0^i)\|,
	\end{align*}
	which, combined with the fact that $\widetilde w_0^i = w$, finishes the proof.  
\end{proof}
The following lemma gives an upper bound on  the quantity $\big\|I - \prod_{j=0}^m(I - \alpha V_j)\big\|$ for all matrices  $V_j \in \mathbb{R}^{d\times d},j=0,...,m$ that satisfy $\|V_j\|\leq L$.
\begin{lemma}\label{le:prd}
	For all matrices $V_j \in \mathbb{R}^{d\times d}, j =0,..., m$ that satisfy $\|V_j\|\leq L$, we have 
	\begin{align*}
	\Big\|I - \prod_{j=0}^m(I - \alpha V_j)\Big\| \leq (1+\alpha L)^{m+1} - 1.
	\end{align*}
\end{lemma}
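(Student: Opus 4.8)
The plan is to prove this by induction on $m$, relying only on the submultiplicativity of the operator norm and the elementary bound $\|I - \alpha V_j\| \leq 1 + \alpha\|V_j\| \leq 1 + \alpha L$, valid for each $j$ because $\|V_j\| \leq L$. This is the same telescoping mechanism already used (in a less streamlined form) to control the products $V(N)$ and $R(N)$ in the proofs of Propositions~\ref{th:lipshiz} and~\ref{th:first-est}, now isolated as a clean scalar-type recursion on the norms.

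For the base case $m = 0$, I would note that $I - (I - \alpha V_0) = \alpha V_0$, so the left-hand side is $\alpha\|V_0\| \leq \alpha L = (1+\alpha L)^{0+1} - 1$, matching the claim. For the inductive step, suppose the bound holds with $m$ replaced by $m-1$. The key move is to peel off the final factor $(I - \alpha V_m)$ using the identity
\begin{align*}
I - \prod_{j=0}^{m}(I - \alpha V_j) = \Big(I - \prod_{j=0}^{m-1}(I - \alpha V_j)\Big) + \Big(\prod_{j=0}^{m-1}(I - \alpha V_j)\Big)\alpha V_m,
\end{align*}
which follows by adding and subtracting the partial product $\prod_{j=0}^{m-1}(I - \alpha V_j)$. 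Applying the triangle inequality, the first term is bounded by the induction hypothesis $(1+\alpha L)^m - 1$; for the second term I would use $\big\|\prod_{j=0}^{m-1}(I - \alpha V_j)\big\| \leq \prod_{j=0}^{m-1}\|I - \alpha V_j\| \leq (1+\alpha L)^m$ together with $\alpha\|V_m\| \leq \alpha L$. Summing the two contributions gives $(1+\alpha L)^m - 1 + \alpha L (1+\alpha L)^m = (1+\alpha L)^{m+1} - 1$, which closes the induction.

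The argument involves no genuine obstacle; the only point requiring mild care is choosing the correct telescoping decomposition so that the recursion multiplies the previous factor by exactly $(1+\alpha L)$ while the additive $-1$ persists unchanged across steps. Note also that the norm estimate is insensitive to the ordering convention of the product, since both the triangle inequality and submultiplicativity treat the left- and right-multiplication splits identically. The bound is sharp in the sense that it is attained in the commuting scalar case $V_j \equiv -L\,I$, where the product equals $(1+\alpha L)^{m+1}$.
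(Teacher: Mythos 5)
Your proof is correct, but it takes a different route from the paper's. The paper proves this lemma in one shot by fully expanding the product as an alternating multinomial sum,
\begin{align*}
\prod_{j=0}^m(I - \alpha V_j) = I - \sum_{j=0}^m \alpha V_j + \sum_{0\leq p < q \leq m} \alpha^2 V_p V_q + \cdots + (-1)^{m+1}\alpha^{m+1}\prod_{j=0}^m V_j,
\end{align*}
then bounding the order-$k$ terms by ${\rm C}_{m+1}^{k}(\alpha L)^k$ via $\|V_j\|\leq L$ and summing with the binomial theorem to get $(1+\alpha L)^{m+1}-1$. You instead run an induction on $m$, peeling off the last factor through the identity $I - \prod_{j=0}^{m}(I-\alpha V_j) = \bigl(I - \prod_{j=0}^{m-1}(I-\alpha V_j)\bigr) + \bigl(\prod_{j=0}^{m-1}(I-\alpha V_j)\bigr)\alpha V_m$, and closing the recursion with $(1+\alpha L)^m - 1 + \alpha L(1+\alpha L)^m = (1+\alpha L)^{m+1}-1$. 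Your version trades the paper's combinatorial bookkeeping (enumerating and counting all cross terms) for a two-line recursion, and it mirrors the peeling mechanism the paper itself uses for the quantities $V(m)$, $P_t$, and $R(m)$ in the surrounding propositions, so it is arguably more uniform with the rest of the analysis; the paper's expansion, on the other hand, makes the bound transparent as a termwise comparison with the scalar binomial series. Your sharpness remark is also correct: taking $V_j \equiv -L I$ gives $I - \prod_{j=0}^m (I-\alpha V_j) = \bigl(1-(1+\alpha L)^{m+1}\bigr)I$, whose norm attains the bound exactly.
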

\begin{proof}
	First note that the product $ \prod_{j=0}^m(I - \alpha V_j) $ can be expanded as  
	\begin{align}
	\prod_{j=0}^m(I - \alpha V_j) = I - \sum_{j=0}^m \alpha V_j + \sum_{0\leq p < q \leq m} \alpha^2 V_p V_q+\cdots+(-1)^{m+1} \alpha^{m+1}\prod_{j=0}^m V_j.\nonumber
	\end{align}
	Then, by using $\|V_j\|\leq L$ for $j=0,...,m$, we have 
	\begin{align}
	\Big\|I - \prod_{j=0}^m(I - \alpha V_j)\Big\|  \leq & \Big\|\sum_{j=0}^m \alpha V_j \Big\| + \Big\|\sum_{0\leq p < q \leq m} \alpha^2 V_pV_q \Big\| + \cdots + \Big\| \alpha^{m+1}\prod_{j=0}^m V_j\Big\| \nonumber
	\\\leq & {\rm C}^1_{m+1} \alpha L + {\rm C}_{m+1}^2 (\alpha L)^2 + \cdots + {\rm C}_{m+1}^{m+1} (\alpha L)^{m+1}\nonumber
	\\ = & (1+\alpha L)^{m+1} - 1, \nonumber
	\end{align}
	where the notion $C_n^k$ denotes the number of $k$-element subsets of a set of size $n$. Then, the proof is complete. 
\end{proof}
Recall the gradient { $\nabla \mathcal{L}_i(w) = \prod_{j=0}^{N-1}(I-\alpha \nabla^2 l_i(\widetilde w^i_{j}))\nabla l_i( \widetilde w^i _{N})$}, where {  $ \widetilde w_{j}^i, i\in \mathcal{I}, j=0,..., N$} are given by the gradient descent steps in \cref{gd_w} and $\widetilde w_{0}^i = w$ for all tasks $i \in \mathcal{I}$.  
Next, we provide an upper bound on the difference { $\|\nabla l_i(w) - \nabla \mathcal{L}_i(w)\|$}.
\begin{lemma}\label{le:fF}
	For any $i \in\mathcal{I}$ and $w \in \mathbb{R}^d$, we have 
	\begin{align*}
	\|\nabla l_i(w) - \nabla \mathcal{L}_i(w)\| \leq C_l \|\nabla l_i(w)\|,
	\end{align*}
	where $C_l$ is a positive constant given by 
	\begin{align}\label{eq:cfn}
	C_l = (1+\alpha L)^{2N} - 1 > 0.
	\end{align}
\end{lemma}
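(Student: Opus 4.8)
The plan is to telescope the difference by factoring out the Hessian-product matrix $P:=\prod_{j=0}^{N-1}(I-\alpha\nabla^2 l_i(\widetilde w_j^i))$, so that $\nabla\mathcal{L}_i(w)=P\,\nabla l_i(\widetilde w_N^i)$, and then to split the error into a ``gradient-displacement'' part and a ``Hessian-product'' part. Concretely, using $\widetilde w_0^i=w$, I would write
$$\nabla l_i(w)-\nabla\mathcal{L}_i(w)=\underbrace{\big(\nabla l_i(w)-\nabla l_i(\widetilde w_N^i)\big)}_{A}+\underbrace{(I-P)\nabla l_i(\widetilde w_N^i)}_{B}.$$
The two terms are then bounded separately, each in terms of $\|\nabla l_i(w)\|$, and finally recombined.

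For term $A$, I would use the $L$-smoothness of $\nabla l_i(\cdot)$ (item 2 of Assumption~\ref{assum:smooth}) to obtain $\|A\|\le L\|w-\widetilde w_N^i\|$, and then control the inner-loop displacement via the telescoping identity $w-\widetilde w_N^i=\sum_{j=0}^{N-1}(\widetilde w_j^i-\widetilde w_{j+1}^i)=\alpha\sum_{j=0}^{N-1}\nabla l_i(\widetilde w_j^i)$, which follows directly from the gradient-descent update in~\cref{gd_w}. Applying Lemma~\ref{le:jiw} to each summand and evaluating the geometric sum $\alpha\sum_{j=0}^{N-1}(1+\alpha L)^j=\frac{(1+\alpha L)^N-1}{L}$ then yields $\|A\|\le\big((1+\alpha L)^N-1\big)\|\nabla l_i(w)\|$. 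For term $B$, I would invoke Lemma~\ref{le:prd} with $V_j=\nabla^2 l_i(\widetilde w_j^i)$, whose norm is at most $L$ by item 2 of Assumption~\ref{assum:smooth}, to get $\|I-P\|\le(1+\alpha L)^N-1$, together with Lemma~\ref{le:jiw} to get $\|\nabla l_i(\widetilde w_N^i)\|\le(1+\alpha L)^N\|\nabla l_i(w)\|$, so that $\|B\|\le\big((1+\alpha L)^N-1\big)(1+\alpha L)^N\|\nabla l_i(w)\|$.

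Finally, I would add the two bounds and factor the resulting coefficient. Writing $x=(1+\alpha L)^N$, the sum of the two coefficients is $(x-1)+(x-1)x=(x-1)(1+x)=x^2-1=(1+\alpha L)^{2N}-1$, which is precisely $C_l$ as defined in~\cref{eq:cfn}, completing the proof.

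I expect the only genuine subtlety to be the choice of decomposition: a naive triangle inequality $\|\nabla l_i(w)-\nabla\mathcal{L}_i(w)\|\le\|\nabla l_i(w)\|+\|P\|\,\|\nabla l_i(\widetilde w_N^i)\|$ produces the loose constant $1+(1+\alpha L)^{2N}$, whereas the additive split into $A$ and $B$ above yields exactly the telescoped value $(1+\alpha L)^{2N}-1$. Everything else (the geometric summation and the matrix-product norm estimate) is routine given Lemmas~\ref{le:prd} and~\ref{le:jiw} already in hand.
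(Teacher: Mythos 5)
Your proof is correct and reaches exactly the constant $C_l=(1+\alpha L)^{2N}-1$, but by a genuinely different route than the paper. The paper's proof invokes the mean value theorem for vector-valued functions (with convex weights $r_t$ and intermediate points $w_t'$) to rewrite $\nabla l_i(\widetilde w_N^i)$ as $\big(I-\alpha\sum_{t=1}^d r_t\nabla^2 l_i(w_t')\big)\nabla l_i(w)-\alpha\sum_{t=1}^d r_t\nabla^2 l_i(w_t')\sum_{j=1}^{N-1}\nabla l_i\big(\widetilde w_j^i\big)$; it then treats $K(N)\big(I-\alpha\sum_{t} r_t\nabla^2 l_i(w_t')\big)$ as a single product of $N+1$ matrices, applies Lemma~\ref{le:prd} to get the coefficient $(1+\alpha L)^{N+1}-1$, bounds the remaining cross term by $(1+\alpha L)^{N+1}\big((1+\alpha L)^{N-1}-1\big)$ via Lemma~\ref{le:jiw}, and the two coefficients sum to $(1+\alpha L)^{2N}-1$. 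Your additive split into $A=\nabla l_i(w)-\nabla l_i(\widetilde w_N^i)$ and $B=(I-P)\nabla l_i(\widetilde w_N^i)$ instead needs only the $L$-smoothness in Assumption~\ref{assum:smooth}, the telescoped update $w-\widetilde w_N^i=\alpha\sum_{j=0}^{N-1}\nabla l_i(\widetilde w_j^i)$, and Lemmas~\ref{le:jiw} and~\ref{le:prd} applied to the product of $N$ (rather than $N+1$) matrices; your algebra $(x-1)+x(x-1)=x^2-1$ then lands on the identical constant. What your route buys is that it sidesteps the vector-valued MVT, which is the one delicate ingredient in the paper's argument (the naive mean value theorem is false for vector-valued maps, which is precisely why the paper must cite a specialized result producing the weights $r_t$); your remark that the crude triangle inequality would only give $1+(1+\alpha L)^{2N}$ is also the right diagnosis of why the decomposition, not the estimates, is the real content here. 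What the paper's device buys is mainly uniformity of technique: the same MVT manipulation is reused verbatim for the finite-sum analogue (Lemma~\ref{tiis}), though your $A$/$B$ split adapts to that setting equally well — bounding $\|\nabla l_{T_i}(\widetilde w_N^i)\|$ by smoothness and the inner-loop displacement — and reproduces those constants too.
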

\begin{proof}
	First note that $\widetilde w_N^i$ can be rewritten as $\widetilde w_N^i = w - \alpha \sum_{j=0}^{N-1} \nabla l_i\big(\widetilde w_j^i\big)$. Then, based on  the mean value theorem (MVT) for vector-valued functions~\cite{mcleod1965mean}, we have, there exist constants $r_t, t=1,...,d$ satisfying $\sum_{t=1}^d r_t =1$ and vectors $w_t^\prime\in\mathbb{R}^d, t=1,...,d$ such that     
	\begin{align}\label{mvts}
	\nabla l_i( \widetilde w^i _{N}) =& \nabla l_i\Big( w - \alpha \sum_{j=0}^{N-1} \nabla l_i\big(\widetilde w_j^i\big)\Big)= \nabla l_i(w) + \Big(\sum_{t=1}^dr_t\nabla^2 l_i (w_t^\prime)\Big) \Big(-\alpha \sum_{j=0}^{N-1} \nabla l_i\big(\widetilde w_j^i\big)\Big) \nonumber
	\\ = &  \Big(I- \alpha\sum_{t=1}^dr_t\nabla^2 l_i (w_t^\prime)\Big)\nabla l_i(w) - \alpha \sum_{t=1}^dr_t\nabla^2 l_i (w_t^\prime) \sum_{j=1}^{N-1} \nabla l_i\big(\widetilde w_j^i\big).
	\end{align}
	For simplicity, we define  $K(N):=  \prod_{j=0}^{N-1}(I-\alpha \nabla^2 l_i(\widetilde w^i_{j}))$. Then, using~\cref{mvts} yields 
	\begin{small}
	\begin{align}
	\|\nabla &l_i(w) - \nabla \mathcal{L}_i(w)\| = \|\nabla l_i(w) - K(N)\nabla l_i(\widetilde w_N^i)\| \nonumber
	\\ =& \Big\|\nabla l_i(w) - K(N)\Big(I- \alpha\sum_{t=1}^dr_t\nabla^2 l_i (w_t^\prime)\Big)\nabla l_i(w)  + \alpha K(N)\sum_{t=1}^dr_t\nabla^2 l_i (w_t^\prime)\sum_{j=1}^{N-1} \nabla l_i\big(\widetilde w_j^i\big)\Big\| \nonumber
	\\ \leq & \Big\|\Big(I - K(N)\Big(I- \alpha\sum_{t=1}^dr_t\nabla^2 l_i (w_t^\prime)\Big)\Big)\nabla l_i(w) \Big\| + \Big\| \alpha K(N)\sum_{t=1}^dr_t\nabla^2 l_i (w_t^\prime)\sum_{j=1}^{N-1} \nabla l_i\big(\widetilde w_j^i\big)\Big\| \nonumber
	\\ \overset{(i)}\leq & \Big\|\Big(I - K(N)\Big(I- \alpha\sum_{t=1}^dr_t\nabla^2 l_i (w_t^\prime)\Big)\Big)\nabla l_i(w) \Big\| + \alpha L (1+\alpha L)^N \sum_{j=1}^{N-1}  \Big\| \nabla l_i\big(\widetilde w_j^i\big)\Big\| \nonumber
	\\\overset{(ii)}\leq & \Big\|I - K(N)\Big(I- \alpha\sum_{t=1}^dr_t\nabla^2 l_i (w_t^\prime)\Big)\Big\|\|\nabla l_i(w)\| + \alpha L (1+\alpha L)^N \sum_{j=1}^{N-1}  (1+\alpha L)^j \|\nabla l_i(w)\| \nonumber
	\\\overset{(iii)}\leq & ((1+\alpha L)^{N+1}-1)\|\nabla l_i(w)\| + (1+\alpha L)^{N+1} ((1+\alpha L)^{N-1}-1)\|\nabla l_i(w)\|  \nonumber
	\\ = & ((1+\alpha L)^{2N} - 1)\|\nabla l_i(w)\|,\nonumber
	\end{align}
	\end{small}
\hspace{-0.12cm}	where (i) follows from the fact that $\|\nabla^2 l_i(u)\| \leq L$ for any $u\in \mathbb{R}^d$ and $\sum_{t=1}^d r_t =1$,  and the inequality that $\|\sum_{j=1} ^n a_j\|\leq \sum_{j=1} ^n\|a_j\|$,  (ii) follows from  Lemma~\ref{le:jiw}, and (iii) follows from Lemma~\ref{le:prd}.
\end{proof}

Recall that the expected value of the gradient of the loss $\nabla l(w):=\mathbb{E}_{i\sim p(\mathcal{T})} \nabla l_i(w)$ and the objective function $\nabla \mathcal{L}(w): = \nabla \mathcal{L}_i(w)$. Based on the above lemmas, we next provide an upper bound on $\|\nabla l(w)\|$ using $\|\nabla \mathcal{L}(w)\|$. 
\begin{lemma}\label{le:lL}
	For any $w\in\mathbb{R}^d$, we have 
	\begin{align*}
	\|\nabla l(w)\| \leq \frac{1}{1-C_l} \|\nabla \mathcal{L}(w)\| + \frac{C_l}{1-C_l} \sigma,
	\end{align*}
	where the constant $C_l$ is given by 
     \begin{align*}
	C_l = (1+\alpha L)^{2N} - 1.
	\end{align*}
\end{lemma}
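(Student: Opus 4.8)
The plan is to bound $\|\nabla l(w)\|$ by first splitting it via the triangle inequality into $\|\nabla \mathcal{L}(w)\|$ plus the discrepancy $\|\nabla l(w) - \nabla \mathcal{L}(w)\|$, and then controlling that discrepancy using the per-task estimate of Lemma~\ref{le:fF} together with the bounded-variance Assumption~\ref{a2}. The key structural fact is that both $\nabla l(w) = \mathbb{E}_i \nabla l_i(w)$ and $\nabla \mathcal{L}(w) = \mathbb{E}_i \nabla \mathcal{L}_i(w)$ are expectations over $i \sim p(\mathcal{T})$, so Jensen's inequality lets me pass from the aggregate difference to the expected per-task difference, where Lemma~\ref{le:fF} applies.

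Concretely, first I would write $\|\nabla l(w)\| \leq \|\nabla \mathcal{L}(w)\| + \|\nabla l(w) - \nabla \mathcal{L}(w)\|$. For the second term, by Jensen's inequality and then Lemma~\ref{le:fF} I would obtain
\begin{align*}
\|\nabla l(w) - \nabla \mathcal{L}(w)\| = \big\|\mathbb{E}_i(\nabla l_i(w) - \nabla \mathcal{L}_i(w))\big\| \leq \mathbb{E}_i\|\nabla l_i(w) - \nabla \mathcal{L}_i(w)\| \leq C_l\, \mathbb{E}_i\|\nabla l_i(w)\|.
\end{align*}
The next step is to bound $\mathbb{E}_i\|\nabla l_i(w)\|$ back in terms of $\|\nabla l(w)\|$: by the triangle inequality and then Cauchy--Schwarz (or Jensen) applied to Assumption~\ref{a2}, I would get $\mathbb{E}_i\|\nabla l_i(w)\| \leq \mathbb{E}_i\|\nabla l_i(w) - \nabla l(w)\| + \|\nabla l(w)\| \leq \sigma + \|\nabla l(w)\|$, using $\mathbb{E}_i\|\nabla l_i(w) - \nabla l(w)\| \leq \sqrt{\mathbb{E}_i\|\nabla l_i(w) - \nabla l(w)\|^2} \leq \sigma$.

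Combining these gives $\|\nabla l(w)\| \leq \|\nabla \mathcal{L}(w)\| + C_l\sigma + C_l\|\nabla l(w)\|$, and rearranging to isolate $\|\nabla l(w)\|$ on the left yields the claimed bound, provided $C_l < 1$ so that the factor $1 - C_l$ is positive and the division is valid. This positivity is precisely guaranteed by the standing stepsize condition $\alpha < (2^{1/(2N)}-1)/L$, since $C_l = (1+\alpha L)^{2N} - 1 < 1$ exactly under that choice. I do not expect a serious obstacle here: the argument is a short chain of triangle inequalities, Jensen, and the variance bound, and the only subtlety worth stating explicitly is that the self-referential appearance of $\|\nabla l(w)\|$ on both sides is resolved by the contraction factor $C_l < 1$.
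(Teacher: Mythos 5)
Your proposal is correct and follows essentially the same route as the paper's proof: triangle inequality to split off $\|\nabla\mathcal{L}(w)\|$, Jensen to reduce to the per-task difference, Lemma~\ref{le:fF} to bound it by $C_l\,\mathbb{E}_i\|\nabla l_i(w)\|$, Assumption~\ref{a2} to return to $\|\nabla l(w)\| + \sigma$, and rearrangement. Your explicit remark that the rearrangement requires $C_l<1$, guaranteed by the stepsize condition $\alpha < (2^{1/(2N)}-1)/L$, is a point the paper leaves implicit and is worth stating.
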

\begin{proof}
	Based on the definition of $\nabla l(w)$, we have 
	\begin{align}
	\|\nabla l(w)\| =& \|\mathbb{E}_{i\sim p(\mathcal{T})} (\nabla l_i(w) -\nabla \mathcal{L}_i(w) +\nabla \mathcal{L}_i(w) )\| \nonumber
	\\\leq& \|\mathbb{E}_{i\sim p(\mathcal{T})} \nabla \mathcal{L}_i(w) \| + \|\mathbb{E}_{i\sim p(\mathcal{T})} (\nabla l_i(w) -\nabla \mathcal{L}_i(w)  ) \|   \nonumber
	\\\leq & \|\nabla \mathcal{L}(w) \| + \mathbb{E}_{i\sim p(\mathcal{T})} \|\nabla l_i(w) -\nabla \mathcal{L}_i(w)   \|   \nonumber
	\\\overset{(i)}\leq & \|\nabla \mathcal{L}(w) \| + C_l\mathbb{E}_{i\sim p(\mathcal{T})}\|\nabla l_i(w)   \|  \nonumber
	\\ \overset{(ii)} \leq & \|\nabla \mathcal{L}(w) \| +  C_l(\|\nabla l(w)   \| + \sigma), \nonumber
	\end{align}
	where (i) follows from Lemma~\ref{le:fF}, and (ii) follows from Assumption~\ref{a2}. Then, rearranging the above inequality completes the proof.  
\end{proof}
Recall from~\cref{hatlw} that we choose the meta stepsize $\beta_k = \frac{1}{C_\beta \widehat L_{w_k}} $, where $C_\beta$ is a positive constant and { $ \widehat L_{w_k} = (1+\alpha L)^{2N}L + C_\mathcal{L} \frac{1}{|B_k^\prime|}\sum_{i\in B_k^\prime}\|\nabla l_i(w_k; D_{L_k}^i)\|$}. Using an approach similar to Lemma 4.11 in~\cite{fallah2020convergence}, we establish the following lemma to provide the first- and second-moment bounds for $\beta_k$.
%
\begin{lemma}\label{le:xiaodege} 
Suppose that  Assumptions~\ref{assum:smooth},~\ref{a2} and~\ref{a3} hold. 
	Set the meta stepsize $\beta_k = \frac{1}{C_\beta \widehat L_{w_k}} $ with  $\widehat L_{w_k}$  given by~\cref{hatlw}, where  $|B_k^\prime| > \frac{4C^2_{\mathcal{L}}\sigma^2}{3(1+\alpha L)^{4N}L^2}$ and $|D_{L_k}^i| > \frac{64\sigma^2_g C_\mathcal{L}^2}{(1+\alpha L)^{4N}L^2}$ for all $i \in B_k^\prime$. Then, conditioning on $w_k$, we have 
\begin{align*}
\mathbb{E} \beta_k  \geq  \frac{4}{C_\beta} \frac{1}{5 L_{w_k}},\quad \mathbb{E}\beta^2_k  \leq  \frac{4}{C_\beta^2} \frac{1}{L^2_{w_k}}, 
\end{align*}
where $L_{w_k} = (1+\alpha L)^{2N}L + C_\mathcal{L} \mathbb{E}_{i\sim p(\mathcal{T})}\|\nabla l_i(w_k)\|$ with  $C_\mathcal{L}$ given in~\cref{clcl}. 
\end{lemma}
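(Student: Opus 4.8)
The plan is to reduce both bounds to a concentration statement about the empirical average $\bar X := \frac{1}{|B_k^\prime|}\sum_{i\in B_k^\prime}\|\nabla l_i(w_k; D_{L_k}^i)\|$ around its population counterpart $G := \mathbb{E}_{i\sim p(\mathcal{T})}\|\nabla l_i(w_k)\|$. Writing $A := (1+\alpha L)^{2N}L$ for the deterministic part, I would record that $\widehat L_{w_k} = A + C_\mathcal{L}\bar X$ and $L_{w_k}=A+C_\mathcal{L}G$, and that, conditioned on $w_k$, the summands $Y_i:=\|\nabla l_i(w_k;D_{L_k}^i)\|$ (over random task $i$ and random data $D_{L_k}^i$) are i.i.d. Two elementary facts drive everything: by Jensen over the data $\mathbb{E}Y_i\geq G$, while the variance bound from Assumption~\ref{a3} together with $\sqrt{a^2+b^2}\leq a+b$ gives $\mathbb{E}Y_i\leq G+\sigma_g/\sqrt{|D_{L_k}^i|}$; the batch condition $|D_{L_k}^i|>64\sigma_g^2C_\mathcal{L}^2/((1+\alpha L)^{4N}L^2)$ is exactly what makes $C_\mathcal{L}(\mathbb{E}Y_i-G)<A/8$.

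For the first-moment (lower) bound, I would use convexity of $t\mapsto (A+C_\mathcal{L}t)^{-1}$ and Jensen's inequality to get $\mathbb{E}\beta_k = \frac{1}{C_\beta}\mathbb{E}\frac{1}{A+C_\mathcal{L}\bar X}\geq \frac{1}{C_\beta(A+C_\mathcal{L}\mathbb{E}\bar X)}$. Since $\mathbb{E}\bar X=\mathbb{E}Y_i\leq G+\sigma_g/\sqrt{|D_{L_k}^i|}$, the data-batch condition yields $A+C_\mathcal{L}\mathbb{E}\bar X\leq L_{w_k}+A/8\leq \tfrac{9}{8}L_{w_k}$, and hence $\mathbb{E}\beta_k\geq \frac{8}{9C_\beta L_{w_k}}\geq \frac{4}{5C_\beta L_{w_k}}$, which is the claimed bound.

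The hard part will be the second-moment (upper) bound, because the crude deterministic floor $\widehat L_{w_k}\geq A$ only gives $\widehat L_{w_k}^{-2}\leq A^{-2}$, and when the gradient term $C_\mathcal{L}G$ dominates $A$ this is far larger than the target $L_{w_k}^{-2}$; genuine concentration of $\bar X$ is needed. I would split on the event $\mathcal{E}=\{\widehat L_{w_k}\geq \tfrac12 L_{w_k}\}$. On $\mathcal{E}$ one has $\widehat L_{w_k}^{-2}\leq 4L_{w_k}^{-2}$ directly. On $\mathcal{E}^c$ I would bound $\widehat L_{w_k}^{-2}\leq A^{-2}$ and control the probability by Chebyshev: since $\mathbb{E}\bar X\geq G$, the event $\mathcal{E}^c$ forces a downward deviation of $\bar X$ of order $G-\tfrac{A}{C_\mathcal{L}}$ (relevant only when $A<C_\mathcal{L}G$), so $P(\mathcal{E}^c)\leq \mathrm{Var}(\bar X)/(\mathrm{gap})^2$ with $\mathrm{Var}(\bar X)=\frac{1}{|B_k^\prime|}\mathrm{Var}(Y)$ and $\mathrm{Var}(Y)\leq \mathbb{E}Y^2\leq \mathbb{E}_i\|\nabla l_i(w_k)\|^2+\sigma_g^2/|D_{L_k}^i|\leq G^2+\sigma^2+\sigma_g^2/|D_{L_k}^i|$ (using $\|\nabla l(w_k)\|\leq G$ and Assumption~\ref{a2}). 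Plugging in the task-batch condition $|B_k^\prime|>4\sigma^2C_\mathcal{L}^2/(3(1+\alpha L)^{4N}L^2)$ and the data-batch condition, the contribution $A^{-2}P(\mathcal{E}^c)$ is forced below $L_{w_k}^{-2}$ up to a constant, so the two pieces sum to at most $4L_{w_k}^{-2}$; multiplying by $C_\beta^{-2}$ finishes the claim. I expect the delicate bookkeeping to be in choosing the split threshold and chasing the numerical constants so that they land exactly at the stated $4$ and $4/5$; the structural inequalities themselves are routine.
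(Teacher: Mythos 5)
Your first-moment argument is correct and is essentially the paper's own: Jensen's inequality applied to $1/\widehat L_{w_k}$, the bias estimate $G\le\mathbb{E}[Y_i]\le G+\sigma_g/\sqrt{|D_{L_k}^i|}$, and the condition on $|D_{L_k}^i|$, which forces $C_\mathcal{L}(\mathbb{E}[Y_i]-G)<A/8$. The genuine gap is in the second moment, and it sits exactly where you put the load of the proof: the bound $\mathrm{Var}(Y)\le\mathbb{E}[Y^2]\le G^2+\sigma^2+\sigma_g^2/|D_{L_k}^i|$ is too lossy, because the $G^2$ term is controlled by nothing in the hypotheses. The batch-size conditions involve only $\sigma^2$ and $\sigma_g^2$, while $G=\mathbb{E}_{i}\|\nabla l_i(w_k)\|$ is unbounded --- coping with unbounded gradient norms is the entire reason the lemma uses an adaptive stepsize. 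Concretely, with your split at $\widehat L_{w_k}\ge\tfrac12 L_{w_k}$ the deviation gap is $L_{w_k}/(2C_\mathcal{L})$, so Chebyshev with your variance bound gives, in the regime where $C_\mathcal{L}G$ dominates, $P(\mathcal{E}^c)\lesssim\frac{4C_\mathcal{L}^2G^2}{|B_k^\prime|L_{w_k}^2}\approx\frac{4}{|B_k^\prime|}$, whence $A^{-2}P(\mathcal{E}^c)\approx\frac{4}{A^2|B_k^\prime|}$; this is \emph{not} $O(L_{w_k}^{-2})$ once $C_\mathcal{L}G\gg A\sqrt{|B_k^\prime|}$, so the two pieces do not sum to a constant multiple of $L_{w_k}^{-2}$ and the argument fails precisely in the regime the lemma is designed for. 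The repair is to center the variance rather than drop the mean: by the reverse triangle inequality (centering at $\|\nabla l(w_k)\|$) and unbiasedness of the batch gradient given $i$,
\begin{align*}
\mathrm{Var}(Y)&\le\mathbb{E}\big\|\nabla l_i(w_k;D_{L_k}^i)-\nabla l(w_k)\big\|^2\\
&=\mathbb{E}\big\|\nabla l_i(w_k;D_{L_k}^i)-\nabla l_i(w_k)\big\|^2+\mathbb{E}_{i}\big\|\nabla l_i(w_k)-\nabla l(w_k)\big\|^2\le\frac{\sigma_g^2}{|D_{L_k}^i|}+\sigma^2,
\end{align*}
using Assumptions~\ref{a3} and~\ref{a2}. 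This $G$-free bound is exactly the one the paper's proof uses for $\sigma_\beta^2$.

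Even after this repair, your route does not reach the stated constant $4$. The corrected split-and-Chebyshev argument yields $\mathbb{E}[\widehat L_{w_k}^{-2}]\le\big(\theta^{-2}+c(1-\theta)^{-2}\big)L_{w_k}^{-2}$ with $c=\frac{C_\mathcal{L}^2(\sigma^2+\sigma_g^2/|D_{L_k}^i|)}{A^2|B_k^\prime|}<\tfrac34+\tfrac1{64}$ under the stated batch conditions, and optimizing the threshold $\theta$ gives at best $(1+c^{1/3})^3\approx 7$, not $4$. The paper avoids this loss by invoking the mean--variance inequality of Fallah et al.\ (their Lemma 4.11): for a nonnegative random variable $X$ with mean $\mu$ and variance $s^2$, $\mathbb{E}[(a+X)^{-2}]\le\frac{s^2/a^2+\mu^2/(a+\mu)^2}{s^2+\mu^2}$. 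There the ``bad event'' carries weight $\frac{s^2}{s^2+\mu^2}$, which shrinks as the mean (the gradient-norm term) grows, instead of a threshold-dependent Chebyshev weight; combined with $\mathbb{E}\widehat L_{w_k}\ge L_{w_k}$ and the variance bound above, it lands at $25/8<4$. So the ``delicate bookkeeping'' you defer is not bookkeeping: it requires both the corrected variance bound and this sharper one-sided inequality.
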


\begin{proof}
	Let $\widetilde L_{w_k} = 4L +\frac{4C_\mathcal{L}}{(1+\alpha L)^{2N}} \frac{1}{|B_k^\prime|}\sum_{i\in B_k^\prime}\|\nabla l_i(w_k;  D_{L_k}^i)\|$. Note that 
	  $|B_k^\prime| > \frac{4C^2_{\mathcal{L}}\sigma^2}{3(1+\alpha L)^{4N}L^2}$ and $|D_{L_k}^i| > \frac{64\sigma^2_g C_\mathcal{L}^2}{(1+\alpha L)^{4N}L^2}, \,i \in B_k^\prime$. Then, using an approach similar to (61) in \cite{fallah2020convergence} and conditioning on $w_k$, we have 
	\begin{align}\label{1toL}
	\mathbb{E} \Big(  \frac{1}{\widetilde L^2_{w_k} }\Big) \leq \frac{\sigma_\beta^2/(4L)^2 + \mu_\beta^2/(\mu_\beta)^2}{\sigma_\beta^2 + \mu_\beta^2},
	\end{align}
	where $\sigma^2_\beta$ and $\mu_\beta$ are the variance and mean of  $\frac{4C_\mathcal{L}}{(1+\alpha L)^{2N}} \frac{1}{|B_k^\prime|}\sum_{i\in B_k^\prime}\|\nabla l_i(w_k; D_{L_k}^i)\|$. Using an approach similar to (62) in \cite{fallah2020convergence}, conditioning on $w_k$ and using $|D_{L_k}^i| > \frac{64\sigma^2_g C_\mathcal{L}^2}{(1+\alpha L)^{4N}L^2}$, we have 
	\begin{align}\label{lluu}
	\frac{C_\mathcal{L}}{(1+\alpha L)^{2N}} \mathbb{E}_i\|\nabla l_i(w_k)\| - L \leq \mu_\beta  \leq \frac{C_\mathcal{L}}{(1+\alpha L)^{2N}} \mathbb{E}_i\|\nabla l_i(w_k)\| + L,
	\end{align}
	which implies that $\mu_\beta +5L \geq \frac{4}{(1+\alpha L)^{2N}}L_{w_k}$, and thus using \cref{1toL} yields
	\begin{align}\label{lopops}
	\frac{16}{(1+\alpha L)^{4N}}L^2_{w_k}	\mathbb{E} \Big(  \frac{1}{\widetilde L^2_{w_k} }\Big) \leq \frac{\mu_\beta^2(25/16+ \sigma_\beta^2/(8L^2))+25\sigma_\beta^2/8}{\sigma_\beta^2 + \mu_\beta^2 }.
	\end{align}
	Furthermore, conditioning on $w_k$,  $\sigma_\beta$ is bounded by 
	\begin{align}\label{signbeta}
	\sigma_\beta^2 =& \frac{16 C^2_\mathcal{L}}{(1+\alpha L)^{4N}|B^\prime_k|} \text{Var} (\|\nabla l_i(w_k; D_{L_k}^i)\|) \nonumber
	\\ \leq & \frac{16 C^2_\mathcal{L}}{(1+\alpha L)^{4N}|B^\prime_k|} \Big(\sigma^2 + \frac{\sigma_g^2}{|D_{L_k}^i|}\Big) \nonumber
	\\\overset{(i)}\leq & \frac{16 C^2_\mathcal{L}\sigma^2}{(1+\alpha L)^{4N}|B^\prime_k|}  + \frac{L^2}{4|B_k^\prime|} \overset{(ii)}\leq  12L^2 + \frac{1}{4}L^2 < \frac{25}{2} L^2,
	\end{align} 
	where (i) follows from $|D_{L_k}^i| > \frac{64\sigma^2_g C_\mathcal{L}^2}{(1+\alpha L)^{4N}L^2}, \,i \in B_k^\prime$ and (ii) follows from $|B_k^\prime| > \frac{4C^2_{\mathcal{L}}\sigma^2}{3(1+\alpha L)^{4N}L^2}$ and $|B_k^\prime| \geq 1$. Then, plugging \cref{signbeta} in~\cref{lopops}, we then have 
$	\frac{16}{(1+\alpha L)^{4N}}L^2_{w_k}	\mathbb{E} \Big(  \frac{1}{\widetilde L^2_{w_k} }\Big) \leq \frac{25}{8}.$
	Then, noting that $\beta_k  = \frac{4}{C_\beta (1+\alpha L)^{2N} \widetilde L_{w_k}}$, using the above inequality and conditioning on $w_k$,  we have 
	\begin{align}\label{secondmm}
	\mathbb{E}\beta^2_k =  \frac{16}{C^2_\beta (1+\alpha L)^{4N}}\mathbb{E}  \left(\frac{1}{ \widetilde L^2_{w_k}} \right) \leq \frac{25}{8C_\beta^2} \frac{1}{L^2_{w_k}} < \frac{4}{C_\beta^2} \frac{1}{L^2_{w_k}}.
	\end{align}
	In addition, by Jensen's inequality and conditioning on $w_k$, we have 
	\begin{align}\label{onemo}
	\mathbb{E} \beta_k =&\frac{4}{C_\beta (1+\alpha L)^{2N} } \mathbb{E}\Big(\frac{1}{ \widetilde L_{w_k}}\Big) \geq  \frac{4}{C_\beta (1+\alpha L)^{2N} } \frac{1}{ \mathbb{E}\widetilde L_{w_k}} =  \frac{4}{C_\beta (1+\alpha L)^{2N} } \frac{1}{4L + \mu_\beta} \nonumber
	\\\overset{(i)}\geq &   \frac{4}{C_\beta  } \frac{1}{4L(1+\alpha L)^{2N} +L_{w_k}}  \overset{(ii)}\geq \frac{4}{C_\beta} \frac{1}{5 L_{w_k}}, 
	\end{align}
	where (i) follows from \cref{lluu} and (ii) follows from the fact $L_{w_k} >  (1+\alpha L)^{2N}L$.
\end{proof}

\section{Auxiliary Lemmas for MAML in Finite-Sum Case}\label{aux:lemma_finite}
In this section, we provide some useful lemmas to prove the propositions in  \Cref{theory:offline} on properties of the meta gradient and the main results Theorem~\ref{mainth:offline} and Corollary~\ref{co:mainoffline}.

The following lemma provides an upper bound on $\|l_{S_i}(\widetilde w^i_j)\|$  for all $i\in\mathcal{I}$ and $j=0,..., N$, where $\widetilde w_j^i$ is defined by~\cref{innerfinite} with $\widetilde w_0^i=w$. 
\begin{lemma}\label{finite:gbd}
	For any $i\in\mathcal{I}$,  $j=0,...,N$ and $w \in \mathbb{R}^d$, we have 
	\begin{align*}
	\|\nabla l_{S_i}(\widetilde w_j^i)\| \leq (1+\alpha L)^j \|\nabla l_{S_i}(w)\|.
	\end{align*}
\end{lemma}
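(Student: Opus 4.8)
The plan is to mirror the argument used for the resampling-case analog \Cref{le:jiw}, since the finite-sum inner loop in \cref{innerfinite} has exactly the same structure as the resampling inner loop except that the gradient of the individual loss $l_i$ is replaced by the gradient of the support loss $l_{S_i}$. First I would establish a one-step recursion relating consecutive inner iterates: for any $m \geq 1$, write $\nabla l_{S_i}(\widetilde w_m^i) = \big(\nabla l_{S_i}(\widetilde w_m^i) - \nabla l_{S_i}(\widetilde w_{m-1}^i)\big) + \nabla l_{S_i}(\widetilde w_{m-1}^i)$ and apply the triangle inequality to split the norm into a difference term and the previous-step norm.

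The key step is to control the difference term $\|\nabla l_{S_i}(\widetilde w_m^i) - \nabla l_{S_i}(\widetilde w_{m-1}^i)\|$. Here I would invoke the $L$-Lipschitz continuity of $\nabla l_{S_i}$ guaranteed by Assumption~\ref{assum:smoothoff}, giving $\|\nabla l_{S_i}(\widetilde w_m^i) - \nabla l_{S_i}(\widetilde w_{m-1}^i)\| \leq L \|\widetilde w_m^i - \widetilde w_{m-1}^i\|$. The inner-loop update \cref{innerfinite} then yields $\widetilde w_m^i - \widetilde w_{m-1}^i = -\alpha \nabla l_{S_i}(\widetilde w_{m-1}^i)$, so that $\|\widetilde w_m^i - \widetilde w_{m-1}^i\| = \alpha \|\nabla l_{S_i}(\widetilde w_{m-1}^i)\|$. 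Combining these gives the one-step expansion estimate $\|\nabla l_{S_i}(\widetilde w_m^i)\| \leq (1 + \alpha L)\|\nabla l_{S_i}(\widetilde w_{m-1}^i)\|$.

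Finally I would telescope this one-step bound over $m$ from $1$ to $j$ to obtain $\|\nabla l_{S_i}(\widetilde w_j^i)\| \leq (1+\alpha L)^j \|\nabla l_{S_i}(\widetilde w_0^i)\|$, and conclude by using the initialization $\widetilde w_0^i = w$ from \cref{innerfinite}. I do not anticipate any genuine obstacle: the statement is a verbatim transcription of \Cref{le:jiw} with $l_i$ replaced by $l_{S_i}$, and the only points to verify are that the relevant Lipschitz property (now supplied by Assumption~\ref{assum:smoothoff} rather than Assumption~\ref{assum:smooth}) and the correct inner-loop recursion (now \cref{innerfinite} rather than \cref{gd_w}) are used. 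Since both hold identically in the finite-sum setting, the argument carries over without modification, so the writing effort lies only in substituting the appropriate symbols and citing the finite-sum assumption.
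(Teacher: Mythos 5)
Your proposal is correct and matches the paper's argument exactly: the paper simply states that the proof is analogous to \Cref{le:jiw}, and your write-up is precisely that analogue, with the triangle-inequality decomposition, the $L$-Lipschitz bound from Assumption~\ref{assum:smoothoff}, the update \cref{innerfinite} giving the one-step factor $(1+\alpha L)$, and telescoping from the initialization $\widetilde w_0^i = w$. Nothing is missing.
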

\begin{proof}
	The proof is similar to that of Lemma~\ref{le:jiw}, and thus omitted.  
\end{proof}
We next provide a bound on  $\|\nabla l_{T_i}(w) - \nabla \mathcal{L}_i(w) \|$, where $$\nabla \mathcal{L}_i(w) = \prod_{j=0}^{N-1}(I - \alpha \nabla^2 l_{S_i}(w_{j}^i))\nabla l_{T_i}(w_{N}^i).$$
\begin{lemma}\label{tiis}
	For any $i \in\mathcal{I}$ and $w \in \mathbb{R}^d$, we have 
	\begin{align*}
	\|\nabla l_{T_i}(w) - \nabla \mathcal{L}_i(w)\| \leq \big( (1+&\alpha L)^N -1  \big)\|\nabla l_{T_i}(w)\| 
	\\&+ (1+\alpha L)^N  \big( (1+\alpha L)^N -1  \big) \|\nabla l_{S_i}(w)\|.
	\end{align*}
\end{lemma}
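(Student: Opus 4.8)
\textbf{Proof proposal for Lemma~\ref{tiis}.}

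The plan is to mirror the structure of the resampling-case argument in Lemma~\ref{le:fF}, adapting it to the finite-sum setting where the inner-loop loss $l_{S_i}$ and the outer-loop loss $l_{T_i}$ differ. The key difference is that the Hessians appearing in the product $\prod_{j=0}^{N-1}(I-\alpha\nabla^2 l_{S_i}(\widetilde w_j^i))$ are those of $l_{S_i}$, while the gradient factor is $\nabla l_{T_i}(\widetilde w_N^i)$, so I must carefully track which loss function each term comes from. First I would recall that the inner iterate can be written as $\widetilde w_N^i = w - \alpha\sum_{j=0}^{N-1}\nabla l_{S_i}(\widetilde w_j^i)$, and apply the mean value theorem for vector-valued functions to $\nabla l_{T_i}(\widetilde w_N^i)$, expressing it as $\nabla l_{T_i}(w)$ plus a correction term involving $\nabla^2 l_{T_i}$ evaluated at intermediate points times the inner-loop displacement $-\alpha\sum_{j=0}^{N-1}\nabla l_{S_i}(\widetilde w_j^i)$.

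Next I would decompose the target quantity using the triangle inequality into two pieces. Denoting $K(N):=\prod_{j=0}^{N-1}(I-\alpha\nabla^2 l_{S_i}(\widetilde w_j^i))$, I would write
\begin{align*}
\|\nabla l_{T_i}(w) - K(N)\nabla l_{T_i}(\widetilde w_N^i)\| \leq &\Big\|\Big(I - K(N)\Big(I-\alpha\textstyle\sum_t r_t\nabla^2 l_{T_i}(w_t')\Big)\Big)\nabla l_{T_i}(w)\Big\| \\
&+ \alpha\Big\|K(N)\textstyle\sum_t r_t\nabla^2 l_{T_i}(w_t')\textstyle\sum_{j=0}^{N-1}\nabla l_{S_i}(\widetilde w_j^i)\Big\|,
\end{align*}
where the $r_t$ are the MVT coefficients summing to one and the $w_t'$ are intermediate points. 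For the first term I would bound $\|I - K(N)(I-\alpha\sum_t r_t\nabla^2 l_{T_i}(w_t'))\|$ by applying Lemma~\ref{le:prd} to the product of $N+1$ matrices each within $I$ of a matrix of norm at most $L$, giving $(1+\alpha L)^{N+1}-1$. For the second term I would use $\|K(N)\|\leq(1+\alpha L)^N$, $\|\nabla^2 l_{T_i}\|\leq L$, and Lemma~\ref{finite:gbd} to bound $\|\nabla l_{S_i}(\widetilde w_j^i)\|\leq(1+\alpha L)^j\|\nabla l_{S_i}(w)\|$, then sum the geometric series $\alpha L\sum_{j=0}^{N-1}(1+\alpha L)^j = (1+\alpha L)^N-1$.

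The anticipated subtlety—though not a genuine obstacle—is bookkeeping: the first bounding term must be attached to $\|\nabla l_{T_i}(w)\|$ while the second attaches to $\|\nabla l_{S_i}(w)\|$, and I must confirm the exponents combine correctly to yield the stated coefficients $(1+\alpha L)^N-1$ and $(1+\alpha L)^N\big((1+\alpha L)^N-1\big)$. A point to double-check is the precise form of the first term's bound: the clean target coefficient $(1+\alpha L)^N-1$ on $\|\nabla l_{T_i}(w)\|$ suggests that after combining the $(1+\alpha L)^{N+1}-1$ contribution one should reorganize, likely absorbing lower-order slack, so I would verify that the final arithmetic indeed telescopes to the claimed constants rather than the slightly larger $(1+\alpha L)^{N+1}-1$. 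I expect this to follow by the same algebraic simplification used at the end of Lemma~\ref{le:fF}, where the two pieces summed to exactly $(1+\alpha L)^{2N}-1$; here the analogous consolidation produces the two separate coefficients because the inner and outer losses are distinct. No deep difficulty is expected; the argument is a direct finite-sum analogue of the established resampling-case lemma.
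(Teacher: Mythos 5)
There is a genuine gap: the decomposition you propose is not a valid identity in the finite-sum case, so the triangle inequality cannot be applied to it. Writing $K(N):=\prod_{j=0}^{N-1}(I-\alpha\nabla^2 l_{S_i}(\widetilde w_j^i))$ and $H:=\sum_{t=1}^d r_t\nabla^2 l_{T_i}(w_t^\prime)$, the MVT expansion gives
\begin{align*}
\nabla l_{T_i}(w) - K(N)\nabla l_{T_i}(\widetilde w_N^i) = \big(I-K(N)\big)\nabla l_{T_i}(w) + \alpha K(N)H\sum_{j=0}^{N-1}\nabla l_{S_i}(\widetilde w_j^i),
\end{align*}
whereas your two pieces sum to this expression \emph{plus} an extra term $\alpha K(N)H\nabla l_{T_i}(w)$. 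The reason the analogous split works in Lemma~\ref{le:fF} is that there the $j=0$ summand of the displacement equals $\nabla l_i(\widetilde w_0^i)=\nabla l_i(w)$, i.e.\ exactly the gradient being factored out, so it merges into $(I-\alpha H)\nabla l_i(w)$. Here the $j=0$ summand is $\nabla l_{S_i}(w)$, which is \emph{not} $\nabla l_{T_i}(w)$, so no such merging is possible — this is precisely the structural difference between the two settings that you flagged but did not resolve.

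The slack also does not telescope away if you repair the identity (e.g.\ by putting $\alpha K(N)H\big(\sum_{j=0}^{N-1}\nabla l_{S_i}(\widetilde w_j^i)-\nabla l_{T_i}(w)\big)$ in the second piece): bounding that version yields a coefficient of $(1+\alpha L)^{N}(1+2\alpha L)-1$ on $\|\nabla l_{T_i}(w)\|$, strictly larger than the claimed $(1+\alpha L)^N-1$, so the stated lemma would not follow. The fix is to \emph{not} mirror Lemma~\ref{le:fF}: keep the entire MVT correction (including $j=0$) in the second piece and leave the first piece as $\big(I-K(N)\big)\nabla l_{T_i}(w)$. Then Lemma~\ref{le:prd} applies to a product of only $N$ matrices, giving exactly $\big((1+\alpha L)^N-1\big)\|\nabla l_{T_i}(w)\|$, and the second piece is bounded using $\|K(N)\|\le(1+\alpha L)^N$, $\|H\|\le L$, and Lemma~\ref{finite:gbd} by
\begin{align*}
\alpha L(1+\alpha L)^N\sum_{j=0}^{N-1}(1+\alpha L)^j\|\nabla l_{S_i}(w)\| = (1+\alpha L)^N\big((1+\alpha L)^N-1\big)\|\nabla l_{S_i}(w)\|,
\end{align*}
which is the claimed bound. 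Your bounding tools (Lemmas~\ref{le:prd} and~\ref{finite:gbd}, the geometric series) are the right ones; it is the initial decomposition that must change.
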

\begin{proof}
	Using the mean value theorem (MVT), we have, there exist constants $r_t, t=1,...,d$ satisfying $\sum_{t=1}^d r_t =1$ and vectors $w_t^\prime\in\mathbb{R}^d, t=1,...,d$ such that     

	\begin{align*}
	\nabla l_{T_i}(\widetilde w_N^i) = & \nabla  l_{T_i} \big(w - \alpha \sum_{j=0}^{N-1} \nabla l_{S_i}(\widetilde w_j^i) \big) = \nabla l_{T_i}(w) + \sum_{t=1}^dr_t \nabla^2  l_{T_i} (w_t^\prime)\big(  - \alpha \sum_{j=0}^{N-1} \nabla l_{S_i}(\widetilde w_j^i)      \big) \nonumber
	\\ = &\nabla l_{T_i}(w) - \alpha\sum_{t=1}^dr_t \nabla^2  l_{T_i} (w_t^\prime)\sum_{j=0}^{N-1} \nabla l_{S_i}(\widetilde w_j^i).    
	\end{align*}
	Based on the above equality, we have
	\begin{small}
	\begin{align}
	\|\nabla &l_{T_i}(w) - \nabla \mathcal{L}_i(w)\| \nonumber
	\\=& \Big\|\nabla l_{T_i}(w) - \prod_{j=0}^{N-1}(I - \alpha \nabla^2 l_{S_i}(\widetilde w_{j}^i))\nabla l_{T_i}(\widetilde w_{N}^i)\Big\| \nonumber
	\\ =& \Big\|I - \prod_{j=0}^{N-1}(I - \alpha \nabla^2 l_{S_i}(\widetilde w_{j}^i)) \Big\|\|\nabla l_{T_i}(w)\|\nonumber
	\\&\hspace{1.5cm}+ \Big\| \prod_{j=0}^{N-1}(I - \alpha \nabla^2 l_{S_i}(\widetilde w_{j}^i)) \alpha\sum_{t=1}^dr_t \nabla^2  l_{T_i} (w_t^\prime) \sum_{j=0}^{N-1} \nabla l_{S_i}(\widetilde w_j^i)\Big\| \nonumber 
	\\ \overset{(i)}\leq & \big((1+\alpha L)^N -1\big) \|\nabla l_{T_i}(w)\| + \alpha L(1+\alpha L)^N\sum_{j=0}^{N-1} \|\nabla l_{S_i}(\widetilde w_j^i)\| \nonumber
	\\\overset{(ii)} \leq & \big((1+\alpha L)^N -1\big) \|\nabla l_{T_i}(w)\| + \alpha L(1+\alpha L)^N\sum_{j=0}^{N-1} (1+\alpha L)^j\|\nabla l_{S_i}(w)\| \nonumber
	\\ = & \big((1+\alpha L)^N -1\big) \|\nabla l_{T_i}(w)\| + (1+\alpha L)^N \big((1+\alpha L)^N-1\big)\|\nabla l_{S_i}(w)\|, \nonumber
	\end{align}
	\end{small}
	\hspace{-0.12cm}where (i) follows from Lemma~\ref{le:prd} and $\|\sum_{t=1}^dr_t \nabla^2  l_{T_i} (w_t^\prime)\|\leq \sum_{t=1}^dr_t\| \nabla^2  l_{T_i} (w_t^\prime)\|\leq L$, and (ii) follows from Lemma~\ref{finite:gbd}. Then, the proof is complete. 
\end{proof}	
Recall that $\nabla l_T(w) = \mathbb{E}_{i\sim p(\mathcal{T})} \nabla l_{T_i}(w)$, $\nabla \mathcal{L}(w) =  \mathbb{E}_{i\sim p(\mathcal{T})} \nabla \mathcal{L}_i(w)$ and $b = \mathbb{E}_{i\sim p(\mathcal{T})} [b_i]$. 
The following lemma provides an upper bound on $\|\nabla l_T(w)\|$.	
\begin{lemma}\label{twc1c2}
	For any $i \in\mathcal{I}$ and $w \in \mathbb{R}^d$, we have 
	\begin{align}
	\|\nabla l_T(w)\| \leq \frac{1}{C_1} \|\nabla \mathcal{L}(w)\| + \frac{C_2}{C_1},
	\end{align}
	where constants $C_1, C_2>0$ are given by 
	\begin{align}\label{c1c2}
	C_1 =& 2-(1+\alpha L)^{2N}, \nonumber
	\\C_2 =& \big( (1+\alpha L)^{2N}-1  \big)\sigma + (1+\alpha L)^N \big((1+\alpha L)^N -1 \big) b.
	\end{align}
\end{lemma}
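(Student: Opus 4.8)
The plan is to derive this bound by feeding the per-task discrepancy estimate of Lemma~\ref{tiis} through the two variance-type conditions of Assumption~\ref{assum:vaoff}, and then absorbing a $\|\nabla l_T(w)\|$ term that reappears on the right-hand side back into the left-hand side. Throughout I write $a:=(1+\alpha L)^N$, so that $a^2=(1+\alpha L)^{2N}$.

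First I would average the per-task bound of Lemma~\ref{tiis} over the task index $i$. By Jensen's inequality, $\|\nabla l_T(w)-\nabla\mathcal{L}(w)\|=\|\mathbb{E}_i(\nabla l_{T_i}(w)-\nabla\mathcal{L}_i(w))\|\le\mathbb{E}_i\|\nabla l_{T_i}(w)-\nabla\mathcal{L}_i(w)\|$, so Lemma~\ref{tiis} gives
\begin{align*}
\|\nabla l_T(w)-\nabla\mathcal{L}(w)\| \le (a-1)\,\mathbb{E}_i\|\nabla l_{T_i}(w)\| + a(a-1)\,\mathbb{E}_i\|\nabla l_{S_i}(w)\|.
\end{align*}
Next I would bound the two expectations. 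For the first, item 1 of Assumption~\ref{assum:vaoff} with Jensen's inequality yields $\mathbb{E}_i\|\nabla l_{T_i}(w)-\nabla l_T(w)\|\le\sigma$, hence $\mathbb{E}_i\|\nabla l_{T_i}(w)\|\le\|\nabla l_T(w)\|+\sigma$. For the second, item 2 of Assumption~\ref{assum:vaoff} gives $\|\nabla l_{S_i}(w)\|\le\|\nabla l_{T_i}(w)\|+b_i$, and averaging over $i$ (recall $b=\mathbb{E}_i[b_i]$) gives $\mathbb{E}_i\|\nabla l_{S_i}(w)\|\le\|\nabla l_T(w)\|+\sigma+b$. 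Substituting both and collecting terms via $(a-1)+a(a-1)=a^2-1$ produces
\begin{align*}
\|\nabla l_T(w)-\nabla\mathcal{L}(w)\| \le (a^2-1)\|\nabla l_T(w)\| + (a^2-1)\sigma + a(a-1)b.
\end{align*}

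Finally I would invoke the triangle inequality $\|\nabla l_T(w)\|\le\|\nabla\mathcal{L}(w)\|+\|\nabla l_T(w)-\nabla\mathcal{L}(w)\|$ to transfer the displayed estimate, and then move the $(a^2-1)\|\nabla l_T(w)\|$ term to the left. This yields $(2-a^2)\|\nabla l_T(w)\|\le\|\nabla\mathcal{L}(w)\|+(a^2-1)\sigma+a(a-1)b$, which is exactly $C_1\|\nabla l_T(w)\|\le\|\nabla\mathcal{L}(w)\|+C_2$ once I identify $C_1=2-(1+\alpha L)^{2N}$ and $C_2=((1+\alpha L)^{2N}-1)\sigma+(1+\alpha L)^N((1+\alpha L)^N-1)b$ as in \cref{c1c2}. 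Dividing through by $C_1$ then gives the claim.

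The one point requiring care — and the crux of the argument — is that $C_1=2-(1+\alpha L)^{2N}>0$, which is precisely what permits the self-referential $(a^2-1)\|\nabla l_T(w)\|$ term to be absorbed rather than left dangling. This positivity is guaranteed by the standing stepsize restriction $\alpha<(2^{\frac{1}{2N}}-1)/L$ imposed throughout the finite-sum analysis, since it is equivalent to $(1+\alpha L)^{2N}<2$. Beyond tracking this condition and correctly collecting the powers of $(1+\alpha L)^N$, I anticipate no genuine difficulty, as every inequality used is a direct application of the triangle inequality, Jensen's inequality, or an assumption already in force.
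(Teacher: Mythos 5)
Your proposal is correct and follows essentially the same route as the paper's own proof: both reduce $\|\nabla l_T(w)\|$ via the triangle inequality and Jensen's inequality to the expected per-task discrepancy $\mathbb{E}_i\|\nabla l_{T_i}(w)-\nabla\mathcal{L}_i(w)\|$, bound it by Lemma~\ref{tiis} together with the two items of Assumption~\ref{assum:vaoff}, collect $(a-1)+a(a-1)=a^2-1$, and absorb the resulting $\big((1+\alpha L)^{2N}-1\big)\|\nabla l_T(w)\|$ term into the left-hand side using $C_1>0$. The only difference is the cosmetic ordering (you bound $\|\nabla l_T(w)-\nabla\mathcal{L}(w)\|$ first and then apply the triangle inequality, while the paper splits $\|\nabla l_T(w)\|$ first), and your explicit remark that $C_1>0$ rests on the standing stepsize condition $\alpha<(2^{1/(2N)}-1)/L$ is a point the paper leaves implicit.
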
	
\begin{proof}
	First note that 
	\begin{align*}
	\|\nabla l_T(w)\| 
	=& \|\mathbb{E}_i (\nabla l_{T_i}(w) -\nabla \mathcal{L}_i(w) ) +\nabla \mathcal{L}(w)\|  \nonumber
	\\ \leq & \|\nabla \mathcal{L}(w)\| + \mathbb{E}_i \| \nabla l_{T_i}(w) -\nabla \mathcal{L}_i(w) \| \nonumber
	\\ \overset{(i)}\leq &  \|\nabla \mathcal{L}(w)\| + \mathbb{E}_i  ( ( (1+\alpha L)^N -1  )\|\nabla l_{T_i}(w)\| \nonumber
	\\&+ (1+\alpha L)^N  ( (1+\alpha L)^N -1  ) \|\nabla l_{S_i}(w)\|  ) \nonumber
	\\ \overset{(ii)}\leq &  \|\nabla \mathcal{L}(w)\| +   \big( (1+\alpha L)^N -1  \big)\big( \|\nabla l_{T}(w)\| +\sigma \big) 
	\\&+ (1+\alpha L)^N  \big( (1+\alpha L)^N -1  \big) (\mathbb{E}_i \|\nabla l_{T_i}(w)\| + \mathbb{E}_ib_i) \nonumber
	\\ \leq & \|\nabla \mathcal{L}(w)\| +   \big( (1+\alpha L)^N -1 + (1+\alpha L)^N((1+\alpha L)^N-1) \big) \|\nabla l_{T}(w)\|  \nonumber
	\\ &+ ((1+\alpha L)^N -1)\sigma + (1+\alpha L)^N((1+\alpha L)^N -1)(\sigma + b) \nonumber
	\\ \leq & \|\nabla \mathcal{L}(w)\| +   \big( (1+\alpha L)^{2N} -1 \big) \|\nabla l_{T}(w)\|  
	\\&+ ((1+\alpha L)^{2N} -1)\sigma + (1+\alpha L)^N((1+\alpha L)^N -1)b
	\end{align*}
	where (i) follows from Lemma~\ref{tiis}, (ii) follows from Assumption~\ref{assum:vaoff}. Based on the definitions of $C_1$ and $C_2$ in~\cref{c1c2}, the proof is complete. 
\end{proof}

The following lemma provides the first- and second-moment bounds on $1/\hat L_{w_k}$, where  
$\hat L_{w_k} =(1+\alpha L)^{2N}L + C_b b +  C_\mathcal{L}\frac{\sum_{i \in B_k^\prime}\|\nabla l_{T_i}(w_k)\|}{|B_k^\prime|}$.
\begin{lemma}\label{le:betak}
	If the batch size $|B_k^\prime| \geq \frac{2C^2_\mathcal{L}\sigma^2}{( C_b b + (1+\alpha L)^{2N} L)^2}$, conditioning on $w_k$, we have
	\begin{align*}
	\mathbb{E} \Big( \frac{1}{\hat L_{w_k}}  \Big) \geq \frac{1}{L_{w_k}}, \quad \mathbb{E} \Big( \frac{1}{\hat L^2_{w_k}}  \Big) \leq \frac{2}{L^2_{w_k}} 
	\end{align*}
	where $L_{w_k}$ is given by $$L_{w_k}= (1+\alpha L)^{2N}L + C_b b +  C_\mathcal{L} \mathbb{E}_{i\sim p(\mathcal{T})}\|\nabla l_{T_i}(w_k)\|.$$
\end{lemma}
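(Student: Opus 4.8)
The plan is to exploit that $\hat L_{w_k}$ is an \emph{unbiased} estimator of $L_{w_k}$ and then control its fluctuation. First I would observe that, conditioned on $w_k$, the only randomness in \cref{hlwkoff} comes from the i.i.d.\ task batch $B_k^\prime$ (the finite-sum losses $l_{T_i}$ carry no fresh data sampling), so that $\mathbb{E}\big(\frac{1}{|B_k^\prime|}\sum_{i\in B_k^\prime}\|\nabla l_{T_i}(w_k)\|\,\big|\,w_k\big)=\mathbb{E}_{i\sim p(\mathcal{T})}\|\nabla l_{T_i}(w_k)\|$ and hence $\mathbb{E}(\hat L_{w_k}\,|\,w_k)=L_{w_k}$. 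Writing $a:=(1+\alpha L)^{2N}L+C_b b>0$ for the deterministic part (so $\hat L_{w_k}\ge a$ almost surely and $L_{w_k}\ge a$), the first bound is immediate: since $x\mapsto 1/x$ is convex on $(0,\infty)$, Jensen's inequality gives $\mathbb{E}(1/\hat L_{w_k}\,|\,w_k)\ge 1/\mathbb{E}(\hat L_{w_k}\,|\,w_k)=1/L_{w_k}$.

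For the second bound I would first quantify the conditional variance of $\hat L_{w_k}$. By the i.i.d.\ sampling of $B_k^\prime$, $\mathrm{Var}(\hat L_{w_k}\,|\,w_k)=\frac{C_\mathcal{L}^2}{|B_k^\prime|}\mathrm{Var}_i(\|\nabla l_{T_i}(w_k)\|)$; bounding the single-task variance through the reverse triangle inequality and Assumption~\ref{assum:vaoff}, namely $\mathrm{Var}_i(\|\nabla l_{T_i}(w_k)\|)\le\mathbb{E}_i\|\nabla l_{T_i}(w_k)-\nabla l_T(w_k)\|^2\le\sigma^2$, yields $\mathrm{Var}(\hat L_{w_k}\,|\,w_k)\le C_\mathcal{L}^2\sigma^2/|B_k^\prime|$, where $C_\mathcal{L}$ is the constant in \cref{cl1ss}. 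The batch-size hypothesis $|B_k^\prime|\ge 2C_\mathcal{L}^2\sigma^2/a^2$ then forces $\mathrm{Var}(\hat L_{w_k}\,|\,w_k)\le a^2/2$.

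With conditional mean $L_{w_k}$, almost-sure lower bound $a$, and conditional variance at most $a^2/2$ in hand, I would finish by establishing $\mathbb{E}(1/\hat L_{w_k}^2\,|\,w_k)\le 2/L_{w_k}^2$ via a convexity estimate, paralleling the resampling case (\Cref{le:xiaodege}, eqs.~\eqref{1toL}--\eqref{secondmm}). Normalizing by the deterministic part, I would set $Z=\hat L_{w_k}/a\ge 1$, so that $\mathbb{E}(Z\,|\,w_k)=m_Z:=L_{w_k}/a\ge 1$ and $\mathrm{Var}(Z\,|\,w_k)\le 1/2$, reducing the claim to $\mathbb{E}(Z^{-2})\le 2/m_Z^2$. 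Since $z\mapsto z^{-2}$ is convex and $Z$ ranges over $[1,\infty)$, the maximum of $\mathbb{E}(Z^{-2})$ under the two moment constraints is attained by a two-point law placing mass on the left endpoint $z=1$ and on a single point $c>m_Z$; for that law the weights and $c$ are explicit functions of $m_Z$ and $\mathrm{Var}(Z)$, and one checks directly that the value stays at most $2/m_Z^2$ whenever $\mathrm{Var}(Z)\le 1/2$.

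The main obstacle is the \emph{sharpness of the constant} in this last step, and in particular the fact that the left endpoint $z=1$ cannot be discarded. A naive second-order Taylor bound centered at the mean, $Z^{-2}\le m_Z^{-2}-2m_Z^{-3}(Z-m_Z)+3(Z-m_Z)^2$ (using $\xi\ge 1$ for the curvature remainder), only yields $\mathbb{E}(Z^{-2})\le m_Z^{-2}+\tfrac32$, which at $m_Z=1$ equals $\tfrac52>2$ and is therefore too weak. Getting exactly the factor $2$ forces one to retain the coupling between the support constraint $Z\ge 1$ and the variance budget $\mathrm{Var}(Z)\le 1/2$—either through the extreme-point reduction above or through a variance-aware majorant of $z^{-2}$ on $[1,\infty)$ that scales with $m_Z^{-4}$ rather than with $1$—and then to verify that the resulting estimate remains below $2/m_Z^2$ uniformly over the \emph{unknown} ratio $m_Z=L_{w_k}/a\ge 1$. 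Confirming this uniform bound is the delicate part of the argument; the first inequality, by contrast, is a one-line Jensen estimate.
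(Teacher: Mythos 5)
Your proposal is correct, and its skeleton coincides with the paper's proof: Jensen's inequality for the first bound, and for the second bound the i.i.d.\ variance computation $\mathrm{Var}(\hat L_{w_k}\mid w_k)\le C^2_\mathcal{L}\sigma^2/|B_k^\prime|$ plus the batch-size condition, followed by a moment inequality for $\mathbb{E}(\hat L_{w_k}^{-2})$. The only substantive difference is where that moment inequality comes from. The paper does not re-derive it: writing $a=(1+\alpha L)^{2N}L+C_b b$ and letting $\mu_\beta,\sigma_\beta^2$ denote the mean and variance of the random part of $\hat L_{w_k}$, it invokes the bound $\mathbb{E}(\hat L_{w_k}^{-2})\le \big(\sigma_\beta^2/a^2+\mu_\beta^2/(a+\mu_\beta)^2\big)/(\sigma_\beta^2+\mu_\beta^2)$ adapted from the resampling case (\Cref{le:xiaodege}, \cref{1toL}, itself borrowed from \cite{fallah2020convergence}), and then finishes with $(x+y)^2\le 2x^2+2y^2$ and $2\sigma_\beta^2\le a^2$, exactly as in \cref{halfgo}--\cref{sigbbs}. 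You instead derive the inequality ab initio through an extremal two-point law on $\{1,c\}$ with $c=m_Z+v/(m_Z-1)$. That is a legitimate, self-contained route, and the ``delicate'' uniform verification you flag is in fact not delicate: bounding the interior atom's contribution via $1/c^2\le 1/m_Z^2$ turns your exact extremal value into precisely the paper's cited bound, after which the claim $\mathbb{E}(Z^{-2})\le 2/m_Z^2$ reduces to $v(m_Z^2-2)\le (m_Z-1)^2$, which for $v\le 1/2$ is nothing but $(m_Z-2)^2\ge 0$ --- the same elementary computation the paper performs. What you would still owe under your route is a justification that the maximizer of $\mathbb{E}(Z^{-2})$ under the support constraint $Z\ge 1$ and the two moment constraints really is a two-point law with an atom at the left endpoint (a standard but nontrivial moment-problem fact, provable by the duality argument with a dominating quadratic); alternatively, you can bypass the extremal characterization entirely, since only the upper bound is needed and it already follows from the relaxation just described.
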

\begin{proof}
	Conditioning on $w_k$ and using an approach similar to \cref{1toL}, we have 	
	\begin{align}
	\mathbb{E} \Big( \frac{1}{\hat L^2_{w_k}}  \Big) \leq \frac{\sigma_\beta^2 / \big(  C_b b + (1+\alpha L)^{2N} L  \big)^2 + \mu^2_\beta / (\mu_\beta + C_b b + (1+\alpha L)^{2N} L)^2}{\sigma_\beta^2 + \mu_\beta^2},
	\end{align}
	where $\mu_\beta$ and $\sigma^2_\beta$ are the  mean and variance of variable $\frac{C_\mathcal{L}}{|B_k^\prime|}\sum_{i \in B_k^\prime}\|\nabla l_{T_i}(w_k)\|$. Noting that $\mu_\beta = C_\mathcal{L} \mathbb{E}_{i\sim p(\mathcal{T})}\|\nabla l_{T_i}(w_k)\|$, we have $L_{w_k} = (1+\alpha L)^{2N}L + C_b b +  \mu_\beta$, and thus 
	\begin{align}\label{halfgo}
	L^2_{w_k}\mathbb{E} \Big( \frac{1}{\hat L^2_{w_k}}  \Big) \leq \frac{\sigma_\beta^2\frac{((1+\alpha L)^{2N}L + C_b b +  \mu_\beta)^2}{ \big(  C_b b + (1+\alpha L)^{2N} L  \big)^2 }+ \mu^2_\beta }{\sigma_\beta^2 + \mu_\beta^2} \leq \frac{2\sigma_\beta^2+\mu_\beta^2 + \frac{2\sigma_\beta^2\mu_\beta^2}{ \big(  C_b b + (1+\alpha L)^{2N} L  \big)^2}}{\sigma_\beta^2 + \mu_\beta^2},
	\end{align}
	where the last inequality follows from $(a+b)^2\leq 2a^2+2b^2$.
	Note that, conditioning on $w_k$, 
	\begin{align*}
	\sigma_\beta^2 = \frac{C^2_\mathcal{L} }{|B_k^\prime|}    \text{Var} \|\nabla l_{T_i}(w_k)\| \leq  \frac{C^2_\mathcal{L} }{|B_k^\prime|}   \sigma^2,
	\end{align*}
	which, in conjunction with $|B_k^\prime| \geq \frac{2C^2_\mathcal{L}\sigma^2}{( C_b b + (1+\alpha L)^{2N} L)^2}$, yields
	\begin{align}\label{sigbbs}
	\frac{2\sigma_\beta^2}{ \big(  C_b b + (1+\alpha L)^{2N} L  \big)^2} \leq 1.
	\end{align}
	Combining~\cref{sigbbs} and~\cref{halfgo} yields $$\mathbb{E} \Big( \frac{1}{\hat L^2_{w_k}}  \Big) \leq \frac{2}{L^2_{w_k}}. $$ 
	In addition, conditioning on $w_k$, we have
	\begin{align}
	\mathbb{E} \Big( \frac{1}{\hat L_{w_k}}  \Big)  \overset{(i)}\geq\frac{1}{	\mathbb{E}  \hat L_{w_k}}  = \frac{1}{L_{w_k}},
	\end{align}
	where (i) follows from  Jensen's inequality. Then, the proof is complete. 
\end{proof}

\chapter{Experimental Details and Proof of \Cref{chp:anil}}\label{appendix:anil}

\section{Further Specification of Experiments}\label{appen:exp}
Following~\cite{learn2learn2019}, we consider a 5-way 5-shot task on both the FC100 and miniImageNet datasets, where we evaluate the model's ability to discriminate $5$ unseen classes, given only $5$ labelled samples per class. We adopt Adam~\cite{kingma2014adam} as the optimizer for the meta outer-loop update, and adopt the cross-entropy loss to measure the error between the predicted and true labels.

\subsection*{Introduction of FC100 and miniImageNet datasets}

{\bf FC100 dataset.} The FC100 dataset~\cite{oreshkin2018tadam} is generated from CIFAR-100~\cite{krizhevsky2009learning}, and consists of $100$ classes with each class containing $600$ images of size $32\time 32$. Following recent works~\cite{oreshkin2018tadam,lee2019meta}, we split these $100$ classes into $60$ classes for meta-training, $20$ classes for meta-validation, and $20$ classes for meta-testing.  

\vspace{0.2cm}
{\noindent\bf miniImageNet dataset.} The miniImageNet dataset~\cite{vinyals2016matching} consists of $100$ classes randomly chosen from ImageNet~\cite{russakovsky2015imagenet}, where each class contains $600$ images of size $84\times 84$. Following the repository~\cite{learn2learn2019}, we partition these classes into $64$ classes for meta-training, $16$ classes for meta-validation, and $20$ classes for meta-testing.

\subsection*{Model Architectures and Hyper-Parameter Setting}
We adopt the following four model architectures depending on the dataset and the geometry of the inner-loop loss. The hyper-parameter configuration for each architecture is also provided as follows. 

\vspace{0.1cm}
{\noindent\bf Case 1: FC100 dataset, strongly-convex inner-loop loss.} Following~\cite{learn2learn2019}, we use a $4$-layer CNN of four convolutional blocks, where each block sequentially consists of  a $3\times 3$ convolution with a padding of $1$ and a stride of $2$, batch normalization, ReLU activation, and $2\times 2$
max pooling. Each convolutional layer has $64$ filters. This model is trained with an inner-loop stepsize of $0.005$, an outer-loop (meta) stepsize of $0.001$, and a mini-batch size of $B=32$. We set the regularization parameter $\lambda$ of the $L^2$ regularizer  to be $\lambda =5$.

\vspace{0.1cm}
{\noindent \bf Case 2: FC100 dataset, nonconvex inner-loop loss.} We adopt a $5$-layer CNN 
 with the first four convolutional layers the same as in {\bf Case 1}, followed by ReLU activation, and a full-connected layer with size of $256\times \text{ways}$.  This model is trained with an inner-loop stepsize of $0.04$, an outer-loop (meta) stepsize of $0.003$, and a mini-batch size of $B=32$. 
 
 \vspace{0.1cm}
 {\noindent\bf Case 3: miniImageNet  dataset, strongly-convex inner-loop loss.} Following~\cite{raghu2019rapid}, we use a $4$-layer CNN of four convolutional blocks, where each block sequentially consists of  a $3\times 3$ convolution with $32$ filters, batch normalization, ReLU activation, and $2\times 2$
max pooling. We choose an inner-loop stepsize of $0.002$, an outer-loop (meta) stepsize of $0.002$, and a mini-batch size of $B=32$, and set the regularization parameter $\lambda$ of the $L^2$ regularizer  to be $\lambda =0.1$.
 
 \vspace{0.1cm}
 {\noindent\bf Case 4: miniImageNet dataset, nonconvex inner-loop loss.} We adopt a $5$-layer CNN 
 with the first four convolutional layers the same as in {\bf Case 3}, followed by ReLU activation, and a full-connected layer with size of $128\times \text{ways}$. We choose an inner-loop stepsize of $0.02$, an outer-loop (meta) stepsize of $0.003$, and a mini-batch size of $B=32$.

 
\section{Proof of~\Cref{le:gd_form} }
We first prove the form of the partial gradient $\frac{\partial L_{\mathcal{D}_i}( w^i_{k,N}, \phi_k)}{\partial w_k}$. Using the chain rule, we have 
\begin{align}\label{eq:form1}
\frac{\partial L_{\mathcal{D}_i}( w^i_{k,N}, \phi_k)}{\partial w_k} &= \frac{\partial w_{k,N}^i(w_k,\phi_k)}{\partial w_k} \nabla_{w} L_{\mathcal{D}_i} (w_{k,N}^i,\phi_k) + \frac{\partial \phi_k}{\partial w_k} \nabla_{\phi} L_{\mathcal{D}_i} (w_{k,N}^i,\phi_k) \nonumber
\\& = \frac{\partial w_{k,N}^i(w_k,\phi_k)}{\partial w_k} \nabla_{w} L_{\mathcal{D}_i} (w_{k,N}^i,\phi_k), 
\end{align}
where the last equality follows from the fact that $\frac{\partial \phi_k}{\partial w_k}  = 0$. Recall that the gradient updates in~\Cref{alg:anil} are given by 
\begin{align}\label{eq:gd_d2}
w_{k,m+1}^i = w_{k,m}^i - \alpha \nabla_{w} L_{\mathcal{S}_i} (w_{k,m}^i,\phi_k),\, m=0,1,...,N-1,
\end{align}
where $w_{k,0}^i=w_k $ for all $i$. 
Taking derivatives w.r.t. $w_k$ in~\cref{eq:gd_d2} yields
\begin{align}\label{eq:gd_up}
\frac{\partial w_{k,m+1}^i}{\partial w_k} = &\frac{\partial w_{k,m}^i}{\partial w_k} - \alpha \frac{\partial w_{k,m}^i}{\partial w_k}\nabla^2_{w} L_{\mathcal{S}_i}(w_{k,m}^i,\phi_k)-  \underbrace{\alpha \frac{\partial \phi_k}{\partial w_k}\nabla_\phi\nabla_{w} L_{\mathcal{S}_i}(w_{k,m}^i,\phi_k)}_{0}. 
\end{align}
Telescoping \cref{eq:gd_up} over $m$ from $0$ to $N-1$ yields
\begin{align*}
\frac{\partial w_{k,N}^i}{\partial w_k} = \prod_{m=0}^{N-1}(I - \alpha \nabla_w^2L_{\mathcal{S}_i}(w_{k,m}^i,\phi_k)),
\end{align*}
which, in conjunction~\cref{eq:form1}, yields the first part in~\Cref{le:gd_form}. 

For the second part, using chain rule, we have
\begin{align}\label{eq:sec}
\frac{\partial L_{\mathcal{D}_i}( w^i_{k,N}, \phi_k)}{\partial \phi_k} = \frac{\partial w^i_{k,N}}{\partial \phi_k}\nabla_w L_{\mathcal{D}_i}( w^i_{k,N}, \phi_k) + \nabla_\phi L_{\mathcal{D}_i}( w^i_{k,N}, \phi_k).
\end{align}
Taking derivates w.r.t. $\phi_k$ in \cref{eq:gd_d2} yields 
\begin{align*}
\frac{\partial w_{k,m+1}^i}{\partial \phi_k} = &\frac{\partial w_{k,m}^i}{\partial \phi_k} -\alpha \Big( \frac{\partial w_{k,m}^i}{\partial \phi_k}\nabla^2_w  L_{\mathcal{S}_i}(w_{k,m}^i,\phi_k) +\nabla_\phi \nabla_w L_{\mathcal{S}_i} (w_{k,m}^i,\phi_k)\Big) \nonumber
\\= &\frac{\partial w_{k,m}^i}{\partial \phi_k} (I - \alpha \nabla^2_w  L_{\mathcal{S}_i}(w_{k,m}^i,\phi_k)) - \alpha \nabla_\phi \nabla_w L_{\mathcal{S}_i} (w_{k,m}^i,\phi_k).
\end{align*}
Telescoping the above equality over $m$ from $0$ to $N-1$ yields
\begin{align*}
\frac{\partial w_{k,N}^i}{\partial \phi_k} = \frac{\partial w_{k,0}^i}{\partial \phi_k} &\prod_{m=0}^{N-1} (I - \alpha \nabla^2_w  L_{\mathcal{S}_i}(w_{k,m}^i,\phi_k) )  \nonumber
\\&- \alpha\sum_{m=0}^{N-1}\nabla_\phi\nabla_w L_{\mathcal{S}_i}(w_{k,m}^i,\phi_k)\prod_{j=m+1}^{N-1} (I-\alpha \nabla^2_{w} L_{\mathcal{S}_i}(w_{k,j}^i,\phi_k)),
\end{align*}
which, in conjunction with the fact that $\frac{\partial w_{k,0}^i}{\partial \phi_k}=\frac{\partial w_k}{\partial \phi_k}=0$ and~\cref{eq:sec}, yields the second part.

\section{Proof for Strongly-Convex Inner Loop}\label{append:str}
\subsection*{Auxiliary Lemma}
The following lemma characterizes a bound on the difference between $w_{t}^{i}(w_1,\phi_1)$ and $w_{t}^{i}(w_2,\phi_2)$, where $w_{t}^{i}(w,\phi)$ corresponds to the $t^{th}$ inner-loop iteration starting from the initialization point $(w,\phi)$.
\begin{lemma}\label{le:support}
Choose $\alpha$ such that $1-2\alpha\mu+\alpha^2L^2>0$. Then, 
for any two points $(w_1,\phi_1),(w_2,\phi_2)\in\mathbb{R}^n$, we have 
\begin{align*}
\big\|w_{t}^{i}(w_1,\phi_1)- w_{t}^{i}(w_2,\phi_2)\big\| \leq (1-2\alpha\mu+\alpha^2L^2)^{\frac{t}{2}}\|w_1-w_2\| + \frac{\alpha L\|\phi_1-\phi_2\|}{1-\sqrt{1-2\alpha\mu+\alpha^2L^2}}.
\end{align*}
\end{lemma}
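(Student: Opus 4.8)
The plan is to track the distance between the two inner-loop trajectories recursively, establishing a contraction-plus-drift relation at each step. Recall that the inner-loop update in \cref{inner:gd} is $w_{m+1}^i = w_{m}^i - \alpha \nabla_{w} L_{\mathcal{S}_i}(w_{m}^i,\phi)$, where the stepsize is set to $\alpha=\frac{\mu}{L^2}$ in the strongly-convex setting (see \Cref{le:strong-convex}), though for the present lemma I only need $1-2\alpha\mu+\alpha^2L^2>0$. The key quantity is $\Delta_t := \|w_t^i(w_1,\phi_1) - w_t^i(w_2,\phi_2)\|$, and I would set up a one-step recursion of the form $\Delta_{t+1} \leq \kappa \Delta_t + c\|\phi_1-\phi_2\|$ and then telescope.

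First I would write out the one-step difference explicitly:
\begin{align*}
w_{t+1}^i(w_1,\phi_1) - w_{t+1}^i(w_2,\phi_2) =\;& \big(w_{t}^i(w_1,\phi_1) - w_{t}^i(w_2,\phi_2)\big) \\
&- \alpha\big(\nabla_w L_{\mathcal{S}_i}(w_t^i(w_1,\phi_1),\phi_1) - \nabla_w L_{\mathcal{S}_i}(w_t^i(w_2,\phi_2),\phi_2)\big).
\end{align*}
To bound this I would split the gradient difference via the intermediate point $(w_t^i(w_2,\phi_2),\phi_1)$, writing the difference as a $w$-part plus a $\phi$-part. For the $w$-part, the combination of the identity and the gradient step, $\|(I)(\cdot) - \alpha(\nabla_w L_{\mathcal{S}_i}(\cdot,\phi_1)-\nabla_w L_{\mathcal{S}_i}(\cdot',\phi_1))\|$, is the standard strongly-convex gradient-descent contraction: using $\mu$-strong convexity and $L$-smoothness of $L_{\mathcal{S}_i}(\cdot,\phi_1)$ in $w$, one gets the factor $\sqrt{1-2\alpha\mu+\alpha^2L^2}$ on $\Delta_t$ (this is the co-coercivity argument, expanding $\|v - \alpha(\nabla L(a)-\nabla L(b))\|^2$ and using $\langle \nabla L(a)-\nabla L(b), a-b\rangle \geq \mu\|a-b\|^2$ together with $\|\nabla L(a)-\nabla L(b)\|\leq L\|a-b\|$). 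For the $\phi$-part, I would use \Cref{assm:smooth} ($L$-smoothness of $L_{\mathcal{S}_i}$ jointly, so $\|\nabla_w L_{\mathcal{S}_i}(w,\phi_1)-\nabla_w L_{\mathcal{S}_i}(w,\phi_2)\| \leq L\|\phi_1-\phi_2\|$) to get the additive drift term $\alpha L\|\phi_1-\phi_2\|$. This yields the recursion
\[
\Delta_{t+1} \leq \sqrt{1-2\alpha\mu+\alpha^2L^2}\,\Delta_t + \alpha L\|\phi_1-\phi_2\|.
\]

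Finally I would telescope this linear recursion. Denoting $\kappa := \sqrt{1-2\alpha\mu+\alpha^2L^2}$ (which is in $(0,1)$ under the stated stepsize condition together with $\mu \leq L$), iterating from $\Delta_0 = \|w_1-w_2\|$ gives $\Delta_t \leq \kappa^t\|w_1-w_2\| + \alpha L\|\phi_1-\phi_2\|\sum_{j=0}^{t-1}\kappa^j$, and bounding the geometric sum by $\frac{1}{1-\kappa}$ produces exactly the claimed bound. The main obstacle I anticipate is the careful handling of the contraction step: I must make sure the $\mu$-strong convexity is applied to the correct single-variable restriction $L_{\mathcal{S}_i}(\cdot,\phi_1)$ (strong convexity in $w$ holds for fixed $\phi$), and that splitting the joint gradient difference through the intermediate point cleanly separates the contraction factor (applied only to the $w$-displacement) from the drift factor (the $\phi$-displacement), so that the $\phi$ term does not get multiplied by the contraction and the geometry of the sum is correct. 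Verifying $\kappa<1$ under the precise stepsize choice and the relation $\mu\le L$ is a minor but necessary check.
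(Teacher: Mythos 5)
Your proposal is correct and follows essentially the same route as the paper's proof: the same splitting of the one-step gradient difference through the intermediate point $(w_t^i(w_2,\phi_2),\phi_1)$, the same contraction bound $\sqrt{1-2\alpha\mu+\alpha^2L^2}$ on the $w$-part via strong convexity and smoothness, the same $\alpha L\|\phi_1-\phi_2\|$ drift from joint smoothness, and the same telescoping with the geometric sum bounded by $\frac{1}{1-\sqrt{1-2\alpha\mu+\alpha^2L^2}}$. No gaps to flag.
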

\begin{proof}
Based on the updates in~\cref{inner:gd}, we have 
\begin{align}
w_{m+1}^i(w_1,\phi_1)-&w_{m+1}^i(w_2,\phi_2) = w_{m}^i(w_1,\phi_1)-w_{m}^i(w_2,\phi_2)  \nonumber
\\&-\alpha \big(\nabla_w L_{\mathcal{S}_i}(w_m^i(w_1,\phi_1),\phi_1)-\nabla_w L_{\mathcal{S}_i}(w_m^i(w_2,\phi_2),\phi_1)\big)  \nonumber
\\&+ \alpha \big(\nabla_w L_{\mathcal{S}_i}(w_m^i(w_2,\phi_2),\phi_2)-\nabla_w L_{\mathcal{S}_i}(w_m^i(w_2,\phi_2),\phi_1)\big),\nonumber
\end{align}
which, together with the triangle inequality and~\Cref{assm:smooth}, yields
\begin{small}
\begin{align}\label{eq:tri}
\|&w_{m+1}^i(w_1,\phi_1)-w_{m+1}^i(w_2,\phi_2) \|
\nonumber
\\&\leq \underbrace{\Big\|w_{m}^i(w_1,\phi_1)-w_{m}^i(w_2,\phi_2) -\alpha \big(\nabla_w L_{\mathcal{S}_i}(w_m^i(w_1,\phi_1),\phi_1)-\nabla_w L_{\mathcal{S}_i}(w_m^i(w_2,\phi_2),\phi_1)\big)\Big\|}_{P}  \nonumber
\\&\;\;\;\;+ \alpha L\|\phi_1-\phi_2\|.
\end{align} 
\end{small}
\hspace{-0.12cm}Our next step is to upper-bound the term $P$ in~\cref{eq:tri}. Note that 
\begin{small}
\begin{align}\label{eq:p}
P^2 =& \|w_{m}^i(w_1,\phi_1)-w_{m}^i(w_2,\phi_2) \|^2 + \alpha^2\|\nabla_w L_{\mathcal{S}_i}(w_m^i(w_1,\phi_1),\phi_1)-\nabla_w L_{\mathcal{S}_i}(w_m^i(w_2,\phi_2),\phi_1)\|^2 \nonumber
\\&-2\alpha \Big\langle w_{m}^i(w_1,\phi_1)-w_{m}^i(w_2,\phi_2),  \nabla_w L_{\mathcal{S}_i}(w_m^i(w_1,\phi_1),\phi_1)-\nabla_w L_{\mathcal{S}_i}(w_m^i(w_2,\phi_2),\phi_1)  \Big\rangle \nonumber
\\\leq& (1+\alpha^2L^2-2\alpha\mu) \|w_{m}^i(w_1,\phi_1)-w_{m}^i(w_2,\phi_2) \|^2, 
\end{align}
\end{small}
\hspace{-0.12cm} where the last inequality follows from the strong-convexity of the loss function $L_{\mathcal{S}_i}(\cdot,\phi)$ that for any $w,w^\prime$ and $\phi$,
\begin{align*}
\langle w - w^\prime, \nabla_w L_{\mathcal{S}_i}(w,\phi) - \nabla_w L_{\mathcal{S}_i}(w^\prime,\phi)\rangle \geq  \mu \|w-w^\prime\|^2.
\end{align*}
Substituting~\cref{eq:p} into~\cref{eq:tri} yields  
\begin{align}
\|w_{m+1}^i(w_1,\phi_1)-w_{m+1}^i(w_2,\phi_2) \|\leq& \sqrt{1+\alpha^2L^2-2\alpha\mu}\|w_{m} ^i(w_1,\phi_1)-w_{m}^i(w_2,\phi_2) \| \nonumber
\\&+ \alpha L\|\phi_1-\phi_2\|.
\end{align}
Telescoping the above inequality over $m$ from $0$ to $t-1$ completes the proof.
\end{proof}

\subsection*{Proof of~\Cref{le:strong-convex} }
Using an approach similar to the proof of~\Cref{le:gd_form}, we have
\begin{align}\label{eq:ainiyo}
\frac{\partial L_{\mathcal{D}_i}( w^i_{N}, \phi)}{\partial w} =& \prod_{m=0}^{N-1}(I - \alpha \nabla_w^2L_{\mathcal{S}_i}(w_{m}^i,\phi)) \nabla_{w} L_{\mathcal{D}_i} (w_{N}^i,\phi). 
\end{align}
Let $w_m^i(w,\phi)$ denote the $m^{th}$ inner-loop iteration starting from $(w,\phi)$. Then, we have
{\small 
\begin{align}\label{eq:initss}
\Big\| &\frac{\partial L_{\mathcal{D}_i}( w^i_N,\phi)}{\partial w} \Big |_{(w_1,\phi_1)} -  \frac{\partial L_{\mathcal{D}_i}( w^i_N,\phi)}{\partial w} \Big |_{(w_2,\phi_2)} \Big\| \nonumber
\\ \leq & \underbrace{\Big\|\prod_{m=0}^{N-1}(I - \alpha \nabla_w^2L_{\mathcal{S}_i}(w_{m}^i(w_2,\phi_2),\phi_2)) \Big\|\Big\|\nabla_{w} L_{\mathcal{D}_i} (w_{N}^i(w_1,\phi_1),\phi_1)-\nabla_{w} L_{\mathcal{D}_i} (w_{N}^i(w_2,\phi_2),\phi_2)\Big\|}_{P} \nonumber
\\ &+\Big\|\prod_{m=0}^{N-1}(I - \alpha \nabla_w^2L_{\mathcal{S}_i}(w_{m}^i(w_1,\phi_1),\phi_1)) \nabla_{w} L_{\mathcal{D}_i} (w_{N}^i(w_1,\phi_1),\phi_1)\nonumber
\\&\hspace{1.2cm}\underbrace{\hspace{0.7cm}-\prod_{m=0}^{N-1}(I - \alpha \nabla_w^2L_{\mathcal{S}_i}(w_{m}^i(w_2,\phi_2),\phi_2)) \nabla_{w} L_{\mathcal{D}_i} (w_{N}^i(w_1,\phi_1),\phi_1)\Big\|}_{Q}, 
\end{align}
}
\hspace{-0.12cm}where $w_m^i(w,\phi)$ is obtained through the following gradient descent steps
\begin{align}\label{eq:updates}
w_{t+1}^i(w,\phi) = w_{t}^i(w,\phi) - \alpha \nabla_{w} L_{\mathcal{S}_i} (w_{t}^i(w,\phi),\phi),\, t=0,...,m-1\;\text{and} \;w_0^i(w,\phi)=w.
\end{align}
We next upper-bound the term $P$ in~\cref{eq:initss}. Based on the strongly-convexity of the function $L_{\mathcal{S}_i}(\cdot,\phi)$, we have $\big\|I-\alpha\nabla_w^2L_{\mathcal{S}_i}(\cdot,\phi)\big\|\leq 1-\alpha \mu$, and hence 
\begin{align}\label{eq:pss}
P &\leq (1-\alpha \mu)^{N}\big\|\nabla_{w} L_{\mathcal{D}_i} (w_{N}^i(w_1,\phi_1),\phi_1)-\nabla_{w} L_{\mathcal{D}_i} (w_{N}^i(w_2,\phi_2),\phi_2)\big\| \nonumber
\\\overset{(i)}\leq&(1-\alpha \mu)^{N} L\big(\|w_{N}^i(w_1,\phi_1)-w_{N}^i(w_2,\phi_2)\| +\|\phi_1-\phi_2\|\big) \nonumber
\\\overset{(ii)}\leq& (1-\alpha\mu)^{N}L\Big( (1-2\alpha\mu+\alpha^2L^2)^{\frac{N}{2}}\|w_1-w_2\| + \frac{\alpha L\|\phi_1-\phi_2\|}{1-\sqrt{1-2\alpha\mu+\alpha^2L^2}} + \|\phi_1-\phi_2\|  \Big) \nonumber
\\\overset{(iii)}\leq &  (1-\alpha\mu)^{\frac{3N}{2}} L \|w_1-w_2\| +  (1-\alpha\mu)^NL\left(\frac{2L}{\mu}+1\right)\|\phi_1-\phi_2\|,
\end{align}
where $(i)$ follows from~\Cref{assm:smooth}, (ii) follows from~\Cref{le:support}, and $(iii)$ follows from the fact that  $\alpha\mu = \frac{\mu^2}{L^2} =\alpha^2L^2$ and $\sqrt{1-x}\leq 1-\frac{1}{2}x$. 

To upper-bound the term $Q$ in~\cref{eq:initss}, we have
\begin{small}
\begin{align}\label{eq:qml}
Q\leq M\underbrace{\bigg\|\prod_{m=0}^{N-1}(I - \alpha \nabla_w^2L_{\mathcal{S}_i}(w_{m}^i(w_1,\phi_1),\phi_1)) -\prod_{m=0}^{N-1}(I - \alpha \nabla_w^2L_{\mathcal{S}_i}(w_{m}^i(w_2,\phi_2),\phi_2))  \bigg\|}_{P_{N-1}}.
\end{align}
\end{small}
\hspace{-0.12cm}To upper-bound $P_{N-1}$ in~\cref{eq:qml}, we define a more general quantity $P_t$ by replacing $N-1$ with $t$ in~\cref{eq:qml}. Using the triangle inequality, we have
\begin{align}\label{eq:telob}
P_t &\leq \alpha(1-\alpha\mu)^{t}\| \nabla_w^2L_{\mathcal{S}_i}(w_{t}^i(w_1,\phi_1),\phi_1)) -\nabla_w^2L_{\mathcal{S}_i}(w_{t}^i(w_2,\phi_2),\phi_2)) \| + (1-\alpha\mu) P_{t-1} \nonumber
\\\leq& (1-\alpha\mu) P_{t-1} + \alpha \rho(1-\alpha\mu)^{\frac{3t}{2}}  \|w_1-w_2\| +(1-\alpha\mu)^t\alpha\rho\frac{2L+\mu}{\mu}\|\phi_1-\phi_2\|.
\end{align}
Telescoping~\cref{eq:telob} over $t$ from $1$ to $N-1$ yields
\begin{align*}
P_{N-1}\leq &(1-\alpha\mu)^{N-1} P_0 + \sum_{t=1}^{N-1}  \alpha \rho(1-\alpha\mu)^{\frac{3t}{2}}  \|w_1-w_2\|(1-\alpha \mu)^{N-1-t} \nonumber
\\&+\sum_{t=1}^{N-1}(1-\alpha\mu)^t\alpha\rho\left(\frac{2L}{\mu}+1\right)\|\phi_1-\phi_2\|(1-\alpha \mu)^{N-1-t}, 
\end{align*}
which, in conjunction with $P_0\leq \alpha\rho(\|w_1-w_2\|+\|\phi_1-\phi_2\|)$, yields
\begin{align*}
P_{N-1}\leq& (1-\alpha\mu)^{N-1} \alpha\rho(\|w_1-w_2\|+\|\phi_1-\phi_2\|) +(1-\alpha\mu)^{N-1}\frac{ \alpha \rho \|w_1-w_2\| \sqrt{1-\alpha\mu}}{1-\sqrt{1-\alpha\mu}} \nonumber
\\&+ \alpha\rho\left(\frac{2L}{\mu}+1\right)\|\phi_1-\phi_2\|(N-1)(1-\alpha\mu)^{N-1} \nonumber
\\\leq& \frac{2\rho}{\mu}(1-\alpha\mu)^{N-1}\|w_1-w_2\|+  \alpha\rho\left(\frac{2L}{\mu}+1\right)\|\phi_1-\phi_2\|N(1-\alpha\mu)^{N-1}, 
\end{align*}
which, in conjunction with \cref{eq:qml}, yields 
\begin{align}\label{eq:qss}
Q\leq  \frac{2\rho M}{\mu}&(1-\alpha\mu)^{N-1}\|w_1-w_2\| \nonumber
\\&+  \alpha\rho M\left(\frac{2L}{\mu}+1\right)\|\phi_1-\phi_2\|N(1-\alpha\mu)^{N-1}.
\end{align}
Substituting~\cref{eq:pss} and~\cref{eq:qss} into~\cref{eq:initss} yields
\begin{align}\label{eq:mamamiya}
\Big\| \frac{\partial L_{\mathcal{D}_i}( w^i_N,\phi)}{\partial w} \Big |_{(w_1,\phi_1)} &-  \frac{\partial L_{\mathcal{D}_i}( w^i_N,\phi)}{\partial w} \Big |_{(w_2,\phi_2)} \Big\| \nonumber
\\\leq \Big((1-\alpha\mu)^{\frac{3N}{2}} L+& \frac{2\rho M}{\mu}(1-\alpha\mu)^{N-1}\Big) \|w_1-w_2\|  \nonumber
\\+  \Big((1-\alpha\mu)^NL&+\alpha\rho MN(1-\alpha\mu)^{N-1}\Big)\left(\frac{2L}{\mu}+1\right)\|\phi_1-\phi_2\|.
\end{align} 
Based on the definition $ L^{meta}( w,\phi)= \mathbb{E}_i L_{\mathcal{D}_i}( w^i_N,\phi)$ and using the Jensen's inequality, we have 
\begin{align}\label{eq:jensen}
 \Big\| \frac{\partial L^{meta}( w,\phi)}{\partial w} \big |_{(w_1,\phi_1)} -&  \frac{\partial L^{meta}( w,\phi)}{\partial w} \big |_{(w_2,\phi_2)} \Big\|  \nonumber
 \\&\leq \mathbb{E}_i  \Big\| \frac{\partial L_{\mathcal{D}_i}( w^i_N,\phi)}{\partial w} \Big |_{(w_1,\phi_1)} -  \frac{\partial L_{\mathcal{D}_i}( w^i_N,\phi)}{\partial w} \Big |_{(w_2,\phi_2)} \Big\|. 
\end{align}
Combining~\cref{eq:mamamiya} and~\cref{eq:jensen} completes the proof of the first item. 

We next prove the Lipschitz property of the partial gradient $\frac{\partial L_{\mathcal{D}_i}( w^i_N, \phi)}{\partial \phi}$. For notational convenience, we define several quantities below.
\begin{align}\label{eq:notations}
Q_m(w,\phi) &= \nabla_\phi\nabla_w L_{\mathcal{S}_i}(w_{m}^i(w,\phi),\phi), \;U_m(w,\phi)  = \prod_{j=m+1}^{N-1}(I-\alpha\nabla_w^2L_{\mathcal{S}_i}(w_{j}^i(w,\phi),\phi)), \nonumber
\\V_m(w,\phi) & = \nabla_w L_{\mathcal{D}_i}(w_N^i(w,\phi),\phi),
\end{align}
where we let $w_m^i(w,\phi)$ denote the $m^{th}$ inner-loop iteration starting from $(w,\phi)$.
Using an approach similar to the proof for~\Cref{le:gd_form}, we have
\begin{align}\label{eq:ainiyo11}
\frac{\partial L_{\mathcal{D}_i}( w^i_{N}, \phi)}{\partial \phi} =& -\alpha \sum_{m=0}^{N-1}\nabla_\phi\nabla_w L_{\mathcal{S}_i}(w_{m}^i,\phi) \prod_{j=m+1}^{N-1}(I-\alpha\nabla_w^2L_{\mathcal{S}_i}(w_{j}^i,\phi))\nabla_w L_{\mathcal{D}_i}(w_{N}^i,\phi) \nonumber
\\&+\nabla_\phi L_{\mathcal{D}_i}(w_{N}^i,\phi).
\end{align}
 Then, we have 
\begin{align}\label{eq:sect}
\Big\|&\frac{\partial L_{\mathcal{D}_i}( w^i_N, \phi)}{\partial \phi}\Big |_{(w_1,\phi_1)}  - \frac{\partial L_{\mathcal{D}_i}( w^i_N, \phi)}{\partial \phi} \Big |_{(w_2,\phi_2)}\Big\| \nonumber
\\&\;\leq \alpha\sum_{m=0}^{N-1}\|Q_m(w_1,\phi_1)U_m(w_1,\phi_1)V_m(w_1,\phi_1)-Q_m(w_2,\phi_2)U_m(w_2,\phi_2)V_m(w_2,\phi_2)\|  \nonumber
\\&\;\quad+\|\nabla_\phi L_{\mathcal{D}_i}(w_{N}^i(w_1,\phi_1),\phi_1)-\nabla_\phi L_{\mathcal{D}_i}(w_{N}^i(w_2,\phi_2),\phi_2)\|.
\end{align}
Using the triangle inequality, we have 
\begin{align}\label{R1R2R3} 
 \|Q_m(w_1,&\phi_1)U_m(w_1,\phi_1)V_m(w_1,\phi_1)-Q_m(w_2,\phi_2)U_m(w_2,\phi_2)V_m(w_2,\phi_2)\| \nonumber
 \\\leq &\underbrace{\|Q_m(w_1,\phi_1)-Q_m(w_2,\phi_2)\|\|U_m(w_1,\phi_1)\|\|V_m(w_1,\phi_1)\|}_{R_1}  \nonumber
 \\&+ \underbrace{\|Q_m(w_2,\phi_2)\|\|U_m(w_1,\phi_1)-U_m(w_2,\phi_2)\|\|V_m(w_1,\phi_1)\|}_{R_2}\nonumber
 \\&+\underbrace{\|Q_m(w_2,\phi_2)\|\|U_m(w_2,\phi_2)\|\|V_m(w_1,\phi_1)-V_m(w_2,\phi_2)\|}_{R_3}.
\end{align} 
Combining~\cref{eq:sect} and~\cref{R1R2R3}, we have  
\begin{align}\label{eq:initllls}
&\Big\|\frac{\partial L_{\mathcal{D}_i}( w^i_N, \phi)}{\partial \phi}\Big |_{(w_1,\phi_1)}  - \frac{\partial L_{\mathcal{D}_i}( w^i_N, \phi)}{\partial \phi} \Big |_{(w_2,\phi_2)}\Big\| \nonumber
\\&\;\;\leq \alpha\sum_{m=0}^{N-1}(R_1+R_2+R_3) +\|\nabla_\phi L_{\mathcal{D}_i}(w_{N}^i(w_1,\phi_1),\phi_1)-\nabla_\phi L_{\mathcal{D}_i}(w_{N}^i(w_2,\phi_2),\phi_2)\|.
\end{align}
To upper-bound $R_1$, we have 
\begin{align}\label{sR1}
R_1&\leq \tau (\|w_m^i(w_1,\phi_1)-w_m^i(w_2,\phi_2)\| + \|\phi_1-\phi_2\|) (1-\alpha \mu)^{N-m-1} M \nonumber
\\\leq& \tau M(1-\alpha\mu)^{N-\frac{m}{2}-1}  \|w_1-w_2\| + \tau M\Big(\frac{2L}{\mu}+1\Big) (1-\alpha \mu)^{N-m-1}\|\phi_1-\phi_2\|,
\end{align}
where the second inequality follows from~\Cref{le:support}. 
 
For $R_2$, based on Assumptions~\ref{assm:smooth} and~\ref{assm:second}, we have
\begin{align}\label{eq:rfist}
R_2\leq LM\|U_m(w_1,\phi_1)-U_m(w_2,\phi_2)\|.
\end{align}
Using the definitions of $U_m(w_1,\phi_1)$ and $U_m(w_2,\phi_2)$ in~\cref{eq:notations} and using the triangle inequality, we have 
\begin{align*}
\|U_m&(w_1,\phi_1)-U_m(w_2,\phi_2)\|  \nonumber
\\\leq& \alpha \|\nabla_w^2L_{\mathcal{S}_i}(w_{m+1}^i(w_1,\phi_1),\phi_1)-\nabla_w^2L_{\mathcal{S}_i}(w_{m+1}^i(w_2,\phi_2),\phi_2)\| \|U_{m+1}(w_1,\phi_1)\| \nonumber
\\&+ \|I-\alpha\nabla_w^2L_{\mathcal{S}_i}(w_{m+1}^i(w_1,\phi_1),\phi_1)\| \|U_{m+1}(w_1,\phi_1)-U_{m+1}(w_2,\phi_2)\| \nonumber
\\\leq&\alpha \rho (1-\alpha \mu)^{N-m-2}(\|w_{m+1}^i(w_1,\phi_1)-w_{m+1}^i(w_2,\phi_2)\| + \|\phi_1-\phi_2\|) \nonumber
\\&+ (1-\alpha \mu)\|U_{m+1}(w_1,\phi_1)-U_{m+1}(w_2,\phi_2)\| \nonumber
\\\leq&\alpha \rho (1-\alpha \mu)^{N-m-2}\Big((1-\alpha\mu)^{\frac{m+1}{2}}  \|w_1-w_2\| +  \Big(\frac{2L}{\mu}+1\Big)\|\phi_1-\phi_2\|\Big) \nonumber
\\&\hspace{3cm}+ (1-\alpha \mu)\|U_{m+1}(w_1,\phi_1)-U_{m+1}(w_2,\phi_2)\|,
\end{align*} 
where the last  inequality follows from~\Cref{le:support}.  
Telescoping the above inequality over $m$ yields
\begin{align}
\|&U_m(w_1,\phi_1)-U_m(w_2,\phi_2)\| \nonumber
\\&\leq (1-\alpha\mu)^{N-m-2}\|U_{N-2}(w_1,\phi_1)-U_{N-2}(w_2,\phi_2)\|  \nonumber
\\&\quad+\sum_{t=0}^{N-m-3} (1-\alpha\mu)^{t}\alpha \rho (1-\alpha \mu)^{N-m-t-2} \nonumber
\\&\hspace{3cm}\times\Big((1-\alpha\mu)^{\frac{m+t+1}{2}}  \|w_1-w_2\| +  \Big(\frac{2L}{\mu}+1\Big)\|\phi_1-\phi_2\|\Big),\nonumber
\end{align}
which, in conjunction with~\cref{eq:notations}, yields
\begin{align}\label{eq:ggsmida}
\|U_m(w_1,\phi_1)&-U_m(w_2,\phi_2)\| \leq \left( \frac{\alpha\rho}{1-\alpha\mu}+\frac{2\rho}{\mu}\right)(1-\alpha\mu)^{N-1-\frac{m}{2}}\|w_1-w_2\|  \nonumber
\\+&\alpha(N-1-m)\left( \rho+\frac{2\rho L}{\mu} \right)(1-\alpha\mu)^{N-2-m}\|\phi_1-\phi_2\|.
\end{align}
Combining~\cref{eq:rfist} and~\cref{eq:ggsmida} yields
\begin{align}\label{eq:r2bb}
R_2\leq& LM \left( \frac{\alpha\rho}{1-\alpha\mu}+\frac{2\rho}{\mu}\right)(1-\alpha\mu)^{N-1-\frac{m}{2}}\|w_1-w_2\|  \nonumber
\\&\quad+\alpha LM(N-1-m)\left( \rho+\frac{2\rho L}{\mu} \right)(1-\alpha\mu)^{N-2-m}\|\phi_1-\phi_2\|.
\end{align}
For $R_3$, using the triangle inequality, we have 
\begin{align}\label{eq:r3bb}
R_3&\leq L(1-\alpha \mu)^{N-m-1} L(\|w_N^i(w_1,\phi_1)-w_N^i(w_2,\phi_2)\|+\|\phi_1-\phi_2\|) \nonumber
\\\leq& L^2(1-\alpha\mu)^{\frac{3N}{2}-m-1}\|w_1-w_2\| + L^2\left(\frac{2L}{\mu}+1\right)(1-\alpha\mu)^{N-1-m}\|\phi_1-\phi_2\|.
\end{align}
where the last inequality follows from~\Cref{le:support}.

Combine $R_1,R_2$ and $R_3$ in~\cref{sR1},~\cref{eq:r2bb} and~\cref{eq:r3bb}, we have
\begin{align}\label{eq:youdianda}
\sum_{m=0}^{N-1}&(R_1+R_2+R_3) \leq  \frac{2\tau M}{\alpha\mu}(1-\alpha\mu)^{\frac{N-1}{2}}  \|w_1-w_2\| + \frac{\tau M}{\alpha\mu}\Big(\frac{2L}{\mu}+1\Big) \|\phi_1-\phi_2\| \nonumber
\\ &+ \frac{2LM}{\alpha\mu} \left( \frac{\alpha\rho}{1-\alpha\mu}+\frac{2\rho}{\mu}\right)(1-\alpha\mu)^{\frac{N-1}{2}}\|w_1-w_2\| +\frac{\alpha LM}{\alpha^2\mu^2}\left( \rho+\frac{2\rho L}{\mu} \right)\|\phi_1-\phi_2\| \nonumber
\\&+\frac{L^2}{\alpha\mu} (1-\alpha\mu)^{\frac{N}{2}}\|w_1-w_2\| + \frac{L^2}{\alpha\mu}\left(\frac{2L}{\mu}+1\right)\|\phi_1-\phi_2\|.
\end{align}
In addition, note that 
\begin{align}\label{eq:ggpopdasda}
 \|\nabla_\phi L_{\mathcal{D}_i}(w_{N}^i(w_1,\phi_1),&\phi_1)-\nabla_\phi L_{\mathcal{D}_i}(w_{N}^i(w_2,\phi_2),\phi_2)\| \nonumber
 \\&\leq (1-\alpha\mu)^{\frac{N}{2}} L \|w_1-w_2\| +  L\left(\frac{2L}{\mu}+1\right)\|\phi_1-\phi_2\|.
 \end{align}
 Combining~\cref{eq:initllls},~\cref{eq:youdianda}, and~\cref{eq:ggpopdasda} yields
\begin{align}\label{eq:opissas}
\Big\|&\frac{\partial L_{\mathcal{D}_i}( w^i_N, \phi)}{\partial \phi}\Big |_{(w_1,\phi_1)}  - \frac{\partial L_{\mathcal{D}_i}( w^i_N, \phi)}{\partial \phi} \Big |_{(w_2,\phi_2)}\Big\| \nonumber
\\ &\hspace{2cm}\leq   \left(  L+\frac{2\tau M}{\mu}+ \frac{2LM}{\mu} \left( \frac{\alpha\rho}{1-\alpha\mu}+\frac{2\rho}{\mu}\right) +\frac{L^2}{\mu} \right)(1-\alpha\mu)^{\frac{N-1}{2}}\|w_1-w_2\| \nonumber
\\&\hspace{2.2cm}+ \left(L+\frac{\tau M}{\mu} + \frac{ LM\rho}{\mu^2}+\frac{L^2}{\mu} \right)\left(\frac{2L}{\mu}+1\right)\|\phi_1-\phi_2\|,
\end{align}
which, using an approach similar to~\cref{eq:jensen}, completes the proof. 
\subsection*{Proof of~\Cref{th:strong-convex} }
For notational convenience, we define
\begin{align}\label{eq:ddfinetes}
g_{w}^i(k) &=  \frac{\partial L_{\mathcal{D}_i}( w^i_{k,N},\phi_k)}{\partial {w_k}},\quad g_{\phi}^i(k)  = \frac{\partial L_{\mathcal{D}_i}( w^i_{k,N},\phi_k)}{\partial {\phi_k}}, \nonumber
\\L_w&=(1-\alpha\mu)^{\frac{3N}{2}} L+ \frac{2\rho M}{\mu}(1-\alpha\mu)^{N-1},L_w^\prime= (L+\alpha\rho MN)(1-\alpha\mu)^{N-1}\Big(\frac{2L}{\mu}+1\Big),\nonumber
\\  L_\phi&= \left(  L+\frac{2\tau M}{\mu}+ \frac{2LM}{\mu} \left( \frac{\alpha\rho}{1-\alpha\mu}+\frac{2\rho}{\mu}\right) +\frac{L^2}{\mu} \right)(1-\alpha\mu)^{\frac{N-1}{2}}, \nonumber
\\L_\phi^\prime &=\left(L+\frac{\tau M}{\mu} + \frac{ LM\rho}{\mu^2}+\frac{L^2}{\mu} \right)\left(\frac{2L}{\mu}+1\right).
\end{align}
Then, the updates of~\Cref{alg:anil} are given by 
\begin{align}\label{alg:sgdpf}
w_{k+1}=  w_{k} - \frac{\beta_w}{B}\sum_{i\in\mathcal{B}_k} g_{w}^i(k)\,\text{ and }\, \phi_{k+1}= \phi_{k} - \frac{\beta_\phi}{B}\sum_{i\in\mathcal{B}_k}g_{\phi}^i(k).
\end{align}
Based on~\cref{eq:mamamiya} and~\cref{eq:opissas} in the proof of~\Cref{le:strong-convex}, we have 
\begin{small}
\begin{align*}
L^{meta}(w_{k+1},\phi_k) \leq & L^{meta}(w_k,\phi_k) + \big\langle \frac{\partial L^{meta}(w_k,\phi_k)}{\partial w_k}, w_{k+1}-w_k  \big\rangle + \frac{L_w}{2} \|w_{k+1}-w_k\|^2, \nonumber
\\ L^{meta}(w_{k+1},\phi_{k+1}) \leq & L^{meta}(w_{k+1},\phi_k) + \big\langle \frac{\partial L^{meta}(w_{k+1},\phi_k)}{\partial \phi_k}, \phi_{k+1}-\phi_k  \big\rangle + \frac{L^\prime_\phi}{2} \|\phi_{k+1}-\phi_k\|^2.
\end{align*}
\end{small}
\hspace{-0.12cm}Adding the above two inequalities, we have
\begin{align}\label{eq:gg1anilscasa}
L^{meta}(w_{k+1},\phi_{k+1}) \leq&  L^{meta}(w_k,\phi_k) + \left\langle \frac{\partial L^{meta}(w_k,\phi_k)}{\partial w_k}, w_{k+1}-w_k  \right\rangle + \frac{L_w}{2} \|w_{k+1}-w_k\|^2 \nonumber
\\ &+\left\langle \frac{\partial L^{meta}(w_{k},\phi_k)}{\partial \phi_k}, \phi_{k+1}-\phi_k  \right\rangle + \frac{L_\phi^\prime}{2} \|\phi_{k+1}-\phi_k\|^2  \nonumber
\\&+\left\langle \frac{\partial L^{meta}(w_{k+1},\phi_k)}{\partial \phi_k}-\frac{\partial L^{meta}(w_{k},\phi_k)}{\partial \phi_k}, \phi_{k+1}-\phi_k  \right\rangle. 
\end{align}
Based on the Cauchy-Schwarz inequality, we have 
\begin{align}\label{eq:gg2}
\Big\langle \frac{\partial L^{meta}(w_{k+1},\phi_k)}{\partial \phi_k}-&\frac{\partial L^{meta}(w_{k},\phi_k)}{\partial \phi_k}, \phi_{k+1}-\phi_k  \Big\rangle  \nonumber
\\&\leq  L_\phi\|w_{k+1}-w_k\|\|\phi_{k+1}-\phi_k\|\nonumber
\\&\leq  \frac{L_\phi}{2}\|w_{k+1}-w_k\|^2 + \frac{L_\phi}{2}\|\phi_{k+1}-\phi_k\|^2.
\end{align}
Combining~\cref{eq:gg1anilscasa} and~\cref{eq:gg2}, we have   
\begin{align*}
L^{meta}(w_{k+1},\phi_{k+1}) \leq&  L^{meta}(w_k,\phi_k) + \left\langle \frac{\partial L^{meta}(w_k,\phi_k)}{\partial w_k}, w_{k+1}-w_k  \right\rangle  \nonumber
\\ &+ \frac{L_w+L_\phi }{2}\|w_{k+1}-w_k\|^2+\left\langle \frac{\partial L^{meta}(w_{k},\phi_k)}{\partial \phi_k}, \phi_{k+1}-\phi_k  \right\rangle  \nonumber
\\&+ \frac{L_\phi+L_\phi^\prime}{2}\|\phi_{k+1}-\phi_k\|^2,  \nonumber
\end{align*}
which, in conjunction with the updates in~\cref{alg:sgdpf}, yields 
\begin{align}\label{eq:wellplay}
L^{meta}&(w_{k+1},\phi_{k+1}) \nonumber
\\\leq&  L^{meta}(w_k,\phi_k) - \left\langle \frac{\partial L^{meta}(w_k,\phi_k)}{\partial w_k}, \frac{\beta_w}{B}\sum_{i\in\mathcal{B}_k} g_{w}^i (k) \right\rangle + \frac{L_w+L_\phi }{2} \Big\| \frac{\beta_w}{B}\sum_{i\in\mathcal{B}_k} g_{w}^i (k)\Big\|^2 \nonumber
\\ &-\left\langle \frac{\partial L^{meta}(w_{k},\phi_k)}{\partial \phi_k},\frac{\beta_\phi}{B}\sum_{i\in\mathcal{B}_k}g_{\phi}^i(k)  \right\rangle + \frac{L_\phi+L_\phi^\prime}{2} \Big\|\frac{\beta_\phi}{B}\sum_{i\in\mathcal{B}_k}g_{\phi}^i(k)\Big\|^2.
\end{align}
Let $\mathbb{E}_k=\mathbb{E}(\cdot | w_k,\phi_k)$ for simplicity. Then, conditioning on $w_k,\phi_k$, and taking expectation over \cref{eq:wellplay}, we have 
\begin{align}\label{eq:diedie}
\mathbb{E}_k L^{meta}(w_{k+1}&,\phi_{k+1})  \nonumber
\\\overset{(i)}\leq&  L^{meta}(w_k,\phi_k) - \beta_w\left\| \frac{\partial L^{meta}(w_k,\phi_k)}{\partial w_k}\right\|^2 + \frac{L_w+L_\phi}{2} \mathbb{E}_k\Big\| \frac{\beta_w}{B}\sum_{i\in\mathcal{B}_k} g_{w}^i(k) \Big\|^2 \nonumber
\\ &-\beta_\phi\left\| \frac{\partial L^{meta}(w_{k},\phi_k)}{\partial \phi_k}  \right\| + \frac{L_\phi+L_\phi^\prime}{2} \mathbb{E}_k\Big\|\frac{\beta_\phi}{B}\sum_{i\in\mathcal{B}_k}g_{\phi}^i(k)\Big\|^2 \nonumber
\\\leq & L^{meta}(w_k,\phi_k) - \beta_w\left\| \frac{\partial L^{meta}(w_k,\phi_k)}{\partial w_k}\right\|^2 + \frac{(L_w+L_\phi)\beta_w^2}{2B}\mathbb{E}_k\big\|g_{w}^i(k) \big\|^2 \nonumber
\\ &+ \frac{L_\phi+L_w}{2}\beta_w^2 \left\| \frac{\partial L^{meta}(w_k,\phi_k)}{\partial w_k}\right\|^2-\beta_\phi\left\| \frac{\partial L^{meta}(w_{k},\phi_k)}{\partial \phi_k}  \right\|^2 \nonumber
\\&+ \frac{L_\phi+L_\phi^\prime}{2} \left(\frac{\beta_\phi^2}{B}\mathbb{E}_k\big\|g_{\phi}^i(k) \big\|^2 + \beta_\phi^2 \left\| \frac{\partial L^{meta}(w_k,\phi_k)}{\partial \phi_k}\right\|^2\right),
\end{align}
where $(i)$ follows from the fact that $\mathbb{E}_k g_w^i(k)= \frac{\partial L^{meta}(w_k,\phi_k)}{\partial w_k}$ and $\mathbb{E}_k g_\phi^i(k)= \frac{\partial L^{meta}(w_k,\phi_k)}{\partial \phi_k}$.

 Our next
step is to upper-bound $\mathbb{E}_k\big\|g_{w}^i(k) \big\|^2$ and $\mathbb{E}_k\big\|g_{\phi}^i(k) \big\|^2$ in~\cref{eq:diedie}. Based on the definitions of $g_{w}^i(k)$ in~\cref{eq:ddfinetes} and using the explicit forms of the meta gradients in~\Cref{le:gd_form}, we have
\begin{align}\label{eq:omgs}
\mathbb{E}_k\big\|g_{w}^i(k) \big\|^2 \leq& \mathbb{E}_k\Big\|\prod_{m=0}^{N-1}(I - \alpha \nabla_w^2L_{\mathcal{S}_i}(w_{k,m}^i,\phi_k)) \nabla_{w} L_{\mathcal{D}_i} (w_{k,N}^i,\phi_k)\Big\|^2 \nonumber\\\leq &(1-\alpha\mu)^{2N} M^2.
\end{align}
Using an approach similar to~\cref{eq:omgs}, we have 
\begin{align}\label{eq:anothergg}
\mathbb{E}_k\big\|&g_{\phi}^i(k) \big\|^2  \nonumber
\\ \leq& 2\mathbb{E}_k\bigg\|\alpha \sum_{m=0}^{N-1}\nabla_\phi\nabla_w L_{\mathcal{S}_i}(w_{k,m}^i,\phi_k) \prod_{j=m+1}^{N-1}(I-\alpha\nabla_w^2L_{\mathcal{S}_i}(w_{k,j}^i,\phi_k))\nabla_w L_{\mathcal{D}_i}(w_{k,N}^i,\phi_k)\bigg\|^2 \nonumber
\\&+2\|\nabla_\phi L_{\mathcal{D}_i}(w_{k,N}^i,\phi_k)\|^2 \nonumber
\\ \leq& 2\alpha^2 L^2 M^2 \mathbb{E}_k\Big(\sum_{m=0}^{N-1} (1-\alpha \mu)^{N-1-m}\Big)^2 +2 M^2 \nonumber
\\< & \frac{2L^2M^2}{\mu^2}+2M^2 <  2M^2 \left( \frac{L^2}{\mu^2}+1 \right).
\end{align}
Substituting~\cref{eq:omgs} and~\cref{eq:anothergg} into~\cref{eq:diedie} yields
\begin{align}\label{eq:jingyi}
\mathbb{E}_k L^{meta}&(w_{k+1},\phi_{k+1}) \leq L^{meta}(w_k,\phi_k) - \left(\beta_w-\frac{L_w+L_\phi}{2}\beta_w^2\right)\left\| \frac{\partial L^{meta}(w_k,\phi_k)}{\partial w_k}\right\|^2 \nonumber
\\&+ \frac{(L_w+L_\phi)\beta_w^2}{2B}(1-\alpha\mu)^{2N} M^2-\left(\beta_\phi -\frac{L_\phi+L_\phi^\prime}{2}\beta_\phi^2 \right)\left\| \frac{\partial L^{meta}(w_{k},\phi_k)}{\partial \phi_k}  \right\|^2\nonumber
\\ & + \frac{ (L_\phi+L_\phi^\prime) \beta_\phi^2}{B} M^2 \left( \frac{L^2}{\mu^2}+1 \right).
\end{align}
Let $\beta_w=\frac{1}{L_w+L_\phi}$ and $\beta_\phi = \frac{1}{L_\phi+L_\phi^\prime}$. Then, 
unconditioning on $w_k$ and $\phi_k$ and telescoping~\cref{eq:jingyi} over $k$ from $0$ to $K-1$ yield 
\begin{align}
&\frac{\beta_w}{2}\frac{1}{K}\sum_{k=0}^{K-1}\mathbb{E}\left\| \frac{\partial L^{meta}(w_k,\phi_k)}{\partial w_k}\right\|^2 + \frac{\beta_\phi}{2}\frac{1}{K}\sum_{k=0}^{K-1}\mathbb{E}\left\| \frac{\partial L^{meta}(w_{k},\phi_k)}{\partial \phi_k}  \right\|^2 \nonumber
\\& \leq  \frac{L^{meta}(w_0,\phi_0)-\min_{w,\phi}L^{meta}(w,\phi) }{K}+ \frac{\beta_w M^2(1-\alpha\mu)^{2N}}{2B}+\frac{  \beta_\phi M^2\big( \frac{L^2}{\mu^2}+1 \big)}{B}.
\end{align}
Let $\Delta = L^{meta}(w_0,\phi_0)-\min_{w,\phi}L^{meta}(w,\phi)$ and let $\xi$ be chosen from $\{0,...,K-1\}$ uniformly at random. Then, we have 
\begin{align*}
\mathbb{E}\left\| \frac{\partial L^{meta}(w_\xi,\phi_\xi)}{\partial w_\xi}\right\|^2  \leq& \frac{2\Delta (L_w+L_\phi)}{K} + \frac{(1-\alpha\mu)^{2N} M^2}{B}+\frac{L_w+L_\phi}{L_\phi+L_\phi^\prime}\frac{2 }{B} M^2 \left( \frac{L^2}{\mu^2}+1 \right), \nonumber
\\\mathbb{E}\left\| \frac{\partial L^{meta}(w_\xi,\phi_\xi)}{\partial \phi_\xi}\right\|^2  \leq &\frac{2\Delta (L_\phi+L_\phi^\prime)}{K} +\frac{L_\phi+L_\phi^\prime}{L_w+L_\phi}\frac{1}{B}(1-\alpha\mu)^{2N} M^2+\frac{ 2}{B} M^2 \left( \frac{L^2}{\mu^2}+1 \right),
\end{align*}
which, in conjunction with the definitions of $L_\phi,L_\phi^\prime$ and $L_w$ in~\cref{eq:ddfinetes} and $\alpha=\frac{\mu}{L^2}$, yields 
\begin{align*}
\mathbb{E}\left\| \frac{\partial L^{meta}(w_\xi,\phi_\xi)}{\partial w_\xi}\right\|^2  \leq& \mathcal{O}\Bigg( \frac{  \frac{1}{\mu^2}\left(1-\frac{\mu^2}{L^2}\right)^{\frac{N}{2}}}{K}      +\frac{\frac{1}{\mu} \left(1-\frac{\mu^2}{L^2}\right)^{\frac{N}{2}}}{B}     \Bigg), \nonumber
\\\mathbb{E}\left\| \frac{\partial L^{meta}(w_\xi,\phi_\xi)}{\partial \phi_\xi}\right\|^2  \leq & \mathcal{O}\Bigg(\frac{ \frac{1}{\mu^2}\left(1-\frac{\mu^2}{L^2}\right)^{\frac{N}{2}}+\frac{1}{\mu^3}}{K} +\frac{\frac{1}{\mu}\left(1-\frac{\mu^2}{L^2}\right)^{\frac{3N}{2}}+\frac{1}{\mu^2}}{B}\Bigg).
\end{align*}
To achieve an $\epsilon$-stationary point, i.e., {\small $\mathbb{E}\left\| \frac{\partial L^{meta}(w,\phi)}{\partial w}\right\|^2 <\epsilon,\mathbb{E}\left\| \frac{\partial L^{meta}(w,\phi)}{\partial w}\right\|^2 <\epsilon$}, ANIL requires at most 
\begin{align*}
KBN=&\mathcal{O}\left(\frac{L^2}{\mu^2}\left(1-\frac{\mu^2}{L^2}\right)^{\frac{N}{2}}+\frac{L^3}{\mu^3}\right)\left(\frac{L}{\mu}\left(1-\frac{\mu^2}{L^2}\right)^{\frac{3N}{2}}+\frac{L^2}{\mu^2}\right)\epsilon^{-2}
\\\leq&\mathcal{O}\left(\frac{N}{\mu^4}\left(1-\frac{\mu^2}{L^2}\right)^{\frac{N}{2}}+\frac{N}{\mu^5}\right)\epsilon^{-2}
\end{align*}
 gradient evaluations in $w$,  $KB=\mathcal{O}\Big(\mu^{-4}\left(1-\frac{\mu^2}{L^2}\right)^{N/2}+\mu^{-5}\Big)\epsilon^{-2}$ gradient evaluations in $\phi$,  and $KBN=\mathcal{O}\Big(\frac{N}{\mu^{4}}\left(1-\frac{\mu^2}{L^2}\right)^{N/2}+\frac{N}{\mu^{5}}\Big)\epsilon^{-2}$ evaluations of second-order derivatives.

\section{Proof for Nonconvex Inner Loop}\label{appen:smooth_nonconvex}
\subsection*{Proof of~\Cref{le:smooth_nonconvex}}
Based on the explicit forms of the meta gradient in~\cref{eq:ainiyo} and using an approach similar to~\cref{eq:initss}, we have 
\begin{align}\label{eq:def}
 &\Big\| \frac{\partial L_{\mathcal{D}_i}( w^i_N,\phi)}{\partial w} \Big |_{(w_1,\phi_1)} -  \frac{\partial L_{\mathcal{D}_i}( w^i_N,\phi)}{\partial w} \Big |_{(w_2,\phi_2)} \Big\| \nonumber
\\ &= \Big\|\prod_{m=0}^{N-1}(I - \alpha \nabla_w^2L_{\mathcal{S}_i}(w_{m}^i(w_1,\phi_1),\phi_1)) \nabla_{w} L_{\mathcal{D}_i} (w_{N}^i(w_1,\phi_1),\phi_1)\nonumber
\\&\hspace{2cm}-\prod_{m=0}^{N-1}(I - \alpha \nabla_w^2L_{\mathcal{S}_i}(w_{m}^i(w_2,\phi_2),\phi_2)) \nabla_{w} L_{\mathcal{D}_i} (w_{N}^i(w_2,\phi_2),\phi_2)\Big\|, 
\end{align}
where $w_m^i(w,\phi)$ is obtained through the gradient descent steps in~\cref{eq:updates}.

Using the triangle inequality in~\cref{eq:def} yields
\begin{align}\label{eq:diff} 
\Big\| &\frac{\partial L_{\mathcal{D}_i}( w^i_N,\phi)}{\partial w} \Big |_{(w_1,\phi_1)} -  \frac{\partial L_{\mathcal{D}_i}( w^i_N,\phi)}{\partial w} \Big |_{(w_2,\phi_2)} \Big\| \leq  \Big\|\prod_{m=0}^{N-1}(I - \alpha \nabla_w^2L_{\mathcal{S}_i}(w_{m}^i(w_2,\phi_2),\phi_2)) \Big\| \nonumber
\\&\quad\quad\times\Big\|\nabla_{w} L_{\mathcal{D}_i} (w_{N}^i(w_1,\phi_1),\phi_1)-\nabla_{w} L_{\mathcal{D}_i} (w_{N}^i(w_2,\phi_2),\phi_2)\Big\| \nonumber
\\ &\quad\quad+\Big\|\prod_{m=0}^{N-1}(I - \alpha \nabla_w^2L_{\mathcal{S}_i}(w_{m}^i(w_1,\phi_1),\phi_1)) \nabla_{w} L_{\mathcal{D}_i} (w_{N}^i(w_1,\phi_1),\phi_1)\nonumber
\\&\quad\hspace{2cm}-\prod_{m=0}^{N-1}(I - \alpha \nabla_w^2L_{\mathcal{S}_i}(w_{m}^i(w_2,\phi_2),\phi_2)) \nabla_{w} L_{\mathcal{D}_i} (w_{N}^i(w_1,\phi_1),\phi_1)\Big\|. 
\end{align}
Our next two steps are to upper-bound the two terms at the right hand side of~\cref{eq:diff}, respectively.

{Step 1: Upper-bound the first term at the right hand side of~\cref{eq:diff}.}
\begin{small}
\begin{align}\label{eq:first}
&\Big\|\prod_{m=0}^{N-1}(I - \alpha \nabla_w^2L_{\mathcal{S}_i}(w_{m}^i(w_2,\phi_2),\phi_2)) \Big\|\Big\|\nabla_{w} L_{\mathcal{D}_i} (w_{N}^i(w_1,\phi_1),\phi_1)-\nabla_{w} L_{\mathcal{D}_i} (w_{N}^i(w_2,\phi_2),\phi_2)\Big\| \nonumber
\\&\overset{(i)}\leq (1+\alpha L)^N \Big\|\nabla_{w} L_{\mathcal{D}_i} (w_{N}^i(w_1,\phi_1),\phi_1)-\nabla_{w} L_{\mathcal{D}_i} (w_{N}^i(w_2,\phi_2),\phi_2)\Big\|\nonumber
\\&\overset{(ii)}\leq (1+\alpha L)^N L(\|w_{N}^i(w_1,\phi_1)-w_{N}^i(w_2,\phi_2)\|+\|\phi_1-\phi_2\|),
\end{align}
\end{small}
\hspace{-0.12cm}  where $(i)$ follows from the fact that $\| \nabla_w^2L_{\mathcal{S}_i}(w_{m}^i(w_2,\phi_2),\phi_2)\|\leq L$, and $(ii)$ follows from~\Cref{assm:smooth}. Based on the gradient descent steps in~\cref{eq:updates}, we have, for any $0\leq m\leq N-1$,  
  \begin{align}
 & w_{m+1}^i(w_1,\phi_1) - w_{m+1}^i(w_2,\phi_2)  \nonumber
 \\ &\quad= w_{m}^i(w_1,\phi_1) -w_{m}^i(w_2,\phi_2) - \alpha\big(\nabla_{w} L_{\mathcal{S}_i} (w_{m}^i(w_1,\phi_1),\phi_1) -\nabla_{w} L_{\mathcal{S}_i} (w_{m}^i(w_2,\phi_2),\phi_2)\big).\nonumber
  \end{align}
Based on the above equality, we further obtain 
\begin{align*}
\|&w_{m+1}^i(w_1,\phi_1) - w_{m+1}^i(w_2,\phi_2)\|
\\&\leq \|w_{m}^i(w_1,\phi_1) -w_{m}^i(w_2,\phi_2) \| + \alpha \|\nabla_{w} L_{\mathcal{S}_i} (w_{m}^i(w_1,\phi_1),\phi_1) -\nabla_{w} L_{\mathcal{S}_i} (w_{m}^i(w_2,\phi_2),\phi_2)\| \nonumber 
\\&\leq(1+\alpha L)\|w_{m}^i(w_1,\phi_1) -w_{m}^i(w_2,\phi_2) \|  + \alpha L \|\phi_1-\phi_2\|,\nonumber
\end{align*}
where the last inequality follows from~\Cref{assm:smooth}.
Telescoping the above inequality over $m$ from $0$ to $N-1$ yields
\begin{align}\label{eq:wn}
\|w_{N}^i(w_1,\phi_1) - &w_{N}^i(w_2,\phi_2)\|  \nonumber
\\&\leq (1+\alpha L)^N \|w_1-w_2\| + ((1+\alpha L)^N-1)\|\phi_1-\phi_2\|.
\end{align}
Combining~\cref{eq:first} and~\cref{eq:wn} yields
\begin{align}\label{upb1}
&\Big\|\prod_{m=0}^{N-1}(I - \alpha \nabla_w^2L_{\mathcal{S}_i}(w_{m}^i(w_2,\phi_2),\phi_2)) \Big\|\Big\|\nabla_{w} L_{\mathcal{D}_i} (w_{N}^i(w_1,\phi_1),\phi_1)-\nabla_{w} L_{\mathcal{D}_i} (w_{N}^i(w_2,\phi_2),\phi_2)\Big\| \nonumber
\\&\leq (1+\alpha L)^{2N} L(\|w_1-w_2\|+\|\phi_1-\phi_2\|).
\end{align}

{ Step 2: Upper-bound the second term at the right hand side of~\cref{eq:diff}.}

Based on item 2 in~\Cref{assm:smooth}, we have that $\|\nabla_{w} L_{\mathcal{D}_i}(\cdot,\cdot)\|\leq M$. Then, the second term at the right hand side of~\cref{eq:diff} is further upper-bounded by 
\begin{align}\label{m:quantity}
M\underbrace{\bigg\|\prod_{m=0}^{N-1}(I - \alpha \nabla_w^2L_{\mathcal{S}_i}(w_{m}^i(w_1,\phi_1),\phi_1)) -\prod_{m=0}^{N-1}(I - \alpha \nabla_w^2L_{\mathcal{S}_i}(w_{m}^i(w_2,\phi_2),\phi_2))\bigg\|}_{P_{N-1}}. 
\end{align}
In order to upper-bound $P_{N-1}$ in~\cref{m:quantity}, we define a more general quantity $P_t$ by replacing $N-1$ with $t$ in~\cref{m:quantity}. Based on the triangle inequality, we have 
\begin{align}
P_t \leq& \alpha \bigg\|\prod_{m=0}^{t-1}(I - \alpha \nabla_w^2L_{\mathcal{S}_i}(w_{m}^i,\phi_1))\bigg\|\Big\|  \nabla_w^2L_{\mathcal{S}_i}(w_{t}^i(w_1,\phi_1),\phi_1) - \nabla_w^2L_{\mathcal{S}_i}(w_{t}^i(w_2,\phi_2),\phi_2)\Big\| \nonumber
\\ &+ P_{t-1} \Big\|I - \alpha \nabla_w^2L_{\mathcal{S}_i}(w_{t}^i(w_2,\phi_2),\phi_2)\Big\| \nonumber
\\\leq & \alpha(1+\alpha L)^t \rho (\|w_{t}^i(w_1,\phi_1) -w_{t}^i(w_2,\phi_2)\| + \|\phi_1-\phi_2\|) + (1+\alpha L)P_{t-1}\nonumber
\\\overset{(i)}\leq &\alpha \rho(1+\alpha L)^{2t}  (\|w_1-w_2\|+\|\phi_1-\phi_2\|) + (1+\alpha L) P_{t-1}, \nonumber
\end{align} 
where $(i)$ follows from~\cref{eq:wn}.  Rearranging the above inequality, we have  
\begin{align}\label{eq:idc}
P_t - &\frac{\rho}{L}(1+\alpha L)^{2t+1}  (\|w_1-w_2\|+\|\phi_1-\phi_2\|)  \nonumber
\\&\leq (1+\alpha L)(P_{t-1}-\frac{\rho}{L}(1+\alpha L)^{2t-1}(\|w_1-w_2\|+\|\phi_1-\phi_2\|)).
\end{align}
Telescoping \cref{eq:idc} over $t$ from $1$ to $N-1$ yields
\begin{align}
P_{N-1} - \frac{\rho}{L}(1+\alpha L)^{2N-1}  &(\|w_1-w_2\|+\|\phi_1-\phi_2\|)   \nonumber
\\&\leq (1+\alpha L)^{N}\Big(P_{0}-\frac{\rho}{L}(1+\alpha L)(\|w_1-w_2\|+\|\phi_1-\phi_2\|) \Big),  \nonumber
\end{align}
which, combined  with $P_{0}=\alpha\|\nabla_w^2L_{\mathcal{S}_i}(w_1,\phi_1)-\nabla_w^2L_{\mathcal{S}_i}(w_2,\phi_2)\|\leq \alpha\rho(\|w_1-w_2\|+\|\phi_1-\phi_2\|)$, yields
\begin{align}\label{eq:pn1}
P_{N-1} - \frac{\rho}{L}(1+\alpha L)^{2N-1}  &(\|w_1-w_2\|+\|\phi_1-\phi_2\|)   \nonumber
\\&\leq (1+\alpha L)^{N}\Big(\frac{\rho}{L}(\|w_1-w_2\|+\|\phi_1-\phi_2\|) \Big)\nonumber
\\&\leq \frac{\rho}{L}(1+\alpha L)^{2N-1}  (\|w_1-w_2\|+\|\phi_1-\phi_2\|), 
 \end{align}
 where the last inequality follows because $N\geq 1$. 
 Combining \cref{m:quantity} and~\cref{eq:pn1}, the second term at the right hand side of~\cref{eq:diff} is upper-bounded by 
 \begin{align}\label{upb2}
\frac{ 2M\rho}{L}(1+\alpha L)^{2N-1}  &(\|w_1-w_2\|+\|\phi_1-\phi_2\|).
 \end{align}
 {Step 3: Combine two bounds in Steps 1 and 2.}
 
 Combining~\cref{upb1},~\cref{upb2}, and using $\alpha <\mathcal{O}(\frac{1}{N})$, we have 
 \begin{align}\label{eq:j1}
 \Big\| \frac{\partial L_{\mathcal{D}_i}( w^i_N,\phi)}{\partial w} &\Big |_{(w_1,\phi_1)} -  \frac{\partial L_{\mathcal{D}_i}( w^i_N,\phi)}{\partial w} \Big |_{(w_2,\phi_2)} \Big\| \nonumber
\\ &\leq \Big(1+\alpha L+\frac{2M\rho}{L}\Big) (1+\alpha L)^{2N-1} L(\|w_1-w_2\|+\|\phi_1-\phi_2\|) \nonumber
\\&\leq \text{poly}(M,\rho,\alpha,L) N(\|w_1-w_2\|+\|\phi_1-\phi_2\|),
\end{align}
which, using an approach similar to~\cref{eq:jensen}, completes the proof of the first item in~\Cref{le:smooth_nonconvex}.
We next prove the Lipschitz property of the partial gradient $\frac{\partial L_{\mathcal{D}_i}( w^i_N, \phi)}{\partial \phi}$. Using an approach similar to~\cref{eq:sect} and~\cref{R1R2R3}, we have 
\begin{align}\label{eq:qopqop}
\Big\|&\frac{\partial L_{\mathcal{D}_i}( w^i_N, \phi)}{\partial \phi}\Big |_{(w_1,\phi_1)}  - \frac{\partial L_{\mathcal{D}_i}( w^i_N, \phi)}{\partial \phi} \Big |_{(w_2,\phi_2)}\Big\| \nonumber
\\&\leq \alpha\sum_{m=0}^{N-1}(R_1+R_2+R_3) +\|\nabla_\phi L_{\mathcal{D}_i}(w_{N}^i(w_1,\phi_1),\phi_1)-\nabla_\phi L_{\mathcal{D}_i}(w_{N}^i(w_2,\phi_2),\phi_2)\|,
\end{align}
where $R_1,R_2$ and $R_3$ are defined in~\cref{R1R2R3}.

To upper-bound $R_1$ in the above inequality, we have
\begin{align}\label{R1}
R_1\overset{(i)}\leq &\tau (\|w_m^i(w_1,\phi_1)-w_m^i(w_2,\phi_2)\| + \|\phi_1-\phi_2\|) (1+\alpha L)^{N-m-1} M \nonumber
\\\overset{(ii)}\leq& \tau M (1+\alpha L)^{N-1} (\|w_1-w_2\| + \|\phi_1-\phi_2\|),
\end{align}
where $(i)$ follows from Assumptions~\ref{assm:smooth} and~\ref{assm:second} and $(ii)$ follows from~\cref{eq:wn}.

 For $R_2$, using the triangle inequality, we have
\begin{align}\label{eq:impor}
\|U_m&(w_1,\phi_1)-U_m(w_2,\phi_2)\|  \nonumber
\\\leq& \alpha \|\nabla_w^2L_{\mathcal{S}_i}(w_{m+1}^i(w_1,\phi_1),\phi_1)-\nabla_w^2L_{\mathcal{S}_i}(w_{m+1}^i(w_2,\phi_2),\phi_2)\| \|U_{m+1}(w_1,\phi_1)\| \nonumber
\\&+ \|I-\alpha\nabla_w^2L_{\mathcal{S}_i}(w_{m+1}^i(w_1,\phi_1),\phi_1)\| \|U_{m+1}(w_1,\phi_1)-U_{m+1}(w_2,\phi_2)\| \nonumber
\\\leq&\alpha \rho (1+\alpha L)^{N-m-2}(\|w_{m+1}^i(w_1,\phi_1)-w_{m+1}^i(w_2,\phi_2)\| + \|\phi_1-\phi_2\|) \nonumber
\\&+ (1+\alpha L)\|U_{m+1}(w_1,\phi_1)-U_{m+1}(w_2,\phi_2)\| \nonumber
\\\leq&\alpha \rho (1+\alpha L)^{N-1}(\|w_1-w_2\| + \|\phi_1-\phi_2\|) \nonumber
\\&+ (1+\alpha L)\|U_{m+1}(w_1,\phi_1)-U_{m+1}(w_2,\phi_2)\|.
\end{align}
Telescoping the above inequality over $m$ yields
\begin{align*}
&\|U_m(w_1,\phi_1)-U_m(w_2,\phi_2)\| +\frac{\rho}{L} (1+\alpha L)^{N-1}(\|w_1-w_2\| + \|\phi_1-\phi_2\|)\nonumber
 \\&\leq (1+\alpha L)^{N-m-2}\Big(\|U_{N-2}(w_1,\phi_1)-U_{N-2}(w_2,\phi_2)\|
 \\&\hspace{3.5cm}+\frac{\rho}{L} (1+\alpha L)^{N-1}(\|w_1-w_2\| + \|\phi_1-\phi_2\|)\Big), \nonumber
\end{align*}
which, in conjunction with 
\begin{align*}
\|U_{N-2}(w_1,\phi_1)&-U_{N-2}(w_2,\phi_2)\|
\\=& \alpha\|\nabla_w^2L_{\mathcal{S}_i}(w_{N-1}^i(w_1,\phi_1),\phi_1)-\nabla_w^2L_{\mathcal{S}_i}(w_{N-1}^i(w_2,\phi_2),\phi_2)\| \nonumber
\\\leq& \alpha\rho (1+\alpha L)^{N-1}(\|w_1-w_2\|+\|\phi_1-\phi_2\|),
\end{align*}
yields that
\begin{align}\label{eq:um1}
\|U_m(w_1,\phi_1)-U_m(w_2,\phi_2)\| \leq& \big(\alpha \rho+\frac{\rho}{L}\big)(1+\alpha L)^{2N-m-3}(\|w_1-w_2\|+\|\phi_1-\phi_2\|) \nonumber
\\& -\frac{\rho}{L}(1+\alpha L)^{N-1} (\|w_1-w_2\|+\|\phi_1-\phi_2\|).
\end{align}
Based on~\Cref{assm:smooth}, we have $\|Q_m(w_2,\phi_2)\|\leq L$ and $\|V_m(w_1,\phi_1)\|\leq M$, which, combined with \cref{eq:um1} and the definition of $R_2$  in~\cref{R1R2R3}, yields
\begin{align}\label{R2}
R_2\leq& ML\Big(\alpha \rho+\frac{\rho}{L}\Big)(1+\alpha L)^{2N-m-3}(\|w_1-w_2\|+\|\phi_1-\phi_2\|)\nonumber
\\&-M\rho (1+\alpha L)^{N-1} (\|w_1-w_2\|+\|\phi_1-\phi_2\|).
\end{align}

For  $R_3$, using~\Cref{assm:smooth}, we have 
\begin{align}\label{R3}
R_3\leq& L (1+\alpha L)^{N-m-1} \| \nabla_w L_{\mathcal{D}_i}(w_N^i(w_1,\phi_1),\phi_1)- \nabla_w L_{\mathcal{D}_i}(w_N^i(w_2,\phi_2),\phi_2)\| \nonumber
\\\leq & L^2 (1+\alpha L)^{2N-m-1} (\|w_1-w_2\|+\|\phi_1-\phi_2\|),  
\end{align}
where the last inequality follows from \cref{eq:wn}. Combining~\cref{R1},~\cref{R2} and~\cref{R3} yields
\begin{align}\label{eq:quv}
 R_1+R_2+R_3\leq & M(\tau-\rho) (1+\alpha L)^{N-1} (\|w_1-w_2\| + \|\phi_1-\phi_2\|)  \nonumber
 \\&+M\rho(1+\alpha L)^{2N-m-2}(\|w_1-w_2\|+\|\phi_1-\phi_2\|)\nonumber
 \\&+L^2 (1+\alpha L)^{2N-m-1} (\|w_1-w_2\|+\|\phi_1-\phi_2\|).  
\end{align}
 Combining~\cref{eq:qopqop},~\cref{eq:quv}, and using~\cref{eq:wn} and $\alpha <\mathcal{O}(\frac{1}{N})$, we have 
 \begin{align}\label{eq:winnsa}
 \Big\|&\frac{\partial L_{\mathcal{D}_i}( w^i_N, \phi)}{\partial \phi}\Big |_{(w_1,\phi_1)}  - \frac{\partial L_{\mathcal{D}_i}( w^i_N, \phi)}{\partial \phi} \Big |_{(w_2,\phi_2)}\Big\| \nonumber
 \\&\leq \Big(\alpha M(\tau-\rho)  N(1+\alpha L)^{N-1} + \Big(L+\frac{\rho M}{L}\Big)(1+\alpha L)^{2N}\Big)(\|w_1-w_2\|+\|\phi_1-\phi_2\|)\nonumber
\\&\leq\text{\normalfont poly}(M,\rho,\tau,\alpha,L) N(\|w_1-w_2\| + \|\phi_1-\phi_2\|),
 \end{align}
 which, using an approach similar to~\cref{eq:jensen}, finishes the proof of the second item in~\Cref{le:smooth_nonconvex}.
\subsection*{Proof of \Cref{th:nonconvexcaonidaxeas}}
For notational convenience, we define 
\begin{align}\label{eq:definitions}
g_{w}^i(k) &=  \frac{\partial L_{\mathcal{D}_i}( w^i_{k,N},\phi_k)}{\partial {w_k}},\quad g_{\phi}^i(k)  = \frac{\partial L_{\mathcal{D}_i}( w^i_{k,N},\phi_k)}{\partial {\phi_k}}, \nonumber
\\ L_w &=\big(L+\alpha L^2+2M\rho\big) (1+\alpha L)^{2N-1}, \nonumber
\\L_\phi & = \alpha M(\tau-\rho)  N(1+\alpha L)^{N-1} + \left(L+\frac{\rho M}{L}\right)(1+\alpha L)^{2N}.
\end{align}
Based on~\cref{eq:j1} and~\cref{eq:winnsa} in the proof of~\Cref{le:smooth_nonconvex}, we have 
\begin{small}
\begin{align*}
L^{meta}(w_{k+1},\phi_k) \leq & L^{meta}(w_k,\phi_k) + \left\langle \frac{\partial L^{meta}(w_k,\phi_k)}{\partial w_k}, w_{k+1}-w_k  \right\rangle + \frac{L_w}{2} \|w_{k+1}-w_k\|^2, \nonumber
\\ L^{meta}(w_{k+1},\phi_{k+1}) \leq & L^{meta}(w_{k+1},\phi_k) + \left\langle \frac{\partial L^{meta}(w_{k+1},\phi_k)}{\partial \phi_k}, \phi_{k+1}-\phi_k  \right\rangle + \frac{L_\phi}{2} \|\phi_{k+1}-\phi_k\|^2.
\end{align*}
\end{small}
\hspace{-0.15cm}Adding the above two inequalities, and using an approach similar to~\cref{eq:wellplay}, we have 
\begin{align}\label{eq:noexp}
L^{meta}&(w_{k+1},\phi_{k+1}) \nonumber
\\\leq&  L^{meta}(w_k,\phi_k) - \left\langle \frac{\partial L^{meta}(w_k,\phi_k)}{\partial w_k}, \frac{\beta_w}{B}\sum_{i\in\mathcal{B}_k} g_{w}^i (k) \right\rangle + \frac{L_w+L_\phi }{2} \Big\| \frac{\beta_w}{B}\sum_{i\in\mathcal{B}_k} g_{w}^i (k)\Big\|^2 \nonumber
\\ &-\left\langle \frac{\partial L^{meta}(w_{k},\phi_k)}{\partial \phi_k},\frac{\beta_\phi}{B}\sum_{i\in\mathcal{B}_k}g_{\phi}^i(k)  \right\rangle + L_\phi \Big\|\frac{\beta_\phi}{B}\sum_{i\in\mathcal{B}_k}g_{\phi}^i(k)\Big\|^2.
\end{align}
Let $\mathbb{E}_k=\mathbb{E}(\cdot | w_k,\phi_k)$. Then, conditioning on $w_k,\phi_k$, taking expectation over~\cref{eq:noexp} and using an approach similar to~\cref{eq:diedie}, we have 
\begin{align}\label{eq:finals}
\mathbb{E}_k L^{meta}(w_{k+1},&\phi_{k+1}) \nonumber
\\\leq&  L^{meta}(w_k,\phi_k) - \beta_w\left\| \frac{\partial L^{meta}(w_k,\phi_k)}{\partial w_k}\right\|^2 + \frac{(L_w+L_\phi)\beta_w^2}{2B}\mathbb{E}_k\big\|g_{w}^i(k) \big\|^2 \nonumber
\\ &+ \frac{L_\phi+L_w}{2}\beta_w^2 \left\| \frac{\partial L^{meta}(w_k,\phi_k)}{\partial w_k}\right\|^2-\beta_\phi\left\| \frac{\partial L^{meta}(w_{k},\phi_k)}{\partial \phi_k}  \right\|^2 \nonumber
\\&+L_\phi\left(\frac{\beta_\phi^2}{B}\mathbb{E}_k\big\|g_{\phi}^i(k) \big\|^2 + \beta_\phi^2 \left\| \frac{\partial L^{meta}(w_k,\phi_k)}{\partial \phi_k}\right\|^2\right).
\end{align}
 
 Our next
step is to upper-bound $\mathbb{E}_k\big\|g_{w}^i(k) \big\|^2$ and $\mathbb{E}_k\big\|g_{\phi}^i(k) \big\|^2$ in~\cref{eq:finals}. Based on the definitions of $g_{w}^i(k)$ in~\cref{eq:definitions} and~\Cref{le:gd_form}, we have
\begin{align}\label{eq:gwi}
\mathbb{E}_k\big\|g_{w}^i(k) \big\|^2 \leq& \mathbb{E}_k\left\| \frac{\partial L_{\mathcal{D}_i}( w^i_{k,N}, \phi_k)}{\partial w_k} \right\|^2 \nonumber
\\=& \mathbb{E}_k\left\|\prod_{m=0}^{N-1}(I - \alpha \nabla_w^2L_{\mathcal{S}_i}(w_{k,m}^i,\phi_k)) \nabla_{w} L_{\mathcal{D}_i} (w_{k,N}^i,\phi_k)\right\|^2 \nonumber
\\\leq &  \mathbb{E}_k(1+\alpha L)^{2N} M^2 = (1+\alpha L)^{2N} M^2.
\end{align}
Using an approach similar to~\cref{eq:gwi}, we have
\begin{align}\label{eq:gphi}
\mathbb{E}_k&\big\|g_{\phi}^i(k) \big\|^2 \nonumber
\\ \leq& 2\mathbb{E}_k\bigg\|\alpha \sum_{m=0}^{N-1}\nabla_\phi\nabla_w L_{\mathcal{S}_i}(w_{k,m}^i,\phi_k) \prod_{j=m+1}^{N-1}(I-\alpha\nabla_w^2L_{\mathcal{S}_i}(w_{k,j}^i,\phi_k))\nabla_w L_{\mathcal{D}_i}(w_{k,N}^i,\phi_k)\bigg\|^2 \nonumber
\\&+2\|\nabla_\phi L_{\mathcal{D}_i}(w_{k,N}^i,\phi_k)\|^2 \nonumber
\\ \leq& 2\alpha^2 L^2 M^2 \mathbb{E}_k\Big(\sum_{m=0}^{N-1} (1+\alpha L)^{N-1-m}\Big)^2 +2 M^2 \nonumber
\\< & 2M^2 (1+\alpha L)^{N}-1)^2 +2M^2 <  2M^2 (1+\alpha L)^{2N}.
\end{align}
Substituting~\cref{eq:gwi} and~\cref{eq:gphi} into~\cref{eq:finals}, we have
\begin{align}\label{eq:mid}
\mathbb{E}_k L^{meta}(w_{k+1},&\phi_{k+1}) \leq L^{meta}(w_k,\phi_k) - \left(\beta_w-\frac{L_w+L_\phi}{2}\beta_w^2\right)\left\| \frac{\partial L^{meta}(w_k,\phi_k)}{\partial w_k}\right\|^2 \nonumber
\\&+ \frac{(L_w+L_\phi)\beta_w^2}{2B}(1+\alpha L)^{2N}M^2-\big(\beta_\phi -L_\phi\beta_\phi^2 \big)\left\| \frac{\partial L^{meta}(w_{k},\phi_k)}{\partial \phi_k}  \right\|^2\nonumber
\\ & + \frac{2 L_\phi \beta_\phi^2}{B}  (1+\alpha L)^{2N}M^2.
\end{align}
Set  $\beta_w =\frac{1}{L_w+L_\phi}$ and $\beta_\phi =  \frac{1}{2L_\phi}$. Then, unconditioning on $w_k,\phi_k$ in~\cref{eq:mid}, we have
\begin{align*}
\mathbb{E} L^{meta}(w_{k+1},\phi_{k+1}) \leq& \mathbb{E}L^{meta}(w_k,\phi_k) - \frac{\beta_w}{2}\mathbb{E}\left\| \frac{\partial L^{meta}(w_k,\phi_k)}{\partial w_k}\right\|^2 + \frac{\beta_w}{2B}(1+\alpha L)^{2N}M^2\nonumber
\\ &-\frac{\beta_\phi}{2}\mathbb{E}\left\| \frac{\partial L^{meta}(w_{k},\phi_k)}{\partial \phi_k}  \right\|^2 + \frac{\beta_\phi}{B}  (1+\alpha L)^{2N}M^2.
\end{align*}
Telescoping the above equality over $k$ from $0$ to $K-1$ yields
\begin{align}\label{eq:conv}
\frac{\beta_w}{2}\frac{1}{K}&\sum_{k=0}^{K-1}\mathbb{E}\left\| \frac{\partial L^{meta}(w_k,\phi_k)}{\partial w_k}\right\|^2 + \frac{\beta_\phi}{2}\frac{1}{K}\sum_{k=0}^{K-1}\mathbb{E}\left\| \frac{\partial L^{meta}(w_{k},\phi_k)}{\partial \phi_k}  \right\|^2 \nonumber
\\& \leq  \frac{L^{meta}(w_0,\phi_0)-\min_{w,\phi}L^{meta}(w,\phi) }{K}+ \frac{\beta_w+2\beta_\phi}{2B}(1+\alpha L)^{2N}M^2.
\end{align}
Let $\Delta = L^{meta}(w_0,\phi_0)-\min_{w,\phi}L^{meta}(w,\phi)>0$ and let $\xi$ be chosen from $\{0,...,K-1\}$ uniformly at random. Then,~\cref{eq:conv} further yields
\begin{align*}
\mathbb{E}\left\| \frac{\partial L^{meta}(w_\xi,\phi_\xi)}{\partial w_\xi}\right\|^2  \leq& \frac{2\Delta (L_w+L_\phi)}{K} + \frac{1+\frac{L_w+L_\phi}{L_\phi}}{B}(1+\alpha L)^{2N}M^2 \nonumber
\\\mathbb{E}\left\| \frac{\partial L^{meta}(w_\xi,\phi_\xi)}{\partial \phi_\xi}\right\|^2  \leq &\frac{4\Delta L_\phi}{K} + \frac{2+\frac{2L_\phi}{L_w+L_\phi}}{B}(1+\alpha L)^{2N}M^2,
\end{align*}
which, in conjunction with the definitions of $L_w$ and $L_\phi$ in~\cref{eq:definitions} and using $\alpha<\mathcal{O}(\frac{1}{N})$, yields 
\begin{align}
\mathbb{E}\left\| \frac{\partial L^{meta}(w_\xi,\phi_\xi)}{\partial w_\xi}\right\|^2  \leq& \mathcal{O}\bigg(  \frac{N}{K} + \frac{N}{B}  \bigg), \nonumber
\\\mathbb{E}\left\| \frac{\partial L^{meta}(w_\xi,\phi_\xi)}{\partial \phi_\xi}\right\|^2  \leq &\mathcal{O}\bigg(  \frac{N}{K} + \frac{N}{B}  \bigg).
\end{align}
To achieve an $\epsilon$-stationary point, i.e., {\small $\mathbb{E}\left\| \frac{\partial L^{meta}(w,\phi)}{\partial w}\right\|^2 <\epsilon,\mathbb{E}\left\| \frac{\partial L^{meta}(w,\phi)}{\partial w}\right\|^2 <\epsilon$}, $K$ and $B$ need to be at most $\mathcal{O}(N\epsilon^{-2})$, which, in conjunction with the gradient forms in~\Cref{le:gd_form}, completes  the complexity results.

\bibliographystyle{plain}
\bibliography{bibfile}

\end{document}